\documentclass{article}
\PassOptionsToPackage{dvipsnames}{xcolor}
\PassOptionsToPackage{numbers,sort&compress}{natbib}

\usepackage[preprint]{neurips_2026}

\usepackage[utf8]{inputenc} % allow utf-8 input
\usepackage[T1]{fontenc}    % use 8-bit T1 fonts
\usepackage{hyperref}       % hyperlinks
\usepackage{url}            % simple URL typesetting
\usepackage{booktabs}       % professional-quality tables
\usepackage{amsfonts}       % blackboard math symbols
\usepackage{nicefrac}       % compact symbols for 1/2, etc.
\usepackage{microtype}      % microtypography
\usepackage{xcolor}         % colors
\usepackage{enumitem}

\usepackage{graphicx}
\usepackage{subcaption}

\usepackage[ruled,vlined]{algorithm2e}
\DontPrintSemicolon
\SetKwInOut{Input}{Input}
\SetKwInOut{Output}{Output}

\usepackage{my_symbols}

\title{Inference and Uncertainty Quantification for Streaming $r$-PCA}

\author{%
  Haoshu Xu \\
  University of Pennsylvania\\
  \texttt{haoshuxu@sas.upenn.edu} \\
  \And
  Hongzhe Li \\
  University of Pennsylvania \\
  \texttt{hongzhe@upenn.edu} \\
}

\begin{document}

\maketitle

\begin{abstract}
  We address two open questions in streaming PCA via Oja's algorithm: sharp operator-norm convergence for general rank under sub-Gaussian data, and distributional inference for the resulting subspace estimator. Existing convergence analyses, even in the rank-one case, either assume bounded data or leave non-vanishing remainder terms that prevent adaptation to a polynomially vanishing tail spectrum, while existing distributional results are confined to the rank-one case. Our convergence theory removes these remainder terms and yields a sharp rate. In the dense-tail spiked covariance regime, this rate matches the minimax rate up to logarithmic factors. More generally, we prove a matching lower bound, up to logarithmic factors, across both dense-tail and sparse-tail regimes under a mild nondegeneracy condition. The analysis yields a linearization of Oja's iterates, which in turn enables a high-dimensional Gaussian approximation for the general-rank subspace estimation error with an explicit limiting covariance. We also establish a row-wise Gaussian approximation over convex sets for the aligned difference, recovering prior rank-one results as special cases. For practical inference, we develop an online multiplier bootstrap algorithm and prove its consistency. Beyond streaming PCA, our techniques contribute to Gaussian approximation and bootstrap inference for nonconvex stochastic approximation.
\end{abstract}

\section{Introduction}

Streaming statistical methods are increasingly important in modern data analysis \citep{online_sparse_regression,dong_tensor,nips_sparsePCA,nips_LowPrecisionStreamingPCA}. In some applications, data arrive sequentially and updates must be performed online \citep{recommender_system}; in others, the full dataset is available but too large to store in memory or to process efficiently with batch methods \cite{langford}. Streaming principal component analysis (PCA) is a canonical problem in this setting: the goal is to estimate the leading eigenspace of the population covariance matrix from a single pass over the data. Beyond its classical role in online dimension reduction, streaming PCA has recently appeared in memory-efficient LLM training, where low-rank structure is used to compress optimizer states \cite{MemoryEfficientLLM}. 
Oja's algorithm \citep{oja82}, a standard method for streaming PCA, has also motivated differentially private variants of PCA \cite{DP_PCA}.

Starting from random initialization $U_0 \sim \Haar{\OO_{d,r}}$, Oja's iteration for streaming PCA is given by
\begin{align}
  U_t = \qr{U_{t-1} + \eta X_t X_t^\top U_{t-1}}, \label{eqn:oja_update}
\end{align}
where $\qr{\cdot}$ denotes the QR factorization. Although Oja's algorithm dates to 1982, non-asymptotic theory for its finite-sample behavior has emerged only recently \cite{zhiyuan17,colt21,jain16,han17,nearopt22,opt23,nips21_oja}, and key questions about rate optimality and uncertainty quantification remain open.

% The learning rate $\eta$ can be either fixed or varying with $t$. In this paper, we fix the learning rate $\eta$, similar to [nips21, low precision, 17han, 22near opt].

\paragraph{Prior convergence analysis of Oja's algorithm.} 
Most existing analyses of Oja's algorithm \cite{zhiyuan17,colt21,jain16,nips21_oja} yield error bounds that do not vanish even when $\Sigma = \EE XX^\top$ is exactly rank-$r$. For example, one seminal work \cite{jain16} derived a bound at least of order $\frac{\lambda_1}{\Delta_{\min}\sqrt n}$
in the rank-one case; see \cref{tab:opnorm-comparison}. This rate is tight for a generic perturbation argument based on matrix Bernstein and Wedin's theorem, but it is not statistically sharp for PCA. Indeed, fluctuations within the leading eigenspace do not affect subspace estimation; the statistical difficulty comes from the interaction between the signal and tail eigenspaces. This distinction is reflected in the statistical minimax rates, both in Frobenius norm \cite{vu13} and in operator norm \cite{anru}.

Motivated by this observation, several recent works \cite{han17,nearopt22,opt23} attempted to match the minimax rate under sub-Gaussian assumptions. \cite{han17} focuses on the rank-one case, while \cite{nearopt22,opt23} consider the general-rank case in Frobenius norm. However, all these analyses still contain remainder terms that do not vanish in the small-tail regime. For example, \cite[][Theorem 3]{han17} requires $\lambda_2/\lambda_1$ to be bounded away from zero, which conflicts with the goal of obtaining rates adaptive to a vanishing tail spectrum. 
Moreover, although \cite{nearopt22} proves a Haar-initialization theorem for the constant-step-size case, its bound still contains a non-vanishing remainder term in the small-tail regime; in particular, the constants may depend implicitly on \(\lambda_d^{-1}\). In addition,
the argument does not establish the required no-exit/stability guarantee ensuring that the iterates remain in the contractive region throughout the run. Closing this gap appears to require additional sample-size or stepsize control.
% [nearopt] studies the constant-step-size case and proves a Haar-initialization theorem, but the proof appears to rely on an implicit no-exit/stability argument for keeping the iterates in a contractive region; this step is not made explicit and seems to require additional sample-size/stepsize control.
Later, \cite{opt23} considered the varying-step-size case and circumvented the stability issue by invoking an initialization procedure from \cite{colt21}, but the associated burn-in period again imposes a strong implicit requirement on the sample size, and the resulting guarantee still contains a non-vanishing component in the small-tail regime.
In view of these issues, it remains an open question whether the minimax rate can be achieved without any non-vanishing remainder term, even in the rank-one case.

\paragraph{Uncertainty quantification for SGD and PCA.} This paper is also motivated by recent advances in uncertainty quantification for SGD and high-dimensional PCA. In the strongly convex setting, \cite{gaussian25} established Gaussian approximation results for the last iterate of SGD. This naturally raises the question of whether analogous distributional guarantees can be obtained for nonconvex problems, with Oja's algorithm for streaming PCA serving as a canonical example. For Oja's algorithm, \cite{nips21_oja} established a high-dimensional Gaussian approximation for the rank-one case based on a Hoeffding decomposition of the unnormalized matrix product. Later, \cite{uai25} extended this approach to entrywise Gaussian approximation for the rank-one iterates. However, due to the noncommutative nature of matrix multiplication, this Hoeffding-decomposition approach does not readily extend to the general-rank case. In the offline PCA setting, \cite{yuling} established distributional guarantees for heteroscedastic PCA under a spiked covariance model. These developments motivate the present work: can one obtain inference guarantees for streaming PCA in the general-rank case, beyond the spiked covariance model?

\subsection{Contributions}
Motivated by these open questions, we develop a refined analysis of Oja's dynamics. This analysis yields sharp operator-norm convergence rates and enables Gaussian approximation and inference.

\begin{table}[!htbp]
\centering
\begin{tabular}{cccc}
\toprule
Reference & Rank & Tail assumption & Operator-norm error rate \\
\midrule
\cite{jain16}   & $r=1$ & Bounded          & $\tfrac{\lambda_1 (1 \vee \nu_2)}{\Delta_{\min}}\sqrt{\tfrac{1}{n}}$ \\[0.6em]
\cite{han17}  & $r=1$ & Sub-Gaussian & $\tfrac{\lambda_1 \nu_2}{\Delta_{\min}}\sqrt{\tfrac{1}{n}}$ \\[0.6em]
\cite{nips21_oja} & $r=1$ & Finite moments           & $\tfrac{\lambda_1(1 \vee \nu_2^2)}{\Delta_{\min}}\sqrt{\tfrac{1}{n}}$ \\[0.6em]
\cite{colt21} & $r\geq1$ & Bounded          & $\tfrac{\lambda_1 (1 \vee \nu_2^2)}{\Delta_{\min}}\sqrt{\tfrac{r^2}{n}}$ \\[0.6em]
\textbf{Ours} & $r\geq1$ & Sub-Gaussian & $\tfrac{\lambda_1 (\nu_\infty\vee \nu_2)}{\Delta_{\min}}\sqrt{\tfrac{r}{n}}$ \\
\bottomrule
\end{tabular}
\smallskip
\caption{Comparison of operator-norm guarantees for  $\|\sin\Theta(U_n,U_*)\|$. 
Existing results are translated into our notation under  Assumption~\ref{assumption:X}; for bounds involving auxiliary  distribution-dependent or boundedness parameters, we display their sharp  sub-Gaussian specialization justified in Appendix~\ref{sec:cmp}. 
Logarithmic factors are suppressed. 
The displayed rate for our method concerns the nonzero-tail regime of Theorem~\ref{thm:convergence_rate_optimal}. When $\rank(\Sigma)=r$, Theorem~\ref{thm:rankr} gives geometric convergence to zero.}
\label{tab:opnorm-comparison}
\end{table}

\paragraph{Sharp operator-norm convergence.} \cref{tab:opnorm-comparison} summarizes our rate alongside prior work, where $\nu_2 := (\lambda_{r+1 \sim d} \big/ \lambda_{1\sim r})^{1/2}$ and $\nu_\infty := (\lambda_{r+1}/\lambda_1)^{1/2}$ denote tail-to-signal ratios (formally defined in \cref{sec:setup}). Under sub-Gaussian data and Haar initialization, our bound contains no additional non-vanishing remainder term, in contrast to \cite{han17,nearopt22,opt23}, whose residuals prevent adaptivity to a vanishing tail spectrum.  
The leading factor $\nu_\infty \vee \nu_2$ separates two qualitatively different regimes: the dense-tail regime ($\nu_2$ dominates), which covers the spiked covariance model, and the sparse-tail regime ($\nu_\infty$ dominates), arising when the tail spectrum is highly concentrated. This dichotomy is sharp: a matching lower bound shows the dependence on $\nu_\infty \vee \nu_2$ is unimprovable. As corollaries, our rate matches the minimax rate of \cite{anru} for the spiked covariance model up to a logarithmic factor, and reduces to the sharpest known rank-one rates of \cite{han17} when $r=1$ (where $\nu_\infty \le \nu_2$).

The improvement rests on two ingredients: a refined one-step decomposition, sharper than \cite{nearopt22}; and matrix Freedman's inequality in place of the matrix Azuma used by \cite{nearopt22,opt23}, which ignores conditional variance. 

\paragraph{Gaussian approximation and inference.} Building on a linearization result, we establish a high-dimensional Gaussian approximation for Oja's error with an explicit limiting covariance. Unlike \cite{nips21_oja}, whose Hoeffding-decomposition approach does not extend beyond rank one, our proof handles general $r \ge 1$ and recovers the rank-one result of \cite{nips21_oja}.
We further establish a row-wise Gaussian approximation over convex sets for the aligned difference, which in the spiked covariance model recovers the covariance structure of \cite{yuling} up to a logarithmic factor. To make these results operational, we develop an online multiplier bootstrap procedure that consistently estimates the limiting law.

\subsection{Notation.}

For a positive integer $n$, we write $[n]=\{1,\ldots,n\}$. For any $a,b\in\RR$, let $a\wedge b:=\min\{a,b\}$ and $a\vee b:=\max\{a,b\}$. We write $f(n)\lesssim g(n)$ if there exists a constant $C>0$ such that $|f(n)|\le Cg(n)$ for all sufficiently large $n$.
The sub-Gaussian norm of a random variable $X \in \RR$ is defined as $\|X\|_{\psi_2}:= \inf \{t >0: \EE \exp(X^2/t^2)\leq~2\}$. For a random vector $X \in \RR^d$, we define $\|X\|_{\psi_2}:=\sup_{\norm{v}=1} \|v^\top X\|_{\psi_2}$.
For any matrix $M$, let $\norm{M}=\Onorm{M}$ denote the operator norm, and let $\Fnorm{M}$ denote the Frobenius norm. We denote by $\OO_{d,r}$ the set of $d\times r$ matrices with orthonormal columns, and by $\Haar{\OO_{d,r}}$ the Haar measure on $\OO_{d,r}$.
For $U,U_* \in \OO_{d,r}$, let $\Theta(U,U_*)=\diag\rbr{\theta_i}_{i\in[r]}$,  $\theta_i=\arccos(\sigma_i(U^\top U_*))$ denote the principal angles, where $\sigma_i(\cdot)$ is the $i$-th largest singular value, and set $\sin\Theta(U,U_*):=\diag\rbr{\sin\theta_i}_{i\in[r]}$. Finally, for any set $\cA \subseteq \RR^r$, denote $\cN(\mu, \Sigma)\{\cA\}:= \PP(W \in \cA)$ where $W \sim \cN(\mu, \Sigma)$.

\section{Problem formulation and preliminaries}
\label{sec:setup}
This section sets up the streaming PCA model and introduces the notation used in Oja's dynamics and limiting distributions. We impose the following standard sub-Gaussian assumption, as in \cite{anru,han17,nearopt22,opt23}.
\begin{assumption}
    $X_1,\ldots,X_n$ are i.i.d. copies of $X=\Sigma^{1/2}\widetilde X$ where $\Sigma \in \RR^{d\times d}$ is positive semi-definite, $\tX \in \RR^d$ satisfies $\EE \tX= 0$, $\EE \tX \tX^\top = I_d$ and $\|\tX\|_{\psi_2}\leq C_{\psi}\lesssim 1$.
    \label{assumption:X}
\end{assumption}

Let $\Sigma = U_* \Lambda_* U_*^\top \!\!\!+\! U_{*,\perp}\Lambda_{*,\perp}U_{*,\perp}^\top$ be the eigendecomposition with $U_*\!\!\in\!\OO_{d,r}$, $U_{*,\perp}\!\!\in\!\!\OO_{d,d-r}$, $\Lambda_*=\operatorname{diag}(\lambda_1,\ldots,\lambda_r)$,
$\Lambda_{*,\perp}=\operatorname{diag}(\lambda_{r+1},\ldots,\lambda_d)$, and
$\lambda_1\geq\cdots\geq\lambda_d\geq 0$. 
The target of estimation is the column space of $U_*$. We use the shorthand $\lambda_{a\sim b}:= \sum_{i=a}^b \lambda_i$. Define the eigengaps $\Delta_{lj}:= \lambda_{j} - \lambda_{r+l}$ for $l \in [d-r], j \in [r]$ with extremes $\Delta_{\max}:=\lambda_1-\lambda_d$ and $\Delta_{\min}:=\lambda_r-\lambda_{r+1}$. Throughout, we assume $\Delta_{\min}>0$.

\paragraph{Scale-invariant quantities.}
A useful structural property of Oja's iteration is its invariance under the following simultaneous rescaling:
\begin{align}
  \text{scale transformation:} \qquad X \gets c X, \quad \eta \gets c^{-2} \eta, \quad c>0. \label{eqn:scale_transform}
\end{align}
Indeed, under~\eqref{eqn:scale_transform},
$\eta X_t X_t^\top$ is unchanged, and hence the entire Oja trajectory $\{U_t\}_{t\ge0}$.
Consequently, any subspace estimation error bound should depend on the problem only through quantities that are invariant under \eqref{eqn:scale_transform}. 
We refer to such quantities as \emph{scale invariant}. The \emph{scale-invariant} quantities used throughout the paper are
\begin{align}
  \textstyle \teta := \eta \Delta_{\min}, \;\; \nu_2 := \sbr{\frac{\lambda_{r+1 \sim d}}{\lambda_{1\sim r}}}^{1/2}\!\!, \;\; \nu_\infty := \sbr{\frac{\lambda_{r+1}}{\lambda_1}}^{1/2}\!\!, \;\; \bnu := \max(\nu_2, \nu_\infty), \;\; \kappa_* := \frac{\lambda_1}{\Delta_{\min}}. \label{eqn:scale_invariant_quantities}
\end{align}
Here $\nu_2$ and $\nu_\infty$ are respectively the Schatten-$2$ and Schatten-$\infty$ cases of the general tail-to-signal ratio $\nu_p:= \|\Lambda_{*,\perp}^{1/2}\|_{\cS_p} \|\Lambda_{*}^{1/2}\|_{\cS_p}^{-1}$
where $\|\cdot\|_{\cS_p}$ denotes the Schatten-$p$ norm. We have $\nu_\infty\leq 1$ and $\nu_\infty/\sqrt{r} \leq \nu_2 \leq \nu_\infty \sqrt{d-r}$ directly from the definitions.
All these quantities are unchanged under~\eqref{eqn:scale_transform}, since eigenvalues of $\Sigma$ would scale by~$c^2$.

\paragraph{Tangent matrices.} Our analysis tracks the iterates $U_t$ through an associated \emph{tangent matrix} $\cT_t$, which serves both to bound the subspace estimation error and to characterize the row-wise aligned difference.
For any $U \in \OO_{d,r}$, define 
the \emph{cosine}, \emph{sine}, and \emph{tangent matrix of $U$ (relative to $U_*$)} by
\begin{equation}
  \cC_U := U_*^\top U \in \RR^{r\times r}, \quad \cS_U := U_{*,\perp}^\top U \in \RR^{(d-r)\times r}, \quad \cT_U := \cS_U \cC_U^{-1} \text{ (when $\cC_U$ is invertible)}. \label{eqn:cos_sin_tan_defn}
\end{equation}
When $r=1$, these reduce to the usual cosine, sine, and tangent of the angle between $U$ and $U_*$. For iterates $U_t$, we use the shorthand $\cT_t := \cT_{U_t}$. Let $\HH:=\RR^{(d-r)\times r}$ equipped with the Frobenius inner product $\Finner{\cdot}{\cdot}$, write $\cI$ for the identity on $\HH$, and define the \emph{gap operator} $\cL:\HH\to\HH$ by
\begin{align}
  [\cL M]_{lj}:=\Delta_{lj}M_{lj}, \qquad \text{for any } M \in \HH. \label{eqn:gap_operator}
\end{align}
The operator $\cL$ enters the analysis through the contraction $\cI-\eta\cL$ in the recursion $\cT_{t+1}=(\cI-\eta\cL)\cT_t+\text{noise}+\text{remainder}$.

\paragraph{Signal--noise coordinates and covariance of the outer product.}
For each observation $X=\Sigma^{1/2}\tX$, define the signal/noise coordinates and their whitened counterparts by
\begin{align}
  Y:= U_*^\top X, \quad Z:= U_{*,\perp}^\top X, \quad \tY:= U_*^\top \tX, \quad \tZ:= U_{*,\perp}^\top \tX. \label{eqn:Y_Z_defn}
\end{align}
Then $Y=\Lambda_*^{1/2}\tY$, $Z=\Lambda_{*,\perp}^{1/2}\tZ$, $\EE \tY \tY^\top = I_r$, $\EE \tZ \tZ^\top = I_{d-r}$ and $\EE ZY^\top = \EE \tZ \tY^\top = 0$.

The random elements $ZY^\top, \tZ \tY^\top \in \HH=\RR^{(d-r)\times r}$ play a vital role in the Gaussian approximation. We therefore define their covariance operators $\fC, \tfC: \HH\to \HH$ by
\begin{align}
  \fC(B):=\EE [ \Finner[\big]{B}{ZY^\top} ZY^\top], \quad \tfC(B):=\EE [\langle B, \tZ\tY^\top \rangle_{\mathrm{F}} \tZ\tY^\top], \qquad \forall B \in \HH. \label{eqn:cov_defn}
\end{align}
Equivalently, $\fC$ and $\tfC$ can be identified with the covariance matrices of the vectorizations $\vecc(ZY^\top)$ and $\vecc(\tZ\tY^\top)$. We adopt this linear-operator formulation to emphasize the Hilbert-space geometry of $\HH$ and to keep the notation coordinate-free. This perspective is also standard in functional data analysis \cite{hsing}. For a linear operator $A:\HH\to \HH$, $\|A\|_{\mathrm{HS}}$ denotes its Hilbert--Schmidt norm, equivalently the Frobenius norm of its matrix representation under the standard basis of $\HH$.

\begin{example}[Spiked covariance model] \label{ex:spiked}
  Consider the spiked covariance model $\Sigma \!=\! U_* S^2 U_*^\top \!+\! \sigma^2 I_d$ where $U_* \!\in\! \OO_{d,r}$, $S\!=\!\diag (s_1,\ldots,s_r)$ with $s_1\!\!\geq\!\! \ldots \!\!\geq \!\!s_r\!\! > \!\!0$, $\sigma\!\! >\!\! 0$. Then $\lambda_1\! =\! s_1^2 + \sigma^2$, $\lambda_{1 \sim r}\! = \!\sum_{i=1}^r (s_i^2 + \sigma^2)$, $\lambda_{r+1} \!=\! \sigma^2$ and $\lambda_{r+1 \sim d}\! =\! (d-r) \sigma^2$. Under Gaussian $X$, we have $Y\!\perp\!Z$ and $\tfC\!=\!\cI$.
  % When $\lambda_1 \lesssim \lambda_r$ and $r \lesssim d-r$, 
  % \red{anru} established the minimax rate for subspace estimation is
  % \begin{align}
  %   \inf_{\hU} \sup_{\Sigma} \EE \norm[\big]{\sin\Theta(\hU, U_*)} \asymp \sqrt{\frac{d}{n}} \rbr{\frac{\sigma}{s_r}+ \frac{\sigma^2}{s_r^2}} \wedge 1.
  %   \label{eqn:anru_minimax_rate}
  % \end{align}
  % \red{$\Sigma$ class}

  % \red{yuling}
  % \begin{align*}
  %     \Sigma_{U,m} = n^{-1} \sigma^2S^{-2}\sbr{I_r + \sigma^2 S^{-2}}
  % \end{align*}
\end{example}

\section{Main results}
This section presents our main theoretical results. Section~\ref{sec:main_convergence}
establishes a sharp operator-norm convergence rate, a nearly matching lower bound
(up to logarithmic factors, under a mild nondegeneracy condition), and a
linearization of the final iterate. Section~\ref{sec:main_clt} uses this
linearization to establish Gaussian approximations for the subspace error,
both globally in Frobenius norm and row-wise. Section~\ref{sec:main_bs}
develops an online multiplier bootstrap and proves its consistency.

\subsection{Sharp convergence and linearization}
\label{sec:main_convergence}
Recall $\teta$, $\bnu$ and $\kappa_*$ from \eqref{eqn:scale_invariant_quantities}. Throughout this section, define the noise floor $\tau_* = \kappa_*\bar\nu\sqrt{\widetilde\eta\,r\log n}$
Under the stepsize $\teta=c_\eta(\log n)/n$, this becomes
\begin{align}
  \textstyle \tau_* = \frac{\lambda_1}{\Delta_{\min}} (\nu_\infty\vee\nu_2) \sqrt{\frac{c_\eta r\log^2 n}{n}}. \label{eqn:tau_star}
\end{align}

\begin{theorem}[Sharp operator-norm convergence] \label{thm:convergence_rate_optimal}
Under Assumption~\ref{assumption:X}, assume $U_0 \sim \Haar{\OO_{d,r}}$ and
\begin{align}
    \nu_\infty &\ge c_{\nu} n^{-C_\nu} \quad\text{for constants } C_\nu \ge 0, c_\nu >0. \label{eqn:cond_nu}
\end{align}
Fix $\varepsilon \in [0,1/2)$ and set 
\begin{align}
  \teta = c_\eta \tfrac{\log n}{n}, \quad\text{where}\quad c_\eta\geq c_\eta^* := \rbr[\big]{1  - \varepsilon + \tfrac{3 C_\nu}{2}} \rbr[\big]{1 + \tfrac{\log^{-3} n + \log 2}{\varepsilon \log n + \log\log n}}. \label{eqn:cond_step_sz}
\end{align}
For any $\delta_p \in (0, 1/e)$, if 
\begin{align}
    n^{1-2\varepsilon} &\ge C_1 c_\eta \kappa_*^2 r^3 (1 + \nu_\infty d)\, (\log n)^4 \tfrac{\log(1/\delta_p)}{\delta_p^2} \label{eqn:cond_sample_sz}
\end{align}
for some constant $C_1 > 0$ sufficiently large, then with probability at least $ 1 - \delta_p$,
\begin{align}
  \norm{\sin\Theta(U_k, U_*)} \lesssim  \tau_* = \tfrac{\lambda_1}{\Delta_{\min}} (\nu_\infty \vee \nu_2) \sqrt{\tfrac{c_\eta r \log^2 n}{n}}, \qquad \text{for all } \floor{\tfrac{c_\eta^*}{c_\eta} n} \leq k \leq n.
  \label{eqn:convergence_rate}
\end{align}
\end{theorem}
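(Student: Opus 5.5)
We will work entirely through the tangent matrix $\cT_t=\cS_{U_t}\cC_{U_t}^{-1}$. The QR step does not change the column space, so $\cT_t$ can be computed from the unnormalized update $U_{t-1}+\eta X_tX_t^\top U_{t-1}$. Writing $X_t=U_*Y_t+U_{*,\perp}Z_t$ and $A_t:=I_r+\eta\Lambda_*+\eta(Y_tY_t^\top-\Lambda_*)$, a direct computation gives
\begin{align*}
\cT_t=\big[(I+\eta\Lambda_{*,\perp}+\eta(Z_tZ_t^\top-\Lambda_{*,\perp}))\cT_{t-1}+\eta Z_tY_t^\top\big]\big[A_t+\eta Y_tZ_t^\top\cT_{t-1}\big]^{-1}.
\end{align*}
Expanding the inverse yields the one-step decomposition
\[
\cT_t=(\cI-\eta\cL)\cT_{t-1}+\eta\,\xi_t+\eta\,M_t(\cT_{t-1})+R_t,
\]
where $\xi_t=Z_tY_t^\top$ is the pure noise term, $M_t$ is a martingale difference that is linear in $\cT_{t-1}$ (from $Z_tZ_t^\top-\Lambda_{*,\perp}$, $Y_tY_t^\top-\Lambda_*$ and $\cT_{t-1}Y_tZ_t^\top\cT_{t-1}$ fluctuations), and $R_t$ collects the terms of order $\eta^2\|X_t\|^4(1+\|\cT_{t-1}\|)^3$ together with the deterministic quadratic drift $-\eta\cT\Lambda_{\perp\!}$-type terms. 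The key point of the refined decomposition is that every noise term carries a factor $Z_t$ or a factor $\cT_{t-1}$. As a result, its conditional variance scales like $\lambda_{r+1\sim d}\lambda_1$ or $\lambda_{r+1}\lambda_1 d$, never like $\lambda_1^2$. This is what will produce $\bnu$ instead of a constant.

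The argument then has two phases. \textbf{Phase 1 (escape from Haar initialization).} A standard Haar computation gives $\|\cT_0\|\lesssim \sqrt{rd}/\delta_p$ with probability $1-\delta_p/3$. We do not linearize in this phase. Instead we compare the unnormalized products $\prod_s(I+\eta X_sX_s^\top)$ with $(I+\eta\Sigma)^t$. The gap contraction $(1+\eta\lambda_{r+1})^t/(1+\eta\lambda_r)^t\approx e^{-\teta t}=n^{-c_\eta t/n}$ must beat $\|\cT_0\|$ together with the $n^{C_\nu}$ factors coming from \eqref{eqn:cond_nu}. This comparison is where $c_\eta^*$ comes from: after $k_0=\lfloor c_\eta^* n/c_\eta\rfloor$ steps the tangent is forced below a constant and then into the $\tau_*$ regime. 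The $\log\log n$ and $\varepsilon$ corrections in \eqref{eqn:cond_step_sz} absorb the polylog and $\delta_p$ factors. We check that the multiplicative noise only changes this by factors $e^{O(\sqrt{\eta\lambda_1 \cdot \text{stuff}\cdot \log})}$, using the sample size condition \eqref{eqn:cond_sample_sz}.

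\textbf{Phase 2 (stability and sharp rate).} On the stopping-time event $\{\|\cT_s\|\le 2\tau_*\ \text{or}\ \le c\ \forall s< t\}$, we unroll
\[
\cT_k=(\cI-\eta\cL)^{k-k_0}\cT_{k_0}+\eta\sum_{s}(\cI-\eta\cL)^{k-s}\big(\xi_s+M_s(\cT_{s-1})\big)+\sum_s(\cI-\eta\cL)^{k-s}R_s .
\]
Because $\cL$ acts entrywise, $(\cI-\eta\cL)^{m}$ maps rank-structured noise nicely. We apply matrix Freedman to the stopped martingale, using truncated sub-Gaussian norms $\|Y_s\|\lesssim\sqrt{\lambda_{1\sim r}\log n}$ and $\|Z_s\|\lesssim\sqrt{(\lambda_{r+1\sim d}+\lambda_{r+1}\log n)}$. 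The predictable variance is $\eta^2\sum_s(1-\eta\Delta_{\min})^{2(k-s)}\|\EE\xi\xi^\top\|\vee\|\EE\xi^\top\xi\| \lesssim \frac{\eta}{\Delta_{\min}}\lambda_1(\lambda_{r+1\sim d}\vee r\lambda_{r+1})$. This produces exactly $\tau_*^2$ after taking the $\log n$ union bound over $k\le n$. The $M_s$ terms contribute $\tau_*\cdot\|\cT\|$ times a small factor, and $R_s$ contributes $\eta\lambda_1\cdot\text{polylog}\cdot\tau_*\ll\tau_*$ under \eqref{eqn:cond_sample_sz}. Together these close the bootstrap, so the stopping time never triggers. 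Finally, $\|\sin\Theta\|\le\|\cT\|$.

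\textbf{Main obstacle.} The main obstacle is Phase 1 combined with the no-exit argument. We must show that the iterates enter and remain in the contractive region without a burn-in that costs a non-vanishing remainder, and while $\nu_\infty$ may be as small as $n^{-C_\nu}$, where no crude $\lambda_d^{-1}$ bounds are allowed. We would first try the stopped-martingale/Freedman argument directly on $\log\det$-type potentials of $\cC_{U_t}$ as in the unnormalized product analysis.
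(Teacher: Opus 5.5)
Your Phase~2 ingredients are the ones the paper uses: the variance of $\eta\sum_s(\cI-\eta\cL)^{k-s}Z_sY_s^\top$ producing $\tau_*$, matrix Freedman on a stopped process, and $\|\sin\Theta\|\le\|\cT\|$. But the step you flag as the main obstacle is the heart of the theorem, and it is left unresolved.

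The paper has no separate escape phase and never compares $\prod_s(I+\eta X_sX_s^\top)$ with $(I+\eta\Sigma)^t$. The missing idea is that the tangent recursion is usable at \emph{every} scale $\tau=\|\cT\|$, including $\tau_0\asymp\delta_p^{-1}\sqrt{rd\log(1/\delta_p)}$. The exact increment is $\eta F/(1+\eta\zeta)$ with $F=(Z-\cT Y)(Y+\cT^\top Z)^\top$ and $\zeta=(Y+\cT^\top Z)^\top Y$. So every term, including the higher-order ones, carries the factor $Z-\cT Y$. This gives:
\begin{itemize}
\item centered noise with conditional standard deviation $\lesssim\teta\kappa_*\sqrt r\,(1+\tau\nu_\infty)(\tau+\bnu)$;
\item almost-sure noise size $\lesssim C_{\psi,n}^2\teta\kappa_* r(1+\tau\nu_2)(\tau+\nu_2)$;
\item a drift remainder proportional to $(\tau+\nu_\infty)$.
\end{itemize}
Starting from $\|\cT\|\le\tau$ and stopping at $2\tau$, matrix Freedman bounds the accumulated noise plus remainder, uniformly in time, by $\varphi(\tau)/2$. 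Here $\varphi(\tau)=\bar a\tau^2+\bar b\tau+\bar c$ with $3\bar c=\tau_*$. The sample-size condition (with $\nu_\infty d$ rather than $d$, via $\nu_2\le\sqrt d\,\nu_\infty$) is exactly what makes $\bar a\tau_0,\bar b<1/(3n^{\varepsilon}\log n)$. It also keeps $|1+\eta\zeta|\ge\tfrac12$ for $\tau\le2\tau_0$. So stability and descent hold from $k=0$ onward.

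Your remainder ``$\eta^2\|X_t\|^4(1+\|\cT_{t-1}\|)^3$'' drops the factor $Z-\cT Y$. Taken literally, it accumulates to $\teta\kappa_*^2r^2$ times polylogarithmic factors. That is not $\ll\tau_*$ once $\bnu\lesssim\sqrt{\teta}$, which is precisely the non-vanishing remainder the theorem must avoid.

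Second, even granting Phase~1 down to $\|\cT\|\le c$, a single stopping level cannot take you to $2\tau_*$. At level $c$ the $\cT$-linear martingale $M_s$ contributes about $\bar b\,c\asymp\sqrt{\teta}\kappa_*\sqrt{r\log n}\,c$. This exceeds $\tau_*\asymp\sqrt{\teta}\kappa_*\sqrt{r\log n}\,\bnu$ by the factor $c/\bnu$, which can be as large as $n^{C_\nu}$. The paper resolves this by epoch chaining:
\begin{itemize}
\item levels $\bar\tau_{j+1}=\varphi(\bar\tau_j)\vee\tau_*$, which shrink by at least a factor $n^{\varepsilon}\log n$ since $\varphi(\tau)\le\max\{(\bar a\tau_0+2\bar b)\tau,\tau_*\}$;
\item a fresh stopped-Freedman bound at level $2\bar\tau_j$ in epoch $j$;
\item epoch lengths $\lceil\teta^{-1}\log(2\bar\tau_j/\bar\tau_{j+1})\rceil$.
\end{itemize}

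This is also where $c_\eta^*$ comes from, not from the escape alone or from absorbing polylog and $\delta_p$ factors. The total length is $\teta^{-1}\log(\tau_0/\tau_*)$ plus an overhead of $\teta^{-1}\log2+1$ per epoch, over at most $\log(\tau_0/\tau_*)/(\varepsilon\log n+\log\log n)+1$ epochs. That overhead yields the second factor of $c_\eta^*$. Moreover $\log(\tau_0/\tau_*)\le(1-\varepsilon+\tfrac{3C_\nu}{2})\log n$ because \emph{both} endpoints involve $\nu_\infty$. The bound on $\tau_0$ enters through $d\,\delta_p^{-2}\log(1/\delta_p)\lesssim n^{1-2\varepsilon}/\nu_\infty$, and $\tau_*$ through $\bnu\ge\nu_\infty\ge c_\nu n^{-C_\nu}$.

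In your framing, suppose $k_0$ only marks $\|\cT_{k_0}\|\le c$. Then your argument certifies $\|\cT_k\|\lesssim\tau_*$ only after up to $\teta^{-1}\log(c/\tau_*)\approx(\tfrac12+C_\nu)n/c_\eta$ further steps. That overshoots the horizon $n$ when $c_\eta$ is close to $c_\eta^*$. If instead Phase~1 is meant to reach $\tau_*$ by $k_0$, it must itself deliver the sharp $\bnu$-adaptive rate through unnormalized products, and your sketch supplies no mechanism for that.
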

We highlight several implications of Theorem~\ref{thm:convergence_rate_optimal}.

\begin{itemize}[leftmargin=*]
  \item \textbf{Rate optimality.} The factor $\bnu = \nu_\infty \vee \nu_2$ in~\eqref{eqn:convergence_rate} 
  captures the worst-case contribution of the tail spectrum across two 
  regimes. The \emph{dense-tail regime}, where $\nu_2$ dominates, includes 
  the spiked covariance model (\cref{ex:spiked}) commonly studied in the 
  statistical literature \cite{anru,yuling}; here $\nu_2 \gtrsim \nu_\infty$ 
  whenever $\lambda_1 \lesssim \lambda_r$ and $r \lesssim d-r$. The \emph{sparse-tail regime}, where 
  $\nu_\infty$ dominates, arises when the tail spectrum is highly 
  concentrated, e.g. $\lambda_1=\cdots=\lambda_r>\lambda_{r+1}>0=\lambda_{r+2}=\cdots=\lambda_d$,
  in which case $\nu_\infty = \sqrt{r}\,\nu_2$.
  Similar regime-dependent behavior has also been observed in heteroscedastic PCA \cite{anru} and high-dimensional linear regression \cite{zijian}. The dependence on $\bnu$ is unimprovable: \cref{prop:lower_bd} establishes a matching lower bound, up to a factor of $\sqrt{\log n}$, under a mild nondegeneracy condition.
  In the spiked covariance setting (\cref{ex:spiked}) with $\lambda_1 \lesssim \lambda_r$ and $r \lesssim d-r$, our rate matches the minimax rate $\textstyle \sqrt{\tfrac{d}{n}} \rbr{\frac{\sigma}{s_r}+ \frac{\sigma^2}{s_r^2}}$ from \cite{anru} to a logarithmic factor. 
  Since $\nu_\infty \leq \sqrt{r}\nu_2$ in general, the regime 
  distinction collapses when $r = O(1)$ and yields $\bnu \asymp \nu_2$; in 
  particular, the rank-one case ($r=1$) is governed entirely by $\nu_2$ (see \cref{tab:opnorm-comparison}).
  
  \item \textbf{Dependence on $C_\nu$.} The step size requirement~\eqref{eqn:cond_step_sz} depends on $C_\nu$ through the lower bound~\eqref{eqn:cond_nu} on $\nu_\infty$. This is intuitive: Haar initialization yields $\norm{\cT_0}\lesssim \delta_p^{-1}\sqrt{rd\log(1/\delta_p)}$ with probability $\geq 1-\delta_p$, so reaching the target rate within $n$ samples requires a larger step size when $\nu_\infty$ is smaller. In the limit $\nu_\infty=\nu_2=0$, every update lies in the signal subspace, yet Haar initialization always leaves nonzero error in $U_n$ for any finite $n$; this is reflected in $c_\eta^{*}\to\infty$ as $C_\nu\to\infty$. This $C_\nu$ dependence is implicit in prior work but not made explicit there: \citep[Theorem~3]{han17} requires $\nu_\infty\gtrsim 1$ with $c_\eta=2$ at $r=1$, and \citep[Remark~4.4]{nearopt22} notes that the remainder term is non-negligible and the constant $C^{*}$ in their Theorem~4.6 with $c_\eta=3/2$ depends on $\lambda_d^{-1}$. Earlier works \cite{jain16,colt21,nips21_oja} avoid this dependence because their target rate effectively sets $C_\nu=0$ for $\nu_\infty \asymp 1$ (see \cref{tab:opnorm-comparison}), and hence are not adaptive to a small tail spectrum.

  \item \textbf{Dependence on $c_\eta$.} The error bound~\eqref{eqn:convergence_rate} is monotone in $c_\eta$, matching the standard intuition that smaller step size yields smaller stochastic error. The step size lower bound \eqref{eqn:cond_step_sz} $c_\eta^* \to 1-\epsilon + \tfrac{3C_\nu}{2}$ as $n \to \infty$. When $C_\nu=0$, as assumed in \cite{han17}, one can take $c_\eta = 1$.
  
  \item \textbf{Weaker sample-size requirement.} The condition~\eqref{eqn:cond_sample_sz} is weaker than the corresponding rank-one condition
  in \cite{nearopt22}: the dependence on $d$ enters only through $d\nu_\infty$, which can be much smaller than $d$ when $\nu_\infty$ is small. Ignoring logarithmic factors, when $r,\kappa_*,c_\eta\lesssim 1$, our condition reduces to $n^{1-2\varepsilon}\gtrsim(1+d\nu_\infty)\delta_p^{-2}$, an improvement over $n^{1-2\varepsilon}\gtrsim d\,\delta_p^{-2}$ in \cite{nearopt22} when $\nu_\infty$ is small.
\end{itemize}

When $\rank(\Sigma)=r$, we have $\tau_\ast=0$. However, for any well-defined $\cT_0\neq 0$, the error remains nonzero at every finite iteration. Thus, Theorem~\ref{thm:convergence_rate_optimal} cannot extend to the exact-rank case by simply setting $\nu_2=\nu_\infty=0$. Instead, the error converges geometrically to zero.
\begin{theorem}[Exact-rank geometric convergence] \label{thm:rankr}
Under Assumption~\ref{assumption:X}, suppose that $\operatorname{rank}(\Sigma)=r$ and
$U_0\sim\operatorname{Haar}(\mathbb O_{d,r})$.
Set $\eta=\frac{c_\eta\log n}{n\lambda_r}$ for $c_\eta\ge \frac12$.
For any $\varepsilon\in[0,1/2)$ and $\delta_p\in(0,1/e)$, if
\begin{equation}
  n^{1-2\varepsilon} \ge C_2 c_\eta\kappa_\ast^2 r^2(\log n)^4 \tfrac{\log(1/\delta_p)}{\delta_p^2} \label{eqn:cond_sample_sz_rankr_thm}
\end{equation}
for some constant $C_2>0$ sufficiently large, then, with probability at least $1-\delta_p$,
\[
    \|\sin\Theta(U_k,U_\ast)\|
    \lesssim
    \tfrac{\sqrt d\,n^{1/2-\varepsilon}}
    {\sqrt{c_\eta}\,\kappa_\ast\sqrt r\,(\log n)^2}
    \exp\big\{
        -(1-\tfrac{1}{n^\varepsilon\log n})
        k\widetilde\eta
    \big\},
    \qquad k=0,1,\ldots,n,
\]
where $\teta=\eta\lambda_r=c_\eta(\log n)/n$. In particular,
\[
    \|\sin\Theta(U_n,U_\ast)\|
    \lesssim
    \tfrac{\sqrt d}
    {\sqrt{c_\eta}\,\kappa_\ast\sqrt r\,(\log n)^2}
    n^{
        1/2-\varepsilon
        -c_\eta\left(1-(n^\varepsilon\log n)^{-1}\right)
    }.
\]

\end{theorem}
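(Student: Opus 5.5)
Since $\rank(\Sigma)=r$, we have $Z=U_{*,\perp}^\top X=0$ almost surely, so every update $X_tX_t^\top$ lies in the signal subspace. The plan is to track the tangent matrix $\cT_t=\cS_t\cC_t^{-1}$ exactly rather than through a noisy recursion. Write $U_{t-1}$ in the $(U_*,U_{*,\perp})$ coordinates. Because $X_t=U_*Y_t$, the unnormalized update is $(I+\eta X_tX_t^\top)U_{t-1}$. Its cosine block is $(I_r+\eta Y_tY_t^\top)\cC_{t-1}$ and its sine block is $\cS_{t-1}$. The QR step right-multiplies both blocks by the same invertible $r\times r$ matrix, so it cancels in the tangent. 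This gives the exact identity
\begin{align*}
\cT_t=\cT_{t-1}\,(I_r+\eta Y_tY_t^\top)^{-1}, \qquad\text{hence}\qquad \cT_k=\cT_0\prod_{t=1}^{k}(I_r+\eta Y_tY_t^\top)^{-1}.
\end{align*}
Invertibility of $\cC_t$ propagates from $\cC_0$, which is invertible almost surely under Haar initialization. Since $\norm{\sin\Theta(U_k,U_*)}\le\norm{\cT_k}$, it suffices to bound $\norm{\cT_0}$ and the operator norm of the product of inverses.

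\textbf{Initialization.} I will invoke the Haar bound quoted after Theorem~\ref{thm:convergence_rate_optimal}: $\norm{\cT_0}\lesssim\delta_p^{-1}\sqrt{rd\log(1/\delta_p)}$ with probability at least $1-\delta_p/2$. Combined with the sample-size condition \eqref{eqn:cond_sample_sz_rankr_thm}, this is at most a constant times $\sqrt d\,n^{1/2-\varepsilon}/(\sqrt{c_\eta}\kappa_*\sqrt r(\log n)^2)$, which is exactly the prefactor in the claim. Indeed, \eqref{eqn:cond_sample_sz_rankr_thm} says $\delta_p^{-1}\sqrt{\log(1/\delta_p)}\lesssim n^{1/2-\varepsilon}/(\sqrt{c_\eta}\kappa_* r(\log n)^2)$.

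\textbf{Contraction of the product.} Set $P_k:=\prod_{t\le k}(I+\eta Y_tY_t^\top)^{-1}$. I expand $(I+\eta Y_tY_t^\top)^{-1}=I-\eta\Lambda_*-\eta(Y_tY_t^\top-\Lambda_*)+O(\eta^2\norm{Y_t}^4)$. Then I write $P_k=(I-\eta\Lambda_*)^kB_k$-style telescoping, or more robustly run a supermartingale argument on $\norm{P_k}$ normalized by $(1-\eta\lambda_r)^k$. The deterministic part contracts at rate $(1-\eta\lambda_r)\le e^{-\teta}$. The martingale fluctuation $\eta\sum_t(Y_tY_t^\top-\Lambda_*)$, weighted by products, is controlled by matrix Freedman with conditional variance $\lesssim\eta^2\lambda_1^2 r\cdot k$. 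Over $k\le n$ steps this is of order $\kappa_*\teta\sqrt{rn\log n}\cdot$(polylog). The sample-size condition makes this, together with the $O(\eta^2\lambda_1^2r^2)$ per-step second-order terms summed over $n$ steps, at most $k\teta/(n^\varepsilon\log n)$ uniformly in $k$ on an event of probability at least $1-\delta_p/2$. The factor $(1-(n^\varepsilon\log n)^{-1})$ in the exponent is exactly this slack. A union bound and $k\teta\le c_\eta\log n$ then give the display for all $k$, and $k=n$ yields the second bound.

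\textbf{Main obstacle.} The hard step is that the matrices $(I+\eta Y_tY_t^\top)^{-1}$ do not commute, so $\norm{P_k}$ is not a product of scalar contractions. I would handle this by bounding $\norm{P_k}^2=\norm{P_k^\top P_k}$ multiplicatively. Conditioning on the past, I would show $\EE[\,P_k^\top P_k\mid\cF_{k-1}]\preceq(1-2\eta\lambda_r+C\eta^2\lambda_1^2 r)P_{k-1}^\top P_{k-1}$ in the Loewner order. I would then apply a Freedman-type concentration to the logarithm of the trace, using the sub-Gaussian tails of $Y_t$ truncated at $\norm{Y_t}^2\lesssim\lambda_{1\sim r}\log n$ on a high-probability event. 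Getting this concentration uniform over $k$, with only a $1/(n^\varepsilon\log n)$ relative loss in the exponent, is where the polylog and $r^2$ powers in \eqref{eqn:cond_sample_sz_rankr_thm} will be consumed.
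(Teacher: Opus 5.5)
Your reduction to $\cT_k=\cT_0A_1\cdots A_k$, $A_t:=(I_r+\eta Y_tY_t^\top)^{-1}$, and your initialization bound agree with the paper. The contraction step is where you depart, and as written it contains two errors.

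\emph{The Loewner bound is false in general.} You claim $\EE[P_k^\top P_k\mid\cF_{k-1}]\preceq(1-2\eta\lambda_r+C\eta^2\lambda_1^2r)P_{k-1}^\top P_{k-1}$. But $P_k=P_{k-1}A_k$, so with $M:=P_{k-1}^\top P_{k-1}$ you have $P_k^\top P_k=A_kMA_k$, with the random factor on the outside. Writing $B:=\Lambda_*-\lambda_rI_r\succeq0$, one gets $\EE[A_kMA_k]-(1-2\eta\lambda_r)M=-\eta(BM+MB)+O(\eta^2)$. This symmetrized product is indefinite once $M$ fails to commute with $\Lambda_*$; for $M=vv^\top$ it has the eigenvalue $v^\top Bv-\|Bv\|<0$ unless $Bv\parallel v$. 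That is a first-order violation, which no $O(\eta^2)$ slack absorbs. The right object is $P_kP_k^\top=P_{k-1}A_k^2P_{k-1}^\top$, whose conditional mean is $\preceq\|\EE A_k^2\|\,P_{k-1}P_{k-1}^\top$. Since you only use the trace afterwards, cyclicity of the trace already gives what you need.

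\emph{The log-trace argument on $P_k$ loses $\sqrt r$.} It starts from $\operatorname{tr}(P_0P_0^\top)=r$ and bounds $\|P_k\|$ only through $\operatorname{tr}(P_kP_k^\top)$. This costs a factor $\sqrt r$, sharp when $\lambda_1=\cdots=\lambda_r$, which the stated prefactor does not allow. Run the argument instead on $V_k:=\|\cT_k\|_{\rm F}^2=V_{k-1}\langle A_k^2,W_{k-1}\rangle$ with the density matrix $W_{k-1}:=\cT_{k-1}^\top\cT_{k-1}/V_{k-1}$. Then note that $\|\cT_0\|_{\rm F}\le\|G_2\|\,\|G_1^{-1}\|_{\rm F}$ obeys the same Haar bound, because the inverse-Gaussian estimate behind Lemma~\ref{lem:init} is proved as a bound on $\|G_1^{-1}\|_{\rm F}$.

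\emph{Small $k$.} The Freedman fluctuation $\teta\kappa_*\sqrt{k\log n}$ exceeds $k\teta/(n^\varepsilon\log n)$ when $k$ is small. So you need an additive constant in the exponent (AM--GM plus the sample-size condition). The $\lesssim$ absorbs it, just as the paper's factor $16e$ does.

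With these repairs your route is a valid and genuinely different proof. Set $X_k:=\log\langle A_k^2,W_{k-1}\rangle\le0$. Jensen and $\langle\Lambda_*,W_{k-1}\rangle\ge\lambda_r$ give $\EE[X_k\mid\cF_{k-1}]\le-2\eta\lambda_r+C\eta^2\lambda_1^2r$. The bound $X_k\ge-2\log(1+\eta\|Y_k\|^2)\ge-2\eta\|Y_k\|^2$ gives deterministic bounds on the upward increments and on the conditional variance. A one-sided scalar Freedman inequality, uniform in $k$, then yields the rate $1-(n^\varepsilon\log n)^{-1}$ under \eqref{eqn:cond_sample_sz_rankr_thm}, with no dimension factor and no truncation of $Y_t$.

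The paper stays at the matrix level. Over blocks of length $\lceil\log4/\teta\rceil$ it expands $\cT_{s+t}=\cT_s(I_r-\eta\Lambda_*)^t+\sum_i(\cI-\eta\cL)^{s+t-i}(D_i+R_i)$. It uses monotonicity $\|\cT_i\|\le\|\cT_s\|$ to bound every one-step term by $\tau_s$. It compresses the martingale onto $\mathrm{col}(\cT_s)$, so matrix Freedman pays a factor $2r$ rather than $d$. It then chains blocks with per-block factor $\rho_{\rm rk}\le1/2$ and reaches sub-Gaussian data through the truncation coupling. Blocking is how the paper makes an additive linearization legitimate despite the non-commutativity you flag. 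It gives operator-norm contraction directly and reuses the nonzero-tail machinery. Your scalar Lyapunov argument is shorter and more elementary but controls only the Frobenius norm, which is harmless here.
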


\begin{remark}[No finite-time exact recovery under exact rank]
  The distinction between Theorem~\ref{thm:convergence_rate_optimal} and Theorem~\ref{thm:rankr} is intrinsic. When $\rank(\Sigma)=r$, we have $Z_k=0$ almost surely, and the tangent recursion derived in Appendix~\ref{sec:oja_bdd} gives $\cT_{k+1} = \cT_k (I_r+\eta Y_{k+1}Y_{k+1}^\top)^{-1}$. Since $I_r+\eta Y_{k+1}Y_{k+1}^\top \succ 0$ for $\eta >0$, $\cT_k=0$ if and only if $\cT_0=0$ for every finite $k$. Thus, under a generic initialization, the error cannot reach zero in finitely many iterations even though $\tau_\ast=0$. Theorem~\ref{thm:rankr} therefore gives the appropriate form of the result: geometric convergence of the error to zero.
\end{remark}

We now return to the nonzero-tail setting of Theorem~\ref{thm:convergence_rate_optimal} and establish a linearization of the final tangent iterate. This representation underlies the distributional results in Section~\ref{sec:main_clt}--\ref{sec:main_bs}.
Recall from \eqref{eqn:cos_sin_tan_defn}, \eqref{eqn:gap_operator} and \eqref{eqn:Y_Z_defn} 
the tangent matrix $\cT_n$, the signal--noise coordinates
$Y_i=U_*^\top X_i$, $Z_i=U_{*,\perp}^\top X_i$, and the gap operator
$\cL$ given by $[\mathcal L M]_{\ell j}=(\lambda_j-\lambda_{r+\ell})M_{\ell j}$.

\begin{proposition}[Linearization] \label{prop:linearization}
  Assume the conditions in Theorem~\ref{thm:convergence_rate_optimal} hold with $c_\eta \geq c_\eta^* + \frac{1}{2}$. Then, with probability at least $1-\delta_p$,
  \begin{align*}
    \textstyle \big\|\cT_n - \eta \sum_{i=1}^n (\cI-\eta \cL)^{n-i} (Z_iY_i^\top)\big\| \lesssim \tau_* \sbr[\big]{\kappa_* \sqrt{r \teta \log n}\; (1+\bnu)} \lesssim \frac{\tau_*}{n^{\varepsilon} \log n}.
  \end{align*}
\end{proposition}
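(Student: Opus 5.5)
The plan is to derive an exact one-step recursion for the tangent matrix, unroll it, and bound the nonlinear remainder on the high-probability event of Theorem~\ref{thm:convergence_rate_optimal}. For the recursion, write $U_t$ before QR normalization as $\tilde U_t = (I+\eta X_tX_t^\top)U_{t-1}$. The QR step right-multiplies by an invertible $r\times r$ matrix, which cancels in $\cS\cC^{-1}$, so $\cT_t = \tilde\cS_t\tilde\cC_t^{-1}$. Expanding in the coordinates $Y_t,Z_t$ and dividing by $\cC_{t-1}$ gives
\[
\cT_t=\big(\cT_{t-1}+\eta Z_tY_t^\top+\eta Z_tZ_t^\top\cT_{t-1}\big)\big(I_r+\eta Y_tY_t^\top+\eta Y_tZ_t^\top\cT_{t-1}\big)^{-1}.
\]
Taking the first-order expansion of the inverse and centering $Z_tZ_t^\top$ and $Y_tY_t^\top$ at $\Lambda_{*,\perp}$ and $\Lambda_*$ yields
\[
\cT_t=(\cI-\eta\cL)\cT_{t-1}+\eta Z_tY_t^\top+\eta\,\xi_t(\cT_{t-1})+R_t,
\]
where $\cL\cT = \cT\Lambda_*-\Lambda_{*,\perp}\cT$ is exactly the gap operator. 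The term $\xi_t(\cT)=(Z_tZ_t^\top-\Lambda_{*,\perp})\cT-\cT(Y_tY_t^\top-\Lambda_*)$ is a martingale difference that is linear in $\cT$. The remainder $R_t$ collects everything of order $\eta\|\cT_{t-1}\|^2\|Y_t\|\|Z_t\|$ and $\eta^2(\|Y_t\|^2+\|Z_t\|^2)^2(\|\cT_{t-1}\|+\ldots)$.

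Unrolling the recursion gives
\[
\cT_n-\eta\sum_i(\cI-\eta\cL)^{n-i}Z_iY_i^\top=(\cI-\eta\cL)^n\cT_0+\eta\sum_i(\cI-\eta\cL)^{n-i}\xi_i(\cT_{i-1})+\sum_i(\cI-\eta\cL)^{n-i}R_i .
\]
I would bound the three terms separately.

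\begin{enumerate}
\item \emph{Initialization term.} With $\|\cT_0\|\lesssim\delta_p^{-1}\sqrt{rd\log(1/\delta_p)}$ and the contraction $(1-\teta)^n\le n^{-c_\eta}$, the strengthened requirement $c_\eta\ge c_\eta^*+\tfrac12$ gives an extra factor $n^{-1/2}$ beyond what Theorem~\ref{thm:convergence_rate_optimal} needs. This makes the term at most $\tau_*\cdot\kappa_*\sqrt{r\teta\log n}$, using \eqref{eqn:cond_nu} and \eqref{eqn:cond_sample_sz}.

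\item \emph{Martingale term.} Stop $\cT_{i-1}$ at the exit time from the ball $\{\|\cT\|\lesssim\tau_*\}$. By Theorem~\ref{thm:convergence_rate_optimal} this ball contains the late iterates; the early iterates, with $i<\floor{c_\eta^* n/c_\eta}$, are damped by $(1-\teta)^{n-i}\le n^{-1/2}$-type factors and are handled like the initialization term, using the crude bounds on $\|\cT_i\|$ from the proof of the theorem. Then apply matrix Freedman to the truncated sub-Gaussian summands. The conditional variance is $\eta^2\sum(1-\teta)^{2(n-i)}\tau_*^2\lambda_1^2(\ldots)\asymp \eta\tau_*^2\lambda_1^2/\Delta_{\min}$, so the term is about $\tau_*\kappa_*\sqrt{\teta r\log n}$, with the $r$ and $\log$ factors coming from the dimension-free Freedman bound and the union over steps. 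The tail parts contribute the factor $\bnu$, which the statement absorbs into $(1+\bnu)$.

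\item \emph{Remainder term.} Summing the contracted quadratic terms gives
\[
\eta\sum(1-\teta)^{n-i}\tau_*^2\|Y_i\|\|Z_i\|\lesssim \tau_*^2\,\lambda_1\bnu/\Delta_{\min}\cdot\mathrm{polylog},
\]
and $\tau_*\kappa_*\bnu\lesssim\tau_*\cdot\kappa_*\sqrt{r\teta\log n}\,\bnu$ since $\tau_*=\kappa_*\bnu\sqrt{\teta r\log n}$. The $\eta^2$ terms are smaller by a factor $\teta\kappa_*\cdot\mathrm{polylog}$.
\end{enumerate}

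Finally, \eqref{eqn:cond_sample_sz} gives $\kappa_*\sqrt{r\teta\log n}\,(1+\bnu)\lesssim n^{-\varepsilon}/\log n$, which is the second inequality in the statement.

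The main obstacle is step~2. The martingale $\xi_i(\cT_{i-1})$ depends on the whole past trajectory, so I must show that the iterates stay in the $\tau_*$-ball (no exit) while also controlling the early phase where $\|\cT_i\|$ is large. I would reuse the stopping-time and Freedman structure already built in the proof of Theorem~\ref{thm:convergence_rate_optimal}, with the step size condition $c_\eta^*+\tfrac12$ providing exactly the extra $\sqrt{\teta}$-level damping of the burn-in contribution.
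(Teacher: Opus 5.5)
Your skeleton (exact tangent recursion, isolating $\eta Z_iY_i^\top$, matrix Freedman for the part linear in $\cT_{i-1}$) is the paper's, but your remainder step does not close.

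\textbf{The quadratic term is a martingale difference.} You put $-\eta\,\cT_{i-1}Y_iZ_i^\top\cT_{i-1}$ into $R_i$ and sum it by the triangle inequality. That gives about $\tau_*^2\kappa_*\bnu\cdot\mathrm{polylog}$; with truncated norms it is really $\tau_*^2\kappa_* r\nu_2\log n$. Since $\tau_*\asymp\kappa_*\bnu\sqrt{\teta r\log n}$, beating the target $\tau_*\kappa_*\sqrt{r\teta\log n}\,(1+\bnu)$ requires $\kappa_*\bnu^2\cdot\mathrm{polylog}\lesssim 1+\bnu$, which is false in general. The inequality you wrote, $\tau_*\kappa_*\bnu\lesssim\tau_*\kappa_*\sqrt{r\teta\log n}\,\bnu$, would need $1\lesssim\sqrt{r\teta\log n}$.

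Concretely, take the spiked model with $r=1$ and $\sigma^2\asymp s_1^2$. Then $\kappa_*,\nu_\infty\asymp1$ and $\bnu\asymp\sqrt d$, so your bound is of order $\tau_*\cdot d\log n/\sqrt n$. This exceeds $\tau_*$ itself once $d\gg\sqrt n$, a regime allowed by \eqref{eqn:cond_sample_sz} when $\varepsilon<1/4$.

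The missing observation is that $\EE_{i-1}[\cT_{i-1}Y_iZ_i^\top\cT_{i-1}]=\cT_{i-1}\EE[YZ^\top]\cT_{i-1}=0$. The paper keeps this term in the noise (term $S_3$ in the proof of Lemma~\ref{lem:linearization}), where Freedman gives $O(\tau_*^3)$. The $\eta^2$ terms need the same treatment. Split them into the centered part $D_i^{(2)}$, handled by Freedman, and the drift $R_i$, bounded via $\|\EE_{i-1}W\|\le\|\EE_{i-1}WW^\top\|^{1/2}$ and sub-Gaussian moments as in Lemma~\ref{lem:one_step_bound}(II)--(III). A pathwise sum with truncation bounds loses factors of order $r\log n$ against the first inequality.

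\textbf{The burn-in.} Unrolling from $t=0$ forces you to control the nonlinear martingale and remainder while $\|\cT_{i-1}\|$ can be as large as $\delta_p^{-1}\sqrt{rd\log(1/\delta_p)}$. A uniform damping $(1-\teta)^{n-i}\le n^{-1/2}$ is not enough there. For the first-epoch part of the linear martingale it gives $n^{-1/2}\sqrt{\teta}\,\kappa_*\sqrt{r\log n}\,\|\cT_0\|$, and $n^{-1/2}\|\cT_0\|\lesssim\tau_*$ fails in general (e.g.\ small $\nu_\infty$ or small $\delta_p$). Repairing this would mean pairing the epoch radii with epoch-dependent damping and the envelope $\varphi$, i.e.\ redoing the convergence proof.

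The paper sidesteps this by restarting at $N_*=\lfloor c_\eta^*n/c_\eta\rfloor$:
\begin{itemize}
\item It bounds $(\cI-\eta\cL)^{n-N_*}\cT_{N_*}$ as one block by $2(1-\teta)^{n-N_*}\tau_*\le2n^{-1/2}\tau_*$, using Lemma~\ref{lem:convergence_rate_optimal}. This is exactly where $c_\eta\ge c_\eta^*+\tfrac12$ enters.
\item It adds back only the pre-burn-in \emph{linear} sum $\eta\sum_{i\le N_*}(\cI-\eta\cL)^{n-i}Z_iY_i^\top$. Its summands do not involve $\cT$, and Freedman plus the same damping bounds it by $n^{-1/2}\tau_*$.
\item It treats the nonlinear terms only for $i>N_*$, where $\|\cT_{i-1}\|\le2\tau_*$ on the good event.
\end{itemize}
Your separate initialization step then becomes unnecessary.

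Two smaller points. Matrix Freedman is not dimension-free: its $d_1+d_2$ prefactor is absorbed via $d\lesssim n^{C_d}$, which follows from \eqref{eqn:cond_nu} and \eqref{eqn:cond_sample_sz}. Also, the truncation you need for bounded increments makes $\EE[\bar Z\bar Y^\top]\neq0$; the paper absorbs this into the drift remainder before transferring back through the coupling.
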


Under the additional conditions stated below, the linearization in Proposition~\ref{prop:linearization} also yields a nearly matching lower bound.
\begin{proposition}[Lower bound] \label{prop:lower_bd}
  Assume the assumptions in \cref{prop:linearization} hold. If $\kappa_* \lesssim 1$ and $Y \perp Z$, then with probability at least $\frac{1}{12}\sbr{1+o(1)}-\delta_p$, $\|\sin\Theta(U_n, U_*)\| \gtrsim \frac{\tau_*}{\sqrt{\log n}}$.
\end{proposition}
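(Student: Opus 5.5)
Our plan is to reduce the lower bound to an anti-concentration statement for the linear term in Proposition~\ref{prop:linearization}. Write $G_n := \eta\sum_{i=1}^n (\cI-\eta\cL)^{n-i}(Z_iY_i^\top)$. Since $\|\cT_n\|\lesssim\tau_*\ll 1$ on the high-probability event of Theorem~\ref{thm:convergence_rate_optimal}, we have $\|\sin\Theta(U_n,U_*)\| = \|\cT_n(I+\cT_n^\top\cT_n)^{-1/2}\| \ge \|\cT_n\|/\sqrt{1+\|\cT_n\|^2} \gtrsim \|\cT_n\|$. Proposition~\ref{prop:linearization} then gives $\|\cT_n\| \ge \|G_n\| - C\tau_*/(n^{\varepsilon}\log n)$. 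It therefore suffices to show that $\|G_n\| \gtrsim \tau_*/\sqrt{\log n}$ with probability at least $\tfrac1{12}(1+o(1))$, and then intersect with the $1-\delta_p$ event.

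For the lower bound on $\|G_n\|$ we use two test directions, one for each regime of $\bnu=\nu_\infty\vee\nu_2$.

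\emph{Sparse-tail direction.} Take the $(1,1)$ entry, $[G_n]_{11} = \eta\sum_i (1-\eta\Delta_{11})^{n-i} Z_{i1}Y_{i1}$. This is a sum of independent mean-zero terms, since $Y\perp Z$. Its variance is $\eta^2\lambda_1\lambda_{r+1}\sum_i(1-\eta\Delta_{11})^{2(n-i)} \asymp \eta\lambda_1\lambda_{r+1}/\Delta_{11}$. The factor $1-(1-\eta\Delta_{11})^{2n}$ is $\asymp 1$ because $n\eta\Delta_{11}\ge c_\eta\log n$. Using $\kappa_*\lesssim 1$, which gives $\Delta_{11}\asymp\lambda_1\asymp\Delta_{\min}$, this variance is $\asymp \teta\,\nu_\infty^2$.

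\emph{Dense-tail direction.} Take $v=U$-coordinates $e_j$ in signal space and look at $\|G_n e_j\|_2^2 = \sum_\ell [G_n]_{\ell j}^2$. Its expectation is $\asymp \eta\lambda_j\sum_\ell \lambda_{r+\ell}/\Delta_{\ell j}\asymp \teta\,\lambda_{r+1\sim d}/\lambda_1$. Averaging over $j$, or choosing $j$ well, and using $\lambda_{1\sim r}\le r\lambda_1$, this is $\gtrsim \teta\,\nu_2^2$.

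Next we compare with $\tau_*^2 = \kappa_*^2\bnu^2\teta r\log n$. The two directions give $\|G_n\|^2 \gtrsim \teta\bnu^2$. This falls short of $\tau_*^2/\log n = \teta\bnu^2 r$ by a factor of $r$, so the real task is to recover $r$. For this we use the operator norm over all $r$ columns. In the sparse-tail regime we take the first row, $\|e_1^\top G_n\|^2=\sum_j [G_n]_{1j}^2$. Each of its $r$ independent-ish entries has variance $\asymp\eta\lambda_j\lambda_{r+1}/\Delta_{1j}\gtrsim\teta\nu_\infty^2$, which yields $r\teta\nu_\infty^2$. In the dense-tail regime, $\nu_2^2 r$ is comparable to $\lambda_{r+1\sim d}/\lambda_1$, so the column bound already suffices.

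To turn second moments into a constant-probability lower bound we use Paley--Zygmund on the quadratic forms $Q=\|e_1^\top G_n\|^2$ and $Q=\|G_ne_j\|^2$: $\PP(Q\ge \tfrac12\EE Q)\ge \tfrac14(\EE Q)^2/\EE Q^2$. The fourth moment is controlled by sub-Gaussianity and independence of $Y$ and $Z$, giving $\EE Q^2\le 3(\EE Q)^2(1+o(1))$ via a Gaussian-like fourth-moment computation, or via Gaussian approximation. This produces the constant $\tfrac1{12}=\tfrac14\cdot\tfrac13$.

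\textbf{Main obstacle.} We expect the main obstacle to be this fourth-moment bound. The summands are products $Z_{i\ell}Y_{ij}$ with non-Gaussian fourth moments, and cross terms across $j$ share the same $Z_{i1}$. Our first attempt is to establish a CLT for the rescaled vector, so that $\EE Q^2/(\EE Q)^2\to$ the Gaussian value $\le 3$, as Gaussian quadratic forms satisfy. The $o(1)$ term then absorbs the Berry--Esseen error, which is small because each summand is $O(\eta)$-weighted and $n\eta\Delta\to\infty$. The residual $\sqrt{\log n}$ loss comes from $\tau_*$ containing $\log n$ while the variance contains $\teta$ only.
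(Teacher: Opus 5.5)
\textbf{Where the proposal falls short.} Your architecture matches the paper's. You pass from $\sin\Theta$ to $\cT_n$ to the linear term via Proposition~\ref{prop:linearization}. You then apply Paley--Zygmund to the first row (giving $r\teta\nu_\infty^2$) and to the first column (giving $\teta\lambda_{r+1\sim d}/\lambda_1\asymp r\teta\nu_2^2$), which produces the constant $\tfrac14\cdot\tfrac13$. For the column, the comparability you assert follows from $\lambda_r\ge\Delta_{\min}\gtrsim\lambda_1$, hence $\lambda_{1\sim r}\asymp r\lambda_1$. The problem is the step you flag as the main obstacle, $\EE Q^2\le 3(\EE Q)^2(1+o(1))$: your primary route for it does not work.

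\begin{itemize}
\item A CLT, or a Berry--Esseen bound in Kolmogorov or convex-set distance, does not control $\EE Q^2/(\EE Q)^2$. That would require a separate uniform-integrability estimate, i.e.\ even higher moments.
\item For the column, the vector lies in $\RR^{d-r}$, with $d$ allowed to grow polynomially in $n$. Its covariance can be badly conditioned, since the tail eigenvalues may decay fast. A multivariate Berry--Esseen bound, which involves $\Sigma^{-1/2}$ and a $d^{1/4}$ factor, therefore need not vanish under the proposition's hypotheses.
\item Those hypotheses also do not include the effective-dimension conditions of Theorem~\ref{thm:highD_clt} that a quadratic-form CLT would require.
\end{itemize}

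\textbf{The fix: a direct pairing bound.} No Gaussian approximation is needed; the fourth moment of a sum of independent centered vectors is controlled by pairing. Write $Q=\|\sum_i\gamma_i\|^2$ with $\Sigma_i=\EE\gamma_i\gamma_i^\top$. Only index patterns without a singleton survive, and $\tr(\Sigma_i\Sigma_j)\le\tr\Sigma_i\,\tr\Sigma_j$ for PSD matrices, so
\[
\EE Q^2=\sum_{i,a}\EE\|\gamma_i\|^2\|\gamma_a\|^2+2\sum_{i\neq j}\tr(\Sigma_i\Sigma_j)\le\sum_i\EE\|\gamma_i\|^4+3(\EE Q)^2 .
\]
Correlations among coordinates inside one summand, such as the shared $[Z_i]_1$ across $j$ that worries you, enter only through $\Sigma_i$ and $\EE\|\gamma_i\|^4$, so they are harmless. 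Non-Gaussianity enters only through the diagonal term $\sum_i\EE\|\gamma_i\|^4$.

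For the column, $\|\gamma_i\|^2=[Y_i]_1^2\sum_\ell\eta^2(1-\eta\Delta_{\ell1})^{2(n-i)}[Z_i]_\ell^2$. By $Y\perp Z$ and Lemma~\ref{lem:kth_moment}, $\sum_i\EE\|\gamma_i\|^4\lesssim\eta^3\lambda_1^2\lambda_{r+1\sim d}^2/\Delta_{\min}$. On the other hand, $(\EE Q)^2\gtrsim\eta^2\lambda_1^2\lambda_{r+1\sim d}^2/\Delta_{\max}^2$. The ratio is therefore $O(\teta\kappa_\Delta^2)=o(1)$, using $\kappa_\Delta\le\kappa_*\lesssim1$. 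The row case is symmetric. This is exactly how Lemma~\ref{lem:lower_bound} closes the argument. Your passing mention of a ``Gaussian-like fourth-moment computation'' points at it, but it should be the argument, not the CLT.
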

\cref{prop:lower_bd} matches \cref{thm:convergence_rate_optimal} up to logarithmic factors, and shows the dependence on $\bnu$ is unimprovable. The independence condition can be relaxed to a weaker nondegeneracy condition $\lambda_{\min}(\tfC)\gtrsim 1$ where $\lambda_{\min}(\tfC)$ is the minimum eigenvalue of $\tfC$ \eqref{eqn:cov_defn}.

% ------------------------------------------------------------
\subsection{Gaussian approximation}
\label{sec:main_clt}

We now establish high-dimensional Gaussian approximations for the global subspace estimation error. Recall the \emph{sine} and \emph{tangent} matrices $\cS_n$ and $\cT_n$ from \eqref{eqn:cos_sin_tan_defn}. 
The intuition is simple. In the rank-one case, $\sin\theta \approx \tan\theta$ when $|\tan \theta|$ is small. An analogous approximation holds in the rank-$r$ case: $\|\cS_n\|_{\mathrm{F}} \approx \|\cT_n\|_{\mathrm{F}}$ when $\norm{\cT_n}$ is small. Together with the identity $\|\sin \Theta(U_n, U_*)\|_{\mathrm{F}} = \|\cS_n\|_{\mathrm{F}}$ and the linearization in \cref{prop:linearization}, this gives the heuristic approximation
\begin{align}
  \textstyle \Fnorm{\sin \Theta(U_n, U_*)} = \Fnorm{\cS_n} \approx \Fnorm{\cT_n} \approx \Fnorm{\eta \sum_{i=1}^n (\cI-\eta \cL)^{n-i} (Z_iY_i^\top)}. \label{eqn:highD_clt_heuristic}
\end{align}
\cref{thm:highD_clt} makes this intuition rigorous by applying a high-dimensional Gaussian approximation to the normalized linearized error $\eta^{1/2} \sum_{i=1}^n (\cI-\eta \cL)^{n-i} (Z_iY_i^\top)$.

Recall $\lambda_{a \sim b}\!\!=\!\! \sum_{i=a}^b \lambda_i$, the covariance operator $\fC\!=\!\cov(ZY^{\top})$ from~\eqref{eqn:cov_defn} and its Hilbert-Schmidt norm $\HS{\fC}$. Define the cross-spectral term $\lambda_{\times}\!\!:=\!\! [\lambda_{1\sim r} \lambda_{r+1 \sim d}]^{1/2}$ and effective dimension $d_{\fC}\!:=\!\frac{\lambda_\times^4}{\HS{\fC}^2}$. 
Under Assumption~\ref{assumption:X}, $d_{\cC}\gtrsim 1$; see Lemma~\ref{lem:matrix_kth_moment}.
% of $\fC$ by
% \begin{align*}
%   \textstyle \lambda_{\times}:= [\lambda_{1\sim r} \lambda_{r+1 \sim d}]^{1/2}, \qquad \qquad d_{\fC}:=\frac{\lambda_\times^4}{\HS{\fC}^2}
% \end{align*}

\begin{theorem}[Gaussian approximation for the subspace error] \label{thm:highD_clt}
  Assume $\HS{\fC}>0$ and the conditions of \cref{prop:linearization} hold with $\delta_p\to0$. Let $\mathsf{G} \sim \cN(0, \Gamma)$ be a Gaussian random element in $\RR^{(d-r)\times r}$ with $\Gamma = \int_0^\infty e^{-t\cL} \fC e^{-t\cL} dt$.
  If
  \begin{align}
    \textnormal{(i)}\quad \teta \, \kappa_*^4 r^{3} d_{\fC}^{3/2} (\log n)^{3} \to 0, \qquad 
    \textnormal{(ii)} \quad \teta\, \kappa_*^4  r^{3} d_{\fC} \bnu^2 (\log n)^{3} \to 0, \label{eqn:highD_clt_cond}
  \end{align}
  then \;  $\textstyle \sup_{t \in \RR} \abs[\big]{\PP\cbr[\big]{ \eta^{-1} \Fnorm{\sin \Theta(U_n, U_*)}^2 \leq t} - \PP\cbr{\|\mathsf{G}\|_{\mathrm{F}}^2 \leq t}} \to 0$.
\end{theorem}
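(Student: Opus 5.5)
The plan has three stages. First, reduce the statistic to the squared Frobenius norm of the linearized sum. Second, apply a Gaussian approximation for squared norms of that sum. Third, compare its covariance with $\Gamma$. Write
\[
S_n:=\eta^{1/2}\sum_{i=1}^n(\cI-\eta\cL)^{n-i}(Z_iY_i^\top),
\]
so that \cref{prop:linearization} gives
\[
\|\eta^{-1/2}\cT_n-S_n\|\lesssim \eta^{-1/2}\tau_*/(n^\varepsilon\log n).
\]

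\textbf{Step 1: reduction to $S_n$.} From the definitions, $\cS_n=\cT_n\cC_n$ and $\cC_n^\top\cC_n=I_r-\cS_n^\top\cS_n=(I_r+\cT_n^\top\cT_n)^{-1}$. Hence
\[
\big|\Fnorm{\cS_n}^2-\Fnorm{\cT_n}^2\big|\le \|\cT_n\|^2\Fnorm{\cT_n}^2 .
\]
On the event of \cref{thm:convergence_rate_optimal}, $\|\cT_n\|\lesssim\tau_*$, because $\|\cS_n\|$ small forces $\|\cC_n^{-1}\|\le 2$. Dividing by $\eta$, the statistic $\eta^{-1}\Fnorm{\sin\Theta}^2$ differs from $\Fnorm{S_n}^2$ by a perturbation $R_n$. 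This perturbation has two parts: a cross term of order $\Fnorm{S_n}\cdot\sqrt r\,\eta^{-1/2}\tau_*/(n^\varepsilon\log n)$ (using $\Fnorm{\cdot}\le\sqrt r\|\cdot\|$ for rank-$r$ matrices), plus the squared linearization error and the $\tau_*^2$ term.

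The natural scale of $\Fnorm{S_n}^2$ is $\mathrm{tr}\,\Gamma\asymp \lambda_\times^2/\Delta$. Its fluctuation scale is $\HS{\Gamma}\gtrsim\HS{\fC}/\lambda_1$. The key point is anti-concentration of $\|\mathsf G\|_{\mathrm F}^2$: its density is bounded by roughly $C/\HS{\Gamma}$ (a Gaussian quadratic-form anti-concentration of Götze--Naumov type). So it suffices to show $|R_n|=o_P(\HS{\Gamma})$. Converting with $d_\fC=\lambda_\times^4/\HS{\fC}^2$ and $\tau_*^2=\kappa_*^2\bnu^2\teta r\log n$, the requirement becomes a polynomial in $\teta,\kappa_*,r,d_\fC,\bnu$. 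I expect conditions (ii) and (i), respectively, to control the quadratic piece and the cross piece.

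\textbf{Step 2: Gaussian approximation of $\Fnorm{S_n}^2$.} $S_n$ is a sum of independent, mean-zero $\HH$-valued terms $\xi_i=\eta^{1/2}(\cI-\eta\cL)^{n-i}(Z_iY_i^\top)$. I will apply a Berry--Esseen bound for squared norms of sums in Hilbert space, of the kind used for the rank-one case in \cite{nips21_oja}, via Lindeberg replacement with a smoothed indicator of the ball. Let $\Gamma_n=\eta\sum_i(\cI-\eta\cL)^{n-i}\fC(\cI-\eta\cL)^{n-i}$. The error will be controlled by $\sum_i\EE\Fnorm{\xi_i}^3/\HS{\Gamma_n}^{3/2}$ together with fourth-moment terms. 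The sub-Gaussian moment bounds (Lemma~\ref{lem:matrix_kth_moment}) give $\EE\Fnorm{ZY^\top}^k\lesssim\lambda_\times^k$. Summing the geometric weights then yields an error of order
\[
\sqrt{\teta}\,\kappa_*^{O(1)}\,\lambda_\times^3/\HS{\fC}^{3/2}=\sqrt{\teta}\,\kappa_*^{O(1)}d_\fC^{3/4},
\]
which vanishes under (i).

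\textbf{Step 3: comparison of $\Gamma_n$ with $\Gamma$.} The operators $\Gamma_n$ and $\Gamma$ act entrywise through the gaps. The discrete sum is a Riemann sum for $\int e^{-t\cL}\fC e^{-t\cL}dt$, with relative error $O(\eta\lambda_1)$ from replacing $(1-\eta\Delta)^k$ by $e^{-k\eta\Delta}$, and a truncation error $e^{-n\teta}=n^{-c_\eta}$. A Gaussian comparison for quadratic forms, in terms of $\HS{\Gamma_n-\Gamma}/\HS{\Gamma}$ or a Pinsker bound in trace norm, finishes the proof.

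\textbf{Main obstacle.} The hardest part is Step 2 together with the anti-concentration used in Step 1. The required Berry--Esseen rate must hold uniformly in dimension and depend only on the effective dimension $d_\fC$. This needs the sharp Hilbert-space bound, in which the denominator is $\HS{\Gamma}$ rather than the smallest eigenvalue, together with careful tracking of how $\cL$ distorts $\fC$, which is where the powers of $\kappa_*$ enter.
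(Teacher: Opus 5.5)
Your architecture matches the paper's. You linearize via Proposition~\ref{prop:linearization} and apply the squared-norm CLT of \cite{nips21_oja} to $\sum_i\xi_i$; your Step~2 rate squared is the paper's bottleneck $\teta\kappa_\Delta^3 d_{\fC}^{3/2}$. You then compare $\Gamma_n$ with $\Gamma$ and absorb remainders by anti-concentration of $\|\mathsf{G}\|_{\mathrm F}^2$ at scale $\HS{\Gamma}$. The paper uses separate additive and multiplicative remainder lemmas, which is equivalent to your single perturbation $R_n$.

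\textbf{The gap: Step~1 does not close with the linearization bound you quote.} You use the crude form $\|\eta^{-1/2}\cT_n-S_n\|\lesssim\eta^{-1/2}\tau_*/(n^{\varepsilon}\log n)$. Three facts fix the scales: $\eta^{-1/2}\tau_*\asymp\Delta_{\min}^{1/2}\kappa_*\bnu\sqrt{r\log n}$, $\|S_n\|_{\mathrm F}\approx\sqrt{\tr\Gamma}$ with $\tr\Gamma\lesssim\lambda_\times^2/\Delta_{\min}$, and $\HS{\Gamma}\gtrsim\lambda_\times^2/(\Delta_{\max}\sqrt{d_{\fC}})$. Hence your cross term, relative to $\HS{\Gamma}$, is only bounded by roughly $\kappa_* r\sqrt{d_{\fC}}\,n^{-\varepsilon}$ (up to logarithms), and this need not vanish.

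First, $\varepsilon=0$ is permitted. Second, even taking $\varepsilon$ as large as \eqref{eqn:cond_sample_sz} allows fails. Consider $r=1$, Gaussian $X$, $\lambda_1=1$, tail eigenvalues $n^{-0.3}$, and $d=n^{0.6}$. Then (i), (ii) and \eqref{eqn:cond_sample_sz} hold, the last only for $\varepsilon<0.275$. Yet your cross-term ratio is of order $\sqrt{d}\,n^{-\varepsilon}/\sqrt{\log n}\to\infty$.

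The crude form discards the factor $\sqrt{\teta}$ that the theorem relies on. Use instead the sharper bound $\tau_*\kappa_*\sqrt{r\teta\log n}\,(1+\bnu)$, which is also stated in Proposition~\ref{prop:linearization}. The Frobenius error then becomes $e_n\asymp\Delta_{\min}^{1/2}\kappa_*^2 r^{3/2}\sqrt{\teta}\,(\log n)\,\bnu(1+\bnu)$. With $L_n^2\asymp\tr(\Gamma)\log n$ and $\Delta_{\max}\bnu\le\lambda_\times$, one gets $(e_nL_n/\HS{\Gamma})^2\lesssim\teta\kappa_*^4r^3(\log n)^3(1+\bnu)^2 d_{\fC}$. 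Killing this needs (i) and (ii) jointly, not (i) alone as you expected. The $\tau_*^2$ piece has relative size $\tau_*^2\kappa_\Delta\sqrt{d_{\fC}}\log n$ and is handled by (ii).

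\textbf{Smaller corrections.}

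\emph{Anti-concentration.} The density of $\|\mathsf{G}\|_{\mathrm F}^2$ is not bounded by $C/\HS{\Gamma}$ in general. If $\Gamma$ is nearly rank one, which $d_{\fC}\gtrsim 1$ does not exclude, it behaves like a scaled $\chi^2_1$. What does hold is the weighted-$\chi^2$ bound $\sup_t\PP\{t<\|\mathsf{G}\|_{\mathrm F}^2\le t+h\}\lesssim\sqrt{h/\HS{\Gamma}}$ (Lemma~\ref{lem:chi2_anti_concentration}). This still yields your sufficient condition $|R_n|=o_P(\HS{\Gamma})$.

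\emph{Covariance comparison in Step~3.} You must control the mean shift $|\tr(\Gamma_n-\Gamma)|/\HS{\Gamma}$, not only $\HS{\Gamma_n-\Gamma}/\HS{\Gamma}$. Your entrywise relative error $O(\teta\kappa_*)+(1-\teta)^{2n}$ achieves this, because $\tr\Gamma/\HS{\Gamma}\lesssim\kappa_\Delta\sqrt{d_{\fC}}$ and (i) then applies. Do this through a Kolmogorov-distance comparison of Gaussian quadratic forms such as Lemma~\ref{lem:chi2_mixture_comparison}. A TV/Pinsker comparison needs $\Gamma^{-1/2}\Gamma_n\Gamma^{-1/2}\approx\cI$, which is unavailable when $\fC$ is degenerate (only $\HS{\fC}>0$ is assumed).

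\emph{An identity slip.} The identity in Step~1 should read $\cC_n\cC_n^\top=(I_r+\cT_n^\top\cT_n)^{-1}$, not $\cC_n^\top\cC_n$. Your inequality $\bigl|\|\cS_n\|_{\mathrm F}^2-\|\cT_n\|_{\mathrm F}^2\bigr|\le\|\cT_n\|^2\|\cT_n\|_{\mathrm F}^2$ remains correct.
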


\begin{remark}[Gaussian case]
  When $Y \perp Z$ (in particular for Gaussian $X$), $\fC = \Lambda_* \otimes \Lambda_{*,\perp}$, $\HS{\fC} = \|\Lambda_*\|_{\mathrm{F}} \|\Lambda_{*,\perp}\|_{\mathrm{F}}$, and $d_{\fC}=  \frac{[\tr \Lambda_*]^2}{\Fnorm{\Lambda_*}^2}  \frac{[\tr \Lambda_{*,\perp}]^2}{\Fnorm{\Lambda_{*,\perp}}^2} \leq r(d-r)$.
  % \begin{align*}
  %   d_{\fC}=  \sbr[\bigg]{\frac{\tr \Lambda_*}{\Fnorm{\Lambda_*}}}^2  \sbr[\bigg]{\frac{\tr \Lambda_{*,\perp}}{\Fnorm{\Lambda_{*,\perp}}}}^2 \leq r(d-r).
  % \end{align*}
  Thus, when $c_\eta,\kappa_\ast,r \lesssim 1$, condition~\eqref{eqn:highD_clt_cond} allows $d^{3/2} = o(n)$ up to logarithmic factors.
\end{remark}

\begin{remark}[Comparison with the rank-one result of \citep{nips21_oja}]
  When $r=1$, $\fC$ reduces to the matrix $\fC = \EE \sbr{ZZ^\top (Y^2)} \in \RR^{(d-1)\times (d-1)}$,
  which coincides with the matrix $\MM$ in \citep[Theorem~1]{nips21_oja}. Three differences are worth highlighting.
  \begin{itemize}[leftmargin=*]
    \item \emph{Explicit limiting distribution}. In contrast to the $n$-dependent covariance $\bar \VV_n$ in \citep{nips21_oja}, our limit $\Gamma=\int_0^\infty e^{-t\cL} \fC e^{-t\cL} dt$ is explicit and is the unique solution to the Lyapunov equation $\cL\Gamma+\Gamma \cL=\fC$. The same covariance structure appears in constant-step-size SGD for convex optimization \citep{gaussian25}; here it arises from the nonconvex streaming PCA dynamics.

    \item \emph{Scale-invariant assumptions}. One key assumption \citep[Eq.~(9)]{nips21_oja} reads, in our notation, $\EE\|ZY^\top\|_{\mathrm{F}}^6/\HS{\fC}^3 = o(n)$. Under Assumption~\ref{assumption:X}, the moment estimates in Appendix~\ref{sec:tech_lemmas} give $\EE \|ZY^\top\|_{\rm F}^6 \lesssim \lambda_\times^6$, with equality up to constants in the Gaussian case. 
    Since $d_C^{3/2}=\lambda_\times^6/\|\fC\|_{\rm HS}^3$, this condition requires $d_C^{3/2}=o(n)$. When $r=1$ and $\kappa_\ast\lesssim1$, our condition~\eqref{eqn:highD_clt_cond}(i), with $\teta\asymp(\log n)/n$, requires $d_{\fC}^{3/2}(\log n)^4=o(n)$, matching the dependence on $d_{\fC}$ up to logarithmic factors. Another assumption in~\cite[Eq.~(8)]{nips21_oja} is $\|\fC\|_{\rm F}\gtrsim1$, which is not scale invariant under transformation~\eqref{eqn:scale_transform}; our conditions are stated entirely in scale-invariant terms. The remaining condition~\eqref{eqn:highD_clt_cond}(ii), which couples $d_{\fC}$ and $\bar\nu$, is implied by the assumptions of~\cite[Theorem~1]{nips21_oja}.

    \item \emph{Proof technique}. \citep{nips21_oja} relies on a Hoeffding decomposition of the unnormalized matrix product $[\prod_{i=1}^n (I_d + \eta X_i X_i^\top)] U_0$.
    Our proof linearizes the normalized iteration via \cref{prop:linearization}, which keeps the QR  step inside the analysis. For general $r>1$, the unnormalized Hoeffding approach can formally lead to the same Gaussian limit, but making it rigorous appears difficult: without normalization at the level of the expansion, the argument produces exponentially growing terms that are hard to control.
  \end{itemize}
\end{remark}

We next turn to row-wise Gaussian approximation. 
To go beyond the Frobenius norm, we must account for the rotational non-identifiability of the leading eigenspace: for any $R\in \OO_{r,r}$, the matrices $U$ and $UR$ span the same subspace, so without further eigenvalue separation within the signal block one can only estimate
$U_*$ up to a global rotation. We therefore study the
Procrustes-aligned error. For any nonsingular matrix $H\in\mathbb R^{r\times r}$
with SVD $H=V_1 \Lambda_H V_2^\top$, denote $\sgn(H):=V_1 V_2^\top \in \OO_{r,r}$. Then $\sgn(U^\top U_*) = \argmin_{R \in \OO_{r,r}} \Fnorm{UR-U_*}^2$, and the aligned difference $U\sgn(U^\top U_*)-U_*$ is the natural object for row-wise inference \citep{without2inf,abbe,mc_inference}.

The intuition is again based on the tangent linearization (\cref{prop:linearization}). Recall the \emph{cosine}, \emph{sine} and \emph{tangent} matrices $\cC_n$, $\cS_n$ and $\cT_n$ from \eqref{eqn:cos_sin_tan_defn}. By definition, $U_n^\top U_* = \cC_n^\top$, $U_n^\top U_{*,\perp} = \cS_n^\top$ and $\cS_n = \cT_n \cC_n$, which gives the exact decomposition
\begin{align*}
  U_n \sgn(U_n^\top U_*)- U_* = U_{*,\perp} \cT_n - [U_* + U_{*,\perp} \cT_n] [I_r  - \cC_n \cC_n^\top] + U_n [\sgn(\cC_n^\top) - \cC_n^\top]
\end{align*}
In the rank-one case, $|1-\cos \theta| \lesssim \tan^2 \theta$ when $|\tan \theta|$ is small. Analogously, in the rank-$r$ case: $\|I_r-\cC_n \cC_n^\top\|\lesssim \|\cT_n\|^2$ and $\|\sgn(\cC_n^\top) - \cC_n^\top\| \lesssim \|\cT_n\|^2$ when $\|\cT_n\|$ is small. Thus the aligned difference is heuristically \(U_{*,\perp} \cT_n\). 
Together with the linearization in \cref{prop:linearization}, this gives the heuristic approximation
\begin{align*}
  U_n \sgn(U_n^\top U_*)- U_*  \approx U_{*,\perp} \cT_n \approx U_{*,\perp} \eta \sum_{i=1}^n (\cI-\eta \cL)^{n-i} (Z_iY_i^\top).
\end{align*}
This motivates the following limiting row-wise covariance.

\paragraph{Limiting covariance.} 
For $u\in\RR^{d-r}$, let $\Gamma_u\in\RR^{r\times r}$ denote the $u$-projection of $\Gamma$, with entries
\begin{equation}\label{eqn:Gamma_u}
  \textstyle [\Gamma_u]_{j'j} \; = \sum_{l,l'=1}^{d-r}\frac{u_{l'}u_l\,\EE \big[[Z]_l\,[Y]_j\,[Z]_{l'}\,[Y]_{j'}\big]}{\Delta_{lj}+\Delta_{l'j'}}, \qquad \quad \forall j,j' \in [r].
\end{equation}
Here $[v]_k$ denotes the $k$-th coordinate of vector $v$; see Appendix~\ref{sec:clt_setup} for properties of $\Gamma_u$.
Let $\lambda_{\min}(\tfC)$ denote the minimum eigenvalue of $\tfC$ from~\eqref{eqn:cov_defn}. 
For $m\in[d]$, set $u_m:=U_{*,\perp}^\top e_m \in \RR^{d-r}$ and $\Sigma^*_{U,m}:=\eta\Gamma_{u_m}$.

\begin{theorem}[Row-wise Gaussian approximation] \label{thm:row_clt}
  Assume the conditions of \cref{prop:linearization} hold with $\delta_p \to 0$. Fix $m \in [d]$. If $\Lambda_{*,\perp}^{1/2}u_m \neq 0$, $\lambda_{\min}(\tfC) >0$,  $\teta \kappa_*^{6} r^{7/2} \lambda_{\min}^{-3}(\tfC) \to 0$
  % \begin{align*}
  %   &\sqrt{\teta} \kappa_*^{3} r^{7/4} \lambda_{\min}^{-3/2}(\tfC) \to 0
  % \end{align*}
  and
  \begin{align}
    \teta \kappa_*^4  r^{5/2} (\log n)^2 \bnu^2 \;(1+\bnu)^2  \frac{\lambda_1 \max (\norm{u_m}^2, \|U_*^\top e_m\|^2)}{\|\Lambda_{*,\perp}^{1/2} u_m \|^2} \lambda_{\min}^{-1}(\tfC) \to 0, \label{eqn:row_clt_cond1}
  \end{align}
  then \;  $\sup_{\cA \in \mathscr{A}^r} \abs{\PP \cbr{[U_n \sgn(U_n^\top U_*) - U_*]_{m,\cdot} \in \cA} - \cN(0, \Sigma^*_{U,m})\cbr{\cA}} \to 0$
  where $\mathscr{A}^r$  is the family of convex sets in $\RR^r$. 
\end{theorem}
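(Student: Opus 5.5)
The plan is a linearize--then--Berry--Esseen argument, carried out row by row.

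\emph{Step 1 (reduction to the tangent row).} From the exact decomposition preceding the theorem, the $m$-th row of $U_n\sgn(U_n^\top U_*)-U_*$ equals $u_m^\top\cT_n$ plus remainders. On the event of \cref{prop:linearization}, $\norm{\cT_n}\lesssim\tau_*$. Using $\|I_r-\cC_n\cC_n^\top\|\lesssim\|\cT_n\|^2$ and $\|\sgn(\cC_n^\top)-\cC_n^\top\|\lesssim\|\cT_n\|^2$, these remainders are at most
\[
(\|U_*^\top e_m\|+\|u_m^\top\cT_n\|)\,\tau_*^2+\|e_m^\top U_n\|\,\tau_*^2 .
\]
I would bound $\|e_m^\top U_n\|\lesssim \|U_*^\top e_m\|+\|u_m^\top \cT_n\|$, so the total error is $\lesssim \max(\|u_m\|,\|U_*^\top e_m\|)\,\tau_*^2$. \cref{prop:linearization} then replaces $u_m^\top\cT_n$ by
\[
\xi_m:=\eta\sum_{i=1}^n u_m^\top(\cI-\eta\cL)^{n-i}(Z_iY_i^\top)\in\RR^{1\times r}.
\]
To control this replacement row-wise, I would not use the operator-norm bound directly. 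Instead I would rerun the linearization remainder analysis with $u_m^\top$ applied on the left, obtaining a remainder of order $\|u_m\|\,\tau_*\kappa_*\sqrt{r\teta\log n}\,(1+\bnu)$.

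\emph{Step 2 (Gaussian approximation of the linear term).} The summands of $\xi_m$ are independent and mean zero. A direct computation gives
\[
\cov(\xi_m)=\eta^2\sum_{k\ge0}\bigl[\,\cdot\,\bigr]_{jj'},
\]
where the bracket is $u_m$-weighted with entries $(1-\eta\Delta_{lj})^k(1-\eta\Delta_{l'j'})^k\,\EE[Z_lY_jZ_{l'}Y_{j'}]$, and the sum is truncated at $k=n-1$. Summing the geometric series gives $\eta\Gamma_{u_m}(1+O(\teta))$ up to an exponentially small truncation term $e^{-\teta n}$, which is negligible since $c_\eta$ is large. Hence $\cov(\xi_m)$ is close to $\Sigma^*_{U,m}$ in relative (whitened) sense. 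For the lower bound I would note $\Gamma_{u}\succeq (2\lambda_1)^{-1}\lambda_{\min}(\tfC)\,\|\Lambda_{*,\perp}^{1/2}u\|^2\Lambda_*$, using $\tfC\succeq\lambda_{\min}(\tfC)\cI$ and $\Delta_{lj}+\Delta_{l'j'}\le 2\lambda_1$; this is where $\Lambda_{*,\perp}^{1/2}u_m\neq0$ is needed. I would then apply the multivariate Berry--Esseen bound over convex sets (Bentkus) to the whitened sum. The summand third moments are bounded via sub-Gaussianity by $(\eta\|\Lambda_{*,\perp}^{1/2}u_m\|\lambda_1^{1/2})^3$ times $r^{3/2}$ factors. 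This yields an error of order $r^{1/4}\cdot r^{3/2}\sqrt{\teta}\,\kappa_*^{3}\lambda_{\min}^{-3/2}(\tfC)$, which vanishes by the first displayed condition. Gaussian-to-Gaussian comparison under the covariance perturbation is handled with a total-variation bound of order $\sqrt r\,\teta$.

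\emph{Step 3 (absorbing remainders).} For convex sets, a perturbation $\Delta$ with $\|\Delta\|\le\epsilon$ costs the Gaussian measure of an $\epsilon$-neighbourhood of $\partial\cA$. By Nazarov/Ball's inequality this is $\lesssim r^{1/4}\epsilon/\sqrt{\lambda_{\min}(\Sigma^*_{U,m})}$. With $\lambda_{\min}(\Sigma^*_{U,m})\gtrsim\eta\lambda_{\min}(\tfC)\|\Lambda_{*,\perp}^{1/2}u_m\|^2\lambda_r/\lambda_1$, the ratio squared of the Step 1 remainders to this quantity is exactly the expression in \eqref{eqn:row_clt_cond1}, up to powers of $r$ and $\kappa_*$. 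Hence it tends to $0$, which together with $\delta_p\to0$ finishes the proof.

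\emph{Main obstacle.} The hard step is the row-wise version of the linearization in Step 1. \cref{prop:linearization} controls the remainder only in operator norm, which would yield a factor $\lambda_1$ rather than $\|\Lambda_{*,\perp}^{1/2}u_m\|$ and lose the needed scaling. I would try to redo the martingale bound of the remainder recursion for $u_m^\top\cT_t$. If that fails, I would instead accept the cruder bound $\|u_m\|\tau_*\cdot(\cdot)$, which is what the factor $\max(\|u_m\|^2,\|U_*^\top e_m\|^2)$ in \eqref{eqn:row_clt_cond1} suggests is actually used.
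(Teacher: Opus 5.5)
Your proposal is correct and follows essentially the paper's route. You decompose the aligned row into the linearized term $\eta\sum_i u_m^\top(\cI-\eta\cL)^{n-i}(Z_iY_i^\top)$ plus alignment and linearization remainders. You then apply a convex-set Berry--Esseen bound using the whitened lower bound on $\Gamma_{u_m}$, compare $\eta\Gamma_{u_m,n}$ with $\Sigma^*_{U,m}$ in total variation, and absorb the remainder through the $O(r^{1/4})$ Gaussian boundary estimate together with $\lambda_{\min}(\Sigma^*_{U,m})\gtrsim \teta\,\lambda_1^{-1}\|\Lambda_{*,\perp}^{1/2}u_m\|^2\lambda_{\min}(\tfC)$.

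Two corrections. First, as you suspected, no row-wise rerun of the linearization is needed: the paper simply bounds $E:=\cT_n-\eta\sum_i(\cI-\eta\cL)^{n-i}(Z_iY_i^\top)$ via $\|E^\top u_m\|\le\|u_m\|\,\|E\|$, and \eqref{eqn:row_clt_cond1} is built to absorb exactly that loss. Second, the Gaussian-comparison error is not $O(\sqrt r\,\teta)$ but of order $\teta\kappa_*^5r^{5/2}\lambda_{\min}^{-2}(\tfC)$, because the relative covariance error carries the factor $\|\tfC\|/\lambda_{\min}(\tfC)$ with $\|\tfC\|\lesssim r$; this still vanishes under the first hypothesis.
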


\begin{remark}[Comparison with existing row-wise inference results]\;
  % We compare \cref{thm:row_clt} with existing distributional results for PCA.
  \begin{itemize}[leftmargin=*]
    \item \emph{Nondegeneracy.} Condition~\eqref{eqn:row_clt_cond1}, involving $\lambda_{\min}^{-1}(\tfC)$, is scale invariant under~\eqref{eqn:scale_transform} since $\tfC=\cov(\tZ\tY^\top)$ is defined in whitened coordinates. This condition is mild in standard models: for Gaussian data, $\tY\perp\tZ$ so $\tfC=\cI$ and $\lambda_{\min}(\tfC)=1$. It ensures that the leading term dominates higher-order remainders; related lower bound conditions appear in \citep{mc_inference,TC_inference,uai25}.   In \citep[Remark~2]{yuling}, an analogous nondegeneracy is encoded by assuming the noise variances are bounded away from zero, which prevents the limiting Gaussian covariance from degenerating.

    \item \emph{Rank-one case.} When $r=1$, $\fC$ and $\Gamma$ reduce to $(d-1)\times(d-1)$ matrices, with entries $\Gamma_{ll'} = \fC_{ll'}/(\Delta_{1l} + \Delta_{1l'})$, and \eqref{eqn:Gamma_u} collapses to $\Sigma_{U,m}^* = \eta e_m^\top U_{*,\perp} \Gamma U_{*,\perp}^\top e_m \in \RR$,
    recovering the limit of \citep[Proposition~1]{uai25}.

    \item \emph{Spiked covariance.} Under \cref{ex:spiked} with Gaussian design, so that $\tY \perp \tZ$, the independence formula (\cref{sec:clt_setup}) gives, for $\eta = \frac{c_\eta \log n}{n\Delta_{\min}}$,
    \begin{align*}
      \Sigma_{U,m}^* &= \tfrac{c_\eta (\log n) \|u_m\|^2}{2n } \cdot \tfrac{\sigma^2}{s_r^{2}} \sbr{I_r + \sigma^2 S^{-2}}.
    \end{align*}
    The row-wise covariance in~\citep{yuling} is $n^{-1}\sigma^2S^{-2}[I_r + \sigma^2 S^{-2}]$. Thus, for the $j$th coordinate, the ratio of our covariance to that of~\cite{yuling} is $\frac{c_\eta\log n}{2}\|u_m\|^2\frac{s_j^2}{s_r^2}$. Since $\|u_m\|\le 1$ and $\frac{s_j^2}{s_r^2} \le \kappa_\Delta$, this ratio is at most $\frac{c_\eta\log n}{2}\kappa_\Delta$. Here, the logarithmic factor comes from the constant step size, while $\kappa_\Delta$ arises from calibrating the step size to the smallest eigengap $\Delta_{\min}=s_r^2$.
  \end{itemize}
\end{remark}

\subsection{Multiplier bootstrap}
\label{sec:main_bs}
The Gaussian approximations in \cref{sec:main_clt} depend on unknown covariance operators. We now give an online multiplier bootstrap algorithm that estimates the laws without explicitly estimating the covariances. \cref{algm:bootstrap} runs, in parallel with Oja's algorithm, perturbed Oja recursions driven by independent multiplier weights $w_t^{(b)}$. 
This construction is different from the rank-one bootstrap of~\citep{nips21_oja}, and is motivated by linearization (\cref{prop:linearization}) as follows.

Defining \emph{cosine}, \emph{sine} and \emph{tangent} matrices for the bootstrap iterates relative to $U_*$ by $\cC_t^{(b)}=U_*^\top U_t^{(b)}$, $\cS_t^{(b)}=U_{*,\perp}^\top U_t^{(b)}$ and $\cT_t^{(b)}=\cS_t^{(b)} (\cC_t^{(b)})^{-1}$, a bootstrap analogue of \cref{prop:linearization} gives $\cT_n^{(b)}\approx \eta \sum_{i=1}^n w_i^{(b)}(\cI-\eta \cL)^{n-i} (Z_iY_i^\top)$. Since both \(U_n\) and \(U_n^{(b)}\) are close to \(U_*\), their relative subspace error is controlled by the difference of their tangent matrices:
\begin{align*}
  \textstyle \|\sin\Theta(U_n^{(b)},U_n)\|_{\mathrm{F}} \approx
  \|\cT_n^{(b)}-\cT_n\|_{\mathrm{F}} \approx \|\eta\sum_{i=1}^n (w_i^{(b)}-1)(\cI-\eta\mathcal L)^{n-i}(Z_iY_i^\top) \|_{\mathrm{F}} .
\end{align*}
Thus, conditionally on the data, the multiplier weights reproduce the same linearized fluctuation that appears in~\eqref{eqn:highD_clt_heuristic} (\cref{thm:bs_highD_clt}). An analogous argument for the aligned row-wise difference yields the row-wise bootstrap result (\cref{thm:bs_row_clt}).

\begin{algorithm}[!htb] 
\caption{Online multiplier bootstrap for Oja's algorithm}
\Input{$\cbr{X_t}_{t \in [n]}$, stepsize $\eta$, number of bootstrap replicates $B$, weights $\{w_{t}^{(b)}\}_{t \in [n], b \in [B]}$.}
\Output{Oja's solution $U$ and bootstrapped versions
$\{U^{(b)}\}_{b \in [B]}$.}

Draw $U_0 \sim \Haar{\OO_{d,r}}$\;
Initialize $U, U^{(1)},\ldots,U^{(B)} \gets U_0$\;

\For{$t=1,\ldots,n$}{
    Update $U \gets \qr{U + \eta X_t (X_t^\top U)}$\;
    \For{$b=1,\ldots,B$}{
        Update $U^{(b)} \gets \qr{U^{(b)} + w_{t}^{(b)} \eta X_t (X_t^\top U^{(b)})}$\;
    }
}
\label{algm:bootstrap}
\end{algorithm}

\begin{assumption}
  The multiplier weights $\{w_{t}^{(b)}\}_{t \in [n], b \in [B]}$ are i.i.d., independent of $\cbr{X_i}_{i\in[n]}$, and satisfy $\EE (w_{t}^{(b)}) = 1$, $\Var (w_{t}^{(b)}) = 1$ and $w_{t}^{(b)}\in [0,C_w]$ almost surely
  for some constant $C_w > 0$.
  \label{assumption:w}
\end{assumption}
For example, the two-point multiplier $w_t^{(b)}\sim 2\,\mathrm{Bernoulli}(1/2)$ satisfies Assumption~\ref{assumption:w}. Our theory is stated for bounded multipliers. Unbounded choices such as standard exponential weights can also be handled by a truncation argument, at the cost of additional logarithmic factors. In practice, we found that bounded multipliers performed better.

Let $\cF_X := \sigma(U_0,X_1,\ldots,X_n)$, and let $\PP_X$ denote the law of $(U_0,X_1,\ldots,X_n)$. Conditional on $\cF_X$, let $\PP^*$ and $\EE^*$ denote probability
and expectation with respect to the bootstrap multipliers. Let $U_n^{\bs}$ denote a generic bootstrap iterate generated by Algorithm~\ref{algm:bootstrap} with an independent multiplier sequence $\{w_t^{\bs}\}_{t\in[n]}$ satisfying Assumption~\ref{assumption:w}.

\begin{theorem}[Bootstrap consistency for the subspace error] \label{thm:bs_highD_clt}
  Assume Assumption~\ref{assumption:w} holds and suppose the conditions in \cref{thm:highD_clt} are satisfied. Then
  \begin{align*}
    \sup_{t \in \RR} \big|\PP^*\cbr[\big]{\eta^{-1} \|\sin \Theta(U_n^{\mathsf{bs}}, U_n)\|_{\mathrm{F}}^2 \leq t} - \PP\cbr[\big]{ \eta^{-1} \|\sin \Theta(U_n, U_*)\|_{\mathrm{F}}^2 \leq t}\big| \xrightarrow{\PP_X} 0.
  \end{align*}
\end{theorem}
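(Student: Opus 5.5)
The plan is to route everything through linearization and then reduce to a comparison of two quadratic forms in Gaussian-like sums. Write $S_n:=\eta\sum_{i=1}^n(\cI-\eta\cL)^{n-i}(Z_iY_i^\top)$ and $S_n^{\bs}:=\eta\sum_{i=1}^n w_i^{\bs}(\cI-\eta\cL)^{n-i}(Z_iY_i^\top)$. By \cref{thm:highD_clt}, the law of $\eta^{-1}\Fnorm{\sin\Theta(U_n,U_*)}^2$ is uniformly close to that of $\Fnorm{\mathsf G}^2$. It therefore suffices to show that
\[
\sup_t\bigl|\PP^*\{\eta^{-1}\Fnorm{\sin\Theta(U_n^{\bs},U_n)}^2\le t\}-\PP\{\Fnorm{\mathsf G}^2\le t\}\bigr|\xrightarrow{\PP_X}0 .
\]
This is done in three steps.

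\textbf{Step 1 (bootstrap linearization).} Set $\eta_t^{\bs}:=w_t^{\bs}\eta$. The bootstrap recursion is Oja's recursion with random stepsize $\eta_t^{\bs}$, which is bounded since $w_t^{\bs}\in[0,C_w]$. Its drift is the same in conditional mean, because $\EE w=1$. We rerun the proof of \cref{thm:convergence_rate_optimal} and \cref{prop:linearization} on the joint filtration generated by $(X_t,w_t^{\bs})$. The one-step decomposition $\cT_{t+1}^{\bs}=(\cI-\eta\cL)\cT_t^{\bs}+\eta w_{t+1}^{\bs}Z_{t+1}Y_{t+1}^\top+\text{(martingale noise)}+\text{remainder}$ has conditional variances inflated by at most $\EE w^2=2$, and remainders are inflated by factors of $C_w^k$. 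Matrix Freedman therefore gives, with $\PP_X\otimes\PP^*$ probability $1-\delta_p$,
\[
\|\cT_n^{\bs}-S_n^{\bs}\|\lesssim \frac{\tau_*}{n^\varepsilon\log n}.
\]
Because the joint probability is at least $1-\delta_p$, Markov's inequality gives $\PP^*(\text{bad event})\le\sqrt{\delta_p}$ on an $\cF_X$-event of probability at least $1-\sqrt{\delta_p}$. Since $\delta_p\to0$, this is enough.

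\textbf{Step 2 (relative angle via tangents).} Both $\|\cT_n\|$ and $\|\cT_n^{\bs}\|$ are $O(\tau_*)$, which is small. In that regime
\[
\bigl|\Fnorm{\sin\Theta(U_n^{\bs},U_n)}-\Fnorm{\cT_n^{\bs}-\cT_n}\bigr|\lesssim\tau_*\,\sqrt r\,\tau_*\cdot\text{poly}.
\]
To see this, write $U=(U_*+U_{*,\perp}\cT)(I+\cT^\top\cT)^{-1/2}R$ and expand $U_{n,\perp}^\top U_n^{\bs}$ to first order in $\cT_n^{\bs}-\cT_n$. The error terms are quadratic in the tangents. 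It follows that
\[
\eta^{-1}\Fnorm{\sin\Theta(U_n^{\bs},U_n)}^2=\eta^{-1}\Fnorm{S_n^{\bs}-S_n}^2+o_{\PP}(1)\cdot\sqrt{d_{\fC}}\text{-scale}.
\]
Under conditions (i) and (ii) of \eqref{eqn:highD_clt_cond}, this error is negligible relative to the standard deviation of $\Fnorm{\mathsf G}^2$, which is of order $\HS{\fC}/\lambda_{\max}(\cL)$. The anti-concentration argument that \cref{thm:highD_clt} already uses converts the perturbation into a Kolmogorov-distance error.

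\textbf{Step 3 (conditional Gaussian approximation).} Conditionally on $\cF_X$, the difference
\[
\eta^{-1/2}(S_n^{\bs}-S_n)=\eta^{1/2}\sum_i(w_i^{\bs}-1)A_i,\qquad A_i:=(\cI-\eta\cL)^{n-i}(Z_iY_i^\top),
\]
is a sum of independent, mean-zero, bounded-multiplier terms with conditional covariance $\hat\Gamma_n:=\eta\sum_i A_i\otimes A_i$. We apply the same high-dimensional Gaussian approximation for quadratic forms (Frobenius norms) used in \cref{thm:highD_clt}, with Lyapunov third-moment ratio $\eta^{3/2}\sum_i\Fnorm{A_i}^3$ measured relative to $\HS{\hat\Gamma_n}^{3/2}$. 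We then compare $\cN(0,\hat\Gamma_n)$ with $\cN(0,\Gamma)$ via a Gaussian comparison for $\Fnorm{\cdot}^2$. That comparison is controlled by $\|\hat\Gamma_n-\Gamma\|_{\rm tr}$ or $\|\hat\Gamma_n-\Gamma\|_{\rm HS}$, normalized by $\HS{\Gamma}$. The proof then splits into two bounds. First, $\EE\hat\Gamma_n=\eta\sum_k(\cI-\eta\cL)^k\fC(\cI-\eta\cL)^k\to\Gamma$, with relative error $O(\teta\kappa_*)$ by a Riemann-sum comparison with $\int_0^\infty e^{-t\cL}\fC e^{-t\cL}dt$. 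Second, the fluctuation satisfies $\EE\HS{\hat\Gamma_n-\EE\hat\Gamma_n}^2\lesssim\eta^2\sum_k\|(\cI-\eta\cL)^{2k}\|^2\,\EE\Fnorm{ZY^\top}^4\lesssim\teta\kappa_*\lambda_\times^4$. Relative to $\HS{\fC}^2$ this is $\teta\kappa_* d_{\fC}$, which tends to zero by (i). Sub-Gaussian moments are handled via Lemma~\ref{lem:matrix_kth_moment}.

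\textbf{Main obstacle.} We expect Step 1 to be the main obstacle. The no-exit/stability argument in \cref{thm:convergence_rate_optimal} must be redone with random, possibly zero, multiplied stepsizes. The random initialization must still contract: the contraction factor over the burn-in now depends on $\sum_t w_t^{\bs}\eta$, which concentrates around $n\eta$ by Hoeffding's inequality, so we would absorb the deviation into the slack $c_\eta\ge c_\eta^*+\tfrac12$. The one-step Taylor remainder must also be controlled with $w$ up to $C_w$. This is fine because the sample-size condition \eqref{eqn:cond_sample_sz} makes $C_w\eta\|X\|^2$ small with high probability.
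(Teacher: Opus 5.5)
Your outline follows the paper's route. You linearize $\cT_n^{\bs}-\cT_n$ on the joint data--multiplier filtration and pass to $\PP^*$ by Markov's inequality. You compare $\|\sin\Theta(U_n^{\bs},U_n)\|_{\rm F}$ with $\|\cT_n^{\bs}-\cT_n\|_{\rm F}$. You Gaussian-approximate $\eta^{1/2}\sum_i(w_i^{\bs}-1)A_i$ conditionally, comparing $\hat\Gamma_n$ with $\Gamma$ in trace and Hilbert--Schmidt norm, and you close with \cref{thm:highD_clt}. However, the negligibility claim in your Step~2 does not go through with the linearization rate you carry forward.

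With only $\|\cT_n^{\bs}-S_n^{\bs}\|\lesssim\tau_*/(n^{\varepsilon}\log n)$, the scaled Frobenius error is $e_n\asymp\sqrt r\,\eta^{-1/2}\tau_*/(n^{\varepsilon}\log n)\asymp r\Delta_{\min}^{1/2}\kappa_*\bnu/(n^{\varepsilon}\sqrt{\log n})$. Use $L_n^2\asymp\tr(\Gamma)\log n$, $\tr\Gamma\lesssim\lambda_\times^2/\Delta_{\min}$ and $\HS{\Gamma}\gtrsim\HS{\fC}/\Delta_{\max}$. The anti-concentration cost is then governed by $e_nL_n/\HS{\Gamma}\lesssim r\kappa_* n^{-\varepsilon}\,\bnu\Delta_{\max}\lambda_\times/\HS{\fC}$. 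At $\varepsilon=0$, which the hypotheses allow, this is of order $\kappa_*\sqrt{d_{\fC}}\gtrsim1$ in the dense-tail spiked model, so the bound does not vanish. Even at the largest admissible $\varepsilon$, the sample-size condition only forces $n^{-\varepsilon}/\log n$ to be at least the sharp factor $\kappa_*\sqrt{r\teta\log n}$ multiplied by $r\sqrt{1+\nu_\infty d}\,\delta_p^{-1}$. That extra factor diverges because $\delta_p\to0$.

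You must keep the sharp form $\tau_*[\kappa_*\sqrt{r\teta\log n}+\tau_*]$ of \cref{prop:linearization} and of its bootstrap analogue. It gives $e_n\asymp\Delta_{\min}^{1/2}\kappa_*^2r^{3/2}\sqrt{\teta}\,(\log n)\,\bnu(1+\bnu)$. Then $e_n(e_n+L_n)/\HS{\Gamma}\to0$ is exactly what (i) and (ii) of \eqref{eqn:highD_clt_cond} deliver.

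Getting that sharp rate for the bootstrap also requires handling the one genuinely new term in Step~1, and it is not a random-stepsize contraction issue. With the conditional-mean decomposition you wrote, $wF_+-\EE'F_+=w(F_+-\EE'F_+)-(w-1)\cL\cT$. The contraction is therefore the deterministic $\cI-\eta\cL$, and no concentration of $\sum_t w_t^{\bs}$ is needed. The extra piece $-\eta(w_t^{\bs}-1)\cL\cT_{t-1}^{\bs}$ has norm $\lesssim C_w\teta\kappa_*\|\cT_{t-1}^{\bs}\|$. It fits inside the existing one-step variance bound, so stability and burn-in go through verbatim.

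After burn-in, however, it contributes the multiplier martingale $-\eta\sum_{i>N_*}(w_i^{\bs}-1)(\cI-\eta\cL)^{n-i}\cL\cT_{i-1}^{\bs}$. This is of order $\kappa_*\tau_*\sqrt{\teta\log n}$, the same as the $\cT$-dependent martingale terms. It needs its own matrix Freedman bound, stopped at $\|\cT_{i-1}^{\bs}\|\le2\tau_*$. Working instead with random products $\prod_t(\cI-w_t^{\bs}\eta\cL)$ would change the weights of the leading term away from $S_n^{\bs}$.

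Your relative-angle step is fine. The paper uses the exact identity $\|\sin\Theta(U_n^{\bs},U_n)\|_{\rm F}=\|\cQ_n^{1/2}(\cT_n^{\bs}-\cT_n)\|_{\rm F}$ with $\|\cQ_n^{1/2}-\cI\|\lesssim\tau_*^2$, and the resulting cost $\tau_*^2L_n^2/\HS{\Gamma}$ vanishes by (ii).

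Finally, in Step~3 the binding quantity of the conditional quadratic-form CLT is not the Lyapunov ratio. It is the pairwise term $\eta^3(\sum_i\Fnorm{A_i}^3)^2/\HS{\hat\Gamma_n}^3=O_{\PP_X}(\teta\kappa_\Delta^3d_{\fC}^{3/2})$, and this is where (i) enters.
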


\begin{theorem}[Row-wise bootstrap consistency] \label{thm:bs_row_clt}
  Assume Assumption~\ref{assumption:w} holds and suppose the conditions in \cref{thm:row_clt} are satisfied. Denote $R_n := \sgn(U_n^\top U_*)$. Then for any fixed $m \in [d]$,
  \begin{align*}
    \textstyle \sup_{\cA \in \mathscr{A}^r} \big| \PP^* \cbr[\big]{\sbr{\rbr{U_n^{\mathsf{bs}} \sgn(U_n^{\bs \top} U_n) - U_n} R_n}_{m,\cdot} \in \cA} - \PP \cbr{[U_n R_n - U_*]_{m,\cdot} \in \cA} \big| \xrightarrow{\PP_X} 0,
  \end{align*}
  where $\mathscr{A}^r$  denotes the set of all convex sets in $\RR^r$.
\end{theorem}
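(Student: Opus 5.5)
The plan follows the same three-step template as the proof of \cref{thm:row_clt}, now conditional on $\cF_X$: linearize the bootstrap row error, show the bootstrap linear term is conditionally Gaussian, and show its conditional covariance is close to $\Sigma^*_{U,m}$. Combined with \cref{thm:row_clt} and the triangle inequality, this gives the claim.

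\textbf{Step 1: linearize the bootstrap iterate.} I would prove a bootstrap analogue of \cref{prop:linearization}. Since $0\le w_t^{\bs}\le C_w$ and $\EE w=1$, the bootstrap recursion is Oja's recursion with effective step sizes $\eta w_t^{\bs}$. The conditional drift is still $(\cI-\eta\cL)$, and the noise is enlarged by at most a constant factor. The one-step tangent decomposition and the Freedman argument of \cref{thm:convergence_rate_optimal} should therefore go through with constants depending on $C_w$. This gives, with $\PP_X\times\PP^*$ probability at least $1-\delta_p$,
\[
\big\|\cT^{\bs}_n-\eta\textstyle\sum_i w_i^{\bs}(\cI-\eta\cL)^{n-i}(Z_iY_i^\top)\big\|\lesssim \tau_*/(n^\varepsilon\log n).
\]
Markov's inequality over $\PP_X$ then turns this into a statement holding with $\PP^*$ probability tending to one, on an $\cF_X$-event of probability tending to one.

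\textbf{Step 2: reduce the row error to a linear term.} I would expand the aligned bootstrap difference $U_n^{\bs}\sgn(U_n^{\bs\top}U_n)-U_n$, multiplied by $R_n$. Writing both $U_n^{\bs}$ and $U_n$ in the $(U_*,U_{*,\perp})$ frame through $\cC,\cT$, and using the exact decomposition displayed before \cref{thm:row_clt} for each of them, I expect the $m$-th row to be
\[
e_m^\top U_{*,\perp}(\cT_n^{\bs}-\cT_n)\,\Xi+\text{remainder},
\]
where $\Xi$ is an $r\times r$ orthogonal-like factor within $O(\|\cT\|^2)$ of $I_r$. The remainders should be $O(\tau_*^2)$ times row norms $\|u_m\|$ and $\|U_*^\top e_m\|$. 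The main term becomes $\eta\sum_i(w_i^{\bs}-1)\,u_m^\top(\cI-\eta\cL)^{n-i}(Z_iY_i^\top)$. Condition \eqref{eqn:row_clt_cond1} is designed precisely so that such remainders are $o$ of the standard deviation $\|\Lambda_{*,\perp}^{1/2}u_m\|\sqrt{\eta\lambda_r\lambda_{\min}(\tfC)}$. By the usual anti-concentration of Gaussians over convex sets with nondegenerate covariance, they are then negligible.

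\textbf{Step 3: conditional Gaussian approximation.} Conditionally on $\cF_X$, the main term is a sum of independent mean-zero vectors $\xi_i=\eta(w_i-1)a_i$ in $\RR^r$, with $a_i=[(\cI-\eta\cL)^{n-i}(Z_iY_i^\top)]^\top u_m$. I would apply a Berry--Esseen bound over convex sets in $\RR^r$ (Bentkus), with error of order $r^{1/4}\sum_i\EE^*\|\hat\Sigma^{-1/2}\xi_i\|^3$. Here $\hat\Sigma=\eta^2\sum_i a_ia_i^\top$, and this error is bounded via sub-Gaussian moments and the condition $\teta\kappa_*^6r^{7/2}\lambda_{\min}^{-3}(\tfC)\to0$. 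It then remains to show $\hat\Sigma$ is close to $\Sigma^*_{U,m}$ in the relative sense $\|\Sigma^{*-1/2}\hat\Sigma\Sigma^{*-1/2}-I\|\to0$ in $\PP_X$-probability. The expectation is $\eta^2\sum_k u_m$-projections of $(\cI-\eta\cL)^k\fC(\cI-\eta\cL)^k$. This sum differs from $\eta\Gamma_{u_m}$ by $O(\teta\kappa_*)$ relatively, by comparing the geometric sum with $1/(\Delta_{lj}+\Delta_{l'j'})$ entrywise. Its fluctuation is a sum of independent matrices with weights $\eta^2(1-\eta\Delta)^{2(n-i)}$, whose variance is of relative order $\teta$ times fourth-moment ratios. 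A Chebyshev/matrix Bernstein bound handles this, with $\lambda_{\min}(\tfC)$ controlling the denominator. Finally, Gaussian-to-Gaussian comparison over convex sets (Devroye--Mehrabian--Reddad type total variation bound) gives closeness to $\cN(0,\Sigma^*_{U,m})$.

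\textbf{Expected main obstacle.} I expect Step 1 to be the hardest: redoing the no-exit/stability argument for the randomly weighted recursion, jointly under data and multipliers, and ensuring it holds from the same Haar initialization. The cross-coupling in Step 2, where $U_n$ rather than $U_*$ is the reference, is second. There I would work entirely with tangents relative to $U_*$ to avoid needing a new reference frame.
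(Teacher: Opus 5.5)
Your outline is the paper's proof, step for step:
\begin{itemize}
\item a bootstrap linearization, obtained by rerunning the truncated one-step/Freedman/epoch machinery with bounded multipliers and transferred to $\PP^*$ by Markov's inequality over $\PP_X$;
\item the row decomposition through the $U_*$-frame tangents of $U_nR_n$ and $U_n^{\bs}\sgn(U_n^{\bs\top}U_*)$, which yields $e_m^\top U_{*,\perp}(\cT_n^{\bs}-\cT_n)$ up to $O(\tau_*^2)\|U_*^\top e_m\|+O(\tau_*^3)\|u_m\|$;
\item a conditional Berry--Esseen bound, covariance comparison and the Gaussian total-variation bound;
\item the triangle inequality with Theorem~\ref{thm:row_clt}.
\end{itemize}

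The one place where the plan does not close as written is the linearization error you carry from Step~1 into Step~2. The bound $\tau_*/(n^\varepsilon\log n)$ suffices for the Frobenius-norm results but is too coarse row-wise. After multiplying by $\|u_m\|$ and dividing by $\lambda_{\min}^{1/2}(\Sigma^*_{U,m})$, it is only controlled by $\kappa_*\bnu\sqrt{r}\,\lambda_1^{1/2}\|u_m\|/(n^{\varepsilon}\sqrt{\log n}\,\|\Lambda_{*,\perp}^{1/2}u_m\|\,\lambda_{\min}^{1/2}(\tfC))$, and \eqref{eqn:row_clt_cond1} does not make this small. For instance, in the Gaussian spiked model with $\kappa_*,r\asymp1$ and $\varepsilon=0$ this quantity is $\asymp\sqrt{d/\log n}$, while \eqref{eqn:row_clt_cond1} allows $d$ to grow polynomially in $n$.

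You need the sharper form $\tau_*[\kappa_*\sqrt{r\teta\log n}+\tau_*]$, which is the first inequality in Proposition~\ref{prop:linearization}, for both $\cT_n$ and $\cT_n^{\bs}$. With it, the $U_{*,\perp}$-direction remainder is of order $\kappa_*^2r\teta(\log n)\bnu(1+\bnu)\|u_m\|$. This exceeds your $\tau_*^2\|u_m\|$ by a factor $(1+\bnu)/\bnu$, and it is exactly what \eqref{eqn:row_clt_cond1} controls. That condition carries an extra $r^{1/2}$ because the normalized remainder must be $o(r^{-1/4})$, not merely $o(1)$: the Gaussian perimeter constant for convex sets in $\RR^r$ grows like $r^{1/4}$.

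To get the sharp form for the bootstrap you must also handle one term with no counterpart in the original recursion. This is the multiplier part $\eta(w_i^{\bs}-1)\,\EE_{i-1}[F_i]$ of the centered noise, which equals $-\eta(w_i^{\bs}-1)\cL\cT_{i-1}^{\bs}$ up to the truncation perturbation. It changes the convergence argument only by constants. In the linearization, however, it is a new first-order term and must be bounded separately. Since it carries a factor $\cT_{i-1}^{\bs}$, Freedman's inequality on the stopped sum puts it at order $\tau_*\kappa_*\sqrt{\teta\log n}$, inside the sharp remainder.

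A smaller point on Step~3: entrywise closeness of the kernels $\eta\sum_k(1-\eta\Delta_{lj})^k(1-\eta\Delta_{l'j'})^k$ and $(\Delta_{lj}+\Delta_{l'j'})^{-1}$ does not by itself give a relative Loewner bound, because $\fC$ need not be diagonal. The paper instead bounds $\Gamma_n-\Gamma$ in norm via the Lyapunov identity and uses $\lambda_{\min}(\tfC)$ to make the bound relative. This loses a factor $\kappa_\Delta\|\tfC\|/\lambda_{\min}(\tfC)$, which the hypothesis $\teta\kappa_*^6r^{7/2}\lambda_{\min}^{-3}(\tfC)\to0$ absorbs.
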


\begin{remark}[Non-asymptotic Berry--Esseen bound]
  Although Theorem~\ref{thm:bs_row_clt} is stated asymptotically, its proof can be strengthened to yield a non-asymptotic Berry--Esseen-type bound over convex sets. The Berry--Esseen step in Appendix~\ref{sec:prf_bs_row_clt} is already non-asymptotic; the $o_{\PP_X}(1)$ terms arise mainly from empirical moment and covariance bounds obtained via Markov's inequality. Replacing these with suitable Bernstein-type concentration bounds would give explicit finite-sample errors, at the cost of additional logarithmic factors in the conditions.
\end{remark}

\section{Main proof ideas} \label{sec:main_prf_sketch}
The proofs of our main results are built around a common analysis of Oja's dynamics in tangent coordinates. The key ideas include a refined convergence argument based on variance-sensitive concentration, a linearization of the final iterate that underlies both the lower bound and Gaussian approximation, and a corresponding linearization for the multiplier bootstrap. We provide a high-level overview of these arguments and their connections in Appendix~\ref{sec:prf_sketch}.

\section{Experiments} \label{sec:experiments}
We draw $d$-dimensional Gaussian random vectors $X_1,\ldots,X_n \overset{iid}{\sim} \cN(0,\Sigma)$ where $(U_*, U_{*,\perp})\sim \Haar{\OO_{d,d}}$, $ \lambda_1 =\ldots=\lambda_r = 1$ and $\lambda_{r+l} = (l+1)^{-\beta}$ for $l \in [d-r]$, for some $\beta>0$. We let the multiplier weights be i.i.d. from the two-point distribution $2\mathrm{Bernoulli}(1/2)$ and set $\eta = \frac{c_\eta \log n}{n\Delta_{\min}}$ with~$c_\eta = 2$.

\paragraph{Validation of \cref{thm:bs_highD_clt}.} We set $n=5000$, $d=500$, $r=3$, and consider $\beta \in \cbr{1,4}$. Following~\cite{nips21_oja}, we fix the Haar initialization $U_0$. For the sampling distribution, we generate $200$ independent datasets and compute $\eta^{-1}\|\sin\Theta(U_n,U_\ast)\|_{\rm F}^2$. For the bootstrap distribution, we fix a dataset $\{X_i:i\in[n]\}$ and run Algorithm~\ref{algm:bootstrap} with $B=200$ independent bootstrap sequences, computing $\eta^{-1}\|\sin\Theta(U_n^{(b)},U_n)\|_{\rm F}^2$. 
Figure~\ref{fig:highD_qq} compares the corresponding empirical quantiles through Q--Q plots. A larger \(\beta\) yields faster tail decay and hence smaller values of
\(\eta^{-1}\|\sin\Theta(U_n,U_*)\|_{\mathrm F}^2\). The bootstrap approximation is visibly closer for the faster-decaying tail.

\begin{figure}[!htb]
    \centering

    \begin{subfigure}[t]{0.48\textwidth}
        \centering
        \includegraphics[width=\textwidth]{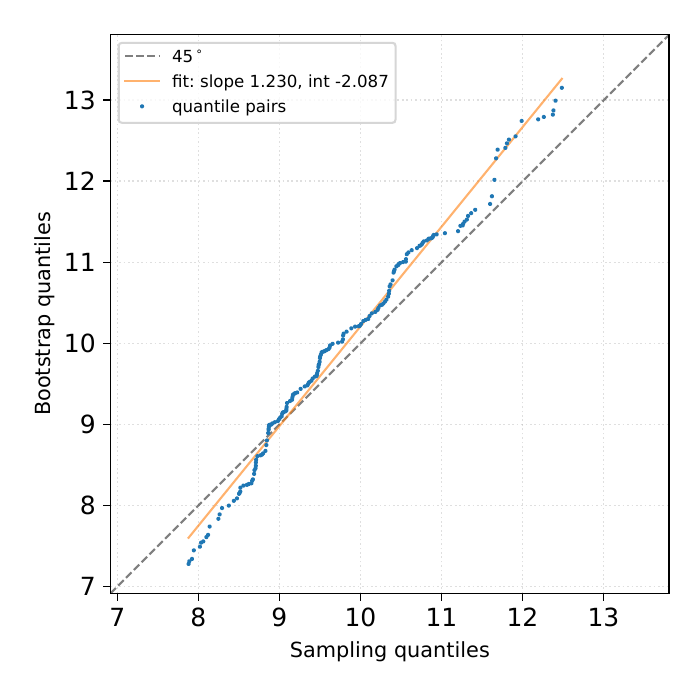}
        \caption{$\beta=1$.}
        \label{fig:first}
    \end{subfigure}
    \hfill
    \begin{subfigure}[t]{0.48\textwidth}
        \centering
        \includegraphics[width=\textwidth]{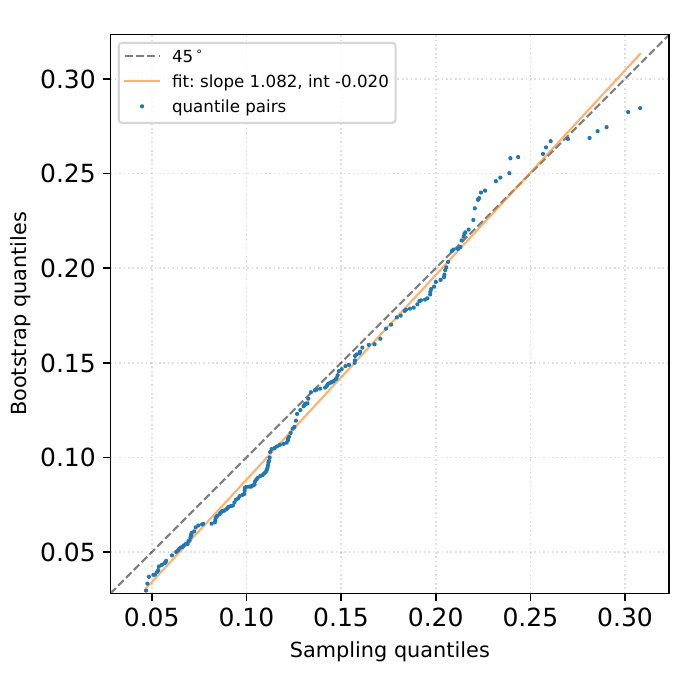}
        \caption{$\beta=4$.}
        \label{fig:second}
    \end{subfigure}

    \caption{Q--Q plots comparing the bootstrap and sampling distributions for  $n=5000$, $d=500$, $r=3$, $c_\eta = 2$ and $\beta \in \{1,4\}$.}
    \label{fig:highD_qq}
\end{figure}

\paragraph{Validation of \cref{thm:bs_row_clt}.} We set $n=2000$, $d=100$, $r=3$  and $\beta \in \{0.5,1,4\}$. We follow the protocol of \cite{yuling}. For each of 200 independent Monte Carlo trials, we run Oja's algorithm
together with $B=200$ multiplier-bootstrap trajectories. For each row $m\in[d]$, we align the bootstrap iterates to \(U_n\), form the bootstrap residuals via $[U_n^{(b)}\sgn(U_n^{(b)\top}U_n)-U_n]_{m,\cdot}$,
and estimate their covariance by the sample covariance $\hSigma_{m}^{\bs}$ across bootstrap replicates. We then construct the row-wise ellipsoidal confidence region
$\textstyle
  \mathrm{CR}_{m,1-\alpha}^{\bs} =
  \{z\in\mathbb R^r:
  (z-[U_n]_{m,\cdot})^\top [\hSigma_{m}^{\bs}]^{-1}
  (z-[U_n]_{m,\cdot}) \le \chi^2_{r,1-\alpha}
  \}$,
with \(\alpha=0.05\). We report the empirical probability that $[U_* \sgn(U_*^\top U_n)]_{m,\cdot}$ is covered by $\mathrm{CR}_{m,1-\alpha}^{\bs}$.
Table~\ref{tab:row_bs_avg} reports the average coverage rates; row-specific coverage and Q--Q diagnostics are provided in Appendix~\ref{sec:additional_experiments}.

\begin{table}[!htb]
\centering
\caption{Empirical coverage rates for different \(\beta\)'s over 200 Monte Carlo trials.}
\label{tab:row_bs_avg}
\begin{tabular}{cccc}
\toprule
 & \(\beta=0.5\) & \(\beta=1\) & \(\beta=4\) \\
\midrule
Mean$\pm$Std coverage & $0.914 \pm 0.027$ & $0.930 \pm 0.028$ & $0.948 \pm 0.021$ \\
\bottomrule
\end{tabular}
\end{table}

\section{Discussion}
We established sharp operator-norm convergence guarantees for streaming \(r\)-PCA via Oja's algorithm under sub-Gaussian data. The rate adapts to both dense-tail and sparse-tail regimes, and is matched by a lower bound up to logarithmic factors. A key ingredient is a linearization of the tangent matrix \(\cT_n\), which enables high-dimensional Gaussian approximation for the global subspace error and a row-wise Gaussian approximation over convex sets for the aligned difference. To make these distributional results operational, we developed an online multiplier bootstrap procedure that consistently approximates the limiting laws without explicitly estimating the covariance operators.

Several directions remain open. First, our current theory is limited to the constant-step-size case;
it would be useful to extend the theory to broader stepsize schedules. Second, the linearization and bootstrap techniques developed here may be applicable to streaming PCA under additional structural constraints, such as sparsity \citep{nips_sparsePCA}, Markovian dependence \citep{nips_markovian_pca}, or quantization \citep{nips_LowPrecisionStreamingPCA}. Finally, it would be interesting to develop analogous inference procedures for other nonconvex stochastic approximation problems, such as online tensor learning \citep{dong_tensor}.

% \section*{References}

% References follow the acknowledgments in the camera-ready paper. Use unnumbered first-level heading for
% the references. Any choice of citation style is acceptable as long as you are
% consistent. It is permissible to reduce the font size to \verb+small+ (9 point)
% when listing the references.
% Note that the Reference section does not count towards the page limit.
\medskip

{
\small

\bibliographystyle{plainnat}
\bibliography{reference_neurips}
}

%%%%%%%%%%%%%%%%%%%%%%%%%%%%%%%%%%%%%%%%%%%%%%%%%%%%%%%%%%%%

\appendix

\numberwithin{equation}{section}
% \numberwithin{lemma}{section}

\section{Additional comparison results}
\label{sec:cmp}
\subsection{Error bounds from previous work under sub-Gaussian assumption}
In this section, we justify the entries in Table~\ref{tab:opnorm-comparison}. 
The results in \citep{nips21_oja,jain16,colt21,han17} are stated in terms of 
different problem-dependent quantities. We first translate these quantities into 
the notation of this paper, and then derive tight sub-Gaussian upper bounds for 
them under Assumption~\ref{assumption:X}. These bounds are matched, up to constants, in the 
Gaussian case.

\begin{lemma} \label{lem:compare}
    Suppose Assumption~\ref{assumption:X} holds and let $\lambda_1 \geq \ldots \geq \lambda_d \geq 0$ be the eigenvalues of $\Sigma$. 

    \begin{enumerate}[label=\textup{(\roman*)}, leftmargin=*]
        \item Denote $\cV= \|\EE [(XX^\top -\Sigma)^2]\|$. Then $\cV \lesssim C_\psi^4 \lambda_1 \tr \Sigma$. The bound is tight for Gaussian distribution.
        
        \item For any $\alpha \geq 1$, denote $\cM_\alpha = \EE [\|XX^\top - \Sigma\|^\alpha]^{1/\alpha}$. Then $\cM_1 \leq \cM_2 \lesssim C_\psi^2 [\tr \Sigma]$. The bound is tight for Gaussian distribution.
    \end{enumerate}

\end{lemma}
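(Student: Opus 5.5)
For part (i) we work with the exact expansion $\EE[(XX^\top-\Sigma)^2]=\EE[\|X\|^2XX^\top]-\Sigma^2$. Since $\Sigma^2\succeq 0$, this gives $\cV\le\|\EE[\|X\|^2XX^\top]\|=\sup_{\|v\|=1}\EE[\|X\|^2(v^\top X)^2]$. Fix a unit $v$. By Cauchy--Schwarz,
\[
\EE[\|X\|^2(v^\top X)^2]\le (\EE\|X\|^4)^{1/2}(\EE(v^\top X)^4)^{1/2}.
\]
Writing $v^\top X=(\Sigma^{1/2}v)^\top\tX$, the sub-Gaussian moment bound gives $\EE(v^\top X)^4\lesssim C_\psi^4\|\Sigma^{1/2}v\|^4\le C_\psi^4\lambda_1^2$. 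For the other factor, $\|X\|^2=\sum_i\lambda_i(u_i^\top\tX)^2$, where the $u_i$ are eigenvectors of $\Sigma$. Minkowski's inequality in $L^2$ gives $(\EE\|X\|^4)^{1/2}\le\sum_i\lambda_i\|(u_i^\top\tX)^2\|_{L^2}\lesssim C_\psi^2\tr\Sigma$. Combining the two bounds yields $\cV\lesssim C_\psi^4\lambda_1\tr\Sigma$.

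For tightness in the Gaussian case, take $v=u_1$ and expand $\|X\|^2=\sum_i\lambda_i g_i^2$ with independent $g_i\sim\cN(0,1)$. Then $\EE[\|X\|^2\lambda_1g_1^2]=\lambda_1(2\lambda_1+\tr\Sigma)$. Since $\EE[(XX^\top-\Sigma)^2]=\EE\|X\|^2XX^\top-\Sigma^2$ and $\Sigma^2$ contributes only $\lambda_1^2$ in the direction $u_1$, we get $\cV\ge\lambda_1(\lambda_1+\tr\Sigma)\ge\lambda_1\tr\Sigma$.

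For part (ii), $\cM_1\le\cM_2$ follows from Jensen's inequality. By the triangle inequality, $\|XX^\top-\Sigma\|\le\|X\|^2+\lambda_1$, so $\cM_2\le(\EE\|X\|^4)^{1/2}+\lambda_1\lesssim C_\psi^2\tr\Sigma$, reusing the Minkowski bound from part (i) and the fact that $\lambda_1\le\tr\Sigma$. For the Gaussian lower bound, test on the unit vector $X/\|X\|$: $\|XX^\top-\Sigma\|\ge\|X\|^2-X^\top\Sigma X/\|X\|^2\ge\|X\|^2-\lambda_1$. This gives $\cM_1\ge\tr\Sigma-\lambda_1$. That is enough whenever $\lambda_1\le\tr\Sigma/2$.

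In the remaining case $\lambda_1>\tr\Sigma/2$, test on $u_1$ instead: $|u_1^\top(XX^\top-\Sigma)u_1|=\lambda_1|g_1^2-1|$, whose expectation is a constant times $\lambda_1$, hence $\gtrsim\tr\Sigma$. In both cases $\cM_1\gtrsim\tr\Sigma$. Because $\cM_1\le\cM_2$, the same lower bound holds for $\cM_2$, which establishes tightness.

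The main obstacle is minor. The point needing care is that the upper bounds depend on $\tr\Sigma$ and not on $d\lambda_1$, which is why we use Minkowski's inequality on the weighted sum $\sum_i\lambda_i(u_i^\top\tX)^2$ rather than a crude norm bound. The $\tX$-coordinates need not be independent, and the Minkowski step does not require independence.
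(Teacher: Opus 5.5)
Your proof is correct and follows essentially the paper's route: Cauchy--Schwarz against fourth moments for the upper bounds, and for the Gaussian lower bound in (ii) the reverse-triangle bound $\|XX^\top-\Sigma\|\ge\|X\|^2-\lambda_1$ plus a test along the top eigenvector, with a case split at $\lambda_1=\tr\Sigma/2$. The remaining differences are cosmetic: you expand $\EE[(XX^\top-\Sigma)^2]=\EE[\|X\|^2XX^\top]-\Sigma^2$ directly instead of factoring out $\Sigma^{1/2}$, you use Minkowski instead of the moment lemma, and you obtain Gaussian tightness in (i) by testing on $u_1$ instead of computing $\tilde\cV=(\tr\Sigma)I_d+\Sigma$ exactly. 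Absorbing $\lambda_1$ into $C_\psi^2\tr\Sigma$ in (ii) also uses $C_\psi\ge 1$, which follows from $\EE\tX\tX^\top=I_d$ and the definition of the $\psi_2$-norm.
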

\begin{proof}
    See Appendix~\ref{sec:prf_compare}.
\end{proof}

\paragraph{\citep{nips21_oja}} Using the bound for $\cM_2$ from \cref{lem:compare}, the bound in \citep[Corollary~1]{nips21_oja} when $r=1$ is
\begin{align*}
    \norm{\sin \Theta(U_n, U_*)} \lesssim \frac{\cM_2}{\Delta_{\min}} \sqrt{\frac{\log n}{n}} \leq \frac{\lambda_1(1 \vee \nu_2^2)}{\Delta_{\min}} \sqrt{\frac{\log n}{n}}.
\end{align*}

\paragraph{\citep{jain16}} Using the bound for $\cV$ from \cref{lem:compare}, their error bound is 
\begin{align*}
    \norm{\sin \Theta(U_n, U_*)} \lesssim \frac{\cV^{1/2}}{\Delta_{\min}} \sqrt{\frac{1}{n}} &\asymp \frac{\lambda_1 (1 \vee \nu_2^2)^{1/2}}{\Delta_{\min}} \sqrt{\frac{1}{n}}= \frac{\lambda_1 ( 1 \vee \nu_2)}{\Delta_{\min}} \sqrt{\frac{1}{n}}.
\end{align*}

\paragraph{\citep{colt21}} Let $\widetilde{\cM}$ denote the almost-sure boundedness parameter in \citep{colt21}, namely
\begin{align*}
    \widetilde{\cM} \geq \sup_{P \in \OO_{d,r}} \Fnorm{P^\top (XX^\top - \Sigma)} \qquad \text{a.s.}
\end{align*}
Their operator-norm guarantee, suppressing logarithmic factors, takes the form
\begin{align*}
    \norm{\sin \Theta(U_n, U_*)} \lesssim \frac{\widetilde{\cM}}{\Delta_{\min}} \sqrt{\frac{1}{n}}
\end{align*}
We now relate $\widetilde{\cM}$ to the covariance scale. Since $\widetilde{\cM} \geq \|XX^\top - \Sigma\|$, we have $\widetilde{\cM} \geq \cM_1$. Hence their error bound is no better than
\begin{align*}
    \frac{\lambda_1 (1 \vee \nu_2^2)}{\Delta_{\min}} \sqrt{\frac{r^2}{n}}.
\end{align*}

\paragraph{\citep{han17}} The error bound follows directly from their Theorem 3.

\subsection{Proof of Lemma~\ref{lem:compare}}
\label{sec:prf_compare}

Without loss of generality, assume $\Sigma$ is diagonal.

\paragraph{(i)}
Since $XX^\top = \Sigma^{1/2} \tX \tX^\top \Sigma^{1/2}$,  we have $XX^\top  - \Sigma = \Sigma^{1/2} (\tX \tX^\top - I_d) \Sigma^{1/2}$, so
\begin{align*}
    \EE \sbr[\big]{\rbr{XX^\top - \Sigma}^2} = \Sigma^{1/2} \underbrace{\EE \sbr[\big]{(\tX \tX^\top - I_d) \Sigma (\tX \tX^\top - I_d)}}_{=:\tilde \cV} \Sigma^{1/2}
\end{align*}
Therefore $\cV = \|\Sigma^{1/2} \tilde \cV \Sigma^{1/2}\|$, and it suffices to analyze $\tilde \cV$.
Expanding $\tilde \cV$ gives
\begin{align*}
    \tilde \cV &= \EE \sbr[\big]{\tX \tX^\top \Sigma \tX \tX^\top} - \Sigma \EE \sbr[\big]{\tX \tX^\top} - \EE \sbr[\big]{\tX \tX^\top} \Sigma + \Sigma= \EE \sbr[\big]{\tX \tX^\top \Sigma \tX \tX^\top} - \Sigma
\end{align*}

\textit{Sub-Gaussian upper bound.} Since $\tilde \cV$ is symmetric and positive semi-definite, $\|\tilde \cV\| = \sup_{\norm{u}=1} u^\top \tilde \cV u$, we have for any fixed unit vector $u \in \RR^d$,
    \begin{align*}
        u^\top \tilde \cV u &\leq \EE \sbr[\big]{(u^\top \tX)^2 (\tX^\top \Sigma \tX)} \leq \sqrt{\EE (u^\top \tX)^4} \sqrt{\EE (\tX^\top \Sigma \tX)^2} \lesssim C_\psi^4 \tr \Sigma
    \end{align*}
    Here the last inequality follows since Lemma~\ref{lem:kth_moment} implies that $\EE (\tX^\top \Sigma \tX)^2 = \EE \|\Sigma^{1/2}\tX\|^4 \lesssim C_\psi^4 \sbr{\tr \Sigma}^2$ and $\EE (u^\top \tX)^4 \lesssim C_\psi^4$. Taking supremum over $u$ gives $\|\tilde \cV\| \lesssim C_\psi^4 \tr \Sigma$, and thus
    \begin{align*}
        \cV = \norm[\big]{\Sigma^{1/2} \tilde \cV \Sigma^{1/2}} \leq \norm[\big]{\Sigma^{1/2}}^2 \norm[\big]{\tilde \cV} \lesssim C_\psi^4 \lambda_1 \tr \Sigma
    \end{align*}

\textit{Gaussian lower bound.} Assume $\tX \sim \cN(0,I_d)$. Since $\Sigma$ is diagonal, the $(i,l)$-entry of $\tilde \cV$ is 
    \begin{align*}
        [\tilde \cV]_{il} = \tX_i \rbr[\Big]{\sum_{j=1}^d \tX_j^2 \Sigma_{jj}} \tX_l - \Sigma_{il}
    \end{align*}
    By Isserlis' theorem, we have
    \begin{align*}
        \EE \sbr[\big]{\tX_j^2 \tX_i \tX_l}= \EE \sbr[\big]{\tX_j^2} \EE \sbr[\big]{\tX_i \tX_l} + 2 \EE \sbr[\big]{\tX_j \tX_i} \EE \sbr[\big]{\tX_j \tX_l} = \delta_{il} + 2 \delta_{ij} \delta_{jl}
    \end{align*}
    Summing over $j$ gives
    \begin{align*}
        \tX_i \rbr[\Big]{\sum_{j=1}^d \tX_j^2 \Sigma_{jj}} \tX_l = \sum_{j} \Sigma_{jj}(\delta_{il} + 2 \delta_{ij} \delta_{jl}) = \delta_{il} \tr \Sigma + 2 \delta_{il}\Sigma_{il}
    \end{align*}
    Since $\Sigma$ is diagonal, we have
    \begin{align*}
        \tilde \cV = (\tr \Sigma) I_d + 2 \Sigma - \Sigma = (\tr \Sigma) I_d + \Sigma
    \end{align*}
    Therefore,
    \begin{align*}
        \cV = \norm[\big]{\Sigma^{1/2} \tilde \cV \Sigma^{1/2}} = \norm{(\tr \Sigma) \Sigma + \Sigma^2} = \lambda_1^2 + \lambda_1 \tr \Sigma \asymp \lambda_1 \tr \Sigma
    \end{align*}
    The last equality follows since $\Sigma$ is diagonal and $\lambda_1$ is the largest eigenvalue.

\paragraph{(ii)} Cauchy-Schwarz inequality gives $\cM_1 \leq \cM_2$. It suffices to show $\cM_2 \lesssim C_\psi^2 [\tr \Sigma]$ and in the Gaussian case $\cM_1 \gtrsim \tr \Sigma$.

\textit{Sub-Gaussian upper bound for $\cM_2$.} 

Since $\|XX^\top - \Sigma\| \leq \max \cbr{\|XX^\top\|, \|\Sigma\|}$, in view of Lemma~\ref{lem:kth_moment}, we have
    \begin{align*}
        \cM_2^2 \lesssim \EE \max \cbr[\big]{\norm{X}^4, \lambda_1^2} \lesssim C_\psi^4 \sbr{\tr \Sigma}^2 + \lambda_1^2 \lesssim C_\psi^4 \sbr{\tr \Sigma}^2
    \end{align*}
    Here the last inequality follows since $C_\psi \geq 1$ and $\tr \Sigma \geq \lambda_1$.

\textit{Gaussian lower bound for $\cM_1$.} Assume $X \sim \cN(0,\Sigma)$. Without loss of generality, assume $\Sigma$ is diagonal: $\Sigma= \diag(\lambda_1, \ldots, \lambda_d)$.
\begin{itemize}[leftmargin=*]
    \item Since $\norm{XX^\top - \Sigma} \geq \norm{X}^2 - \norm{\Sigma}$, we have
    \begin{align*}
        \EE \norm{XX^\top - \Sigma} \geq \EE \norm{X}^2 - \norm{\Sigma} = \tr \Sigma - \lambda_1
    \end{align*}
    \item Let $e_1$ be the first standard basis vector. Since
    \begin{align*}
        \norm{XX^\top - \Sigma} \geq \abs{e_1^\top (XX^\top - \Sigma) e_1} = \abs{X_1^2 -\lambda_1} = \lambda_1 \abs{\tX_1^2 - 1},
    \end{align*}
    we have
    \begin{align*}
        \EE \norm{XX^\top -\Sigma} \geq \lambda_1 \EE \abs{\tX_1^2 - 1} \geq c_0 \lambda_1
    \end{align*}
    for some constant $c_0 > 0$.
\end{itemize}
Combining the two bounds above gives
\begin{align*}
    \EE \norm{XX^\top -\Sigma} \geq \max \cbr{\tr \Sigma - \lambda_1, c_0 \lambda_1}
\end{align*}
If $\lambda_1 \leq \tr \Sigma /2$, then $\tr \Sigma -\lambda_1 \geq \frac{1}{2} \tr \Sigma$.
If $\lambda_1 > \tr \Sigma /2$, then $c_0 \lambda_1 > c_0 \tr \Sigma /2$. In either case, we have
\begin{align*}
    \EE \norm{XX^\top -\Sigma} \gtrsim \tr \Sigma.
\end{align*}

\section{Main proof ideas}
\label{sec:prf_sketch}
This appendix provides a high-level overview for the proofs of the main results in Section~3. 
The analysis is organized around the tangent matrix $\cT$, which converts Oja's subspace dynamics into a recursion consisting of a deterministic contraction, a centered stochastic fluctuation, and a higher-order remainder.  
Variance-sensitive control of this recursion first yields sharp convergence of the iterates. 
Once the iterates are localized near $U_\ast$, the same recursion admits a first-order linearization driven by the signal--tail interaction $Z_iY_i^\top$. 
This linear representation is the main bridge from convergence to the lower bound, Gaussian approximation, and multiplier bootstrap.

We follow the organization of Sections~\ref{sec:main_convergence}--\ref{sec:main_bs}. 
Appendix~\ref{sec:prf_sketch_convergence} explains the sharp convergence and linearization arguments, including the auxiliary bounded-observation analysis, its transfer to the original sub-Gaussian model through joint truncation, and the separate treatment of the exactly rank-$r$ case. 
Appendix~\ref{sec:prf_sketch_gaussian_approx} shows how the linearization yields the global and row-wise Gaussian approximations, while Appendix~\ref{sec:prf_sketch_bs} develops the corresponding bootstrap linearization and conditional Gaussian approximation. 
Technical details are deferred to the subsequent appendices: Theorems~\ref{thm:convergence_rate_optimal}--\ref{thm:rankr} and Propositions~\ref{prop:linearization}--\ref{prop:lower_bd} are proved in Appendix~\ref{sec:prf_sec_convergence}, Theorems~\ref{thm:highD_clt}--\ref{thm:row_clt} in Appendix~\ref{sec:clt_analysis}, and Theorems~\ref{thm:bs_highD_clt}--\ref{thm:bs_row_clt} in Appendix~\ref{sec:bootstrap_analysis}.

\subsection{Sharp convergence and linearization}
\label{sec:prf_sketch_convergence}

We first explain the core convergence and linearization arguments in an
auxiliary bounded-observation setting, where
\[
    \|Y\|_2
    \leq
    C_{\psi,n}\sqrt{\lambda_{1\sim r}},
    \qquad
    \|Z\|_2
    \leq
    C_{\psi,n}\sqrt{\lambda_{r+1\sim d}}
\]
almost surely, with $C_{\psi,n}\asymp C_\psi\sqrt{\log n}$
(Assumption~3). We then explain how these arguments are transferred to
the original sub-Gaussian model (Assumption~1) through a joint truncation
of the signal and tail coordinates.

\paragraph{Convergence under bounded observations.}
The starting point is the exact tangent recursion (Lemma~\ref{lem:recurrence}). After
separating the conditional mean from the centered fluctuation, one Oja
update can be written as
\begin{equation}
  \cT_{k+1} = (\cI-\eta\cL)\cT_k+D_{k+1}+R_{k+1}, \label{eqn:main_onestep}
\end{equation}
where $\cI-\eta\cL$ gives the deterministic contraction,
$D_{k+1}$ is conditionally centered, and $R_{k+1}$ is a higher-order
drift remainder.

Matrix Freedman's inequality, together with the control of the drift
remainder, leads to a quadratic function
\[
    \varphi(\tau)=\bar a\tau^2+\bar b\tau+\bar c
\]
that describes the effective evolution of the error (Lemmas~\ref{lem:one_step_bound}--\ref{lem:1epoch_bound}) starting from $\|\cT_k\|$ of order $\tau$. After sufficiently many iterations the deterministic contraction suppresses the initial error and $\|\cT_k\|$ drops to order $\varphi(\tau)$. Repeating this argument drives the error toward the stable fixed-point of $\varphi$, which is comparable to the noise floor $\tau_\ast$. The remaining argument verifies that the total number of iterations required to reach this error level is at most $n$ (Lemmas~\ref{lem:epochs}--\ref{lem:convergence_rate_optimal}), yielding the convergence guarantee in Theorem~\ref{thm:convergence_rate_optimal}.

\paragraph{Linearization under bounded observations.}
Once the convergence argument has localized the iterates to $\|\cT_k\|\lesssim \tau_\ast$, we revisit the one-step decomposition~\ref{eqn:main_onestep} and isolate its leading stochastic term. Among the terms in the centered increment $D_{k+1}$, $\eta Z_{k+1}Y_{k+1}^\top$ is the only one that contains neither a factor of $\cT_k$ nor an additional power of $\eta$. This suggests the first-order recursion
\[
\cT_{k+1} \approx (\cI-\eta\cL)\cT_k+\eta Z_{k+1}Y_{k+1}^\top,
\]
and hence the linear representation
\[
\cT_n \approx \eta\sum_{i=1}^n (\cI-\eta\cL)^{n-i}Z_iY_i^\top.
\]
Lemma~\ref{lem:linearization} formalizes this approximation by splitting the trajectory at the end of the burn-in period. The pre-burn-in contribution is exponentially damped, while thereafter $\|\cT_k\|\lesssim\tau_\ast$ makes the higher-order terms negligible relative to the leading fluctuation, yielding Proposition~\ref{prop:linearization}.

\paragraph{From bounded to sub-Gaussian observations.}
To transfer the preceding convergence and linearization arguments to Assumption~\ref{assumption:X}, we jointly truncate the whitened signal $\tY$ and tail $\tZ$ coordinates (Lemma~\ref{lem:reduction_to_bounded}). The truncated coordinates remain centered and sub-Gaussian and satisfy the almost-sure signal and tail bounds required by the bounded-observation analysis. The truncation slightly perturbs their joint covariance, by a matrix $H$ with $\|H\|\ll 1$. Consequently, relative to the original basis $(U_\ast,U_{\ast,\perp})$, both diagonal covariance blocks are perturbed and the truncated signal and tail coordinates need no longer be uncorrelated.

Rather than re-diagonalizing the perturbed covariance, we retain the original contraction $\cI-\eta\cL$ and absorb the induced drift perturbation into the remainder. Lemma~\ref{lem:one_step_bound-truncation} shows that the resulting recursion satisfies the same one-step almost-sure, conditional-variance, and drift-remainder bounds as in the bounded setting, up to constant factors. Consequently, both the convergence and linearization arguments carry over to the truncated process; for the latter, the only additional term comes from the nonzero signal--tail cross-covariance, whose post-burn-in contribution is absorbed into the modified remainder and whose pre-burn-in contribution is exponentially damped. This yields Lemmas~\ref{lem:convergence_rate_optimal-trunc} and~\ref{lem:linearization-truncation}. Finally, the truncation coupling (Lemma~\ref{lem:reduction_to_bounded}) transfers these guarantees to the original sub-Gaussian observations, giving Theorem~\ref{thm:convergence_rate_optimal} and Proposition~\ref{prop:linearization}.

\paragraph{Exact-rank case.}
When $\operatorname{rank}(\Sigma)=r$, the tail coordinates $Z$ vanish identically and $\tau_\ast=0$, so the analysis above is replaced by a direct contraction argument. The tangent recursion reduces to
\[
\cT_{k+1}
=
\cT_k(I_r+\eta Y_{k+1}Y_{k+1}^\top)^{-1},
\]
which preserves the column space of $\cT_k$ and makes $\|\cT_k\|$ nonincreasing. This column-space preservation allows the matrix martingale arising over each contraction block to be compressed from dimension $(d-r)\times r$ to at most $r\times r$. Hence the dimension prefactor in matrix Freedman's inequality is at most $2r$, rather than of ambient order $d$. Combining the resulting one-block contraction with a block-chaining argument yields the geometric decay in Lemmas~\ref{lem:rankr} and~\ref{lem:rankr-trunc}, and hence Theorem~\ref{thm:rankr}.

\subsection{Gaussian approximation}
\label{sec:prf_sketch_gaussian_approx}

\paragraph{Global Gaussian approximation.}
Proposition~\ref{prop:linearization} reduces the distributional analysis of $\cT_n$ to the weighted sum $\eta\sum_{i=1}^n (\cI-\eta\cL)^{n-i}Z_iY_i^\top$.
After normalization by $\sqrt{\eta}$, this is a sum of independent random matrices. A high-dimensional Gaussian approximation, together with convergence of its finite-sample covariance, therefore gives a Gaussian approximation for $\eta^{-1/2}\|\cT_n\|_{\rm F}$. Finally, the local equivalence between the tangent and sine errors transfers this approximation to $\eta^{-1}\|\sin\Theta(U_n,U_\ast)\|_{\rm F}^2$, yielding Theorem~\ref{thm:highD_clt}.

\paragraph{Row-wise Gaussian approximation.}
The same linearization also drives the row-wise result. After Procrustes alignment, the leading term of $U_n\sgn(U_n^\top U_\ast)-U_\ast$ is $U_{\ast,\perp}\cT_n$. Projecting onto a fixed row therefore reduces the problem to a sum of independent $r$-dimensional random vectors, to which a multivariate Berry--Esseen bound applies. The convergence and linearization bounds make the remaining alignment terms negligible, yielding Theorem~\ref{thm:row_clt}.

\subsection{Multiplier bootstrap}
\label{sec:prf_sketch_bs}

\paragraph{Bootstrap linearization.}
The bootstrap tangent recursion has the same contractive structure as the original recursion, with the bounded multiplier weights affecting only the constants in the one-step bounds. The convergence and linearization arguments therefore extend to the bootstrap iterates, giving
\[
\cT_n^{\bs}-\cT_n
\approx
\eta\sum_{i=1}^n
(w_i^{\bs}-1)
(\cI-\eta\cL)^{n-i}Z_iY_i^\top.
\]
Thus, conditionally on the data, the leading bootstrap fluctuation is obtained by attaching independent centered multipliers to the same summands that drive the sampling fluctuation in Proposition~\ref{prop:linearization}.

\paragraph{Conditional Gaussian approximation.}
Given the bootstrap linearization above, the remaining argument parallels the Gaussian approximation in Appendix~\ref{sec:prf_sketch_gaussian_approx}, now conditionally on the data. The linearized bootstrap error is a sum of independent centered multiplier terms, whose empirical covariance converges to the same limiting covariance as the sampling linearization. Conditional high-dimensional Gaussian approximation, together with a multivariate Berry--Esseen bound after row-wise projection, therefore yields the same global and row-wise Gaussian approximations as in Theorems~\ref{thm:highD_clt} and~\ref{thm:row_clt}. The same tangent-to-subspace and alignment arguments then transfer these approximations to the bootstrap statistics. Comparing the bootstrap and sampling Gaussian approximations yields Theorems~\ref{thm:bs_highD_clt} and~\ref{thm:bs_row_clt}.

\section{Analysis of Oja's algorithm: bounded observations}
\label{sec:oja_bdd}
% This section proves the high-probability convergence, linearization, and lower-bound results for Oja's algorithm under a bounded-coordinate version of the sub-Gaussian model and a high-probability initialization condition. Specifically, under Assumptions~\ref{assumption:X_truncation}--\ref{assumption:init} below, Lemmas~\ref{lem:convergence_rate_optimal}--\ref{lem:lower_bound} give the counterparts of \cref{thm:convergence_rate_optimal}, Proposition~\ref{prop:linearization}, and Proposition~\ref{prop:lower_bd}. The reduction from the original sub-Gaussian setting to these bounded-coordinate assumptions is handled by a truncation argument in \cref{sec:prf_sec_convergence}; The reduction from Haar initialization to Assumption~\ref{assumption:init} follows from a conditioning argument;  Assumption~\ref{assumption:d}  is implicit in the assumptions of \cref{thm:convergence_rate_optimal}.

% We assume the following conditions.

This appendix develops the core convergence and linearization analysis under Assumptions~\ref{assumption:X_truncation}--\ref{assumption:init}. In particular, the signal and tail coordinates $Y$ and $Z$ are almost surely bounded, the dimension grows at most polynomially with $n$, and the initial tangent matrix satisfies a deterministic bound. 
Under Assumptions~\ref{assumption:X_truncation}--\ref{assumption:init}, we establish convergence to the noise floor (\cref{lem:convergence_rate_optimal}) and linearization (\cref{lem:linearization}) with polynomially small failure probability. 
We also treat the exactly rank-$r$ case separately: when $\rank(\Sigma)=r$, the
noise floor vanishes, and under Assumptions~\ref{assumption:X_truncation}--\ref{assumption:d}, we prove geometric contraction of the tangent error (\cref{lem:rankr}).
Appendix~\ref{sec:oja_subG} extends these results to the original sub-Gaussian observations
through truncation, and Appendix~\ref{sec:prf_sec_convergence} uses them to prove the convergence results in Section~\ref{sec:main_convergence}.

% This appendix develops the core convergence and linearization analysis under a bounded-coordinate version of the sub-Gaussian model. Assumptions~\ref{assumption:dimension} and~\ref{assumption:initialization} are maintained throughout Appendices~B and~C, while Assumption~\ref{assumption:X-bounded} is specific to the bounded analysis in this appendix. Under these assumptions, Lemmas~8--10 establish the bounded-observation counterparts of Theorem~1 and Propositions~2--3.

\begin{assumption}[Bounded signal and tail coordinates] \label{assumption:X_truncation}
    $X_1, \ldots, X_n \in \RR^d$ are i.i.d. copies of $X = \Sigma^{1/2} \tX$ where $\EE \tX=0$, $\EE \tX \tX = I_d$, $\|\tX\|_{\psi_2} \leq C_\psi \lesssim 1$. Moreover, almost surely,
    \begin{align*}
        \norm[\big]{U_*^\top X} \leq C_{\psi,n} \sqrt{\lambda_{1\sim r}} \quad \text{and}\quad \norm[\big]{U_{*,\perp}^\top X}  &\leq C_{\psi,n} \sqrt{\lambda_{r+1\sim d}},
    \end{align*}
    where $C_{\psi,n}:= C_{\mathsf{noise}} C_{\psi} \sqrt{\log n}$ for some absolute constant $C_{\mathsf{noise}} > 0$.
\end{assumption}
\begin{remark}
    It follows from the definition of $\|\cdot\|_{\psi_2}$ and $\EE \tX \tX^\top = I_d$ that $C_\psi \geq 1$. Moreover, recalling from \eqref{eqn:Y_Z_defn} that $Y = U_*^\top X$ and $Z = U_{*,\perp}^\top X$, the two almost-sure conditions in Assumption~\ref{assumption:X_truncation} can equivalently be written as
    \begin{align*}
        \norm{Y} \leq C_{\psi,n} \sqrt{\lambda_{1\sim r}} \quad \text{and}\quad \norm{Z} \leq C_{\psi,n} \sqrt{\lambda_{r+1\sim d}}.
    \end{align*}
\end{remark}

\begin{assumption}[Dimension] \label{assumption:d}
    $d \lesssim n^{C_d}$ for some constant $C_d > 0$.
\end{assumption}
\begin{remark}[Role of Assumption~\ref{assumption:d}]
    Assumption~\ref{assumption:d} is used to absorb the matrix-dimension prefactor in Freedman's inequality into logarithmic factors. For $(d-r)\times r$ matrix martingales, this prefactor is of order $d$, so $d\lesssim n^{C_d}$ implies $\log d\lesssim \log n$. The assumption could therefore be removed at the cost of replacing the relevant $\log n$ factors by $\log(n+d)$.
    Moreover, Assumption~\ref{assumption:d} imposes no additional restriction in Theorem~\ref{thm:convergence_rate_optimal}. Indeed, conditions \eqref{eqn:cond_nu} and \eqref{eqn:cond_sample_sz} imply
    \[
        n^{1-2\epsilon} \gtrsim \nu_\infty d\, \tfrac{\log(1/\delta_p)}{\delta_p^2} \gtrsim \nu_\infty d \ge c_\nu n^{-C_\nu}d,
    \]
    and hence $d\lesssim c_\nu^{-1}n^{1-2\epsilon+C_\nu}$.
    Thus the polynomial-growth condition on $d$ follows automatically from
    the assumptions of Theorem~\ref{thm:convergence_rate_optimal}.
\end{remark}

\begin{assumption}[Initialization] \label{assumption:init}
    $\cT_0$ exists and satisfies $\norm{\cT_0} \leq C_{\mathsf{init}} \delta_p^{-1}\sqrt{rd\log(1/\delta_p)}$ for $\delta_p \in (0, \frac{1}{e})$ and some absolute constant $C_{\mathsf{init}} > 0$. 
\end{assumption}

\begin{remark}[Role of Assumption~\ref{assumption:init}]
Assumption~\ref{assumption:init} is a deterministic good-initialization
condition used in the intermediate convergence analysis. Its role is
to control the initial tangent radius $\tau_0=\|\cT_0\|$, which enters the stability and epoch-chaining arguments and ensures that the iterates reach the target neighborhood within the available $n$ updates. This assumption imposes no additional restriction in the main theorems under Haar initialization. Indeed, Lemma~\ref{lem:init}
shows that, for $U_0\sim\operatorname{Haar}(\mathbb O_{d,r})$,
\[
    \|\cT_0\|
    \lesssim
    \delta_p^{-1}
    \sqrt{rd\log(1/\delta_p)}
\]
with probability at least $1-O(\delta_p)$.
Thus Assumption~\ref{assumption:init} is simply the deterministic event on
which we carry out the subsequent analysis. In the exactly rank-$r$ case, the geometric contraction itself does not require Assumption~\ref{assumption:init}; it is only used to convert the bound proportional to $\|\cT_0\|$ into the explicit rate in
$d,n$, and $\delta_p$.
\end{remark}

\paragraph{Recursion and one-step bounds.}
For brevity, we write $\cT=\cT_i$, $\cT_+= \cT_{i+1}$, $Y_+=U_*^\top X_{i+1}$ and $Z_+ = U_{*,\perp}^\top X_{i+1}$.

\begin{lemma}[Recurrence] \label{lem:recurrence}
    If $\cT$ is well-defined and $1 + \eta (Y_+^\top + Z_+^\top \cT)Y_+ \neq 0$, then $\cT_+$ is well-defined and satisfies
    \begin{align} \label{eqn:recurrence}
        \cT_+ = \cT + \eta \frac{(-\cT Y_+ + Z_+)(Y_+^\top + Z_+^\top \cT)}{1+\eta (Y_+^\top + Z_+^\top \cT) Y_+}
    \end{align}
\end{lemma}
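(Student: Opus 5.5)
Write $X_+ = U_* Y_+ + U_{*,\perp} Z_+$, which is valid since $U_*U_*^\top + U_{*,\perp}U_{*,\perp}^\top = I_d$. Because the column space of the QR factor equals the column space of its input, $\cT_+$ depends only on the column span of $W := U + \eta X_+ X_+^\top U$. Likewise, the tangent matrix is invariant under right multiplication by any invertible $r\times r$ matrix: if $\cC_{UG}=\cC_U G$ and $\cS_{UG}=\cS_U G$, then $\cS_{UG}\cC_{UG}^{-1}=\cS_U\cC_U^{-1}$. Hence I may compute $\cT_+ = (U_{*,\perp}^\top W)(U_*^\top W)^{-1}$ directly from the unnormalized update, provided $U_*^\top W$ is invertible. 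Moreover, since $\cT$ is well-defined, $\cC:=\cC_U$ is invertible, so I can replace $U$ by $U\cC^{-1}$. This amounts to working with $\tilde U := U_* + U_{*,\perp}\cT$, which spans the same column space as $U$.

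With $\tilde U$, I compute
$$U_*^\top \tilde W = I_r + \eta Y_+ v^\top, \qquad U_{*,\perp}^\top \tilde W = \cT + \eta Z_+ v^\top,$$
where $\tilde W := \tilde U + \eta X_+X_+^\top \tilde U$ and $v^\top := X_+^\top \tilde U = Y_+^\top + Z_+^\top\cT$ is a $1\times r$ row vector. The matrix $I_r+\eta Y_+v^\top$ is a rank-one perturbation of the identity, so its determinant is $1+\eta v^\top Y_+$, which is nonzero by hypothesis. This gives invertibility, and therefore well-definedness of $\cT_+$. For completeness I will also note that $W$ has full column rank, so the QR factor is defined and spans $\mathrm{col}(W)$; this follows because the top block $U_*^\top \tilde W$ is invertible.

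By Sherman–Morrison, the inverse is
$$(I_r+\eta Y_+v^\top)^{-1} = I_r - \frac{\eta Y_+v^\top}{1+\eta v^\top Y_+}.$$
I then multiply $(\cT+\eta Z_+v^\top)$ by this inverse and collect terms over the common denominator $1+\eta v^\top Y_+$. The $\cT$ term yields $-\eta\cT Y_+v^\top/(1+\eta v^\top Y_+)$. The $Z_+$ terms combine into a single fraction: $\eta Z_+v^\top\bigl[(1+\eta v^\top Y_+) - \eta v^\top Y_+\bigr]/(1+\eta v^\top Y_+) = \eta Z_+v^\top/(1+\eta v^\top Y_+)$. Adding these to $\cT$ produces exactly \eqref{eqn:recurrence}.

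The only delicate point is justifying the reduction from the QR output to the unnormalized $W$ and the rescaled $\tilde U$. Specifically, I need that the QR factor of a full-column-rank matrix spans the same column space, and that the tangent matrix is invariant under right multiplication by invertible matrices. Both facts are elementary. The rest is the Sherman–Morrison algebra, where the only pitfall is keeping the scalar $v^\top Y_+$ distinct from the outer products $Y_+v^\top$ and $Z_+v^\top$.
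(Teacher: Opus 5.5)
Your proposal is correct and follows essentially the same route as the paper: pass to the unnormalized update via invariance of the tangent matrix under right multiplication, identify the cosine and sine blocks as $I_r+\eta Y_+v^\top$ and $\cT+\eta Z_+v^\top$, and finish with Sherman--Morrison. The only differences are cosmetic: the paper carries the right factor $\cC$ through and cancels it, whereas you normalize $U$ to $U\cC^{-1}$ at the outset. Your explicit full-column-rank and invertibility check via the matrix determinant lemma is a small addition that the paper leaves implicit.
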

\begin{proof}
    See Appendix~\ref{sec:prf_recurrence}.
\end{proof}

Let \(\cF_i := \sigma(U_0,X_1,\dots,X_i)\) and write \(\mathbb{E}'[\,\cdot\,] := \EE[\,\cdot\, | \mathcal{F}_i]\). For brevity, denote
\begin{equation*}%\label{eq:zeta-F}
  \zeta_+ := (Y_+^\top + Z_+^\top \cT)\,Y_+ \in \RR, \qquad
  F_+ := (-\cT Y_+ + Z_+)(Y_+^\top + Z_+^\top \cT) \in \RR^{(d-r)\times r},
\end{equation*}
Then \cref{lem:recurrence} gives
\begin{align} \label{eq:dT_split}
    \cT_+ -\cT = \eta \frac{F_+}{1+\eta\zeta_+} = \eta F_+ - \eta^2 \frac{\zeta_+ F_+}{1 + \eta \zeta_+}.
\end{align}
Since $\EE'[F_+] =-\cL \cT$, the conditional drift can be decomposed as
\begin{equation}
    \EE'[\cT_+ - \cT] = -\eta\cL \cT + \underbrace{\eta^2 \, \EE'\!\sbr{-\frac{\zeta_+ F_+}{1+\eta \zeta_+}}}_{=:R_+}. \label{eqn:drift_split}
\end{equation}
Defining the centered one-step noise $D_+ := (\cT_+ - \cT) - \EE'[\cT_+ - \cT]$, we obtain the one-step decomposition
\begin{equation}\label{eqn:linearization}
    \cT_+ = (\cI - \eta\cL)\cT + D_+ + R_+.
\end{equation}
The next lemma quantifies the size of $D_+$ and $R_+$ given $\|\cT\|\le \tau$. The important point is that the martingale term admits a variance scale substantially sharper than its deterministic bound scale; this is what later makes a tight analysis possible.

For the linearization in \cref{lem:linearization}, we can further decompose $D_+$ according to the two terms in~\eqref{eq:dT_split}:
\begin{align}
    D_+ = \underbrace{\eta\rbr{F_+ - \EE'[F_+]}}_{=:\,D_+^{(1)}} + \underbrace{\eta^2\! \rbr{ - \frac{\zeta_+\,F_+}{1+\eta\,\zeta_+}  + \EE'\!\sbr{\frac{\zeta_+\,F_+}{1+\eta\,\zeta_+}}}}_{=:\,D_+^{(2)}}. \label{eqn:D_decomposition}
\end{align}

Recall $\kappa_*:= \tfrac{\lambda_1}{\Delta_{\min}}$.

\begin{lemma}[One step bound] \label{lem:one_step_bound}
    Assume Assumption~\ref{assumption:X_truncation} holds. Assume $\cT$ is well-defined with $\norm{\cT}=\tau$. If $ C_{\psi,n}^2 \teta \kappa_* r (1+\tau \nu_2) \leq \frac{1}{2}$, 
    then the following hold.

    \begin{enumerate}[leftmargin=2em, label=\textup{(\Roman*)}]
        \item Noise $D_+$: $\norm{D_+} \leq 4C_{\psi,n}^2 \teta \kappa_* r (1 + \tau \nu_2) (\tau + \nu_2)$ almost surely and
        \begin{align*}
            \max \cbr[\big]{\norm{\EE'D_+D_+^\top}^{1/2} , \norm{\EE' D_+^\top D_+}^{1/2}} &\lesssim C_\psi^2  \teta  \kappa_* \sqrt{r} \rbr{1 + \tau \nu_\infty} \rbr{\tau + \bnu} .
        \end{align*}

        \item Higher-order noise $D_+^{(2)}$: $\norm[\big]{D_+^{(2)}} \leq 4 C_{\psi,n}^4  \teta^2  \kappa_*^2 r^2 (1+\tau \nu_2)^2 (\tau+\nu_2)$ almost surely and
        \begin{align*}
            \max \cbr[\big]{\norm[\big]{\EE' D_+^{(2)} D_+^{(2)\top}}^{1/2} , \norm[\big]{\EE' D_+^{(2)\top} D_+^{(2)}}^{1/2}} &\lesssim C_{\psi}^4 \teta^2  \kappa_*^2 r^{3/2} \rbr{1+ \tau \nu_\infty}^2 \rbr{\tau + \bnu}.
        \end{align*}

        \item Drift remainder $R_+$: $\norm{R_+} \lesssim C_{\psi}^4 \teta^2  \kappa_*^2 r^{3/2} \rbr{1+ \tau \nu_\infty}^2 \rbr{\tau + \nu_\infty}$.
    \end{enumerate}

\end{lemma}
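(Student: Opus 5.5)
The plan is to work directly from the exact increment \eqref{eq:dT_split}, $\cT_+-\cT=\eta F_+/(1+\eta\zeta_+)$, and bound its pieces using the almost-sure bounds of Assumption~\ref{assumption:X_truncation} for the deterministic estimates and sub-Gaussian fourth-moment bounds for the conditional variances.

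\textbf{Step 1: deterministic bounds on $F_+$ and $\zeta_+$.} Since $\|\cT\|=\tau$, we have $\|Y_+^\top+Z_+^\top\cT\|\le \|Y_+\|+\tau\|Z_+\|\le C_{\psi,n}(\sqrt{\lambda_{1\sim r}}+\tau\sqrt{\lambda_{r+1\sim d}})=C_{\psi,n}\sqrt{\lambda_{1\sim r}}(1+\tau\nu_2)$. Similarly, $\|-\cT Y_++Z_+\|\le C_{\psi,n}\sqrt{\lambda_{1\sim r}}(\tau+\nu_2)$, and $|\zeta_+|\le C_{\psi,n}^2\lambda_{1\sim r}(1+\tau\nu_2)$. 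Using $\lambda_{1\sim r}\le r\lambda_1$ and $\eta\lambda_1=\teta\kappa_*$, the hypothesis gives $\eta|\zeta_+|\le 1/2$. Hence $\|\eta F_+/(1+\eta\zeta_+)\|\le 2C_{\psi,n}^2\teta\kappa_* r(1+\tau\nu_2)(\tau+\nu_2)$. Centering at most doubles this, which yields the almost-sure bound in (I). The same computation with the extra factor $\eta\zeta_+$ gives $\|\eta^2\zeta_+F_+/(1+\eta\zeta_+)\|\le 2C_{\psi,n}^4\teta^2\kappa_*^2r^2(1+\tau\nu_2)^2(\tau+\nu_2)$, and centering doubles it again, giving the almost-sure bound in (II).

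\textbf{Step 2: conditional variances.} The variance of a centered variable is bounded by its second moment, so it suffices to bound $\|\EE'[F_+F_+^\top]\|$ up to the factor $(1+\eta\zeta_+)^{-2}\le4$. Write $F_+=ab^\top$ with $a=-\cT Y_++Z_+$ and $b=Y_++\cT^\top Z_+$. Then
\[
\|\EE' ab^\top ba^\top\|=\sup_{\|v\|=1}\EE'[(v^\top a)^2\|b\|^2]\le \sup_{\|v\|=1}(\EE'(v^\top a)^4)^{1/2}(\EE'\|b\|^4)^{1/2}.
\]
Sub-Gaussianity of $\tX$ gives $(\EE\|b\|^4)^{1/2}\lesssim C_\psi^2(\sqrt{\lambda_{1\sim r}}+\tau\sqrt{\lambda_{r+1\sim d}})^2$. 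It also gives $(\EE(v^\top a)^4)^{1/2}\lesssim C_\psi^2\,\|\Sigma^{1/2}(-U_*\cT^\top+U_{*,\perp})v\|^2\le C_\psi^2(\tau\sqrt{\lambda_1}+\sqrt{\lambda_{r+1}})^2$. This is where $\nu_\infty$ enters: one factor carries the operator-norm scale and the other the trace scale. Multiplying by $\eta^2$ gives
\[
\teta^2\kappa_*^2\, r(1+\tau\nu_2)^2(\tau+\nu_\infty)^2.
\]
The analogous computation for $\EE'F_+^\top F_+$ swaps the roles and gives $r(1+\tau\nu_\infty)^2(\tau+\nu_2)^2$. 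The stated bound must absorb both products into $r(1+\tau\nu_\infty)^2(\tau+\bnu)^2$. This regrouping is the main obstacle: the cross terms $\tau\nu_2$ versus $\tau\nu_\infty$ do not obviously match. I would handle it by expanding both products, e.g.\ $(1+\tau\nu_2)(\tau+\nu_\infty)\le \tau+\nu_\infty+\tau^2\nu_2+\tau\nu_2\nu_\infty$, and checking term by term, using $\nu_\infty\le 1$, $\nu_2\le\bnu$, and $\tau\nu_2\cdot\nu_\infty\le\bnu\cdot\tau\nu_\infty$. The one remaining awkward term, $\tau^2\nu_2$, would be absorbed via a finer splitting of the fourth moments into $Y$ and $Z$ parts with the $\sqrt r$ factored differently, giving the $\sqrt r$ rather than $r$ in the bound. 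The $D^{(2)}_+$ variance in (II) follows the same pattern with the extra scalar $\eta|\zeta_+|$, using the bound $(\EE'\zeta_+^4)^{1/2}\lesssim C_\psi^4\lambda_{1\sim r}^2(1+\tau\nu_\infty)^2$ from Hölder's inequality in place of the almost-sure bound, to keep $C_\psi$ rather than $C_{\psi,n}$.

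\textbf{Step 3: drift remainder $R_+$.} Apply $\|R_+\|\le\eta^2\sup_{u,v}\EE'|\zeta_+|\,|u^\top a|\,|b^\top v|\cdot 2$, where $u$ and $v$ range over unit vectors. By Hölder's inequality with sub-Gaussian moments, $\EE'|\zeta_+|^2$ contributes $\lambda_{1\sim r}(1+\tau\nu_\infty)^2\cdot\lambda_1$-type scales. Together with $|u^\top a|$ (of scale $\sqrt{\lambda_1}(\tau+\nu_\infty)$) and $|b^\top v|$ (of scale $\sqrt{\lambda_1}(1+\tau\nu_\infty)$), this yields $\teta^2\kappa_*^2r^{3/2}(1+\tau\nu_\infty)^2(\tau+\nu_\infty)$. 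Because only one-dimensional projections appear and no trace factor survives, $\nu_\infty$ appears here rather than $\nu_2$.
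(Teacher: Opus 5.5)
Your Step~1 and the column quadratic variation are fine. Your bilinear route to (III), $\|R_+\|\le 2\eta^2\sup_{u,v}\EE'|\zeta_+|\,|u^\top a|\,|b^\top v|$ followed by H\"older, is a valid alternative to the paper's reduction $\|\EE'W\|\le\|\EE'WW^\top\|^{1/2}$. It uses only one-dimensional projections of $a,b$ and gives $r$ in place of $r^{3/2}$, but only once the $\zeta_+$ moment is in hand. The gap is in the row quadratic variation of (I), and it propagates to (II) and to that $\zeta_+$ moment.

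You bound $(\EE'\|b\|^4)^{1/2}$ by $C_\psi^2\lambda_{1\sim r}(1+\tau\nu_2)^2$. In other words, you control $\cT^\top Z_+$ through $\|\Lambda_{*,\perp}^{1/2}\|_{\mathrm F}\|\cT\|=\sqrt{\lambda_{r+1\sim d}}\,\tau$, and then try to regroup $(1+\tau\nu_2)(\tau+\nu_\infty)$ into $(1+\tau\nu_\infty)(\tau+\bnu)$. That cannot work, because the term $\tau^2\nu_2$ is not dominated by $\tau+\bnu+\tau^2\nu_\infty+\tau\nu_\infty\bnu$. For example, take $r=1$, $\lambda_1=1$, $\lambda_2=\dots=\lambda_d=\epsilon^2$ and $d-1=\epsilon^{-2}$. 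Then $\nu_2=1$, $\nu_\infty=\epsilon$, and the ratio of your factor to the target factor is exactly $(\tau+\epsilon)/(1+\epsilon\tau)$, which is unbounded as $\tau$ grows. The hypothesis $C_{\psi,n}^2\teta\kappa_*r(1+\tau\nu_2)\le\frac12$ does not exclude this. It is also precisely the regime where the lemma is used, with $\tau$ up to $2\tilde\tau_0$ in the early epochs. Having $\nu_\infty$ rather than $\nu_2$ in the quadratic coefficient $a_\sigma$ of $\varphi_\sigma$ depends on this bound. Your remark about a ``finer splitting'' does not say what the splitting is.

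The missing estimate uses the fact that $\cT$ has only $r$ columns:
$\EE'\|\cT^\top Z_+\|^2=\|\Lambda_{*,\perp}^{1/2}\cT\|_{\mathrm F}^2\le\|\Lambda_{*,\perp}\|\,\|\cT\|_{\mathrm F}^2\le r\lambda_{r+1}\tau^2$.
The vector $b$ is a linear image of $\tX_+$ whose map has squared Frobenius norm $\lambda_{1\sim r}+\|\Lambda_{*,\perp}^{1/2}\cT\|_{\mathrm F}^2\le r\lambda_1(1+\tau^2\nu_\infty^2)$. Lemma~\ref{lem:kth_moment} then gives $(\EE'\|b\|^{2k})^{1/(2k)}\lesssim_k C_\psi\sqrt{r\lambda_1}\,(1+\tau\nu_\infty)$. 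The row variation is then directly $\lesssim C_\psi^2\teta\kappa_*\sqrt r\,(1+\tau\nu_\infty)(\tau+\nu_\infty)$, with no regrouping needed. The extra $\sqrt r$ costs nothing, since you already pay $\lambda_{1\sim r}\le r\lambda_1$.

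The same inequality is what puts $\nu_\infty$ into the $\zeta_+$ moment. Bound $|Z_+^\top\cT Y_+|\le\|\cT^\top Z_+\|\,\|Y_+\|$ and apply Cauchy--Schwarz to get $(\EE'\zeta_+^4)^{1/4}\lesssim C_\psi^2 r\lambda_1(1+\tau\nu_\infty)$. This gives $r\lambda_1$ rather than the $\lambda_{1\sim r}$ you wrote, which is harmless. Splitting instead as $\|Z_+\|\,\|\cT Y_+\|$ would again produce $\nu_2$. With these two moment bounds, (II) follows from $[\EE'\zeta_+^4]^{1/4}[\EE'\|b\|^8]^{1/8}\sup_u[\EE'(u^\top a)^8]^{1/8}$, and your (III) argument goes through.
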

\begin{proof}
    See Appendix~\ref{sec:prf_one_step_bound}.
\end{proof}

$\cT_k = (\cI - \eta \cL)^k \cT_0 + \cD^{(k)} + \cR^{(k)}$ where
\begin{align*}
    \cD_k = \sum_{i=1}^k (\cI - \eta \cL)^{k-i} D_i \quad \text{and} \quad \cR_k = \sum_{i=1}^k (\cI - \eta \cL)^{k-i} R_i.
\end{align*}

Define 
\begin{equation}\label{eqn:varphi_definitions}
\begin{alignedat}{2}
    \varphi_{\sigma}(\tau) &= \sqrt{\teta} \rho_{\sigma} (1 + \tau\nu_\infty) (\tau+\bnu), \qquad \, \text{where}\quad \rho_\sigma &&= C_{\ref{lem:1epoch_bound}}  C_{\psi}^2 \kappa_* \sqrt{r\log n},\\
    \varphi_B(\tau) &= \teta \rho_B  (1 + \tau\nu_2)(\tau+\nu_2), \qquad \quad \text{where} \quad \rho_B &&= C_{\ref{lem:1epoch_bound}} C_{\psi}^2 \kappa_*  r \log^2 n.
\end{alignedat}
\end{equation}
Here $C_{\ref{lem:1epoch_bound}} > 0$ is some constant to be determined later in Lemma~\ref{lem:1epoch_bound}. 

Let $\varphi_\sigma \vee_\mathrm{cw} \varphi_B$ denote the coefficient-wise maximum of two quadratics. Specifically, after expanding $\varphi_\sigma$ and $\varphi_B$ as
\begin{align*}
    \varphi_\sigma(\tau) &= a_\sigma \tau^2 + b_\sigma \tau + c_\sigma,
        &a_\sigma &= \sqrt{\teta} \rho_{\sigma} \nu_\infty,
        &b_\sigma &= \sqrt{\teta} \rho_{\sigma} (1 + \nu_\infty\bar\nu),
        &c_\sigma &= \sqrt{\teta} \rho_{\sigma} \bar\nu, \\
    \varphi_B(\tau) &= a_B \tau^2 + b_B \tau + c_B,
        &a_B &= \teta \rho_B \nu_2,
        &b_B &= \teta \rho_B (1 + \nu_2^2),
        &c_B &= \teta \rho_B \nu_2.
\end{align*}
we let
\begin{equation}
    \varphi(\tau):= (\varphi_\sigma \vee_{\mathrm{cw}} \varphi_B)(\tau) := \bar a\,\tau^2 + \bar b\,\tau + \bar c,
    \qquad
    \bar a = a_\sigma \vee a_B,\quad
    \bar b = b_\sigma \vee b_B,\quad
    \bar c = c_\sigma \vee c_B.
    \label{eqn:varphi_def}
\end{equation}
Since all coefficients are nonnegative, we have $\varphi(\tau) \ge \varphi_\sigma(\tau) \vee \varphi_B(\tau)$ for all $\tau \ge 0$. Define the \emph{noise floor} $\tau_*$
\begin{align}
    \tau_* &:= 3\bar c = 3\max\rbr[\big]{\sqrt{\teta} \rho_\sigma\bar\nu, \teta \rho_B\nu_2}\label{eqn:tau*_def}
\end{align}

Before establishing convergence, we record several properties of the envelope $\varphi$ that will be used repeatedly.

\begin{lemma}[Properties of $\varphi$] \label{lem:varphi}

    Let $\varphi$ and $\tau_*$ be defined as in \eqref{eqn:varphi_def} and \eqref{eqn:tau*_def}. If $\ba,\bb,\bc>0$ and $\max(\ba\tau_0, \bb) < 1/3$ for some $\tau_0 \ge \tau_*$, then the following hold.

    \textup{(i) (Fixed points)} The equation $\varphi(\tau) = \tau$ has two positive roots $\tau_- < \tau_+$ with
    \begin{align*}
        \bar c < \tau_- < 3\bc, \qquad \frac{1}{3\ba} < \tau_+ < \frac{1}{\ba}.
    \end{align*}

    \textup{(ii) (Stability)} $\tau_- < \tau_* \leq \tau_0 < \tau_+$.

    \textup{(iii) (Contraction between $\tau_\pm$)} $\varphi(\tau) < \tau$ for $\tau \in (\tau_-, \tau_+)$.

    \textup{(iv) (Quantitative contraction)} For all $\tau \in [\tau_*, \tau_0]$: $\varphi(\tau) \leq \max\cbr{(\bar a \tau_0 + 2\bar b)\tau, \tau_*}$.

\end{lemma}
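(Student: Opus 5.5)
This is an elementary statement about the quadratic $g(\tau):=\varphi(\tau)-\tau=\bar a\tau^2+(\bar b-1)\tau+\bar c$. I would prove the four parts in order, reading everything off sign changes of $g$ on $(0,\infty)$.

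\emph{Part (i).} Since $\bar a>0$, $g$ is convex. We have $g(0)=\bar c>0$.

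At $\tau=3\bar c$,
\[
g(3\bar c)=9\bar a\bar c^2+3\bar b\bar c-2\bar c=\bar c\,(9\bar a\bar c+3\bar b-2).
\]
Because $\tau_*=3\bar c\le\tau_0$, we get $9\bar a\bar c=3\bar a\tau_*\le 3\bar a\tau_0<1$. Together with $3\bar b<1$ this gives $9\bar a\bar c+3\bar b-2<0$, so $g(3\bar c)<0$. By the intermediate value theorem there is a root in $(0,3\bar c)$.

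That root is not in $(0,\bar c]$. Indeed, for $\tau\le\bar c$ we have $g(\tau)\ge\bar c-\tau+\bar a\tau^2>0$ when $\tau<\bar c$, and $g(\bar c)=\bar a\bar c^2+\bar b\bar c>0$. Hence $\tau_-\in(\bar c,3\bar c)$.

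For the larger root, $g(1/\bar a)=\bar b/\bar a+\bar c>0$. At $\tau=1/(3\bar a)$,
\[
g\big(1/(3\bar a)\big)=\frac{1}{9\bar a}+\frac{\bar b-1}{3\bar a}+\bar c=\frac{1+3\bar b-3}{9\bar a}+\bar c .
\]
Since $3\bar b<1$, the first term is below $-1/(9\bar a)$. Also $\bar c\le\tau_0/3<1/(9\bar a)$, using $\bar a\tau_0<1/3$. So $g(1/(3\bar a))<0$. Convexity then gives exactly two roots, with $\tau_+\in(1/(3\bar a),1/\bar a)$. The fact that $g(3\bar c)<0$ and $g(1/(3\bar a))<0$ lie between the roots is consistent with this.

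\emph{Part (ii).} From (i), $\tau_-<3\bar c=\tau_*\le\tau_0$. Since $\bar a\tau_0<1/3$, we have $\tau_0<1/(3\bar a)<\tau_+$.

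\emph{Part (iii).} This is immediate: a convex quadratic with positive leading coefficient is negative strictly between its roots.

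\emph{Part (iv).} For $\tau\in[\tau_*,\tau_0]$,
\[
\varphi(\tau)=\bar a\tau^2+\bar b\tau+\bar c\le \bar a\tau_0\tau+\bar b\tau+\bar c .
\]
It remains to absorb $\bar c$. Because $\tau\ge\tau_*=3\bar c$, we have $\bar c\le\tau/3$. Then $\varphi(\tau)\le(\bar a\tau_0+\bar b+1/3)\tau$, which is not quite the stated bound. So a case split is needed.

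If $\bar b\tau\ge\bar c$, then $\varphi(\tau)\le(\bar a\tau_0+2\bar b)\tau$.

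Otherwise $\tau<\bar c/\bar b$. Then $\varphi(\tau)\le\bar a\tau_0\tau+2\bar c$, and the goal is to show this is at most $\max\{(\bar a\tau_0+2\bar b)\tau,\tau_*\}=\max\{(\bar a\tau_0+2\bar b)\tau,3\bar c\}$. Write $\bar a\tau_0\tau+2\bar c\le 3\bar c$ whenever $\bar a\tau_0\tau\le\bar c$. If instead $\bar a\tau_0\tau>\bar c$, then $2\bar c<2\bar a\tau_0\tau$, giving $\varphi\le 3\bar a\tau_0\tau$. This is still not of the stated form.

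Hence a cleaner split is needed, and this bookkeeping is the only real obstacle. I would use the dichotomy $\bar c\le \bar b\tau$ versus $\bar c>\bar b\tau$, and in the second case combine $\bar c>\bar b\tau$ with $\tau\ge 3\bar c$ to control the quadratic term. If necessary, I would adjust constants, for instance by proving $\varphi(\tau)\le\max\{(\bar a\tau_0+2\bar b)\tau,\tau_*\}$ via $\bar a\tau^2\le\bar a\tau_0\tau$ and $\bar c\le\bar b\tau$, which is the intended use.
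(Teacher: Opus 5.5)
Your arguments for (i)--(iii) are correct. For (i) you bracket the roots by sign changes of $g(\tau)=\varphi(\tau)-\tau$ at $0,\ \bar c,\ 3\bar c,\ 1/(3\bar a),\ 1/\bar a$, whereas the paper uses the discriminant and Vieta's formulas. Both rest only on $9\bar a\bar c<1$ and $\bar b<1/3$, so the difference is harmless.

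The gap is in (iv), and it is not a matter of bookkeeping. Using only $\bar a,\bar b,\bar c>0$, $\max(\bar a\tau_0,\bar b)<1/3$ and $\tau_0\ge\tau_*=3\bar c$, claim (iv) is false. Take $\bar a=0.03$, $\bar b=0.01$, $\bar c=1$, $\tau=\tau_0=10$: all hypotheses hold, yet $\varphi(10)=4.1>3.2=\max\{(\bar a\tau_0+2\bar b)\tau,\tau_*\}$.

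Your first case, $\bar c\le\bar b\tau$, is exactly the paper's Case 1. In the second case, $\tau<\bar c/\bar b$, knowing $\tau\ge 3\bar c$ gives no upper bound on $\bar a\tau^2$. ``Adjusting constants'' would not prove the stated inequality either, and it is used with these exact constants in Lemma~\ref{lem:epochs}(b).

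The missing idea is the structural inequality $\bar b^2>\bar a\bar c$, which must come from the specific coefficients in \eqref{eqn:varphi_def}. Since $\bar a\bar c=a_ic_j$ for some $i,j\in\{\sigma,B\}$ and $\bar b^2\ge b_ib_j$, it suffices to check each pair. The diagonal pairs follow from $(1+x)^2\ge 4x$, e.g.\ $b_\sigma^2\ge 4a_\sigma c_\sigma$. The cross pairs, after comparing with $b_\sigma b_B$, reduce to $(1+\nu_\infty\bar\nu)(1+\nu_2^2)>\nu_2\bar\nu\ge\nu_2\nu_\infty$. This holds by splitting on $\bar\nu=\nu_2$ versus $\bar\nu=\nu_\infty$ and using $\nu_\infty\le 1$.

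With this inequality, for $\tau\in[\tau_*,\tau_0]$ with $\tau\le\bar c/\bar b$ you get $\bar b\tau\le\bar c$ and $\bar a\tau^2\le\bar a\bar c^2/\bar b^2<\bar c$. Hence $\varphi(\tau)<3\bar c=\tau_*$, which together with your first case proves (iv).
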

\begin{proof}
    See Appendix~\ref{sec:prf_varphi}.
\end{proof}

\begin{lemma}[One epoch bound] \label{lem:1epoch_bound}
    Assume Assumption~\ref{assumption:X_truncation} and \ref{assumption:d} hold.
    Assume $\|\cT_0\| = \tau_0$. Let $\ttau_0 = \tau_0 \vee \tau_*$. If $\nu_\infty>0$ and  $\max\cbr{\bar a\ttau_0, \bar b} < 1/3$ and
    \begin{equation}
        \teta \kappa_* \in \rbr{0,\tfrac{1}{2}},
        \quad
        \teta C_{\psi,n}^2 \kappa_* r\rbr{1 + 2\ttau_0 \nu_2} \le \tfrac{1}{2},
        \quad
        C_\psi^2 \sqrt{\teta} \, \kappa_* r \rbr{1 + 2 \ttau_0 \nu_\infty} \leq \log^{-3/2} n,
        \label{eqn:1epoch_conditions}
    \end{equation}
    then there exists a constant $C_{\ref{lem:1epoch_bound}} \geq 1$ large enough (only depending on $C_d$) such that, defining $\varphi$ and $\tau_*$ as in \eqref{eqn:varphi_def} and \eqref{eqn:tau*_def}, with probability exceeding $1-n^{-20}$, the following properties hold simultaneously for all $k \in \cbr{0,\ldots,n}$:
    \begin{align}
        \textnormal{(Concentration)}\quad &\norm{\cD_k} + \norm{\cR_k} \leq \frac{\varphi(\ttau_0)}{2},\label{eqn:1epoch_concentration}\\
        \textnormal{(Descent)}\quad &\|\cT_k\| \le (1 - \tilde\eta)^k \ttau_0 + \frac{\varphi(\ttau_0)}{2}.  \label{eqn:1epoch_descent}\\
        \textnormal{(Stability)}\quad &\|\cT_k\| \le 2\ttau_0, \label{eqn:1epoch_stability}
    \end{align}
\end{lemma}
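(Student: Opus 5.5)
We prove Lemma~\ref{lem:1epoch_bound} by a stopping-time argument combined with matrix Freedman's inequality. Let $\ttau_0=\tau_0\vee\tau_*$ and define the stopping time $\sigma:=\min\{k\ge 0:\|\cT_k\|>2\ttau_0\}$. We then work with the stopped increments $D_i\mathbf 1\{i\le\sigma\}$ and $R_i\mathbf 1\{i\le\sigma\}$. For $i\le\sigma$ the previous iterate satisfies $\|\cT_{i-1}\|\le 2\ttau_0$, and the second condition in \eqref{eqn:1epoch_conditions} is exactly the hypothesis of Lemma~\ref{lem:one_step_bound} with $\tau=2\ttau_0$. We must also check that the denominator $1+\eta\zeta_+$ in Lemma~\ref{lem:recurrence} is positive; this holds because $|\eta\zeta_+|\le 1/2$ on this event. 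Hence the stopped increments obey the almost-sure, conditional-variance and drift bounds (I) and (III), evaluated at $\tau=2\ttau_0$.

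\emph{Concentration step.} For each fixed $k$, the stopped sum $\cD_k=\sum_{i\le k}(\cI-\eta\cL)^{k-i}D_i$ is a martingale in $i$, since $(\cI-\eta\cL)^{k-i}$ is deterministic and entrywise has norm at most $(1-\teta)^{k-i}$. We treat it as a $(d-r)\times r$ matrix martingale and apply rectangular matrix Freedman. The predictable quadratic variation is bounded by $\sum_i(1-\teta)^{2(k-i)}\cdot C\teta^2\kappa_*^2 r(1+2\ttau_0\nu_\infty)^2(\ttau_0+\bnu)^2\lesssim \teta\kappa_*^2r(\cdots)^2$. The almost-sure bound is $\teta C_{\psi,n}^2\kappa_* r(1+\ttau_0\nu_2)(\ttau_0+\nu_2)$. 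Taking deviation level $t$ with log-factor $\log(dn^{21})\lesssim\log n$ (via Assumption~\ref{assumption:d}), Freedman gives $\|\cD_k\|\lesssim \sqrt{\teta}\kappa_*\sqrt{r\log n}(1+\ttau_0\nu_\infty)(\ttau_0+\bnu)+\teta\kappa_* r\log^2 n(1+\ttau_0\nu_2)(\ttau_0+\nu_2)$, which is at most $\varphi(\ttau_0)/4$ for $C_{\ref{lem:1epoch_bound}}$ large. We must be careful with the weights $(\cI-\eta\cL)^{k-i}$: they are not a scalar contraction, so we apply Freedman to the martingale $M_j=\sum_{i\le j}(\cI-\eta\cL)^{k-i}D_i$, $j\le k$, with the operator applied inside. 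The conditional variance of $\mathcal{A}D$ for a diagonal (Schur-multiplier) operator $\mathcal A$ needs a bound on $\|\EE'(\mathcal AD)(\mathcal AD)^\top\|$. This is the main obstacle: Schur multipliers do not generally preserve $\|\EE' DD^\top\|$. We will try to handle it by expressing $\mathcal A$ through $\mathcal A M=\sum_{l,j}$ entries with weights $(1-\eta\lambda_j+\eta\lambda_{r+l})^{k-i}$, factoring these as $P_{\perp}^{k-i} M P^{-(k-i)}$-type products with $P=\mathrm{diag}(1+\eta\lambda_{r+l})$ and $\mathrm{diag}(1+\eta\lambda_j)^{-1}$. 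The factorization is exact up to $O(\eta^2)$ per step, which is absorbed, and two-sided diagonal multiplication preserves the variance bounds with norm $\le(1-\teta+O(\eta^2\lambda_1^2))^{k-i}$. The drift term is deterministic-summed: $\|\cR_k\|\le\sum_i(1-\teta)^{k-i}\|R_i\|\lesssim \teta\kappa_*^2r^{3/2}(\cdots)$, which by the third condition in \eqref{eqn:1epoch_conditions} is dominated by $\varphi_\sigma(\ttau_0)/4$.

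\emph{Union bound and conclusion.} A union bound over $k\in\{0,\dots,n\}$ gives failure probability $\le n^{-20}$ for the stopped processes. On this event, for $k\le\sigma$, we have $\|\cT_k\|\le(1-\teta)^k\ttau_0+\varphi(\ttau_0)/2$. By Lemma~\ref{lem:varphi}(iii) with $\ttau_0\in[\tau_*,\tau_+)$, we get $\varphi(\ttau_0)\le\ttau_0$, hence $\|\cT_k\|\le 3\ttau_0/2<2\ttau_0$. Therefore $\sigma>n$, the stopping is vacuous, and concentration, descent and stability all hold simultaneously.
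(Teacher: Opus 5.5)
Your architecture matches the paper's proof. The stopped increments $\mathbf{1}\{i\le\sigma\}D_i$ are the paper's indicators of the good event $\mathscr{G}_{i-1}(\ttau_0,2)$. Freedman is applied for each fixed $k$ to the weighted martingale, with the dimension factor absorbed via Assumption~\ref{assumption:d}. The drift is summed deterministically and dominated using the third condition in \eqref{eqn:1epoch_conditions}. Your closure, where $\varphi(\ttau_0)\le\ttau_0$ forces $\sigma>n$, is the paper's induction written in stopping-time form. You diverge only at the step you call the main obstacle, and there your sketch is heavier than necessary and not complete as written.

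\emph{What you are missing.} $\cI-\eta\cL$ is not a general Schur multiplier. Since $[\cL M]_{lj}=(\lambda_j-\lambda_{r+l})M_{lj}$, one has exactly $(\cI-\eta\cL)M=\eta\Lambda_{*,\perp}M+M(I_r-\eta\Lambda_*)$, i.e.\ one left plus one right diagonal multiplication. Using $\eta\lambda_1=\teta\kappa_*<1/2$, the triangle inequality gives $\|(\cI-\eta\cL)M\|\le(1-\teta)\|M\|$ in operator norm. The same inequality in $L^2(\PP)$, after testing against a unit vector, gives $\|\EE[((\cI-\eta\cL)D)^\top(\cI-\eta\cL)D]\|^{1/2}\le(\eta\lambda_{r+1}+1-\eta\lambda_r)\|\EE D^\top D\|^{1/2}=(1-\teta)\|\EE D^\top D\|^{1/2}$, and likewise for $DD^\top$. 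Iterating handles the powers $(\cI-\eta\cL)^{k-i}$. This is Lemma~\ref{lem:basic_ineq}, and it removes the obstacle.

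\emph{A related unjustified step.} You also silently use the operator-norm contraction elsewhere: in the increment bound $\|(\cI-\eta\cL)^{k-i}D_i\|\le\|D_i\|$, in $\|\cR_k\|\le\sum_i(1-\teta)^{k-i}\|R_i\|$, and in $\|(\cI-\eta\cL)^k\cT_0\|\le(1-\teta)^k\tau_0$ for the descent bound. An entrywise bound on the multiplier, which is all you cite, does not give any of these. The same identity does.

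\emph{Why your factorization is incomplete.} The approximate factorization $P_\perp^{m}(\cdot)P^{-m}$ could probably be pushed through, but not as sketched. The leftover is again a Schur multiplier, with entries of size $O(m\eta^2\lambda_1^2)$ times a geometric factor. Absorbing it needs a separate estimate, for instance through the Frobenius norm at a cost of $\sqrt r$. Moreover, the two-sided diagonal factor only contracts at rate $(1+\eta\lambda_{r+1})/(1+\eta\lambda_r)=1-\teta+O(\eta^2\lambda_1^2)$. To recover the descent inequality with exactly $(1-\teta)^k\ttau_0$, you would then have to push the discrepancy into $\varphi(\ttau_0)/2$, which you have not done.
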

\begin{proof}
    See Appendix~\ref{sec:prf_1epoch_bound}.
\end{proof}

\begin{corollary}\label{cor:epoch_target}
Under the conditions of Lemma~\ref{lem:1epoch_bound}, if $\tau_0 > \tau_*$ and $\delta \ge \varphi(\tau_0)$, then, defining $N(\tau_0,\delta) := \lceil \log(2\tau_0/\delta)/\tilde\eta \rceil$, with probability at least $1-n^{-20}$, $\norm{T_k} \le \delta$ for all $k \in [N(\tau_0,\delta), n]$.
\end{corollary}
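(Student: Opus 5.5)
This corollary follows directly from the Descent bound \eqref{eqn:1epoch_descent} in Lemma~\ref{lem:1epoch_bound}; no new probabilistic argument is needed. Since $\tau_0>\tau_*$, we have $\ttau_0=\tau_0\vee\tau_*=\tau_0$. The conditions of Lemma~\ref{lem:1epoch_bound} are assumed, so on an event of probability at least $1-n^{-20}$ the following holds simultaneously for all $k\in\{0,\ldots,n\}$:
\[
\|\cT_k\|\le (1-\teta)^k\tau_0+\tfrac{\varphi(\tau_0)}{2}.
\]
I will work on this event throughout. The statement writes $\norm{T_k}$, which I read as $\norm{\cT_k}$.

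\textbf{Step 1: bound the drift-free term.} Use $1-\teta\le e^{-\teta}$, which is valid because $\teta\in(0,1)$; indeed $\teta\kappa_*<1/2$ and $\kappa_*\ge 1$. Then for every $k\ge N(\tau_0,\delta)=\lceil\log(2\tau_0/\delta)/\teta\rceil$,
\[
(1-\teta)^k\tau_0\le e^{-k\teta}\tau_0\le e^{-\log(2\tau_0/\delta)}\tau_0=\delta/2.
\]
If $2\tau_0/\delta\le 1$, the logarithm is nonpositive. In that case $N\le 0$, and the bound $(1-\teta)^k\tau_0\le\tau_0\le\delta/2$ holds trivially, so this case needs no separate treatment.

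\textbf{Step 2: bound the fluctuation term.} By hypothesis $\delta\ge\varphi(\tau_0)$, so $\varphi(\tau_0)/2\le\delta/2$.

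\textbf{Step 3: combine.} Adding the two bounds gives $\|\cT_k\|\le\delta/2+\delta/2=\delta$ for all $k\in[N(\tau_0,\delta),n]$. This holds on the single event from Lemma~\ref{lem:1epoch_bound}, which has probability at least $1-n^{-20}$.

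Nothing here is a real obstacle. The only points needing care are that the descent bound holds uniformly in $k$ on one event, so no union bound is required, and that $\teta<1$ so the exponential comparison is legitimate. If $N(\tau_0,\delta)>n$, the range $[N(\tau_0,\delta),n]$ is empty and the claim holds vacuously.
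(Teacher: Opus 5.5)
Your proof is correct and is the paper's argument: apply the Descent bound \eqref{eqn:1epoch_descent} from Lemma~\ref{lem:1epoch_bound} on its single event of probability at least $1-n^{-20}$, then use $(1-\teta)^k\tau_0\le\delta/2$ for $k\ge N(\tau_0,\delta)$ and $\varphi(\tau_0)/2\le\delta/2$ from the hypothesis. Your extra remarks (that $\ttau_0=\tau_0$ when $\tau_0>\tau_*$, and that $\teta<1$ justifies $(1-\teta)^k\le e^{-k\teta}$) just make explicit steps the paper leaves implicit.
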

\begin{proof}
    See Appendix~\ref{sec:prf_epoch_target}.
\end{proof}

Define the \emph{epoch sequence}: $\otau_{0} := \|\cT_0\|=\tau_0 \vee \tau_*$ and
\begin{equation}
\otau_{j+1} := \varphi(\otau_{j}) \vee \tau_* \quad \text{for } j=0,1,\ldots
\end{equation}
with $\oJ_* := \min\{j \ge 0 : \otau_{j} \le \tau_*\}$, epoch lengths $\on_j := \bigl\lceil \log\bigl(2\otau_{j}/\otau_{j+1}\bigr)/\teta \bigr\rceil$, and cumulative lengths $\oN_0 := 0$, $\oN_{j+1} := \oN_j + \on_j$.

Define epochs to be $[\oN_0, \oN_1), [\oN_1, \oN_2), \ldots, [\oN_{\oJ_*-1}, \oN_{\oJ_*})$ and $[\oN_{\oJ_*}, \infty)$. We call $[\oN_j, \oN_{j+1})$ the $j$-th epoch. An epoch $j$ is called nontruncated if $\varphi(\otau_j) > \tau_*$, and truncated otherwise. By definition, the $(\oJ_* -1)$-th epoch is truncated. 
Lemma~\ref{lem:epochs} shows that all epochs $j \leq \oJ_* -2$ are nontruncated. 
Moreover, if $n$ falls in the final epoch $[\oN_{\oJ_*}, \infty)$ (i.e. $n \geq N_{\oJ_*}$), then $\norm{\cT_n}\leq 2\tau_*$

\begin{lemma}[Epoch chaining]\label{lem:epochs}
Assume the conditions of Lemma~\ref{lem:1epoch_bound}. Suppose further that
\begin{align*}
    \max\cbr{\bar a\otau_0, \bar b} < \frac{1}{3n^{\tepsilon}\log n}
\end{align*}
for some constant $\tepsilon \geq 0$, then the following hold.

\textup{\textbf{I}~(Epoch structure)}

\textup{(a)}~$\otau_j \downarrow \tau_*$, $\oJ_* < \infty$ and $\otau_{\oJ_*} = \tau_*$.

\textup{(b)}~\textup{(Uniform progress)} Every nontruncated epoch $j$ with $\varphi(\otau_j) \geq \tau_*$ satisfies
\begin{align*}
    \log \rbr[\Big]{\frac{\otau_{j}}{\otau_{j+1}}} \ge \tepsilon \log n + \log\log n.
\end{align*}
In particular, each nontruncated epoch reduces $\log \otau_j$ by at least $\tepsilon \log n + \log\log n$.

\textup{(c)}~\textup{(Epoch count)} $\oJ_* \le \log_+(\frac{\tau_0}{\tau_*})/(\tilde\epsilon\log n + \log\log n) + 1$.

\textup{(d)}~\textup{(Total iterations)} $\oN_{\oJ_*} \le \teta^{-1}\log_+(\frac{\tau_0}{\tau_*}) + \oJ_* [\teta^{-1}(\log 2) + 1]$.

\textup{ \textbf{II}~($\norm{\cT_k}$ bound)}
With probability at least $ 1 - \oJ_* n^{-20}$, the following hold uniformly.
\begin{align*}
    \|\cT_k\| \le 2\otau_{j} \quad \text{for any } k \in [\oN_j, n], \text{ for any } j=0,1,\ldots, \oJ_*.
\end{align*}

% \textup{(a) $j=0$:} $\norm{\cT_k}\leq 2\otau_0$ for all $k \in [0,n]$.

% \textup{(b) $j \geq 0$:} $\|\cT_k\| \le 2\otau_{j}$ for any $k \in [\oN_j, n]$ with $1 \leq j \le \oJ_*$.
\end{lemma}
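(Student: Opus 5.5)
The argument splits into a deterministic part, about the scalar epoch sequence $\otau_j$, and a probabilistic part (II), obtained by applying Lemma~\ref{lem:1epoch_bound} repeatedly.

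\textbf{Part I.} Every epoch radius satisfies $\otau_j\le\otau_0$, so the hypothesis $\max\{\bar a\otau_0,\bar b\}<(3n^{\tepsilon}\log n)^{-1}\le 1/3$ lets us invoke Lemma~\ref{lem:varphi} with $\tau_0=\otau_0$.

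For (a), note that $\varphi$ is increasing. By Lemma~\ref{lem:varphi}(iii), $\varphi(\tau)<\tau$ on $[\tau_*,\otau_0]$, so $\otau_j$ is nonincreasing and stays at least $\tau_*$.

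For (b), take a nontruncated epoch, so $\varphi(\otau_j)\ge\tau_*$ and hence $\otau_{j+1}=\varphi(\otau_j)$. Lemma~\ref{lem:varphi}(iv) gives
\[
\varphi(\otau_j)\le\max\{(\bar a\otau_0+2\bar b)\otau_j,\;\tau_*\}.
\]
In the first case,
\[
\frac{\otau_j}{\otau_{j+1}}\ge\frac{1}{\bar a\otau_0+2\bar b}>n^{\tepsilon}\log n .
\]
In the second case $\otau_{j+1}=\tau_*$. The bound still holds provided $\otau_j$ is not too close to $\tau_*$, and we use $\otau_j>\tau_*$ to guarantee this. If the second case occurs with a small ratio, we treat that epoch as the truncated boundary one; it is the only place needing care.

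With (b) in hand, $\log\otau_j$ drops by at least $\tepsilon\log n+\log\log n$ in every nontruncated epoch until it reaches $\log\tau_*$. Hence the sequence hits $\tau_*$ after finitely many steps, which gives $\oJ_*<\infty$ and $\otau_{\oJ_*}=\tau_*$ and completes (a). Counting those steps gives (c): at most $\log_+(\tau_0/\tau_*)/(\tepsilon\log n+\log\log n)$ nontruncated epochs, plus one truncated epoch.

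For (d), use $\lceil x\rceil\le x+1$ and telescope:
\[
\oN_{\oJ_*}=\sum_{j<\oJ_*}\on_j\le\sum_{j<\oJ_*}\Big[\teta^{-1}\log\tfrac{\otau_j}{\otau_{j+1}}+\teta^{-1}\log 2+1\Big],
\]
and the telescoping sum $\sum_j\log(\otau_j/\otau_{j+1})=\log(\otau_0/\tau_*)=\log_+(\tau_0/\tau_*)$.

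\textbf{Part II.} We argue by induction over epochs, restarting Lemma~\ref{lem:1epoch_bound} at each epoch start $\oN_j$. The restart is legitimate because the observations after time $\oN_j$ are i.i.d. and independent of $\cF_{\oN_j}$, and the lemma holds conditionally on any starting tangent.

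The inductive hypothesis is $\|\cT_{\oN_j}\|\le\otau_j$ on an event of probability at least $1-jn^{-20}$. The base case $j=0$ is the definition of $\otau_0$.

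Given the hypothesis, apply Lemma~\ref{lem:1epoch_bound} from time $\oN_j$ with initial radius $\otau_j$. Its conditions hold because they are monotone in the radius and $\otau_j\le\otau_0$. Stability \eqref{eqn:1epoch_stability} then gives $\|\cT_k\|\le2\otau_j$ for all $k\in[\oN_j,n]$. Descent \eqref{eqn:1epoch_descent} at $k=\oN_{j+1}$ gives
\[
\|\cT_{\oN_{j+1}}\|\le(1-\teta)^{\on_j}\otau_j+\tfrac12\varphi(\otau_j)\le\tfrac12\otau_{j+1}+\tfrac12\otau_{j+1}=\otau_{j+1},
\]
where the first term is bounded using the choice of $\on_j$, which makes $(1-\teta)^{\on_j}\le e^{-\teta\on_j}\le\otau_{j+1}/(2\otau_j)$ (this is Corollary~\ref{cor:epoch_target}). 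This closes the induction.

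A union bound over the $\oJ_*$ epochs gives total failure probability at most $\oJ_* n^{-20}$. If some $\oN_j>n$, the corresponding statement is vacuous.

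The main obstacle is the boundary case in (b), together with checking that restarting Lemma~\ref{lem:1epoch_bound} on a stopped filtration is valid. For the latter, I would apply the lemma to the shifted i.i.d. sequence $(X_{\oN_j+i})_{i\ge1}$, conditionally on $\cF_{\oN_j}$.
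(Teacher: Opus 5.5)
Your proposal is correct and follows essentially the same route as the paper: Lemma~\ref{lem:varphi}(ii)--(iv) for (a)--(b), telescoping with $\lceil x\rceil\le x+1$ for (c)--(d), and for II, iterated restarts of Lemma~\ref{lem:1epoch_bound} at the deterministic times $\oN_j$ combined with Corollary~\ref{cor:epoch_target} and a union bound (you spell out this induction in more detail than the paper does). The boundary case you flag in (b) is not an obstacle: a nontruncated epoch has $\varphi(\otau_j)>\tau_*$ strictly, so the maximum in Lemma~\ref{lem:varphi}(iv) must be attained by its first branch, while an epoch with $\varphi(\otau_j)=\tau_*$ is truncated by definition and is necessarily the last one (your remark that $\otau_j>\tau_*$ ``guarantees'' the ratio bound is not a valid justification, but it is not needed).
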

\begin{proof}
    See Appendix~\ref{sec:prf_epochs}.
\end{proof}

\begin{lemma} \label{lem:summarize_dynamics}
    Assume the conditions of Lemma~\ref{lem:1epoch_bound}. Let $\eta = \frac{c_\eta \log n}{n \Delta_{\min}}$ for $c_\eta \geq \frac{1}{2}$. If $c_\eta \log n \ge C_{\mathrm{bud}} \log^+ (\sqrt{2 c_\eta}\tau_0/\tau_*)$, where
    \begin{equation}\label{eq:C_bud}
        C_{\mathsf{bud}} =
        1 \! + \!\rbr[\Big]{\frac{1}{\tepsilon \log n + \log\log n} \!+\! \frac{1}{\log_+ (\sqrt{2 c_\eta} \, \tau_0/\tau_*)}}\rbr{\log 2 + \log^{-3} n }.
    \end{equation}
    Then:

    \textup{(i)} $\oN_{\oJ_*} \le n \cdot \frac{C_{\mathsf{bud}} \log_+ (\sqrt{2 c_\eta} \, \tau_0/\tau_*) }{c_\eta \log n} \leq n$.

    \textup{(ii)} $\tau_* \asymp \sqrt{\teta} \rho_\sigma \bnu$.
    % Let Assumption~\ref{assumption:X_truncation} hold with $n \ge 4$ and $\|\cT_0\| = \tau_0$. Let $\varphi$ be the envelope~\eqref{eqn:varphi_def}. Let 
    % \begin{align*}
    %     \eta = \frac{c_\eta \log n}{n \Delta_{\min}}, \qquad c_\eta \geq \frac{1}{2}.
    % \end{align*}
    % Assume:

    % \textup{(C1)} $\teta \kappa_* \in (0,1/2)$.

    % \textup{(C2)} $\max(\bar a\tau_0, \bar b) \le 1/(3n^{\tilde\epsilon}\log n)$ for some $\tilde\epsilon \ge 0$.

    % \textup{(C3)} $\teta C_{\psi,n}^2 \kappa_* r (1 + 2\tau_0\nu_2) \le 1/2$.

    % \textup{(C4)} $C_\psi^2\, \teta^{1/2} \kappa_* r (1 + \tau_0\nu_\infty) \leq \log^{-3/2} n$.

    % \textup{(C5)} $c_\eta \log n \ge C_{\mathrm{bud}} \log^+ (\sqrt{2 c_\eta}\tau_0/\tau_*)$, where
    % \begin{equation}%\label{eq:C_bud}
    %     C_{\mathsf{bud}} =
    %     1 \! + \!\rbr[\Big]{\frac{1}{\tepsilon \log n + \log\log n} \!+\! \frac{1}{\log_+ (\sqrt{2 c_\eta}\tau_0/\tau_*)}}\rbr{\log 2 + \log^{-3} n }.
    % \end{equation}
    % Then with probability $\ge 1 - n^{-8}$:

    % \textup{(i)} $\oJ_* \le \log_+(\frac{\tau_0}{\tau_*})/(\tepsilon\log n + \log\log n) + 1$.

    % \textup{(ii)} $\oN_{\oJ_*} \le n \cdot \frac{C_{\mathsf{bud}} \log_+ (\sqrt{2 c_\eta}\tau_0/\tau_*) }{c_\eta \log n} \leq n$.

    % \textup{(iii)} $\|\cT_n\| \le 2\tau_*  \lesssim \sqrt{\teta} \rho_\sigma\bnu$ for all $\oN_{\oJ_*}\leq k \leq n$

\end{lemma}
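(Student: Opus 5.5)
Part (ii) is a computation with the definitions in \eqref{eqn:varphi_definitions}, and part (i) is a counting argument on top of Lemma~\ref{lem:epochs}(c)--(d). I would do (ii) first, since (i) uses the size of $\tau_*$.

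\textbf{Part (ii).} We need to show $\teta \rho_B \nu_2 \lesssim \sqrt{\teta}\rho_\sigma \bnu$. Since $\nu_2 \le \bnu$, it suffices that $\sqrt{\teta}\,\rho_B/\rho_\sigma = \sqrt{\teta\, r}\,(\log n)^{3/2} \lesssim 1$. This follows from the third condition in \eqref{eqn:1epoch_conditions}:
\[
C_\psi^2\sqrt{\teta}\,\kappa_* r \le \log^{-3/2} n,
\]
together with $C_\psi \ge 1$, $\kappa_* \ge 1$ and $\sqrt r \le r$. Then $\tau_* = 3\bar c$ is sandwiched between $3\sqrt{\teta}\rho_\sigma\bnu$ and a constant multiple of it.

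\textbf{Part (i).} Write $L := \log_+(\tau_0/\tau_*)$. Lemma~\ref{lem:epochs}(d) gives
\[
\oN_{\oJ_*} \le \teta^{-1} L + \oJ_*\big[\teta^{-1}\log 2 + 1\big].
\]
By Lemma~\ref{lem:epochs}(c), $\oJ_* \le L/(\tepsilon\log n + \log\log n) + 1$. Use $\teta^{-1} = n/(c_\eta\log n)$, and absorb the additive $1$ per epoch as $1 \le \teta^{-1}\log^{-3} n$. This holds because $\teta = c_\eta \log n/n \le \log^{-3} n$ for large $n$; if needed it is also implied by the condition $\sqrt{\teta} \le \log^{-3/2} n$. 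This yields
\[
\oN_{\oJ_*} \le \frac{n}{c_\eta \log n}\Big[L + \Big(\frac{L}{\tepsilon\log n+\log\log n} + 1\Big)(\log 2 + \log^{-3} n)\Big].
\]

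Next replace $L$ by $L' := \log_+(\sqrt{2c_\eta}\,\tau_0/\tau_*)$. Since $c_\eta \ge \tfrac12$ we have $\sqrt{2c_\eta} \ge 1$, so $L \le L'$. Factoring out $L'$ gives
\[
L' + \Big(\frac{L'}{\tepsilon\log n+\log\log n} + 1\Big)(\log 2 + \log^{-3} n) = C_{\mathsf{bud}}\, L'.
\]
This is exactly \eqref{eq:C_bud}; this matching is what motivates the $\sqrt{2c_\eta}$ in the definition. The degenerate case $L' = 0$ needs separate care: then $\tau_0 \le \tau_*$, so $\oJ_* = 0$ and $\oN_{\oJ_*} = 0$.

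The final inequality $\le n$ is the hypothesis $c_\eta\log n \ge C_{\mathsf{bud}} L'$.

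\textbf{Main obstacle.} The step I expect to need the most care is the bookkeeping of the "$+1$" per epoch (the ceiling in $\on_j$). I would need to make sure it is genuinely dominated by $\teta^{-1}\log^{-3} n$ under the stated conditions; otherwise a mild extra condition on $\teta$ would be needed.
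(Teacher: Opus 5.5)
Your proposal is correct and follows essentially the same route as the paper. For (ii), you bound the ratio $\sqrt{\teta}\,\rho_B/\rho_\sigma=\sqrt{\teta r}\,(\log n)^{3/2}$ via the third condition in \eqref{eqn:1epoch_conditions}. For (i), you plug Lemma~\ref{lem:epochs}(c)--(d) into $n\teta=c_\eta\log n$, absorb the per-epoch $+1$ through $\teta\le\log^{-3}n$ (which the paper also derives from that same condition), and enlarge $\log_+(\tau_0/\tau_*)$ to $\log_+(\sqrt{2c_\eta}\,\tau_0/\tau_*)$ before recognizing $C_{\mathsf{bud}}$. Your separate handling of the degenerate case $\log_+(\sqrt{2c_\eta}\,\tau_0/\tau_*)=0$, where $C_{\mathsf{bud}}$ is formally undefined, is a small point the paper leaves implicit.
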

\begin{proof}
    See Appendix~\ref{sec:prf_summarize_dynamics}.
\end{proof}

\begin{lemma}[Convergence] \label{lem:convergence_rate_optimal}
Suppose Assumptions~\ref{assumption:X_truncation}--\ref{assumption:init} hold. Assume
\begin{align}
    \nu_\infty &\ge c_{\nu} n^{-C_\nu} \quad\text{for constants } C_\nu \ge 0, c_\nu >0 \label{eqn:cond_nu_lem}
\end{align}
Let $\varepsilon \in [0,1/2)$ and set 
\begin{align}
  \eta = c_\eta \frac{\log n}{n \Delta_{\min}}, \quad\text{where}\quad c_\eta\geq c_\eta^* := \rbr[\Big]{1  - \varepsilon + \frac{3 C_\nu}{2}} \rbr[\Big]{1 + \frac{\log^{-3} n + \log 2}{\varepsilon \log n + \log\log n}}. \label{eqn:cond_step_sz_lem}
\end{align}
If
\begin{align}
    n^{1-2\varepsilon} &\ge C_{\ref{lem:convergence_rate_optimal}} {c_\eta} {\kappa_*^2} r^3 (\log n)^4 \frac{\log(1/\delta_p)}{\delta_p^2} \max(1, \nu_\infty d) \label{eqn:cond_sample_sz_lem}
\end{align}
for some constant $C_{\ref{lem:convergence_rate_optimal}} > 0$ large enough (depending only on $C_\psi$, $C_\nu$, $c_\nu$ and $C_d$).
Then with probability at least $1- n^{-19}$,
\begin{align}
  \norm{\cT_k} \leq 2\tau_* \asymp  {\kappa_*} \bnu \sqrt{\teta r \log n} = \frac{\lambda_1}{\Delta_{\min}} \sqrt{c_\eta r} \, \bnu  \frac{\log n}{\sqrt{n}}, \qquad \text{for all } \floor{\tfrac{c_\eta^*}{c_\eta} n} \leq k \leq n
  \label{eqn:convergence_rate_Tn}
\end{align}
\end{lemma}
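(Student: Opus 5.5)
The proof reduces Lemma~\ref{lem:convergence_rate_optimal} to Lemma~\ref{lem:summarize_dynamics} together with Lemma~\ref{lem:epochs}. For that we check, with the parameters given in the statement, every hypothesis of Lemma~\ref{lem:1epoch_bound} and Lemma~\ref{lem:epochs}, and also the budget condition of Lemma~\ref{lem:summarize_dynamics}. We set $\tau_0=\|\cT_0\|\le C_{\mathsf{init}}\delta_p^{-1}\sqrt{rd\log(1/\delta_p)}$, which Assumption~\ref{assumption:init} provides, and $\teta=c_\eta\log n/n$. By Lemma~\ref{lem:summarize_dynamics}(ii), $\tau_*\asymp\sqrt{\teta}\rho_\sigma\bnu\asymp\kappa_*\bnu\sqrt{\teta r\log n}$; this is the rate claimed in \eqref{eqn:convergence_rate_Tn}. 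Lemma~\ref{lem:epochs}~II with $j=\oJ_*$ then gives $\|\cT_k\|\le 2\otau_{\oJ_*}=2\tau_*$ for all $k\in[\oN_{\oJ_*},n]$. The failure probability is $\oJ_*n^{-20}$, and $\oJ_*\lesssim\log n$ by Lemma~\ref{lem:epochs}(c), so the event holds with probability at least $1-n^{-19}$. It therefore suffices to show $\oN_{\oJ_*}\le\lfloor (c_\eta^*/c_\eta)n\rfloor$.

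\textbf{Step 1 (smallness conditions).} We verify $\max\{\bar a\otau_0,\bar b\}<(3n^{\varepsilon}\log n)^{-1}$, using $\tepsilon=\varepsilon$, and the three conditions in \eqref{eqn:1epoch_conditions}. The terms $\bar b\asymp\sqrt{\teta}\rho_\sigma+\teta\rho_B$ are at most about $\kappa_*\sqrt{c_\eta r}\,(\log n)/\sqrt n$, and this is below $n^{-\varepsilon}/\log n$ by \eqref{eqn:cond_sample_sz_lem}. The main term is $\bar a\otau_0\lesssim\sqrt{\teta}\rho_\sigma\nu_\infty\tau_0+\teta\rho_B\nu_2\tau_0$. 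Substituting the bound on $\tau_0$ gives
\[
\kappa_*\sqrt{c_\eta r}\,\tfrac{\log n}{\sqrt n}\,\nu_\infty\sqrt{rd}\,\delta_p^{-1}\sqrt{\log(1/\delta_p)} .
\]
Using $\nu_\infty\le1$, hence $\nu_\infty\le\sqrt{\nu_\infty}$, squaring, and comparing with \eqref{eqn:cond_sample_sz_lem} shows that this is at most $n^{-\varepsilon}/\log n$ times a small constant. This is exactly where the factor $\max(1,\nu_\infty d)$ and the power $r^3(\log n)^4$ come from. For the $\nu_2$ term I would use $\nu_2\le\nu_\infty\sqrt{d}$ together with the extra factor $\sqrt{\teta}$ that $\teta\rho_B$ carries relative to $\sqrt{\teta}\rho_\sigma$. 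The conditions in \eqref{eqn:1epoch_conditions} with $C_{\psi,n}^2\asymp\log n$ are dominated by the same computations.

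\textbf{Step 2 (budget).} We must show that $c_\eta\log n\ge C_{\mathsf{bud}}\log_+(\sqrt{2c_\eta}\tau_0/\tau_*)$, and more precisely that $C_{\mathsf{bud}}\log_+(\sqrt{2c_\eta}\tau_0/\tau_*)\le c_\eta^*\log n$. Lemma~\ref{lem:summarize_dynamics}(i) then gives $\oN_{\oJ_*}\le (c_\eta^*/c_\eta)n$. Up to integer-rounding slack, which has to be absorbed into the $\log^{-3}n$ term, this yields the range $k\ge\lfloor (c_\eta^*/c_\eta)n\rfloor$.

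We bound the ratio as follows:
\[
\frac{\tau_0}{\tau_*}\lesssim\frac{\delta_p^{-1}\sqrt{rd\log(1/\delta_p)}}{\kappa_*\bnu\sqrt{c_\eta r}\,\log n/\sqrt n}.
\]
Using $\bnu\ge\nu_\infty$, we need $\sqrt{d}\,\delta_p^{-1}\sqrt{\log(1/\delta_p)}/\nu_\infty\lesssim n^{1/2-\varepsilon}\nu_\infty^{-3/2}$. We write $\sqrt{d}/\nu_\infty=\sqrt{\nu_\infty d}\,\nu_\infty^{-3/2}$ and bound $\sqrt{\nu_\infty d}\,\delta_p^{-1}\sqrt{\log(1/\delta_p)}$ by $n^{1/2-\varepsilon}$ via \eqref{eqn:cond_sample_sz_lem}. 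With $\nu_\infty^{-3/2}\le c_\nu^{-3/2}n^{3C_\nu/2}$ from \eqref{eqn:cond_nu_lem}, this gives
\[
\log(\sqrt{2c_\eta}\tau_0/\tau_*)\le(1-\varepsilon+\tfrac{3C_\nu}{2})\log n+O(1).
\]
The additive constant is made negative, or absorbed, by taking $C_{\ref{lem:convergence_rate_optimal}}$ large. If the logarithm is nonpositive the budget condition is trivial.

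Finally, $C_{\mathsf{bud}}$ contains the term $1/\log_+(\sqrt{2c_\eta}\tau_0/\tau_*)$. Multiplied by $\log_+(\cdot)$, it contributes only $\log2+\log^{-3}n$. Matching this against the factor $1+(\log^{-3}n+\log2)/(\varepsilon\log n+\log\log n)$ in $c_\eta^*$ is the delicate bookkeeping step, and I expect it to be the main obstacle. The slack in Step 2 has to cover these additive constants. If the exact constant in $c_\eta^*$ does not close directly, I would enlarge $C_{\ref{lem:convergence_rate_optimal}}$ so that $\log(\tau_0/\tau_*)$ falls below $(1-\varepsilon+3C_\nu/2)\log n-\log 2-1$.
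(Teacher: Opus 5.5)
Your proposal is correct and follows the paper's proof essentially verbatim. You reduce to Lemma~\ref{lem:summarize_dynamics} and Lemma~\ref{lem:epochs}~II, verify the smallness conditions from the sample-size condition, and bound $\log_+(\sqrt{2c_\eta}\tau_0/\tau_*)$ via $\nu_\infty^{-3/2}\le c_\nu^{-3/2}n^{3C_\nu/2}$. Your bound on $\bar b$ omits the factors $1+\nu_\infty\bnu\lesssim\max(1,\sqrt{\nu_\infty d})$ in $b_\sigma$ and $1+\nu_2^2\le 1+\nu_\infty d$ in $b_B$; these are exactly what $\max(1,\nu_\infty d)$ in the sample-size condition absorbs, so the conclusion still holds. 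The paper closes the $C_{\mathsf{bud}}$ bookkeeping differently from you: it keeps the $(\log n)^{-3}$ factor in $\sqrt{2c_\eta}\tau_0/\tau_*$ and uses the resulting $-3\log\log n$ slack to absorb the additive $\log 2+\log^{-3}n$. Your alternative, enlarging the constant until $\log_+(\sqrt{2c_\eta}\tau_0/\tau_*)\le(1-\varepsilon+\tfrac{3C_\nu}{2})\log n-\log 2-1$, works equally well. No rounding slack is needed, since $\oN_{\oJ_*}\le(c_\eta^*/c_\eta)n$ with $\oN_{\oJ_*}$ an integer already gives $\oN_{\oJ_*}\le\lfloor (c_\eta^*/c_\eta)n\rfloor$.
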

\begin{proof}
    See Appendix~\ref{sec:prf_convergence_rate_optimal}.
\end{proof}

\begin{lemma}[Exact-rank contraction] \label{lem:rankr}
Suppose Assumption~\ref{assumption:X_truncation} holds, $\rank(\Sigma)=r$ and $\cT_0$ is well-defined. Fix $\varepsilon\in[0,1/2)$ and set
\[
    \eta = \frac{c_\eta\log n}{n\lambda_r}, \qquad \text{where} \quad c_\eta\ge 1/2.
\]
If 
\begin{equation}
  n^{1-2\varepsilon} \ge C_{\rm rk}c_\eta\kappa_\ast^2r^2(\log n)^4 \label{eqn:rankr_cond_sample_sz_lem}
\end{equation}
for some constant $C_{\rm rk}>0$ sufficiently large
(depending only on $C_\psi$), then, conditionally on
$\mathcal F_0=\sigma(U_0)$, with probability at least $1-n^{-19}$,
\begin{equation}
    \|\cT_k\| \le 16e \exp\left\{ -\left(1-\frac{1}{n^\varepsilon\log n}\right) k\teta \right\} \|\cT_0\|, \qquad k=0,1,\ldots,n. \label{eqn:rankr_contraction}
\end{equation}
In particular,
\[
    \|\cT_n\| \le 16e\, n^{-c_\eta(1-(n^\varepsilon\log n)^{-1})} \|\cT_0\|.
\]

If, in addition, Assumption~\ref{assumption:init} holds and
\begin{equation}
    n^{1-2\varepsilon} \ge C_{\mathrm{rk}} c_\eta\kappa_\ast^2r^2(\log n)^4 \frac{\log(1/\delta_p)}{\delta_p^2}, \label{eqn:rankr_cond_sample_sz_lem_further}
\end{equation}
then on the same event,
\begin{equation}
    \|\cT_k\| \lesssim \frac{\sqrt d\,n^{1/2-\varepsilon}} {\sqrt{c_\eta}\kappa_\ast\sqrt r(\log n)^2} \exp\left\{ -\left( 1-\frac{1}{n^\varepsilon\log n} \right)k\teta \right\}, \qquad k=0,1,\ldots,n. \label{eqn:rankr_contraction_futher}
\end{equation}
In particular,
\[
    \|\cT_n\|
    \lesssim
    \frac{\sqrt d}
    {\sqrt{c_\eta}\kappa_\ast\sqrt r(\log n)^2}
    n^{1/2-\varepsilon
    -c_\eta(1-(n^\varepsilon\log n)^{-1})}.
\]
\end{lemma}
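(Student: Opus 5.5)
The plan starts from the exact recursion in the exact-rank case. Since $Z_+=0$ almost surely, Lemma~\ref{lem:recurrence} gives $\cT_{k+1}=\cT_k(I_r+\eta Y_{k+1}Y_{k+1}^\top)^{-1}$. The denominator $1+\eta\|Y_+\|^2$ is positive, so the recursion is always well defined. Iterating, $\cT_k=\cT_0 P_k^{-1}$ with $P_k:=(I_r+\eta Y_1Y_1^\top)\cdots(I_r+\eta Y_kY_k^\top)$, read in the appropriate order. Two consequences follow. First, $\|\cT_k\|\le\|\cT_0\|\,\|P_k^{-1}\|$. Second, $\|\cT_k\|$ is nonincreasing, because each factor $(I_r+\eta YY^\top)^{-1}$ has norm at most one.

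It therefore suffices to show that $\|P_k^{-1}\|\le 16e\exp\{-(1-\tfrac{1}{n^\varepsilon\log n})k\teta\}$ uniformly in $k$, with probability $1-n^{-19}$. This is an $r\times r$ problem, so the dimension factor in Freedman is $2r$ rather than $d$. I will bound it in blocks of length $m\asymp\teta^{-1}n^{-2\varepsilon'}$ for a suitable exponent, chosen so that $\eta m\lambda_1$ is small but the block still contains many samples.

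Within one block, write $Q=(I+\eta Y_{j+1}Y_{j+1}^\top)\cdots(I+\eta Y_{j+m}Y_{j+m}^\top)$ and expand $Q^{-1}$ (or equivalently $Q^{-\top}Q^{-1}$) to first order:
\[
Q^{-1}=I-\eta\sum_{i}Y_iY_i^\top+E .
\]
The remainder satisfies $\|E\|\lesssim(\eta m C_{\psi,n}^2\lambda_{1\sim r})^2$ by Assumption~\ref{assumption:X_truncation}. Split the main sum as $\sum_i Y_iY_i^\top=m\Lambda_*+\sum_i(Y_iY_i^\top-\Lambda_*)$, and control the martingale part with matrix Freedman (or Bernstein, since the increments are independent). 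The increments are bounded by $C_{\psi,n}^2\lambda_{1\sim r}\le C_{\psi,n}^2 r\lambda_1$, and the variance is at most $m\lambda_1\lambda_{1\sim r}$.

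Since $\Lambda_*\succeq\lambda_r I$, this yields
\[
\|Q^{-1}\|\le 1-\eta m\lambda_r+\eta\sqrt{m r\lambda_1^2\log n}\,C+\ldots .
\]
Under the sample-size condition \eqref{eqn:rankr_cond_sample_sz_lem}, this is at most $\exp\{-(1-\tfrac{1}{n^\varepsilon\log n})m\teta\}$. To make this work, the ratio of the fluctuation to the drift, $\kappa_*\sqrt{r\log n/m}$, together with the quadratic term $\eta m\kappa_* r\lambda_r C_{\psi,n}^2$, must both be $\le(n^\varepsilon\log n)^{-1}$. Choosing $m$ to balance these two terms is exactly where the condition $n^{1-2\varepsilon}\gtrsim c_\eta\kappa_*^2r^2\log^4 n$ enters, and I expect this bookkeeping to be the main obstacle. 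A potential subtlety is the invertibility and ordering of the noncommuting product. Working with $Q^{-1}$ directly, via $\|(I+A)^{-1}\|$ for $A$ near $\eta m\Lambda_*$, should avoid this.

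Chaining the blocks by submultiplicativity gives the exponential rate at block endpoints. Between endpoints, monotonicity of $\|\cT_k\|$ means the decay is lost over at most one partial block, costing a factor $e^{m\teta}\le e$. The constant $16$ absorbs the expansion constants. A union bound over at most $n$ blocks gives probability $1-n^{-19}$; everything is conditional on $\cF_0$, since $\cT_0$ enters only as a fixed left factor. For the second statement, plug in Assumption~\ref{assumption:init}, $\|\cT_0\|\lesssim\delta_p^{-1}\sqrt{rd\log(1/\delta_p)}$, and use \eqref{eqn:rankr_cond_sample_sz_lem_further} to bound $\delta_p^{-1}\sqrt{\log(1/\delta_p)}\lesssim n^{1/2-\varepsilon}/(\sqrt{c_\eta}\kappa_* r(\log n)^2)$. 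This gives \eqref{eqn:rankr_contraction_futher}, and setting $k=n$, $n\teta=c_\eta\log n$, gives the final display.
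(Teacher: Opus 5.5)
Your reduction $\cT_k=\cT_0P_k^{-1}$, $\|\cT_k\|\le\|\cT_0\|\,\|P_k^{-1}\|$, is correct. It is a clean substitute for the paper's column-space compression, since it makes the martingale $r\times r$ from the start. The gap is in the block step. A first-order expansion of the block product over short blocks cannot give the rate loss $\delta:=(n^\varepsilon\log n)^{-1}$ under \eqref{eqn:rankr_cond_sample_sz_lem}, so your claim that balancing $m$ reproduces that condition is false.

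Write $x:=m\teta$. Making the Bernstein fluctuation a $\delta$-fraction of the drift, $\kappa_*\sqrt{r\log n/m}\le\delta$, forces $x\ge\kappa_*^2r\,\teta\log n/\delta^2$. Making $\|E\|\lesssim(\eta mC_{\psi,n}^2\lambda_{1\sim r})^2$ a $\delta$-fraction of the drift $\eta m\lambda_r$ forces $x\,C_{\psi,n}^4r^2\kappa_*^2\lesssim\delta$. Note that the ratio here is $\eta mC_{\psi,n}^4\lambda_{1\sim r}^2/\lambda_r$, not $\eta m\kappa_*r\lambda_rC_{\psi,n}^2$ as you wrote. Combining the two requires $\teta\lesssim\delta^3/(C_\psi^4\kappa_*^4r^3\log^3n)$, that is, $n^{1-3\varepsilon}\gtrsim c_\eta\kappa_*^4r^3(\log n)^7$. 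This is strictly stronger than the hypothesis for every $\varepsilon$ and impossible for $\varepsilon\ge1/3$.

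The loss is structural, not a matter of constants. $E$ contains the deterministic second-order term $\approx\tfrac12\eta^2m^2\Lambda_*^2$. Any bound of the form $\|Q^{-1}\|\le\|I-\eta\sum_iY_iY_i^\top\|+\|E\|$ therefore pays $\tfrac12(\eta m\lambda_1)^2$ against a gain of only $\eta m\lambda_r$. This forces $x\kappa_*^2\lesssim\delta$, and again only a $\delta^3$-type condition can close the argument.

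The missing idea is to keep the deterministic contraction exact, linearize only one step at a time, and use blocks of length $\asymp1/\teta$. Set $K_k:=Y_kY_k^\top/(1+\eta\|Y_k\|^2)$, so that $W_k:=P_k^{-1}$ satisfies $W_k=W_{k-1}(I_r-\eta K_k)$. Unrolling gives
\[
W_{s+t}=W_s(I_r-\eta\Lambda_*)^t-\eta\sum_{i=s+1}^{s+t}W_{i-1}\bigl[(K_i-\EE K_i)+(\EE K_i-\Lambda_*)\bigr](I_r-\eta\Lambda_*)^{s+t-i}.
\]
Use $\|W_{i-1}\|\le\|W_s\|$ and apply matrix Freedman for each $t$, then a union bound over $t$. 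This bounds the martingale part by $\lesssim(\kappa_*\sqrt{r\teta\log n}+C_{\psi,n}^2\teta\kappa_*r\log n)\|W_s\|$. The drift part is $\lesssim C_\psi^4\teta\kappa_*^2r\|W_s\|$, because $\|\EE K_i-\Lambda_*\|\le\eta\|\EE\|Y\|^2YY^\top\|$ is a moment bound rather than a block-level almost-sure square. Hence
\[
\|W_{s+t}\|\le[(1-\teta)^t+b]\,\|W_s\|,\qquad b\lesssim(\sqrt{C_{\rm rk}}\,n^\varepsilon\log n)^{-1},
\]
exactly under \eqref{eqn:rankr_cond_sample_sz_lem}.

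Now take $m_\eta=\lceil\log4/\teta\rceil$ and $\rho:=(1-\teta)^{m_\eta}+b\le\tfrac12$. Then $-\log\rho/(m_\eta\teta)\ge1-\tfrac{16e}{\log4}\,b$. The rate loss is linear in $b$, so only $\teta\lesssim\delta^2/\mathrm{poly}(\kappa_*,r,\log n)$ is needed. This is the paper's Steps 1--3 with $\cT_0$ replaced by $I_r$. The constant $16e$ is $\rho^{-1}\le(1-\teta)^{-m_\eta}$, coming from the last partial block; it is not an expansion constant.

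The rest of your plan then goes through as written: the union bound over blocks, conditioning on $\cF_0$, and plugging in Assumption~\ref{assumption:init}.
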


\begin{proof}
  See Appendix~\ref{sec:prf_rankr}.
\end{proof}

\begin{lemma}[Linearization] \label{lem:linearization}
    Under the assumptions of \cref{lem:convergence_rate_optimal} with
    $c_\eta \geq c_\eta^* + \frac{1}{2}$, with probability exceeding $1-n^{-18}$,
    \begin{align*}
        \norm[\big]{\cT_n - \eta \sum_{i=1}^n (\cI-\eta \cL)^{n-i}(Z_iY_i^\top)} \lesssim  \tau_* \sbr{C_{\ref{lem:1epoch_bound}} C_\psi^2 \kappa_* \sqrt{r \teta \log n} + \tau_*  } \lesssim \frac{\tau_*}{n^{\varepsilon}\log n}.
    \end{align*}
\end{lemma}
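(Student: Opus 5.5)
Unrolling the one-step decomposition \eqref{eqn:linearization} from $k=0$ to $n$ and using the split \eqref{eqn:D_decomposition} gives the exact identity
\begin{align*}
  \cT_n = (\cI-\eta\cL)^n\cT_0 + \sum_{i=1}^n (\cI-\eta\cL)^{n-i}\big(D_i^{(1)}+D_i^{(2)}+R_i\big).
\end{align*}
Since $\EE'[F_+]=-\cL\cT$, expanding $F_+$ gives
\begin{align*}
D_i^{(1)} = \eta Z_iY_i^\top + \eta\big(-\cT_{i-1}Y_iY_i^\top + Z_iZ_i^\top\cT_{i-1} - \cT_{i-1}Y_iZ_i^\top\cT_{i-1} + \cL\cT_{i-1}\big).
\end{align*}
I will call the second piece $\eta G_i$; it is a martingale difference, linear or quadratic in $\cT_{i-1}$. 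Hence $\cT_n - \eta\sum_i(\cI-\eta\cL)^{n-i}Z_iY_i^\top$ splits into four terms: (a) the initial term $(\cI-\eta\cL)^n\cT_0$; (b) the martingale $\eta\sum_i(\cI-\eta\cL)^{n-i}G_i$; (c) the higher-order martingale $\sum_i(\cI-\eta\cL)^{n-i}D_i^{(2)}$; and (d) the drift sum $\cR_n$. I will work on the event of Lemma~\ref{lem:convergence_rate_optimal}, which has probability at least $1-n^{-19}$. On this event $\|\cT_k\|\le 2\tau_*$ for $k\ge k_0:=\lfloor (c_\eta^*/c_\eta)n\rfloor$, and Lemma~\ref{lem:epochs}II gives $\|\cT_k\|\le 2\ttau_0$ for all $k$.

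To exploit the burn-in, I split each sum at $k_0$. For $i\le k_0$, the factor $(\cI-\eta\cL)^{n-i}$ has norm at most $(1-\teta)^{n-k_0}\le \exp(-\teta(1-c_\eta^*/c_\eta)n)=n^{-(c_\eta-c_\eta^*)}\le n^{-1/2}$, because $c_\eta\ge c_\eta^*+1/2$. The cleanest route is to write $\cT_n=(\cI-\eta\cL)^{n-k_0}\cT_{k_0}+\sum_{i>k_0}(\cdots)$. The first piece is then at most $n^{-1/2}\cdot 2\tau_*$, and the pre-burn-in part of $\eta\sum Z_iY_i^\top$ gets the same damping factor times a Freedman bound of order $\tau_*$. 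Both are far below $\tau_*/(n^\varepsilon\log n)$, since $\varepsilon<1/2$.

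For the post-burn-in sums I apply matrix Freedman with the one-step bounds of Lemma~\ref{lem:one_step_bound}, evaluated at $\tau=2\tau_*$. To keep the predictable variation deterministic, I will use the standard stopping argument: replace $\cT_{i-1}$ by $\cT_{i-1}\mathbf 1\{\|\cT_{i-1}\|\le 2\tau_*\}$, which agrees with the original process on the good event. For (b), each term $G_i$ carries a factor $\|\cT_{i-1}\|$, so repeating the computation of Lemma~\ref{lem:one_step_bound}(I) without the $\nu$-only part gives conditional variance scale $\teta\kappa_*\sqrt r\,\tau(1+\tau\nu_\infty)$ and a.s.\ scale $C_{\psi,n}^2\teta\kappa_* r\tau(1+\tau\nu_2)$. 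Summing the squared variances against $(1-\teta)^{2(n-i)}$ gives total variance of order $\teta\kappa_*^2 r\tau_*^2$. With Assumption~\ref{assumption:d} absorbing $\log d$, Freedman then yields $\lesssim C_\psi^2\kappa_*\sqrt{r\teta\log n}\,\tau_*$, plus an a.s.\ part of order $\teta\rho_B\tau_*$, which is smaller under the conditions. Terms (c) and (d) carry an extra $\teta\kappa_* r$. Using $\sum(1-\teta)^{n-i}\le\teta^{-1}$, the drift bound of Lemma~\ref{lem:one_step_bound}(III) gives $\|\cR_n\|\lesssim\teta\kappa_*^2r^{3/2}(\tau_*+\nu_\infty)$. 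Here I must check that $\teta\kappa_*^2 r^{3/2}\nu_\infty\lesssim\tau_*^2$ or $\lesssim\tau_*\kappa_*\sqrt{r\teta\log n}$. Since $\tau_*\asymp\kappa_*\bnu\sqrt{\teta r\log n}$, the product $\tau_*\cdot\kappa_*\sqrt{r\teta\log n}$ equals $\kappa_*^2\teta r\bnu\sqrt{\log n}$, which dominates $\teta\kappa_*^2r^{3/2}\nu_\infty$ only up to a $\sqrt r$ factor. If that fails, I fall back on the $\tau_*^2$ term in the stated bound, or else refine the drift bound; the $\bnu$-part of $R_+$ is really $O(\teta^2\kappa_*^2 r\,\nu)$ when computed carefully.

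I expect this last bookkeeping step to be the main obstacle: showing that the centered and drift remainders, especially those not proportional to $\|\cT\|$, are of order $\tau_*\cdot\kappa_*\sqrt{r\teta\log n}+\tau_*^2$ rather than of order $\tau_*$. Once the bound $\tau_*[\kappa_*\sqrt{r\teta\log n}+\tau_*]$ is established, the final inequality $\lesssim\tau_*/(n^\varepsilon\log n)$ follows from \eqref{eqn:cond_sample_sz_lem}. That condition gives $\kappa_*\sqrt{r\teta}\log^{3/2}n\lesssim n^{-\varepsilon}$, and because $\bnu\le 1+\sqrt{\nu_\infty d}$, also $\tau_*\lesssim n^{-\varepsilon}/\log n$. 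A union bound over the Freedman events and the convergence event then gives probability at least $1-n^{-18}$.
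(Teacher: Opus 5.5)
Your argument follows the paper's own proof. You split at $N_*=\lfloor c_\eta^*n/c_\eta\rfloor$, damp both $(\cI-\eta\cL)^{n-N_*}\cT_{N_*}$ and the pre-burn-in part of $\eta\sum_i(\cI-\eta\cL)^{n-i}Z_iY_i^\top$ by $n^{-1/2}$, and control the post-burn-in fluctuations by stopped matrix Freedman; the paper merely splits your $G_i$ into $S_1,S_2,S_3$.

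\textbf{Bookkeeping in (b).} Your scales for $G_i$ omit two contributions of $Z_iZ_i^\top\cT_{i-1}$:
\begin{itemize}
\item a per-step predictable-variation term of order $\teta\kappa_*\sqrt r\,\tau\,\nu_\infty\nu_2$;
\item an almost-sure term $C_{\psi,n}^2\teta\kappa_* r\,\nu_2^2\,\tau$.
\end{itemize}
Since $\nu_2$ can be much larger than $1$, (b) is \emph{not} $\lesssim\kappa_*\sqrt{r\teta\log n}\,\tau_*$. As in the paper's bound on $S_2$, both pieces belong in the $\tau_*^2$ term:
\begin{itemize}
\item the variance piece, via $\nu_\infty\nu_2\le\bnu$ and $\tau_*\gtrsim\kappa_*\sqrt{r\teta\log n}\,\bnu$;
\item the almost-sure piece, via $\tau_*\gtrsim\kappa_*\sqrt{r\teta\log n}\,\nu_2$ together with $\sqrt{r\teta}\,\nu_2(\log n)^{3/2}\lesssim 1$. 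The last inequality follows from \eqref{eqn:cond_sample_sz_lem} and $\nu_2\le\sqrt{d\nu_\infty}$.
\end{itemize}

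\textbf{The drift term.} The mismatch you found is real. Note that $\tau_*\cdot\kappa_*\sqrt{r\teta\log n}\asymp\kappa_*^2\teta r\bnu\log n$, not $\sqrt{\log n}$, so the deficit is $\sqrt r/\log n$.

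Your first fallback fails. Getting $\teta\kappa_*^2 r^{3/2}\nu_\infty\lesssim\tau_*^2$ would require $\sqrt r\,\nu_\infty\lesssim\bnu^2\log n$. This is false whenever $\sqrt r\gg\log n$ and $\bnu$ is small, which is exactly the small-tail regime the lemma is meant to cover.

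Your second fallback is the right one, and it is a short fix. The loss comes from using $\|\EE'W\|\le\|\EE'WW^\top\|^{1/2}$ in Lemma~\ref{lem:one_step_bound}(III), which pays $\|V_2\|\sim\sqrt{r\lambda_1}$ instead of $|V_2^\top b|\sim\sqrt{\lambda_1}$. Bound bilinear forms instead: for unit $a,b$, Hölder gives
\[
|a^\top R_+ b|\le 2\eta^2\,\|\zeta_+\|_{L^2}\,\|a^\top V_1\|_{L^4}\,\|V_2^\top b\|_{L^4}\lesssim C_\psi^4\,\teta^2\kappa_*^2\, r\,(1+\tau\nu_\infty)^2(\tau+\nu_\infty),
\]
using \eqref{eqn:one_step_zeta_4} and \eqref{eqn:one_step_V1V2_sub_gaussian}. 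The post-burn-in drift sum is then $\lesssim C_\psi^4\teta\kappa_*^2 r(\tau_*+\nu_\infty)$. Using $\nu_\infty\le\bnu$ and \eqref{eqn:cond_sample_sz_lem}, this is $\lesssim\tau_*\kappa_*\sqrt{r\teta\log n}$.

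The paper does not do this. It takes $\alpha_5\lesssim\tau_*\,C_\psi^2\sqrt{\teta}\,\kappa_* r$ from the variance-domination step of Claim~\ref{claim:multi_step_concentration}, then sends it directly to $\tau_*/(n^{\varepsilon}\log n)$ via \eqref{eqn:cond_sample_sz_lem}. So if you only need the final inequality, use that shortcut rather than routing everything through the intermediate bound. If you want the intermediate bound exactly as displayed, your refinement is what actually delivers it for this term.
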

\begin{proof}
    See Appendix~\ref{sec:prf_linearization}.
\end{proof}

\subsection{Proof of Lemma~\ref{lem:recurrence}}
\label{sec:prf_recurrence}

Since $U_+=\qr{\tU_+}$, the ratio $\cT_+=(U_{*,\perp}^\top U_+) (U_*^\top U_+)^{-1}$ is invariant under the right factor, hence $\cT_+=\tcS_+ \tcC_+^{-1}$ where $\tcC_+ = U_*^\top \tU_+$ and $\tcS_+ = U_{*,\perp}^\top \tU_+$. 

Since $X_+=U_*Y_+ + U_{*,\perp} Z_+$ and $U=U_* \cC + U_{*,\perp} \cS$, we have
\begin{align}
    X_+^\top U = Y_+^\top \cC + Z_+^\top \cS = \underbrace{(Y_+^\top + Z_+^\top \cT)}_{=:v^\top \in \RR^{1 \times r}} \,\cC \label{eqn:x_plus_u}
\end{align}
Substituting (\ref{eqn:x_plus_u}) into $\tU_+ =  (I_d + \eta X_+ X_+^\top) U$, we can obtain
\begin{align*}
    \tcC_+ &= \cC + \eta Y_+ X_+^\top U  = \sbr{I_r + \eta Y_+ v^\top} \cC\\
    \tcS_+ &= \cS + \eta Z_+ X_+^\top U = \sbr{\cT + \eta Z_+ v^\top} \cC
\end{align*}
Hence
\begin{align*}
    \cT_+ = \tcS_+ \tcC_+^{-1} = \sbr{\cT + \eta Z_+ v^\top} \sbr{I_r + \eta Y_+ v^\top}^{-1}
\end{align*}
Note that the inverse can be computed by Sherman-Morrison formula. Specifically, if $1 + \eta v^\top Y_+  \neq 0$, then
\begin{align*}
    \cT_+ = \sbr{\cT + \eta Z_+ v^\top} \sbr[\Big]{I_r - \eta \frac{Y_+ v^\top}{1 + \eta v^\top Y_+}}
\end{align*}
Expanding the product gives
\begin{align*}
    \cT_+
    &= \cT - \eta \frac{\cT Y_+ v^\top}{1+\eta v^\top Y_+} + \eta Z_+ v^\top - \eta^2\frac{Z_+ v^\top (v^\top Y_+)}{1+\eta v^\top Y_+}\\
    &= \cT - \eta \frac{\cT Y_+ v^\top}{1+\eta v^\top Y_+} + \eta Z_+ v^\top \sbr{1- \frac{\eta  v^\top Y_+}{1+\eta v^\top Y_+}}\\
    &= \cT - \eta \frac{\cT Y_+ v^\top}{1+\eta v^\top Y_+} + \eta \frac{Z_+ v^\top}{1+\eta v^\top Y_+}\\
    &= \cT + \eta \frac{(-\cT Y_+ + Z_+)v^\top}{1+\eta v^\top Y_+}
\end{align*}

Finally, plugging $v^\top$ from~(\ref{eqn:x_plus_u}) back in gives the desired recursion formula.

\subsection{Proof of Lemma~\ref{lem:one_step_bound}}
\label{sec:prf_one_step_bound}

Recall~\eqref{eq:dT_split} that
\begin{align*}
    \cT_+ - \cT = \eta F_+ - \eta^2 \frac{\zeta_+ F_+}{1+\eta\zeta_+},\quad \text{where} \quad &F_+ = (-\cT Y_+ + Z_+)(Y_+^\top + Z_+^\top \cT) \in \RR^{(d-r)\times r}\\
    \text{and} \quad &\zeta_+ = (Y_+^\top + Z_+^\top \cT)Y_+ \in \mathbb{R},
\end{align*}
Define $V_1 = (-\cT Y_+ + Z_+) \in \RR^{d-r}$ and $V_2 = (Y_+ + \cT^\top Z_+) \in \RR^r$, then
\begin{align*}
    F_+ = V_1 V_2^\top, \qquad \zeta_+ = V_2^\top Y_+.
\end{align*}

Recall conditional drift $B_+$ and noise $D_+$ from~\eqref{eqn:drift_split}
\begin{align*}
    B_+ &= E'[\cT_+ - \cT] = -\eta \cL \cT + \underbrace{\eta^2 \, \EE'\!\sbr{-\frac{\zeta_+ F_+}{1+\eta \zeta_+}}}_{=:R_+},\\
    D_+ &= (\cT_+ - \cT) - E'[\cT_+ - \cT] = \underbrace{\eta\rbr{F_+ - \EE'[F_+]}}_{=:\,D_+^{(1)}} + \underbrace{\eta^2\! \rbr{ - \frac{\zeta_+\,F_+}{1+\eta\,\zeta_+}  + \EE'\!\sbr{\frac{\zeta_+\,F_+}{1+\eta\,\zeta_+}}}}_{=:\,D_+^{(2)}}.
\end{align*}
 
Recall $\nu_2 = \lambda_{r+1\sim d}^{1/2}/\lambda_{1\sim r}^{1/2}$ and $\nu_\infty = \lambda_{r+1}^{1/2}/\lambda_1^{1/2}$.

Assumption~\ref{assumption:X_truncation} yields
\begin{align}
    \norm{Y_+}_2^2 \leq C_{\psi,n}^2 \lambda_{1\sim r}, \qquad \norm{Z_+}_2^2 \leq C_{\psi,n}^2\lambda_{r+1\sim d}. \label{eqn:one_step_Y_Z_bound}
\end{align}
which implies
\begin{equation}
    \begin{aligned}
        \norm{V_1}_2 &= \norm{-\cT Y_+ + Z_+}_2 \leq \norm{\cT}_2\norm{Y_+}_2 + \norm{Z_+}_2 \leq C_{\psi,n}\lambda_{1\sim r}^{1/2}(\tau + \nu_2),\\
        \norm{V_2}_2 &= \norm{Y_+ + \cT^\top Z_+}_2 \leq \norm{Y_+}_2 + \norm{\cT}_2\norm{Z_+}_2 \leq  C_{\psi,n} \lambda_{1\sim r}^{1/2}(1 + \tau\nu_2).
    \end{aligned}
    \label{eqn:one_step_V1V2_bound}
\end{equation}

Since $\abs{\zeta_+} = |V_2^\top Y_+| \leq \norm{V_2}_2\norm{Y_+}_2 \leq C_{\psi,n}^2 \lambda_{1\sim r}(1+\tau\nu_2)$, the hypothesis $\eta C_{\psi,n}^2 \lambda_{1\sim r}(1+\tau\nu_2) \leq 1/2$ gives
\begin{align}
    |\eta\zeta_+| \leq \frac{1}{2}, \qquad \text{hence} \quad |1 + \eta\zeta_+| \geq \frac{1}{2}. \label{eqn:one_step_denominator}
\end{align}
 
For later use, we also record the sub-Gaussian norms. Since $Y_+ = \Lambda_*^{1/2}\widetilde{Y}_+$ and $Z_+ = \Lambda_{*,\perp}^{1/2}\widetilde{Z}_+$ with $\|(\widetilde{Y}_+^\top, \widetilde{Z}_+^\top)^\top\|_{\psi_2} \leq C_\psi$, we have $\norm{Y_+}_{\psi_2} \leq \lambda_1^{1/2}C_{\psi}$ and $\norm{Z_+}_{\psi_2} \leq \lambda_{r+1}^{1/2}C_{\psi}$, and thus
\begin{equation}
    \begin{aligned}
        \|V_1\|_{\psi_2} &\leq \norm{\cT} \norm{Y_+}_{\psi_2} + \norm{Z_+}_{\psi_2} \leq C_\psi\lambda_1^{1/2}(\tau + \nu_\infty),\\
        \|V_2\|_{\psi_2} &\leq \norm{Y_+}_{\psi_2} + \norm{\cT} \norm{Z_+}_{\psi_2} \leq C_\psi\lambda_1^{1/2}(1 + \tau\nu_\infty).
    \end{aligned}
    \label{eqn:one_step_V1V2_sub_gaussian}
\end{equation}

\paragraph{Preliminary $[\EE'\zeta^4]^{1/4}$:} Since $\zeta_+ = \norm{Y_+}^2 + Z_+^\top \cT Y_+$, triangle inequality of $L_4$ norm gives
\begin{align*}
    \sbr{E'\zeta_+^4}^{1/4} &\leq \sbr[\big]{E'\norm{Y_+}^8}^{1/4} +  \sbr{E' (Z_+^\top \cT Y_+)^4}^{1/4}\lesssim C_\psi^2 \lambda_{1 \sim r} + \sbr{E' (Z_+^\top \cT Y_+)^4}^{1/4}
\end{align*}
where the last inequality follows from Lemma~\ref{lem:kth_moment}. For the second term, Cauchy--Schwarz gives
\begin{align*}
    \sbr{\EE' (Z_+^\top \cT Y_+)^4}^{1/4} &\leq \sbr[\big]{\EE' \norm{\cT^\top Z_+}^8}^{1/8} \cdot \sbr[\big]{\EE' \norm{Y_+}^8}^{1/8}\\
    &\lesssim C_{\psi} \sbr{\tr (\cT^\top \Lambda_{*,\perp} \cT)}^{1/2} \cdot C_\psi \sbr{\tr \Lambda_*}^{1/2}\\
    &\overset{(i)}{\lesssim} C_\psi \lambda_{r+1}^{1/2} \sqrt{r} \tau \cdot C_\psi \lambda_{1 \sim r}^{1/2}\\
    &\leq C_\psi^2 r \lambda_1  \tau \nu_\infty
\end{align*}
where (i) follows from Lemma~\ref{lem:kth_moment}. Therefore, we have
\begin{align}
    [\EE' \zeta_+^4]^{1/4} \lesssim C_\psi^2 r\lambda_1 (1 + \tau \nu_\infty). \label{eqn:one_step_zeta_4}
\end{align}

\paragraph{(I) Noise $D_+$.} Recall definition
\begin{align*}
    D_+ &= (\cT_+ - \cT) - E'[\cT_+ - \cT], \quad \text{where}\quad \cT_+ - \cT = \eta \frac{F_+}{1+\eta\zeta_+}, \quad F_+ = V_1 V_2^\top.
\end{align*}

\textbf{Deterministic noise bound:} Equation~\eqref{eqn:one_step_V1V2_bound} and~\eqref{eqn:one_step_denominator} give
\begin{align*}
    \|\cT_+ - \cT\| = \eta\|F_+\|/|1+\eta\zeta_+| \leq 2\eta\|F_+\| = 2\eta\|V_1\|\|V_2\| 
    \leq 2C_{\psi,n}^2 \eta \lambda_{1\sim r} (\tau+\nu_2)(1+\tau\nu_2).
\end{align*}
Since $\|D_+\| = \|(\cT_+ - \cT) - E'[\cT_+ - \cT]\| \leq \|\cT_+ - \cT\| + \|\EE'[\cT_+ - \cT]\|$, we obtain
\begin{align*}
    \|D_+\| \leq 4 C_{\psi,n}^2 \eta\lambda_{1\sim r}(1+\tau\nu_2) (\tau+\nu_2) \leq 4 C_{\psi,n}^2 \teta \kappa_* r (1 + \tau \nu_2) (\tau + \nu_2).
\end{align*}

\textbf{Variance bound for $D_+$:} By definition,
\begin{align*}
    D_+ = (\cT_+ - \cT) - \EE' (\cT_+ - \cT), \quad \text{with} \quad \cT_+ - \cT = \eta \frac{F_+}{1+\eta \zeta_+}, \quad F_+ = V_1 V_2^\top.
\end{align*}
The second-moment domination of the variance and \eqref{eqn:one_step_denominator} give
\begin{align*}
    \EE' D_+ D_+^\top &\preceq \EE' (\cT_+ - \cT)(\cT_+ - \cT)^\top \preceq 4\eta^2 \EE' F_+ F_+^\top\\
    \EE' D_+^\top D_+ &\preceq \EE' (\cT_+ - \cT)^\top (\cT_+ - \cT) \preceq 4\eta^2 \EE' F_+^\top F_+
\end{align*}

\textit{Row quadratic variation $\|\EE' D_+D_+^\top\|^{1/2}$.} Since $F_+ F_+^\top =  V_1 V_1^\top \norm{V_2}^2 $, for any unit vector $u \in \RR^{d-r}$, Cauchy-Schwarz inequality yields
\begin{align*}
    \sbr{u^\top E'(F_+F_+^\top)u}^{1/2} = \sbr{E'\rbr{(u^\top V_1)^2 \|V_2\|^2}}^{1/2} &\leq  \sbr{E'(u^\top V_1)^4}^{1/4} \sbr{E'\|V_2\|^4}^{1/4} \\
    &\lesssim C_\psi \lambda_1^{1/2}(\tau+ \nu_\infty) \underbrace{\sbr{E'\|V_2\|^4}^{1/4}}_{=:\tv_{2,4}}
\end{align*}
where the last inequality follows from \eqref{eqn:one_step_V1V2_sub_gaussian}. To bound $\tv_{2,4}$, note that $V_2=Y_++\cT^\top Z_+$ is a linear transformation of $\tX_+$, hence \cref{lem:kth_moment} implies that $\tv_{2,4} \lesssim \sbr{\tr \cov'(V_2)}^{1/2} \cdot \|\tX_+\|_{\psi_2}$ where $\cov'$ denotes the covariance conditional on previous filtration (similar to $\EE'$). Using inequality $\cov'(A+B)\preceq 2\cov'(A) + 2\cov'(B)$ and $(a+b)^{1/2}\leq a^{1/2} + b^{1/2}$ for $a,b \geq 0$ then gives
\begin{align*}
    \sbr{\tr \cov'(V_2)}^{1/2} &\lesssim \sbr{\tr \Lambda_* + \tr \cT^\top \Lambda_{*,\perp} \cT}^{1/2} = \sbr[\Big]{\lambda_{1\sim r} + \Fnorm[\big]{\Lambda_{*,\perp}^{1/2} \cT}^2}^{1/2}\\ 
    &\leq \sbr[\Big]{r\lambda_1 + \norm[\big]{\Lambda_{*,\perp}^{1/2}}^2 \Fnorm{\cT}^2}^{1/2} \leq \sbr{r\lambda_1 + \lambda_{r+1} \cdot r\tau^2}^{1/2}\\
    &\leq (r\lambda_1)^{1/2} \sbr{1+ \nu_\infty^2 \tau^2}^{1/2} \leq (r\lambda_1)^{1/2} (1+ \nu_\infty \tau)
\end{align*}
Thus
\begin{align}
    \tv_{2,4} \lesssim C_{\psi} r^{1/2}\lambda_{1}^{1/2} \rbr{1 + \tau \nu_\infty}. \label{eqn:one_step_v24}
\end{align}
which implies
\begin{align*}
    \|\EE' F_+F_+^\top\|^{1/2} \lesssim C_\psi^2 r^{1/2} \lambda_{1}(\tau + \nu_\infty) \rbr{1 + \tau \nu_\infty} 
\end{align*}
and
\begin{align}
    \|\EE' D_+ D_+^\top\|^{1/2} \lesssim C_\psi^2 \teta \kappa_* \sqrt{r} (1 + \tau\nu_\infty) (\tau + \nu_\infty). \label{eqn:one_step_D_row}
\end{align}

\textit{Column quadratic variation $\|\EE' D_+^\top D_+\|^{1/2}$.} Since $F_+^\top F_+ = \norm{V_1}^2 V_2 V_2^\top$. For any unit vector $v \in \RR^r$, we have
\begin{align*}
    \sbr{v^\top E'(F_+^\top F_+)v}^{1/2} &\leq  \sbr{E'(v^\top V_2)^4}^{1/4} \sbr{E'\|V_1\|^4}^{1/4}\lesssim C_{\psi}\lambda_1^{1/2}(1 + \tau\nu_\infty) \underbrace{\sbr{E'\|V_1\|^4}^{1/4}}_{=:\tv_{1,4}}
\end{align*}
where the last inequality follows from \eqref{eqn:one_step_V1V2_sub_gaussian}.  Recall $V_1=-\cT Y_+ + Z_+$, similar arguments as row quadratic moment give $\tv_{1,4} \lesssim C_\psi \sbr{\tr \cov'(V_1)}^{1/2}$ with
\begin{align*}
    \sbr{\tr \cov'(V_1)}^{1/2} &\lesssim \sbr{\tr \cT \Lambda_* \cT^\top + \tr \Lambda_{*,\perp}}^{1/2} 
    = \sbr[\Big]{\Fnorm[\big]{\Lambda_*^{1/2} \cT^\top}^2 + \tr \Lambda_{*,\perp}}^{1/2}\\
    &\leq \sbr[\Big]{ \Fnorm[\big]{\Lambda_*^{1/2}}^2 \norm{\cT}^2 + \lambda_{r+1\sim d}}^{1/2} \leq \sbr{\lambda_{1 \sim r} \tau^2 + \lambda_{r+1\sim d} }^{1/2}\\
    &\leq \lambda_{1\sim r}^{1/2} (\tau^2 + \nu_2^2)^{1/2} \leq \lambda_{1\sim r}^{1/2} (\tau + \nu_2)
\end{align*}
which implies
\begin{align*}
    \tv_{1,4} \lesssim C_\psi \lambda_{1\sim r}^{1/2} (\tau + \nu_2).
\end{align*}
Thus
\begin{align*}
    \|\EE' F_+^\top F_+\|^{1/2} \lesssim C_\psi^2 \lambda_1^{1/2} \lambda_{1\sim r}^{1/2} (1 + \tau\nu_\infty) (\tau + \nu_2) \leq C_\psi^2 \lambda_1 r^{1/2} (1 + \tau\nu_\infty) (\tau + \nu_2)  
\end{align*}
and
\begin{align}
    \|\EE' D_+^\top D_+\|^{1/2} \lesssim C_\psi^2 \teta \kappa_* \sqrt{r} (1 + \tau\nu_\infty) (\tau + \nu_2). \label{eqn:one_step_D_col}
\end{align}
Combining \eqref{eqn:one_step_D_row} and \eqref{eqn:one_step_D_col} gives the desired variance bound for $D_+$ with $\bnu = \max(\nu_\infty, \nu_2)$.

\paragraph{(II) Higher-order noise $D_+^{(2)}$.} Recall definition
\begin{align*}
    D_+^{(2)} = -\eta^2\! \rbr{ \frac{\zeta_+\,F_+}{1+\eta\,\zeta_+}  - \EE'\!\sbr{\frac{\zeta_+\,F_+}{1+\eta\,\zeta_+}}}, \quad \text{where} \quad F_+ = V_1 V_2^\top, \quad \zeta_+ = V_2^\top Y_+.
\end{align*}

\textbf{Deterministic bound:} Equation~\eqref{eqn:one_step_V1V2_bound} and~\eqref{eqn:one_step_denominator} give
\begin{align*}
    \norm{\frac{\eta^2 \zeta_+\,F_+}{1+\eta\,\zeta_+}} &\leq 2 \eta^2 \abs{\zeta_+} \norm{F_+} \leq 2\eta^2 \abs{\zeta_+} \norm{V_1}\norm{V_2} \\
    &\leq 2\eta^2 C_{\psi,n}^2 \; \lambda_{1\sim r} \rbr{1+\tau \nu_2} \cdot C_{\psi,n}^2 \; \lambda_{1\sim r} (\tau+\nu_2)(1+\tau\nu_2)\\
    &= 2 C_{\psi,n}^4 \eta^2 \lambda_{1\sim r}^2 (1+\tau \nu_2)^2 (\tau+\nu_2)
\end{align*}
which implies
\begin{align*}
    \|D_+^{(2)}\| \leq 4 C_{\psi,n}^4 \eta^2 \lambda_{1\sim r}^2 (1+\tau \nu_2)^2 (\tau+\nu_2) \leq 4 C_{\psi,n}^4 \teta^2 \kappa_*^2 r^2 (1 + \tau\nu_2)^2 (\tau + \nu_2).
\end{align*}

\textbf{Variance bound:} By the second-moment domination of the variance and \eqref{eqn:one_step_denominator}, we have
\begin{align*}
    \EE' \sbr[\big]{D_+^{(2)} (D_+^{(2)})^\top} &\preceq \EE' \sbr[\Big]{\eta^4 \frac{\zeta_+^2}{(1+\eta\zeta_+)^2} F_+ F_+^\top} \preceq 4\eta^4 \EE' \sbr{\zeta_+^2 F_+ F_+^\top} \\
    \EE' \sbr[\big]{(D_+^{(2)})^\top D_+^{(2)}} &\preceq \EE' \sbr[\Big]{\eta^4 \frac{\zeta_+^2}{(1+\eta\zeta_+)^2} F_+^\top F_+} \preceq 4\eta^4 \EE' \sbr{\zeta_+^2 F_+^\top F_+}
\end{align*}

\textit{Row quadratic variation $\|\EE' D_+^{(2)} (D_+^{(2)})^\top\|^{1/2}$.} Since $F_+ F_+^\top = \norm{V_2}^2 V_1 V_1^\top$, for any unit vector $u \in \RR^{d-r}$, Cauchy-Schwarz inequality yields
\begin{align*}
    \sbr{u^\top E'(\zeta_+^2 F_+F_+^\top)u}^{1/2}
    &= \sbr{E'\zeta_+^2\|V_2\|^2(V_2^\top Y_+)^2(u^\top V_1)^2}^{1/2}\\
    &\leq \sbr{E'\zeta_+^4}^{1/4}\;\sbr{E'\bigl[\|V_2\|^4(u^\top V_1)^4\bigr]}^{1/4}\\
    &\leq \sbr{E'\zeta_+^4}^{1/4}\;[E'\|V_2\|^8]^{1/8}[E'(u^\top V_1)^8]^{1/8}\\
    &\lesssim C_\psi^2 r\lambda_1 (1 + \tau \nu_\infty) \underbrace{[E'\|V_2\|^8]^{1/8}}_{=: \tv_{2,8}}  C_\psi\lambda_1^{1/2}(\tau + \nu_\infty)
\end{align*}
where the last inequality follows from \eqref{eqn:one_step_V1V2_sub_gaussian} and \eqref{eqn:one_step_zeta_4}. A similar argument as $\tv_{2,4}$ \eqref{eqn:one_step_v24} gives 
\begin{align}
    \tv_{2,8} &\lesssim C_{\psi} r^{1/2}\lambda_{1}^{1/2} \rbr{1 + \tau \nu_\infty}. \label{eqn:one_step_tv_2_8}
\end{align}
which implies 
\begin{align*}
    \|\EE' \sbr{\zeta_+^2 F_+ F_+^\top}\|^{1/2} \lesssim C_{\psi}^4 \; r^{3/2} \lambda_1^2 (1+\tau \nu_\infty)^2 (\tau + \nu_\infty).
\end{align*}
Thus
\begin{align*}
    \|\EE' D_+^{(2)} (D_+^{(2)})^\top\|^{1/2} \lesssim C_{\psi}^4 \teta^2 \kappa_*^2 r^{3/2} (1+\tau \nu_\infty)^2 (\tau + \nu_\infty).
\end{align*}

\textit{Column quadratic variation $\|\EE' (D_+^{(2)})^\top D_+^{(2)}\|^{1/2}$.} Similar to the row quadratic variation, we have
\begin{align*}
    \|\EE'  (D_+^{(2)})^\top D_+^{(2)}\|^{1/2} \lesssim C_{\psi}^4 \teta^2 \kappa_*^2 r^{3/2} (1+\tau \nu_\infty)^2 (\tau + \nu_2).
\end{align*}
The proof is omitted.

Combining the bounds above gives the desired variance bound for $D_+^{(2)}$ with $\bnu = \max(\nu_\infty, \nu_2)$.

\paragraph{(III) Drift remainder $R_+$.} By definition,
\begin{align*}
    R_+ = -\eta^2  \EE'\!\sbr{\frac{\zeta_+ F_+}{1+\eta \zeta_+}}
\end{align*}
By Lemma~\ref{lem:basic_ineq}, we have $\|E' W\| \leq \|E'WW^\top \|^{1/2}$ for any random matrix $W$. Applying this with $W = \zeta_+ F_+/(1+\eta\zeta_+)$ and using $|1+\eta\zeta_+| \geq 1/2$ give
\begin{align*}
    \|R_+\| \leq \eta^2\norm[\Big]{E'\sbr[\Big]{\frac{\zeta_+^2}{(1+\eta\zeta_+)^2}F_+F_+^\top}}^{1/2} \leq 2\eta^2\left\|E'[\zeta_+^2 F_+F_+^\top]\right\|^{1/2}.
\end{align*}
Since $F_+F_+^\top = \|V_2\|^2 V_1V_1^\top$, for any unit $u \in \mathbb{R}^{d-r}$, we can obtain
\begin{align*}
    \sbr{u^\top E'(\zeta_+^2 F_+F_+^\top)u}^{1/2}
    &\leq \sbr{E'\zeta_+^4}^{1/4}\;\sbr{E'\bigl[\|V_2\|^4(u^\top V_1)^4\bigr]}^{1/4}\\
    &\leq \sbr{E'\zeta_+^4}^{1/4}\;[E'\|V_2\|^8]^{1/8}[E'(u^\top V_1)^8]^{1/8}\\
    &\lesssim C_\psi^2 r\lambda_1 (1 + \tau \nu_\infty) \cdot C_{\psi} r^{1/2}\lambda_{1}^{1/2} \rbr{1 + \tau \nu_\infty} \cdot C_\psi\lambda_1^{1/2}(\tau + \nu_\infty)\\
    &= C_{\psi}^4 r^{3/2} \lambda_1^2 (1 + \tau \nu_\infty)^2 (\tau + \nu_\infty)
\end{align*}
where the last inequality follows from \eqref{eqn:one_step_zeta_4}, \eqref{eqn:one_step_tv_2_8} and \eqref{eqn:one_step_V1V2_sub_gaussian}. Therefore,
\begin{align*}
    \|R_+\| \lesssim C_{\psi}^4 \teta^2 \kappa_*^2 r^{3/2}  (1 + \tau \nu_\infty)^2 (\tau + \nu_\infty).
\end{align*}

\subsection{Proof of Lemma~\ref{lem:varphi}}
\label{sec:prf_varphi}

\textbf{(i)} The fixed-point equation reads $\bar a\tau^2 - (1 - \bar b)\tau + \bar c = 0$.
Since $\tau_0 \geq \tau_* = 3\bar c$: $9\bar a\bar c = 3\bar a \cdot \tau_* \le 3\bar a\tau_0 < 1$.
Hence $4\bar a\bar c < 4/9 = (2/3)^2 < (1 - \bar b)^2$ (using $\bar b < 1/3$), giving positive discriminant. Since $\bar a, \bar c > 0$ and $1 - \bar b > 0$, both roots are positive. Vieta's formulas then give
\begin{align*}
    \tau_+ + \tau_- = \frac{1-\bb}{\ba} \quad\text{and}\quad \tau_+\tau_- = \frac{\bc}{\ba}.
\end{align*}
Since $\tau_+ \geq \tau_- \geq 0$, we then have
\begin{align*}
    \frac{1-\bb}{2\ba}\leq \tau_+ \leq \frac{1-\bb}{\ba} 
\end{align*}
which combined with the equality $\tau_+\tau_- = \bc/\ba$ gives
\begin{align*}
    \frac{\bc}{1-\bb} \leq \tau_- \leq \frac{2 \bc}{1-\bb}
\end{align*}
Plugging in $2/3 < 1 - \bar b < 1$ gives
\begin{align*}
    \frac{1}{3\ba} < \tau_+ < \frac{1}{\ba}, \qquad \bc < \tau_- < 3\bc.
\end{align*}

\textbf{(ii)} By assumption, $\tau_0 \geq \tau_*$. It suffices to prove $\tau_0 < \tau_+$ and $\tau_* > \tau_-$.

$\tau_* > \tau_-$ is implied by $\tau_- < 3\bc$ from (i) since $\tau_*:= 3\bc$.

$\tau_0 < \tau_+$ follows from the assumption $\ba \tau_0 < 1/3$ and $\tau_+ > 1/(3\ba)$ from (i).

\textbf{(iii)} $h(\tau) := \varphi(\tau) - \tau$ is a convex quadratic vanishing at $\tau_\pm$, hence $h < 0$ on $(\tau_-, \tau_+)$.

\textbf{(iv)} Set $t_0=\bc/\bb$. We have $t_0 > \tau_*$ since $\bc/\bb > 3\bc = \tau_*$. 
We consider two cases:

\textit{Case 1 ($\tau \in [t_0, \tau_0]$):}  $\varphi(\tau)/\tau = \ba \tau + \bb +\bc/\tau \leq \ba \tau + 2\bb$ (using $\bc/\tau \leq \bb$).

\textit{Case 2 ($\tau \in [\tau_*,t_0]$):} we have $\varphi(\tau) \leq 3\bc = \tau_*$. This follows from an analysis of the quadratic and linear term in  $\varphi(\tau)=\ba\tau^2 + \bb\tau + \bc$. Linear term: $\bb\tau \leq \bc$ (using $\bb \leq \bc/\tau$). Quadratic term: $\ba \tau^2 \leq \ba \bc^2 / \bb^2 < \bc$ where the last inequality follows from 
\begin{align*}
    \bar b^2 > \bar a\bar c
\end{align*}
which we now prove. Without loss of generality, assume $C_{\ref{lem:1epoch_bound}}=1$. Note that $\nu_2 > 0$ and $\nu_\infty > 0$ otherwise $\ba=0$. 
Since $\bar a\bar c = \max_i a_i \cdot \max_j c_j$ is one of four products $a_i c_j$, it suffices to show $\bar b^2 > a_i c_j$ for all $(i,j) \in \{\sigma, B\}^2$.

Same-index: $\bar b^2 \ge b_i^2 \ge 4a_i c_i > a_i c_i$ (by AM--GM: $(1+x)^2 \ge 4x$).

Cross-index ($a_\sigma c_B$):
            $(1 + \nu_\infty\bar\nu)(1 + \nu_2^2) \ge 2\nu_2 > \nu_\infty\nu_2$ since $\nu_\infty \leq 1$.

Cross-index ($a_B c_\sigma$): note that $\bar b^2 \ge b_\sigma b_B$, hence it suffices to prove
        \begin{align*}
            (1 + \nu_\infty\bar\nu)(1 + \nu_2^2) > \nu_2\bar\nu
        \end{align*} 
        \begin{itemize}
            \item when $\bar\nu = \nu_\infty$ (so $\nu_\infty \ge \nu_2$, both $\leq 1$), $\text{LHS} > 1 \geq \nu_\infty\nu_2 = \nu_2\bar\nu$;
            \item when $\bar\nu = \nu_2$, $\text{LHS} = 1 + \nu_2^2 + \nu_\infty\nu_2 + \nu_\infty\nu_2^3 > \nu_2^2 = \nu_2\bar\nu$.
        \end{itemize}

\subsection{Proof of Lemma~\ref{lem:1epoch_bound}}
\label{sec:prf_1epoch_bound}
Note that $\nu_\infty>0$ implies that $\ba,\bb,\bc>0$. Also, $\eta \lambda_1 = \teta \kappa_*$ which is $\in (0,\frac{1}{2})$ by assumption, hence Lemma~\ref{lem:basic_ineq} can be applied.

The proof has two steps:
Step~1 shows uniform concentration of the stopped accumulated noise and remainder, establishing the existence of $C_{\ref{lem:1epoch_bound}}$ via matrix Freedman's inequality.
Step~2 converts this into \eqref{eqn:1epoch_concentration}-\eqref{eqn:1epoch_stability} via induction.

\paragraph{Step 1: Multi-step concentration and construction of $C_{\ref{lem:1epoch_bound}}$.}  Recall the decomposition
\begin{align*}
    &\cT_k= (\cI - \eta \cL)^k \cT_0 + \cD_k + \cR_k,\\
    \quad\text{where}\quad &\cD_k = \sum_{i=1}^k (\cI - \eta \cL)^{k-i} D_i, \quad \cR_k = \sum_{i=1}^k (\cI - \eta \cL)^{k-i} R_i
\end{align*}
Define the good event
\begin{align*}
    \mathscr{G}_k(\ttau_0, 2)
    :=
    \cbr{\norm{\cT_0} \leq \ttau_0}
    \;\cap\;
    \bigcap_{i=1}^{k} \cbr{\norm{\cT_i} \leq 2 \ttau_0},
    \qquad k \ge 0,
\end{align*}
with the convention $\mathscr{G}_0(\ttau_0, 2) = \cbr{\norm{\cT_0} \leq \ttau_0}$,
and let $\mathbf \bbi_k(\ttau_0)$ denote its indicator.
Define the \emph{stopped} accumulated noise and remainder for $k \in [n]$ as
\begin{align*}
  \cD_k^*(\ttau_0)
  &\;:=\;
  \sum_{i=1}^{k} (\cI - \eta\,\cL)^{k-i}
    \bbi_{i-1}(\ttau_0) D_i,
  \\
  \cR_k^*(\ttau_0)
  &\;:=\;
  \sum_{i=1}^{k} (\cI - \eta\,\cL)^{k-i}\,
    \bbi_{i-1}(\ttau_0)\, R_i\,,
\end{align*}
and
$\cD_k^*(\ttau_0) = \cR_k^*(\ttau_0) = 0$.
On $\mathscr{G}_{k-1}(\ttau_0, 2)$ every indicator $\mathbf \bbi_{i-1} = 1$ for
$i \le k$, so the stopped processes coincide with their unstopped counterparts:
\begin{align*}
    \cD_k^*(\ttau_0) = \cD_k
    \quad\text{and}\quad
    \cR_k^*(\ttau_0) = \cR_k
    \qquad\text{on } \mathscr{G}_{k-1}(\ttau_0, 2).
\end{align*}

\begin{claim}[Multi-step concentration] \label{claim:multi_step_concentration}
    Assume assumptions of Lemma~\ref{lem:1epoch_bound} hold.
    Then there exists a constant $C_{\ref{lem:1epoch_bound}} > 0$ (only depending on $C_d$) such that
    \begin{align*}
        \PP \cbr[\Big]{\norm{\cD_k^*} + \norm{\cR_k^*} \le \frac{\varphi(\tilde\tau_0)}{2} \text{ for all } k \in [n]} \ge 1 - n^{-20}
    \end{align*}
\end{claim}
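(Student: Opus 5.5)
The plan is to treat the remainder and the noise separately. The stopped remainder $\cR_k^*$ is handled by a deterministic summation. The stopped noise $\cD_k^*$ is handled by matrix Freedman's inequality, applied for each fixed $k$, followed by a union bound over $k\in[n]$.

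\textbf{Remainder.} The operator $\cI-\eta\cL$ is diagonal with entries $1-\eta\Delta_{lj}\in[0,1-\teta]$, because $\eta\Delta_{lj}\le\eta\lambda_1=\teta\kappa_*<1/2$. A diagonal Schur multiplier does not in general contract the operator norm, so I will work with the Frobenius norm, or bound $\|(\cI-\eta\cL)^m M\|\le\sqrt r\,(1-\teta)^m\|M\|$; the extra $\sqrt r$ is absorbed into $\rho_B$. On the indicator $\bbi_{i-1}=1$ we have $\|\cT_{i-1}\|\le2\ttau_0$, so Lemma~\ref{lem:one_step_bound}(III) with $\tau=2\ttau_0$ gives $\|R_i\|\lesssim\teta^2\kappa_*^2r^{3/2}(1+2\ttau_0\nu_\infty)^2(2\ttau_0+\nu_\infty)$. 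The hypothesis of that lemma follows from \eqref{eqn:1epoch_conditions}. Summing the geometric series $\sum_m(1-\teta)^m\le\teta^{-1}$ yields
\[
\|\cR_k^*\|\lesssim\teta\kappa_*^2r^{2}(1+2\ttau_0\nu_\infty)^2(\ttau_0+\nu_\infty).
\]
The third condition in \eqref{eqn:1epoch_conditions} gives $\sqrt{\teta}\kappa_*r(1+2\ttau_0\nu_\infty)\le\log^{-3/2}n$. Using it to remove one factor shows that this is $\lesssim\sqrt{\teta}\kappa_*\sqrt r(1+\ttau_0\nu_\infty)(\ttau_0+\bnu)\le\varphi_\sigma(\ttau_0)/C_{\ref{lem:1epoch_bound}}$, once the spare powers of $\log n$ and $r$ are accounted for.

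\textbf{Noise.} Fix $k$ and define the martingale $M_j=\sum_{i\le j}(\cI-\eta\cL)^{k-i}\bbi_{i-1}D_i$ for $j\le k$. Each increment is a martingale difference, since $\bbi_{i-1}$ is $\cF_{i-1}$-measurable, and $\cD_k^*=M_k$. Lemma~\ref{lem:one_step_bound}(I) at $\tau=2\ttau_0$ gives the almost-sure bound $L\lesssim C_{\psi,n}^2\teta\kappa_*r(1+\ttau_0\nu_2)(\ttau_0+\nu_2)$. For the predictable quadratic variations, the conditional variance bound together with $\|(\cI-\eta\cL)^{k-i}\|^2\le(1-\teta)^{2(k-i)}$ gives $\sigma^2\lesssim\teta^{-1}\cdot\teta^2\kappa_*^2r(1+\ttau_0\nu_\infty)^2(\ttau_0+\bnu)^2$. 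Here the action of the Schur multiplier on $\EE' D D^\top$ has to be controlled; I would pass to vectorized $\HH$, where $\cI-\eta\cL$ is a genuine operator contraction. I would apply the rectangular matrix Freedman inequality (dimension $d$) with threshold $t\asymp\sigma\sqrt{\log(dn)}+L\log(dn)$. By Assumption~\ref{assumption:d}, $\log(dn)\lesssim\log n$. This gives $\|\cD_k^*\|\lesssim\sqrt{\teta\log n}\,\kappa_*\sqrt r(1+\ttau_0\nu_\infty)(\ttau_0+\bnu)+\teta\kappa_*r\log^2 n(1+\ttau_0\nu_2)(\ttau_0+\nu_2)$ with probability $1-n^{-21}$. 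These two terms are exactly $\varphi_\sigma(\ttau_0)/C$ and $\varphi_B(\ttau_0)/C$, using $C_{\psi,n}^2\asymp\log n$. A union bound over $k\in[n]$ then finishes the argument, with $C_{\ref{lem:1epoch_bound}}$ chosen large enough, depending only on $C_d$ through the Freedman log factor, that the total is at most $\varphi(\ttau_0)/2$.

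\textbf{Main obstacle.} The main obstacle is the mismatch between the Frobenius-type contraction of $(\cI-\eta\cL)^{k-i}$ and the operator-norm Freedman inequality. The Schur-multiplier weights are not a left or right matrix multiplication. My first attempt would be to split $\cL$ as $\cL M=\Lambda_{*}$-right-multiplication minus $\Lambda_{*,\perp}$-left-multiplication, so that $e^{-t\cL}M=e^{t\Lambda_{*,\perp}}Me^{-t\Lambda_*}$. This approximates $(\cI-\eta\cL)^m$ by a two-sided matrix product $A^mMB^m$ with $\|A^m\|\|B^m\|\le(1-\teta)^m$ up to $O(\eta^2)$ corrections. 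With this representation Freedman applies directly, and the correction terms are absorbed as in the remainder bound.
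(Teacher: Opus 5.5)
\textbf{Gap: the operator-norm contraction of $(\cI-\eta\cL)^m$.} Your skeleton is the paper's: stopping via $\bbi_{i-1}$, matrix Freedman for each fixed $k$ with $\log d\lesssim\log n$ from Assumption~\ref{assumption:d}, matching $\sigma\sqrt{\log n}$ and $L\log n$ to $\varphi_\sigma$ and $\varphi_B$, a deterministic geometric sum for $\cR_k^*$ controlled by the third condition in \eqref{eqn:1epoch_conditions}, and a union bound over $k$. The problem is the step you call the main obstacle. It is not an obstacle, and your workarounds do not prove the claim for $\varphi$ as defined. Since $\cL M=M\Lambda_*-\Lambda_{*,\perp}M$, one has exactly
\[
(\cI-\eta\cL)M=\eta\Lambda_{*,\perp}M+M(I_r-\eta\Lambda_*).
\]
Using $\eta\lambda_1=\teta\kappa_*<1/2$, the triangle inequality gives $\norm{(\cI-\eta\cL)M}\le(\eta\lambda_{r+1}+1-\eta\lambda_r)\norm{M}=(1-\teta)\norm{M}$. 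The same triangle inequality in $L_2$, applied to $(\cI-\eta\cL)Du$ and $v^\top(\cI-\eta\cL)D$ for unit $u,v$, gives $\norm{\EE[(\cI-\eta\cL)D]^\top[(\cI-\eta\cL)D]}^{1/2}\le(1-\teta)\norm{\EE D^\top D}^{1/2}$ and its row analogue. Iterating handles all powers. This is Lemma~\ref{lem:basic_ineq}(ii)--(iv), and it is all the paper uses. No approximation by $e^{t\Lambda_{*,\perp}}Me^{-t\Lambda_*}$ is needed.

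\textbf{Why your workarounds do not close the argument.}
\begin{itemize}
\item The Frobenius route costs $\sqrt r$. Your remainder bound $\teta\kappa_*^2r^2(1+2\ttau_0\nu_\infty)^2(\ttau_0+\nu_\infty)$, after one use of the third condition in \eqref{eqn:1epoch_conditions}, is still $\sqrt r\,\log^{-2}n$ times $\varphi_\sigma(\ttau_0)/C_{\ref{lem:1epoch_bound}}$. Nothing in the hypotheses bounds $r$ by a power of $\log n$, so it is not absorbed.
\item The same $\sqrt r$ pushes the almost-sure bound $L$ to order $r^{3/2}$, beyond $\rho_B\propto r$. Since $\rho_\sigma$ and $\rho_B$ are fixed in \eqref{eqn:varphi_definitions}, nothing can be ``absorbed into $\rho_B$''.
\item Vectorizing $\HH$ and applying Freedman to $\vecc(\cD_k^*)$ controls the Frobenius norm with variance proxy $\sum_i\EE\Fnorm{A_i}^2$. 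That is a trace, not $\max\{\norm{\sum_i\EE_{i-1}A_iA_i^\top},\norm{\sum_i\EE_{i-1}A_i^\top A_i}\}$, and in the dense-tail regime it exceeds $\sigma^*_{\cD}$ by a factor of order $\sqrt r$.
\item Your ``first attempt'' leaves $O(\eta^2)$ weight corrections that multiply the martingale increments $D_i$. These corrections are themselves entrywise (Schur) multipliers, so they cannot simply be handled ``as in the remainder bound''; that step is unjustified as written.
\item In the noise step you already invoke $\norm{(\cI-\eta\cL)^{k-i}}\le(1-\teta)^{k-i}$, which is exactly the fact you had set aside.
\end{itemize}

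\textbf{The fix.} With the exact two-sided decomposition above, $\norm{\cR_k^*}\lesssim C_\psi^4\teta\kappa_*^2r^{3/2}(1+2\ttau_0\nu_\infty)^2(2\ttau_0+\nu_\infty)$. This is smaller than $\sigma^*_{\cD}\sqrt{\log n}$ by a factor $\log^{-2}n$, and the rest of your argument goes through unchanged.
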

\begin{proof}
    See Appendix~\ref{sec:prf_multi_step_concentration}.
\end{proof}

Define event
\begin{align*}
    \mathscr{E}_* := \cbr{\norm{\cD_k^*(\ttau_0, 2)} + \norm{\cR_k^*(\ttau_0, 2)}\leq \frac{\varphi(\ttau_0)}{2} \text{ for all } k \in [n]}.
\end{align*}
Claim~\ref{claim:multi_step_concentration} implies $\PP(\mathscr{E}_*) \geq 1 - n^{-20}$.

\paragraph{Step 2: Induction.} We show that on $\mathscr{E}_*$, \eqref{eqn:1epoch_concentration}-\eqref{eqn:1epoch_stability} hold for every $0 \leq k \leq n$ by induction.

\emph{Base case} ($k=0$). Since $\cD_0=\cR_0=0$ and $\|\cT_0\|\le\ttau_0 \le 2\ttau_0$,  \eqref{eqn:1epoch_concentration}-\eqref{eqn:1epoch_stability} hold trivially.

\emph{Inductive step.} Suppose \eqref{eqn:1epoch_concentration}-\eqref{eqn:1epoch_stability} hold for all indices $m=0, 1, \ldots, k-1$ on $\mathscr{E}_*$. Then event $\mathscr{G}_{k-1}(\ttau_0, 2)$ occurs. Thus, by definition of the stopped processes, we have
\begin{align*}
    \cD_k^* = \cD_k, \qquad \cR_k^* = \cR_k \qquad \text{ on } \mathscr{E}_*.
\end{align*}
which implies \eqref{eqn:1epoch_concentration} holds for $m=k$ on $\mathscr{E}_*$.

Triangle inequality then gives
\begin{align*}
    \norm{\cT_k - (\cI - \eta \cL)^k \cT_0} \leq \norm{\cD_k} + \norm{\cR_k} \leq \norm{\cD_k^*} + \norm{\cR_k^*}
    \leq \frac{\varphi(\ttau_0)}{2} \qquad \text{ on } \mathscr{E}_*.
\end{align*}
which is \eqref{eqn:1epoch_descent} for $m=k$.

Moreover, since $\|(\cI - \eta \cL)^k \cT_0\| \leq (1-\eta \Delta_{\min})^k \tau_0 \leq \tau_0 \vee \tau_* = \ttau_0$ by Lemma~\ref{lem:basic_ineq} and $\varphi(\ttau_0) \leq \ttau_0$ (Property~\ref{lem:varphi}(iii)), we have
\begin{align*}
    \norm{\cT_k} &\leq \ttau_0 + \frac{\varphi(\ttau_0)}{2} \leq 2 \ttau_0.
\end{align*}
which is \eqref{eqn:1epoch_stability} for $m=k$.

This completes the induction and hence the proof.

\subsubsection{Proof of Claim~\ref{claim:multi_step_concentration}}
\label{sec:prf_multi_step_concentration}
For brevity, denote $\tau = 2\ttau_0$ and
\begin{align*}
    \cD_k^* = \cD_k^*(\ttau_0), \qquad R_k^* = R_k^*(\ttau_0), \qquad \bbi_k = \bbi_k(\ttau_0).
\end{align*}

\paragraph{(1) Bounds for $\cD_k^*$.} For $i \in [k]$, define
\begin{align*}
    A^{(k)}_i := (\cI - \eta \cL)^{k-i} \bbi_{i-1} D_i.
\end{align*}
Then $\cD_k^* = \sum_{i=1}^k A^{(k)}_i$, and $\cbr[\big]{\sum_{i=1}^j A^{(k)}_i }_{j=0}^k$ is an $\RR^{(d-r) \times r}$-valued martingale. We can calculate
\begin{align}
    \max_{i\in [k]}\, \norm[\big]{A^{(k)}_i} &= \max_{i\in [k]} \norm{(\cI - \eta \cL)^{k-i} \bbi_{i-1} D_i} \nonumber\\ 
    &\overset{(i)}{\leq} \max_{i \in [k]} \norm{\bbi_{i-1} D_i} \overset{(ii)}{\lesssim} C_{\psi,n}^2 \teta  \kappa_* r (1+\tau \nu_2)(\tau + \nu_2) =: B^*_{\cD}(\tau) \label{eqn:prf_freedman_B_D}
\end{align}
and
\begin{align}
    &\max \cbr[\Big]{\norm[\Big]{\sum_{i=1}^k \EE_{i-1} A^{(k)}_i (A^{(k)}_i)^\top}^{1/2}, \norm[\Big]{\sum_{i=1}^k \EE_{i-1} (A^{(k)}_i)^\top A^{(k)}_i}^{1/2}} \nonumber\\
    \overset{(iii)}{\leq} &\sbr[\Big]{\sum_{i=1}^k (1-\teta)^{2(k-i)} \max \cbr{\norm{\EE_{i-1}\bbi_{i-1} D_i D_i^\top}, \norm{\EE_{i-1}\bbi_{i-1} D_i^\top D_i}}}^{1/2} \nonumber \\
    \overset{(iv)}{\lesssim} &\sbr[\Big]{\sum_{i=1}^k (1-\teta)^{2(k-i)}}^{1/2}  C_\psi^2 \teta  \kappa_* \sqrt{r} (1+\tau \nu_\infty)(\tau + \bnu) \nonumber\\
    \lesssim &C_\psi^2 \teta^{1/2}    \kappa_* \sqrt{r} (1+\tau \nu_\infty)(\tau + \bnu) =: \sigma^*_{\cD}(\tau) \label{eqn:prf_freedman_sigma_D}
\end{align}
where (i), (iii) follows from \cref{lem:basic_ineq}; (ii), (iv) follows from \cref{lem:one_step_bound}.

In view of the matrix Freedman's inequality (\cref{thm:freedman}), with probability exceeding $1-n^{-21}$
\begin{align}
    \norm{\cD_k^*} &\lesssim \sigma_{\cD}^*(\tau)\sqrt{\log n} +  B_{\cD}^*(\tau) \log n \label{eqn:prf_freedman_cD}
\end{align}
where the last line follows from the definition of $\varphi_\sigma$ and $\varphi_B$ in \eqref{eqn:varphi_definitions}.

\paragraph{(2) Bound for $\cR_k^*$.} Applying \cref{lem:one_step_bound} and \cref{lem:basic_ineq},  we can obtain
\begin{align}
    \norm{\cR_k^*} &\leq \sum_{i=1}^k (1-\teta)^{k-i} \norm{\bbi_{i-1} R_i} \nonumber\\
    &\lesssim  \sum_{i=1}^k (1-\teta)^{k-i} C_{\psi}^4 \teta^2  \kappa_*^2 r^{3/2}  (\tau + \nu_\infty)(1+\tau \nu_\infty)^2 \nonumber\\
    &\leq C_{\psi}^4 \teta  \kappa_*^2 r^{3/2}  (\tau + \nu_\infty)(1+\tau \nu_\infty)^2 =: B^*_{\cR}(\tau) \label{eqn:prf_freedman_B_R}
\end{align}

\paragraph{(3) Variance domination.} Combining the above the bounds \eqref{eqn:prf_freedman_cD} and \eqref{eqn:prf_freedman_B_R}, we have with probability exceeding $1-n^{-21}$,
\begin{align*}
    \norm{\cD_k^*} + \norm{\cR_k^*} &\lesssim \sigma_{\cD}^*(\tau)\sqrt{\log n}  + B_{\cD}^*(\tau) \log n + B^*_{\cR}(\tau)
\end{align*}
We next show that the remainder term $B^*_{\cR}(\tau)$ is dominated by  $\sigma_{\cD}^*(\tau)\sqrt{\log n}$. To this end, note that
\begin{align*}
    \sigma_{\cD}^*(\tau)\sqrt{\log n} \gtrsim B^*_{\cR}(\tau) \sqrt{\log n} \iff C_{\psi}^2 \; \sqrt{\teta} \, \kappa_* r \frac{(1+\tau \nu_\infty) (\tau + \nu_\infty)}{\tau + \bnu} \lesssim 1
\end{align*}
For $\tau > 0$, we have $\tau+\nu_\infty \leq \tau+\bnu$, thus
\begin{align*}
    \frac{(1+\tau \nu_\infty)(\tau + \nu_\infty)}{\tau + \bnu} 
    &\leq 1 + \tau \nu_\infty
\end{align*}
Hence, a sufficient condition for $\sigma_{\cD}^*(\tau)\sqrt{\log n} \gg B^*_{\cR}(\tau)$ is
\begin{align*}
    C_{\psi}^2  \sqrt{\teta} \kappa_* r ( 1 + \tau \nu_\infty ) \lesssim 1
\end{align*}
which holds under \eqref{eqn:1epoch_conditions}.

\paragraph{(4) Defining $\varphi$.}
Combining the bounds for $\cD_k^*$ and $\cR_k^*$ and the conditions for variance domination, we have with probability exceeding $1-n^{-21}$,
\begin{align*}
    \norm{\cD_k^*} + \norm{\cR_k^*} &\lesssim \sigma_{\cD}^*(\tau)\sqrt{\log n}  + B_{\cD}^*(\tau) \log n
\end{align*}
Thus, if we take
\begin{align*}
    \varphi(\ttau_0) :=  \sbr[\big]{C_{\ref{lem:1epoch_bound}} \sigma_{\cD}^*(\tau)\sqrt{\log n}}  \vee_{\mathrm{cw}} \sbr[\big]{ C_{\ref{lem:1epoch_bound}}B_{\cD}^*(\tau) \log n}
\end{align*}
for some sufficiently large constant $C_{\ref{lem:1epoch_bound}} > 0$, then we have
\begin{align*}
    \PP \cbr[\Big]{\norm{\cD_k^*} + \norm{\cR_k^*} \leq \frac{\varphi(\ttau_0)}{2}} &\geq 1 - n^{-21}
\end{align*}
After plugging in the definition of $B_{\cD}^*$ and $\sigma_{\cD}^*$ from \eqref{eqn:prf_freedman_B_D} and~\eqref{eqn:prf_freedman_sigma_D}, we see that $\varphi$ is exactly the function defined in \eqref{eqn:varphi_def}.

Finally, a union bound over $k \in [n]$ gives the desired result.

% --------------------------------------------------------------    

\subsection{Proof of Corollary~\ref{cor:epoch_target}}
\label{sec:prf_epoch_target}

By Lemma~\ref{lem:1epoch_bound}, with probability at least $1-n^{-20}$,
\[
\|\cT_k\|\le (1-\teta)^k \tau_0 + \frac{\varphi(\tau_0)}{2},
\qquad k=0,1,\dots,n .
\]
For all $k\geq N(\tau_0,\delta)$, $(1-\teta)^k \tau_0 \le \delta/2$, while
$\varphi(\tau_0)/2 \le \delta/2$ by assumption,
hence $\|\cT_k\|\le \delta$.

% --------------------------------------------------------------    

\subsection{Proof of Lemma~\ref{lem:epochs}}
\label{sec:prf_epochs}

\textbf{(I) Epoch structure.} Since $\nu_\infty>0$, we have $\ba,\bb,\bc>0$. Moreover,
\begin{align*}
    \max\cbr{\bar a\otau_0, \bar b} < \frac{1}{3n^{\tepsilon}\log^+ n} < \frac{1}{3}
\end{align*}
Hence the assumption in \cref{lem:varphi} is satisfied.

(a) Let $\varphi^{\circ j}$ be the $j$-th iterate of $\varphi$. Lemma~\ref{lem:varphi}( iii) implies that $\tau_-$ is a stable fixed point over $(\tau_-, \tau_+)$ and the sequence $\{\varphi^{\circ j}(\otau_0)\}_{j \geq 0}$ decreases monotonically to $\tau_-$.  Moreover, we have $\tau_- < \tau_* \leq \otau_0 < \tau_+$ from Lemma~\ref{lem:varphi}(ii), thus (a) follows.

(b) For any nontruncated epoch $j$ with $\varphi(\otau_j) \geq \tau_*$, Lemma~\ref{lem:varphi}(iv) gives
\begin{align*}
    \otau_{j+1} = \varphi(\otau_j) \leq (\ba \otau_0 + 2\bb)\otau_j < \frac{1}{n^\varepsilon \log n} \otau_j
\end{align*}
Thus (b) follows.

(c) Note that
    \begin{align*}
        \sum_{j=0}^{\oJ_*-1} \log(\otau_{j}/\otau_{j+1}) = \log(\otau_0/\otau_{\oJ_*}) = \log(\tau_0/\tau_*) \le \log_+(\tau_0/\tau_*)
    \end{align*}
    Note that the $j$-th epoch must be nontruncated if $j \leq \oJ_*-2$, thereby each contributes $\ge \tepsilon\log n + \log\log n$ to the sum above by (b). Thus, we have
    \begin{align*}
        \log_+(\tau_0/\tau_*) \geq \sum_{j=0}^{\oJ_*-2} \log(\otau_j/\otau_{j+1}) \geq (\oJ_*-1)(\tepsilon\log n + \log\log n)
    \end{align*}
    which implies
    \begin{align*}
        \oJ_* - 1 \leq \frac{\log_+(\tau_0/\tau_*)}{\tepsilon\log n + \log\log n}
    \end{align*}
    and hence (c) follows.

(d) Since Corollary~\ref{cor:epoch_target} implies that
\begin{align*}
    n_j \le \teta^{-1} \log\rbr{\frac{\otau_{j}}{\otau_{j+1}}} + \teta^{-1}\log 2 + 1
\end{align*}
for $j=0,...,\oJ_*-1$.
Summing over $j$ gives the bound on $N_{J_*}$.

\textbf{(II) $\norm{\cT_k}$ bound} follows from iteratively applying Lemma~\ref{lem:1epoch_bound} and Corollary~\ref{cor:epoch_target} over epochs $j=0,1,\ldots,\oJ_*$, and taking a union bound over the failure probabilities of each epoch.

\subsection{Proof of Lemma~\ref{lem:summarize_dynamics}}
\label{sec:prf_summarize_dynamics}
Under assumption $C_\psi^2 \sqrt{\teta} \kappa_* r \rbr{1 + 2 \ttau_0 \nu_\infty} \leq \log^{-3/2} n$, we have
\begin{align}
    \teta \leq \sbr[\Big]{\frac{1}{(\log n)^{3/2} r C_\psi^2 \kappa_*}}^2 \overset{(i)}{\leq} \frac{1}{r^2 \log^3 n} \label{eqn:teta_bd}
\end{align}
where (i) follows from $\kappa_* \geq 1$ and $C_\psi \geq 1$.

\textbf{(i) ($\oN_{\oJ_*} \leq n$)} Note that
    \begin{align}
        n \teta = c_\eta \log n = \frac{c_\eta \log n}{C_{\mathrm{bud}}\log^+ (\sqrt{2 c_\eta} \, \tau_0/\tau_*) } C_{\mathrm{bud}} \log_+ (\sqrt{2 c_\eta} \, \tau_0/\tau_*)
        \label{eqn:n_teta}
    \end{align}

    Then, by Lemma~\ref{lem:epochs}, we can obtain
    \begin{align*}
        &\oN_{\oJ_*} \leq \frac{\log_+(\tau_0/\tau_*)}{\teta} + \rbr{\frac{\log_+(\tau_0/\tau_*)}{\tepsilon \log n + \log\log n} + 1}\rbr{\frac{\log 2}{\teta}+1}\\
        &\overset{(a)}{\leq} \frac{\log_+ (\sqrt{2 c_\eta} \, \tau_0/\tau_*)}{\teta} + \rbr{\frac{\log_+ (\sqrt{2 c_\eta} \, \tau_0/\tau_*)}{\tepsilon \log n + \log\log n} + 1}\rbr{\frac{\log 2}{\teta}+1}\\
        &= n \frac{\log_+ (\sqrt{2 c_\eta} \, \tau_0/\tau_*)}{n \teta} \sbr[\Big]{1 + \rbr[\Big]{\frac{1}{\tepsilon \log n + \log\log n} + \frac{1}{\log_+ (\sqrt{2 c_\eta} \, \tau_0/\tau_*)}}\rbr{\log 2 + \teta } } \\
        &\overset{(b)}{\leq} n \frac{\log_+ (\sqrt{2 c_\eta} \, \tau_0/\tau_*)}{n \teta} \sbr[\Big]{1 + \rbr[\Big]{\frac{1}{\tepsilon \log n + \log\log n} + \frac{1}{\log_+ (\sqrt{2 c_\eta} \, \tau_0/\tau_*)}}\rbr{\log 2 + \log^{-3} n } } \\
        &\overset{(c)}{=} n \frac{\log_+ (\sqrt{2 c_\eta} \, \tau_0/\tau_*) }{c_\eta \log n} C_{\mathrm{bud}}
    \end{align*}
    where (a) is due to $2c_\eta \geq 1$, (b) results from \eqref{eqn:n_teta} and (c) follows from the definition of $C_{\mathrm{bud}}$ \eqref{eq:C_bud}.

\textbf{(ii) ($\tau_* \asymp \sqrt{\teta} \rho_\sigma \bnu$)} By definition \eqref{eqn:tau*_def}, $\tau_* := 3\max\{\sqrt{\teta} \rho_\sigma\bar\nu, \teta \rho_B\nu_2\}$. Thus it suffices to show $\teta \rho_B\nu_2 \lesssim \sqrt{\teta} \rho_\sigma \bnu$. Straightforward computation gives
\begin{align*}
    \frac{\teta \rho_B \nu_2}{\sqrt{\teta} \rho_\sigma \bnu} = \sqrt{\teta r} \log^{3/2} n \leq \frac{1}{\sqrt{r}}
\end{align*}
where the last inequality follows from~\eqref{eqn:teta_bd}. The proof is then complete.

\subsection{Proof of Lemma~\ref{lem:convergence_rate_optimal}}
\label{sec:prf_convergence_rate_optimal}
In view of Lemma~\ref{lem:summarize_dynamics}, it suffices to verify the conditions therein are satisfied. Note that by definition, $c_\eta \geq c_\eta^* \geq \frac{1}{2}$. The conditions to be verified are listed below:

\textup{(C1)} $\teta \kappa_* \in (0,1/2)$.

\textup{(C2)} $\max(\bar a\tau_0, \bar b) \le 1/(3n^{\tilde\epsilon}\log n)$ for some $\tilde\epsilon \ge 0$.

\textup{(C3)} $\teta C_{\psi,n}^2 \kappa_* r (1 + 2\tau_0\nu_2) \le 1/2$.

\textup{(C4)} $C_\psi^2\, \teta^{1/2} \kappa_* r (1 + \tau_0\nu_\infty) \leq \log^{-3/2} n$.

\textup{(C5)} $c_\eta \log n \ge C_{\mathrm{bud}} \log^+ (\sqrt{2 c_\eta}\tau_0/\tau_*)$, where
\begin{align*}%\label{eq:C_bud}
    C_{\mathsf{bud}} &=
    1 \! + \!\rbr[\Big]{\frac{1}{\tepsilon \log n + \log\log n} \!+\! \frac{1}{\log_+ (\sqrt{2 c_\eta}\tau_0/\tau_*)}}\rbr{\log 2 + \log^{-3} n }
\end{align*}

Define
\begin{align*}
    \Phi :&= \frac{\sqrt{c_\eta} \kappa_* r^{3/2}\max(1, \sqrt{\nu_\infty d})  \log^2 n}{\sqrt{n}} \frac{\sqrt{\log(1/\delta_p)}}{\delta_p}\\
    &= \sqrt{\teta} \kappa_* r^{3/2} \max(1, \sqrt{\nu_\infty d}) (\log n)^{3/2} \frac{\sqrt{\log(1/\delta_p)}}{\delta_p}
\end{align*}
Assumption~\ref{eqn:cond_sample_sz_lem} then reads
\begin{equation}
    \Phi \leq \frac{1}{C_{\ref{lem:convergence_rate_optimal}}^{1/2} n^{\varepsilon}}.
\end{equation}

\paragraph{(C1).} Since $\eta \lambda_1 = \teta \kappa_* = \frac{c_\eta \log n}{n} \kappa_*$, we have
\begin{align*}
    \frac{\eta \lambda_1}{\Phi^2} \leq \frac{1}{\log^3 n} \frac{\delta_p^2}{\log(1/\delta_p)} \leq 1
\end{align*}
where we have used $\kappa_* \geq 1$ and $\delta_p \in (0,1/e)$. Thus,
\begin{align*}
    \eta \lambda_1 &\leq \frac{1}{C_{\ref{lem:convergence_rate_optimal}}}\\
    &< \frac{1}{2} \qquad \text{for } C_{\ref{lem:convergence_rate_optimal}} \text{ large enough}.
\end{align*}

\paragraph{(C2).} By definition, $\max(\ba \tau_0, \bb) = \max\cbr{a_\sigma \tau_0, a_B \tau_0, b_\sigma, b_B}$. We consider each term separately.

\textit{Term $b_\sigma$.} By definition, $b_\sigma = \sqrt{\teta} C_{\ref{lem:1epoch_bound}} C_\psi^2 \kappa_* \sqrt{r}(1+\nu_\infty \bnu)\sqrt{\log n}$. Note that $\bnu = \max(\nu_2, \nu_\infty) \asymp \nu_2 +\nu_\infty$, we have
\begin{align*}
    1+\nu_\infty \bnu &\lesssim 1+ \nu_\infty (\nu_2 + \nu_\infty) \leq 2 + \nu_\infty \sqrt{d} \nu_\infty \lesssim 1 \vee \sqrt{\nu_\infty d}
\end{align*}
where the last two inequalities exploit $\nu_\infty \leq 1$ and $\nu_2 \leq \sqrt{d} \nu_\infty$. Thus, we have
\begin{align*}
    \frac{b_\sigma}{\Phi} \lesssim C_{\ref{lem:1epoch_bound}} C_\psi^2 \frac{1}{r \log n} \frac{\sqrt{\log(1/\delta_p)}}{\delta_p}
\end{align*}
thus
\begin{align*}
    b_\sigma &\lesssim \frac{C_{\ref{lem:1epoch_bound}} C_\psi^2}{C_{\ref{lem:convergence_rate_optimal}}^{1/2}} \frac{1}{n^{\varepsilon} \log n}\\
    &< \frac{1}{3 n^{\varepsilon} \log n} \qquad \text{for } C_{\ref{lem:convergence_rate_optimal}} \text{ large enough}.
\end{align*}

\textit{Term $b_B$.} By definition, $b_B = \teta C_{\ref{lem:1epoch_bound}} C_\psi^2 \kappa_* r (\log n)^2 (1+\nu_2^2)$. Since
\begin{align*}
    1+\nu_2^2 \leq 1 + d \nu_\infty^2 \lesssim \max(1, d\nu_\infty)
\end{align*}
we have
\begin{align*}
    \frac{b_B}{\Phi^2} &\lesssim C_{\ref{lem:1epoch_bound}} C_\psi^2 \frac{1}{\log n}
\end{align*}
which implies 
\begin{align*}
    b_B &\lesssim \frac{C_{\ref{lem:1epoch_bound}} C_\psi^2}{C_{\ref{lem:convergence_rate_optimal}}} \frac{1}{n^{2\varepsilon} \log n}\\
    &< \frac{1}{3 n^{\varepsilon} \log n} \qquad \text{for } C_{\ref{lem:convergence_rate_optimal}} \text{ large enough}.
\end{align*}

\textit{Term $a_\sigma \tau_0$.} By definition,
\begin{align*}
    a_\sigma \tau_0 &\lesssim \sqrt{\teta} C_{\ref{lem:1epoch_bound}} C_\psi^2\kappa_* \sqrt{r\log n} \cdot \sqrt{rd \log(1/\delta_p)}\, \delta_p^{-1}\\
    &\lesssim C_{\ref{lem:1epoch_bound}} C_\psi^2 \Phi \log^{-1} n\\
    &< \frac{1}{3 n^{\varepsilon} \log n} \qquad \text{for } C_{\ref{lem:convergence_rate_optimal}} \text{ large enough}.
\end{align*}

\textit{Term $a_B \tau_0$.} By definition and Assumption~\ref{assumption:init}, we can obtain
\begin{align*}
    a_B \tau_0 &\lesssim \teta C_{\ref{lem:1epoch_bound}} C_\psi^2 \kappa_* r (\log n)^2 \sqrt{d}\nu_\infty \cdot \sqrt{rd \log(1/\delta_p)}\, \delta_p^{-1}\\
    &\lesssim C_{\ref{lem:1epoch_bound}} C_\psi^2 \Phi^2 \log^{-1} n \lesssim \frac{C_{\ref{lem:1epoch_bound}} C_\psi^2}{C_{\ref{lem:convergence_rate_optimal}}} \frac{1}{n^{2\varepsilon} \log n}\\
    &< \frac{1}{3 n^{\varepsilon} \log n} \qquad \text{for } C_{\ref{lem:convergence_rate_optimal}} \text{ large enough}.
\end{align*}

\paragraph{(C3).} It suffices to show
\begin{align*}
    \teta C_{\psi,n}^2 \kappa_* r  \le 1/4 \qquad \text{and} \qquad \teta C_{\psi,n}^2 \kappa_* r \cdot 2\tau_0\nu_2 \le 1/4
\end{align*}

\textit{First term.}
\begin{align*}
    \teta C_{\psi,n}^2 \kappa_* r &\lesssim \teta C_\psi^2 (\log n) \kappa_* r \leq \frac{1}{\Phi^2}
\end{align*}
Thus $\teta C_{\psi,n}^2 \kappa_* r < 1/4$ for $C_{\ref{lem:convergence_rate_optimal}}$ large enough.

\textit{Second term.} By Assumption~\ref{assumption:init}, we have
\begin{align*}
    \teta C_{\psi,n}^2 \kappa_* r \cdot 2\tau_0\nu_2 &\lesssim \teta C_\psi^2 (\log n) \kappa_* r \cdot \sqrt{rd \log (1/\delta_p)}\, \delta_p^{-1} \sqrt{d}\nu_\infty\\
    &= C_\psi^2 \teta \kappa_* r^{3/2} d \nu_\infty (\log n) \sqrt{\log(1/\delta_p)} \, \delta_p^{-1} \\
    &\leq \Phi^2  
\end{align*}
Hence $\teta C_{\psi,n}^2 \kappa_* r \cdot 2\tau_0\nu_2 < 1/4$ for $C_{\ref{lem:convergence_rate_optimal}}$ large enough.

\paragraph{(C4).} It suffices to show
\begin{align*}
    C_\psi^2\, \teta^{1/2} \kappa_* r \leq \frac{1}{2 \log^{3/2} n} \quad \text{and} \quad C_\psi^2\, \teta^{1/2} \kappa_* r \tau_0\nu_\infty \leq \frac{1}{2 \log^{3/2} n}
\end{align*}

\textit{First term.} $C_\psi^2 \teta^{1/2} \kappa_* r \log^{3/2} n \leq \Phi < 1/2$ for $C_{\ref{lem:convergence_rate_optimal}}$ large enough.

\textit{Second term.}
\begin{align*}
    C_\psi^2\, \teta^{1/2} \kappa_* r \tau_0\nu_\infty &\lesssim C_\psi^2\, \teta^{1/2} \kappa_* r \sqrt{rd \log(1/\delta_p)}\, \delta_p^{-1}\nu_\infty\\
    &\leq C_\psi^2\, \teta^{1/2} \kappa_* r^{3/2} \sqrt{d \nu_\infty} \sqrt{\log(1/\delta_p)}\, \delta_p^{-1} \leq \Phi \log^{-3/2} n\\
    &< \frac{1}{2 \log^{3/2} n} \qquad \text{for } C_{\ref{lem:convergence_rate_optimal}} \text{ large enough}.
\end{align*}

\paragraph{(C5).} It suffices to show
\begin{align}
    c_\eta &\geq \frac{1}{\log n}C_{\mathrm{bud}} \log^+ (\sqrt{2 c_\eta}\tau_0/\tau_*) \nonumber\\
    &= \frac{1}{\log n} \sbr{\rbr{1 + \frac{\log 2 + \log^{-3} n}{\epsilon \log n + \log\log n}} \log^+ (\sqrt{2 c_\eta}\tau_0/\tau_*) + \log 2 + \log^{-3} n} \label{eqn:c_eta_RHS}
\end{align} 
where the last line follows from the definition of $C_{\mathrm{bud}}$.

By definition of $\tau_*$ \eqref{eqn:tau*_def}, we have
\begin{align*}
    \tau_* \geq 3\sqrt{\teta} \rho_\sigma \nu_\infty &= 3\sqrt{\teta} \, C_{\ref{lem:1epoch_bound}} C_\psi^2 \kappa_* \sqrt{r \log n} \, \nu_\infty
\end{align*}
By \eqref{eqn:cond_nu_lem}, we have
\begin{align}
    \frac{\tau_*}{\sqrt{2 c_\eta}} \geq\frac{3}{\sqrt{2}}C_{\ref{lem:1epoch_bound}} C_\psi^2 \frac{\log n}{\sqrt{n}} \kappa_* \sqrt{r} \, \cdot c_\nu n^{-C_\nu} 
    \label{eqn:tau*_lower}
\end{align}
By Assumption~\ref{assumption:init},
\begin{align}
    \tau_0 &\leq C_{\mathsf{init}} \delta_p^{-1}\sqrt{rd\log(1/\delta_p)} \nonumber\\
    &\leq C_{\mathsf{init}} \sqrt{r} n^{\frac{1}{2}- \varepsilon + \frac{C_\nu}{2}} \frac{1}{\kappa_* r  (\log n)^2 \sqrt{C_{\ref{lem:convergence_rate_optimal}} c_\eta c_\nu} }
    \label{eqn:tau0_upper}
\end{align}
Here the last line follows from assumption \eqref{eqn:cond_sample_sz_lem} and \eqref{eqn:cond_nu_lem}, which together imply
\begin{align*}
    n^{1-2\varepsilon} \geq C_{\ref{lem:convergence_rate_optimal}} c_\eta \kappa_*^2 r^2 \nu_\infty d (\log n)^4 \frac{\log(1/\delta_p)}{\delta_p^2} \geq C_{\ref{lem:convergence_rate_optimal}} c_\eta \kappa_*^2 r^2 (c_\nu n^{-C_\nu}) d (\log n)^4 \frac{\log(1/\delta_p)}{\delta_p^2}
\end{align*}
giving, after rearrangement,
\begin{align*}
    d \frac{\log(1/\delta_p)}{\delta_p^2} \leq n^{1-2\varepsilon + C_\nu} \frac{1}{C_{\ref{lem:convergence_rate_optimal}} c_\eta \kappa_*^2 r^2 c_\nu (\log n)^4}
\end{align*}

Combining \eqref{eqn:tau*_lower} and \eqref{eqn:tau0_upper} gives
\begin{align*}
    \frac{\sqrt{2 c_\eta} \tau_0}{\tau_*} \lesssim C_{\ref{lem:convergence_rate_optimal}}^{-1/2} n^{1-\varepsilon + \frac{3C_\nu}{2}} \log^{-3} n
\end{align*}
Thus for $C_{\ref{lem:convergence_rate_optimal}}$ large enough, we have
\begin{align*}
    \log_+ (\sqrt{2 c_\eta} \tau_0/\tau_*) &\leq \rbr[\Big]{1+\frac{3 C_\nu}{2} - \varepsilon} \log n - 3 \log \log n
\end{align*}
Hence
\begin{align*}
    \text{RHS of \eqref{eqn:c_eta_RHS}} &\leq \sbr{ \rbr{1+\frac{3 C_\nu}{2} - \varepsilon} - \frac{3 \log \log n}{\log n}} \rbr{1 + \frac{\log 2 + \log^{-3} n}{\epsilon \log n + \log\log n}} + \frac{\log 2 + \log^{-3} n}{\log n}\\
    &\leq  \rbr{1+\frac{3 C_\nu}{2} - \varepsilon}\rbr{1 + \frac{\log 2 + \log^{-3} n}{\epsilon \log n + \log\log n}} =: c_\eta^*, \qquad \text{for } n \geq 4.
\end{align*}
where the last line follows from the fact that
\begin{align*}
    \frac{3\log \log n}{\log n} \rbr{1 + \frac{\log 2 + \log^{-3} n}{\frac{1}{2} \log n + \log\log n}} \geq \frac{\log 2 + \log^{-3} n}{\log n} \qquad \text{for } n \geq 4.
\end{align*}
Therefore, we have
\begin{align*}
    \frac{C_{\mathrm{bud}} \log^+ (\sqrt{2 c_\eta}\tau_0/\tau_*) }{c_\eta \log n} \leq \frac{c_\eta^*}{c_\eta} \leq 1
\end{align*}
for $n \geq 4$ and $C_{\ref{lem:convergence_rate_optimal}}$ large enough.

The proof is then complete.

\subsection{Proof of Lemma~\ref{lem:rankr}}
\label{sec:prf_rankr}

\paragraph{Step 0: Preliminaries.}
Since $\operatorname{rank}(\Sigma)=r$, we have $\Lambda_{\ast,\perp}=0$. Therefore $Z_i=0$ almost surely for all $i\in[n]$,
\[
    \nu_2=\nu_\infty=\bar\nu=0, \qquad \Delta_{\min}=\lambda_r, \qquad \teta = \eta \lambda_r = c_\eta \frac{\log n}{n}, \qquad \cL M = M \Lambda_*.
\]
\begin{itemize}[leftmargin=1.5em]
  \item \textit{Monotonicity.} By Lemma~\ref{lem:recurrence}, the tangent recursion reduces to
  \begin{align}
      \cT_{k+1} = \cT_k - \frac{\eta\cT_kY_{k+1}Y_{k+1}^\top} {1+\eta\|Y_{k+1}\|^2}= \cT_k (I_r+\eta Y_{k+1}Y_{k+1}^\top)^{-1}, \label{eqn:rankr_mult}
  \end{align}
  where the last equality follows from the Sherman-Morrison formula.
  Since $I_r+\eta Y_{k+1}Y_{k+1}^\top\succ0$ and $\| (I_r+\eta Y_{k+1}Y_{k+1}^\top)^{-1}\|\le 1$, we have
  \begin{align}
    \operatorname{col}(\cT_{k+1}) = \operatorname{col}(\cT_k), \qquad \|\cT_{k+1}\| \le \|\cT_k\|, \qquad k\ge0. \label{eqn:rankr_monotonicity}
  \end{align}

  \item \textit{Sample-size condition.} Define
  \[
    \Phi_{\rm rk} := \frac{ \sqrt{c_\eta}\,\kappa_\ast r(\log n)^2 }{\sqrt n}.
  \]
  The sample-size condition is equivalent to
  \begin{align}
    \Phi_{\rm rk} \le \frac{1}{\sqrt{C_{\rm rk}}\,n^\varepsilon}. \label{eqn:rankr_cond_sample_sz_Phi}
  \end{align}
  For $C_{\rm rk}$ sufficiently large, this implies
  \begin{align}
    C_{\psi,n}^2\teta\kappa_\ast r\le\frac12, \qquad \eta\lambda_1=\teta\kappa_\ast\le\frac12. \label{eqn:rankr_cond_sample_sz}
  \end{align}
  Since $\nu_2=0$, the first inequality ensures that Lemma~\ref{lem:one_step_bound} applies at every step.
\end{itemize}

\paragraph{Step 1: Contraction over one block.}
For any start time $s\in\{0,\ldots,n-1\}$ and every $t\ge1$ such that $s+t\le n$, iterating~\eqref{eqn:linearization} gives
\begin{equation}
  \cT_{s+t} = (\cI-\eta\cL)^t\cT_s + \sum_{i=s+1}^{s+t} (\cI-\eta\cL)^{s+t-i}D_i + \sum_{i=s+1}^{s+t} (\cI-\eta\cL)^{s+t-i}R_i. \label{eqn:rankr_decomp}
\end{equation}
Specializing Lemma~\ref{lem:one_step_bound} to $\nu_2=\nu_\infty=\bar\nu=0$ and using contraction $\|\cT_{i-1}\|\le\tau_s$, where $\tau_s := \|\cT_s\|$, gives
\begin{equation}
    \begin{aligned}
    \|D_i\| &\lesssim C_{\psi,n}^2\teta\kappa_\ast r\tau_s,\\
    \max \cbr[\big]{\|\EE_{i-1}D_iD_i^\top\|^{1/2}, \|\EE_{i-1}D_i^\top D_i\|^{1/2}} &\lesssim C_\psi^2\teta\kappa_\ast\sqrt r\tau_s,\\
    \|R_i\| &\lesssim C_\psi^4\teta^2\kappa_\ast^2r^{3/2}\tau_s.
    \end{aligned} \label{eqn:rankr_one_step_bd}
\end{equation}
With these bounds, we can bound the three terms in~\eqref{eqn:rankr_decomp} as follows. 
\begin{itemize}[leftmargin=*]
  \item Linear term. By Lemma~\ref{lem:basic_ineq}, $\|(\cI-\eta\cL)^j\| \le (1-\teta)^j$, $j\ge0$. Thus $\|(\cI - \eta \cL)^t \cT_s\| \leq (1-\teta)^t \tau_s$.
  
  \item Martingale term. Define $A_i^{(t)} := (\cI-\eta\cL)^{s+t-i}D_i$ for $i=s+1,\ldots,s+t$.
  Then $\max_i\|A_i^{(t)}\| \lesssim C_{\psi,n}^2\teta\kappa_\ast r\tau_s$. Moreover,
  \begin{align*}
    &\max\cbr[\Big]{\norm[\big]{\sum_{i=s+1}^{s+t} \EE_{i-1}A_i^{(t)}A_i^{(t)\top}}^{1/2}, \norm[\big]{\sum_{i=s+1}^{s+t} \EE_{i-1}A_i^{(t)\top}A_i^{(t)}}^{1/2}} \\
    &\lesssim C_\psi^2\teta\kappa_\ast\sqrt r\,\tau_s \sbr[\Big]{\sum_{j=0}^\infty(1-\teta)^{2j}}^{1/2} \\
    &\lesssim C_\psi^2\kappa_\ast \sqrt{r\teta}\,\tau_s.
  \end{align*}
  We now exploit the multiplicative structure in \eqref{eqn:rankr_mult}, which is specific to the exact-rank case, to reduce the dimension factor in matrix Freedman's inequality from the ambient dimension $d$ to at most $2r$. This removes the need for Assumption~\ref{assumption:d}. Writing
  \begin{align*}
    K_i:= \frac{Y_i Y_i^\top}{1+\eta\|Y_i\|^2}, \qquad i\in[n],
  \end{align*}
  we have $\cT_i - \cT_{i-1} = -\eta \cT_{i-1} K_i$, and hence $D_i = -\eta\cT_{i-1} \bigl(K_i-\EE K_i\bigr)$,
which implies $\operatorname{col}(D_i)\subseteq \operatorname{col}(\cT_s)$.
Moreover, since $\cL D_i=D_i\Lambda_\ast$ in the exact rank-$r$ case, $(\cI-\eta\cL)^j D_i = D_i(I_r-\eta\Lambda_\ast)^j$. Right multiplication cannot enlarge the column space, so
\[
    \operatorname{col}(A_i^{(t)})
    \subseteq \operatorname{col}(\cT_s),
    \qquad i=s+1,\ldots,s+t.
\]
Let
\[
    q_s:=\operatorname{rank}(\cT_s)\le r.
\]
If $q_s=0$, then $\cT_s=0$, and monotonicity implies
$\cT_k=0$ for all $k\ge s$, so the claimed bound is immediate.
Suppose henceforth that $q_s\ge1$, and let
$V_s\in\mathbb O_{d-r,q_s}$ have columns spanning
$\operatorname{col}(\cT_s)$. Conditionally on $\cF_s$, define
\[
    \widetilde A_i^{(t)} := V_s^\top A_i^{(t)} \in\RR^{q_s\times r}.
\]
Then
\[
    A_i^{(t)} = V_s\widetilde A_i^{(t)}, \qquad \|A_i^{(t)}\| = \|\widetilde A_i^{(t)}\|,
\]
and
\[
    \norm[\Big]{\sum_{i=s+1}^{s+t}A_i^{(t)}} = \norm[\Big]{\sum_{i=s+1}^{s+t}\widetilde A_i^{(t)}}.
\]
Furthermore,
\begin{align*}
    \norm[\Big]{\sum_{i=s+1}^{s+t} \EE_{i-1} \widetilde A_i^{(t)} \widetilde A_i^{(t)\top}}
    &=\norm[\Big]{V_s^\top \sbr[\big]{\sum_{i=s+1}^{s+t} \EE_{i-1}A_i^{(t)}A_i^{(t)\top}} V_s} \\
    &\le \norm[\Big]{\sum_{i=s+1}^{s+t} \EE_{i-1}A_i^{(t)}A_i^{(t)\top}},
\end{align*}
while, because
$\operatorname{col}(A_i^{(t)})\subseteq\operatorname{col}(V_s)$,
\[
    \widetilde A_i^{(t)\top}\widetilde A_i^{(t)}
    =
    A_i^{(t)\top}A_i^{(t)}.
\]
Thus the same increment and quadratic-variation bounds hold for the
$q_s\times r$ martingale
$\{\sum_{i=s+1}^j\widetilde A_i^{(t)}\}_{j=s}^{s+t}$. Moreover, for $C_{\rm rk}$ sufficiently large, \eqref{eqn:rankr_cond_sample_sz_lem} implies $r\le n$, and hence $q_s+r\le 2r\le 2n$. Therefore, matrix Freedman's inequality gives,
with probability at least~$1-n^{-20}$,
\[
    \sup_{1\le t\le n-s} \norm[\Big]{\sum_{i=s+1}^{s+t}A_i^{(t)}} = \sup_{1\le t\le n-s} \norm[\Big]{\sum_{i=s+1}^{s+t}\widetilde A_i^{(t)}} \lesssim \sbr[\big]{C_\psi^2\kappa_\ast\sqrt{r\teta\log n} + C_{\psi,n}^2\teta\kappa_\ast r\log n}\tau_s.
\]

  \item Remainder term. \eqref{eqn:rankr_one_step_bd} gives
  \begin{align*}
      \sup_{1\le t\le n-s} \norm[\Big]{\sum_{i=s+1}^{s+t} (\cI-\eta\cL)^{s+t-i}R_i} \lesssim C_\psi^4\teta^2\kappa_\ast^2r^{3/2}\tau_s \sum_{j=0}^\infty(1-\teta)^j \lesssim C_\psi^4\teta\kappa_\ast^2r^{3/2}\tau_s .
  \end{align*}
\end{itemize}
Combining the three bounds, we obtain that, conditionally on $\cF_s$,
with probability at least $1-n^{-20}$,
\begin{equation}
  \|\cT_{s+t}\| \le [(1-\teta)^t+b_{\rm rk}]\tau_s, \qquad 0\le t\le n-s, \label{eqn:rankr_one_block}
\end{equation}
where
\[
    b_{\rm rk} :=C\sbr[\big]{C_\psi^2\kappa_\ast\sqrt{r\teta\log n} + C_{\psi,n}^2\teta\kappa_\ast r\log n + C_\psi^4\teta\kappa_\ast^2r^{3/2}}.
\]

We next bound $b_{\rm rk}$. Since $\teta=c_\eta(\log n)/n$, we have
\begin{align*}
  \kappa_\ast\sqrt{r\teta\log n} &= \frac{\Phi_{\rm rk}}{\sqrt r\,\log n},\\
  \teta\kappa_\ast r(\log n)^2 &= \frac{\Phi_{\rm rk}^2}{\kappa_\ast r\log n},\\
  \teta\kappa_\ast^2r^{3/2} &= \frac{\Phi_{\rm rk}^2} {\sqrt r\,(\log n)^3}.
\end{align*}

Since $\kappa_\ast,r\ge1$ and $C_{\psi,n}^2\lesssim C_\psi^2\log n$, it follows from~\eqref{eqn:rankr_cond_sample_sz_Phi} that
\begin{equation}
  b_{\rm rk} \lesssim \frac{\Phi_{\rm rk}}{\log n} + \frac{\Phi_{\rm rk}^2}{\log n} \lesssim \frac{1}{\sqrt{C_{\rm rk}}\,n^\varepsilon\log n}. \label{eqn:rankr_b_bound}
\end{equation}

\paragraph{Step 2: Block chaining.} Set $m_\eta := \ceil{(\log 4)/\teta}$. Then
\begin{equation}
  m_\eta\teta\ge\log4, \label{eqn:rankr_m_eta_lb}
\end{equation}
which combined with $1-\teta\le e^{-\teta}$ gives
\[
    (1-\teta)^{m_\eta} \le e^{-m_\eta\teta} \le\frac14.
\]
Define
\[
    \rho_{\rm rk} := (1-\widetilde\eta)^{m_\eta}+b_{\rm rk}.
\]
By \eqref{eqn:rankr_b_bound}, taking $C_{\rm rk}$ sufficiently large
also ensures $b_{\rm rk}\le1/4$, and hence
\[
    \rho_{\rm rk}\le\frac12.
\]

Applying the one-block bound \eqref{eqn:rankr_one_block}
successively at the deterministic block endpoints $s_j=jm_\eta$,
and using the tower property and a union bound over at most $n$ blocks,
we obtain, conditionally on $\cF_0$, with probability at least
$1-n^{-19}$,
\[
    \|\cT_{s_j}\| \le \rho_{\rm rk}^j\|\cT_0\|, \qquad \forall s_j\le n.
\]

For arbitrary $k\le n$, let
$j=\lfloor k/m_\eta\rfloor$. By monotonicity,
\[
    \|\cT_k\| \le \|\cT_{jm_\eta}\| \le \rho_{\rm rk}^j\|\cT_0\|.
\]
Since $j\ge k/m_\eta-1$ and $\rho_{\rm rk}\in(0,1)$,
\[
    \rho_{\rm rk}^j \le \rho_{\rm rk}^{-1} \exp\left\{\frac{k}{m_\eta}\log\rho_{\rm rk} \right\} = \rho_{\rm rk}^{-1} \exp\{-c_{\rm rk}k\teta\},
\]
where
\[
    c_{\rm rk} := -\frac{\log\rho_{\rm rk}}{m_\eta\teta}.
\]
To control the prefactor $\rho_{\rm rk}^{-1}$, note that $\teta\le1/2$ and $\log(1-\teta) \ge -\frac{\teta}{1-\teta} \ge -2\teta$.
Moreover, $m_\eta\teta \le \log4+\teta \le \log4+\frac12$.
Hence
\begin{equation}
  \rho_{\rm rk} \ge (1-\teta)^{m_\eta} \ge e^{-2m_\eta\teta} \ge \frac1{16e}. \label{eqn:rankr_rho_lb}
\end{equation}
Thus $\rho_{\rm rk}^{-1}\le16e$, and therefore
\begin{equation}
    \|\cT_k\| \leq 16e \exp\{-c_{\rm rk}k\teta\}\|\cT_0\|, \qquad k\le n.
    \label{eqn:rankr_exp_contraction}
\end{equation}

\paragraph{Step 3: Lower bounding $c_{\rm rk}$.}
By definition,
\begin{align*}
    c_{\rm rk} = -\frac{\log[(1-\teta)^{m_\eta}+b_{\rm rk}]}{m_\eta\teta} &= \frac{-\log(1-\teta)}{\teta} - \frac{1}{m_\eta\teta} \log\left(1+\frac{b_{\rm rk}}{(1-\teta)^{m_\eta}}\right)\\
    &\ge 1- \frac{b_{\rm rk}}{m_\eta\teta\,(1-\teta)^{m_\eta}},
\end{align*}
where the last line follows from $-\log(1-x)\ge x$ for $x\in(0,1)$ and $\log(1+x)\le x$ for $x\ge0$.

Using \eqref{eqn:rankr_m_eta_lb} and \eqref{eqn:rankr_rho_lb}, we obtain
\[
    c_{\rm rk} \ge 1-\frac{16e}{\log4}b_{\rm rk}.
\]
By \eqref{eqn:rankr_b_bound}, choosing $C_{\rm rk}$ sufficiently large yields
\[
    c_{\rm rk} \ge 1-\frac{1}{n^\varepsilon\log n}.
\]
Combining this with~\eqref{eqn:rankr_exp_contraction} proves \eqref{eqn:rankr_contraction}.

\paragraph{Additional conclusion under Assumption~\ref{assumption:init}.}
If, in addition, Assumption~\ref{assumption:init} and \eqref{eqn:rankr_cond_sample_sz_lem_further} hold, then
\begin{align*}
    \|\cT_0\| \le C_{\mathrm{init}} \delta_p^{-1}\sqrt{rd\log(1/\delta_p)}
    \lesssim \frac{\sqrt d\,n^{1/2-\epsilon}} {\sqrt{c_\eta}\kappa_\ast\sqrt r(\log n)^2}.
\end{align*}
Combining this with \eqref{eqn:rankr_contraction} gives
\[
    \|\cT_k\|
    \lesssim
    \frac{\sqrt d\,n^{1/2-\epsilon}}
    {\sqrt{c_\eta}\kappa_\ast\sqrt r(\log n)^2}
    \exp\left\{
        -\left(
            1-\frac{1}{n^\epsilon\log n}
        \right)k\teta
    \right\}.
\]
Since $n\teta=c_\eta\log n$, the stated bound for $\cT_n$ follows.

% ---------------------------------------------------------

\subsection{Proof of Lemma~\ref{lem:linearization}}
\label{sec:prf_linearization}

Denote $N_* = \floor{n c_\eta^*/ c_\eta}$. Lemma~\ref{lem:convergence_rate_optimal} implies that
\begin{align*}
    \PP(\mathscr{G}^c) \leq n^{-19}, \qquad \text{where } \mathscr{G} := \cbr{\norm{\cT_k} \leq 2\tau_* \text{ for all } k \in [N_*,n]}.
\end{align*}
Decomposing $\cT_n$ from $\cT_{N_*}$ gives
\begin{align*}
    \cT_n &= (\cI-\eta \cL)^{n-N_*} \cT_{N_*} + \cD_{N_*,n}^{(1,0)} + \cD_{N_*,n}^{(1,1)} + \cD_{N_*,n}^{(2)} + \cR_{N_*,n}
\end{align*}
where
\begin{align*}
    \cD_{N_*,n}^{(1,0)} &:= \eta \sum_{i=N_* + 1}^n (\cI-\eta \cL)^{n-i} (Z_i Y_i^\top)\\
    \cD_{N_*,n}^{(1,1)} &:= \eta \sum_{i=N_* + 1}^n (\cI-\eta \cL)^{n-i} \sbr{-\cT_{i-1}(Y_i Y_i^\top - \Lambda_*) + (Z_i Z_i^\top - \Lambda_{*,\perp})\cT_{i-1} -\cT_{i-1} Y_i Z_i^\top \cT_{i-1} }\\
    \cD_{N_*,n}^{(2)} &:= \sum_{i=N_* + 1}^n (\cI-\eta \cL)^{n-i} D_i^{(2)} \\
    \cR_{N_*,n} &:= \sum_{i=N_* + 1}^n (\cI-\eta \cL)^{n-i} R_i
\end{align*}
Then
\begin{align*}
    \norm[\Big]{\cT_n - \eta \sum_{i=1}^n (\cI-\eta \cL)^{n-i}(Z_iY_i^\top)} &\leq \underbrace{\norm{(\cI-\eta \cL)^{n-N_*} \cT_{N_*}}}_{=:\alpha_1} + \underbrace{\norm[\Big]{\eta \sum_{i=1}^{N_*} (\cI-\eta \cL)^{n-i} (Z_i Y_i^\top)}}_{\alpha_2}\\
    & + \underbrace{\norm{\cD_{N_*,n}^{(1,1)}}}_{=:\alpha_3} + \underbrace{\norm{\cD_{N_*,n}^{(2)}}}_{=:\alpha_4} + \underbrace{\norm{\cR_{N_*,n}}}_{=:\alpha_5}
\end{align*}
We consider each term separately.

\paragraph{(0) Preliminaries.} We first record some preliminary results that will be used in the subsequent analysis.

\textit{(i)}
Recall definition of $\tau_*$ \eqref{eqn:tau*_def}
\begin{align*}
    \tau_* &= 3C_{\ref{lem:1epoch_bound}} C_\psi^2 \kappa_* \max \rbr{\sqrt{\teta r \log n} \bnu, \teta r(\log n)\nu_2} \asymp C_{\ref{lem:1epoch_bound}} C_\psi^2 \kappa_* \sqrt{\teta r \log n} \; \bnu
\end{align*}
Here the last $\asymp$ follows from Lemma~\ref{lem:summarize_dynamics}(ii).

Since $\kappa_*\geq1$ and $\delta_p \in (0,1/e)$, the sample size condition~\eqref{eqn:cond_sample_sz_lem} implies that
\begin{align*}
    n^{-\varepsilon} \geq C_{\ref{lem:convergence_rate_optimal}}^{1/2} \kappa_* \sqrt{\teta r^3} (\log n)^{3/2}, \quad \text{for } C_{\ref{lem:convergence_rate_optimal}} \text{ large enough}.
\end{align*}
Thus,
\begin{equation}
    \begin{aligned}
        \tau_* \asymp C_{\ref{lem:1epoch_bound}} C_\psi^2 \kappa_* \sqrt{\teta r \log n} &\leq \frac{C_{\ref{lem:1epoch_bound}} C_\psi^2}{C_{\ref{lem:convergence_rate_optimal}}^{1/2}} \frac{1}{r n^{\varepsilon} \log n}\\
        &\leq \frac{1}{n^{\varepsilon} \log n} \qquad \text{for } C_{\ref{lem:convergence_rate_optimal}} \text{ large enough}.
    \end{aligned}
    \label{eqn:tau_leq1}
\end{equation}

\textit{(ii)}
Recall \eqref{eqn:varphi_definitions} and \eqref{eqn:varphi_def}
\begin{align*}
    \varphi_\sigma(\tau_*) &= C_{\ref{lem:1epoch_bound}} C_\psi^2 \kappa_* \sqrt{\teta r \log n}  (1+\tau_* \nu_\infty) (\tau_*+\bnu) \\
    \varphi_B(\tau_*) &= C_{\ref{lem:1epoch_bound}} C_\psi^2 \kappa_* \teta r (\log n)^2  (1+\tau_* \nu_2)(\tau_*+\nu_2) \\
    \varphi(\tau_*) &= \rbr{\varphi_\sigma \vee_{\mathrm{cw}} \varphi_B}(\tau_*)
\end{align*}
and we have shown in Lemma~\ref{lem:varphi}(iii) that $\varphi(\tau_*)< \tau_*$. Thus
\begin{align*}
    \max\rbr{\varphi_\sigma(\tau_*), \varphi_B(\tau_*)} \leq \varphi(\tau_*) < \tau_*
\end{align*}

\paragraph{(1) $\alpha_1$.} Under $\mathscr{G}$, we have
\begin{align*}
    \alpha_1 \leq 2(1-\teta)^{n-N_*} \tau_*
\end{align*}

\paragraph{(2) $\alpha_2$.} Using $n-i = (n-N_*) + (N_*-i)$, we have
\begin{align*}
    \alpha_2 \leq (1-\teta)^{n-N_*} \norm[\Big]{\eta \sum_{i=1}^{N_*} (\cI-\eta \cL)^{N_*-i} (Z_i Y_i^\top)}
\end{align*}
Denote $\Xi_{2,i} = \eta (\cI-\eta \cL)^{N_*-i} (Z_i Y_i^\top)$. To apply Freedman's inequality, similar to the proof of Lemma~\ref{lem:one_step_bound}, we have
\begin{align*}
    \max_{i \in [N_*]} \norm{\Xi_{2,i}} \leq \eta \norm{Z_i} \norm{Y_i} &\leq \eta C_{\psi,n}^2 \sqrt{\lambda_{1\sim r} \lambda_{r+1\sim d}}\\
    &\leq \eta C_{\psi,n}^2 r\lambda_1 \nu_2 =: B_2
\end{align*}
and
\begin{align*}
    &\quad \max \cbr[\Big]{\norm[\Big]{\sum_{i=1}^{N_*} \EE_{i-1} \Xi_{2,i} \Xi_{2,i}^\top}^{1/2}, \norm[\Big]{\sum_{i=1}^{N_*} \EE_{i-1} \Xi_{2,i}^\top \Xi_{2,i}}^{1/2}}\\
    &\leq \eta \max \cbr{\norm{\EE ZY^\top Y Z^\top}^{1/2}, \norm{\EE YZ^\top ZY^\top}^{1/2}} \sbr[\Big]{\sum_{i=1}^{N_*} (1-\teta)^{2(N_*-i)} }^{1/2}\\
    &\leq C_\psi^2 \eta \max \cbr{\sqrt{\lambda_{1 \sim r} \lambda_{r+1}}, \sqrt{\lambda_1 \lambda_{r+1 \sim d}}} \teta^{-1/2}\\ 
    &\leq C_\psi^2 \sqrt{\teta r} \kappa_*\bnu =: \sigma_2 
\end{align*}
Then Freedman's inequality implies that with probability exceeding $1 - n^{-20}$,
\begin{align*}
    \norm[\Big]{\eta \sum_{i=1}^{N_*} (\cI-\eta \cL)^{N_*-i} (Z_i Y_i^\top)} &\overset{(i)}{\leq} C_{\ref{lem:1epoch_bound}} \sbr[\big]{\sigma_2 \sqrt{\log n} + B_2 \log n}\\
    &= C_{\ref{lem:1epoch_bound}} C_\psi^2 \sqrt{\teta r} \, \kappa_* \bnu \sqrt{\log n} + C_{\ref{lem:1epoch_bound}}  C_\psi^2 \teta r \kappa_* \nu_2 (\log n)^2\\
    &= \sqrt{\teta \rho_\sigma \bnu} + \teta \rho_B \nu_2 \asymp \tau_*
\end{align*}
Here the constant in (i) is the same as that in Lemma~\ref{lem:1epoch_bound} since both invoke Freedman's inequality and the matrix has the same dimension $(d-r) \times r$. Therefore, we have
\begin{align*}
    \alpha_2 \lesssim (1-\teta)^{n-N_*} \tau_*
\end{align*}

\paragraph{(3) $\alpha_3$.} Denote
\begin{align*}
    S_1 &= \eta \sum_{i=N_* + 1}^n (\cI-\eta \cL)^{n-i} \sbr{-\cT_{i-1}(Y_i Y_i^\top - \Lambda_*)},\\
    S_2 &= \eta \sum_{i=N_* + 1}^n (\cI-\eta \cL)^{n-i} \sbr{(Z_i Z_i^\top - \Lambda_{*,\perp})\cT_{i-1}},\\
    S_3 &= \eta \sum_{i=N_* + 1}^n (\cI-\eta \cL)^{n-i} \sbr{-\cT_{i-1} Y_i Z_i^\top \cT_{i-1}}.
\end{align*}
We consider $S_1,S_2,S_3$ separately. 

\textit{Term $S_1$:} Denote
    \begin{align*}
        S_1^* &=  \sum_{i=N_* + 1}^n \Xi_{1,i}, \quad \text{where } \Xi_{1,i}:=\eta(\cI-\eta \cL)^{n-i} \sbr{\cT_{i-1}(Y_i Y_i^\top - \Lambda_*) } \bbi\cbr{\norm{\cT_{i-1}}\leq 2\tau_*}
    \end{align*}
    To apply Freedman's inequality on $S_1^*$, we can calculate
    \begin{align*}
        \max_i \|\Xi_{1,i}\|
        \leq 2\eta \tau_* \norm{Y_iY_i^\top - \Lambda_*} \leq 2\eta \tau_* (C_{\psi,n}^2 \lambda_{1 \sim r} + \lambda_1) &\leq 4C_{\psi,n}^2 \teta\tau_* 
        r \kappa_*\\
        &\lesssim C_\psi^2 \teta r \kappa_* \tau_* \log n
         =: B_{3,1}
    \end{align*}
    and
    \begin{align*}
        & \quad \max\cbr[\Big]{\norm[\big]{ \sum_{i=N_* + 1}^n \EE_{i-1} \Xi_{1,i} \Xi_{1,i}^\top}^{1/2}, \norm[\big]{ \sum_{i=N_* + 1}^n \EE_{i-1} \Xi_{1,i}^\top \Xi_{1,i}}^{1/2}} \\
        &\overset{(a)}{\leq} 2\eta \tau_* \sbr[\Big]{\sum_{i=N_* + 1}^n (1-\teta)^{2(n-i)} \norm{\EE_{i-1} (Y_i Y_i^\top - \Lambda_*)^2}}^{1/2} \overset{(b)}{\leq} 2 \eta \tau_* \cdot \teta^{-1/2} C_\psi^2 \sqrt{\lambda_1 \lambda_{1\sim r}}\\
        &\lesssim C_\psi^2 \sqrt{\teta r} \tau_* \kappa_* =: \sigma_{3,1}
    \end{align*}
    where (a) follows from
    \begin{align*}
        \norm{\bbi\cbr{\norm{\cT_{i-1}}\leq 2\tau_*} \EE_{i-1} \cT_{i-1}(Y_i Y_i^\top - \Lambda_*)(Y_i Y_i^\top - \Lambda_*)^\top \cT_{i-1}^\top} &\leq 4\tau_*^2 \norm{\EE_{i-1} (Y_i Y_i^\top - \Lambda_*)^2}\\
        \norm{\bbi\cbr{\norm{\cT_{i-1}}\leq 2\tau_*} \EE_{i-1}  (Y_i Y_i^\top - \Lambda_*)^\top \cT_{i-1}^\top \cT_{i-1} (Y_i Y_i^\top - \Lambda_*) } &\leq 4\tau_*^2 \norm{\EE_{i-1} (Y_i Y_i^\top - \Lambda_*)^2}
    \end{align*}
    and (b) is due to Lemma~\ref{lem:compare}(i).
    Freedman inequality (\cref{thm:freedman}) then implies that with probability at least $1-n^{-20}$,
    \begin{align*}
        \norm{S_1^*} \leq C_{\ref{lem:1epoch_bound}} \sbr{\sigma_{3,1} \sqrt{\log n} + B_{3,1} \log n}.
    \end{align*}
    On good event $\mathscr{G}$, we have $S_1 \bbi_{\mathscr{G}} = S_1^* \bbi_{\mathscr{G}}$. Thus, with probability at least $1-n^{-20}$,
    \begin{align*}
        \norm{S_1} &\leq C_{\ref{lem:1epoch_bound}} \sbr{\sigma_{3,1} \sqrt{\log n} + B_{3,1} \log n}.\\
        &\leq C_{\ref{lem:1epoch_bound}} C_\psi^2 \kappa_* \sqrt{r} \sbr{\sqrt{\teta \log n } + \sqrt{r} \teta \log^2 n} \tau_* \\
        &\lesssim C_{\ref{lem:1epoch_bound}} C_\psi^2 \kappa_* \sqrt{r \teta \log n}\; \tau_*
    \end{align*}
    Here the last line follows from $\sqrt{\teta \log n} \gg \sqrt{r} \teta \log^2 n$, which is a consequence of the sample size condition \eqref{eqn:cond_sample_sz_lem}.
    To see this, note that $\kappa_* \geq 1$ and $\delta_p \in (0,1/e)$, thus \eqref{eqn:cond_sample_sz_lem} implies
    \begin{align*}
        n^{-2\varepsilon} \geq C_{\ref{lem:convergence_rate_optimal}} \teta \kappa_*^2 r^3 (\log n)^3
    \end{align*}
    
    Applying $n^{-2\varepsilon} \geq C_{\ref{lem:convergence_rate_optimal}} \teta \kappa_*^2 r^3 (\log n)^3$ once more gives
    \begin{align*}
        C_{\ref{lem:1epoch_bound}} C_\psi^2 \kappa_* \sqrt{r \teta \log n}\; \tau_* \leq \frac{1}{n^{\varepsilon} \log n} \tau_*
    \end{align*}
    Thus, we have
    \begin{align*}
        \norm{S_1} &\lesssim C_{\ref{lem:1epoch_bound}} C_\psi^2 \kappa_* \sqrt{r \teta \log n}\; \tau_* \lesssim \frac{\tau_*}{n^{\varepsilon} \log n}
    \end{align*}

\textit{Term $S_2$:} Similar to $S_1$, by exploiting $\lambda_{r+1 \sim d} = \lambda_{1\sim r} \nu_2^2$ and $\lambda_{r+1}=\lambda_1 \nu_\infty^2$, we can obtain that with probability at least $1-n^{-20}$, 
    \begin{align*}
        \norm{S_2} &\lesssim C_{\ref{lem:1epoch_bound}} C_\psi^2 \kappa_* \sqrt{r} \sbr{\sqrt{\teta \log n } \,\nu_2 \nu_\infty + \sqrt{r} \teta \nu_2^2 \log^2 n} \tau_* \\
        &\leq C_{\ref{lem:1epoch_bound}} C_\psi^2 \kappa_* \sqrt{r} \sbr{\sqrt{\teta \log n } \,\bnu + \sqrt{r} \teta \bnu^2 \log^2 n} \tau_*\\
        &\lesssim \sbr{\tau_* + \tau_*^2}\tau_* \lesssim \tau_*^2
    \end{align*}
where the last step follows since $\tau_* \lesssim n^{-\varepsilon}\log^{-1} n$ by \eqref{eqn:cond_sample_sz_lem}.

\textit{Term $S_3$:} define
    \begin{align*}
        S_3^*=\sum_{i=N_* + 1}^n \Xi_{3,i}, \quad \text{where } \Xi_{3,i} = \eta(\cI-\eta \cL)^{n-i} \sbr{\cT_{i-1} Y_i Z_i^\top \cT_{i-1} } \bbi\cbr{\norm{\cT_{i-1}}\leq 2\tau_*}
    \end{align*}
    Note that by Lemma~\ref{lem:basic_ineq}, we can obtain that $\max_i \norm{\Xi_{3,i}} \leq \eta (2\tau_*)^2 \norm{Y_i Z_i^\top}$ and 
    \begin{align*}
        \norm{\EE_{i-1} \Xi_{3,i} \Xi_{3,i}^\top}
        &\leq (1-\teta)^{2(n-i)} \eta^2 \norm[\big]{\EE_{i-1} \!\sbr{\cT_{i-1} Y_i Z_i^\top \cT_{i-1} } \! \sbr{\cT_{i-1} Y_i Z_i^\top \cT_{i-1} }^\top \!\!\bbi\cbr{\norm{\cT_{i-1}}\leq 2\tau_*}}\\
        &\leq (1-\teta)^{2(n-i)} \eta^2 (2\tau_*)^4 \norm{\EE_{i-1}\sbr{Y_i Z_i^\top Z_i Y_i^\top}}\\
        \norm{\EE_{i-1} \Xi_{3,i}^\top \Xi_{3,i}} &\leq (1-\teta)^{2(n-i)} \eta^2 \norm[\big]{\EE_{i-1} \sbr{\cT_{i-1} Y_i Z_i^\top \cT_{i-1} }^\top \! \!\sbr{\cT_{i-1} Y_i Z_i^\top \cT_{i-1} } \! \bbi\cbr{\norm{\cT_{i-1}}\leq 2\tau_*}}\\
        &\leq (1-\teta)^{2(n-i)} \eta^2 (2\tau_*)^4 \norm{\EE_{i-1} \sbr{Z_i Y_i^\top Y_i Z_i^\top}}
    \end{align*}
    Using results from (2) $\alpha_2$, we have
    \begin{align*}
        \norm{S_3^*} &\lesssim \tau_*^3
    \end{align*}

\textit{Combining $S_1,S_2,S_3$:} We have
\begin{align*}
    S_1 + S_2 + S_3 &\lesssim \tau_* \sbr{C_{\ref{lem:1epoch_bound}} C_\psi^2 \kappa_* \sqrt{r \teta \log n} + \tau_* + \tau_*^2}
\end{align*}
up to absolute constant.

\paragraph{(4) $\alpha_4$.} Define
\begin{align*}
    \cD_{N_*,n}^{(2) *} &:= \sum_{i=N_* + 1}^n \Xi_{4,i}, \quad \text{where } \Xi_{4,i} = (\cI-\eta \cL)^{n-i} D_i^{(2)} \bbi\cbr{\norm{\cT_{i-1}}\leq 2\tau_*}
\end{align*}
To apply Freedman's inequality, by Lemma~\ref{lem:one_step_bound} and~\ref{lem:basic_ineq}, we can calculate
\begin{align*}
    \max_i \norm{\Xi_{4,i}} &\leq \norm[\big]{D_i^{(2)} \bbi \cbr{\norm{\cT_{i-1}} \leq 2\tau_*}} \lesssim C_\psi^4 \teta^2 r^2 \kappa_*^2 (1+\tau_* \nu_2)^2 (\tau_* + \nu_2) \log^2 n =: B_4
\end{align*}
and
\begin{align*}
    &\quad \max\cbr[\Big]{\norm[\big]{\sum_{i=N_* + 1}^n \EE_{i-1} \Xi_{4,i} \Xi_{4,i}^\top}^{1/2}, \norm[\big]{\sum_{i=N_* + 1}^n \EE_{i-1} \Xi_{4,i}^\top \Xi_{4,i}}^{1/2}}\\ 
    &\leq \frac{1}{\sqrt{\teta}} \max \cbr[\Big]{\norm[\big]{\EE_{i-1} D_i^{(2)} D_i^{(2)\top} \! \bbi\cbr{\norm{\cT_{i-1}}\leq 2\tau_*} }^{1/2}, \norm[\big]{\EE_{i-1} D_i^{(2)\top} \! D_i^{(2)}\bbi\cbr{\norm{\cT_{i-1}}\leq 2\tau_*} }^{1/2} }\\
    &\leq C_\psi^4 \teta^{3/2} r^{3/2} \kappa_*^2 (1+\tau_* \nu_\infty)^2 (\tau_* + \bnu) =: \sigma_4
\end{align*}
Then Freedman's inequality implies that with probability at least $1-n^{-20}$,
\begin{align*}
    &\quad \norm[\big]{\cD_{N_*,n}^{(2)*}} \leq C_{\ref{lem:1epoch_bound}} \sbr{\sigma_4 \sqrt{\log n} + B_4 \log n}
\end{align*}
On good event $\mathscr{G}$, we have $\cD_{N_*,n}^{(2)} \bbi_{\mathscr{G}} = \cD_{N_*,n}^{(2)*} \bbi_{\mathscr{G}}$. Thus, with probability at least $1-n^{-20}$,
\begin{align*}
    \norm{\cD_{N_*,n}^{(2)}} &\leq \varphi_{\sigma,*}^{(2)} + \varphi_{B,*}^{(2)}\\
    \text{where} \qquad \varphi_{\sigma,*}^{(2)}&:=C_{\ref{lem:1epoch_bound}} C_\psi^4 \teta^{3/2}  \kappa_*^2 r^{3/2} (1+\tau_* \nu_\infty)^2 (\tau_* + \bnu) \sqrt{\log n}\\ 
    \varphi_{B,*}^{(2)} &:= C_{\ref{lem:1epoch_bound}} C_\psi^4 \teta^2  \kappa_*^2 r^2 (1+\tau_* \nu_2)^2 (\tau_* + \nu_2) \log^3 n
\end{align*}
Comparing $\varphi_{\sigma,*}^{(2)}$ and $\varphi_{B,*}^{(2)}$ with $\varphi_\sigma(\tau_*)$ and $\varphi_B(\tau_*)$ gives
\begin{align*}
    \frac{\varphi_{\sigma,*}^{(2)}}{\varphi_{\sigma}(\tau_*)} &= C_\psi^2 \teta \kappa_*  r (1+\tau_* \nu_\infty) \leq 2C_\psi^2 \teta r \kappa_*\\
    &= 2\sqrt{\teta} \cdot C_\psi^2 \sqrt{\teta}  \, \kappa_* r
\end{align*}
and
\begin{align*}
    \frac{\varphi_{B,*}^{(2)}}{\varphi_{B}(\tau_*)} &= C_\psi^2 \teta \kappa_* r (1+\tau_* \nu_2) \log n\\
    &\leq \sqrt{\teta} \cdot C_\psi^2 \sqrt{\teta}  \kappa_* r \rbr{1+ \tau_* \sqrt{d \nu_\infty}} \log n
\end{align*}

Combining the above two bounds gives
\begin{align*}
    \alpha_4 &\lesssim \tau_* \sqrt{\teta} \cdot C_\psi^2 \sqrt{\teta} \kappa_* r \max(1, \tau_* \sqrt{d \nu_\infty}) \log n\\
    &\lesssim \tau_* \sqrt{\teta} \cdot \frac{1}{n^{\varepsilon} \log n}
\end{align*}
where the last step follows from $\tau_* \leq 1$ and $C_\psi^2 \sqrt{\teta}\, \kappa_* r \max(1, \sqrt{d \nu_\infty}) \lesssim n^{-\varepsilon} \log^{-3/2} n$, which is a consequence of the sample size condition \eqref{eqn:cond_sample_sz_lem}.

\paragraph{(5) $\alpha_5$.} Similar to the proof of Claim~\ref{claim:multi_step_concentration}, we have
\begin{align*}
    \alpha_5 \lesssim \tau_* \cdot C_\psi^2 \sqrt{\teta} \kappa_* r(1+\tau_* \nu_\infty) &\overset{(i)}{\lesssim} \tau_* \cdot C_\psi^2 \sqrt{\teta} \kappa_* r\\
    &\overset{(ii)}{\lesssim} \tau_* \cdot \frac{1}{n^{\varepsilon} \log n}
\end{align*}
where (i) is due to $\tau_* \leq 1$ and $\nu_\infty \leq 1$; (ii) is a consequence of the sample size condition \eqref{eqn:cond_sample_sz_lem}.
\paragraph{(6) Collecting all terms,} we have with probability at least $1-n^{-18}$,
\begin{align*}
    \sum_{i=1}^5 \alpha_i &\lesssim \tau_* \sbr{e^{-\teta (n-N_*)} + C_{\ref{lem:1epoch_bound}} C_\psi^2 \kappa_* \sqrt{r \teta \log n} + \tau_* + \sqrt{\teta} \cdot \frac{1}{n^{\varepsilon} \log n} }
\end{align*}

Note that $(1-\teta)^{n-N_*}\leq e^{-\teta (n-N_*)}$.
Since $N_* \leq \tfrac{c_\eta^*}{c_\eta} n$, we have $n-N_* \geq n (1 - \frac{c_\eta^*}{c_\eta})$ and
\begin{align*}
    \teta (n-N_*) \geq c_\eta \frac{\log n}{n} \cdot n(1 - c_\eta^*/c_\eta) = (c_\eta - c_\eta^*) \log n \geq \frac{1}{2} \log n.
\end{align*}
Thus, $(1-\teta)^{n-N_*} \leq e^{-\frac{1}{2} \log n} = n^{-1/2} \lesssim \teta^{1/2}$. 
Since $C_{\ref{lem:1epoch_bound}}\geq 1$, $C_\psi\geq 1$, $\kappa_* \geq 1$, we finally have
\begin{align*}
    \sum_{i=1}^5 \alpha_i &\lesssim \tau_* \sbr{C_{\ref{lem:1epoch_bound}} C_\psi^2 \kappa_* \sqrt{r \teta \log n} + \tau_*  } \lesssim \tau_* \cdot \frac{1}{n^{\epsilon}\log n}.
\end{align*}

\section{Analysis of Oja's algorithm: unbounded sub-Gaussian observations}
\label{sec:oja_subG}

This appendix extends the bounded-observation analysis of Appendix~\ref{sec:oja_bdd} to the original sub-Gaussian model in Assumption~\ref{assumption:X}. The argument proceeds by jointly truncating the signal and tail coordinates and controlling the resulting
covariance perturbation.
The proof has three steps.
\begin{itemize}[leftmargin=1.5em]
  \item First, Lemma~\ref{lem:reduction_to_bounded} constructs the truncated observations and shows that they satisfy the required almost-sure signal and tail bounds, up to a small covariance perturbation $H$. 

  \item  Second, we show that this perturbation can be absorbed into the drift remainder, yielding the one-step bounds in Lemma~\ref{lem:one_step_bound-truncation}. These bounds allow the three main arguments of Appendix~\ref{sec:oja_bdd} to be transferred to the truncated process: convergence to the noise floor (\cref{lem:convergence_rate_optimal-trunc}), exact-rank geometric contraction (\cref{lem:rankr-trunc}), and linearization (\cref{lem:linearization-truncation}).

  \item Finally, Lemma~\ref{lem:lower_bound} proves a lower bound for the original linearized term directly under Assumption~\ref{assumption:X} and $Y\perp Z$.
\end{itemize}
Appendix~\ref{sec:prf_sec_convergence} combines these results with the truncation coupling and Haar initialization to prove Theorems~\ref{thm:convergence_rate_optimal} and~\ref{thm:rankr}, together with Propositions~\ref{prop:linearization}--\ref{prop:lower_bd}.

Recall $Y$, $Z$, $\tY$ and $\tZ$ from \eqref{eqn:Y_Z_defn}, we have
\begin{align}
  X = U_* Y + U_{*,\perp} Z = U_* \Lambda_*^{1/2} \tY + U_{*,\perp} \Lambda_{*,\perp}^{1/2} \tZ. \label{eqn:X_decomposition}
\end{align}

\begin{lemma}[Truncation]
\label{lem:reduction_to_bounded}
Under Assumption~\ref{assumption:X}, let $W=\rbr{\begin{smallmatrix} \tY\\ \tZ \end{smallmatrix}}$ where $\tY=U_*^\top \tX$ and $\tZ=U_{*,\perp}^\top \tX$. Then 
\[
\EE W=0,\qquad \EE WW^\top=I_d,\qquad \|W\|_{\psi_2}\le C_\psi.
\]
Moreover, there exist universal constants \(\tC_1, \tC_2, \tC_3>0\) such that, for every
\(\delta_{\tr}\in(0, e^{-1})\), one can construct a random vector
\[
\oW=
\begin{pmatrix}
  \bar{\tY}\\ \overline{\tZ}
\end{pmatrix}
\]
with the following properties.

\begin{enumerate}[leftmargin=2em, label=\textup{(\roman*)}]
  \item $\EE\overline W=0$ and $\PP(\overline W\neq W)\leq \delta_{\tr}$.

  \item Writing $\EE\overline W\overline W^\top = I_d+H$, then $\|H\| \le \tC_1 C_{\psi}^2\delta_{\tr}\log(e/\delta_{\tr})$.

  \item $\|\overline W\|_{\psi_2}\le \tC_2 C_{\psi}$.

  \item Defining $\ooY := \Lambda_*^{1/2}\overline{\tY}$ and $\ooZ := \Lambda_{*,\perp}^{1/2} \overline{\tZ}$, then $\EE \overline{Y} = 0$, $\EE \overline{Z} = 0$ and
    \begin{align*}
      \|\ooY\|_2 &\le \tC_3 C_{\psi} \sqrt{ \lambda_{1\sim r} \log(e/\delta_{\tr}) } \qquad \text{almost surely},\\
      \|\ooZ\|_2 &\le \tC_3 C_{\psi} \sqrt{ \lambda_{r+1\sim d} \log(e/\delta_{\tr}) } \qquad \text{almost surely}.
    \end{align*}
\end{enumerate}

\end{lemma}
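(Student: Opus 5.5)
The first claims about $W$ are immediate. Since $(U_*,U_{*,\perp})$ is orthogonal, $W=(U_*,U_{*,\perp})^\top\tX$ is an orthogonal image of $\tX$. Hence $\EE W=0$ and $\EE WW^\top=I_d$. Also $\|W\|_{\psi_2}=\sup_{\|v\|=1}\|v^\top W\|_{\psi_2}=\|\tX\|_{\psi_2}\le C_\psi$, because $v\mapsto (U_*,U_{*,\perp})v$ preserves the unit sphere.

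For the construction, I would truncate jointly on the event
\[
\mathcal E:=\bigl\{\|\Lambda_*^{1/2}\tY\|^2\le t^2\lambda_{1\sim r}\bigr\}\cap\bigl\{\|\Lambda_{*,\perp}^{1/2}\tZ\|^2\le t^2\lambda_{r+1\sim d}\bigr\},\qquad t=\tC C_\psi\sqrt{\log(e/\delta_{\tr})}.
\]
I would then set $\oW:=W\bbi_{\mathcal E}-\EE[W\bbi_{\mathcal E}]$. Using Hanson--Wright (or Lemma~\ref{lem:kth_moment}-type bounds on $\|\Sigma^{1/2}\tX\|$), the quadratic forms $\tY^\top\Lambda_*\tY$ and $\tZ^\top\Lambda_{*,\perp}\tZ$ concentrate at scale $C_\psi^2\lambda_{1\sim r}$ and $C_\psi^2\lambda_{r+1\sim d}$ respectively, with sub-exponential tails. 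This gives $\PP(\mathcal E^c)\le\delta_{\tr}/2$ for $\tC$ large.

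Recentering makes $\EE\oW=0$, but it breaks $\oW=W$ on $\mathcal E$. To restore (i), I would instead use a symmetrization trick: replace $W\bbi_{\mathcal E}$ by $W\bbi_{\mathcal E}+W'\bbi_{\mathcal E^c}$-type constructions. A cleaner option is to define $\oW=W$ on $\mathcal E$ and $\oW=-\EE[W\bbi_{\mathcal E}]/\PP(\mathcal E^c)$ on $\mathcal E^c$. This gives $\EE\oW=\EE[W\bbi_{\mathcal E}]-\EE[W\bbi_{\mathcal E}]=0$ exactly, and $\PP(\oW\ne W)\le\PP(\mathcal E^c)\le\delta_{\tr}$.

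The value on $\mathcal E^c$ must itself be small enough for (iv). Since $\EE W=0$, we have $\EE[W\bbi_{\mathcal E}]=-\EE[W\bbi_{\mathcal E^c}]$. So the constant equals $\EE[W\mid\mathcal E^c]$, and its $\Lambda$-weighted norms are bounded by $\EE[\|\Lambda_*^{1/2}\tY\|\mid\mathcal E^c]$. This conditional mean is at most order $t\sqrt{\lambda_{1\sim r}}$ by the sub-exponential tail of $\|\Lambda_*^{1/2}\tY\|^2$ beyond level $t^2\lambda_{1\sim r}$, and similarly for the tail block. This is the \textbf{main obstacle}: bounding the conditional mean beyond the threshold uniformly in both blocks. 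I would handle it via the tail integral $\EE[\xi\bbi\{\xi>s\}]\lesssim (s+\sigma)\PP(\xi>s)$ for sub-exponential $\xi$, after possibly choosing $\mathcal E$ as a level set of the quadratic forms so that $\PP(\mathcal E^c)$ is comparable to the tail at $t$. If the ratio blows up, I would fall back to randomizing the threshold, or to a mixture that assigns $\mathcal E^c$ a bounded random vector with the required mean.

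It remains to check (ii) and (iii). For (ii), I would write $\EE\oW\oW^\top-I_d=-\EE[WW^\top\bbi_{\mathcal E^c}]+\PP(\mathcal E^c)\,mm^\top$ with $m=\EE[W\mid\mathcal E^c]$. For a unit vector $v$, we have $\EE[(v^\top W)^2\bbi_{\mathcal E^c}]\le\|v^\top W\|_{L^4}^2\PP(\mathcal E^c)^{1/2}$. This crude bound only gives $\sqrt{\delta_{\tr}}$, so I would sharpen it with the sub-Gaussian tail integral $\EE[(v^\top W)^2\bbi_A]\lesssim C_\psi^2\PP(A)\log(e/\PP(A))$, which yields $C_\psi^2\delta_{\tr}\log(e/\delta_{\tr})$. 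The rank-one term is bounded by $\PP(\mathcal E^c)\|m\|^2$. Controlling $v^\top m$ the same way, via $|\EE[v^\top W\bbi_A]|\lesssim C_\psi\PP(A)\sqrt{\log(e/\PP(A))}$, gives the same order. For (iii), $\oW$ is $W\bbi_{\mathcal E}$ plus a bounded constant-on-an-event term. Its $\psi_2$ norm is at most $\|W\|_{\psi_2}+\|v^\top m\|\,\|\bbi_{\mathcal E^c}\|_{\psi_2}$, and $|v^\top m|\lesssim C_\psi\sqrt{\log(e/\delta_{\tr})}$ while $\|\bbi_{\mathcal E^c}\|_{\psi_2}\lesssim 1/\sqrt{\log(1/\PP(\mathcal E^c))}$. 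The product is therefore $O(C_\psi)$.
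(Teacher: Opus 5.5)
Your handling of the preliminary claims and of (i)--(iii) is sound. In fact (ii) and (iii) survive even when $p:=\PP(\mathcal E^c)$ is far smaller than $\delta_{\tr}$. The reason is that $x\log(e/x)$ is increasing, and the bound $|v^\top m|\lesssim C_\psi\sqrt{\log(e/p)}$ is exactly offset by $\|\bbi_{\mathcal E^c}\|_{\psi_2}=(\log(1+1/p))^{-1/2}$.

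\textbf{The gap: (iv), at the step you flagged.} With a deterministic bad event, the replacement value is $m=\EE[W\mid\mathcal E^c]$. The only general control on it is $|u^\top m|\lesssim C_\psi\sqrt{\log(e/p)}$, which degrades without bound as $p\downarrow 0$, and nothing forces $p$ to be comparable to $\delta_{\tr}$. Your inequality $\EE[\xi\bbi\{\xi>s\}]\lesssim(s+\sigma)\PP(\xi>s)$ does not rescue this. It is false for general sub-exponential, even sub-Gaussian, $\xi$. Even if it held, it would say nothing about the $\tY$-block of $m$ on the part of $\mathcal E^c$ where only the $\tZ$-block exceeds its level.

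\textbf{Counterexample.} Take $d=2$, $r=1$, and $\tZ$ Rademacher independent of $\tY$. Let $\tY=s:=2\sqrt{\log(1/p)}$ with probability $p$; otherwise let $\tY$ follow a two-point law in $[-2,2]$ adjusted so that $\EE\tY=0$ and $\EE\tY^2=1$. Then $\|\tX\|_{\psi_2}$ is bounded by an absolute constant uniformly in $p$, since $p\,e^{s^2/16}=p^{3/4}$. For $2\le t<s$ your event is $\mathcal E^c=\{\tY=s\}$, so on it $\ooY=\sqrt{\lambda_1}\,s=2\sqrt{\lambda_1\log(1/p)}$. Sending $p\downarrow 0$ with $\delta_{\tr}$ fixed violates (iv) for any universal $\tC_3$. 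The same example breaks your tail-integral inequality.

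\textbf{Why your fallbacks fail, and the missing idea.} A randomized threshold still yields an exceedance event of tiny mass when the law has a gap, since every level in $(2,s)$ gives the same event. Assigning $\mathcal E^c$ any random vector with mean $m$ fails too, because it must have norm at least $\|m\|$ with positive probability. What you need, and what the paper does, is to spread the mean correction over extra mass borrowed from $\mathcal E$, using randomness external to $W$ (which the lemma permits).
\begin{itemize}
\item Let $U\sim\mathrm{Unif}(0,1)$ be independent of $W$, and set $\cB:=\mathcal E^c\cup(\mathcal E\cap\{U\le\theta\})$ with $\theta:=(\delta_{\tr}-p)/(1-p)$. Then $\PP(\cB)=\delta_{\tr}$ exactly.
\item Define $\oW:=W\bbi_{\cB^c}+b\,\bbi_{\cB}$ with $b=(b_Y,b_Z):=\EE[W\bbi_{\cB}]/\delta_{\tr}$. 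This gives $\EE\oW=0$ and $\PP(\oW\neq W)\le\delta_{\tr}$.
\item The rare-event estimate $\EE[|V|\bbi_{A}]\lesssim\sigma\,\delta_{\tr}\sqrt{\log(e/\delta_{\tr})}$, valid whenever $\|V\|_{\psi_2}\le\sigma$ and $\PP(A)\le\delta_{\tr}$, now has the right denominator. It gives $|u^\top b|\lesssim C_\psi\sqrt{\log(e/\delta_{\tr})}$ uniformly over unit $u$.
\item Applied to $u^\top\Lambda_*^{1/2}\tY$ and $u^\top\Lambda_{*,\perp}^{1/2}\tZ$, this yields $\|\Lambda_*^{1/2}b_Y\|\lesssim C_\psi\sqrt{\lambda_1\log(e/\delta_{\tr})}$ and the analogous bound for $b_Z$. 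That is (iv), and your arguments for (ii) and (iii) carry over verbatim.
\end{itemize}
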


\begin{proof}
  See Appendix~\ref{sec:prf_reduction_to_bounded}.
\end{proof}

Fix a constant $\delta_H\in(0,e^{-1})$ such that $\tC_1 C_\psi^2 \delta_H\log(e/\delta_H)\le \tfrac12$ where $\tC_1$ is the universal constant in
Lemma~\ref{lem:reduction_to_bounded}(ii), and set
\begin{equation}
  \delta_{\rm tr}:=n^{-20}\wedge\delta_H.
  \label{eqn:delta_tr}
\end{equation}
Such a constant $\delta_H$ exists because $x\mapsto x\log(e/x)$ is increasing on $(0,1)$, and $x\log(e/x)\to0$ as~$x\downarrow0$.
Apply Lemma~\ref{lem:reduction_to_bounded} with $\delta_{\rm tr}$, and let $H$ be the covariance perturbation in Lemma~\ref{lem:reduction_to_bounded}(ii). Then we have
\begin{equation}
  \begin{aligned}
  \|H\| &\le \frac12,\\
  \|H\| &\lesssim C_\psi^2 n^{-20}\log(en^{20}) \lesssim \teta,\\
  \|H\| &\lesssim C_\psi^4\teta\kappa_*r^{3/2}.
  \end{aligned}
  \label{eqn:H_control}
\end{equation}
Here the second line uses $\teta=c_\eta(\log n)/n$, $c_\eta\ge1/2$, and $n\ge4$, while the third uses $C_\psi\ge1$, $\kappa_*\ge1$, and $r\ge1$.

Let $\ooY$, $\ooZ$, $\overline \tY$, $\overline \tZ$ be the truncated random vectors from Lemma~\ref{lem:reduction_to_bounded} with $\delta_{\tr}$ as given in~\eqref{eqn:delta_tr}. Motivated by \eqref{eqn:X_decomposition}, define the truncated observation
\begin{equation}
  \ooX := U_* \ooY + U_{*,\perp} \ooZ = U_* \Lambda_*^{1/2} \overline{\tY}  + U_{*,\perp} \Lambda_{*,\perp}^{1/2} \overline{\tZ} \label{eqn:X_trunc_decomp}
\end{equation}
and
\begin{equation}
  \ooU_0 = U_0; \qquad \ooU_k = \qr{\ooU_{k-1} + \eta \ooX_k \ooX_k^\top \ooU_{k-1}}, \quad k \in [n]. \label{eqn:oja_update_trunc}
\end{equation}
Then Lemma~\ref{lem:reduction_to_bounded} implies that
\begin{itemize}[leftmargin=1.5em]
  \item simultaneously for all $k \in [n]$,
  \begin{equation}
    \|\ooY_k\|_2 \leq C_{\psi,n} \sqrt{\lambda_{1\sim r}}, \qquad \|\ooZ_k\|_2 \leq C_{\psi,n} \sqrt{\lambda_{r+1\sim d}}, \label{eqn:trunc_boundedness}
  \end{equation}
  where $C_{\psi,n} = C_{\mathsf{noise}} C_{\psi} \sqrt{\log n}$ for some universal constant $C_{\mathsf{noise}}>0$.

  \item \eqref{eqn:delta_tr} implies $n \delta_{\tr} \leq n^{-19}$. Hence
  \begin{align}
    \ooX_k = X_k, \qquad \ooU_k = U_k, \qquad \forall k \in [n], \qquad \text{with probability at least } 1-n^{-19}. \label{eqn:trunc_coupling}
  \end{align}
\end{itemize}

Therefore, it suffices to show $\ooU_k$ is close to $U_*$. To this end, define
\begin{align*}
  \overline \cC_k:=U_\ast^\top\overline U_k, \qquad \ooS_k:=U_{\ast,\perp}^\top\overline U_k,\qquad\overline \cT_k := \ooS_k \overline \cC_k^{-1},
\end{align*}
we show that $\overline \cT_k$ satisfies analogues of Lemma~\ref{lem:recurrence}--\ref{lem:linearization} only up to constant factors.

\paragraph{Tangent recursion for the truncated process.} Applying Sherman--Morrison formula as in Lemma~\ref{lem:recurrence} to the update \eqref{eqn:oja_update_trunc} gives
\begin{align}
  \oocT_+ - \oocT = \eta \frac{ \ooF_+ }{ 1+\eta\oozeta_+ } = \eta\ooF_+ - \eta^2 \frac{ \oozeta_+\ooF_+ }{ 1+\eta\oozeta_+}, \label{eq:dT_split_trunc}
\end{align}
where
\begin{align*}
  \ooF_+ := (-\oocT \ooY_+ + \ooZ_+) (\ooY_+^\top+\ooZ_+^\top\oocT) \in \RR^{(d-r)\times r},\qquad
  \oozeta_+ = (\ooY_+^\top+\ooZ_+^\top\oocT) \ooY_+ \in \RR.
\end{align*}
Similar to \eqref{eqn:drift_split}--\eqref{eqn:D_decomposition}, we can obtain
\begin{align} \label{eqn:linearization_trunc1}
  \oocT_+ - \oocT = (\cI - \eta \oocL) \oocT + \ooD_+ +  \ooR_+,
\end{align}
where
\begin{align*}
  \oocL \oocT &:= -\EE' \ooF_+,\\
  \ooD_+ &:= (\oocT_+-\oocT) - \EE' (\oocT_+-\oocT)= \underbrace{\eta (\ooF_+ - \EE'\ooF_+ )}_{=: \ooD_+^{(1)}} + \underbrace{\eta^2 \left( - \frac{ \oozeta_+ \ooF_+}{1+\eta \oozeta_+}  + \EE' \frac{\oozeta_+  \ooF_+}{1+\eta  \oozeta_+}\right) }_{=: \ooD_+^{(2)}},\\
  \ooR_+ &:= -\eta^2 \EE'\sbr{\frac{ \oozeta_+\ooF_+ }{ 1+\eta\oozeta_+ }}.
\end{align*}
Note that due to covariance perturbation $H$ in Lemma~\ref{lem:reduction_to_bounded}~(ii), $\oocL$ differs from $\cL$. Denote $\ooR_+^H:=\ooR_+ + \eta (\cL-\oocL)\oocT$, we have
\begin{align}
  \oocT_+ = (\cI - \eta \cL) \oocT + \ooD_+ +  \ooR_+^H. \label{eqn:linearization_trunc2}
\end{align}

\paragraph{Effect of the covariance perturbation.}
Unlike the original coordinates, the truncated coordinates need not satisfy $\mathbb E[\ooZ\ooY^\top]=0$. Consequently, the conditional drift operator $\oocL$ differs from $\cL$. We now quantify this difference and show that it can be included in the drift remainder.
Let $H= \rbr{\begin{smallmatrix} H_{11} & H_{12} \\ H_{21} & H_{22}\end{smallmatrix}}$ and
% \begin{align}
%   \ooLambda_*:= \EE\ooY\ooY^\top &= \Lambda_*+A_H, & A_H &:= \Lambda_\ast^{1/2}H_{11}\Lambda_\ast^{1/2},\label{eq:trunc-cov-Y}
%   \\
%   \ooLambda_{*,\perp}:=\EE\ooZ\ooZ^\top &= \Lambda_{\ast,\perp}+B_H, & B_H &:= \Lambda_{\ast,\perp}^{1/2} H_{22} \Lambda_{\ast,\perp}^{1/2},\label{eq:trunc-cov-Z}
%   \\
%   \ooLambda_{*,\times} := \EE\ooZ\ooY^\top&=C_H, &C_H&:= \Lambda_{\ast,\perp}^{1/2} H_{21}\Lambda_\ast^{1/2}.
%   \label{eq:trunc-cov-ZY}
% \end{align}

\begin{equation}
  \begin{aligned}
    \ooLambda_*:= \EE\ooY\ooY^\top &= \Lambda_*+A_H, & A_H &:= \Lambda_\ast^{1/2}H_{11}\Lambda_\ast^{1/2},\\
    \ooLambda_{*,\perp}:=\EE\ooZ\ooZ^\top &= \Lambda_{\ast,\perp}+B_H, & B_H &:= \Lambda_{\ast,\perp}^{1/2} H_{22} \Lambda_{\ast,\perp}^{1/2},\\
    \ooLambda_{*,\times} := \EE\ooZ\ooY^\top&=C_H, &C_H&:= \Lambda_{\ast,\perp}^{1/2} H_{21}\Lambda_\ast^{1/2}.
  \end{aligned}
  \label{eqn:cov_trunc}
\end{equation}
Then
\begin{align*}
  \oocL \oocT = \oocT \ooLambda_* + \oocT \ooLambda_{*,\times}^\top \oocT - \ooLambda_{*,\times} - \ooLambda_{*,\perp} \oocT = \cL \oocT + \oocT A_H + \oocT C_H^\top \oocT - C_H - B_H \oocT.
\end{align*}

Since every block of \(H\) has operator norm at most \(\|H\|\), we have
\begin{align}\label{eq:trunc-ABC-bounds}
  \|A_H\| &\leq \lambda_1\|H\|, \quad \|B_H\| \leq \lambda_{r+1}\|H\| =\lambda_1\nu_\infty^2\|H\|, \quad  \|C_H\| \leq \sqrt{\lambda_1\lambda_{r+1}}\|H\| = \lambda_1\nu_\infty\|H\|,
\end{align}
which implies
\begin{align}
  \norm{(\oocL - \cL) \oocT}
  &\leq \norm{H} \lambda_1 \rbr{\|\oocT\| + \nu_\infty \|\oocT\|^2 + \nu_\infty + \nu_\infty^2 \|\oocT\|}\nonumber \\
  &= \norm{H} \lambda_1 (1+\|\oocT\| \nu_\infty)(\|\oocT\| + \nu_\infty).\label{eqn:L_diff}
\end{align}

\paragraph{One-step bounds for the truncated process.}
We next verify that the centered noise $\ooD_+$, $\ooD_+^{(2)}$ and modified drift remainder $\ooR_+$ in
\eqref{eqn:linearization_trunc2} satisfy the same bounds as their counterparts $D_+$, $D_+^{(2)}$ and $R_+$ in Lemma~\ref{lem:one_step_bound}.

\begin{lemma}[One step bound] \label{lem:one_step_bound-truncation}
    Assume Assumption~\ref{assumption:X} holds, and let the truncated process be
  constructed using \eqref{eqn:delta_tr}. Assume $\overline \cT$ is well-defined with $\|\oocT\|=\tau$. If $ C_{\psi,n}^2 \teta \kappa_* r (1+\tau \nu_2) \leq \frac{1}{2}$, 
    then the following hold.

    \begin{enumerate}[leftmargin=2em, label=\textup{(\Roman*)}]
        \item Noise $\overline D_+$: $\|\overline D_+\| \leq 4C_{\psi,n}^2 \teta \kappa_* r (1 + \tau \nu_2) (\tau + \nu_2)$ almost surely and
        \begin{align*}
            \max \cbr[\Big]{\norm[\big]{\EE' \overline D_+ \overline D_+^\top}^{1/2} , \norm[\big]{\EE' \overline D_+^\top \overline D_+}^{1/2}} &\lesssim C_\psi^2  \teta  \kappa_* \sqrt{r} \rbr{1 + \tau \nu_\infty} \rbr{\tau + \bnu} .
        \end{align*}

        \item Higher-order noise $\overline D_+^{(2)}$: $\norm[\big]{\overline D_+^{(2)}} \leq 4 C_{\psi,n}^4  \teta^2  \kappa_*^2 r^2 (1+\tau \nu_2)^2 (\tau+\nu_2)$ almost surely and
        \begin{align*}
            \max \cbr[\Big]{\norm[\big]{\EE' \overline D_+^{(2)} \overline D_+^{(2)\top}}^{1/2} , \norm[\big]{\EE' \overline D_+^{(2)\top} \overline D_+^{(2)}}^{1/2}} &\lesssim C_{\psi}^4 \teta^2  \kappa_*^2 r^{3/2} \rbr{1+ \tau \nu_\infty}^2 \rbr{\tau + \bnu}.
        \end{align*}

        \item Drift remainder $\overline R_+^H$: $\norm[\big]{\overline R_+^H} \lesssim C_{\psi}^4 \teta^2  \kappa_*^2 r^{3/2} \rbr{1+ \tau \nu_\infty}^2 \rbr{\tau + \nu_\infty}$.
    \end{enumerate}

\end{lemma}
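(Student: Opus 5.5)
The plan is to repeat the proof of Lemma~\ref{lem:one_step_bound} essentially line by line for the truncated quantities. The only genuinely new term is the drift perturbation $\eta(\cL-\oocL)\oocT$ inside $\ooR_+^H$.

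\emph{Parts (I) and (II).} The proof of Lemma~\ref{lem:one_step_bound} used only a few properties of $(Y_+,Z_+)$:
\begin{itemize}
\item the almost-sure bounds \eqref{eqn:one_step_Y_Z_bound}, which now hold for $(\ooY_+,\ooZ_+)$ by \eqref{eqn:trunc_boundedness};
\item the sub-Gaussian bounds \eqref{eqn:one_step_V1V2_sub_gaussian}, which follow from $\ooY=\Lambda_*^{1/2}\overline{\tY}$, $\ooZ=\Lambda_{*,\perp}^{1/2}\overline{\tZ}$ and $\|\oW\|_{\psi_2}\le \tC_2 C_\psi$ (Lemma~\ref{lem:reduction_to_bounded}(iii));
\item the moment bounds from Lemma~\ref{lem:kth_moment} of the form $[\EE'\|V\|^k]^{1/k}\lesssim C_\psi[\tr\cov'(V)]^{1/2}$ for linear images $V$ of $\oW$.
\end{itemize}
For the last item, the covariance of $\oW$ is $I_d+H$ with $\|H\|\le 1/2$. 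Hence $\cov'(\ooV_2)\preceq \tfrac32\,\cov'$ of the untruncated analogue, and likewise for $\ooV_1$. So every trace bound used earlier (for $\tv_{2,4},\tv_{1,4},\tv_{2,8}$ and $[\EE'\oozeta_+^4]^{1/4}$) survives up to a constant factor.

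The denominator bound $|1+\eta\oozeta_+|\ge 1/2$ follows exactly as before from the stated hypothesis. The deterministic bounds use only almost-sure norms, so they carry over verbatim. The variance bounds use second-moment domination $\EE' DD^\top\preceq \EE'(\oocT_+-\oocT)(\oocT_+-\oocT)^\top$, which needs no centering of $\ooF_+$. Thus (I) and (II) hold with the same constants up to absolute factors.

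\emph{Part (III).} I would write $\ooR_+^H=\ooR_+ +\eta(\cL-\oocL)\oocT$. For $\ooR_+$, the argument of Lemma~\ref{lem:one_step_bound}(III) via Lemma~\ref{lem:basic_ineq} gives $\|\ooR_+\|\le 2\eta^2\|\EE'[\oozeta_+^2\ooF_+\ooF_+^\top]\|^{1/2}$. With the moment bounds above this yields $\|\ooR_+\|\lesssim C_\psi^4\teta^2\kappa_*^2r^{3/2}(1+\tau\nu_\infty)^2(\tau+\nu_\infty)$.

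For the perturbation term, \eqref{eqn:L_diff} gives
\[
\eta\|(\cL-\oocL)\oocT\|\le \eta\lambda_1\|H\|(1+\tau\nu_\infty)(\tau+\nu_\infty)=\teta\kappa_*\|H\|(1+\tau\nu_\infty)(\tau+\nu_\infty).
\]
Invoking the third line of \eqref{eqn:H_control}, $\|H\|\lesssim C_\psi^4\teta\kappa_* r^{3/2}$, this is at most $C_\psi^4\teta^2\kappa_*^2r^{3/2}(1+\tau\nu_\infty)(\tau+\nu_\infty)$. Since $1+\tau\nu_\infty\le(1+\tau\nu_\infty)^2$, it is dominated by the claimed bound.

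\emph{Main obstacle.} The delicate point is the moment step. For the truncated vector we no longer have $\EE'\ooZ\ooY^\top=0$ or an exactly identity covariance. I must check that Lemma~\ref{lem:kth_moment} is applied only through $\psi_2$-norms and trace-of-covariance bounds, never through exact orthogonality; it is precisely the $\cL$-versus-$\oocL$ drift that has been moved into $\ooR_+^H$. If some step secretly used $\EE'\ooF_+=-\cL\oocT$, the fix would be to bound $\|\EE'\ooF_+\|\le\|\EE'\ooF_+\ooF_+^\top\|^{1/2}$ instead. That replacement is harmless because only second-moment domination is needed.
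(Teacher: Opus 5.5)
Your proposal is correct and follows the paper's proof essentially step for step: the almost-sure bounds come from \eqref{eqn:trunc_boundedness}, the variance and remainder bounds from the same Cauchy--Schwarz/H\"older moment computations, and the drift perturbation $\eta(\cL-\oocL)\oocT$ is absorbed using \eqref{eqn:L_diff} together with the third line of \eqref{eqn:H_control}. The one step to make explicit is the moment bound. Lemma~\ref{lem:kth_moment} is stated for isotropic vectors, so, as the paper does, apply it to the whitened vector $(I_d+H)^{-1/2}\oW$; its $\psi_2$-norm is still $\lesssim C_\psi$ since $\|H\|\le 1/2$. This gives $[\EE\|A\oW\|^{2q}]^{1/(2q)}\lesssim C_\psi\sqrt q\,\|A(I_d+H)^{1/2}\|_{\mathrm F}\lesssim C_\psi\sqrt q\,\|A\|_{\mathrm F}$, which is exactly the ``trace of the untruncated covariance'' bound your argument relies on.
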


\begin{proof}
  See Appendix~\ref{sec:prf_one_step_bound-truncation}.
\end{proof}

Lemma~\ref{lem:one_step_bound-truncation} provides precisely the inputs used in the convergence analysis
of Appendix~\ref{sec:oja_bdd}: the recursion has the same contractive linear part
$\cI-\eta\cL$, the centered increments satisfy the same
almost-sure and conditional-variance bounds, and the modified drift
remainder has the same order as the remainder in Lemma~\ref{lem:one_step_bound}. Therefore, after enlarging the absolute constants, the stopped-process argument,
matrix-Freedman bound, one-epoch estimate, epoch chaining, and
iteration-budget argument in Lemmas~\ref{lem:1epoch_bound}--\ref{lem:convergence_rate_optimal} apply verbatim to
$\{\overline{\mathcal T}_k\}_{k\le n}$. This yields the following
counterpart of Lemma~\ref{lem:convergence_rate_optimal}.

\begin{lemma}[Convergence] \label{lem:convergence_rate_optimal-trunc}
Suppose Assumptions~\ref{assumption:X},\ref{assumption:d}, \ref{assumption:init} hold. Assume
\begin{align*}
    \nu_\infty &\ge c_{\nu} n^{-C_\nu} \quad\text{for constants } C_\nu \ge 0, c_\nu >0.
    % \label{eqn:cond_nu_lem}
\end{align*}
Let $\varepsilon \in [0,1/2)$ and set 
\begin{align*}
  \eta = c_\eta \frac{\log n}{n \Delta_{\min}}, \quad\text{where}\quad c_\eta\geq c_\eta^* := \rbr[\Big]{1  - \varepsilon + \frac{3 C_\nu}{2}} \rbr[\Big]{1 + \frac{\log^{-3} n + \log 2}{\varepsilon \log n + \log\log n}}. 
  % \label{eqn:cond_step_sz_lem}
\end{align*}
If
\begin{align*}
    n^{1-2\varepsilon} &\ge C_{\ref{lem:convergence_rate_optimal-trunc}} {c_\eta} {\kappa_*^2} r^3 (\log n)^4 \frac{\log(1/\delta_p)}{\delta_p^2} \max(1, \nu_\infty d) 
    % \label{eqn:cond_sample_sz_lem}
\end{align*}
for some constant $C_{\ref{lem:convergence_rate_optimal-trunc}} > 0$ large enough (depending only on $C_\psi$, $C_\nu$, $c_\nu$ and $C_d$).
Then with probability at least $1- n^{-19}$,
\begin{align*}
  \norm{\oocT_k} \leq 2\ootau_* \asymp  {\kappa_*} \bnu \sqrt{\teta r \log n} = \frac{\lambda_1}{\Delta_{\min}} \sqrt{c_\eta r} \, \bnu  \frac{\log n}{\sqrt{n}}, \qquad \text{for all } \floor{\tfrac{c_\eta^*}{c_\eta} n} \leq k \leq n,
  % \label{eqn:convergence_rate_Tn}
\end{align*}
where
\begin{align*}
  \ootau_* := \ooC \max\rbr[\big]{ C_{\psi}^2 \kappa_*\bnu \sqrt{\teta r\log n}, C_{\psi}^2 \kappa_* \nu_2 \teta r (\log n)^2  }
\end{align*}
for some constant $\ooC>0$ (depending only on $C_d$).
\end{lemma}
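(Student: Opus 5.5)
The truncated process $\{\oocT_k\}$ obeys the recursion \eqref{eqn:linearization_trunc2}. This has the same contraction $\cI-\eta\cL$ as \eqref{eqn:linearization}, with increment $\ooD_+$ and modified remainder $\ooR_+^H$ in place of $D_+$ and $R_+$. The plan is therefore to rerun the chain Lemma~\ref{lem:1epoch_bound} $\to$ Corollary~\ref{cor:epoch_target} $\to$ Lemma~\ref{lem:epochs} $\to$ Lemma~\ref{lem:summarize_dynamics} $\to$ Lemma~\ref{lem:convergence_rate_optimal} verbatim for $\oocT_k$. Wherever Lemma~\ref{lem:one_step_bound} was invoked, we invoke Lemma~\ref{lem:one_step_bound-truncation} instead. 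The only changes are in absolute constants, which are absorbed into $\ooC$ in the definition of $\ootau_*$.

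\emph{Step 1 (boundedness input).} By \eqref{eqn:trunc_boundedness}, the truncated coordinates $\ooY_k,\ooZ_k$ satisfy the almost-sure bounds of Assumption~\ref{assumption:X_truncation} with $C_{\psi,n}\asymp C_\psi\sqrt{\log n}$. By Lemma~\ref{lem:reduction_to_bounded}(iii) they are sub-Gaussian with norm $\lesssim C_\psi$. These are the only distributional facts used in the one-step and moment computations. The truncated process starts from $\ooU_0=U_0$, so $\oocT_0=\cT_0$ and Assumption~\ref{assumption:init} applies unchanged.

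\emph{Step 2 (one epoch).} We repeat the proof of Claim~\ref{claim:multi_step_concentration} for the stopped sums
\[
\sum_{i\le k}(\cI-\eta\cL)^{k-i}\bbi_{i-1}\ooD_i \quad\text{and}\quad \sum_{i\le k}(\cI-\eta\cL)^{k-i}\bbi_{i-1}\ooR^H_i .
\]
Lemma~\ref{lem:one_step_bound-truncation}(I) gives the same $B^*_\cD$ and $\sigma^*_\cD$, up to constants, for matrix Freedman (Theorem~\ref{thm:freedman}); the dimension prefactor is again $\lesssim d\lesssim n^{C_d}$. Lemma~\ref{lem:one_step_bound-truncation}(III) gives the same $B^*_\cR$. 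The variance-domination step therefore goes through under \eqref{eqn:1epoch_conditions}.

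This produces an envelope $\overline\varphi$ of the form \eqref{eqn:varphi_def} with $C_{\ref{lem:1epoch_bound}}$ replaced by a larger constant $\ooC$, and hence a noise floor $3\bar c$ equal to $\ootau_*$ as displayed. Lemma~\ref{lem:varphi} is purely algebraic in the coefficients, so it holds for $\overline\varphi$. The induction in Step 2 of Lemma~\ref{lem:1epoch_bound} then gives concentration, descent and stability. Epoch chaining and the budget bound of Lemma~\ref{lem:summarize_dynamics} are deterministic consequences of these facts, and they carry over unchanged.

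\emph{Step 3 (verification of conditions).} We check (C1)--(C5) exactly as in the proof of Lemma~\ref{lem:convergence_rate_optimal}, with $\ooC$ in place of $C_{\ref{lem:1epoch_bound}}$. Each check reduces to $\Phi\le C_{\ref{lem:convergence_rate_optimal-trunc}}^{-1/2}n^{-\varepsilon}$, and the constant $\ooC$ is absorbed by taking $C_{\ref{lem:convergence_rate_optimal-trunc}}$ large. In (C5), the ratio $\sqrt{2c_\eta}\tau_0/\ootau_*$ changes only by a constant factor. This constant is absorbed by the $-3\log\log n$ slack, so the same threshold $c_\eta^*$ results. Finally, $\ootau_*\asymp\kappa_*\bnu\sqrt{\teta r\log n}$ follows as in Lemma~\ref{lem:summarize_dynamics}(ii).

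\emph{Main obstacle.} The main subtle point is that $\ooR^H_+$ contains the drift mismatch $\eta(\cL-\oocL)\oocT$. This term is \emph{not} centered and does not vanish at $\oocT=0$: by \eqref{eqn:L_diff} it contributes $\eta\lambda_1\|H\|\nu_\infty$. It must still fit under the remainder scale $\teta^2\kappa_*^2r^{3/2}(1+\tau\nu_\infty)^2(\tau+\nu_\infty)$. This is what Lemma~\ref{lem:one_step_bound-truncation}(III) asserts, using $\|H\|\lesssim\teta$ from \eqref{eqn:H_control}. Given that lemma, the remaining argument is bookkeeping of constants.
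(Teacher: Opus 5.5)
Your proposal is correct and follows the paper's own argument. The paper likewise notes that Lemma~\ref{lem:one_step_bound-truncation} supplies exactly the inputs of the bounded analysis: the same contraction $\cI-\eta\cL$, the same almost-sure and conditional-variance bounds, and a remainder $\ooR_+^H$ of the same order. It then asserts that Lemmas~\ref{lem:1epoch_bound}--\ref{lem:convergence_rate_optimal} apply verbatim after enlarging constants. Your write-up makes explicit the bookkeeping the paper leaves implicit: $\oocT_0=\cT_0$, the Freedman dimension prefactor, the non-centered drift mismatch absorbed via $\|H\|\lesssim\teta$, and the re-verification of (C1)--(C5).
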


\paragraph{Exact rank-$r$ contraction.}
We next transfer Lemma~\ref{lem:rankr} to the
truncated process. When $\rank(\Sigma)=r$,
$\Lambda_{\ast,\perp}=0$, so the truncated tail coordinates
$\overline Z_i$ vanish identically. Hence the exact multiplicative
structure and the associated column-space reduction from
Appendix~\ref{sec:prf_rankr} are preserved. Lemma~\ref{lem:one_step_bound-truncation}
controls the additional covariance-perturbation remainder.

\begin{lemma}[Exact-rank contraction] \label{lem:rankr-trunc}
  Suppose Assumption~\ref{assumption:X} holds, $\rank(\Sigma)=r$ and $\overline\cT_0=\cT_0$ is well-defined. Fix $\varepsilon\in[0,1/2)$ and set
\[
    \eta = c_\eta \frac{\log n}{n\lambda_r}, \qquad c_\eta\ge \frac12.
\]
If
\[
    n^{1-2\varepsilon} \ge C_{\mathrm{rk,tr}} c_\eta \kappa_\ast^2 r^2(\log n)^4
\]
for some constant $C_{\mathrm{rk,tr}}>0$ sufficiently large depending only
on $C_\psi$, then, conditionally on $\cF_0=\sigma(U_0)$, with probability at least $1-n^{-19}$,
\[
    \|\overline\cT_k\| \le 16e \exp\left\{ -\left( 1-\frac{1}{n^\varepsilon\log n} \right)k\teta \right\} \|\cT_0\|, \qquad k=0,1,\ldots,n.
\]
In particular,
\[
    \|\overline\cT_n\| \le 16e\, n^{-c_\eta(1-(n^\varepsilon\log n)^{-1})} \|\cT_0\|.
\]
If, in addition, Assumption~\ref{assumption:init} holds and
\begin{equation}
    n^{1-2\varepsilon} \ge C_{\mathrm{rk,tr}} c_\eta\kappa_\ast^2r^2(\log n)^4 \frac{\log(1/\delta_p)}{\delta_p^2}, \label{eqn:rankr_cond_sample_sz_lem2}
\end{equation}
then on the same event,
\begin{equation}
    \|\cT_k\| \lesssim \frac{\sqrt d\,n^{1/2-\varepsilon}} {\sqrt{c_\eta}\kappa_\ast\sqrt r(\log n)^2} \exp\left\{ -\left( 1-\frac{1}{n^\varepsilon\log n} \right)k\teta \right\}, \qquad k=0,1,\ldots,n. \label{eqn:rankr_contraction2}
\end{equation}
In particular,
\[
    \|\cT_n\|
    \lesssim
    \frac{\sqrt d}
    {\sqrt{c_\eta}\kappa_\ast\sqrt r(\log n)^2}
    n^{1/2-\varepsilon
    -c_\eta(1-(n^\varepsilon\log n)^{-1})}.
\]
\end{lemma}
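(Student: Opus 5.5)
We will rerun the proof of Lemma~\ref{lem:rankr} (Appendix~\ref{sec:prf_rankr}) for the truncated process $\{\oocT_k\}$, checking that each ingredient survives the truncation. The starting point is that when $\rank(\Sigma)=r$ we have $\Lambda_{*,\perp}=0$, so $\ooZ=\Lambda_{*,\perp}^{1/2}\overline{\tZ}=0$ identically. The same Sherman--Morrison computation as in Lemma~\ref{lem:recurrence} then gives the exact multiplicative recursion
\[
  \oocT_{k+1}=\oocT_k\,(I_r+\eta\ooY_{k+1}\ooY_{k+1}^\top)^{-1}.
\]
Consequently the column space of $\oocT_k$ is constant and $\|\oocT_k\|$ is nonincreasing.

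In the perturbation formulas \eqref{eqn:cov_trunc} we get $B_H=0$ and $C_H=0$, while $A_H=\Lambda_*^{1/2}H_{11}\Lambda_*^{1/2}$. The decomposition \eqref{eqn:linearization_trunc2} therefore reads
\[
  \oocT_+=(\cI-\eta\cL)\oocT+\ooD_++\ooR_+^H,
\]
where $\cL M=M\Lambda_*$ and $\ooR_+^H=\ooR_++\eta\,\oocT A_H$.

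Step 1 is the one-block contraction. Fix a start time $s$ and set $\tau_s=\|\oocT_s\|$. Specializing Lemma~\ref{lem:one_step_bound-truncation} to $\nu_2=\nu_\infty=0$ gives the three bounds in \eqref{eqn:rankr_one_step_bd}, up to constants, for $\ooD_i$ and $\ooR_i^H$. The extra piece $\eta\oocT A_H$ is bounded by $\eta\lambda_1\|H\|\tau_s$. By \eqref{eqn:H_control} this is at most $C_\psi^4\teta^2\kappa_*^2 r^{3/2}\tau_s$, so it fits inside the remainder bound.

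The applicability condition $C_{\psi,n}^2\teta\kappa_*r\le 1/2$ comes from the sample-size hypothesis, exactly as in \eqref{eqn:rankr_cond_sample_sz}. The bound \eqref{eqn:trunc_boundedness} on $\|\ooY\|$ supplies the almost-sure control.

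For the martingale part we again write $\ooD_i=-\eta\,\oocT_{i-1}(\overline K_i-\EE\overline K_i)$, where $\overline K_i=\ooY_i\ooY_i^\top/(1+\eta\|\ooY_i\|^2)$. This is the one place the truncation needs care, because the recursion must hold with the truncated covariance; it does, since $\ooZ=0$ makes the recursion exact. As a result $\operatorname{col}(\ooD_i)\subseteq\operatorname{col}(\oocT_s)$. Since $(\cI-\eta\cL)^jM=M(I_r-\eta\Lambda_*)^j$, the summands stay in this column space. Projecting onto an orthonormal basis $V_s$ of $\operatorname{col}(\oocT_s)$ reduces the problem to a $q_s\times r$ martingale with $q_s\le r$. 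Matrix Freedman then applies with dimension factor at most $2r$, with no appeal to Assumption~\ref{assumption:d}. The result is
\[
  \|\oocT_{s+t}\|\le\bigl[(1-\teta)^t+b_{\rm rk}\bigr]\tau_s, \qquad b_{\rm rk}\lesssim (C_{\rm rk,tr}^{1/2}n^\varepsilon\log n)^{-1},
\]
holding with probability at least $1-n^{-20}$ conditionally on $\cF_s$.

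Step 2 is the block chaining, which carries over verbatim. Take blocks of length $m_\eta=\lceil\log 4/\teta\rceil$, apply the tower property and a union bound over at most $n$ blocks, and use monotonicity between block endpoints. The lower bound $c_{\rm rk}\ge 1-(n^\varepsilon\log n)^{-1}$ follows as in Step 3 of Appendix~\ref{sec:prf_rankr}, with the prefactor $16e$. This yields the first display of the lemma, and evaluating it at $k=n$ with $n\teta=c_\eta\log n$ gives the particular case.

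For the additional conclusion, combine Assumption~\ref{assumption:init} with \eqref{eqn:rankr_cond_sample_sz_lem2} to get $\|\cT_0\|\lesssim\sqrt d\,n^{1/2-\varepsilon}/(\sqrt{c_\eta}\kappa_*\sqrt r(\log n)^2)$. Substitute this into the contraction bound. Finally, the coupling \eqref{eqn:trunc_coupling} identifies $\oocT_k=\cT_k$ for all $k$ with probability at least $1-n^{-19}$; it is used only where the statement is phrased for $\cT_k$, with the probability adjusted accordingly.

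The only genuine obstacle is confirming that the covariance perturbation $H$ preserves the column-space structure. This holds because $A_H$ multiplies $\oocT$ on the right, and because $C_H=B_H=0$ removes any additive term that could enlarge the column space.
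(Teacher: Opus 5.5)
Your proposal is correct and follows essentially the same route as the paper: $\overline Z\equiv 0$ gives the exact multiplicative recursion. Lemma~\ref{lem:one_step_bound-truncation} with $\nu_2=\nu_\infty=0$ supplies the one-step bounds, and the column-space compression of $\overline D_i=-\eta\,\overline{\mathcal T}_{i-1}(\overline K_i-\EE\overline K_i)$ keeps the Freedman dimension factor at $2r$. The block chaining of Lemma~\ref{lem:rankr} then carries over unchanged. One cosmetic slip: since $\overline{\mathcal L}\,\overline{\mathcal T}=\mathcal L\overline{\mathcal T}+\overline{\mathcal T}A_H$ here, the modified remainder is $\overline R_+^H=\overline R_+-\eta\,\overline{\mathcal T}A_H$ rather than $+$, but only its norm enters, so nothing changes.
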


\begin{proof}
  See Appendix~\ref{sec:prf_rankr-trunc}.
\end{proof}

\paragraph{Linearization.} We next transfer the linearization argument of Lemma~\ref{lem:linearization}. The only new term arises because $\ooLambda_{*,\times}:=\mathbb E[\overline Z\overline Y^\top]$
need not vanish. After centering
$\overline Z_i\overline Y_i^\top$, its post-burn-in mean contribution ($i \in (N_*,n]$)
is absorbed into the modified drift remainder, while its pre-burn-in contribution ($i \in [1,N_*]$) is exponentially damped by the contraction between
$N_*$ and $n$. All other terms are controlled exactly as in the proof
of Lemma~\ref{lem:linearization} using Lemma~\ref{lem:one_step_bound-truncation}.

\begin{lemma}[Linearization] \label{lem:linearization-truncation}
Under the assumptions of Lemma~\ref{lem:convergence_rate_optimal-trunc} with $c_\eta\geq c_\eta^*+1/2$, let $N_* := \floor{n c_\eta^*/ c_\eta}$. Then, with probability at least $1-n^{-18}$,
\begin{align}
  \norm[\big]{\oocT_n - \eta\sum_{i=1}^n (\cI-\eta\cL)^{n-i} (\ooZ_i\ooY_i^\top)} \lesssim
  \tau_* \sbr[\big]{C_\psi^2\kappa_* \sqrt{r\teta\log n} +\overline\tau_*}  +e^{-\teta(n-N_*)} \kappa_*\nu_\infty\|H\|.  \label{eq:near-isotropic-linearization-main}
\end{align}
In particular, \eqref{eqn:H_control} gives $\|H\|\lesssim\teta$, and therefore
\begin{align}
  \norm[\big]{\oocT_n - \eta\sum_{i=1}^n (\cI-\eta\cL)^{n-i} (\ooZ_i\ooY_i^\top)} \lesssim \overline\tau_* \sbr[\big]{C_\psi^2\kappa_* \sqrt{r\teta\log n} +\overline\tau_*}.
  \label{eq:near-isotropic-linearization-final}
\end{align}
\end{lemma}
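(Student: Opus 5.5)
We will mirror the proof of Lemma~\ref{lem:linearization}, replacing Lemma~\ref{lem:one_step_bound} by Lemma~\ref{lem:one_step_bound-truncation} and Lemma~\ref{lem:convergence_rate_optimal} by Lemma~\ref{lem:convergence_rate_optimal-trunc}. Let $\overline{\mathscr G}:=\{\|\oocT_k\|\le 2\ootau_*\ \forall k\in[N_*,n]\}$; it has probability at least $1-n^{-19}$. Starting from \eqref{eqn:linearization_trunc2} and iterating from $N_*$ to $n$, we write
\begin{align*}
\oocT_n=(\cI-\eta\cL)^{n-N_*}\oocT_{N_*}+\sum_{i=N_*+1}^n(\cI-\eta\cL)^{n-i}\big(\ooD_i^{(1)}+\ooD_i^{(2)}+\ooR_i^H\big).
\end{align*}
We then expand $\eta^{-1}\ooD_i^{(1)}=\ooF_i-\EE_{i-1}\ooF_i$ exactly as in the bounded case. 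The leading piece is $\ooZ_i\ooY_i^\top-C_H$. The remaining pieces are $-\oocT_{i-1}(\ooY_i\ooY_i^\top-\ooLambda_*)$, $(\ooZ_i\ooZ_i^\top-\ooLambda_{*,\perp})\oocT_{i-1}$, and the quadratic term $-\oocT_{i-1}(\ooY_i\ooZ_i^\top-C_H^\top)\oocT_{i-1}$.

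The error terms $\alpha_1,\alpha_3,\alpha_4,\alpha_5$ are bounded verbatim. For $\alpha_1$, on $\overline{\mathscr G}$ we have $\alpha_1\le 2e^{-\teta(n-N_*)}\ootau_*\lesssim n^{-1/2}\ootau_*$, using $c_\eta-c_\eta^*\ge 1/2$. For $\alpha_3$, we apply stopped matrix Freedman to $S_1,S_2,S_3$, with truncated covariances in place of $\Lambda_*,\Lambda_{*,\perp}$. Since $\|H\|\le 1/2$ and Lemma~\ref{lem:reduction_to_bounded}(iii),(iv) give the same almost-sure and $\psi_2$ bounds up to constants, the variance proxies change only by constants. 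For $\alpha_4$ and $\alpha_5$ we use Lemma~\ref{lem:one_step_bound-truncation}(II),(III). The truncated remainder $\ooR^H$ already absorbs $\eta(\cL-\oocL)\oocT$, so the $\alpha_5$ bound stands as before. These estimates give the first bracket $\tau_*[C_\psi^2\kappa_*\sqrt{r\teta\log n}+\ootau_*]$.

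The genuinely new point is the mean $C_H=\EE\ooZ\ooY^\top\neq0$. For indices $i>N_*$, the centered summands $\ooZ_i\ooY_i^\top-C_H$ produce the target $\eta\sum_{i>N_*}(\cI-\eta\cL)^{n-i}\ooZ_i\ooY_i^\top$ minus the deterministic sum $\eta\sum_{i>N_*}(\cI-\eta\cL)^{n-i}C_H$. I will bookkeep this deterministic sum as part of the drift: the $-C_H$ inside $\ooR^H$ cancels it. Lemma~\ref{lem:one_step_bound-truncation}(III) bounds the remaining remainder. The caveat is that the constant piece $\eta C_H$ of $\eta(\cL-\oocL)\oocT$ is not $\tau$-proportional, so it is safest to check that after this cancellation the residual is $O(\|H\|\lambda_1\tau\,\eta)$, which is negligible.

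For indices $i\le N_*$, the target contains $\eta\sum_{i\le N_*}(\cI-\eta\cL)^{n-i}\ooZ_i\ooY_i^\top$, the analogue of $\alpha_2$. We split it into a centered part and a mean part. The centered part is bounded by Freedman exactly as $\alpha_2$, giving $\lesssim e^{-\teta(n-N_*)}\tau_*$. For the mean part, $\|\eta\sum_{i\le N_*}(\cI-\eta\cL)^{n-i}C_H\|\le e^{-\teta(n-N_*)}\teta^{-1}\eta\|C_H\|\le e^{-\teta(n-N_*)}\kappa_*\nu_\infty\|H\|$ by \eqref{eq:trunc-ABC-bounds}, which is the extra term in \eqref{eq:near-isotropic-linearization-main}.

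A union bound over the $O(1)$ Freedman events gives probability $1-n^{-18}$. For \eqref{eq:near-isotropic-linearization-final}, we use $\|H\|\lesssim\teta$ together with $e^{-\teta(n-N_*)}\le n^{-1/2}$, which make the extra term at most $\kappa_*\nu_\infty\teta\lesssim\ootau_*\sqrt{\teta}$. This is dominated by the first bracket, and $\tau_*\asymp\ootau_*$. The main obstacle is the $C_H$ bookkeeping above: verifying that the $\tau$-free piece $\eta C_H$ of $\ooR^H$ is exactly cancelled by recentring, rather than silently contradicting the $(\tau+\nu_\infty)$ form of Lemma~\ref{lem:one_step_bound-truncation}(III). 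If the cancellation is not clean, I would instead bound that piece crudely by $\eta\|C_H\|\teta^{-1}\lesssim\kappa_*\nu_\infty\teta$ over the post-burn-in window, which is still negligible since $\|H\|\lesssim n^{-19}$.
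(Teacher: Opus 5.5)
Your proposal is correct and follows essentially the same route as the paper. Both use the six-term split $\ooalpha_1,\ldots,\ooalpha_6$: the post-burn-in mean $\eta\sum_{i>N_*}(\cI-\eta\cL)^{n-i}\ooLambda_{*,\times}$ is grouped with $\oocR^H_{N_*,n}$, and the pre-burn-in mean yields the extra $e^{-\teta(n-N_*)}\kappa_*\nu_\infty\|H\|$ term. Your exact-cancellation check is valid, but it is not needed, and there is no tension with Lemma~\ref{lem:one_step_bound-truncation}(III). The paper simply notes that $\eta\|\ooLambda_{*,\times}\|\le\teta\kappa_*\nu_\infty\|H\|$ is $\nu_\infty$-proportional, so this piece already fits the $(\tau+\nu_\infty)$ form of that bound.
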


\begin{proof}
  See Appendix~\ref{sec:prf_linearization-truncation}.
\end{proof}

\paragraph{Lower bound for the original linearized term.}
The truncation argument above is needed only for the convergence and
linearization upper bounds. For the lower bound, we return to the
original observations and analyze the linearized term directly. This
argument does not require bounded observations, Assumption~\ref{assumption:d}, or
Assumption~\ref{assumption:init}.

\begin{lemma}[Lower bound for the linearized term] \label{lem:lower_bound}
    Assume Assumption~\ref{assumption:X} holds with $Y$ independent of $Z$. If $\teta \kappa_*^2 \to 0$ and $n\teta \geq \frac{1}{2}\log n$, then
    \begin{align*}
        \PP \cbr[\bigg]{\norm[\Big]{\eta \sum_{i=1}^n (\cI-\eta \cL)^{n-i}(Z_iY_i^\top)}^2 \geq \rbr[\Big]{1-\frac{1}{n}} \frac{\teta \lambda_1 \lambda_{1 \sim r}}{4 \Delta_{\max} \Delta_{\min}} \bnu^2} \geq \frac{1}{12}\sbr{1+o(1)}.
    \end{align*}
    Furthermore, if $\kappa_* \lesssim 1$, then
    \begin{align*}
        \teta \frac{\lambda_1 \lambda_{1 \sim r}}{\Delta_{\max} \Delta_{\min}} \bnu^2 \asymp \frac{\tau_*^2}{\log n}.
    \end{align*}
\end{lemma}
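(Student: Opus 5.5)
The plan is to lower bound the operator norm of $S:=\eta\sum_{i=1}^n(\cI-\eta\cL)^{n-i}(Z_iY_i^\top)$ by the Euclidean norm of a single column or a single row, chosen according to which of $\nu_2,\nu_\infty$ attains $\bnu$. Then Paley--Zygmund applied to the resulting quadratic form gives the constant probability. Entrywise, $S_{lj}=\eta\sum_i(1-\eta\Delta_{lj})^{n-i}[Z_i]_l[Y_i]_j$. Because $Y\perp Z$ and both have diagonal covariances, the summands are independent, mean zero, and uncorrelated across $(l,j)$, with $\EE S_{lj}^2=\eta^2\lambda_j\lambda_{r+l}\sum_{k=0}^{n-1}(1-\eta\Delta_{lj})^{2k}$.

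\emph{Step 1 (second moments).} The geometric sum equals $\frac{1-(1-\eta\Delta_{lj})^{2n}}{\eta\Delta_{lj}(2-\eta\Delta_{lj})}$. Since $\Delta_{lj}\ge\Delta_{\min}$ and $2n\teta\ge\log n$, we get $(1-\eta\Delta_{lj})^{2n}\le e^{-2n\teta}\le 1/n$. Bounding $2-\eta\Delta_{lj}\le 2$ and $\Delta_{lj}\le\Delta_{\max}$ yields
\[
\EE S_{lj}^2\ \ge\ \big(1-\tfrac1n\big)\frac{\eta\lambda_j\lambda_{r+l}}{2\Delta_{\max}}.
\]
Next choose the test vector.
\begin{itemize}
\item If $\bnu=\nu_2$, take the first column $Q:=\|Se_1\|^2$. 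Summing over $l$ gives $\EE Q\ge(1-\frac1n)\frac{\eta\lambda_1\lambda_{r+1\sim d}}{2\Delta_{\max}}=(1-\frac1n)\frac{\teta\lambda_1\lambda_{1\sim r}\nu_2^2}{2\Delta_{\max}\Delta_{\min}}$.
\item If $\bnu=\nu_\infty$, take the first row $Q:=\|e_1^\top S\|^2$. Summing over $j$ gives $\EE Q\ge(1-\frac1n)\frac{\eta\lambda_{r+1}\lambda_{1\sim r}}{2\Delta_{\max}}=(1-\frac1n)\frac{\teta\lambda_1\lambda_{1\sim r}\nu_\infty^2}{2\Delta_{\max}\Delta_{\min}}$.
\end{itemize}
In both cases $\|S\|^2\ge Q$, and $\frac12\EE Q$ is at least the threshold in the statement.

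\emph{Step 2 (Paley--Zygmund).} Write $Q=\|\sum_i\xi_i\|^2$ with independent, mean-zero random vectors $\xi_i$, and let $C=\sum_i\cov(\xi_i)$. Expanding the fourth moment gives
\[
\EE Q^2\le\sum_i\EE\|\xi_i\|^4+(\tr C)^2+2\tr(C^2)\le \sum_i\EE\|\xi_i\|^4+3(\EE Q)^2 .
\]
With $\theta=1/2$, Paley--Zygmund gives $\PP(Q\ge\frac12\EE Q)\ge\frac14(\EE Q)^2/\EE Q^2$. This is $\ge\frac1{12}(1+o(1))$ as soon as $\sum_i\EE\|\xi_i\|^4=o((\EE Q)^2)$.

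\emph{Step 3 (main obstacle: the fourth-moment term).} Take the column case; the row case is symmetric. There $\xi_i=\eta\,D_i Z_i[Y_i]_1$, where $D_i$ is diagonal with entries at most $(1-\teta)^{n-i}$. By independence and the sub-Gaussian moment bounds of Lemma~\ref{lem:kth_moment}, $\EE\|\xi_i\|^4\lesssim\eta^4(1-\teta)^{4(n-i)}\lambda_1^2\lambda_{r+1\sim d}^2$. Summing the geometric series gives $\sum_i\EE\|\xi_i\|^4\lesssim\eta^3\lambda_1^2\lambda_{r+1\sim d}^2/\Delta_{\min}$. Comparing with $(\EE Q)^2\gtrsim\eta^2\lambda_1^2\lambda_{r+1\sim d}^2/\Delta_{\max}^2$, the ratio is $\lesssim\eta\Delta_{\max}^2/\Delta_{\min}\le\teta\kappa_*^2\to0$, using $\Delta_{\max}\le\lambda_1$. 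The row case works identically with $\lambda_{r+1}\lambda_{1\sim r}$ in place of $\lambda_1\lambda_{r+1\sim d}$. The delicate point is checking that the discounting weights enter the fourth-moment sum through $\Delta_{\min}$ but the second moment through $\Delta_{\max}$. This mismatch is exactly why the hypothesis is $\teta\kappa_*^2\to0$ rather than $\teta\to0$.

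\emph{Step 4 (the comparison with $\tau_*$).} If $\kappa_*\lesssim1$, then $\Delta_{\min}\asymp\Delta_{\max}\asymp\lambda_1\asymp\lambda_r$ and $\lambda_{1\sim r}\asymp r\lambda_1$. Hence $\teta\frac{\lambda_1\lambda_{1\sim r}}{\Delta_{\max}\Delta_{\min}}\bnu^2\asymp\teta r\bnu^2$. On the other hand, $\tau_*^2\asymp\kappa_*^2\bnu^2\teta r\log n\asymp\teta r\bnu^2\log n$ by Lemma~\ref{lem:summarize_dynamics}(ii). This gives the second claim.
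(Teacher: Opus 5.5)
Your proposal is correct and follows essentially the same route as the paper: bound the operator norm below by the first column (or by the first row when $\bnu=\nu_\infty$, which the paper dismisses as ``by symmetry''), compute the second moment from the geometric sum, bound the fourth moment by $\sum_i\EE\|\xi_i\|^4+3(\EE Q)^2$ with the ratio controlled by $\teta\kappa_\Delta^2\le\teta\kappa_*^2$, and finish with Paley--Zygmund at $\theta=1/2$. Your explicit row case and your Step~4 comparison with $\tau_*^2/\log n$ are details the paper's proof leaves implicit. The only unstated point is that you need $0\le 1-\eta\Delta_{lj}\le 1-\teta$, which follows from $\eta\Delta_{\max}\le\teta\kappa_*\to 0$ under your hypotheses.
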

\begin{proof}
    See Appendix~\ref{sec:prf_lower_bound}.
\end{proof}
\begin{remark}[Weaker nondegeneracy condition]
The independence condition $Y\perp Z$ in Lemma~\ref{lem:lower_bound} can be relaxed to $ \lambda_{\min}(\widetilde{\fC}) \gtrsim 1$. Indeed, this condition implies
\[
    \EE[\widetilde Z_\ell^2 \widetilde Y_j^2] \gtrsim 1,
    \qquad \ell\in[d-r],\ j\in[r],
\]
which yields the same row- and column-wise second-moment lower bounds used in the proof, up to constant factors. The fourth-moment upper bounds require only Assumption~1. Hence the conclusion of Lemma~\ref{lem:lower_bound} continues to hold under this weaker nondegeneracy condition.
\end{remark}

\subsection{Proof of Lemma~\ref{lem:reduction_to_bounded}}
\label{sec:prf_reduction_to_bounded}
$\EE W=0$, $\EE WW^\top=I_d$, and $\|W\|_{\psi_2}\le C_\psi$ follow directly from Assumption~\ref{assumption:X} and the properties of orthogonal transformations. The rest of the proof is devoted to constructing $\overline W$ and verifying its properties.

\paragraph{Preliminary moment estimates.}
We first show the following rare-event estimates. If a real random
variable $V$ satisfies $\|V\|_{\psi_2}\le \sigma$ and $\cE$ is an arbitrary event with  $\PP(\cE)\le \delta_{\tr} \le e^{-1}$, then
\begin{align}
  \EE\left[ |V| \bbi_{\cE} \right] &\lesssim \sigma \delta_{\tr} \sqrt{\log(e/\delta_{\tr})} \label{eqn:V1_rare}\\
  \EE\left[ V^2\bbi_{\cE} \right] &\lesssim \sigma^2\delta_{\tr} \log(e/\delta_{\tr}). \label{eqn:V2_rare}
\end{align}

Indeed, the sub-Gaussian tail estimate gives
\[
\mathbb P\left(|V|>t\right) \le 2\exp\left(-\frac{ct^2}{\sigma^2}\right).
\]
Taking $t_{\tr} = \tC \sigma \sqrt{\log(e/\delta_{\tr})}$ for \(\tC\) sufficiently large, we have
\begin{align*}
  \EE[ |V| \bbi_{\cE} ] &\le t_{\tr} \delta_{\tr} + \EE\left[ |V|\bbi_{\{|V|>t_{\tr}\}} \right] \\
  &= t_{\tr}\delta_{\tr} + t_{\tr} \PP\left(|V|>t_{\tr}\right) + \int_{t_{\tr}}^{\infty} \mathbb P\left(|V|>s\right)ds \\
  &\lesssim \sigma \delta_{\tr} \sqrt{\log(e/\delta_{\tr})}.
\end{align*}
The proof of \eqref{eqn:V2_rare} is analogous:
\begin{align*}
  \EE[ V^2\bbi_{\cE} ] &\le t_{\tr}^2 \delta_{\tr} + \EE[ V^2\bbi_{\{|V|>t_{\tr}\}} ] \\
  &= t_{\tr}^2 \delta_{\tr} + t_{\tr}^2\PP(|V|>t_{\tr}) + \int_{t_{\tr}}^{\infty} 2s\PP(|V|>s)ds\\
  &\lesssim \sigma^2\delta_{\tr} \log(e/\delta_{\tr}).
\end{align*}

\paragraph{Construction of $\overline W$.}
Define the two deterministic linear maps
\[
  A_Y := \begin{pmatrix} \Lambda_*^{1/2}&0 \end{pmatrix} \in\mathbb R^{r\times d}, \qquad 
  A_Z := \begin{pmatrix} 0&\Lambda_{*,\perp}^{1/2} \end{pmatrix} \in\RR^{(d-r)\times d}.
\]
Then we have $Y=A_YW$, $Z=A_ZW$ and $\Fnorm{A_Y}^2 = \lambda_{1\sim r}$, $\Fnorm{A_Z}^2 = \lambda_{r+1\sim d}$.
Let 
\begin{align*}
  q:=\log(e/\delta_{\tr}).
\end{align*}
Since \(\delta_{\tr}\leq e^{-1}\), we have \(q \geq 2\). 
By Lemma~\ref{lem:kth_moment}, we have
\[
\EE\|Y\|_2^{2q} \leq \left( CC_\psi\sqrt{q\lambda_{1\sim r}} \right)^{2q}.
\]
Define
\[
L_Y := C_1C_\psi\sqrt{\lambda_{1\sim r} \log \rbr{e/\delta_{\tr}}}, \quad L_Z := C_1C_\psi\sqrt{\lambda_{r+1\sim d} \log \rbr{e/\delta_{\tr}}},
\quad \text{where} \quad C_1 = C e.
\]
Then Markov's inequality gives
\begin{align*}
  \PP(\|Y\|_2>L_Y) &\leq \frac{\EE\|Y\|_2^{2q}}{L_Y^{2q}} \leq \left(\frac{C}{C_1}\right)^{2q} \leq e^{-2q} = \rbr{\frac{\delta_{\tr}}{e}}^2 \leq \frac{\delta_{\tr}}{4},
\end{align*}
where the last inequality holds because $\delta_{\tr} \leq e^{-1}\leq e^2/4$.
Similarly, we have
\begin{align*}
  \PP(\|Z\|_2>L_Z) &\leq \frac{\EE\|Z\|_2^{2q}}{L_Z^{2q}} \leq \left(\frac{C}{C_1}\right)^{2q} \leq e^{-2q} \leq \frac{\delta_{\tr}}{4}.
\end{align*}

Hence the event $\cA := \left\{\|Y\|_2>L_Y\right\} \cup \left\{\|Z\|_2>L_Z\right\}$ satisfies
\begin{equation}
  \mathbb P(\cA)\le\frac{\delta_{\tr}}{2}. \label{eqn:PA}
\end{equation}

Next, we define an event $\cB \supset \cA$ so that \(\mathbb P(\cB)=\delta_{\tr}\). Specifically, we define
\begin{align*}
  \cB := \cA \cup \left( \cA^c\cap\{U\le\theta_{\cA}\} \right),
\end{align*}
where \(U\sim\operatorname{Unif}(0,1)\) is independent of \(W\) and $\theta_{\cA}:=\tfrac{\delta_{\tr}-\PP(\cA)}{1-\PP(\cA)}$. Under this definition, $\cA \subseteq \cB$ and \eqref{eqn:PA} implies that \(\theta_{\cA} \in[0,1]\). Since $U$ is independent of $W$, we have
\begin{align*}
  \PP (\cA^c\cap\{U\le\theta_{\cA}\}) = \PP(\cA^c) \PP(U \leq \theta_{\cA}) = (1-\PP(\cA)) \theta_{\cA} = \delta_{\tr} - \PP(\cA),
\end{align*}
which further implies that
\begin{equation}
  \PP(\cB)=\PP(\cA) + \PP (\cA^c\cap\{U\le\theta_{\cA}\}) = \delta_{\tr}. \label{eqn:PB}
\end{equation}

Then, we define the random vector \(\overline W\) as
\begin{align}
  \overline W := W\bbi_{\cB^c} + b\bbi_{\cB}, \quad \text{where} \quad b:=\frac{\EE\left[ W\bbi_{\cB} \right]}{\delta_{\tr}}. \label{eqn:oW}
\end{align}

Next, we verify that \(\overline W\) satisfies Properties (i)--(iv).

\textbf{Property (i).} By \eqref{eqn:PB}, $\PP(\overline W\neq W) \leq \mathbb P(\cB) = \delta_{\tr}$. Moreover,
\begin{align*}
  \EE\overline W = \EE\left[ W\bbi_{\cB^c} \right] + b \delta_{\tr} = \EE W - \EE\left[ W\bbi_{\cB} \right] + \EE\left[ W\bbi_{\cB} \right] =0.
\end{align*}

\textbf{Property (ii):}
Since \(\EE\overline W=0\), the covariance of \(\overline W\)
equals its second-moment matrix. \eqref{eqn:oW} implies that
\begin{align*}
  \EE\overline W\overline W^\top &= \EE\left[ WW^\top\bbi_{\cB^c} \right] + \delta_{\tr} bb^\top
  \\
  &= I_d - \EE\left[ WW^\top\bbi_{\cB} \right] + \delta_{\tr} bb^\top.
\end{align*}
Hence
\begin{align*}
  H = -\EE\left[ WW^\top\bbi_{\cB} \right] + \delta_{\tr} bb^\top.
\end{align*}

For any unit vector $u\in\RR^d$, by the definition of $b$ in \eqref{eqn:oW}, $\PP(\cB)=\delta_{\tr}$, \eqref{eqn:V1_rare} and \eqref{eqn:V2_rare}, we have
\begin{align}
  |u^\top b|= \frac{1}{\delta_{\tr}} \abs{\EE[u^\top W\bbi_{\cB}]} \leq \frac{1}{\delta_{\tr}} \EE[|u^\top W|\bbi_{\cB}] \lesssim C_{\psi} \sqrt{\log(e/\delta_{\tr})}.
  \label{eq:ub-bound}
\end{align}
and
\begin{align}
  \EE[ (u^\top W)^2\bbi_{\cB} ] \lesssim C_{\psi}^2\delta_{\tr} \log(e/\delta_{\tr}). \label{eq:uW2-bound}
\end{align}
The constants in \eqref{eq:ub-bound} and \eqref{eq:uW2-bound} are independent of the unit vector $u$. Then,
\begin{align*}
  |u^\top Hu| &\le \EE[ (u^\top W)^2\bbi_{\cB} ] + \delta_{\tr}(u^\top b)^2 \lesssim C_{\psi}^2\delta_{\tr} \log(e/\delta_{\tr}).
\end{align*}
Since $H$ is symmetric, taking supremum over all unit vectors $u\in\RR^d$ gives
\[
\|H\| \lesssim C_{\psi}^2\delta_{\tr} \log(e/\delta_{\tr}).
\]

\textbf{Property (iii)} For any unit vector $u \in \RR^d$, we have
\begin{align*}
  \|u^\top \overline{W}\|_{\psi_2} &\leq \|u^\top W \bbi_{\cB^c}\|_{\psi_2} + \|u^\top b \bbi_{\cB}\|_{\psi_2} \leq \|u^\top W\|_{\psi_2} + \|u^\top b \bbi_{\cB}\|_{\psi_2} \leq C_{\psi} +  \|u^\top b\bbi_{\cB}\|_{\psi_2}.
\end{align*}
Then it suffices to bound \(\|u^\top b\bbi_{\cB}\|_{\psi_2}\).

By \eqref{eq:ub-bound}, we can choose a universal constant $C_2>0$ sufficiently large so that
\[
\frac{|u^\top b|^2}{C_2^2C_{\psi}^2} \leq \frac{1}{2} \log\!(e/\delta_{\tr}).
\]
Then
\begin{align*}
  \EE\exp \left(\frac{|u^\top b|^2 \bbi_{\cB}}{C_2^2C_{\psi}^2} \right) 
  &= 1-\delta_{\tr} + \delta_{\tr} \exp\left( \frac{|u^\top b|^2}{C_2^2C_{\psi}^2} \right) \\
  &\leq 1-\delta_{\tr} + \delta_{\tr} (e/\delta_{\tr})^{1/2} \\
  &= 1-\delta_{\tr}+\sqrt{e\delta_{\tr}} \leq 2,
\end{align*}
where the last inequality follows from
\(\delta_{\tr}\leq e^{-1}\). Therefore, we have
\[
\|u^\top b\bbi_{\cB}\|_{\psi_2} \leq C_2C_{\psi}.
\]
which then implies 
\[
\|u^\top\overline W\|_{\psi_2}
\leq
(1+C_2)C_{\psi}.
\]
Taking the supremum over all unit vectors \(u\in\RR^d\) gives
\[
\|\overline W\|_{\psi_2}
\leq
(1+C_2) C_{\psi},
\]
which proves Property~\textup{(iii)}.

\textbf{Property (iv):} we consider two cases:
\begin{itemize}[leftmargin=2em]
  \item On \(\cB^c\), one has \(\cB^c\subseteq \cA^c\), and therefore
  \[
  \norm{\ooY}_2 = \|Y\|_2\le L_Y,
  \qquad
  \norm{\ooZ}_2 = \|Z\|_2\le L_Z.
  \]
  \item On \(\cB\), partition $b = \rbr{\begin{smallmatrix} b_Y\\ b_Z \end{smallmatrix}}$ where
  \[
    b_Y= \frac{\EE [\tY \bbi_{\cB}]}{\delta_{\tr}}, \qquad b_Z = \frac{\EE [\tZ \bbi_{\cB}]}{\delta_{\tr}}
  \]
  according to the signal and tail blocks. Then it suffices to consider $\|\Lambda_*^{1/2}b_Y\|_2$ and $\|\Lambda_{*,\perp}^{1/2}b_Z\|_2$. Note that for any unit vector $u \in \RR^r$, we have
  \begin{align*}
    \abs{u^\top \Lambda_*^{1/2} b_Y} &= \delta_{\tr}^{-1}\abs{\EE u^\top \Lambda_*^{1/2} \tY \bbi_{\cB}}\\
    &\lesssim C_{\psi} \sqrt{ \lambda_1 \log(e/\delta_{\tr}) } \leq C_{\psi} \sqrt{ \lambda_{1\sim r} \log(e/\delta_{\tr}) }.
  \end{align*}
  Here the second line applies \eqref{eqn:V1_rare} with $\|u^\top \Lambda_*^{1/2} \tY\|_{\psi_2}\leq \sqrt{\lambda_1} C_{\psi}$. Taking the supremum over unit~$u$ gives 
  \[
  \|\Lambda_*^{1/2}b_Y\|_2 \lesssim C_{\psi} \sqrt{ \lambda_1 \log(e/\delta_{\tr}) } \le C_{\psi} \sqrt{ \lambda_{1\sim r} \log(e/\delta_{\tr}) }.
  \tag{7}
  \]
  Similarly, we can obtain
  \[ 
  \|\Lambda_{*,\perp}^{1/2}b_Z\|_2 \lesssim C_{\psi} \sqrt{ \lambda_{r+1} \log(e/\delta_{\tr}) } \le C_{\psi} \sqrt{ \lambda_{r+1\sim d} \log(e/\delta_{\tr}) }.
  \tag{8}
  \]
  
\end{itemize}

Combining the two cases above, we conclude that
\begin{align*}
  \norm{\ooY}_2 \lesssim C_{\psi} \sqrt{ \lambda_{1\sim r} \log(e/\delta_{\tr}) }, \quad 
  \norm{\ooZ}_2 \lesssim C_{\psi} \sqrt{ \lambda_{r+1\sim d} \log(e/\delta_{\tr}) }
  \quad \text{almost surely}.
\end{align*}

The proof is then complete.

\subsection{Proof of Lemma~\ref{lem:one_step_bound-truncation}}
\label{sec:prf_one_step_bound-truncation}

\begin{proof}
By~\eqref{eqn:delta_tr}, Lemma~\ref{lem:reduction_to_bounded} and $\log(e/\delta_{\tr})\lesssim\log n$, choosing the constant $C_{\mathsf{noise}}>0$ in $C_{\psi,n} := C_{\mathsf{noise}}C_{\psi}\sqrt{\log n}$ sufficiently large gives
\[
  \|\ooY\|_2 \leq C_{\psi,n}\sqrt{\lambda_{1\sim r}}, \qquad \|\ooZ\|_2 \leq C_{\psi,n}\sqrt{\lambda_{r+1\sim d}}, \qquad \text{almost surely}.
\]

For brevity, define
\[
  \ooV_1 := -\oocT\ooY+\ooZ, \qquad \ooV_2 := \ooY+\oocT^\top\ooZ.
\]
Then
\[
  \overline F_+ = \overline V_1\overline V_2^\top, \qquad \overline\zeta_+ = \overline V_2^\top\ooY.
\]
Since $\|\overline \cT\|=\tau$, the preceding block bounds give
\[
  \|\overline V_1\|_2 \leq C_{\psi,n}\sqrt{\lambda_{1\sim r}}(\tau+\nu_2), \qquad \|\overline V_2\|_2 \leq C_{\psi,n}\sqrt{\lambda_{1\sim r}}(1+\tau\nu_2).
\]
Consequently,
\begin{align*}
  |\eta\overline\zeta_+| \leq \eta\|\overline V_2\|_2\|\ooY\|_2 \leq C_{\psi,n}^2\eta\lambda_{1\sim r}(1+\tau\nu_2) \leq C_{\psi,n}^2\widetilde\eta\kappa_\ast r (1+\tau\nu_2) \leq \frac12, \quad \text{almost surely},
\end{align*}
where we used
$\lambda_{1\sim r}\leq r\lambda_1$ and $\eta\lambda_1=\widetilde\eta\kappa_\ast$. Hence
\[
  |1+\eta\overline\zeta_+|\geq\frac12.
\]

\textbf{Almost-sure bounds in (I) and (II).}
Recall that
\[
  \overline \cT_+-\overline \cT = \eta \frac{\overline F_+} {1+\eta\overline\zeta_+}, \qquad \overline D_+ = (\overline \cT_+-\overline \cT) - \EE'(\overline \cT_+-\overline \cT).
\]
It follows that
\begin{align*}
  \|\overline \cT_+-\overline \cT\| &\leq 2\eta \|\overline V_1\|_2 \|\overline V_2\|_2   \leq 2C_{\psi,n}^2 \widetilde\eta\kappa_\ast r (1+\tau\nu_2)(\tau+\nu_2).
\end{align*}
Centering increases the almost-sure bound by at most a factor of two,
and therefore
\[
  \|\overline D_+\| \leq 4C_{\psi,n}^2 \widetilde\eta\kappa_\ast r (1+\tau\nu_2)(\tau+\nu_2).
\]
proving the almost-sure bound in \textup{(I)}.
Similarly, writing
\[
  \overline Q_+ := -\eta^2 \frac{ \overline\zeta_+\overline F_+ }{ 1+\eta\overline\zeta_+ }, \qquad \overline D_+^{(2)} = \overline Q_+ - \EE'\overline Q_+,
\]
we have
\begin{align*}
  \|\overline Q_+\| &\leq 2\eta^2 |\overline\zeta_+| \|\overline V_1\|_2 \|\overline V_2\|_2 \leq 2C_{\psi,n}^4 \widetilde\eta^2\kappa_\ast^2r^2 (1+\tau\nu_2)^2(\tau+\nu_2).
\end{align*}
Centering again costs at most a factor of two, proving the
almost-sure bound in \textup{(II)}.

\textbf{Quadratic variation bounds in (I) and (II).}
We first verify the moment bound~\eqref{eqn:truncation-kth_moment}, which is the key input for the
quadratic-variation bounds in the proof of Lemma~\ref{lem:one_step_bound}.
By Lemma~\ref{lem:reduction_to_bounded},
\[
  \EE\overline W\overline W^\top = I_d+H, \qquad \|\overline W\|_{\psi_2} \lesssim C_\psi, \qquad \|H\| \lesssim C_\psi^2 \delta_{\tr} \log n.
\]
By \eqref{eqn:H_control}, $\|H\|\le1/2$. Define
\[
  W^\circ:=(I_d+H)^{-1/2}\overline W.
\]
Then
\[
  \EE W^\circ W^{\circ\top}=I_d,\qquad\|W^\circ\|_{\psi_2}\lesssim C_\psi.
\]
Therefore, Lemma~\ref{lem:kth_moment} implies that, for every deterministic matrix $A$
and every $q\geq1$,
\begin{align}
  \sbr{\EE\|A\ooW\|_2^{2q}}^{1/(2q)}
  &= \sbr{\EE \norm[\big]{A(I_d+H)^{1/2}W^\circ}_2^{2q}}^{1/(2q)}\nonumber\\
  &\lesssim C_\psi\sqrt q \|A(I_d+H)^{1/2}\|_{\mathrm F} \lesssim C_\psi\sqrt q \|A\|_{\mathrm F}. \label{eqn:truncation-kth_moment}
\end{align}

Applying~\eqref{eqn:truncation-kth_moment} to the linear maps defining
$\overline V_1$ and $\overline V_2$ gives, for $q\in\{2,4\}$,
\begin{align*}
  \sup_{\|u\|_2=1} \sbr{\EE'|u^\top\ooV_1|^{2q}}^{1/(2q)} &\lesssim C_\psi\sqrt{\lambda_1}(\tau+\nu_\infty),\\
  \sup_{\|v\|_2=1} \sbr{\EE'|v^\top\ooV_2|^{2q}}^{1/(2q)} &\lesssim C_\psi\sqrt{\lambda_1}(1+\tau\nu_\infty),\\
  \sbr{\EE'\|\ooV_1\|_2^{2q}}^{1/(2q)} &\lesssim C_\psi\sqrt{\lambda_{1\sim r}}(\tau+\nu_2),\\
  \sbr{ \EE'\|\ooV_2\|_2^{2q}}^{1/(2q)} &\lesssim C_\psi\sqrt{r\lambda_1}(1+\tau\nu_\infty).
\end{align*}
Moreover, since
\[
  \overline\zeta_+ = \|\ooY\|_2^2 + \ooZ^\top\overline \cT\ooY,
\]
the same moment bound and Cauchy--Schwarz imply
\[
  \sbr{\EE' |\oozeta_+|^4}^{1/4} \lesssim C_\psi^2r\lambda_1 (1+\tau\nu_\infty).
\]

These are precisely the moment estimates used in the proof of
Lemma~\ref{lem:one_step_bound}. Indeed,
\[
  \overline F_+\overline F_+^\top = \|\overline V_2\|_2^2 \overline V_1\overline V_1^\top,
  \qquad
  \overline F_+^\top\overline F_+ = \|\overline V_1\|_2^2 \overline V_2\overline V_2^\top.
\]
Thus, Cauchy--Schwarz gives
\begin{align*}
  \left\|
    \EE'
    \overline F_+\overline F_+^\top
  \right\|^{1/2}
  &\lesssim
  C_\psi^2\sqrt r\,\lambda_1
  (1+\tau\nu_\infty)(\tau+\nu_\infty),
  \\
  \left\|
    \EE'
    \overline F_+^\top\overline F_+
  \right\|^{1/2}
  &\lesssim
  C_\psi^2\sqrt r\,\lambda_1
  (1+\tau\nu_\infty)(\tau+\nu_2).
\end{align*}
Similarly, Holder's inequality gives
\begin{align*}
  \left\|
    \EE'
    \left[
      \overline\zeta_+^2
      \overline F_+\overline F_+^\top
    \right]
  \right\|^{1/2}
  &\lesssim
  C_\psi^4r^{3/2}\lambda_1^2
  (1+\tau\nu_\infty)^2
  (\tau+\nu_\infty),
  \\
  \left\|
    \EE'
    \left[
      \overline\zeta_+^2
      \overline F_+^\top\overline F_+
    \right]
  \right\|^{1/2}
  &\lesssim
  C_\psi^4r^{3/2}\lambda_1^2
  (1+\tau\nu_\infty)^2
  (\tau+\nu_2).
\end{align*}

Since centering reduces the second moment and
$|1+\eta\overline\zeta_+|\geq1/2$,
\begin{align*}
  \EE'
  \overline D_+\overline D_+^\top
  &\preceq
  4\eta^2
  \EE'
  \overline F_+\overline F_+^\top,
  \\
  \EE'
  \overline D_+^\top\overline D_+
  &\preceq
  4\eta^2
  \EE'
  \overline F_+^\top\overline F_+.
\end{align*}
The preceding two moment bounds therefore prove the quadratic-variation
bound in \textup{(I)}. Likewise,
\begin{align*}
  \EE'
  \overline D_+^{(2)}
  (\overline D_+^{(2)})^\top
  &\preceq
  4\eta^4
  \EE'
  \left[
    \overline\zeta_+^2
    \overline F_+\overline F_+^\top
  \right],
  \\
  \EE'
  (\overline D_+^{(2)})^\top
  \overline D_+^{(2)}
  &\preceq
  4\eta^4
  \EE'
  \left[
    \overline\zeta_+^2
    \overline F_+^\top\overline F_+
  \right],
\end{align*}
which proves the quadratic-variation bound in \textup{(II)}.

\textbf{Remainder bound in (III).} Recall that
\[
  \ooR_+
  =
  -\eta^2
  \EE'
  \left[
    \frac{\overline\zeta_+\overline F_+}{1+\eta\overline\zeta_+}
  \right].
\]
By Lemma~\ref{lem:basic_ineq}, the denominator bound, and the preceding row-moment
estimate,
\begin{align*}
  \|\overline R_+\| \leq 2\eta^2 \left\| \EE' \left[ \overline\zeta_+^2 \overline F_+\overline F_+^\top \right] \right\|^{1/2}
  \lesssim C_\psi^4 \widetilde\eta^2\kappa_\ast^2r^{3/2} (1+\tau\nu_\infty)^2 (\tau+\nu_\infty).
\end{align*}

Meanwhile, \eqref{eqn:H_control} and \eqref{eqn:L_diff} give  
\begin{align*}
  \eta\|(\cL - \overline \cL) \overline \cT\| &\leq \widetilde\eta\kappa_\ast\|H\| (1+\tau\nu_\infty)(\tau+\nu_\infty) \lesssim C_\psi^4 \widetilde\eta^2\kappa_\ast^2r^{3/2} (1+\tau\nu_\infty)^2 (\tau+\nu_\infty).
\end{align*}
Recall from~\eqref{eqn:linearization_trunc2} that $\ooR_+^H = \ooR_+ + \eta (\cL - \oocL) \oocT$, combining the last bound with the bound for $\overline R_+$ proves
\textup{(III)}.
\end{proof}

\subsection{Proof of Lemma~\ref{lem:rankr-trunc}}
\label{sec:prf_rankr-trunc}
\begin{proof}
Since $\operatorname{rank}(\Sigma)=r$, we have
$\Lambda_{\ast,\perp}=0$, and hence $\overline Z_i=0$ almost surely
for every $i\in[n]$. Therefore, exactly as in~\eqref{eqn:rankr_mult},
\[
    \overline\cT_{k+1} = \overline\cT_k \bigl(I_r+\eta\overline Y_{k+1} \overline Y_{k+1}^{\top}\bigr)^{-1}.
\]
Consequently, as in~\eqref{eqn:rankr_monotonicity},
\[
    \operatorname{col}(\overline\cT_{k+1})
    =
    \operatorname{col}(\overline\cT_k),
    \qquad
    \|\overline\cT_{k+1}\|
    \le
    \|\overline\cT_k\|,
    \qquad k\ge0.
\]
The sample-size condition implies the analogue of~\eqref{eqn:rankr_cond_sample_sz}, so
Lemma~\ref{lem:one_step_bound-truncation} applies at every step. Specializing Lemma~\ref{lem:one_step_bound-truncation} to
$\nu_2=\nu_\infty=\bar\nu=0$ gives, up to absolute constants, the
same one-step almost-sure, conditional-variance, and drift-remainder
bounds as \eqref{eqn:rankr_one_step_bd}, with $\overline R_i^H$ in place of $R_i$.

It remains to note that the dimension-reduction argument in the proof
of Lemma~\ref{lem:rankr} is preserved by truncation. Indeed, writing
\[
    \overline K_i
    :=
    \frac{\overline Y_i\overline Y_i^\top}
         {1+\eta\|\overline Y_i\|^2},
\]
we have
\[
    \overline\cT_i-\overline\cT_{i-1}
    =
    -\eta\overline\cT_{i-1}\overline K_i,
    \qquad
    \overline D_i
    =
    -\eta\overline\cT_{i-1}
    \bigl(\overline K_i-\EE_{i-1}\overline K_i\bigr).
\]
Hence, for any block starting at time $s$,
\[
    \operatorname{col}(\overline D_i)
    \subseteq
    \operatorname{col}(\overline\cT_s),
    \qquad i>s.
\]
Moreover, since $\cL M=M\Lambda_\ast$ in the exact rank-$r$ case,
right multiplication by $(I_r-\eta\Lambda_\ast)^{s+t-i}$ preserves
this column-space inclusion. Thus the weighted martingale in the
analogue of\eqref{eqn:rankr_decomp} can again be compressed to a
$q_s\times r$ martingale, where
$q_s=\operatorname{rank}(\overline\cT_s)\le r$.
Applying matrix Freedman's inequality for each fixed terminal time
and then taking a union bound over the at most $n$ terminal times
therefore gives the analogue of \eqref{eqn:rankr_one_block}, with the same bound \eqref{eqn:rankr_b_bound} on the block error, up to absolute constants.

The block-chaining argument in Steps~2--3 of the proof of Lemma~\ref{lem:rankr}
then applies unchanged, after enlarging the absolute constants, and
gives the first conclusion.

Finally, if Assumption~\ref{assumption:init} and \eqref{eqn:rankr_cond_sample_sz_lem2} also hold, the same calculation
as in the last paragraph of the proof of Lemma~\ref{lem:rankr} gives
\[
    \|\cT_0\|
    \lesssim
    \frac{\sqrt d\,n^{1/2-\epsilon}}
         {\sqrt{c_\eta}\kappa_\ast\sqrt r(\log n)^2}.
\]
Combining this with the first conclusion yields \eqref{eqn:rankr_contraction2}, and the
stated bound for $\|\overline\cT_n\|$ follows from $n\widetilde\eta=c_\eta\log n$.
\end{proof}

\subsection{Proof of Lemma~\ref{lem:linearization-truncation}}
\label{sec:prf_linearization-truncation}

\begin{proof}

Starting from \eqref{eqn:linearization_trunc2}, the same decomposition as in Lemma~\ref{lem:linearization} gives
\begin{align*}
    \oocT_n &= (\cI-\eta \cL)^{n-N_*} \oocT_{N_*} + \oocD_{N_*,n}^{(1,0)} + \oocD_{N_*,n}^{(1,1)} + \oocD_{N_*,n}^{(2)} + \oocR_{N_*,n}^H,
\end{align*}
where
\begin{align*}
    \oocD_{N_*,n}^{(1,0)} &:= \eta \sum_{i=N_* + 1}^n (\cI-\eta \cL)^{n-i} (\ooZ_i \ooY_i^\top - \ooLambda_{*,\times});\\
    \oocD_{N_*,n}^{(1,1)} &:= \ooS_1 + \ooS_2 + \ooS_3,\\
    \text{where} &\quad \ooS_1 := \eta \sum_{i=N_* + 1}^n (\cI-\eta \cL)^{n-i}[-\oocT_{i-1}(\ooY_i \ooY_i^\top - \ooLambda_*)],\\
    &\quad \ooS_2 := \eta \sum_{i=N_* + 1}^n (\cI-\eta \cL)^{n-i} [(\ooZ_i \ooZ_i^\top - \ooLambda_{*,\perp})\oocT_{i-1}],\\
    &\quad \ooS_3 :=\eta \sum_{i=N_* + 1}^n (\cI-\eta \cL)^{n-i} [-\oocT_{i-1} (\ooY_i \ooZ_i^\top- \ooLambda_{*,\times}^\top) \oocT_{i-1}];\\
    \oocD_{N_*,n}^{(2)} &:= \sum_{i=N_* + 1}^n (\cI-\eta \cL)^{n-i} \ooD_i^{(2)};\\
    \oocR_{N_*,n}^H &:= \sum_{i=N_* + 1}^n (\cI-\eta \cL)^{n-i} \ooR_i^H.
\end{align*}

Then
\begin{align*}
  &\norm[\Big]{\oocT_n - \eta \sum_{i=1}^n (\cI-\eta \cL)^{n-i} (\ooZ_i \ooY_i^\top)}\\ 
  \leq &\underbrace{\norm[\big]{(\cI-\eta \cL)^{n-N_*} \oocT_{N_*}}}_{=:\ooalpha_1} + \underbrace{\norm[\big]{\eta \sum_{i=1}^{N_*} (\cI-\eta \cL)^{n-i} (\ooZ_i \ooY_i^\top - \ooLambda_{*,\times})}}_{=:\ooalpha_2} \\
  + &\underbrace{\norm[\big]{\oocD_{N_*,n}^{(1,1)}}}_{=:\ooalpha_3} + \underbrace{\norm[\big]{\oocD_{N_*,n}^{(2)}}}_{=:\ooalpha_4} + \underbrace{\norm[\big]{\oocR_{N_*,n}^H - \eta \sum_{i=N_* + 1}^n (\cI-\eta \cL)^{n-i} \ooLambda_{*,\times}}}_{=:\ooalpha_5} + \underbrace{\norm[\big]{\eta \sum_{i=1}^{N_*} (\cI-\eta \cL)^{n-i} \ooLambda_{*,\times}}}_{=:\ooalpha_6}.
\end{align*}

Note that by \cref{lem:basic_ineq} and a geometric series bound, we have
\begin{align*}
  \ooalpha_6 &\leq \eta\sum_{i=1}^{N_*} (1-\teta)^{n-i} \|\Lambda_{\ast,\times}\| \leq (1-\teta)^{n-N_*} \frac{\|\Lambda_{\ast,\times}\|}{\Delta_{\min}} \lesssim e^{-\teta(n-N_*)} \kappa_*\nu_\infty\|H\|.
\end{align*}  
Moreover, \eqref{eq:trunc-ABC-bounds} and \eqref{eqn:H_control} gives
\begin{align*}
  \eta\|\Lambda_{\ast,\times}\| \leq \teta\kappa_*\nu_\infty\|H\| \lesssim C_\psi^4\teta^2\kappa_*^2r^{3/2}\nu_\infty \leq C_\psi^4\teta^2\kappa_*^2r^{3/2} (1+\tau\nu_\infty)^2(\tau+\nu_\infty).
\end{align*}
Consequently, $\ooR_i^H - \eta \ooLambda_{*,\times}$ satisfies the same bound (up to
an absolute constant) as $\ooR_i^H$ in Lemma~\ref{lem:one_step_bound-truncation}.

Thus, the same argument as in the proof of Lemma~\ref{lem:linearization}, with Lemma~\ref{lem:one_step_bound-truncation} in place of Lemma~\ref{lem:one_step_bound}, shows that with probability at least $1-n^{-18}$,
\begin{align*}
  \ooalpha_1 + \ooalpha_2 + \ooalpha_3 + \ooalpha_4 + \ooalpha_5 \lesssim \overline\tau_* \left[ C_\psi^2\kappa_* \sqrt{r\teta\log n} +\overline\tau_* \right].
\end{align*}
Combining the above bounds for $\ooalpha_1$, $\ooalpha_2$, $\ooalpha_3$, $\ooalpha_4$, $\ooalpha_5$ and $\ooalpha_6$ gives \eqref{eq:near-isotropic-linearization-main}. 

Finally, since $c_\eta \geq c_\eta^\ast+1/2$ and
$N_\ast=\lfloor (c_\eta^\ast/c_\eta)n\rfloor$, we have
\[ 
  e^{-\teta(n-N_\ast)} \leq e^{-(c_\eta-c_\eta^\ast)\log n} \leq n^{-1/2}.
\]
By \eqref{eqn:H_control}, $\|H\|\lesssim\widetilde\eta$. Hence,
\[
  e^{-\teta(n-N_*)} \kappa_*\nu_\infty\|H\| \lesssim n^{-1/2}\kappa_*\nu_\infty\teta \lesssim \overline\tau_*\,C_\psi^2\kappa_* \sqrt{r\teta\log n}.
\]
Thus, the last term in  \eqref{eq:near-isotropic-linearization-main} is absorbed into the first term on its right-hand side, which proves~\eqref{eq:near-isotropic-linearization-final}.
\end{proof}

\subsection{Proof of Lemma~\ref{lem:lower_bound}}
\label{sec:prf_lower_bound}

We only prove the column bound, as the row bound follows by symmetry. Recall
\begin{align*}
    \cD_n^{(1,0)} = \eta \sum_{i=1}^n (\cI-\eta \cL)^{n-i} (Z_{i} Y_{i}^\top)
\end{align*}

Denote the first column of $\cD_n^{(1,0)}$ by
\begin{align*}
    W:= \sbr{\cD_n^{(1,0)}}_{\cdot,1} = \sum_{i=1}^n \gamma_i, \qquad \text{where} \; \gamma_i := \eta \, \diag \cbr{(1-\eta \Delta_{l1})^{n-i}}_{l=1}^{n-r} Z_{i} \sbr{Y_{i}}_1 \in \RR^{d-r}.
\end{align*}
Note that we have $\EE \gamma_i = 0$ since $Y=U_*^\top X$ and $Z=U_{*,\perp}^\top X$.

\paragraph{Step 1: Column second moment} By independence $\gamma_i\perp \gamma_{i'}$ for $i \ne i'$ and $\EE \gamma_i = 0$, we have
    \begin{align*}
        \EE \norm{W}^2 = \EE \sum_{i=1}^n  \norm{\gamma_i}^2 = \sum_{l=1}^{d-r} \; \underbrace{\EE \sum_{i=1}^n \sbr{\gamma_i}_l^2}_{=:\sigma_{l}^2}
    \end{align*}
    Under $\eta \Delta_{\max} \in (0,1)$, we have $\eta \Delta_{l1} \in (0,1)$, hence $2-\eta \Delta_{l1} \in (1,2)$ and
    \begin{align*}
        \sigma_{l}^2 &= \sum_{i=1}^n \eta^2 (1-\eta \Delta_{l1})^{2(n-i)} \EE \sbr{Z_{i}}_l^2 \sbr{Y_{i}}_1^2 = \sum_{i=1}^n \eta^2 (1-\eta \Delta_{l1})^{2(n-i)} \lambda_{r+l} \lambda_1\\
        &\geq \sum_{i=1}^n \eta^2 (1-\eta \Delta_{\max})^{2(n-i)} \lambda_{r+l} \lambda_1
        = \eta \lambda_{r+l}\lambda_1 \frac{ \sbr{1-(1-\eta \Delta_{\max})^{2n}}}{(2-\eta \Delta_{\max}) \Delta_{\max}} \\
        &\geq  \eta \lambda_{r+l}\lambda_1 \frac{1-(1-\eta \Delta_{\max})^{2n}}{2\Delta_{\max}}
    \end{align*}
    where the last line follows from $\eta \Delta_{\max} \in (0,1)$.

    Moreover, since $n\teta \geq \tfrac{1}{2} \log n$. We have
    \begin{align*}
        (1-\eta \Delta_{\max})^{2n} \leq (1-\eta \Delta_{\min})^{2n}  \leq e^{-2 n \eta \Delta_{\min}} = e^{-2n \teta} \leq e^{-\log n} = \frac{1}{n}.
    \end{align*}
    Thus, we have
    \begin{align}
        \EE \norm{W}^2 =\EE \sum_{i=1}^n  \norm{\gamma_i}^2 \geq \teta \frac{\lambda_1 \lambda_{r+1 \sim d}}{2\Delta_{\max} \Delta_{\min}} \sbr{1- \tfrac{1}{n}} \label{eqn:col_W_2nd_lower}
    \end{align}

\paragraph{Step 2: Column fourth moment}
\begin{itemize}[leftmargin=1em]
    \item First, we show that
    \begin{align*}
        \EE \norm{W}^4 \leq \sum_i\EE \norm{\gamma_i}^4 + 3 \rbr[\Big]{\sum_i\EE \norm{\gamma_i}^2}^2 
    \end{align*}
    To see this, note that
    \begin{align*}
        \EE \norm{W}^4 = \EE \norm[\Big]{\sum_{i=1}^n \gamma_i}^4 = \sum_{i,j,a,b=1}^n \EE \inner[\big]{\gamma_i}{\gamma_j} \inner{\gamma_a}{\gamma_b}.
    \end{align*}
    By independence and $\EE \gamma_i =0$, any term with a singleton index (an index appearing exactly once among $\{i,j,a,b\}$) vanishes: conditioning on all other variables isolates a factor $\EE[\gamma_i]=0$.  The surviving pairings are:
    \begin{enumerate}[label=(\roman*),leftmargin=2em]
    \item $(i=j,\,a=b)$: contributes $\sum_{i,a}\EE \sbr{\norm{\gamma_i}^2 \norm{\gamma_a}^2}$.
    \item $(i=a,\,j=b,\,i\ne j)$: contributes $\sum_{i\ne j}\EE[ \inner{\gamma_i}{\gamma_j}^2]$.
    \item $(i=b,\,j=a,\,i\ne j)$: same value as (ii).
    \end{enumerate}
    Hence
    \begin{align*}
        \EE \norm{W}^4 = \sum_{i,a}\EE \sbr[\big]{\norm{\gamma_i}^2 \norm{\gamma_a}^2}+ 2\sum_{i\ne j}\EE[\inner{\gamma_i}{\gamma_j}^2].
    \end{align*}

    \textit{First sum.}  Splitting the diagonal and off-diagonal:
    \begin{align*}
        \sum_{i,a}\EE \sbr[\big]{\norm{\gamma_i}^2 \norm{\gamma_a}^2} &= \sum_i\EE \norm[\big]{\gamma_i}^4 + \sum_{i\ne a}\EE \norm{\gamma_i}^2\cdot\EE\norm{\gamma_a}^2 \\
        &\le \sum_i\EE \norm{\gamma_i}^4 + \rbr[\Big]{\sum_i\EE \norm{\gamma_i}^2}^2.
    \end{align*}

    \textit{Second sum.}  For $i \neq j$, by independence, 
    \begin{align*}
        \EE\sbr[\big]{\langle\gamma_i,\gamma_j\rangle^2}= \tr(\Sigma_i \Sigma_j), \quad \text{where} \; \Sigma_m = \EE[\gamma_m \gamma_m^{\top}].  
    \end{align*}
    Since $\tr(AB)\le\tr(A)\tr(B)$ for PSD matrices $A,B$, we have
    \begin{align*}
        2\sum_{i\ne j}\tr(\Sigma_i \Sigma_j) \le 2\sum_{i\ne j}\tr(\Sigma_i)\tr(\Sigma_j) \le 2\Bigl(\sum_i\tr\Sigma_i \Bigr)^2 = 2\Bigl(\sum_i\EE\|\gamma_i\|^2\Bigr)^2.
    \end{align*}
    Adding gives
    \begin{align*}
        \EE \norm{W}^4 \leq \sum_{i=1}^n \EE \norm{\gamma_i}^4 + 3 \rbr[\Big]{\sum_{i=1}^n \EE \norm{\gamma_i}^2}^2 
    \end{align*}

    \item Next, we compute $\sum_{i=1}^n\EE \norm{\gamma_i}^4$. To this end, note that
    \begin{align*}
        \norm{\gamma_i}^2 =  \sbr{Y_{i}}_1^2 \sum_{l=1}^{d-r} \eta^2 (1-\eta \Delta_{l1})^{2(n-i)}\sbr{Z_{i}}_l^2 . 
    \end{align*}
    By independence $Y\perp Z$ and $\EE \sbr{Y_i}_1^4 \lesssim \lambda_1^2$ (sub-Gaussian), we have
    \begin{align*}
        \EE \norm{\gamma_i}^4 &= \EE \sbr[\Big]{\sbr{Y_{i}}_1^2  \sum_{l=1}^{d-r} \eta^2 (1-\eta \Delta_{lj})^{2(n-i)} \sbr{Z_{i}}_l^2}^2\\
        &\lesssim \lambda_{1}^2 \EE \sbr[\Big]{\sum_{l=1}^{d-r} \eta^2 (1-\eta \Delta_{l1})^{2(n-i)} \sbr{Z_{i}}_l^2}^2\\
        &\overset{(i)}{=} \lambda_{1}^2  \EE \sbr[\Big]{\sum_{l=1}^{d-r} \eta^2 (1-\eta \Delta_{l1})^{2(n-i)} \lambda_{r+l} \sbr[\big]{\tZ_{i}}_l^2}^2\\
        &\leq \lambda_{1}^2 \eta^4 (1-\eta \Delta_{\min})^{4(n-i)} \EE \sbr[\Big]{\sum_{l=1}^{d-r} \lambda_{r+l} \sbr[\big]{\tZ_{i}}_l^2}^2\\
        &\overset{(ii)}{=} \lambda_{1}^2 \eta^4 (1-\eta \Delta_{\min})^{4(n-i)} \EE \norm[\big]{B \tZ_{i}}^4
    \end{align*}
    where (i) follows from $\sbr{Z_{i}}_j = \lambda_j^{1/2} \sbr[\big]{\tZ_{i}}_j$ and (ii) holds with $B=\diag\rbr{\lambda_{l+r}^{1/2}}_{l=1}^{d-r} \in \RR^{(d-r) \times (d-r)}$. Applying Lemma~\ref{lem:kth_moment} gives
    \begin{align*}
        \sbr[\big]{\EE \norm[\big]{B \tZ_{i}}^4}^{1/4} \lesssim \Fnorm{B} C_\psi = C_\psi \lambda_{r+1\sim d}^{1/2}
    \end{align*}
    Thus,
    \begin{align*}
        \EE \norm{\gamma_i}^4 \lesssim  C_\psi^4  \eta^4 (1-\eta \Delta_{\min})^{4(n-i)} \lambda_{1}^2\lambda_{r+1\sim d}^2 \leq C_\psi^4 \teta^4 (1-\teta)^{4(n-i)} \frac{\lambda_1^2 \lambda_{r+1 \sim d}^2}{\Delta_{\min}^4}
    \end{align*}
    which implies 
    \begin{align*}
        \sum_{i=1}^n \EE \norm{\gamma_i}^4 
        &\lesssim C_\psi^4 \teta^4 \frac{\lambda_1^2 \lambda_{1 \sim r}^2}{\Delta_{\min}^4} \nu_2^4 \sum_{i=1}^n (1-\teta)^{4(n-i)} \leq C_\psi^4 \teta^3 \frac{\lambda_1^2 \lambda_{r+1 \sim d}^2}{\Delta_{\min}^4} 
    \end{align*}
    
    \item With the above two bounds, we have
    \begin{align*}
        \frac{\sum_{i=1}^n \EE \norm{\gamma_i}^4 }{\rbr[\Big]{\sum_{i=1}^n \EE \norm{\gamma_i}^2}^2} \lesssim 
            \frac{C_\psi^4 \teta^3 \frac{\lambda_{1}^2 \lambda_{r+1 \sim d}^2}{\Delta_{\min}^4}}{\teta^2 \frac{\lambda_{1}^2 \lambda_{r+1 \sim d}^2}{4 \Delta_{\max}^2 \Delta_{\min}^2}\sbr{1-\tfrac{1}{n}}^2} 
        \lesssim 
            4 C_\psi^4 \teta \frac{\Delta_{\max}^2}{\Delta_{\min}^2} = o(1)
    \end{align*}
    which implies that
    \begin{align*}
        \EE \norm{W}^4 &\leq \sum_{i=1}^n\EE \norm{\gamma_i}^4 + 3 \rbr[\Big]{\sum_{i=1}^n \EE \norm{\gamma_i}^2}^2 \\
        &= (3+o(1)) \rbr[\Big]{\sum_{i=1}^n \EE \norm{\gamma_i}^2}^2 = (3+o(1)) (\EE \norm{W}^2)^2
    \end{align*}

    Finally, applying the Paley-Zygmund inequality gives
    \begin{align*}
        \PP \cbr{\norm{W}^2 \geq \frac{1}{2} \EE \norm{W}^2} \geq \frac{1}{4} \frac{\rbr{\EE \norm{W}^2}^2}{\EE \norm{W}^4} \geq \frac{1}{12}(1+o(1))
    \end{align*}
    which combined with \eqref{eqn:col_W_2nd_lower} gives
    \begin{align*}
        \PP \cbr[\Big]{\norm{W}^2 \geq \sbr{1- \tfrac{1}{n}} \teta \frac{\lambda_{1} \lambda_{r+1\sim d}}{4\Delta_{\max} \Delta_{\min}} } \geq \frac{1}{4} \frac{\rbr{\EE \norm{W}^2}^2}{\EE \norm{W}^4} \geq \frac{1}{12}(1+o(1))
    \end{align*}
    
\end{itemize}

\section{Proofs for Section~\ref{sec:main_convergence}}
\label{sec:prf_sec_convergence}
\subsection{Proof of Theorem~\ref{thm:convergence_rate_optimal}}
We apply \cref{lem:convergence_rate_optimal-trunc} to the truncated process and then transfer its conclusion to the original process through the truncation coupling~\eqref{eqn:trunc_coupling}. We first record two consequences of the sample-size condition~\eqref{eqn:cond_sample_sz}:
\begin{align*}
    n^{1-2\varepsilon} &\ge C_1 c_\eta \kappa_*^2 r^3 (1 + \nu_\infty d)\, (\log n)^4 \frac{\log(1/\delta_p)}{\delta_p^2}. \tag{\ref{eqn:cond_sample_sz}}
\end{align*}
\begin{itemize}[leftmargin=*]
    \item $n^{1-2\varepsilon} \geq C_1 c_\eta \kappa_*^2 r^3 (\log n)^4 \frac{\log(1/\delta_p)}{\delta_p^2}$ implies that
    \begin{align}
        \delta_p \geq \delta_p^2 \geq \frac{\log(1/\delta_p)}{n^{1-2\varepsilon}} \geq \frac{1}{n}, \label{eqn:cond_delta_p}
    \end{align}
    where the first and third inequalities follow from $\delta_p\le 1/e$, while the second follows from $c_\eta\ge 1/2$, $\kappa_\ast,r\ge 1$, and $C_1$ being sufficiently large.
    \item $n^{1-2\varepsilon} \geq C_1 c_\eta \kappa_*^2 d\nu_\infty r^3 (\log n)^4 \frac{\log(1/\delta_p)}{\delta_p^2}$ similarly implies that
    \begin{align*}
        d \leq n^{1-2\varepsilon} \nu_\infty^{-1} \frac{\delta_p^2}{\log(1/\delta_p)} \leq n^{1-2\varepsilon + C_\nu} c_\nu^{-1}
    \end{align*}
    where the second inequality follows from $\delta_p \leq 1/e$ and \eqref{eqn:cond_nu}. Hence Assumption~\ref{assumption:d} holds.
\end{itemize}

Moreover, by \cref{lem:init},
\begin{align*}
    \PP\cbr[\Big]{\norm[\big]{\cT_0} \leq C_{\mathsf{init}} \delta_p^{-1}\sqrt{rd \log(1/\delta_p)}} \geq 1-\frac{\delta_p}{2}.
\end{align*}
Denote this event by $\cE_0$. On $\cE_0$, Assumption~\ref{assumption:init} holds, so Lemma~\ref{lem:convergence_rate_optimal-trunc} applies to the truncated process. Hence, with probability at least
$1-n^{-19}$,
\[
  \|\overline{\mathcal T}_k\|
  \lesssim \tau_*
  \qquad
  \text{for all }
  \left\lfloor\tfrac{c_\eta^*}{c_\eta}n\right\rfloor
  \le k\le n.
\]
By the truncation coupling~\eqref{eqn:trunc_coupling},
\[
  \mathbb P\{\overline U_k=U_k\text{ for all }k\in[n]\}
  \ge 1-n^{-19}.
\]
Therefore, with probability at least
\begin{align*}
    \PP \cbr{\norm{\sin \Theta(U_k,U_*)} \lesssim \tau_*, \text{ for all } \floor{\tfrac{c_\eta^*}{c_\eta} n}\leq k \leq n} \geq \rbr{1-\delta_p/2}\rbr{1-2n^{-19}} \geq 1-\delta_p
\end{align*}
where the last inequality follows from~\eqref{eqn:cond_delta_p}.

\subsection{Proof of Theorem~\ref{thm:rankr}}
We apply Lemma~\ref{lem:rankr-trunc} to the truncated process and then
transfer its conclusion to the original process through the truncation coupling.
Let $\{\overline X_i\}_{i=1}^n$, $\{\overline U_k\}_{k=0}^n$, and
$\{\overline\cT_k\}_{k=0}^n$ denote the truncated observations, iterates, and
tangent matrices constructed in Appendix~\ref{sec:oja_subG}.

As in the proof of Theorem~\ref{thm:convergence_rate_optimal}, the sample-size condition
\eqref{eqn:cond_sample_sz_rankr_thm} implies, for $C_2$ sufficiently large,
\begin{align}
    \delta_p \geq \delta_p^2 \geq \frac{\log(1/\delta_p)}{n^{1-2\varepsilon}} \geq \frac{1}{n}. \label{eqn:rankr_thm_deltap_lower}
\end{align}

By Lemma~\ref{lem:init}, Haar initialization gives
\[
    \PP\left\{
        \|\cT_0\|
        \le
        C_{\rm init}\delta_p^{-1}
        \sqrt{rd\log(1/\delta_p)}
    \right\}
    \ge 1-\frac{\delta_p}{2},
\]
after enlarging the absolute constant if necessary. Denote this initialization event by $\cE_0$. 

On $\cE_0$, Assumption~\ref{assumption:init} holds. Since $\rank(\Sigma)=r$, Lemma~\ref{lem:rankr-trunc} therefore gives, conditionally on $\cE_0$, with probability at least $1-n^{-19}$,
\[
    \|\overline{\cT}_k\|
    \lesssim
    \frac{\sqrt d\,n^{1/2-\epsilon}}
    {\sqrt{c_\eta}\,\kappa_\ast\sqrt r\,(\log n)^2}
    \exp\left\{
        -\left(1-\frac{1}{n^\epsilon\log n}\right)
        k\widetilde\eta
    \right\},
    \qquad k=0,1,\ldots,n.
\]
Here the sample-size condition in the theorem is precisely
\eqref{eqn:rankr_cond_sample_sz_lem2}, up to enlarging $C_{\rm rk,tr}$.

By the truncation coupling~\eqref{eqn:trunc_coupling} in Appendix~\ref{sec:oja_subG},
\[
    \PP\left\{
        \overline U_k=U_k
        \text{ for all } k\in[n]
    \right\}
    \ge 1-n^{-19}.
\]
Therefore, with probability at least
\[
    \rbr{1-\frac{\delta_p}{2}}\rbr{1-2n^{-19}}
    \ge 1-\delta_p,
\]
where the last inequality follows from~\eqref{eqn:rankr_thm_deltap_lower},
the preceding bound holds with $\cT_k$ in place of
$\overline{\cT}_k$ simultaneously for all $k=0,\ldots,n$.

Finally, Lemma~\ref{lem:alignment_diff} gives
\[
    \|\sin\Theta(U_k,U_\ast)\|
    =
    \bigl\|
        \cT_k(\cC_k\cC_k^\top)^{1/2}
    \bigr\|
    \le \|\cT_k\|,
\]
since $\|(\cC_k\cC_k^\top)^{1/2}\|\le 1$.
This proves the uniform bound.

For $k=n$, using
$n\widetilde\eta=c_\eta\log n$ gives
\[
    \exp\left\{
        -\left(1-\frac{1}{n^\epsilon\log n}\right)
        n\widetilde\eta
    \right\}
    =
    n^{-c_\eta\left(1-(n^\epsilon\log n)^{-1}\right)},
\]
which yields the stated bound for $U_n$ and completes the proof.

% --------------------------------------------------------------------

\subsection{Proof of Proposition~\ref{prop:linearization}}
We apply Lemma~\ref{lem:linearization-truncation} to the truncated process and
then transfer its conclusion to the original process through the truncation coupling.

Let $\cE_0$ be the initialization event defined in the proof of
Theorem~\ref{thm:convergence_rate_optimal}. On $\cE_0$, Assumption~\ref{assumption:init} holds, so Lemma~\ref{lem:linearization-truncation} gives, with probability at least $1-n^{-18}$,
\[
\norm[\Big]{\overline{\cT}_n - \eta\sum_{i=1}^n (\cI-\eta\cL)^{n-i} (\overline Z_i\overline Y_i^\top)} \lesssim \tau_\ast
\sbr[\big]{\kappa_\ast\sqrt{r\widetilde\eta\log n}\,(1+\bar\nu)} \lesssim \frac{\tau_\ast}{n^\epsilon\log n},
\]
where we used Lemma~\ref{lem:linearization-truncation} together with
$\overline\tau_\ast\asymp\tau_\ast$.

On the truncation coupling event~\eqref{eqn:trunc_coupling}, we have
\[
\overline{\cT}_n=\cT_n,
\qquad
\overline Z_i\overline Y_i^\top=Z_iY_i^\top,
\quad i\in[n].
\]
Hence the same bound holds for the original process. The total failure
probability is at most
\[
\frac{\delta_p}{2}+n^{-18}+n^{-19}\le \delta_p
\]
by \eqref{eqn:cond_delta_p}. This proves the proposition.

\subsection{Proof of Proposition~\ref{prop:lower_bd}}
The proof combines Proposition~\ref{prop:linearization} with Lemma~\ref{lem:lower_bound}.

\paragraph{Lower bound for the linearized term.}
Let
\[
L_n := \eta\sum_{i=1}^n(\cI-\eta\cL)^{n-i}(Z_iY_i^\top).
\]
By Lemma~\ref{lem:lower_bound}, under \(Y\perp Z\),
\[
\mathbb P\left\{
\|L_n\|^2
\ge
\left(1-\frac1n\right)
\frac{\widetilde\eta\lambda_1\lambda_{1\sim r}}
{4\Delta_{\max}\Delta_{\min}}\bar\nu^2
\right\}
\ge
\frac1{12}[1+o(1)].
\]
Moreover, when \(\kappa_*\lesssim 1\),
\[
\frac{\widetilde\eta\lambda_1\lambda_{1\sim r}}
{\Delta_{\max}\Delta_{\min}}\bar\nu^2
\asymp
\frac{\tau_*^2}{\log n}.
\]
Therefore,
\[
\mathbb P\left\{
\|L_n\|
\gtrsim
\frac{\tau_*}{\sqrt{\log n}}
\right\}
\ge
\frac1{12}[1+o(1)].
\]

\paragraph{Transfer to the tangent matrix.}
By Proposition~\ref{prop:linearization}, with probability at least $1-\delta_p$,
\[
\|\cT_n-L_n\|
\lesssim
\frac{\tau_\ast}{n^\epsilon\log n}
=
o\!\left(\frac{\tau_\ast}{\sqrt{\log n}}\right).
\]
Intersecting this event with the event above and using the triangle inequality,
\[
\|\cT_n\|
\ge
\|L_n\|-\|\cT_n-L_n\|
\gtrsim
\frac{\tau_\ast}{\sqrt{\log n}}
\]
with probability at least
\[
\frac{1}{12}[1+o(1)]-\delta_p.
\]

\paragraph{Transfer to the sine distance.}
By Lemma~\ref{lem:alignment_diff},
\[
\|\sin\Theta(U_n,U_\ast)\| = \|\cT_n(\cC_n\cC_n^\top)^{1/2}\|.
\]
Moreover, on the same high-probability event used in the proof of
Proposition~\ref{prop:linearization},
\[
\|\cT_n\|\lesssim \tau_\ast=o(1).
\]
Hence
\[
\sigma_{\min}\!\left((\cC_n\cC_n^\top)^{1/2}\right)=1-o(1),
\]
and therefore
\[
\|\sin\Theta(U_n,U_\ast)\|
\ge (1-o(1))\|\cT_n\|
\gtrsim
\frac{\tau_\ast}{\sqrt{\log n}}.
\]
This proves the claim.

\section{Setup for Gaussian approximation and Bootstrap}
\label{sec:clt_setup}
\subsection{Notation and definitions.}
\label{sec:clt_setup_defn}
This section introduces the operator-theoretic framework underlying Theorems~\ref{thm:highD_clt}--\ref{thm:bs_row_clt}.
All the results rest on a common set of objects, which we now define.

\paragraph{Hilbert space $\cH$.}
% -----------------------------------------------------------------------------
Let $\cH = \RR^{(d-r)\times r}$ equipped with the Frobenius inner product $\Finner{\cdot}{\cdot}$. 
Every random element $\xi$ of $\cH$ with $\EE\xi = 0$ and 
$\EE \Fnorm{\xi}^2 < \infty$ induces a \emph{covariance operator} $\cov(\xi)$, which is the unique self-adjoint positive-semidefinite linear operator on $\cH$ satisfying
\begin{equation*}%\label{eq:cov-def}
  \cov(\xi)(A) 
  = \EE \sbr{\Finner{A}{\xi} \xi},
  \qquad \forall\, A \in \cH.
\end{equation*}
Equivalently, for any $A, B \in \cH$, $\Finner{B}{\cov(\xi)(A)}  = \EE[\Finner{A}{\xi} \Finner{B}{\xi}]$.
In particular, $\tr\sbr{\cov(\xi)} = \EE\Fnorm{\xi}^2$ and 
$\Onorm{\cov(\xi)} = \sup_{\Fnorm{A} = 1} \EE\Finner{A}{\xi}^2$.

% -----------------------------------------------------------------------------
\paragraph{Gap operator.}
% -----------------------------------------------------------------------------
Define the linear operator $\cL : \cH \to \cH$ by
\begin{align*}
    [\cL A]_{lj} = \Delta_{lj} A_{lj},
    \qquad \Delta_{lj} := \lambda_j - \lambda_{r+l},
    \quad
    1 \le l \le d-r,\; 1 \le j \le r.
\end{align*}
The operator $\cL$ is diagonal in the standard basis 
$\cbr{E_{lj} := e_l e_j^\top}_{l,j}$ of $\cH$ with eigenvalues 
$\{\Delta_{lj}\}$.
Write $\Delta_{\min} := \min_{l,j} \Delta_{lj} > 0$ and 
$\Delta_{\max} := \max_{l,j} \Delta_{lj}$.
Since $\cL$ is strictly positive, the semigroup $\cbr{e^{-t\cL}}_{t \ge 0}$ is 
exponentially stable: $\Onorm{e^{-t\cL}} \le e^{-t\Delta_{\min}}$ for 
all $t \ge 0$.

% -----------------------------------------------------------------------------
\paragraph{Covariance operator of $ZY^\top$ and $\tZ \tY^\top$.}
% -----------------------------------------------------------------------------
Define the operator $\fC : \cH \to \cH$ as the covariance of the random element $ZY^\top \in \cH$:
\begin{equation} \label{eqn:def_fC}
  \fC(B) := \EE\sbr{\la ZY^\top, B\ra_\cH\; ZY^\top}, \qquad B \in \cH.
\end{equation}
In coordinates, $\Finner{E_{l'j'}}{\fC(E_{lj})} = \EE \rbr[\big]{\sbr{Z}_l \sbr{Y}_j \sbr{Z}_{l'} \sbr{Y}_{j'}}$.
The operator $\fC$ is self-adjoint and positive-semidefinite on $\cH$, with 
$\tr(\fC) = \EE\Fnorm{ZY^\top}^2$ and 
$\HS{\fC} = \sbr[\big]{\sum_{lj,l'j'} \Finner{E_{l'j'}}{\fC(E_{lj})}^2}^{1/2}$.

% \smallskip
Similarly, define the \emph{whitened covariance operator} $\tfC : \cH \to \cH$ using the isotropic coordinates 
$\tX = (\tY, \tZ) \in \RR^r \times \RR^{d-r}$ (where $Y = \Lambda_*^{1/2}\tY$, $Z = \Lambda_{*,\perp}^{1/2}\tZ$):
\begin{equation*}%\label{eq:def-tC}
    \tfC(B) := \EE\sbr[\big]{\Finner[\big]{\tZ\tY^\top}{B} \, \tZ\tY^\top},
    \qquad B \in \cH.
\end{equation*}

% -----------------------------------------------------------------------------
\paragraph{Summands.}
% -----------------------------------------------------------------------------
All the Gaussian approximation and bootstrap theorems rest on the decomposition
\begin{equation*}%\label{eq:linear-rep}
  \frac{1}{\sqrt{\eta}}\sbr{U_n \sgn(U_n^\top U_*) - U_*}
  \approx \frac{1}{\sqrt{\eta}} U_{*,\perp} T_n
  \approx U_{*,\perp} \sbr[\Big]{ \sum_{i=1}^{n} \sqrt{\eta}(I - \eta\cL)^{n-i}\, Z_i Y_i^\top }
\end{equation*}
% where $E_n$ is a remainder controlled by the dynamics 
% (Section~\ref{sec:dynamics}) and $C_n = U_*^\top U_n$.
Thus, we define the \emph{summands} $\xi_i \in \cH$ and their covariances $\Xi_i$ by
\begin{equation} \label{eqn:xi_defn}
  \xi_i := \sqrt{\eta} (\cI - \eta\cL)^{n-i}\, (Z_i Y_i^\top)
  \in \cH,
  \qquad \qquad
  \Xi_i := \cov(\xi_i) = \eta (\cI - \eta\cL)^{n-i} \fC (\cI - \eta\cL)^{n-i}.
\end{equation}
Note that each $\Xi_i$ is a self-adjoint positive-semidefinite linear operator on $\cH$, obtained by conjugating $\fC$ with the contraction $(\cI - \eta\cL)^{n-i}$.

% -----------------------------------------------------------------------------
\paragraph{Cumulative covariances.}
% -----------------------------------------------------------------------------
Define the finite-sample and limiting cumulative covariances $\Gamma_n$ and~$\Gamma$ by
\begin{align*}
  \Gamma_n &:= \sum_{i=1}^{n} \Xi_i 
  = \eta \sum_{k=0}^{n-1} (\cI - \eta\cL)^{k} \fC (\cI - \eta\cL)^{k},\qquad \qquad  \Gamma := \int_0^\infty e^{-t\cL} \fC e^{-t\cL}\, dt.
%   \label{eq:def-Gamma}
\end{align*}
The sum $\Gamma_n$ is an approximation to $\Gamma$.
The operator $\Gamma$ is the unique solution to the \emph{Lyapunov equation} $\cL\Gamma + \Gamma\cL = \fC$ (Lemma~\ref{lem:Gamma_properties}).

% \smallskip
The whitened analogues $\tGamma_n$ and $\tGamma$ are defined identically 
with $\tfC$ replacing $\fC$.

% -----------------------------------------------------------------------------
\paragraph{Projection to $\RR^r$.}
% -----------------------------------------------------------------------------
For $u \in \RR^{d-r}$, define the bounded linear map $T_u : \cH \to \RR^r$ by $T_u(B) := B^\top u$.
Its adjoint is $T_u^* : \RR^r \to \cH$, $T_u^*(v) = uv^\top$.
Given $u \in \RR^{d-r}$, define the \emph{projected summands} $\xi_{u,i}$ and covariances $\Xi_{u,i}$ by
\begin{align*}
    \xi_{u,i} &:= T_u(\xi_i) = \sqrt{\eta} [(\cI - \eta\cL)^{n-i} (Z_i Y_i^\top)]^\top u \in \RR^r, \quad \quad  \Xi_{u,i} := \cov(\xi_{u,i}) = T_u \Xi_i T_u^* \in \RR^{r \times r},
\end{align*}
and the cumulative covariances $\Gamma_{u,n}$ and $\Gamma_u$ by
\begin{align}
    \Gamma_{u,n}:= \sum_{i=1}^{n} \Xi_{u,i} = T_u \Gamma_n T_u^* \in \RR^{r \times r}, \qquad \qquad
    \Gamma_u := T_u \Gamma T_u^* \in \RR^{r \times r}. \label{eqn:Gamma_u_defn}
\end{align}

The whitened analogues $\txi_{u,i}$, $\tXi_{u,i}$, $\tGamma_{u,n}$, and $\tGamma_{u}$ are defined identically with $\tZ_i \tY_i^\top$ replacing $Z_i Y_i^\top$ and $\tfC$ replacing $\fC$, i.e.
\begin{equation}
    \begin{alignedat}{2}
        \txi_{u,i} &:= T_u(\txi_i) = \sqrt{\eta} [(\cI - \eta\cL)^{n-i} (\tZ_i \tY_i^\top)]^\top u \in \RR^r,\quad \quad  &&\tXi_{u,i} := \cov(\txi_{u,i}) = T_u \tXi_i T_u^* \in \RR^{r \times r}, \\
        \tGamma_{u,n}&:= \sum_{i=1}^{n} \tXi_{u,i} = T_u \tGamma_n T_u^* \in \RR^{r \times r}, 
        &&\tGamma_u := T_u \tGamma T_u^* \in \RR^{r \times r}.
    \end{alignedat}
    \label{eqn:whitened_defn}
\end{equation}

% -----------------------------------------------------------------------------
\paragraph{Summary of notation.}
% -----------------------------------------------------------------------------
The table below collects the key objects given $u \in \RR^{d-r}$.
The first column concerns the $\cH$-level analysis (Theorem~\ref{thm:highD_clt}), while 
the last three columns concern the projected analysis (Theorem~\ref{thm:row_clt}).

\begin{table}[ht]
\centering
\caption{Notation for summands and covariances.}
\label{tab:notation_summary}
\renewcommand{\arraystretch}{1.4}
\small
\begin{tabular}{l | c c | c c}
\hline
& \multicolumn{2}{c|}{$\cH$-level} 
& \multicolumn{2}{c}{Projected to $\RR^r$} \\
& unwhitened & whitened & unwhitened & whitened \\
\hline
summand 
  & $\xi_i$ & $\txi_i$ 
  & $\xi_{u,i} = T_u(\xi_i)$ & $\txi_{u,i} = T_u(\txi_i)$ \\
covariance 
  & $\Xi_i$ & $\tXi_i$ 
  & $T_u \Xi_i T_u^*$ & $T_u \tXi_i T_u^*$ \\
cum.\ cov.\ (finite)
  & $\Gamma_n$ & $\tGamma_n$ 
  & $\Gamma_{u,n} = T_u \Gamma_n T_u^*$ 
  & $\tGamma_{u,n} = T_u \tGamma_n T_u^*$ \\
cum.\ cov.\ (limit)
  & $\Gamma$ & $\tGamma$ 
  & $\Gamma_u = T_u \Gamma T_u^*$ 
  & $\tGamma_u = T_u \tGamma T_u^*$ \\
\hline
\end{tabular}
\end{table}

\subsection{Technical lemmas}
In this section, we collect a few useful results used in Appendix~\ref{sec:clt_analysis} and \ref{sec:bootstrap_analysis}.

% =============================================================================
\paragraph{Properties of $\Gamma$, $\Gamma_n$, and $\xi_i$.}
% =============================================================================

\begin{lemma}\label{lem:Gamma_properties}
Assume $0 < \eta\Delta_{\max} < 1$ and let $\opnorm{\cdot}$ denote any 
unitarily invariant norm on $\cH$.

\textup{(i) (Coordinate representation)}  For $l, l' \in [d-r]$ and 
$j, j' \in [r]$,
\begin{align*}
    \Finner{E_{l'j'}}{\Gamma E_{lj}}
    &= \frac{\Finner{E_{l'j'}}{\fC E_{lj}}}
            {\Delta_{lj} + \Delta_{l'j'}}, \\
    \Finner{E_{l'j'}}{\Gamma_n E_{lj}}
    &= \eta \sum_{i=1}^{n} 
    (1 - \eta\Delta_{lj})^{n-i}(1 - \eta\Delta_{l'j'})^{n-i}
    \Finner{E_{l'j'}}{\fC E_{lj}}.
\end{align*}

\textup{(ii) (Bounds from the Lyapunov equation)}   $\Gamma$ is the unique solution to $\cL\Gamma + \Gamma\cL = \fC$, 
with
\begin{align*}
    \frac{\opnorm{\fC}}{2\Delta_{\max}} 
    \le \opnorm{\Gamma} 
    \le \frac{\opnorm{\fC}}{2\Delta_{\min}},
    \qquad \text{and} \qquad
    \Gamma \succeq \frac{\lambda_{\min}(\fC)}{2\Delta_{\max}} \cI.
\end{align*}

\textup{(iii) (Lower bound for $\Gamma_n$)} If $\eta \Delta_{\max} \in (0,1)$, then
\begin{align*}
    \Gamma_n \succeq \frac{\lambda_{\min}(\fC)}{2\Delta_{\max}} \sbr{1-(1-\eta \Delta_{\max})^{2n}} \cI.
\end{align*}
If $\eta \Delta_{\max} \in (0,1)$ and $n \eta \Delta_{\max} \geq 1$, then
\begin{align*}
    \Gamma_n \succeq \frac{\lambda_{\min}(\fC)}{3\Delta_{\max}} \cI.
\end{align*}

\textup{(iv) (Approximation)} Let $\epsilon_n := (1 - \teta)^{2n} 
+ \teta \kappa_\Delta^2$ where $\teta = \eta\Delta_{\min}$.  
Then
\begin{align*}
    &\opnorm{\Gamma_n - \Gamma} \le \epsilon_n \opnorm{\Gamma}
\end{align*}
As a consequence, $\abs{\tr(\Gamma_n - \Gamma)} \le \epsilon_n \tr\Gamma$.

\textup{(v) (Moment bound for $\xi_i$)} For any $q \ge 1$,
\[
  \sum_{i=1}^{n} \EE\Fnorm{\xi_i}^q 
  \;\le\; \frac{\teta^{\,q/2 - 1}}{\Delta_{\min}^{q/2}}\; 
          \EE\Fnorm{ZY^\top}^q.
\]

\textup{(vi) (Matrix moment bound)}
Under Assumption~\ref{assumption:X}, for any $q \ge 1$,
\[
  \sbr{\EE\Fnorm{ZY^\top}^{2q}}^{1/(2q)}
  \le Cq [\tr\Lambda_*]^{1/2} [\tr\Lambda_{*,\perp}]^{1/2}\,
          \norm[\big]{\tX}_{\psi_2}^2,
\]
and the whitened analogue holds with $\Lambda_*$ and $\Lambda_{*,\perp}$ replaced by $I_r$ and $I_{d-r}$, respectively:
\[
  \sbr{\EE\Fnorm[\big]{\tZ\tY^\top}^{2q}}^{1/(2q)}
  \le Cq r^{1/2} (d-r)^{1/2} \norm[\big]{\tX}_{\psi_2}^2.
\]
Here $C>0$ is an absolute constant.

\textup{(vii) (Norm of $\tfC$)} $\Onorm[\big]{\tfC} \lesssim \min\{r, d-r\}$.

\end{lemma}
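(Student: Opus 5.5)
The plan is to diagonalize everything in the standard basis $\{E_{lj}\}$ of $\cH$, where $\cL$ acts as multiplication by $\Delta_{lj}$. This makes $(\cI-\eta\cL)^k$ and $e^{-t\cL}$ entrywise multipliers, so that conjugating $\fC$ just multiplies its $(lj,l'j')$ matrix entry by a scalar.

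\textbf{Part (i).} The coordinate formula for $\Gamma$ follows from $\int_0^\infty e^{-t(\Delta_{lj}+\Delta_{l'j'})}dt=(\Delta_{lj}+\Delta_{l'j'})^{-1}$. The formula for $\Gamma_n$ is read off from its definition in the same way.

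\textbf{Part (ii).} Uniqueness of the Lyapunov solution follows entrywise, since $\Delta_{lj}+\Delta_{l'j'}>0$. For the norm bounds, write $\Gamma=\fC\circ K$ with Hadamard kernel $K_{(lj),(l'j')}=(\Delta_{lj}+\Delta_{l'j'})^{-1}$. Then $\Gamma=\int_0^\infty e^{-t\cL}\fC e^{-t\cL}dt$ gives the upper bound $\opnorm{\Gamma}\le\int\opnorm{e^{-t\cL}}^2\opnorm{\fC}dt\le\opnorm{\fC}/(2\Delta_{\min})$. For the lower bound, the identity $\fC=\cL\Gamma+\Gamma\cL$ gives $\opnorm{\fC}\le 2\opnorm{\cL}\opnorm{\Gamma}=2\Delta_{\max}\opnorm{\Gamma}$.

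The Loewner bound $\Gamma\succeq \lambda_{\min}(\fC)/(2\Delta_{\max})\,\cI$ comes from $e^{-t\cL}\fC e^{-t\cL}\succeq\lambda_{\min}(\fC)e^{-2t\cL}\succeq\lambda_{\min}(\fC)e^{-2t\Delta_{\max}}\cI$, integrated in $t$.

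\textbf{Part (iii).} The same conjugation argument applies to the sum: $\Gamma_n\succeq\lambda_{\min}(\fC)\,\eta\sum_{k<n}(1-\eta\Delta_{\max})^{2k}\cI$. The geometric sum equals $\eta\bigl[1-(1-\eta\Delta_{\max})^{2n}\bigr]/\bigl(\eta\Delta_{\max}(2-\eta\Delta_{\max})\bigr)\ge[1-(1-\eta\Delta_{\max})^{2n}]/(2\Delta_{\max})$. When $n\eta\Delta_{\max}\ge1$ we have $(1-\eta\Delta_{\max})^{2n}\le e^{-2}$, and $(1-e^{-2})/2\ge1/3$.

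\textbf{Part (iv).} This is the main technical step. First split
\[
\Gamma_n-\Gamma=\Bigl(\eta\sum_{k<n}(\cI-\eta\cL)^k\fC(\cI-\eta\cL)^k-\eta\sum_{k\ge0}(\cI-\eta\cL)^k\fC(\cI-\eta\cL)^k\Bigr)+\Bigl(\Gamma_\infty^{\mathrm{disc}}-\Gamma\Bigr).
\]
The truncation piece equals $-(\cI-\eta\cL)^n\Gamma_\infty^{\mathrm{disc}}(\cI-\eta\cL)^n$, which has norm at most $(1-\teta)^{2n}\opnorm{\Gamma_\infty^{\mathrm{disc}}}$.

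For the discretization piece, entrywise $\Gamma_\infty^{\mathrm{disc}}=\fC\circ K^{\mathrm{disc}}$ with
\[
K^{\mathrm{disc}}=\frac{\eta}{1-(1-\eta a)(1-\eta b)}=\frac{1}{a+b-\eta ab},\qquad a=\Delta_{lj},\; b=\Delta_{l'j'}.
\]
Equivalently, $\Gamma_\infty^{\mathrm{disc}}$ is the unique solution of the discrete Lyapunov equation $\cL X+X\cL-\eta\cL X\cL=\fC$. Subtracting the continuous equation gives
\[
\cL(\Gamma_\infty^{\mathrm{disc}}-\Gamma)+(\Gamma_\infty^{\mathrm{disc}}-\Gamma)\cL=\eta\cL\Gamma_\infty^{\mathrm{disc}}\cL.
\]
Applying the integral representation bound from (ii) to this equation yields $\opnorm{\Gamma_\infty^{\mathrm{disc}}-\Gamma}\le\eta\Delta_{\max}^2\opnorm{\Gamma_\infty^{\mathrm{disc}}}/(2\Delta_{\min})$. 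Since $\teta\kappa_\Delta^2=\eta\Delta_{\max}^2/\Delta_{\min}$ (reading $\kappa_\Delta=\Delta_{\max}/\Delta_{\min}$), this is of the required form $\teta\kappa_\Delta^2$. One then converts $\opnorm{\Gamma_\infty^{\mathrm{disc}}}$ back to $\opnorm{\Gamma}$, absorbing the factor $(1-\epsilon)^{-1}$, or argues directly by series.

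The constants here are the delicate part; I expect an absolute constant to appear, to be absorbed as needed. The trace statement is the case of the trace norm, which is unitarily invariant; moreover both $\Gamma_n$ and $\Gamma$ are PSD and $|\tr(\Gamma_n-\Gamma)|\le\|\Gamma_n-\Gamma\|_{\cS_1}$.

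\textbf{Part (v).} Use $\Fnorm{\xi_i}\le\sqrt\eta(1-\teta)^{n-i}\Fnorm{Z_iY_i^\top}$. Then
\[
\sum_i\EE\Fnorm{\xi_i}^q\le\eta^{q/2}\,\EE\Fnorm{ZY^\top}^q\sum_k(1-\teta)^{qk}\le\eta^{q/2}\,\EE\Fnorm{ZY^\top}^q/\teta,
\]
using $1-(1-\teta)^q\ge\teta$ for $q\ge1$. Finally $\eta^{q/2}/\teta=\teta^{q/2-1}/\Delta_{\min}^{q/2}$.

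\textbf{Part (vi).} Use $\Fnorm{ZY^\top}=\|Z\|\|Y\|$ and Cauchy--Schwarz to get $(\EE\|Z\|^{2q}\|Y\|^{2q})^{1/(2q)}\le(\EE\|Z\|^{4q})^{1/(4q)}(\EE\|Y\|^{4q})^{1/(4q)}$. Then apply Lemma~\ref{lem:kth_moment}: $(\EE\|A\tX\|^{p})^{1/p}\lesssim\sqrt p\,\Fnorm{A}\|\tX\|_{\psi_2}$, with $A=\Lambda_*^{1/2}$ and $A=\Lambda_{*,\perp}^{1/2}$, giving $Cq$. The whitened case is the same computation with identity matrices.

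\textbf{Part (vii).} We have $\Onorm{\tfC}=\sup_{\Fnorm{B}=1}\EE(\tZ^\top B\tY)^2$. Write $B=\sum_k s_k a_kb_k^\top$ (SVD) with $\sum s_k^2=1$. By Cauchy--Schwarz over $k$,
\[
(\tZ^\top B\tY)^2\le\Bigl(\sum_k s_k^2\Bigr)\sum_k(a_k^\top\tZ)^2(b_k^\top\tY)^2.
\]
Each term has expectation $\lesssim C_\psi^4$, and there are at most $\min\{r,d-r\}$ terms, giving $\Onorm{\tfC}\lesssim\min\{r,d-r\}$.
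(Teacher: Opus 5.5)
Parts (i)--(iii) and (v)--(vii) are correct and essentially match the paper. Your variants are harmless: integrating $e^{-t\cL}\fC e^{-t\cL}\succeq\lambda_{\min}(\fC)e^{-2t\Delta_{\max}}\cI$ for the Loewner bound in (ii); the Cauchy--Schwarz split $\Fnorm{ZY^\top}=\|Z\|\,\|Y\|$ in (vi); and the SVD argument in (vii), which gives $\min\{r,d-r\}$ in one stroke. One slip in (ii): the integrand must carry the operator norm $\|e^{-t\cL}\|_{\rm op}^2\le e^{-2t\Delta_{\min}}$ (ideal property), not $\opnorm{e^{-t\cL}}^2$.

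The gap is in (iv). Splitting through the discrete stationary covariance $\Gamma_\infty^{\mathrm{disc}}$ bounds both pieces by multiples of $\opnorm{\Gamma_\infty^{\mathrm{disc}}}$ rather than $\opnorm{\Gamma}$. Write $a=(1-\teta)^{2n}$ and $b=\teta\kappa_\Delta^2$. Your two bounds give $(a+b/2)\opnorm{\Gamma_\infty^{\mathrm{disc}}}$. The conversion $\opnorm{\Gamma_\infty^{\mathrm{disc}}}\le\opnorm{\Gamma}/(1-b/2)$ needs $b<2$ and yields $\frac{a+b/2}{1-b/2}\opnorm{\Gamma}$. This is at most $(a+b)\opnorm{\Gamma}$ if and only if $a+b\le 1$. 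The lemma, however, claims the constant-free bound $\epsilon_n\opnorm{\Gamma}$ for every $0<\eta\Delta_{\max}<1$, and $b=(\eta\Delta_{\max})\kappa_\Delta$ can be as large as $\kappa_\Delta$. You yourself leave an unspecified constant. So as written you prove only a weaker statement. It would suffice for the later applications, where $\epsilon_n\to0$, but it is not the lemma.

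The missing idea is to telescope against the continuous $\Gamma$ itself, so that no conversion is needed. From $\cL\Gamma+\Gamma\cL=\fC$ you get $(\cI-\eta\cL)\Gamma(\cI-\eta\cL)+\eta\fC=\Gamma+\eta^2\cL\Gamma\cL$. Combine this with $\Gamma_n=(\cI-\eta\cL)\Gamma_{n-1}(\cI-\eta\cL)+\eta\fC$ and $\Gamma_0=0$ to obtain
\[
\Gamma_n-\Gamma=-(\cI-\eta\cL)^n\Gamma(\cI-\eta\cL)^n+\eta^2\sum_{k=0}^{n-1}(\cI-\eta\cL)^k\cL\Gamma\cL(\cI-\eta\cL)^k .
\]
The ideal property then gives $\opnorm{\Gamma_n-\Gamma}\le\bigl[(1-\teta)^{2n}+\eta^2\Delta_{\max}^2/\teta\bigr]\opnorm{\Gamma}=\epsilon_n\opnorm{\Gamma}$. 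This holds exactly, with no restriction on $b$. The trace consequence then follows via the trace norm as you describe.
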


\begin{proof}
    See Appendix~\ref{sec:prf_Gamma_properties}.
\end{proof}

\begin{remark}[Whitened analogue]\label{rmk:whitened_Gamma}
    Lemma~\ref{lem:Gamma_properties} holds verbatim with  
    $(\tfC, \tGamma_n, \tGamma, \txi_i)$ 
    replacing 
    $(\fC,\allowbreak \Gamma_n, \allowbreak \Gamma, \allowbreak \xi_i)$, 
    since the proof uses only 
    that $\fC$ is positive-semidefinite and $\cL$ is strictly positive.
\end{remark}

\begin{lemma}\label{lem:Gamma_u_properties}
Let $u \in \RR^{d-r}$ and $\tu = \Lambda_{*,\perp}^{1/2} u$.  

\textup{(i) (Coordinate representation)} For any $u \in \RR^{d-r}$,
\begin{align*}
    \inner{e_{j'}}{\Gamma_u e_j}_2
    = \sum_{l, l'=1}^{d-r} 
    \frac{u_l\, u_{l'}\Finner{E_{l'j'}}{\fC(E_{lj})}}
        {\Delta_{lj} + \Delta_{l'j'}}, \qquad \forall j, j' \in [r],
\end{align*}
and in whitened form,
\begin{align*}
    \inner[\big]{e_{j'}}{\tGamma_{u} e_j}_2
    = \sum_{l, l'=1}^{d-r} 
    \frac{\tu_l \, \tu_{l'} \Finner[\big]{E_{l'j'}}{\tfC(E_{lj})}}
            {\Delta_{lj} + \Delta_{l'j'}}, \qquad \forall j, j' \in [r].
\end{align*}

\textup{(ii) (Whitened--unwhitened connection)} For any $u \in \RR^{d-r}$, let $\tu = \Lambda_{*,\perp}^{1/2} u$.  Then
\begin{align*}
    \Gamma_u = \Lambda_*^{1/2} \tGamma_{\tu} \Lambda_*^{1/2}, \qquad \text{and} \qquad
    \Gamma_{u,n} = \Lambda_*^{1/2} \tGamma_{\tu,n} \Lambda_*^{1/2}.
\end{align*}

\textup{(iii) (Loewner order lower bound)}
\begin{align*}
    \tGamma_{\tu}
  \,\succeq\, \frac{\|\tu\|^2}{2\Delta_{\max}} \lambda_{\min}(\tfC)\, I_r,
  \qquad \text{and} \qquad
  \Gamma_u
  \,\succeq\, \frac{\lambda_r \|\tu\|^2}{2\Delta_{\max}} \lambda_{\min}(\tfC)\,I_r.
\end{align*}
If $\eta \Delta_{\max} \in (0,1)$ and $n \eta \Delta_{\max} \geq 1$, then
\begin{align*}
    \tGamma_{\tu,n} &\succeq \frac{\|\tu\|^2}{3\Delta_{\max}} \lambda_{\min}(\tfC)\, I_r, \qquad \text{and} \qquad
    \Gamma_{u,n} \succeq \frac{\lambda_r \|\tu\|^2}{3\Delta_{\max}} \lambda_{\min}(\tfC)\,I_r.
\end{align*}

\textup{(iv) (Approximation)} With $\epsilon_n$ as in 
Lemma~\ref{lem:Gamma_properties}(iv). If $\lambda_{\min}(\tfC) > 0$, then
\begin{align*}
    \opnorm[\big]{\tGamma_{\tu,n} - \tGamma_{\tu}} 
    \;\le\; \epsilon_n \kappa_\Delta\,
    \frac{\|\tfC\|_{\mathrm{op}}}{\lambda_{\min}(\tfC)}\,
    \opnorm[\big]{\tGamma_{\tu}}. 
\end{align*}

\textup{(v) (Independence case)} If $\tY \perp \tZ$, then $\tC = \cI$ and
\begin{align*}
    [\tGamma_{\tu}]_{jj'} = \delta_{jj'} \sum_{l=1}^{d-r} 
    \frac{\tu_l^2}{2\Delta_{lj}},
    \qquad
    [\Gamma_u]_{jj'} = \delta_{jj'} \sum_{l=1}^{d-r} 
    \frac{\lambda_j \tu_l^2}{2 \Delta_{lj}}.
\end{align*}

\end{lemma}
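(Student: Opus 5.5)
The plan is to treat the whole lemma as an application of Lemma~\ref{lem:Gamma_properties} to the map $T_u$. We use $T_u(B)=B^\top u$ and its adjoint $T_u^*(v)=uv^\top$.

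\emph{Step (i).} Compute $\inner{e_{j'}}{\Gamma_u e_j}_2=\Finner{T_u^* e_{j'}}{\Gamma T_u^* e_j}=\Finner{u e_{j'}^\top}{\Gamma(u e_j^\top)}$. Expand $ue_j^\top=\sum_l u_l E_{lj}$ and use bilinearity. Then insert the coordinate formula of Lemma~\ref{lem:Gamma_properties}(i), which has denominator $\Delta_{lj}+\Delta_{l'j'}$. The whitened version follows identically by Remark~\ref{rmk:whitened_Gamma}, with $\tu$ replacing $u$ (this is just the definition with $\tilde u$).

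\emph{Step (ii).} Use $Z=\Lambda_{*,\perp}^{1/2}\tZ$ and $Y=\Lambda_*^{1/2}\tY$. This gives $\Finner{E_{l'j'}}{\fC E_{lj}}=\lambda_{r+l}^{1/2}\lambda_{r+l'}^{1/2}\lambda_j^{1/2}\lambda_{j'}^{1/2}\Finner{E_{l'j'}}{\tfC E_{lj}}$. Substitute this into (i). The factors $\lambda_{r+l}^{1/2}u_l$ combine into $\tu_l$, and the factors $\lambda_j^{1/2}\lambda_{j'}^{1/2}$ factor out as $\Lambda_*^{1/2}(\cdot)\Lambda_*^{1/2}$. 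The same entrywise computation, using the finite-sum coordinate formula for $\Gamma_n$, gives the $\Gamma_{u,n}$ identity.

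\emph{Step (iii).} Lemma~\ref{lem:Gamma_properties}(ii) gives $\tGamma\succeq \frac{\lambda_{\min}(\tfC)}{2\Delta_{\max}}\cI$, and conjugation preserves Loewner order. Hence $\tGamma_{\tu}=T_{\tu}\tGamma T_{\tu}^*\succeq \frac{\lambda_{\min}(\tfC)}{2\Delta_{\max}}T_{\tu}T_{\tu}^*$. Moreover $T_{\tu}T_{\tu}^* v=(\tu v^\top)^\top\tu=\|\tu\|^2 v$. For $\Gamma_u$, use (ii) together with $\Lambda_*\succeq\lambda_r I_r$. The finite-$n$ versions follow the same way from Lemma~\ref{lem:Gamma_properties}(iii).

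\emph{Step (iv).} This is the only step needing care, since an operator-norm bound on $\tGamma_n-\tGamma$ does not obviously transfer to a unitarily invariant norm of the projection with the right relative scale. First, $\opnorm{T_{\tu}(\tGamma_n-\tGamma)T_{\tu}^*}\le \|\tu\|^2\opnorm{\tGamma_n-\tGamma}$ (using $\|T_{\tu}\|=\|\tu\|$). Next, Lemma~\ref{lem:Gamma_properties}(iv) bounds $\opnorm{\tGamma_n-\tGamma}$ by $\epsilon_n\opnorm{\tGamma}$, and (ii) gives $\opnorm{\tGamma}\le \opnorm{\tfC}/(2\Delta_{\min})$ (operator-norm version). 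Finally, by (iii), $\opnorm{\tGamma_{\tu}}\ge \|\tu\|^2\lambda_{\min}(\tfC)/(2\Delta_{\max})$. Taking the ratio yields the factor $\kappa_\Delta\|\tfC\|_{\rm op}/\lambda_{\min}(\tfC)$. For a general unitarily invariant norm on $r\times r$ matrices, I would carry out this comparison in the operator norm. The obstacle is that $\opnorm{\cdot}$ on $\RR^{r\times r}$ and on $\cH$ differ. To handle it, I would apply the bound entrywise-dominated through the Loewner sandwich $-\epsilon_n\frac{\|\tfC\|}{\lambda_{\min}}\kappa_\Delta\tGamma_{\tu}\preceq \tGamma_{\tu,n}-\tGamma_{\tu}\preceq(\cdot)\tGamma_{\tu}$. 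This sandwich is obtained from $\pm(\tGamma_n-\tGamma)\preceq \epsilon_n\frac{\|\tfC\|}{2\Delta_{\min}}\cI\preceq \epsilon_n\kappa_\Delta\frac{\|\tfC\|}{\lambda_{\min}}\tGamma$. Since $\tGamma_{\tu}\succeq0$, this sandwich implies the norm bound for every unitarily invariant norm.

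\emph{Step (v).} When $\tY\perp\tZ$ with identity covariances, $\EE[\tZ_l\tY_j\tZ_{l'}\tY_{j'}]=\delta_{ll'}\delta_{jj'}$, so $\tfC=\cI$. Plugging this into (i) collapses the double sum to $\delta_{jj'}\sum_l\tu_l^2/(2\Delta_{lj})$, and (ii) multiplies the result by $\lambda_j$.
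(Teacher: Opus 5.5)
Your argument is correct. Parts (i), (iii) and (v) coincide with the paper's proof, but (ii) and especially (iv) take a different route.

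\textbf{Part (ii).} The paper works with the integrand. It shows $T_u e^{-t\cL}\fC e^{-t\cL}T_u^*=\Lambda_*^{1/2}T_{\tu}e^{-t\cL}\tfC e^{-t\cL}T_{\tu}^*\Lambda_*^{1/2}$ by commuting $\Lambda_{*,\perp}^{1/2}$ through $e^{-t\cL}$, and then integrates in $t$. You instead substitute $\Finner{E_{l'j'}}{\fC E_{lj}}=(\lambda_{r+l}\lambda_{r+l'}\lambda_j\lambda_{j'})^{1/2}\Finner{E_{l'j'}}{\tfC E_{lj}}$ into the coordinate formulas. This is equivalent, and it covers $\Gamma_u$ and $\Gamma_{u,n}$ at once.

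\textbf{Part (iv).} The paper chains norm inequalities: first $\opnorm{\tGamma_{\tu,n}-\tGamma_{\tu}}\le \epsilon_n\|\tu\|^2\Onorm{\tGamma}\opnorm{\cI}$, then $\lambda_{\min}(\tGamma_{\tu})\opnorm{\cI}\le\opnorm{\tGamma_{\tu}}$. Read literally, the first $\cI$ is the identity on $\cH$ and the second is $I_r$. So, as written, this chain is only valid for the operator norm; for the Frobenius norm it loses a factor $\sqrt{d-r}$. The operator norm is, however, the only case used later in Lemma~\ref{lem:W_berry_esseen}.

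Your sandwich $\pm(\tGamma_n-\tGamma)\preceq \epsilon_n\kappa_\Delta\frac{\|\tfC\|_{\mathrm{op}}}{\lambda_{\min}(\tfC)}\tGamma$ uses only operator-norm facts on $\cH$. You then pass to $\RR^{r\times r}$ by congruence with $T_{\tu}$ and compare norms within a single space. So your route genuinely proves (iv) for every unitarily invariant norm.

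The one step you should spell out is that $\pm A\preceq cB$ with $B\succeq 0$ implies $\opnorm{A}\le c\,\opnorm{B}$. For each Ky Fan $k$-norm, write $\|A\|_{(k)}=\tr(AP_+)-\tr(AP_-)$, where $P_\pm$ are the projections onto eigenvectors of the $k$ largest-modulus eigenvalues of each sign. Bound the two terms by $c\,\tr(BP_+)$ and $c\,\tr(BP_-)$. Apply Ky Fan's maximum principle to the rank-$k$ projection $P_++P_-$, and conclude by Fan dominance.
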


\begin{proof}
    See Appendix~\ref{sec:prf_Gamma_u_properties}.
\end{proof}

\begin{lemma} \label{lem:Latala}
    For independent $X_1,\ldots,X_n \geq 0$ and $p \geq 1$,
    \begin{align*}
        \EE \rbr[\Big]{\sum_{i=1}^n X_i}^{p} \leq C_p \sbr[\bigg]{\rbr[\Big]{\sum_{i=1}^n \EE X_i}^p + \sum_{i=1}^n \EE X_i^p}
    \end{align*}
    Let $\fg \sim \cN(0, I_d)$ be a Gaussian vector in $\RR^d$. If $G \succeq 0$, then for $p \geq 1$,
    \begin{align*}
        \EE \rbr{\fg^\top G \fg}^p \lesssim_p \sbr{\tr G}^p
    \end{align*}
\end{lemma}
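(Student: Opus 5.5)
The statement has two parts, and I will handle them separately.

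\emph{First inequality (Lata{\l}a/Rosenthal-type bound).} For independent nonnegative $X_i$ and $p\ge1$, I would proceed by induction on $\lceil p\rceil$, or more directly via symmetrization. Write $S=\sum_i X_i$ and $\mu=\sum_i\EE X_i$. For $p\in[1,2]$, apply Minkowski in $L_p$ to $S=\mu+(S-\mu)$ to get $\|S\|_p\le\mu+\|S-\mu\|_p$. Then use von Bahr--Esseen for centered independent summands:
\[
\EE|S-\mu|^p\le 2\sum_i\EE|X_i-\EE X_i|^p\lesssim\sum_i\EE X_i^p+\sum_i(\EE X_i)^p .
\]
The last term is bounded by $\mu^p$ by superadditivity of $t\mapsto t^p$ on nonnegative reals.

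For $p>2$, I would use the classical Rosenthal inequality applied to the centered variables:
\[
\EE|S-\mu|^p\lesssim_p\Big(\sum_i\Var X_i\Big)^{p/2}+\sum_i\EE|X_i-\EE X_i|^p .
\]
The variance term is the main obstacle, since $\sum_i\EE X_i^2$ must be absorbed into $\mu^p+\sum_i\EE X_i^p$. Log-convexity (Hölder interpolation between $L_1$ and $L_p$) gives
\[
\sum_i\EE X_i^2\le\Big(\sum_i\EE X_i\Big)^{\frac{p-2}{p-1}}\Big(\sum_i\EE X_i^p\Big)^{\frac{1}{p-1}} .
\]
Raising this to the power $p/2$ and applying Young's inequality with conjugate exponents chosen so that the result is $\lesssim\mu^p+\sum_i\EE X_i^p$ closes the argument, because the exponents balance: $(p-2)p/(2(p-1))\cdot\frac{2(p-1)}{p-2}=p$.

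Alternatively, one could simply cite Lata{\l}a (1997) or the nonnegative Rosenthal inequality (Johnson--Schechtman--Zinn), which gives exactly this form.

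\emph{Second inequality (Gaussian quadratic form).} Diagonalize $G=\sum_k g_k v_kv_k^\top$ with $g_k\ge0$. By rotation invariance of $\fg$, $\fg^\top G\fg=\sum_k g_k\chi_k^2$ with $\chi_k$ i.i.d.\ $\cN(0,1)$. Minkowski's inequality in $L_p$ then gives
\[
\|\fg^\top G\fg\|_p\le\sum_k g_k\|\chi_k^2\|_p=\|\chi^2\|_p\tr G,
\]
and $\|\chi^2\|_p$ depends only on $p$. Equivalently, one can apply the first part with $X_k=g_k\chi_k^2$, using $\sum_k g_k^p\le(\tr G)^p$.
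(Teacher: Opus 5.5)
Your proposal is correct and your main route differs from the paper's. The paper gives no argument for the first inequality and simply cites Lata{\l}a's Corollary~3. It obtains the second by diagonalizing $G$, applying the first inequality to $X_i = G_{ii}\fg_i^2$, and using $\sum_i G_{ii}^p \le (\tr G)^p$; this is the alternative you mention at the end.

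For part one, you derive the nonnegative form from the centered von Bahr--Esseen and Rosenthal inequalities. You absorb the variance term through the interpolation $\sum_i \EE X_i^2 \le \mu^{(p-2)/(p-1)}\bigl(\sum_i \EE X_i^p\bigr)^{1/(p-1)}$ followed by Young's inequality with exponents $\tfrac{2(p-1)}{p-2}$ and $\tfrac{2(p-1)}{p}$, and these exponents check out. This makes the lemma self-contained modulo classical inequalities. It is not more elementary in substance than the citation, though, since it still rests on Rosenthal.

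For part two, Minkowski in $L_p$ gives $\|\fg^\top G \fg\|_p \le \|\fg_1^2\|_p \,\tr G$ directly. This does not need the first part, nor independence of the $v_k^\top\fg$, only that each is standard normal. It also yields the explicit constant $\EE|\fg_1|^{2p}$, which is attained when $G$ has rank one.

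The paper's route is economical because the first inequality is stated anyway. Yours shows that the Gaussian bound is really just the triangle inequality.
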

\begin{proof}
    See Appendix~\ref{sec:prf_Latala}.
\end{proof}

\begin{lemma}[weighted $\chi^2$ anti-concentration] \label{lem:chi2_anti_concentration}
    Let $a_1 \geq \cdots \geq a_p \geq 0$ such that $\sum_{i=1}^p a_i^2 = 1$ and $\omega_1,\ldots,\omega_p$ be i.i.d. $\chi^2_1$ random variables. Then
    \begin{align*}
        \sup_{t \in \RR} \PP \rbr[\Big]{ t \leq \sum_{i=1}^p a_i \omega_i \leq t + h} \leq \sqrt{\frac{4h}{\pi}}
    \end{align*}
\end{lemma}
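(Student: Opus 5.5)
We prove the bound by reducing the weighted sum to a single spectral variable and then averaging a one-dimensional $\chi^2_1$ density bound. Write $S := \sum_{i=1}^p a_i \omega_i$ and split according to the size of the largest weight $a_1$.

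\emph{Case $a_1 \ge 1/2$ (or any fixed threshold).} Condition on $\omega_2,\ldots,\omega_p$. Then $S \in [t,t+h]$ forces $\omega_1$ into an interval of length $h/a_1$. The $\chi^2_1$ density is $f(x) = (2\pi x)^{-1/2} e^{-x/2}$, which is decreasing. Hence the mass it puts on any interval of length $\ell$ is at most the mass on $[0,\ell]$:
\[
\PP(\omega_1 \in [0,\ell]) \le \int_0^\ell (2\pi x)^{-1/2}\,dx = \sqrt{2\ell/\pi}.
\]
With $\ell = h/a_1 \le 2h$ this gives $\sqrt{4h/\pi}$, exactly the claimed constant. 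This case therefore settles the threshold at $a_1 \ge 1/2$.

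\emph{Case $a_1 < 1/2$.} Here a single coordinate is not enough, and there are two options.

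\begin{itemize}
\item \emph{Two coordinates.} Condition on $\omega_3,\ldots,\omega_p$ and bound the density of $a_1\omega_1 + a_2\omega_2$. This density is the convolution of two scaled $\chi^2_1$ densities, with sup norm at most $\frac{1}{2\sqrt{a_1a_2}}$. That is useful only if $a_1 a_2$ is not tiny, which need not hold.
\item \emph{Characteristic function (preferred).} The density of $S$ is bounded by
\[
\frac{1}{2\pi}\int \prod_i (1+4a_i^2 s^2)^{-1/4}\,ds.
\]
Since $\sum a_i^2 = 1$ and the function $\sum_i \log(1+4a_i^2s^2)$ is concave in the vector of weights $(a_i^2)$, the product is maximized at the extreme point where all the weight sits in one coordinate. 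At that point the integrand is $(1+4s^2)^{-1/4}$, which is not integrable, so the bound itself is useless. However, whenever $a_1 < 1/2$ there are at least five terms with effective spread, and the integral becomes finite and bounded by an absolute constant.
\end{itemize}

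The sup-norm density bound $M$ then gives
\[
\PP(S \in [t,t+h]) \le Mh \le \sqrt{4h/\pi} \quad \text{for } h \le \tfrac{4}{\pi M^2}.
\]
For larger $h$ the right side $\sqrt{4h/\pi}$ is at least about $1$, so the inequality holds trivially once $h \ge \pi/4$.

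\textbf{Main obstacle.} The difficult step is matching the exact constant $\sqrt{4h/\pi}$ in the intermediate regime of $h$ when $a_1 < 1/2$. The characteristic-function route may miss the constant there. If it does, the fallback is to change the case split to a greedy grouping. Choose the smallest $k$ such that $a_1^2 + \cdots + a_k^2 \ge 1/4$, and condition on all but the first $k$ terms. Then use that $\sum_{i\le k} a_i\omega_i$ stochastically dominates, in anti-concentration, a single $\chi^2$ term with weight $\bigl(\sum_{i\le k} a_i^2\bigr)^{1/2} \ge 1/2$.

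This comparison rests on a Gaussian-rotation fact. Writing $\omega_i = g_i^2$ with $g$ standard Gaussian, $\sum a_i g_i^2$ is a quadratic form. Restricted to the direction $v \propto (\sqrt{a_i})$, its $v$-component $c\,(v^\top g)^2$ has $c = \sum a_i^2$. Conditioning on the orthogonal complement leaves a noncentral quadratic in one Gaussian with coefficient $c \ge 1/4$. A noncentral $\chi^2_1$ variable still has a density bounded by that of the central one on intervals, because a shifted Gaussian squared satisfies $\PP(|g+\mu| \in I') \le \PP(|g| \in I'_0)$ by symmetric decreasing rearrangement. This reduces the problem to the first case with $\ell = h/c$.

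The one point to check is that the cross terms in this decomposition are linear in $g_v$. If so, the conditional law of $S$ is that of $c(g_v + \mu)^2$ plus a constant, to which the rearrangement bound applies directly.
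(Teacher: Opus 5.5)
Your first case is correct: conditioning on $\omega_2,\dots,\omega_p$ and using that the $\chi^2_1$ density is decreasing gives $\sqrt{2h/(\pi a_1)}\le\sqrt{4h/\pi}$ when $a_1\ge 1/2$. The gap is the case $a_1<1/2$, which you never actually prove.

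\emph{The Fourier route is left unfinished.} Everything there hinges on a numerical bound for the density supremum $M$. The two regimes $h\le 4/(\pi M^2)$ and $h\ge\pi/4$ only cover all $h$ if $M\le 4/\pi$. The phrase ``at least five terms with effective spread'' does not produce such a number.

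\emph{The fallback is wrong.} Write $S=g^\top A g$ with $A=\mathrm{diag}(a_1,\dots,a_p)$ and $g=g_v v+P_{v^\perp}g$. The coefficient of $g_v^2$ is the Rayleigh quotient $v^\top A v$, which is at most $\lambda_{\max}(A)=a_1<1/2$ for every unit $v$. For $v\propto(\sqrt{a_i})$ it equals $\sum_i a_i^2/\sum_i a_i$, not $\sum_i a_i^2$. So conditioning on the orthogonal complement can never do better than $\sqrt{2h/(\pi a_1)}$. That already exceeds $\sqrt{4h/\pi}$ here, and for equal weights $a_i=p^{-1/2}$ it grows like $p^{1/4}$. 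The linearity of the cross term and the rearrangement step are correct but do not help once $c$ is this small. Even granting $c\ge 1/4$, $\ell=h/c\le 4h$ would only give $\sqrt{8h/\pi}$.

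The Fourier route can be completed by running your concavity argument on the right set. Put $x_i=a_i^2$. The constraint $a_1\le 1/2$ places $x$ in the capped simplex $\{x\ge 0,\ \sum_i x_i=1,\ x_i\le 1/4\}$. At a vertex of this polytope at most one coordinate lies strictly inside $(0,1/4)$, and the sum constraint then forces it to be a multiple of $1/4$. So the extreme points are exactly the permutations of $(1/4,1/4,1/4,1/4,0,\dots,0)$.

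Since $x\mapsto\sum_i\log(1+4x_is^2)$ is concave, its minimum over the polytope is attained at such a vertex. This gives, pointwise in $s$, $|\phi_S(s)|\le(1+s^2)^{-1}$. Hence $M\le\frac{1}{2\pi}\int_{\mathbb R}(1+s^2)^{-1}\,ds=\frac12$.

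It follows that $\mathbb P(t\le S\le t+h)\le h/2\le\sqrt{4h/\pi}$ whenever $h\le 16/\pi$. For $h\ge\pi/4$ the right-hand side is at least $1$, so the bound holds trivially. Since $16/\pi>\pi/4$, every $h$ is covered. With this step added your argument is a complete, self-contained proof; the paper itself does not prove the lemma and only cites \citep[Proposition~B.7]{nips21_oja}.
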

\begin{proof}
    See \citep[Proposition~B.7]{nips21_oja}.
\end{proof}

\begin{lemma}[$\chi^2$ concentration] \label{lem:chi2_concentration}
    Let $\fg \sim \cN(0,\cI)$ be a Gaussian random element in $\cH$ with identity covariance. Then for every $t \geq 0$,  we have
    \begin{align*}
        \PP\cbr{\abs{\fg^\top \Gamma \fg - \tr \Gamma} \geq t} \leq 2\exp \cbr[\Big]{- c\rbr[\Big]{\frac{t^2}{\HS{\Gamma}^2} \bigwedge \frac{t}{\Onorm{\Gamma}}}}
    \end{align*}
    As a consequence, with probability exceeding $1-n^{-20}$,
    \begin{align*}
        \abs{\fg^\top \Gamma \fg - \tr \Gamma} \lesssim \HS{\Gamma} \sqrt{\log n} + \Onorm{\Gamma} \log n
    \end{align*}
\end{lemma}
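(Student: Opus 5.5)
The statement is a Hanson--Wright-type concentration for a Gaussian quadratic form on the finite-dimensional Hilbert space $\cH$. I plan to reduce it to a weighted sum of independent centered $\chi^2_1$ variables and apply Bernstein's inequality for sub-exponential random variables.

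\emph{Step 1 (diagonalization).} Since $\Gamma$ is self-adjoint and positive semidefinite on $\cH$, which has dimension $p=(d-r)r$, I write its spectral decomposition $\Gamma=\sum_{k=1}^p \gamma_k\, e_k\otimes e_k$ with $\gamma_k\ge 0$ and $\{e_k\}$ orthonormal in the Frobenius inner product. Because $\fg\sim\cN(0,\cI)$ is rotation invariant, the coordinates $g_k:=\Finner{\fg}{e_k}$ are i.i.d.\ $\cN(0,1)$. Hence
\[
\fg^\top\Gamma\fg-\tr\Gamma=\sum_{k=1}^p \gamma_k (g_k^2-1),
\]
with $\sum_k\gamma_k^2=\HS{\Gamma}^2$ and $\max_k\gamma_k=\Onorm{\Gamma}$.

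\emph{Step 2 (Bernstein).} Each variable $g_k^2-1$ is centered with $\|g_k^2-1\|_{\psi_1}\le K$ for an absolute constant $K$. Bernstein's inequality for weighted sums of independent centered sub-exponential variables then gives
\[
\PP\Big\{\Big|\sum_k\gamma_k(g_k^2-1)\Big|\ge t\Big\}\le 2\exp\Big\{-c\Big(\frac{t^2}{K^2\sum_k\gamma_k^2}\wedge\frac{t}{K\max_k\gamma_k}\Big)\Big\}.
\]
Absorbing $K$ into $c$ yields the displayed tail bound. If I want to avoid citing Bernstein, I can instead bound the moment generating function $\EE e^{\lambda\gamma_k(g_k^2-1)}=e^{-\lambda\gamma_k}(1-2\lambda\gamma_k)^{-1/2}\le e^{2\lambda^2\gamma_k^2}$ for $|\lambda|\le 1/(4\max_k\gamma_k)$, multiply over $k$, and optimize $\lambda$ via Chernoff. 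This gives the same bound.

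\emph{Step 3 (consequence).} I set $t=C(\HS{\Gamma}\sqrt{\log n}+\Onorm{\Gamma}\log n)$. Then $t^2/\HS{\Gamma}^2\ge C^2\log n$ and $t/\Onorm{\Gamma}\ge C\log n$, so the exponent is at most $-cC\log n$ once $C\ge 1$. Choosing $C$ large enough that $2n^{-cC}\le n^{-20}$ gives the high-probability statement. The degenerate case $\Gamma=0$ is trivial.

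None of the steps is a real obstacle. The only point needing care is Step 1: checking that the Gaussian element on $\cH$, which has identity covariance operator, has i.i.d.\ standard normal coordinates in \emph{any} orthonormal eigenbasis of $\Gamma$, not just the standard basis $\{E_{lj}\}$. This follows because orthogonal changes of basis preserve the identity covariance.
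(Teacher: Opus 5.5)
Your proof is correct, but it takes a different route from the paper. The paper gives no argument for the tail bound. It cites the general Hanson--Wright inequality \cite[Theorem~6.2.2]{vershynin} as a black box, applied to $\fg$ written in the standard basis $\{E_{lj}\}$ of $\cH$, where its coordinates are i.i.d.\ $\cN(0,1)$. The high-probability consequence is then read off exactly as in your Step~3.

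You instead use rotation invariance of the standard Gaussian to diagonalize $\Gamma$. This removes the off-diagonal part of the quadratic form, so you only need scalar Bernstein for $\sum_k \gamma_k (g_k^2-1)$. Alternatively, as you note, the explicit bound $e^{-s}(1-2s)^{-1/2}\le e^{2s^2}$ for $|s|\le 1/4$ plus Chernoff gives the tail bound with $c=1/8$.

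What Hanson--Wright buys is generality the paper never uses: arbitrary independent sub-Gaussian coordinates and non-symmetric operators, with the cross terms handled internally by decoupling. Your argument is self-contained, more elementary, and gives explicit constants. It suffices here because the lemma is only ever applied to Gaussian elements $\mathsf{G}\sim\cN(0,\Gamma)$ with $\Gamma$ self-adjoint and positive semidefinite, e.g.\ in the proof of Lemma~\ref{lem:highD_step2}.

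The statement as written does not explicitly assume $\Gamma\succeq 0$. If you want it for a general operator, replace $\Gamma$ by its symmetric part $\frac12(\Gamma+\Gamma^\top)$ and use $|\gamma_k|$ in place of $\gamma_k$. This leaves the quadratic form and the trace unchanged and does not increase the Hilbert--Schmidt or operator norms, so nothing else in your proof changes.
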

\begin{proof}
    See \cite[Hanson-Wright inequality from][Theorem~6.2.2.]{vershynin}
\end{proof}

\begin{lemma}[$\chi^2$ mixture comparison] 
    \label{lem:chi2_mixture_comparison}
    Suppose $\fg$ and $\tfg$ are centered Gaussian random elements in $\cH$ with $\fg \sim \cN(0, \Gamma)$, $\tfg \sim \cN(0, \tGamma)$. Let $\Delta_\Gamma= \Gamma - \tGamma$. If $\HS{\Gamma}>0$, then for any $t>0$,
    \begin{align*}
        d_K\rbr[\Big]{\Fnorm[\big]{\fg}^2, \Fnorm[\big]{\tfg}^2} \lesssim \sqrt{\frac{\abs{\tr \Delta_\Gamma}+t}{\HS{\Gamma}}} + \exp \cbr[\bigg]{- c\rbr[\bigg]{\frac{t^2}{\HS{\Delta_\Gamma}^2} \bigwedge \frac{t}{\Onorm{\Delta_\Gamma}}}}
    \end{align*}
\end{lemma}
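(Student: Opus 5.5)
The approach is to reduce both Kolmogorov distances to spectral quantities of the two covariance operators, using the spectral decomposition of each Gaussian quadratic form. First, note that $\Fnorm{\fg}^2 \overset{d}{=} \mathsf{z}^\top \Gamma \mathsf{z}$ and $\Fnorm{\tfg}^2 \overset{d}{=} \mathsf{z}^\top \tGamma \mathsf{z}$, where $\mathsf{z}\sim\cN(0,\cI)$ is a standard Gaussian element of $\cH$. This holds because $\fg = \Gamma^{1/2}\mathsf{z}$ and $\tfg=\tGamma^{1/2}\mathsf{z}$ in distribution. We then couple the two on a common $\mathsf{z}$, so that
\[
\Fnorm{\fg}^2-\Fnorm{\tfg}^2 = \mathsf{z}^\top \Delta_\Gamma \mathsf{z}.
\]
The argument is the standard "closeness plus anti-concentration" scheme. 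For any $t,h>0$,
\begin{align*}
\PP\{\mathsf z^\top\tGamma\mathsf z\le s\}-\PP\{\mathsf z^\top\Gamma\mathsf z\le s\}
\le \PP\{s<\mathsf z^\top\Gamma\mathsf z\le s+h\}+\PP\{|\mathsf z^\top\Delta_\Gamma\mathsf z|>h\}.
\end{align*}
The reverse inequality holds with $s-h$ in place of $s+h$.

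We take $h=|\tr\Delta_\Gamma|+t$. Then $\{|\mathsf z^\top\Delta_\Gamma\mathsf z|>h\}\subseteq\{|\mathsf z^\top\Delta_\Gamma\mathsf z-\tr\Delta_\Gamma|>t\}$. The Hanson--Wright bound of Lemma~\ref{lem:chi2_concentration}, applied to the (not necessarily PSD, but self-adjoint) operator $\Delta_\Gamma$, bounds the probability of this event by $2\exp\{-c(t^2/\HS{\Delta_\Gamma}^2\wedge t/\Onorm{\Delta_\Gamma})\}$. Hanson--Wright holds for general symmetric matrices, so the PSD restriction in the statement of Lemma~\ref{lem:chi2_concentration} is immaterial. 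This yields the exponential term.

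For the anti-concentration term, we diagonalize $\Gamma$ with eigenvalues $\gamma_1\ge\gamma_2\ge\cdots\ge0$, which gives $\mathsf z^\top\Gamma\mathsf z=\sum_k\gamma_k\omega_k$ with $\omega_k$ i.i.d.\ $\chi^2_1$. Normalizing by $\HS{\Gamma}=(\sum_k\gamma_k^2)^{1/2}>0$, we set $a_k=\gamma_k/\HS{\Gamma}$, which satisfies $\sum a_k^2=1$. Lemma~\ref{lem:chi2_anti_concentration} then gives
\[
\sup_s\PP\{s<\mathsf z^\top\Gamma\mathsf z\le s+h\}\le\sqrt{4h/(\pi\HS{\Gamma})}.
\]
With $h=|\tr\Delta_\Gamma|+t$, this is the square-root term. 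Taking the supremum over $s$ of both one-sided bounds gives the stated Kolmogorov bound.

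The only mildly delicate point is that anti-concentration must be taken with respect to $\Gamma$ alone, the variable whose Hilbert--Schmidt norm appears in the bound. The coupling places all of the discrepancy into the single quadratic form $\mathsf z^\top\Delta_\Gamma\mathsf z$, and this form is controlled by its mean $\tr\Delta_\Gamma$ plus Hanson--Wright fluctuations. In infinite-dimensional or degenerate cases the trace-class finiteness is automatic, since $\cH$ is finite dimensional.
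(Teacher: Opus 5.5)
Your proof is correct and is essentially the standard argument behind this result; the paper gives no proof of its own and defers to Lemma~C.5 of \citep{nips21_oja}. You couple both quadratic forms on a common $\mathsf z\sim\cN(0,\cI)$, apply Hanson--Wright to the symmetric (not necessarily PSD) operator $\Delta_\Gamma$ around $\tr\Delta_\Gamma$, and use weighted $\chi^2$ anti-concentration of $\mathsf z^\top\Gamma\mathsf z$ over a window of width $h=|\tr\Delta_\Gamma|+t$, which produces exactly the two terms in the bound.
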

\begin{proof}
    See \citep[Lemma~C.5]{nips21_oja}.
\end{proof}

\begin{lemma}[High-dimensional CLT] \label{lem:clt_biometrika19}
    Let $\cbr{\xi_i}_{i=1}^n$ be independent random variables in $\cH$ such that $\EE \xi_i=0$ and $\cov \xi_i = \Xi_i$. Let $\cbr{\fg_i}_{i=1}^n$ be independent Gaussian analogues of $\cbr{\xi_i}_{i=1}^n$ with $\fg_i \sim \cN(0, \Xi_i)$. Furthermore, let $\Gamma_n = \sum_{i=1}^n \Xi_i$. For $0 < \delta \leq 1$, $q=2+\delta$ and $\beta \geq 2$ define the following quantities:
    \begin{align*}
        L_q^{\xi} &= \sum_{i=1}^n \frac{\EE \Finner{\xi_i}{\Gamma_n \xi_i}^{q/2}}{\HS{\Gamma_n}^q} + \frac{n^2}{\binom{n}{2}} \sum_{1 \leq i < j \leq n} \frac{\EE \abs{\Finner{\xi_i}{\xi_j}}^q}{\HS{\Gamma_n}^q}\\
        L_q^{\fg} &= \sum_{i=1}^n \frac{\EE \Finner{\fg_i}{\Gamma_n \fg_i}^{q/2}}{\HS{\Gamma_n}^q}\\
        K_\beta^{\beta} &= \frac{n^\beta}{n} \sum_{i=1}^n \EE \abs{\frac{\Finner{\xi_i}{\xi_i} - \EE \Finner{\xi_i}{\xi_i}}{\HS{\Gamma_n}}}^{\beta}\\
        J_n &= \frac{\sum_{i=1}^n \fv_i}{\HS{\Gamma_n}^2}, \qquad\text{where}\quad \fv_i=\Var (\Fnorm{\xi_i}^2)
    \end{align*}

    Suppose $L_q^{\xi} \to 0$, $L_q^{\fg} \to 0$, $n^{1-\beta}K_\beta^{\beta} \to 0$, and $J_n \to 0$. Then
    \begin{align*}
        d_K \rbr[\bigg]{\Fnorm[\Big]{\sum_{i=1}^n \xi_i}^2, \Fnorm[\Big]{\sum_{i=1}^n \fg_i}^2} \to 0
    \end{align*}
\end{lemma}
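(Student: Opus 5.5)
The plan is to reduce the Kolmogorov comparison to a comparison of smooth functionals of a degenerate quadratic form, via a Lindeberg swapping argument. Anti-concentration of Gaussian quadratic forms (Lemma~\ref{lem:chi2_anti_concentration}) then converts smooth distances back to $d_K$. Write $S_\xi=\sum_i\xi_i$ and $S_\fg=\sum_i\fg_i$, and normalize throughout by $\HS{\Gamma_n}$. Both squared norms split as
\begin{align*}
\Fnorm{S_\xi}^2 = \sum_{i=1}^n \Fnorm{\xi_i}^2 + 2\sum_{i<j}\Finner{\xi_i}{\xi_j}.
\end{align*}
The analogous identity holds for $S_\fg$. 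The two diagonal sums have the same mean $\tr\Gamma_n$, because $\cov(\fg_i)=\Xi_i$.

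\textbf{Step 1: the diagonal parts.} By Chebyshev, the condition $J_n\to0$ gives that $\HS{\Gamma_n}^{-1}\bigl|\sum_i(\Fnorm{\xi_i}^2-\EE\Fnorm{\xi_i}^2)\bigr|\to0$ in probability. For the Gaussian diagonal, $\Var\Fnorm{\fg_i}^2=2\HS{\Xi_i}^2$, and $\sum_i\HS{\Xi_i}^2$ is controlled by $L_q^{\fg}$ through Lemma~\ref{lem:Latala}, so it is also negligible. The condition $n^{1-\beta}K_\beta^\beta\to0$ is reserved for a truncation: it lets me replace the diagonal fluctuations by bounded ones, so that the centering is uniform when Taylor remainders are estimated in Step~2. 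After Step~1, it suffices to compare the two off-diagonal degenerate $U$-statistics $Q_\xi=2\sum_{i<j}\Finner{\xi_i}{\xi_j}$ and $Q_\fg$, each shifted by the same constant $\tr\Gamma_n$.

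\textbf{Step 2: Lindeberg replacement for $Q$.} Fix $f\in C^3_b$ and swap $\xi_k$ for $\fg_k$ one index at a time. Let $W_k=\sum_{i<k}\fg_i+\sum_{i>k}\xi_i$. The $k$-th increment of the quadratic functional equals $2\Finner{W_k}{\xi_k}+\Fnorm{\xi_k}^2$; the diagonal piece $\Fnorm{\xi_k}^2$ was already handled in Step~1, so only the linear piece $2\Finner{W_k}{\xi_k}$ enters the expansion. I Taylor expand $f$ to third order around the value at $W_k$. The first-order terms match because both summands are centered. The second-order terms match because $\EE\Finner{W_k}{\xi_k}^2=\EE\Finner{W_k}{\Xi_k W_k}$, which is the same for $\fg_k$. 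The remainder is of order $\EE|\Finner{W_k}{\xi_k}|^{q}/\HS{\Gamma_n}^{q}$ with $q=2+\delta$ (using only a $(2+\delta)$-Hölder modulus of $f''$).

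Conditioning on $W_k$ and using independence, I would bound $\EE|\Finner{W_k}{\xi_k}|^q$ by two contributions:
\begin{enumerate}
\item $\EE\Finner{\xi_k}{\Gamma_n\xi_k}^{q/2}$, via a Rosenthal/Gaussian comparison for the linear form $\Finner{W_k}{\cdot}$, whose conditional covariance is dominated by $\Gamma_n$;
\item a pairwise term $\sum_{i\neq k}\EE|\Finner{\xi_i}{\xi_k}|^q$, coming from Rosenthal's inequality applied to the non-Gaussian part of $W_k$.
\end{enumerate}
Summed over $k$, these are exactly the two pieces of $L_q^{\xi}$, with the Gaussian pieces giving $L_q^{\fg}$. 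Hence
\begin{align*}
|\EE f(Q_\xi)-\EE f(Q_\fg)|\lesssim \|f\|_{C^{2,\delta}}\,(L_q^\xi+L_q^\fg)\to0.
\end{align*}

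\textbf{Step 3: smoothing to Kolmogorov.} Take $f$ to be a smoothed indicator at scale $h$. Lemma~\ref{lem:chi2_anti_concentration} applies to $\Fnorm{S_\fg}^2/\HS{\Gamma_n}$, which is a weighted $\chi^2$ variable with unit $\ell_2$ weight vector; it bounds the smoothing error by $O(\sqrt h)$. Optimizing $h$ against the Step~2 bound, which grows like $h^{-(2+\delta)}$, and adding the $o_P(1)$ perturbations from Step~1 (again absorbed through anti-concentration) gives $d_K\to0$.

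\textbf{Main obstacle.} I expect the main difficulty to be the third-order remainder in Step~2: bounding the conditional $q$-th moment of $\Finner{W_k}{\xi_k}$ by precisely the two terms in $L_q^\xi$. The interaction with the mixed Gaussian/non-Gaussian $W_k$ needs a careful Rosenthal argument. The fallback is to cite the Biometrika (2019) $L^2$-asymptotics result directly, since the stated conditions are taken verbatim from it.
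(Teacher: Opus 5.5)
The gap is in Step~1, at the claim that $\sum_i\|\Xi_i\|_{\rm HS}^2$ is controlled by $L_q^{\fg}$.

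Lemma~\ref{lem:Latala} only bounds Gaussian moments from above, which is the wrong direction. What $L_q^{\fg}\to0$ actually gives, via Jensen, is $\sum_i b_i^{q/2}\to0$ with $b_i=\mathrm{tr}(\Gamma_n\Xi_i)/\|\Gamma_n\|_{\rm HS}^2$. Since $\sum_i b_i=1$, this only forces $\max_i b_i\to0$. Because $\|\Xi_i\|_{\rm HS}^2\le\mathrm{tr}(\Gamma_n\Xi_i)$, with equality when the $\Xi_i$ are mutually orthogonal, the ratio $\rho_n:=\sum_i\|\Xi_i\|_{\rm HS}^2/\|\Gamma_n\|_{\rm HS}^2$ can stay equal to $1$.

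No cleverer argument can close this step, because the lemma as stated is false. Take $\cH\cong\RR^n$ ($r=1$, $d-r=n$) and $\xi_i=\epsilon_ie_i$ with independent Rademacher $\epsilon_i$, so that $\Xi_i=e_ie_i^\top$, $\Gamma_n=I_n$ and $\fg_i=g_ie_i$. Then $\langle\xi_i,\xi_j\rangle=0$ for $i\ne j$ and $\|\xi_i\|^2\equiv1$. Hence $K_\beta=J_n=0$, $L_q^{\xi}=n^{-\delta/2}$ and $L_q^{\fg}=\mathbb{E}|g|^q\,n^{-\delta/2}$, all vanishing. Yet $\|\sum_i\xi_i\|^2\equiv n$ while $\|\sum_i\fg_i\|^2\sim\chi^2_n$, so the Kolmogorov distance tends to $1/2$.

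All the fluctuation on the Gaussian side sits in the diagonal $\sum_i\|\fg_i\|^2$. Your Step~1 discards it, and your Step~3 also needs it to be negligible, in order to pass anti-concentration from $\|\sum_i\fg_i\|^2$ to $Q_{\fg}+\mathrm{tr}\,\Gamma_n$. The paper gives no argument of its own here, only the citation \citep[Proposition~B.5]{nips21_oja}, so it does not supply the missing condition either.

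What is needed is the extra hypothesis $\rho_n\to0$. Equivalently, add $\sum_i\mathrm{Var}\|\fg_i\|_{\rm F}^2=2\sum_i\|\Xi_i\|_{\rm HS}^2$ to the numerator of $J_n$. This is automatic for i.i.d.\ summands, where $\rho_n=1/n$. It also holds wherever the paper uses the lemma:
\begin{itemize}
\item In the main Gaussian approximation, $\|\Xi_i\|_{\rm HS}\le\eta(1-\teta)^{2(n-i)}\|\fC\|_{\rm HS}$ and $\|\Gamma_n\|_{\rm HS}\gtrsim\|\fC\|_{\rm HS}/\Delta_{\max}$ give $\rho_n\lesssim\teta\kappa_\Delta^2\to0$.
\item In the bootstrap step, the corresponding ratio is $\eta^2S_4/\|\widehat\Gamma_n\|_{\rm HS}^2$, of the same order as the bound proved there for $J_n^{\mathsf{bs}}$. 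This matters: with two-point multipliers $(w_i-1)^2\equiv1$, so the bootstrap diagonal is deterministic, exactly as in the counterexample.
\end{itemize}

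Under this added hypothesis your route is sound and is a self-contained alternative to the citation. Chebyshev disposes of both diagonals, so $K_\beta$ is not needed at all. The Step~2 remainder is also easier than you expect:
\begin{itemize}
\item Conditionally on $\xi_k$, Rosenthal bounds $\mathbb{E}|\langle W_k,\xi_k\rangle|^q$ by a multiple of $\mathbb{E}\langle\xi_k,\Gamma_n\xi_k\rangle^{q/2}$ (using $\sum_{i\ne k}\Xi_i\preceq\Gamma_n$), plus $\sum_{i>k}\mathbb{E}|\langle\xi_i,\xi_k\rangle|^q+\sum_{i<k}\mathbb{E}|\langle\fg_i,\xi_k\rangle|^q$.
\item The last sum is at most $c_q\,\mathbb{E}\langle\xi_k,\Gamma_n\xi_k\rangle^{q/2}$, by Gaussianity of $\fg_i$ and superadditivity of $x\mapsto x^{q/2}$.
\item The symmetric computation for $\fg_k$ yields $L_q^{\fg}$ plus terms bounded by the first part of $L_q^{\xi}$.
\end{itemize}
Summing over $k$ gives $O(L_q^{\xi}+L_q^{\fg})$. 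Smoothing at scale $h$ together with Lemma~\ref{lem:chi2_anti_concentration} then finishes the argument.
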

\begin{proof}
    See \citep[Proposition~B.5]{nips21_oja}.
\end{proof}

\begin{lemma}[G-approx additive remainder] \label{lem:g_approx_additive_remainder}
    Let $\mathsf{A}_n, \mathsf{E}_n \in \RR^{d_n}$ be random vectors, and define $\mathsf{B}_n:= \mathsf{A}_n + \mathsf{E}_n$. Let $\mathsf{G}_n \sim N(0,\Gamma_n)$ be a centered Gaussian vector in $\RR^{d_n}$. Define
    \begin{align*}
        \delta_n&:= \sup_{t \in \RR} \abs{\PP\cbr[\big]{\norm{\mathsf{A}_n}^2 \leq t} - \PP\cbr[\big]{\norm{\mathsf{G}_n}^2 \leq t}},\\
        w_n(h) &:= \sup_{t \in \RR} \PP\cbr[\big]{t < \norm{\mathsf{G}_n}^2 \leq t+h}, \quad \forall h \geq 0.
    \end{align*}
    For any deterministic sequences $e_n, L_n > 0$, define $h_n := 2L_n e_n + e_n^2$. Then
    \begin{align*}
        \sup_{t \in \RR} \abs{\PP\cbr[\big]{\norm{{\mathsf{B}}_n}^2 \leq t} - \PP\cbr[\big]{\norm{\mathsf{G}_n}^2 \leq t}}
        \leq \; & 2\delta_n + w_n(h_n)  + \PP\cbr[\big]{\norm{\mathsf{E}_n} > e_n} + \PP\cbr[\big]{\norm{\mathsf{G}_n} > L_n}.
    \end{align*}
\end{lemma}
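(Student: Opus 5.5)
The plan is a deterministic sandwich argument for $\mathsf{B}_n=\mathsf{A}_n+\mathsf{E}_n$. I intersect with the good events $\{\norm{\mathsf{E}_n}\le e_n\}$ and $\{\norm{\mathsf{G}_n}\le L_n\}$ and pay for their complements with the two probability terms in the bound. The only subtlety is that the natural control of $\norm{\mathsf{B}_n}^2-\norm{\mathsf{A}_n}^2$ involves $\norm{\mathsf{A}_n}$, not $\norm{\mathsf{G}_n}$. So the bound $\norm{\mathsf{A}_n}\le L_n$ has to be transferred from the Gaussian via $\delta_n$, and this is where the factor $2$ in front of $\delta_n$ will come from.

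\emph{Step 1 (deterministic comparison).} On $\{\norm{\mathsf{E}_n}\le e_n\}$ the triangle inequality gives $\bigl|\norm{\mathsf{B}_n}-\norm{\mathsf{A}_n}\bigr|\le e_n$. Consequently
\[
\norm{\mathsf{B}_n}^2\le(\norm{\mathsf{A}_n}+e_n)^2 \qquad\text{and}\qquad \norm{\mathsf{A}_n}^2\le(\norm{\mathsf{B}_n}+e_n)^2 .
\]
If in addition $\norm{\mathsf{A}_n}\le L_n$, then $\norm{\mathsf{B}_n}^2\le\norm{\mathsf{A}_n}^2+h_n$.

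\emph{Step 2 (upper bound on $\PP\{\norm{\mathsf{B}_n}^2\le t\}$).} On $\{\norm{\mathsf{E}_n}\le e_n\}$, for any $t$, I claim that $\norm{\mathsf{B}_n}^2\le t$ implies either $\norm{\mathsf{A}_n}^2\le t+h_n$ or $\norm{\mathsf{A}_n}>L_n$. Indeed, if $\norm{\mathsf{A}_n}\le L_n$, then $\norm{\mathsf{A}_n}\le\norm{\mathsf{B}_n}+e_n\le \sqrt t+e_n$. When $\sqrt t\le L_n$ this gives $\norm{\mathsf{A}_n}^2\le t+2\sqrt t\,e_n+e_n^2\le t+h_n$. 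When $\sqrt t>L_n$, we already have $\norm{\mathsf{A}_n}^2\le L_n^2<t$. Hence
\[
\PP\{\norm{\mathsf{B}_n}^2\le t\}\le \PP\{\norm{\mathsf{A}_n}^2\le t+h_n\}+\PP\{\norm{\mathsf{A}_n}>L_n\}+\PP\{\norm{\mathsf{E}_n}>e_n\}.
\]
I then replace $\mathsf{A}_n$ by $\mathsf{G}_n$ twice, each at cost $\delta_n$:
\[
\PP\{\norm{\mathsf{A}_n}^2\le t+h_n\}\le \PP\{\norm{\mathsf{G}_n}^2\le t\}+w_n(h_n)+\delta_n,
\qquad
\PP\{\norm{\mathsf{A}_n}>L_n\}\le \PP\{\norm{\mathsf{G}_n}>L_n\}+\delta_n .
\]

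\emph{Step 3 (lower bound).} Symmetrically, on $\{\norm{\mathsf{E}_n}>e_n\}^c\cap\{\norm{\mathsf{A}_n}\le L_n\}$, Step 1 gives that $\norm{\mathsf{A}_n}^2\le t-h_n$ implies $\norm{\mathsf{B}_n}^2\le t$. Therefore
\[
\PP\{\norm{\mathsf{B}_n}^2\le t\}\ge \PP\{\norm{\mathsf{A}_n}^2\le t-h_n\}-\PP\{\norm{\mathsf{A}_n}>L_n\}-\PP\{\norm{\mathsf{E}_n}>e_n\}.
\]
Replacing $\mathsf{A}_n$ by $\mathsf{G}_n$ in both probabilities again costs $2\delta_n$, and the window $\PP\{t-h_n<\norm{\mathsf{G}_n}^2\le t\}$ is bounded by $w_n(h_n)$. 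Taking the supremum over $t$ in Steps 2 and 3 gives the stated inequality.

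The only point needing care is the bookkeeping in Step 2. The bound on $\norm{\mathsf{A}_n}$ must come from $\norm{\mathsf{G}_n}$ through a Kolmogorov comparison applied at the level $L_n^2$, and the case $\sqrt t>L_n$ must be handled so that $h_n$ does not depend on $t$. Everything else is elementary.
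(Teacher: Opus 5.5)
Your proof is correct and follows essentially the same route as the paper. You intersect with the good event $\{\norm{\mathsf{E}_n}\le e_n,\ \norm{\mathsf{A}_n}\le L_n\}$, sandwich $\norm{\mathsf{B}_n}^2$ between $\norm{\mathsf{A}_n}^2\pm h_n$, and pay a second $\delta_n$ to move $\PP\{\norm{\mathsf{A}_n}>L_n\}$ over to the Gaussian; the paper instead bounds $\bigl|\norm{\mathsf{B}_n}^2-\norm{\mathsf{A}_n}^2\bigr|=\bigl|2\langle\mathsf{A}_n,\mathsf{E}_n\rangle+\norm{\mathsf{E}_n}^2\bigr|\le h_n$ directly on that event, which makes your case split on $\sqrt t$ versus $L_n$ unnecessary.
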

\begin{proof}
    See Appendix~\ref{sec:prf_g_approx_additive_remainder}.
\end{proof}

\begin{lemma}[G-approx multiplicative remainder] \label{lem:g_approx_multiplicative_remainder}
    Let random $\mathsf{A}_n, \mathsf{B}_n \in \RR^{d_n}$, and suppose $\mathsf{B}_n=(I_{d_n}+\mathsf{E}_n)\mathsf{A}_n$
    for some random $\mathsf{E}_n \in \RR^{d_n \times d_n}$. Let $\mathsf{G}_n \sim N(0, \Gamma_n) \in \RR^{d_n}$. Define 
    \begin{align*}
        \delta_n &:= \sup_{t \in \RR} \abs[\Big]{\PP\cbr[\big]{\norm{\mathsf{A}_n}^2 \leq t} - \PP\cbr[\big]{\norm{\mathsf{G}_n}^2 \leq t}},\\
        w_n(h) &:=\sup_{t \in \RR} \PP\cbr[\big]{t < \norm{\mathsf{G}_n}^2 \leq t+h}, \quad \forall h \geq 0.
    \end{align*}
    Let $e_n \in [0,1/2]$ and $L_n > 0$ be deterministic sequences, and define $h_n := \sbr[\big]{\frac{1}{(1-e_n)^2}-1} L_n^2$. Then
    \begin{align*}
        \sup_{t \in \RR} \abs[\Big]{\PP\cbr[\big]{\norm{\mathsf{B}_n}^2 \leq t} - \PP\cbr[\big]{\norm{\mathsf{G}_n}^2 \leq t}}
        \leq \; &\delta_n + w_n(h_n)  + \PP(\norm{\mathsf{E}_n} > e_n) + \PP(\norm{\mathsf{G}_n} > L_n).
    \end{align*}
\end{lemma}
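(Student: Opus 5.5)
The proof is a sandwich argument on the event $\{\|\mathsf{E}_n\|\le e_n\}\cap\{\|\mathsf{G}_n\|\le L_n\}$. It converts the multiplicative perturbation $\mathsf{B}_n=(I+\mathsf{E}_n)\mathsf{A}_n$ into a two-sided comparison of $\|\mathsf{B}_n\|^2$ with $\|\mathsf{A}_n\|^2$. On $\{\|\mathsf{E}_n\|\le e_n\}$, since $e_n\le 1/2$,
\[
(1-e_n)\|\mathsf{A}_n\|\le\|\mathsf{B}_n\|\le(1+e_n)\|\mathsf{A}_n\|,
\]
and hence
\[
(1-e_n)^2\|\mathsf{A}_n\|^2\le\|\mathsf{B}_n\|^2\le(1+e_n)^2\|\mathsf{A}_n\|^2 .
\]

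For the upper bound we fix $t$ and write
\[
\PP\{\|\mathsf{B}_n\|^2\le t\}\le \PP\{\|\mathsf{A}_n\|^2\le t/(1-e_n)^2\}+\PP(\|\mathsf{E}_n\|>e_n)
\le \PP\{\|\mathsf{G}_n\|^2\le t/(1-e_n)^2\}+\delta_n+\PP(\|\mathsf{E}_n\|>e_n).
\]
It then remains to compare $\PP\{\|\mathsf{G}_n\|^2\le t(1-e_n)^{-2}\}$ with $\PP\{\|\mathsf{G}_n\|^2\le t\}$. If $t\le L_n^2$, the interval $(t,\,t(1-e_n)^{-2}]$ has length $[(1-e_n)^{-2}-1]t\le h_n$, so the difference is at most $w_n(h_n)$. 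If $t>L_n^2$, both probabilities lie in $[1-\PP(\|\mathsf{G}_n\|>L_n),\,1]$, so the difference is at most $\PP(\|\mathsf{G}_n\|>L_n)$. In either case it is bounded by $w_n(h_n)+\PP(\|\mathsf{G}_n\|>L_n)$. We may assume $t\ge 0$, since negative $t$ gives zero for every probability in sight.

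For the lower bound, $\{\|\mathsf{A}_n\|^2\le t/(1+e_n)^2\}\cap\{\|\mathsf{E}_n\|\le e_n\}\subseteq\{\|\mathsf{B}_n\|^2\le t\}$, so
\[
\PP\{\|\mathsf{B}_n\|^2\le t\}\ge \PP\{\|\mathsf{G}_n\|^2\le t/(1+e_n)^2\}-\delta_n-\PP(\|\mathsf{E}_n\|>e_n).
\]
The same two-case argument applies. For $t\le L_n^2(1+e_n)^2$, set $s=t/(1+e_n)^2\le L_n^2$; the gap $t-s=[(1+e_n)^2-1]s$ is at most $[(1-e_n)^{-2}-1]L_n^2=h_n$, because $(1+e_n)^2\le(1-e_n)^{-2}$ for $e_n\in[0,1)$. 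For larger $t$, we have $s>L_n^2$, and the tail term absorbs the difference.

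The only delicate step is this lower-bound case split. The threshold has to be chosen so that the window length stays below $h_n$ in both directions, which the elementary inequality $(1+e)^2(1-e)^2\le 1$ guarantees. Combining the two sides yields the stated bound with a single $\delta_n$, since each side uses the Kolmogorov distance only once.
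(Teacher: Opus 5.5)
Your proof is correct and is essentially the paper's argument. On $\{\|\mathsf{E}_n\|\le e_n\}$ you sandwich $\|\mathsf{B}_n\|^2$ between $(1\mp e_n)^2\|\mathsf{A}_n\|^2$, pass from $\mathsf{A}_n$ to $\mathsf{G}_n$ at the cost of a single $\delta_n$ on each side, and bound the two Gaussian windows by $w_n(h_n)$ plus the tail $\mathbb{P}(\|\mathsf{G}_n\|>L_n)$. The only cosmetic difference is in the lower window: the paper truncates at $t\wedge L_n^2$ and uses $1-(1+e_n)^{-2}\le(1-e_n)^{-2}-1$, while you split into cases and use $(1+e_n)^2-1\le(1-e_n)^{-2}-1$; both yield the same final bound.
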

\begin{proof}
    See Appendix~\ref{sec:prf_g_approx_multiplicative_remainder}.
\end{proof}

% -----------------------------------------------------------------------------

\subsection{Proof of technical lemmas}

\subsubsection{Proof of Lemma~\ref{lem:Gamma_properties}}
\label{sec:prf_Gamma_properties}

\textbf{(i) Coordinate representation.} Note that $\cL$ is diagonal under the basis $\cbr{E_{lj}}_{l,j}$. For any $l,l' \in [d-r]$ and $j,j' \in [r]$, we have $\cL E_{lj} = \Delta_{lj} E_{lj}$ with $\Delta_{lj} = \lambda_j - \lambda_{r+l} > 0$. By the definition of $\Gamma$, we have
\begin{align*}
    \Finner{E_{l'j'}}{\Gamma E_{lj}} &= \int_0^\infty \Finner{e^{-t\cL} E_{l'j'}}{ \fC e^{-t\cL} E_{lj}} dt \\
    &= \int_0^\infty e^{-t(\Delta_{lj} + \Delta_{l'j'})} dt \cdot \Finner{E_{l'j'}}{\fC E_{lj}} = \frac{\Finner{E_{l'j'}}{\fC E_{lj}}}{\Delta_{lj} + \Delta_{l'j'}}
\end{align*}
Similarly for $\Gamma_n$, we have
\begin{align*}
    \Finner{E_{l'j'}}{\Gamma_n E_{lj}} &= \eta \sum_{i=1}^n \Finner{(\cI-\eta \cL)^{n-i} E_{l'j'}}{ \fC (\cI-\eta \cL)^{n-i} E_{lj}}\\
    &= \eta \sum_{i=1}^n (1-\eta \Delta_{lj})^{n-i} (1-\eta \Delta_{l'j'})^{n-i} \cdot \Finner{E_{l'j'}}{\fC E_{lj}}
\end{align*}

\textbf{(ii) Bounds from the Lyapunov equation.} We first show that $\Gamma$ is the unique solution $\cL \Gamma + \Gamma \cL = \fC$. Note that
\begin{align}
    \cL \Gamma_* + \Gamma_* \cL = \fC &\iff \Finner{E_{l'j'}}{(\cL \Gamma_* + \Gamma_* \cL) E_{lj}} = \Finner{E_{l'j'}}{\fC E_{lj}}, \quad \forall l,l' \in [d-r]; j,j' \in [r] \nonumber\\
    &\iff \Finner{\cL E_{l'j'}}{\Gamma_* E_{lj}} + \Finner{E_{l'j'}}{\Gamma_* \cL E_{lj}} = \Finner{E_{l'j'}}{\fC E_{lj}} \nonumber\\
    &\iff \Finner{E_{l'j'}}{\Gamma_* E_{lj}} = \frac{\Finner{E_{l'j'}}{\fC E_{lj}}}{\Delta_{lj} + \Delta_{l'j'}} \nonumber\\
    &\overset{(i)}{\iff} \Finner{E_{l'j'}}{\Gamma_* E_{lj}} = \Finner{E_{l'j'}}{\Gamma E_{lj}}. \label{eqn:lyapunov_coordinate}
\end{align}
Here (i) follows from the coordinate representation of $\Gamma$. Hence $\Gamma$ is the unique solution to the Lyapunov equation.

Next, we prove the two-sided bound for unitarily invariant norm $\opnorm{\cdot}$. For the upper bound, taking $\opnorm{\cdot}$ on the integral definition $\Gamma=\int_0^\infty e^{-t\cL} \fC e^{-t\cL}dt$ gives
\begin{align*}
    \opnorm{\Gamma} &\leq \int_0^\infty \opnorm{e^{-t\cL} \fC e^{-t\cL}} dt \overset{(i)}{\leq} \int_0^\infty \Onorm{e^{-t\cL}}^2 \opnorm{\fC}dt \leq \int_0^\infty e^{-2t\Delta_{\min}} \opnorm{\fC} dt = \frac{\opnorm{\fC}}{2\Delta_{\min}}
\end{align*}
% Here (i) follows from the ideal property of unitarily invariant norms.
For the lower bound, taking $\opnorm{\cdot}$ on both sides of the Lyapunov equation we have
\begin{align*}
    \opnorm{\fC} = \opnorm{\cL \Gamma + \Gamma \cL} \overset{(i)}{\leq} 2\Onorm{\cL} \opnorm{\Gamma} \leq 2\Delta_{\max} \opnorm{\Gamma}
\end{align*}
For both bounds, (i) follows from the ideal property of unitarily invariant norms.

Then, we establish the Loewner order lower bound. Let $\cA = \frac{\lambda_{\min}(\fC)}{2} \cL^{-1}$. Then we have
\begin{align*}
    \cL \cA + \cA \cL = \lambda_{\min}(\fC) \cI \preceq \fC = \cL \Gamma + \Gamma \cL
\end{align*}
which implies that $\cB = \Gamma - \cA$ solves the Lyapunov equation $\cL \cB + \cB \cL = \fC - \lambda_{\min}(\fC) \cI \succeq 0$. The same argument as~\eqref{eqn:lyapunov_coordinate} shows that the unique solution is
\begin{align*}
    \Gamma - \cA = \int_0^\infty e^{-t\cL} (\fC - \lambda_{\min}(\fC) \cI) e^{-t\cL} dt \succeq 0
\end{align*}
Hence $\Gamma \succeq \cA$. Finally, since $\cL^{-1}$ is diagonal with eigenvalues $\Delta_{lj}^{-1} \ge \Delta_{\max}^{-1}$, we have $\cA \succeq \frac{\lambda_{\min}(\fC)}{2\Delta_{\max}} \cI$ and consequently $\Gamma \succeq \frac{\lambda_{\min}(\fC)}{2\Delta_{\max}} \cI$.

\textbf{(iii) Lower bound for $\Gamma_n$.} Since $\cL$ is diagonal with eigenvalues in $\sbr{\Delta_{\min}, \Delta_{\max}}$, we have
\begin{align*}
    (\cI - \eta \cL)^{n-i} \succeq (1-\eta \Delta_{\max})^{n-i} \cI, \qquad \forall i=1,2,\ldots,n
\end{align*}
which combined with $\fC \succeq \lambda_{\min}(\fC) \cI$ gives
\begin{align*}
    (\cI - \eta \cL)^{n-i} \fC (\cI - \eta \cL)^{n-i} &\succeq \lambda_{\min}(\fC) (\cI - \eta \cL)^{2(n-i)} \succeq \lambda_{\min}(\fC) (1-\eta \Delta_{\max})^{2(n-i)} \cI
\end{align*}
Summing over $i$ gives
\begin{align*}
    \Gamma_n = \eta \sum_{i=1}^n (\cI - \eta \cL)^{n-i} \fC (\cI - \eta \cL)^{n-i} &\succeq \eta \lambda_{\min}(\fC) \sum_{i=1}^n (1-\eta \Delta_{\max})^{2(n-i)} \cI
\end{align*}
Therefore, we have
\begin{align*}
    \lambda_{\min}(\Gamma_n) & \geq  \lambda_{\min}(\fC) \frac{1 - (1-\eta\Delta_{\max})^{2n}}{\Delta_{\max} \rbr{2- \eta \Delta_{\max}}} 
\end{align*}
If further $n\eta\Delta_{\max} \geq 1$, then $(1-\eta\Delta_{\max})^{2n} \leq \exp\rbr{-2n\eta\Delta_{\max}} < e^{-2}$. Hence $1-(1-\eta\Delta_{\max})^{2n} \geq 1 - e^{-2}$. Since $2-\eta \Delta_{\max} < 2$, we have
\begin{align*}
    \lambda_{\min}(\Gamma_n) &\geq \lambda_{\min}(\fC) \frac{1-e^{-2}}{2\Delta_{\max}} > \frac{\lambda_{\min}(\fC)}{3\Delta_{\max}}
\end{align*}
where the last inequality uses $(1-e^{-2})/2 > 1/3$.

\textbf{(iv) Approximation.} By definition of $\Gamma_n$, for $n \geq 1$, we have
\begin{align*}
    \Gamma_n = \eta \sum_{i=1}^n (\cI - \eta \cL)^{n-i} \fC (\cI - \eta \cL)^{n-i} \overset{(i)}{=} \eta \sum_{k=0}^{n-1} (\cI - \eta \cL)^{k} \fC (\cI - \eta \cL)^{k}
\end{align*}
Here (i) follows from change of variable $k = n-i$. Let $\Gamma_0 = 0$.
Then we have recursion formula
\begin{align*}
    \Gamma_{n} = (\cI - \eta \cL) \Gamma_{n-1} (\cI - \eta \cL) + \eta \fC, \qquad n=1,2,\ldots
\end{align*}
Subtracting $\Gamma$ on both sides and applying the Lyapunov equation, we obtain that for any $n \geq 1$,
\begin{align*}
    \Gamma_n -\Gamma &= (\cI  -\eta \cL) \rbr{\Gamma_{n-1}-\Gamma} (\cI - \eta \cL) - \eta \underbrace{\sbr{\cL \Gamma + \Gamma \cL - \fC}}_{= 0} + \eta^2 \cL \Gamma \cL\\
    &\overset{(i)}{=} (\cI - \eta \cL)^n \rbr{\Gamma_0-\Gamma} (\cI - \eta \cL)^n + \eta^2 \sum_{k=0}^{n-1} (\cI - \eta \cL)^k \cL \Gamma \cL (\cI - \eta \cL)^k\\
    &= - (\cI - \eta \cL)^n \Gamma (\cI - \eta \cL)^n + \eta^2 \sum_{k=0}^{n-1} (\cI - \eta \cL)^k \cL \Gamma \cL (\cI - \eta \cL)^k
\end{align*}
Here (i) follows by iterating the recursion formula. 

Taking $\opnorm{\cdot}$ on both sides of the above equation, by triangle inequality and the ideal property of~$\opnorm{\cdot}$, we have
\begin{align*}
    \opnorm{\Gamma_n - \Gamma}
    &\leq \Onorm{\cI - \eta \cL}^{2n} \opnorm{\Gamma} + \eta^2 \sum_{k=0}^{n-1} \Onorm{\cI - \eta \cL}^{2k} \Onorm{\cL}^2 \opnorm{\Gamma}\\
    &\leq (1-\eta \Delta_{\min})^{2n} \opnorm{\Gamma} + \eta^2 \sum_{k=0}^{n-1} (1-\eta \Delta_{\min})^{2k} \Delta_{\max}^2 \opnorm{\Gamma}\\
    &\overset{(i)}{\leq} (1-\eta \Delta_{\min})^{2n} \opnorm{\Gamma} + \eta \frac{\Delta_{\max}^2}{\Delta_{\min}} \opnorm{\Gamma}\\
    &= (1-\teta)^{2n} \opnorm{\Gamma} + \teta \frac{\Delta_{\max}^2}{\Delta_{\min}^2} \opnorm{\Gamma}
\end{align*}
Here (i) follows from the fact that $\sum_{k=0}^{n-1} (1-\eta \Delta_{\min})^{2k} \leq \sum_{k=0}^{\infty} (1-\eta \Delta_{\min})^{k} \leq 1/(\eta \Delta_{\min})$.

For the trace bound, we apply the above bound to the nuclear norm $\nuclearnorm{\cdot}$,  which is unitarily invariant. Since $\Gamma_n$ and $\Gamma$ are both PSD, we have
\begin{align*}
    \abs{\tr (\Gamma_n - \Gamma)} &\leq \nuclearnorm{\Gamma_n - \Gamma} \leq \sbr{(1-\teta)^{2n} + \teta \frac{\Delta_{\max}^2}{\Delta_{\min}^2}} \nuclearnorm{\Gamma} = \sbr{(1-\teta)^{2n} + \teta \frac{\Delta_{\max}^2}{\Delta_{\min}^2}} \tr \Gamma
\end{align*}

\textbf{(v) Moment bound for $\xi_i$.} By definition of $\xi_i$, for any $q \geq 1$, we have
\begin{align*}
    \Fnorm{\xi_i}^q &\leq \eta^{q/2} \Onorm{\cI - \eta \cL}^{(n-i)q} \Fnorm{ZY^\top}^q \leq \eta^{q/2} (1-\eta \Delta_{\min})^{(n-i)q} \Fnorm{ZY^\top}^q
\end{align*}
Summing over $i=1,2,\ldots,n$ and taking expectation on both sides gives
\begin{align*}
    \sum_{i=1}^n \EE\Fnorm{\xi_i}^q &\leq \eta^{q/2} \sum_{i=1}^n (1-\eta \Delta_{\min})^{(n-i)q} \EE\Fnorm{ZY^\top}^q \\
    &\leq \frac{\eta^{q/2-1}}{\Delta_{\min}} \EE\Fnorm{ZY^\top}^q = \frac{\teta^{q/2-1}}{\Delta_{\min}^{q/2}} \EE\Fnorm{ZY^\top}^q
\end{align*}

\textbf{(vi) Matrix moment bound.} Applying Lemma~\ref{lem:matrix_kth_moment} to the random matrix $ZY^\top$ gives the desired bound.

\textbf{(vii) $\Onorm[\big]{\tfC}$ upper bound} follows from Lemma~\ref{lem:matrix_kth_moment}.

% --------------------------------------------------------------------------------

\subsubsection{Proof of Lemma~\ref{lem:Gamma_u_properties}}
\label{sec:prf_Gamma_u_properties}

\paragraph{(i) (Coordinate representation.)} By the definition of $\Gamma_u$, for any $j \in [r]$, we have
\begin{align*}
    \Finner{e_{j'}}{\Gamma_{u} e_j} = \Finner{T_{u}^* e_{j'}}{\Gamma T_{u}^* e_j}= \Finner{u e_{j'}^\top}{\Gamma (u e_j^\top) } &= \sum_{l,l'=1}^{d-r} u_lu_{l'} \Finner{E_{l'j'}}{\Gamma E_{lj}}\\
    &\overset{(i)}{=} \sum_{l,l'=1}^{d-r} \frac{u_l u_{l'}}{\Delta_{lj} + \Delta_{l'j'}} \Finner{E_{l'j'}}{\fC E_{lj}}
\end{align*}
Here (i) follows from the coordinate representation of $\Gamma$ in Lemma~\ref{lem:Gamma_properties}(i). The coordinate representation for $\tGamma_{\tu}$ follows similarly.

\paragraph{(ii) (Whitened--unwhitened connection)} For any $j,j' \in [r]$, we have
\begin{align*}
    \inner{e_{j'}}{T_u e^{-t\cL} \fC e^{-t\cL} T_u^* e_j}_2 &= \Finner{ T_u^* e_{j'}}{e^{-t\cL}\fC e^{-t\cL} T_u^* e_j}\\
    &\overset{(i)}{=} \EE \Finner{ u e_{j'}^\top}{e^{-t\cL} ZY^\top}\Finner{e^{-t\cL}ZY^\top}{u e_j^\top}\\
    &\overset{(ii)}{=} \EE \Finner{  u e_{j'}^\top }{\Lambda_{*,\perp}^{1/2} e^{-t\cL} \tZ\tY^\top \Lambda_*^{1/2} }\Finner{\Lambda_{*,\perp}^{1/2} e^{-t\cL} \tZ\tY^\top \Lambda_*^{1/2} }{ u e_j^\top}\\
    &= \EE \Finner{ \Lambda_{*,\perp}^{1/2} u e_{j'}^\top \Lambda_*^{1/2}}{e^{-t\cL} \tZ\tY^\top}\Finner{e^{-t\cL} \tZ\tY^\top }{\Lambda_{*,\perp}^{1/2} u e_j^\top \Lambda_*^{1/2}}\\
    &= \EE \Finner{ \tu \sbr{\Lambda_*^{1/2} e_{j'}}^\top }{e^{-t\cL} \tZ\tY^\top }\Finner{e^{-t\cL} \tZ\tY^\top}{\tu \sbr{\Lambda_*^{1/2} e_j}^\top}\\
    &= \Finner{T_{\tu}^* \sbr{\Lambda_*^{1/2} e_{j'}}}{e^{-t\cL} \tfC e^{-t\cL} T_{\tu}^* \sbr{\Lambda_*^{1/2} e_{j}}}\\
    &= \inner{e_{j'}}{\Lambda_*^{1/2} T_{\tu} e^{-t\cL} \tfC e^{-t\cL} T_{\tu}^* \Lambda_*^{1/2} e_j}_2
\end{align*}
Here (i) follows from the definition of $\fC$ in~\eqref{eqn:def_fC}, $T_u^* e_j = u e_j^\top$ and $e^{-t\cL}$ is PSD; (ii) follows due to the commutation $\Lambda_{*,\perp}^{1/2} e^{-t\cL} (\tZ Y^\top)= e^{-t\cL} (\Lambda_{*,\perp}^{1/2}\tZ Y^\top)$.

Integrating over $t$ from $0$ to $\infty$ gives
\begin{align*}
    \underbrace{T_u \Gamma T_u^*}_{=\Gamma_u} = \Lambda_*^{1/2} \underbrace{T_{\tu} \tGamma T_{\tu}^*}_{=\tGamma_{\tu}} \Lambda_*^{1/2}
\end{align*}

The proof for the connection between $\Gamma_{u,n}$ and $\tGamma_{\tu,n}$ follows similarly.

\paragraph{(iii) (Loewner order lower bound)} For any unit vector $v \in \RR^r$, we have
\begin{align*}
    \inner{v}{\tGamma_{\tu} v}_2 &= \Finner{\tu v^\top}{\tGamma \tu v^\top} \overset{(i)}{\geq} \frac{\lambda_{\min}(\tfC) \Finner{\tu v^\top}{\tu v^\top}}{2\Delta_{\max}} = \frac{\lambda_{\min}(\tfC) \norm{\tu}^2}{2\Delta_{\max}}
\end{align*}
Here (i) follows from the Loewner order lower bound for $\Gamma$ in Lemma~\ref{lem:Gamma_properties}(iii) and Remark~\ref{rmk:whitened_Gamma}. Thus we get the lower bound for $\tGamma_{\tu}$. The lower bound for $\Gamma_u$ follows from the connection between $\Gamma_u$ and $\tGamma_{\tu}$ in part (ii) of this lemma. The lower bound for $\tGamma_{\tu,n}$ and $\Gamma_{u,n}$ follows similarly by applying the same argument to $\Gamma_n$ and using the lower bound for $\Gamma_n$ in Lemma~\ref{lem:Gamma_properties}(iii).

\paragraph{(iv) (Approximation)} By the definition of $\tGamma_{\tu}$ and $\tGamma_{\tu,n}$, we have
\begin{align*}
    \opnorm[\big]{\tGamma_{\tu,n} - \tGamma_{\tu}} &= \opnorm[\big]{T_{\tu} (\tGamma_n - \tGamma) T_{\tu}^*} \overset{(i)}{\leq} \opnorm[\big]{\tGamma_n - \tGamma}  \|\tu\|^2 \leq \varepsilon_n \opnorm[\big]{\tGamma} \|\tu\|^2 \leq \varepsilon_n \Onorm[\big]{\tGamma} \opnorm{\cI} \|\tu\|^2 
\end{align*}
Here (i) follows from the ideal property of unitarily invariant norms and the fact that $\Onorm{T_{\tu}} = \Onorm{T_{\tu}^*} = \|\tu\|$. Hence
\begin{align*}
    \opnorm[\big]{\tGamma_{\tu,n} - \tGamma_{\tu}} &\leq \varepsilon_n \|\tGamma\|_{\mathrm{op}} \opnorm{\cI} \|\tu\|^2  
    = \varepsilon_n \|\tGamma\|_{\mathrm{op}} \lambda_{\min}(\tGamma_{\tu})\opnorm{\cI} \|\tu\|^2 \cdot \frac{1}{\lambda_{\min}(\tGamma_{\tu})} \\
    &\overset{(i)}{\leq}  \varepsilon_n \|\tGamma\|_{\mathrm{op}} \opnorm[\big]{\tGamma_{\tu}} \|\tu\|^2 \cdot \frac{2\Delta_{\max}}{\lambda_{\min}(\tfC) \|\tu\|^2}
    \overset{(ii)}{\leq} \varepsilon_n \frac{\|\tfC\|_{\mathrm{op}}}{2\Delta_{\min}} \opnorm[\big]{\tGamma_{\tu}} \|\tu\|^2 \cdot \frac{2\Delta_{\max}}{\lambda_{\min}(\tfC) \|\tu\|^2}\\
    &= \varepsilon_n \frac{\Delta_{\max}}{\Delta_{\min}} \frac{\|\tfC\|_{\mathrm{op}}}{\lambda_{\min}(\tfC)} \opnorm[\big]{\tGamma_{\tu}}
\end{align*}
Here (i) follows from $\lambda_{\min}(\tGamma_{\tu}) \opnorm{\cI} \leq \opnorm[\big]{\tGamma_{\tu}}$ and the Loewner order lower bound for $\tGamma_{\tu}$ in part (iii) of this lemma; (ii) follows from the upper bound for $\|\tGamma\|_{\mathrm{op}}$ from Lemma~\ref{lem:Gamma_properties}(ii) and Remark~\ref{rmk:whitened_Gamma}.

\paragraph{(v) (Independence case)} When $\tY \perp \tZ$, we have
\begin{align*}
    \fC = \Lambda_* \otimes \Lambda_{*,\perp}, \qquad \tfC = I_r \otimes I_{d-r}.
\end{align*}
or equivalently, 
\begin{align*}
    \Finner{E_{l'j'}}{\fC E_{lj}} = \lambda_j \lambda_{r+l} \delta_{ll'} \delta_{jj'}, \qquad \Finner[\big]{E_{l'j'}}{\tfC E_{lj}} = \delta_{ll'} \delta_{jj'}
\end{align*}
which, combined with (i) coordinate representation, gives the desired results.

% --------------------------------------------------------------------------------

\subsubsection{Proof of Lemma~\ref{lem:Latala}}
\label{sec:prf_Latala}

See \citep[Corollary~3]{Latala}. Without loss of generality, assume $G$ is diagonal with $G_{11} \geq \cdots \geq G_{dd} \geq 0$. Then
\begin{align*}
    \EE \rbr{\fg^\top G \fg}^{p} &= \EE \rbr[\Big]{\sum_{i=1}^d G_{ii} \fg_i^2}^{p} \lesssim_p \sbr{\rbr[\Big]{\sum_{i=1}^d G_{ii}}^p + \sum_{i=1}^d G_{ii}^p \EE \fg_i^{2p}} \lesssim_p \sbr{\tr G}^p
\end{align*}
Here the last inequality follows since $\sum_i G_{ii}^p \leq (\sum_i G_{ii})^p$ for $p \geq 1$ and $G \succeq 0$.

% -------------------------------------------------------------------------

\subsubsection{Proof of Lemma~\ref{lem:g_approx_additive_remainder}}
\label{sec:prf_g_approx_additive_remainder}

Let 
\begin{align*}
    F_{\mathsf{A}_n}(t) &:= \PP\cbr[\big]{\norm{\mathsf{A}_n}^2 \leq t}, \qquad F_{\mathsf{B}_n}(t):= \PP\cbr[\big]{\norm{\mathsf{B}_n}^2 \leq t}, \qquad F_{\mathsf{G}_n}(t):= \PP\cbr[\big]{\norm{\mathsf{G}_n}^2 \leq t}
\end{align*}
Since $\norm{\mathsf{A}_n}, \norm{\mathsf{B}_n}, \norm{\mathsf{G}_n} \geq 0$, it suffices to consider $t \geq 0$.

Define good event
\begin{align*}
    \cG_n := \cbr{\norm{\mathsf{E}_n} \leq e_n, \norm{\mathsf{A}_n} \leq L_n}.
\end{align*}
On $\cG_n$, we have
\begin{align*}
    \abs[\big]{\norm{\mathsf{B}_n}^2 - \norm{\mathsf{A}_n}^2} &= \abs[\big]{2\inner{\mathsf{A}_n}{\mathsf{E}_n} + \norm[\big]{\mathsf{E}_n}^2}\leq 2L_n e_n + e_n^2 =: h_n
\end{align*}
which implies
\begin{align*}
    \cbr[\big]{\norm{\mathsf{A}_n}^2 \leq t - h_n} \cap \cG_n  \subseteq \cbr[\big]{\norm{\mathsf{B}_n}^2 \leq t} &\subseteq \cbr[\big]{\norm{\mathsf{A}_n}^2 \leq t + h_n} \cup \cG_n^c.
\end{align*}
Thus we have
\begin{align*}
    F_{\mathsf{A}_n}(t - h_n) - \PP(\cG_n^c) \leq F_{\mathsf{B}_n}(t) \leq F_{\mathsf{A}_n}(t + h_n) + \PP(\cG_n^c)
\end{align*}
Subtracting $F_{\mathsf{G}_n}(t)$ and using $\abs{F_{\mathsf{G}_n}(t) - F_{\mathsf{A}_n}(t)} \leq \delta_n$ gives
\begin{align*}
    F_{\mathsf{G}_n}(t-h_n)-F_{\mathsf{G}_n}(t) - \PP(\cG_n^c) - \delta_n &\leq F_{\mathsf{B}_n}(t) - F_{\mathsf{G}_n}(t) \\
    &\leq F_{\mathsf{G}_n}(t + h_n) - F_{\mathsf{G}_n}(t) + \PP(\cG_n^c) + \delta_n
\end{align*}
which implies
\begin{align}
    \abs{F_{\mathsf{B}_n}(t) - F_{\mathsf{G}_n}(t)}
    \leq w_n(h_n) + \PP(\cG_n^c) + \delta_n \label{eqn:approx_additive_remainder}
\end{align}
Now by the definition of good event $\cG_n$ and $\delta_n$, one can obtain
\begin{align*}
    \PP(\cG_n^c) \leq \PP \cbr{\norm{\mathsf{E}_n} > e_n} + \PP \cbr{\norm{\mathsf{A}_n} > L_n} \leq \PP \cbr{\norm{\mathsf{E}_n} > e_n} + \PP \cbr{\norm{\mathsf{G}_n} > L_n} + \delta_n
\end{align*}
Plugging the above bound for $\PP(\cG_n^c)$ into~\eqref{eqn:approx_additive_remainder} gives
\begin{align*}
    \abs{F_{\mathsf{B}_n}(t) - F_{\mathsf{G}_n}(t)} &\leq w_n(h_n) + 2\delta_n + \PP \cbr{\norm{\mathsf{E}_n} > e_n} + \PP \cbr{\norm{\mathsf{G}_n} > L_n}
\end{align*}
Taking supremum over $t \in \RR$ gives the desired result.

% -------------------------------------------------------------------------

\subsubsection{Proof of Lemma~\ref{lem:g_approx_multiplicative_remainder}}
\label{sec:prf_g_approx_multiplicative_remainder}

Let
\begin{align*}
    F_{A_n}(t):= \PP\cbr[\big]{\norm{A_n}^2 \leq t}, \qquad F_{B_n}(t):= \PP\cbr[\big]{\norm{B_n}^2 \leq t}, \qquad F_{G_n}(t):= \PP\cbr[\big]{\norm{G_n}^2 \leq t}.
\end{align*}
Since $\norm{A_n}, \norm{B_n}, \norm{G_n} \geq 0$, it suffices to consider $t \geq 0$.  

Define events
\begin{align*}
    \cE_n:= \cbr{\norm{\mathsf{E}_n} \leq e_n}, \qquad \cG_n := \cbr{\norm{\mathsf{G}_n} \leq L_n}.
\end{align*}
On $\cE_n$, we have
\begin{align*}
    (1-e_n)\norm{x} \leq \norm{(I_{d_n} + \mathsf{E}_n) x} \leq (1+e_n) \norm{x}, \quad \forall x \in \RR^{d_n}
\end{align*}
Since $0\leq e_n \leq \frac{1}{2}\leq 1$, by the definition of $B_n$, we then have
\begin{align*}
    (1-e_n)^2 \norm{\mathsf{A}_n}^2 \leq \norm{\mathsf{B}_n}^2 \leq (1+e_n)^2 \norm{\mathsf{A}_n}^2
\end{align*}
So for any $t \geq 0$, we have
\begin{align*}
    \cbr[\big]{\norm{\mathsf{B}_n}^2 \leq t} \cap \cE_n &\subseteq \cbr[\big]{\norm{\mathsf{A}_n}^2 \leq \frac{t}{(1-e_n)^2}}
\end{align*}
and
\begin{align*}
    \cbr[\big]{\norm{\mathsf{A}_n}^2 \leq \frac{t}{(1+e_n)^2}} \cap \cE_n \subseteq \cbr[\big]{\norm{\mathsf{B}_n}^2 \leq t} .
\end{align*}
Taking probabilities yields
\begin{align*}
    F_{\mathsf{A}_n}\rbr{\frac{t}{(1+e_n)^2}} - \PP(\cE_n^c) \leq F_{\mathsf{B}_n}(t) \leq F_{\mathsf{A}_n}\rbr{\frac{t}{(1-e_n)^2}} + \PP(\cE_n^c)
\end{align*}
which together with the definition of $\delta_n$ gives
\begin{align*}
    F_{\mathsf{G}_n}\rbr{\frac{t}{(1+e_n)^2}} - \PP(\cE_n^c) - \delta_n \leq F_{\mathsf{B}_n}(t) \leq F_{\mathsf{G}_n}\rbr{\frac{t}{(1-e_n)^2}} + \PP(\cE_n^c) + \delta_n
\end{align*}
Subtracting $F_{\mathsf{G}_n}(t)$ from the above inequalities gives
\begin{align*}
    F_{\mathsf{B}_n}(t) - F_{\mathsf{G}_n}(t) &\leq \underbrace{F_{\mathsf{G}_n}\rbr{\frac{t}{(1-e_n)^2}} - F_{\mathsf{G}_n}(t)}_{=:\alpha_1} + \PP(\cE_n^c) + \delta_n\\
    F_{\mathsf{G}_n}(t) - F_{\mathsf{B}_n}(t) &\leq \underbrace{F_{\mathsf{G}_n}(t) - F_{\mathsf{G}_n}\rbr{\frac{t}{(1+e_n)^2}}}_{=:\alpha_2} + \PP(\cE_n^c) + \delta_n
\end{align*}
So it remains to bound $\alpha_1$ and $\alpha_2$.

\textup{(i) ($\alpha_1$)}
    \begin{align*}
        \alpha_1 = \PP \cbr[\Big]{ t < \norm{\mathsf{G}_n}^2 \leq \frac{t}{(1-e_n)^2}} 
    \end{align*}
    On the event $\cbr[\big]{t < \norm{\mathsf{G}_n}^2 \leq t/(1-e_n)^2} \cap \cG_n$
    we have $t \leq \norm{\mathsf{G}_n}^2 \leq L_n^2$, hence
    \begin{align*}
        \frac{t}{(1-e_n)^2} - t \leq \rbr{\frac{1}{(1-e_n)^2} - 1} L_n^2 = h_n
    \end{align*}
    Therefore, we have
    \begin{align*}
        \alpha_1 = \PP \cbr[\Big]{ t < \norm{\mathsf{G}_n}^2 \leq \frac{t}{(1-e_n)^2}} \leq w_n(h_n) + \PP (\cG_n^c).
    \end{align*}

\textup{(ii) ($\alpha_2$)}  By definition,
\begin{align*}
    \alpha_2 = \PP\cbr{\frac{t}{(1+e_n)^2}<\|\mathsf{G}_n\|^2\le t}.
\end{align*}
Set $t_\wedge:=t\wedge L_n^2$. On $\cG_n$, the event above forces
$\|\mathsf{G}_n\|^2\le t_\wedge$. Hence
\begin{align*}
    \left\{
    \frac{t}{(1+e_n)^2}<\|\mathsf{G}_n\|^2\le t
    \right\}\cap \cG_n
    \subseteq
    \left\{
    \frac{t}{(1+e_n)^2}<\|\mathsf{G}_n\|^2\le t_\wedge
    \right\}.
\end{align*}
The width of the latter interval satisfies
\begin{align*}
    t_\wedge-\frac{t}{(1+e_n)^2} \le t_\wedge-\frac{t_\wedge}{(1+e_n)^2}
    = t_\wedge\left(1-\frac{1}{(1+e_n)^2}\right)
    \le
    L_n^2\left(1-\frac{1}{(1+e_n)^2}\right)
    \le h_n,
\end{align*}
where the first inequality uses $t_\wedge\le t$, and the last inequality follows from
\begin{align*}
    1-\frac{1}{(1+e_n)^2} \le \frac{1}{(1-e_n)^2}-1.
\end{align*}
Therefore,
\begin{align*}
    \alpha_2 \le w_n(h_n)+\mathbb P(\mathcal G_n^c).
\end{align*}

Combining the preceding bounds gives
\begin{align*}
    \sup_{t \in \RR} \abs{F_{\mathsf{B}_n}(t) - F_{\mathsf{G}_n}(t)} &\leq  \delta_n + \PP\cbr{\norm{\mathsf{E}_n} > e_n} + w_n(h_n) + \PP\cbr{\norm{\mathsf{G}_n} > L_n}
\end{align*}

\section{Proofs for Section~\ref{sec:main_clt}}
\label{sec:clt_analysis}

\subsection{Proof of Theorem~\ref{thm:highD_clt}}
\label{sec:prf_highD_clt}

The proof proceeds in three steps, progressing right-to-left through the chain
\begin{align*}
    \eta^{-1}\Fnorm{\sin \Theta(U_n, U_*)}^2 \longapprox{Step 3.} \eta^{-1}\Fnorm{\cT_n}^2  \longapprox{Step 2.} \Fnorm[\Big]{\sum_{i=1}^n \xi_i}^2 \longapprox{Step 1.} \Fnorm{\mathsf{G}}^2.
\end{align*}

\paragraph{Preliminaries.}
For convenience, recall notation 
\begin{align*}
    d_{\fC} = \frac{\lambda_\times^4}{\HS{\fC}^2} = \frac{\lambda_{1\sim r}^2 \lambda_{r+1 \sim d}^2}{\HS{\fC}^2}, \quad \kappa_* = \frac{\lambda_1}{\Delta_{\min}}, \quad  \kappa_\Delta = \frac{\Delta_{\max}}{\Delta_{\min}}.
\end{align*}

We restate the conditions in Theorem~\ref{thm:highD_clt} here:
\begin{align*}
    &n^{-2\varepsilon} \ge C_1 \teta \kappa_*^2 r^3 (1 + \nu_\infty d)\, (\log n)^3 \frac{\log(1/\delta_p)}{\delta_p^2} \tag{\ref{eqn:cond_sample_sz}}\\
    &\teta  \, \kappa_*^4 r^{3} d_{\fC}^{3/2}  (\log n)^{3} \to 0 \tag{\ref{eqn:highD_clt_cond}(i)}\\
    &\teta  \kappa_*^4  r^3 d_{\fC} \bnu^2 (\log n)^{3} \to 0, \tag{\ref{eqn:highD_clt_cond}(ii)}
\end{align*}
By Lemma~\ref{lem:matrix_kth_moment}, we have
\begin{align*}
    \HS{\fC}\leq \tr \fC = \EE \Fnorm{ZY^\top}^2 \lesssim \lambda_{1\sim r} \lambda_{r+1 \sim d}
\end{align*}
Thus, $d_{\fC} \gtrsim 1$. Since $\lambda_1 \geq \Delta_{\max}$, we have $\kappa_* \geq \kappa_\Delta \geq 1$. Thus \eqref{eqn:cond_sample_sz} implies
\begin{align*}
    \eta \Delta_{\max}=\teta \kappa_\Delta \leq \teta \kappa_* < 1, \quad \text{and}\quad \teta \kappa_\Delta^2 \to 0.
\end{align*}
Note that $\teta = c_\eta\frac{\log n}{n}$ with $c_\eta \geq 1$ implies that $(1-\teta)^{2n}\lesssim \teta$, thus $\epsilon_n = (1-\teta)^{2n} + \teta \kappa_\Delta^2 \to 0$, and hence $< \frac{1}{2}$ for $n$ large enough.

\paragraph{Step 1: Gaussian approximation of $\Fnorm{\sum_{i=1}^n \xi_i}^2$.}
By the triangle inequality for the Kolmogorov distance $d_K$,

\begin{equation}\label{eqn:highD_step1_triangle}
    d_K\! \rbr[\Big]{\Fnorm[\Big]{\sum_{i=1}^n \xi_i}^2,\Fnorm{\mathsf{G}}^2}
    \;\le\; 
    d_K\! \rbr[\Big]{\Fnorm[\Big]{\sum_{i=1}^n \xi_i}^2,\;\Fnorm[\Big]{\sum_{i=1}^n g_i}^2}
    + d_K\! \rbr[\Big]{\Fnorm[\Big]{\sum_{i=1}^n g_i}^2,\;\Fnorm{\mathsf{G}}^2}
\end{equation}

\textbf{First term in \eqref{eqn:highD_step1_triangle}.}

\begin{lemma} \label{lem:highD_check}
    Assume Assumption~\ref{assumption:X} holds.
    Let $L_q^{\xi}$, $L_q^{\fg}$, $K_\beta$, and $J_n$ be defined as in Lemma~\ref{lem:clt_biometrika19} for $q \in (2,3]$ and $\beta \geq 2$.
    If
    \begin{align*}
        \eta \Delta_{\max} \in (0,1), \quad \epsilon_n:=(1-\teta)^{2n} + \teta \kappa_\Delta^2 \leq \frac{1}{2}, \quad \HS{\fC} > 0,
    \end{align*}
    then
    \begin{align*}
        L_{q}^{\xi} &\lesssim \teta^{q/2-1}\kappa_\Delta^{q/2}d_C^{q/4} + \teta^{q-2}\kappa_\Delta^q d_C^{q/2},\\
        % \red{XXX} \kappa_\Delta^3 \sbr{\teta^{1/2} r^{3/2} + \teta} \lambda_\times^6 \HS{\fC}^{-3},\\
        L_q^{\fg} &\lesssim \teta^{q/2-1} \kappa_\Delta^{q/2},\\
        n^{1-\beta} K_\beta^{\beta} &\lesssim \teta^{\beta-1}\kappa_\Delta^\beta d_{\fC}^{\beta/2},\\
        J_n &\lesssim \teta \kappa_\Delta^2 d_{\fC}.
    \end{align*}
    In particular, for $q=\beta=3$, a sufficient condition for all four terms to vanish is 
    \begin{align}
        % \teta^{1/2} \kappa_\Delta^3 r^{3/2} d_{\fC}^{3/2} \to 0. 
        \teta\kappa_\Delta^3d_{\fC}^{3/2}\to0. \label{eqn:highD_step1_cond}
    \end{align}
\end{lemma}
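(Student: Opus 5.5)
We bound each of the four quantities separately, using three facts: the coordinate structure of $\xi_i=\sqrt{\eta}(\cI-\eta\cL)^{n-i}(Z_iY_i^\top)$; the moment bounds of Lemma~\ref{lem:Gamma_properties}(v)--(vi); and a lower bound on $\HS{\Gamma_n}$. The lower bound comes first. Because $\epsilon_n\le 1/2$, Lemma~\ref{lem:Gamma_properties}(iv) in the Hilbert--Schmidt norm gives $\HS{\Gamma_n}\ge \tfrac12\HS{\Gamma}$. Lemma~\ref{lem:Gamma_properties}(ii) then gives $\HS{\Gamma}\ge \HS{\fC}/(2\Delta_{\max})$, so $\HS{\Gamma_n}\gtrsim \HS{\fC}/\Delta_{\max}$. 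Every bound below is a ratio whose denominator is a power of this quantity. Converting $\Delta_{\max}/\Delta_{\min}=\kappa_\Delta$ and $\lambda_\times^4/\HS{\fC}^2=d_{\fC}$ should produce exactly the stated forms. For the numerators, Lemma~\ref{lem:Gamma_properties}(vi) gives $\EE\Fnorm{ZY^\top}^{p}\lesssim \lambda_\times^{p}$ for each fixed $p$, since $\lambda_\times=[\tr\Lambda_*\tr\Lambda_{*,\perp}]^{1/2}$.

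\textbf{$L_q^{\fg}$ and $J_n$.} Write $\fg_i=\Xi_i^{1/2}\fg$ with $\fg$ standard Gaussian. Then $\Finner{\fg_i}{\Gamma_n\fg_i}=\fg^\top(\Xi_i^{1/2}\Gamma_n\Xi_i^{1/2})\fg$, and Lemma~\ref{lem:Latala} bounds its $q/2$ moment by $[\tr(\Xi_i\Gamma_n)]^{q/2}\le (\HS{\Xi_i}\HS{\Gamma_n})^{q/2}$. We have $\HS{\Xi_i}\le \eta(1-\teta)^{2(n-i)}\HS{\fC}$. Summing the geometric series, $\sum_i \HS{\Xi_i}^{q/2}\lesssim \eta^{q/2-1}\Delta_{\min}^{-1}\HS{\fC}^{q/2}$. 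Dividing by $\HS{\Gamma_n}^{q/2}\gtrsim (\HS{\fC}/\Delta_{\max})^{q/2}$ gives $\teta^{q/2-1}\kappa_\Delta^{q/2}$, with no dependence on $d_{\fC}$. For $J_n$, use $\fv_i\le \EE\Fnorm{\xi_i}^4$, Lemma~\ref{lem:Gamma_properties}(v) with $q=4$, and the moment bound $\lambda_\times^4$. This gives $\sum_i\fv_i\lesssim \teta\Delta_{\min}^{-2}\lambda_\times^4$. Dividing by $\HS{\Gamma_n}^2$ yields $\teta\kappa_\Delta^2 d_{\fC}$. The term $n^{1-\beta}K_\beta^\beta=\sum_i\EE|\Fnorm{\xi_i}^2-\EE\Fnorm{\xi_i}^2|^\beta/\HS{\Gamma_n}^\beta$ is handled identically. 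Bound the numerator by $2^\beta\sum_i\EE\Fnorm{\xi_i}^{2\beta}\lesssim \teta^{\beta-1}\Delta_{\min}^{-\beta}\lambda_\times^{2\beta}$. This gives $\teta^{\beta-1}\kappa_\Delta^\beta d_{\fC}^{\beta/2}$.

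\textbf{$L_q^{\xi}$.} For the diagonal part, use $\Finner{\xi_i}{\Gamma_n\xi_i}\le \Onorm{\Gamma_n}\Fnorm{\xi_i}^2\le \HS{\Gamma_n}\Fnorm{\xi_i}^2$. Then apply Lemma~\ref{lem:Gamma_properties}(v) with exponent $q$, which gives $\teta^{q/2-1}\Delta_{\min}^{-q/2}\lambda_\times^{q}/\HS{\Gamma_n}^{q/2}\lesssim \teta^{q/2-1}\kappa_\Delta^{q/2}d_{\fC}^{q/4}$. For the cross part, the prefactor $n^2/\binom n2\le 4$ is harmless. Condition on $\xi_j$ and use independence: $\EE|\Finner{\xi_i}{\xi_j}|^q\le (\EE\Fnorm{\xi_i}^q)(\EE\Fnorm{\xi_j}^q)$ by Cauchy--Schwarz. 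Summing over pairs then gives at most $(\sum_i\EE\Fnorm{\xi_i}^q)^2\lesssim (\teta^{q/2-1}\Delta_{\min}^{-q/2}\lambda_\times^q)^2$. Dividing by $\HS{\Gamma_n}^q$ produces $\teta^{q-2}\kappa_\Delta^q d_{\fC}^{q/2}$.

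The main obstacle is making sure that crude Cauchy--Schwarz/Frobenius bounds suffice in the $L_q^\xi$ terms, rather than finer bounds that exploit $\Gamma_n$. A cruder alternative, such as $\Onorm{\Gamma_n}\le \tr\Gamma_n$, would spoil the $d_{\fC}$ exponent. The key is to use $\Onorm{\Gamma_n}\le\HS{\Gamma_n}$, and then to check that the powers match. Finally, set $q=\beta=3$. Since $d_{\fC}\gtrsim1$ and $\kappa_\Delta\ge1$, every bound is dominated by a power of $\teta\kappa_\Delta^3 d_{\fC}^{3/2}$ or by $\sqrt{\teta\kappa_\Delta}$. For example, $\teta^{1/2}\kappa_\Delta^{3/2}d_{\fC}^{3/4}=(\teta\kappa_\Delta^3d_{\fC}^{3/2})^{1/2}$, and $\teta\kappa_\Delta^2d_{\fC}\le(\teta\kappa_\Delta^3d_{\fC}^{3/2})^{2/3}\teta^{1/3}$. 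Hence \eqref{eqn:highD_step1_cond} forces all four terms to vanish.
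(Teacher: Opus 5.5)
Your proposal is correct and takes essentially the same route as the paper. Both arguments rest on the lower bound $\HS{\Gamma_n}\gtrsim\HS{\fC}/\Delta_{\max}$ from Lemma~\ref{lem:Gamma_properties}(ii),(iv), and then control $L_{q,1}^{\xi}$, $K_\beta$ and $J_n$ via $\Onorm{\Gamma_n}\le\HS{\Gamma_n}$ and Lemma~\ref{lem:Gamma_properties}(v)--(vi), $L_q^{\fg}$ via Lemma~\ref{lem:Latala} and $\tr(\Gamma_n\Xi_i)\le\HS{\Gamma_n}\HS{\Xi_i}$, and the cross term via independence and a squared geometric sum (you use $\EE\Fnorm{\xi_i}^q\,\EE\Fnorm{\xi_j}^q$ where the paper uses the cruder $\EE\Fnorm{ZY^\top}^{2q}$, with the same result). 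Your aside about $\sqrt{\teta\kappa_\Delta}$ is unnecessary, since $L_3^{\fg}\lesssim\teta^{1/2}\kappa_\Delta^{3/2}\lesssim(\teta\kappa_\Delta^3d_{\fC}^{3/2})^{1/2}$ already follows from $d_{\fC}\gtrsim1$, exactly as in the paper's final step.
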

\begin{proof}
    See Appendix~\ref{sec:prf_highD_check}.
\end{proof}

Since $\kappa_* \geq \kappa_\Delta \geq 1$, Condition~\eqref{eqn:highD_clt_cond}(i) implies that \eqref{eqn:highD_step1_cond} holds, thus the first term in \eqref{eqn:highD_step1_triangle} goes to zero.

% We now show $\Phi_{1,1}, \Phi_{1,2}, \Phi_{1,3} \to 0$.

% \textit{(i) $\Phi_{1,1}$.} Since $\kappa_* \geq \kappa_\Delta \geq 1$ and $d_{\fC}\geq 1$, $\Phi_{1,1} \to 0$ is implied by \eqref{eqn:highD_clt_cond}(i).

% \textit{(ii) $\Phi_{1,2}$.} Since $\lambda_1^3 \nu_\infty^3 = \lambda_1^{3/2} \lambda_{r+1}^{3/2} \leq \lambda_\times^3$, we have
% \begin{align*}
%     \Phi_{1,2} \leq \sqrt{\teta} \kappa_*^3 r^{3/2} \frac{\lambda_\times^6}{\HS{\fC}^3} = \sqrt{\teta} \kappa_*^3 r^{3/2} d_{\fC}^{3/2}
% \end{align*}
% Thus $\Phi_{1,2} \to 0$ is implied by \eqref{eqn:highD_clt_cond}(i).

% \textit{(iii) $\Phi_{1,3}$.} Since $\teta \leq 1$ by \eqref{eqn:cond_sample_sz}, $\Phi_{1,3} \to 0$ is also implied by \eqref{eqn:highD_clt_cond}(i).

\textbf{Second term in \eqref{eqn:highD_step1_triangle}.}

\begin{lemma} \label{lem:highD_chi2_mixture_comparison}
    Assume Assumption~\ref{assumption:X} holds and $\HS{\fC}>0$.
    If $\eta \Delta_{\max} \in (0,1)$,
    then
    \begin{align*}
        d_K\! \rbr[\Big]{\Fnorm[\Big]{\sum_{i=1}^n g_i}^2,\;\Fnorm{\mathsf{G}}^2} \lesssim
        \sbr{\epsilon_n (\log n) \kappa_\Delta \frac{C_\psi^4 \lambda_{1\sim r}^2 \nu_2^2}{\|\fC\|_{\mathrm{HS}}}}^{1/2} +n^{-20},
    \end{align*}
    where $\epsilon_n := (1-\teta)^{2n} + \teta \kappa_\Delta^2$.

\end{lemma}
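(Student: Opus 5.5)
We invoke the $\chi^2$ mixture comparison, Lemma~\ref{lem:chi2_mixture_comparison}, with $\fg=\sum_{i=1}^n g_i\sim\cN(0,\Gamma_n)$ and $\tfg=\mathsf{G}\sim\cN(0,\Gamma)$. The $g_i$ are independent with covariances $\Xi_i$, so the sum has covariance $\Gamma_n$. Set $\Delta_\Gamma:=\Gamma_n-\Gamma$ and use $\Gamma_n$ as the reference operator in the denominator. This gives, for every $t>0$,
\begin{align*}
  d_K \lesssim \sqrt{\frac{\abs{\tr\Delta_\Gamma}+t}{\HS{\Gamma_n}}}+\exp\cbr[\Big]{-c\rbr[\Big]{\frac{t^2}{\HS{\Delta_\Gamma}^2}\bigwedge\frac{t}{\Onorm{\Delta_\Gamma}}}}.
\end{align*}
The choice of $t$ is dictated by the exponential term. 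We take $t\asymp \HS{\Delta_\Gamma}\sqrt{\log n}+\Onorm{\Delta_\Gamma}\log n$ with a large enough constant. Then the exponential is at most $n^{-20}$, which produces the additive $n^{-20}$ in the claimed bound.

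\textbf{Bounding the perturbations.} By Lemma~\ref{lem:Gamma_properties}(iv), applied with the trace, Hilbert--Schmidt and operator norms, we have $\abs{\tr\Delta_\Gamma}\le\epsilon_n\tr\Gamma$, $\HS{\Delta_\Gamma}\le\epsilon_n\HS{\Gamma}$ and $\Onorm{\Delta_\Gamma}\le\epsilon_n\Onorm{\Gamma}$. Since $\Onorm{\Gamma}\le\HS{\Gamma}\le\tr\Gamma$, this gives
\begin{align*}
  \abs{\tr\Delta_\Gamma}+t\lesssim \epsilon_n(\log n)\tr\Gamma .
\end{align*}
Lemma~\ref{lem:Gamma_properties}(ii) with the nuclear norm gives $\tr\Gamma\le\tr\fC/(2\Delta_{\min})$. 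Lemma~\ref{lem:Gamma_properties}(vi) with $q=1$ gives
\begin{align*}
  \tr\fC=\EE\Fnorm{ZY^\top}^2\lesssim C_\psi^4\lambda_{1\sim r}\lambda_{r+1\sim d}=C_\psi^4\lambda_{1\sim r}^2\nu_2^2 .
\end{align*}
For the denominator we need a lower bound on $\HS{\Gamma_n}$. We use $\HS{\Gamma_n}\ge(1-\epsilon_n)\HS{\Gamma}\ge\HS{\Gamma}/2$, valid once $\epsilon_n\le1/2$; if $\epsilon_n>1/2$ the bound is trivial after adjusting constants. Lemma~\ref{lem:Gamma_properties}(ii) with the Hilbert--Schmidt norm then gives $\HS{\Gamma}\ge\HS{\fC}/(2\Delta_{\max})$.

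Putting these together, the square-root term is at most a constant times
\begin{align*}
  \sbr[\Big]{\epsilon_n(\log n)\,\frac{\Delta_{\max}}{\Delta_{\min}}\,\frac{C_\psi^4\lambda_{1\sim r}^2\nu_2^2}{\HS{\fC}}}^{1/2},
\end{align*}
which is the claimed bound since $\kappa_\Delta=\Delta_{\max}/\Delta_{\min}$.

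\textbf{Main obstacle.} The only delicate point is the hypothesis of Lemma~\ref{lem:chi2_mixture_comparison}. It must be applied with the positive-HS reference $\Gamma_n$, or symmetrically with $\Gamma$, and the two Hilbert--Schmidt norms must be comparable. We guarantee this by the $\epsilon_n\le1/2$ case split above, so that one lower bound $\HS{\cdot}\gtrsim\HS{\fC}/\Delta_{\max}$ controls the denominator whichever reference is used. Everything else is a direct combination of the earlier lemmas.
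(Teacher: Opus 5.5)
Your proof is correct and takes essentially the same route as the paper. You apply Lemma~\ref{lem:chi2_mixture_comparison} to $\Gamma_n$ versus $\Gamma$, choose $t$ of order $\log n$ times the size of $\Delta_\Gamma=\Gamma_n-\Gamma$ so the exponential term is at most $n^{-20}$, and then combine Lemma~\ref{lem:Gamma_properties}(ii), (iv) and (vi).

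The differences are cosmetic. The paper takes $t=C(\log n)\|\Delta_\Gamma\|_*$, and it uses $\Gamma$ itself as the reference covariance in the denominator, which is legitimate since the comparison is symmetric in the two Gaussians. So your $\epsilon_n\le 1/2$ case split is unnecessary, though harmless.
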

\begin{proof}
    See Appendix~\ref{sec:prf_chi2_mixture_comparison}.
\end{proof}

Note that for $\teta = n^{-1} c_\eta \log n$ with $c_\eta \geq \frac{1}{2}$, we have $(1-\teta)^{2n} \leq e^{-\log n} = n^{-1} \lesssim \teta$. Thus, in order for $d_K\! \rbr[\Big]{\Fnorm[\Big]{\sum_{i=1}^n g_i}^2,\;\Fnorm{\mathsf{G}}^2} \to 0$, it suffices that
\begin{align*}
    \teta \kappa_\Delta^3 (\log n)  \frac{\lambda_\times^2}{\HS{\fC}} \to 0.
    % \label{eqn:highD_clt_term4}
\end{align*}
which is implied by \eqref{eqn:highD_clt_cond}(i), $\teta\leq 1$ and $d_{\fC} \gtrsim 1$.

\paragraph{Step 2: Gaussian approximation of $\eta^{-1}\Fnorm{\cT_n}^2$.}

\begin{lemma}
    Suppose the assumptions of Theorem~\ref{thm:highD_clt} hold. Then
    \begin{align*}
        d_K\! \rbr[\bigg]{\Fnorm[\Big]{\frac{1}{\sqrt{\eta}}\cT_n}^2,\;\Fnorm{\mathsf{G}}^2} \lesssim d_K \! \rbr[\Big]{\Fnorm[\Big]{\sum_{i=1}^n \xi_i}^2,\;\Fnorm{\mathsf{G}}^2} + \sqrt{\frac{e_n (e_n+ L_n)}{\HS{\Gamma}}} + \delta_p,
    \end{align*}
    where $e_n = C_e \Delta_{\min}^{1/2} \kappa_*^2 r^{3/2} \sqrt{\teta}\;  (\log n)  \bnu\sbr{1 + \bnu}$, $L_n = \sbr{C_L \tr (\Gamma) \log n}^{1/2}$ for some constants $C_e, C_L > 0$ large enough.
    \label{lem:highD_step2}
\end{lemma}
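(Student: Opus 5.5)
We write $\eta^{-1/2}\cT_n = \sum_{i=1}^n \xi_i + \mathsf{E}_n$ with $\mathsf{E}_n := \eta^{-1/2}\sbr{\cT_n - \eta\sum_{i=1}^n(\cI-\eta\cL)^{n-i}(Z_iY_i^\top)}$. We then apply Lemma~\ref{lem:g_approx_additive_remainder} in $\cH$ with $\mathsf{A}_n = \sum_i \xi_i$, $\mathsf{B}_n = \eta^{-1/2}\cT_n$ and $\mathsf{G}_n = \mathsf{G}\sim\cN(0,\Gamma)$. This bounds the Kolmogorov distance by
\[
2\delta_n + w_n(h_n) + \PP\{\Fnorm{\mathsf{E}_n}>e_n\} + \PP\{\Fnorm{\mathsf{G}}>L_n\},
\]
where $\delta_n = d_K(\Fnorm{\sum_i\xi_i}^2,\Fnorm{\mathsf{G}}^2)$ is the first term of the claim. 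It then suffices to control the other three terms.

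\textbf{Remainder term.} Proposition~\ref{prop:linearization} gives, with probability at least $1-\delta_p$, operator-norm control $\lesssim \tau_*\kappa_*\sqrt{r\teta\log n}(1+\bnu)$. We pass to Frobenius norm at the cost of a factor $\sqrt r$, since the matrices are $(d-r)\times r$. We then divide by $\sqrt\eta = \sqrt{\teta/\Delta_{\min}}$. Substituting $\tau_*\asymp\kappa_*\bnu\sqrt{\teta r\log n}$ gives
\[
\Fnorm{\mathsf{E}_n}\lesssim \Delta_{\min}^{1/2}\kappa_*^2 r^{3/2}\sqrt{\teta}(\log n)\bnu(1+\bnu) = e_n
\]
for $C_e$ large. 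Hence $\PP\{\Fnorm{\mathsf{E}_n}>e_n\}\le\delta_p$. The bookkeeping of powers of $r$, $\log n$ and $\Delta_{\min}$ has to be checked carefully; this is the only place where constants matter.

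\textbf{Gaussian tail.} Write $\Fnorm{\mathsf{G}}^2 = \fg^\top\Gamma\fg$ with $\fg\sim\cN(0,\cI)$. Lemma~\ref{lem:chi2_concentration} gives
\[
\Fnorm{\mathsf{G}}^2\le \tr\Gamma + C(\HS{\Gamma}\sqrt{\log n}+\Onorm{\Gamma}\log n)\le C_L\tr(\Gamma)\log n
\]
with probability at least $1-n^{-20}$, using $\HS{\Gamma},\Onorm{\Gamma}\le\tr\Gamma$. So $\PP\{\Fnorm{\mathsf{G}}>L_n\}\le n^{-20}\le\delta_p$, since $\delta_p\ge 1/n$ as in \eqref{eqn:cond_delta_p}.

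\textbf{Anti-concentration (main step).} Diagonalize $\Gamma$ so that $\Fnorm{\mathsf{G}}^2=\HS{\Gamma}\sum_k a_k\omega_k$, where $a_k=\mu_k/\HS{\Gamma}$ are the normalized eigenvalues with $\sum_k a_k^2=1$ and the $\omega_k$ are i.i.d.\ $\chi^2_1$. Lemma~\ref{lem:chi2_anti_concentration} then gives
\[
w_n(h)\le\sqrt{4h/(\pi\HS{\Gamma})}.
\]
With $h_n=2L_ne_n+e_n^2\le 2e_n(e_n+L_n)$ this becomes $w_n(h_n)\lesssim\sqrt{e_n(e_n+L_n)/\HS{\Gamma}}$, which is the middle term of the claim.

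\textbf{Conclusion.} Summing the pieces gives the stated bound, with $2\delta_n$ absorbed into $\lesssim$. The only genuinely nontrivial input is the linearization; the conversion from operator to Frobenius norm and the resulting scale of $e_n$ is the step most likely to require adjusting constants or exponents.
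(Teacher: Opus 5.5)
Your proposal is correct and follows the paper's proof essentially step for step. The paper likewise applies Lemma~\ref{lem:g_approx_additive_remainder} with $\mathsf{A}_n=\sum_i\xi_i$ and the linearization remainder $\mathsf{E}_n$, and bounds its three terms the same way:
\begin{itemize}
\item $\PP\{\Fnorm{\mathsf{E}_n}>e_n\}\le\delta_p$ via Proposition~\ref{prop:linearization} together with the $\sqrt r$ operator-to-Frobenius conversion;
\item $\PP\{\Fnorm{\mathsf{G}}>L_n\}\le n^{-20}$ via Lemma~\ref{lem:chi2_concentration};
\item $w_n(h_n)\lesssim\sqrt{h_n/\HS{\Gamma}}$ via Lemma~\ref{lem:chi2_anti_concentration}.
\end{itemize}
Your normalization $a_k=\mu_k/\HS{\Gamma}$, which relies on $\HS{\Gamma}>0$ (this follows from $\HS{\fC}>0$), only spells out what the paper leaves implicit. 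The same is true of your absorption of $n^{-20}$ into $\delta_p$ using $\delta_p\ge 1/n$.
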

\begin{proof}
    See Appendix~\ref{sec:prf_highD_step2}.
\end{proof}

We already have $d_K \! \rbr[\Big]{\Fnorm[\Big]{\sum_{i=1}^n \xi_i}^2,\;\Fnorm{\mathsf{G}}^2} \to 0$ from Step 1. It suffices to show $\frac{e_n(e_n+L_n)}{\HS{\Gamma}} \to 0$, or equivalently, 
\begin{align*}
    \frac{e_n^2}{\HS{\Gamma}} &\asymp \kappa_*^4 r^3 {\teta}\;  (\log n)^2  \bnu^2\sbr{1 + \bnu}^2 \frac{\Delta_{\min}}{\HS{\Gamma}}  \to 0, \\
    \frac{e_n L_n}{\HS{\Gamma}} &\asymp \kappa_*^2 r^{3/2} \sqrt{\teta}\;  (\log n)^{3/2}  \bnu\sbr{1 + \bnu} \frac{\Delta_{\min}^{1/2} \sqrt{\tr \Gamma}}{\HS{\Gamma}} \to 0.
\end{align*}
By Lemma~\ref{lem:Gamma_properties}(ii) and (vi), we have $\HS{\Gamma} \gtrsim \frac{\HS{\fC}}{\Delta_{\max}}$ and 
\begin{align*}
    \tr \Gamma &\lesssim \frac{\tr \fC}{\Delta_{\min}} = \frac{\EE \Fnorm{ZY^\top}^2}{\Delta_{\min}} \lesssim \frac{\lambda_{1\sim r} \lambda_{r+1 \sim d}}{\Delta_{\min}}
\end{align*}
Thus, it suffices that
\begin{align*}
    \Phi_{2,1}&:=\kappa_*^4 r^3 {\teta}\;  (\log n)^2  \bnu^2\sbr{1 + \bnu}^2 \frac{\Delta_{\min} \Delta_{\max}}{\HS{\fC}}  \to 0\\
    \Phi_{2,2}&:=\kappa_*^2 r^{3/2} \sqrt{\teta}\;  (\log n)^{3/2}  \bnu\sbr{1 + \bnu} \frac{\Delta_{\max} \lambda_{1\sim r} \nu_2 }{\HS{\fC}} \to 0
\end{align*}

We now show $\Phi_{2,1}, \Phi_{2,2} \to 0$.

\textit{(i) $\Phi_{2,1}$.} First, note that $\lambda_1^2\bnu^2  \leq \max(\lambda_1^2 \nu_\infty^2, \lambda_1^2 \nu_2^2) \leq \lambda_\times^2$, thus using $\Delta_{\min} \leq \Delta_{\max}\leq \lambda_1$ and $(1+\bnu)^2 \lesssim 1+\bnu^2$ gives
\begin{align*}
    \Phi_{2,1} &\leq \kappa_*^4 r^2 \teta (\log n)^2 \sbr{1+\bnu}^2 \frac{\lambda_\times^2}{\HS{\fC}}\\
    &\lesssim \kappa_*^4 r^2 \teta (\log n)^2 d_{\fC}^{1/2} + \kappa_*^4 r^2 \teta (\log n)^2 \bnu^2 d_{\fC}^{1/2}.
\end{align*}
where \eqref{eqn:highD_clt_cond}(i) and \eqref{eqn:highD_clt_cond}(ii) ensure each term $\to 0$ on the last line.

\textit{(ii) $\Phi_{2,2}$.} Using $\Delta_{\max} \bnu \leq \lambda_\times$, we have
\begin{align*}
    \Phi_{2,2} &\leq \kappa_*^2 r^{3/2} \sqrt{\teta}\;  (\log n)^{3/2} \sbr{1 + \bnu} d_{\fC}^{1/2}\\
    &= \kappa_*^2 r^{3/2} \sqrt{\teta}\;  (\log n)^{3/2} d_\fC^{1/2} + \kappa_*^2 r^{3/2} \sqrt{\teta}\;  (\log n)^{3/2} \bnu d_{\fC}^{1/2}
\end{align*}
where \eqref{eqn:highD_clt_cond}(i) and \eqref{eqn:highD_clt_cond}(ii) ensure each term $\to 0$ on the last line.

\paragraph{Step 3: Gaussian approximation of $\eta^{-1}\Fnorm{\sin \Theta(U_n, U_*)}^2$.}

\begin{lemma}
    Suppose the assumptions of Theorem~\ref{thm:highD_clt} hold. Then
    \begin{align*}
        d_K\! \rbr[\bigg]{\Fnorm[\Big]{\frac{1}{\sqrt{\eta}}\sin \Theta(U_n, U_*)}^2,\;\Fnorm{\mathsf{G}}^2} \lesssim d_K \! \rbr[\Big]{\Fnorm[\Big]{\frac{1}{\sqrt{\eta}} \cT_n}^2 ,\;\Fnorm{\mathsf{G}}^2} + \tau_*  \sqrt{\frac{L_n^2}{\HS{\Gamma}}} + \delta_p,
    \end{align*}
    where $L_n = \sqrt{C_L\sbr{\tr \Gamma} \log n}$ for some constant $C_L > 0$ large enough.
    \label{lem:highD_step3}
\end{lemma}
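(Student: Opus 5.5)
The plan is to write $\eta^{-1/2}\sin\Theta(U_n,U_*)$ as a multiplicative perturbation of $\eta^{-1/2}\cT_n$ and then invoke Lemma~\ref{lem:g_approx_multiplicative_remainder}.

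By Lemma~\ref{lem:alignment_diff}, the Frobenius norm of $\sin\Theta(U_n,U_*)$ equals that of $\cS_n=\cT_n\cC_n$. Writing $\cS_n=\cT_n(\cC_n\cC_n^\top)^{1/2}Q$ with $Q$ orthogonal, we get $\Fnorm{\sin\Theta(U_n,U_*)}=\Fnorm{\cT_n(\cC_n\cC_n^\top)^{1/2}}$. Since $\cC_n^\top\cC_n=(I_r+\cT_n^\top\cT_n)^{-1}$, the symmetric factor $(\cC_n\cC_n^\top)^{1/2}$ shares its spectrum with $(I_r+\cT_n^\top\cT_n)^{-1/2}$. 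We then vectorize in $\cH$ and set $\mathsf A_n=\eta^{-1/2}\cT_n$ and $\mathsf B_n=(I+\mathsf E_n)\mathsf A_n$, where $\mathsf E_n$ is right multiplication by $(\cC_n\cC_n^\top)^{1/2}-I_r$. Its operator norm on $\cH$ is at most $1-(1+\norm{\cT_n}^2)^{-1/2}\le \norm{\cT_n}^2$.

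On the high-probability event of Theorem~\ref{thm:convergence_rate_optimal} and Proposition~\ref{prop:linearization}, we have $\norm{\cT_n}\lesssim\tau_*$. Hence $\PP(\norm{\mathsf E_n}>e_n)\le\delta_p$ with $e_n\asymp\tau_*^2\le\tau_*\le 1/2$, where $\tau_*\to0$ by \eqref{eqn:tau_leq1}. In Lemma~\ref{lem:g_approx_multiplicative_remainder}, $\delta_n$ is the Kolmogorov distance for $\eta^{-1}\Fnorm{\cT_n}^2$, and $h_n=[(1-e_n)^{-2}-1]L_n^2\lesssim e_nL_n^2\lesssim\tau_* L_n^2$. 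This relaxation from $\tau_*^2$ to $\tau_*$ is harmless and matches the stated bound.

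Two ingredients remain. \emph{First}, for the tail term, take $L_n^2=C_L\tr(\Gamma)\log n$. Applying Lemma~\ref{lem:chi2_concentration} to $\mathsf G=\Gamma^{1/2}\fg$, together with $\HS{\Gamma},\Onorm{\Gamma}\le\tr\Gamma$, gives $\PP(\Fnorm{\mathsf G}>L_n)\le n^{-20}$, which is absorbed into $\delta_p$ since $\delta_p\ge1/n$. \emph{Second}, for the anti-concentration term $w_n(h_n)$, diagonalize $\Gamma$ to write $\Fnorm{\mathsf G}^2=\sum_k\mu_k\omega_k$ with $\omega_k$ i.i.d.\ $\chi^2_1$. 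Normalizing by $\HS{\Gamma}=(\sum\mu_k^2)^{1/2}$ and applying Lemma~\ref{lem:chi2_anti_concentration} yields $w_n(h)\le\sqrt{4h/(\pi\HS{\Gamma})}$. Combining the pieces gives $d_K\lesssim\delta_n+\sqrt{\tau_*L_n^2/\HS{\Gamma}}+\delta_p$, which has the stated form up to the placement of $\tau_*$ inside or outside the square root.

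The main obstacle is this placement of $\tau_*$. The claimed bound has $\tau_*$ outside the root, $\tau_*\sqrt{L_n^2/\HS{\Gamma}}$, which is not what the argument above naturally produces. One option is to read the statement with $\tau_*$ inside the root. The other is to obtain a sharper $e_n\asymp\tau_*^2$-based estimate: then $h_n\lesssim\tau_*^2L_n^2$ and $\sqrt{h_n/\HS{\Gamma}}\lesssim\tau_*\sqrt{L_n^2/\HS{\Gamma}}$ exactly. Since $e_n\asymp\tau_*^2$ is in fact what the bound on $\mathsf E_n$ gives, I would take that route and obtain the statement exactly.

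The only remaining care is verifying the identity $\Fnorm{\sin\Theta}=\Fnorm{\cT_n(\cC_n\cC_n^\top)^{1/2}}$ and the spectral bound on $\mathsf E_n$ via Lemma~\ref{lem:alignment_diff}.
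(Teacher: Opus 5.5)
Your final route is exactly the paper's proof through Lemma~\ref{lem:g_approx_multiplicative_remainder}, and it gives $\tau_*$ outside the square root as stated. It takes $\mathsf E_n$ to be right multiplication by $(\cC_n\cC_n^\top)^{1/2}-I_r$ with $e_n\asymp\tau_*^2$, controls $\PP\{\Fnorm{\mathsf G}>L_n\}$ by Hanson--Wright, and bounds $w_n(h_n)\lesssim\sqrt{e_nL_n^2/\HS{\Gamma}}$ by the weighted $\chi^2$ anti-concentration lemma.

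There are two small slips, neither of which affects the argument. First, relaxing $e_n$ to $\tau_*$ would give $\sqrt{\tau_*}$; it would not ``match the stated bound.'' Second, the exact identity is $\cC_n\cC_n^\top=(I_r+\cT_n^\top\cT_n)^{-1}$, not $\cC_n^\top\cC_n=(I_r+\cT_n^\top\cT_n)^{-1}$, although you only use the shared spectrum.
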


\begin{proof}
    See Appendix~\ref{sec:prf_highD_step3}.
\end{proof}

We have $d_K \! \rbr[\big]{\|\frac{1}{\sqrt{\eta}} \cT_n\|_{\rm F}^2 ,\;\Fnorm{\mathsf{G}}^2} \to 0$ from step 2. It suffices to show $\tau_*^2 \tr(\Gamma) (\log n)/ \HS{\Gamma} \to 0$.

Recall $\tau_* \asymp  \kappa_*  \bnu \sqrt{\teta r \log n}$. By Lemma~\ref{lem:Gamma_properties}(ii) and (vi), we have $\HS{\Gamma} \gtrsim \frac{\HS{\fC}}{\Delta_{\max}}$ and 
\begin{align*}
    \tr \Gamma &\lesssim \frac{\tr \fC}{\Delta_{\min}} = \frac{\EE \Fnorm{ZY^\top}^2}{\Delta_{\min}} \lesssim \frac{\lambda_{1\sim r} \lambda_{r+1 \sim d}}{\Delta_{\min}}
\end{align*}
Thus, it suffices that
\begin{align*}
    \kappa_*^2 \bnu^2 \teta r (\log n)^2 \cdot \kappa_\Delta \frac{\lambda_\times^2}{\HS{\fC}} \to 0.
\end{align*}
which is implied by \eqref{eqn:highD_clt_cond}(ii). 

% ------------------------------------------------------

\subsection{Proof of Theorem~\ref{thm:row_clt}}
\label{sec:prf_row_clt}

For notation simplicity, we drop the subscript $n$ and write $U_n$ as $U$, $\cC_n$ as $\cC$, and $\cT_n$ as $\cT$. Recall the definition of $\xi_i$ from \eqref{eqn:xi_defn}
\begin{align*}
    \xi_i := \sqrt{\eta} (\cI - \eta\cL)^{n-i} (Z_i Y_i^\top) \in \RR^{(d-r) \times r}.
\end{align*}

Denote
\begin{align*}
    E = \cT - \sqrt{\eta} \sum_{i=1}^n \xi_i \in \RR^{(d-r) \times r}.
\end{align*}
Then \cref{prop:linearization} implies that
\begin{align*}
    \norm{E} &\lesssim \frac{\tau_*}{n^\varepsilon \log n}.
\end{align*}

By Lemma~\ref{lem:alignment_diff}, we can obtain the decomposition
\begin{align*}
    U \sgn(U^\top U_*) - U_* 
    &= \cW + \Psi
\end{align*}
where the main term $\cW$ and remainder $\Psi$ are defined as
\begin{align*}
    \cW &:= U_{*,\perp} \sqrt{\eta} \sum_{i=1}^n \xi_i \in \RR^{d\times r},\\
    \Psi &:= U_*\sbr[\big]{(\cC \cC^\top)^{1/2} - I_r} + U_{*,\perp}\cT \sbr[\big]{(\cC \cC^\top)^{1/2} - I_r} + U_{*,\perp}E \in \RR^{d\times r}.
\end{align*}
and
\begin{align}
    \norm{(\cC \cC^\top)^{1/2} - I_r}\lesssim \norm{\cT}^2 \lesssim \tau_*^2 \lesssim \kappa_*^2 r \teta (\log n) \bnu^2. \label{eqn:row_clt_C_I}
\end{align}

Fix $m \in [d]$ and define
\begin{align*}
    u_m := U_{*,\perp}^\top e_m \in \RR^{d-r}, \qquad \tu_m:= \Lambda_{*,\perp}^{1/2} u_m, \qquad \Sigma^*_{U,m} := \eta \Gamma_{u_m}.
\end{align*}
where $\Gamma_{u_m}$ and $\tGamma_{\tu_m}$ are defined as in \eqref{eqn:Gamma_u_defn} and \eqref{eqn:whitened_defn}, respectively.

Let $e_m$ denote the $m$-th standard basis vector in $\RR^d$. Then the $m$-th row of $U \sgn(U^\top U_*) - U_*$, identified as a column vector via
\begin{align*}
    \sbr{U \sgn(U^\top U_*) - U_*}_{m,\cdot} \leftrightarrow \sbr{U \sgn(U^\top U_*) - U_*}^\top e_m \in \RR^r,
\end{align*}
satisfies
\begin{equation}
    \label{eqn:row_clt_decomposition}
    \begin{aligned}
        &\sbr{U \sgn(U^\top U_*) - U_*}^\top e_m = \cW^\top e_m + \Psi^\top e_m\\
        \text{where} \qquad &\cW^\top e_m = \sqrt{\eta} \sum_{i=1}^n \xi_i^\top U_{*,\perp}^\top e_m = \sqrt{\eta} \sum_{i=1}^n \xi_i^\top u_m =:\cW_{u_m},\\
        &\Psi^\top e_m = \sbr[\big]{(\cC \cC^\top)^{1/2} - I_r} U_*^\top e_m + \sbr[\big]{(\cC \cC^\top)^{1/2} - I_r} \cT^\top u_m +  E^\top u_m =: \Psi_{u_m}.
    \end{aligned}
\end{equation}

\paragraph{Step 1: Gaussian approximation of $\cW_{u_m}$.}

\begin{lemma} \label{lem:W_berry_esseen}
    Suppose $\tu_{u_m} = \Lambda_{*,\perp}^{1/2} u_{u_m} \neq 0$, $\lambda_{\min}(\tfC)>0$, $n \teta \geq \frac{\log n}{2}$. Assume
    \begin{equation}
        \begin{aligned}
            &\teta^{1/2} \kappa_*^3 r^{7/4} \lambda_{\min}^{-3/2}(\tfC) \to 0
            % &\teta \kappa_*^5 r^{5/2} \lambda_{\min}^{-2}(\tfC) \to 0
        \end{aligned}
        \label{eqn:W_berry_esseen_cond}
    \end{equation}

    Then, with $\Sigma^*_{U,m}=\eta \Gamma_{u_m}$, we have
    \begin{align*}
        \sup_{\cA \in \mathscr{A}^r} \abs{\PP \cbr{\cW_{u_m} \in \cA} - \cN(0, \Sigma^*_{U,m})\cbr{\cA} } = o(1).
    \end{align*}
\end{lemma}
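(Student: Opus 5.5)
The statement asks for a convex-set Berry--Esseen bound for the sum $\cW_{u_m}$ of independent, centered $r$-dimensional vectors. The plan is to whiten the sum, apply a multivariate Berry--Esseen inequality over convex sets, and then replace the finite-$n$ covariance by $\Sigma^*_{U,m}=\eta\Gamma_{u_m}$.

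Reading $\cW_{u_m}$ through the notation of Appendix~\ref{sec:clt_setup}, it equals $\sqrt{\eta}\sum_{i=1}^n \xi_{u_m,i}$ with $\xi_{u_m,i}=T_{u_m}(\xi_i)$. Its exact covariance is therefore $\eta\Gamma_{u_m,n}$. We apply Bentkus' (or Raič's) multivariate Berry--Esseen bound over $\mathscr{A}^r$:
\begin{align*}
\sup_{\cA\in\mathscr{A}^r}\abs{\PP\cbr{\cW_{u_m}\in\cA}-\cN(0,\eta\Gamma_{u_m,n})\cbr{\cA}}\lesssim r^{1/4}\sum_{i=1}^n \EE\norm{\Gamma_{u_m,n}^{-1/2}\xi_{u_m,i}}^3 .
\end{align*}
By Lemma~\ref{lem:Gamma_u_properties}(ii), $\xi_{u_m,i}=\Lambda_*^{1/2}\txi_{\tu_m,i}$ and $\Gamma_{u_m,n}=\Lambda_*^{1/2}\tGamma_{\tu_m,n}\Lambda_*^{1/2}$. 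Hence $\Gamma_{u_m,n}^{-1/2}\xi_{u_m,i}$ equals an orthogonal rotation of $\tGamma_{\tu_m,n}^{-1/2}\txi_{\tu_m,i}$, and we may work entirely in whitened coordinates.

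Lemma~\ref{lem:Gamma_u_properties}(iii) applies, since $n\teta\ge \tfrac12\log n$ gives $n\eta\Delta_{\max}\ge1$, and yields $\tGamma_{\tu_m,n}\succeq \frac{\|\tu_m\|^2\lambda_{\min}(\tfC)}{3\Delta_{\max}}I_r$. The summand bound is
\begin{align*}
\norm{\txi_{\tu_m,i}}\le\sqrt{\eta}\,(1-\teta)^{n-i}\,\norm{\tY_i}\,\abs{\tu_m^\top\tZ_i},
\end{align*}
using that $(\cI-\eta\cL)^{n-i}$ acts entrywise. The third moment of $\norm{\tY}\abs{\tu_m^\top\tZ}$ is $\lesssim r^{3/2}\|\tu_m\|^3$ by sub-Gaussianity and Lemma~\ref{lem:kth_moment}, and the geometric sum $\sum_i(1-\teta)^{3(n-i)}$ is $\lesssim\teta^{-1}$. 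Combining these, the Lyapunov ratio is
\begin{align*}
r^{1/4}\,\eta^{3/2}\teta^{-1}r^{3/2}\|\tu_m\|^3\Big(\frac{\Delta_{\max}}{\|\tu_m\|^2\lambda_{\min}(\tfC)}\Big)^{3/2}\lesssim \teta^{1/2}\kappa_\Delta^{3/2}r^{7/4}\lambda_{\min}^{-3/2}(\tfC),
\end{align*}
which is $\to0$ by \eqref{eqn:W_berry_esseen_cond} since $\kappa_\Delta\le\kappa_*$. The factor $\|\tu_m\|$ cancels, which is why only $\tu_m\ne0$ is assumed.

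It remains to show $d_{\mathrm{TV}}\big(\cN(0,\eta\Gamma_{u_m,n}),\cN(0,\eta\Gamma_{u_m})\big)\to0$. Pinsker's inequality with the Gaussian KL formula bounds this by $\lesssim\|\Gamma_{u_m}^{-1/2}\Gamma_{u_m,n}\Gamma_{u_m}^{-1/2}-I_r\|_{\mathrm F}$. Again pass to whitened form, where the congruence by $\Lambda_*^{1/2}$ cancels. Lemma~\ref{lem:Gamma_u_properties}(iv) with the Frobenius norm then gives
\begin{align*}
\|\tGamma_{\tu_m,n}-\tGamma_{\tu_m}\|_{\mathrm F}\le \epsilon_n\kappa_\Delta\frac{\|\tfC\|}{\lambda_{\min}(\tfC)}\|\tGamma_{\tu_m}\|_{\mathrm F}.
\end{align*}
Dividing by $\lambda_{\min}(\tGamma_{\tu_m})$, bounded below by Lemma~\ref{lem:Gamma_u_properties}(iii), and using $\|\tGamma_{\tu_m}\|_{\mathrm F}\le\sqrt r\|\tGamma_{\tu_m}\|$, the relative error is
\begin{align*}
\lesssim \epsilon_n\kappa_\Delta^2\sqrt r\,\|\tfC\|^2\lambda_{\min}^{-2}(\tfC)\lesssim \epsilon_n\kappa_\Delta^2 r^{5/2}\lambda_{\min}^{-2}(\tfC),
\end{align*}
where the last step uses $\|\tfC\|\lesssim r$ from Lemma~\ref{lem:Gamma_properties}(vii). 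Since $\epsilon_n\lesssim\teta\kappa_\Delta^2$ (because $(1-\teta)^{2n}\le n^{-1}\lesssim\teta$), this is $\lesssim\teta\kappa_*^4r^{5/2}\lambda_{\min}^{-2}(\tfC)$, which is dominated by the square of \eqref{eqn:W_berry_esseen_cond}.

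The main obstacle is the covariance-replacement step: the crude bound $\|\tfC\|\lesssim r$ inflates the powers of $r$ and $\kappa$, and one must check that the assumed rate absorbs them. If the exponents above are not quite covered, the fallback is to sharpen the argument through the coordinate representation in Lemma~\ref{lem:Gamma_u_properties}(i). There the entries of $\tGamma_{\tu_m,n}$ and $\tGamma_{\tu_m}$ differ only through the factor $\eta\sum_k(1-\eta\Delta_{lj})^k(1-\eta\Delta_{l'j'})^k$ versus $(\Delta_{lj}+\Delta_{l'j'})^{-1}$, a relative error of $O(\epsilon_n)$ entrywise. Since both kernels are positive, one would then try to prove a Loewner sandwich $(1-C\epsilon_n)\tGamma_{\tu_m}\preceq\tGamma_{\tu_m,n}\preceq(1+C\epsilon_n)\tGamma_{\tu_m}$, which would avoid the $\|\tfC\|/\lambda_{\min}(\tfC)$ loss entirely.
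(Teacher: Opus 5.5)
The step that fails is your pointwise summand bound $\|\txi_{\tu_m,i}\|\le\sqrt{\eta}\,(1-\teta)^{n-i}\|\tY_i\|\,|\tu_m^\top\tZ_i|$. Write $P_j:=\mathrm{diag}\{(1-\eta\Delta_{lj})^{n-i}\}_{l\in[d-r]}$. Then the $j$-th coordinate of $\txi_{\tu_m,i}$ is $\sqrt{\eta}\,[\tY_i]_j\,(P_j\tu_m)^\top\tZ_i$. The damping factors depend on $l$ whenever the tail eigenvalues are not all equal. In that case $(P_j\tu_m)^\top\tZ_i$ is not dominated by $(1-\teta)^{n-i}|\tu_m^\top\tZ_i|$; it can be of order one while $\tu_m^\top\tZ_i$ is zero or small.

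The entrywise action of $(\cI-\eta\cL)^{n-i}$ only yields a norm contraction, namely $\|\txi_{\tu_m,i}\|\le\sqrt{\eta}\,(1-\teta)^{n-i}\|\tu_m\|\,\|\tY_i\|\,\|\tZ_i\|$. Using that bound instead would put an extra factor of order $(d-r)^{3/2}$ into the Lyapunov ratio, which the hypothesis does not absorb.

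The repair is to bound moments rather than realizations, as the paper does. The power-mean inequality gives $\|\txi_{\tu_m,i}\|^3\le r^{1/2}\sum_{j=1}^r|[\txi_{\tu_m,i}]_j|^3$. Apply Cauchy--Schwarz to each coordinate, together with $\|(P_j\tu_m)^\top\tZ_i\|_{\psi_2}\le C_\psi\|P_j\tu_m\|\le C_\psi(1-\teta)^{n-i}\|\tu_m\|$. This gives
\[
\EE\|\txi_{\tu_m,i}\|^3\lesssim \eta^{3/2}r^{3/2}(1-\teta)^{3(n-i)}\|\tu_m\|^3,
\]
which is exactly the moment you used downstream.

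With this substitution your argument is the paper's. It consists of the convex-set Berry--Esseen bound with the Loewner lower bound of Lemma~\ref{lem:Gamma_u_properties}(iii). It then compares Gaussians via Lemma~\ref{lem:Gamma_u_properties}(iv), $\|\tfC\|\lesssim r$ and $\epsilon_n\lesssim\teta\kappa_\Delta^2$.

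Staying in whitened coordinates mildly sharpens the paper's rates. Your orthogonal-rotation identity and the exact cancellation of the $\Lambda_*^{1/2}$-congruence in the relative Frobenius error are both correct. The effect is to replace $(\kappa_\Delta\kappa_\lambda)^{3/2}$ and $\kappa_\Delta^4\kappa_\lambda$ by $\kappa_\Delta^{3/2}$ and $\kappa_\Delta^4$.

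Since $\lambda_{\min}(\tfC)\le\|\tfC\|\lesssim r$, your covariance-replacement bound is indeed dominated by the square of \eqref{eqn:W_berry_esseen_cond}, so the Loewner-sandwich fallback is unnecessary.

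One small omission: Lemma~\ref{lem:Gamma_u_properties}(iii), and the $\psi_2$ bound above, also need $\eta\Delta_{\max}<1$. This follows from $\eta\Delta_{\max}\le\teta\kappa_*\to0$, which is implied by \eqref{eqn:W_berry_esseen_cond}.
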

\begin{proof}
    See Appendix~\ref{sec:prf_W_berry_esseen}.
\end{proof}

\paragraph{Step 2: Accounting for higher-order term $\Psi_{u_m}$.}

\begin{lemma}
    Assume conditions of Theorem~\ref{thm:row_clt} hold. Then
    \begin{align}
        \PP \cbr{\norm{\Sigma_{U,m}^{*-1/2} \Psi_{u_m}} \leq \zeta} \geq 1-O(\delta_p) \label{eqn:row_clt_P_zeta}
    \end{align}
    where $\zeta$ is defined as
    \begin{align}
        \begin{aligned}
            \zeta &= C_{\zeta} \rbr{\zeta_1 + \zeta_2},\\
            \zeta_1 &= \kappa_*^2 \teta^{1/2} r (\log n) \bnu^2 \frac{\lambda_1^{1/2}\norm{U_*^\top e_m}}{\norm{\tu_{m}}} \lambda_{\min}^{-1/2}(\tfC),\\
            \zeta_2 &= \kappa_*^2 \teta^{1/2} r (\log n) \bnu \;(1+\bnu) \frac{\lambda_1^{1/2}\norm{U_{*,\perp}^\top e_m}}{\norm{\tu_{m}}} \lambda_{\min}^{-1/2}(\tfC).
        \end{aligned}
        \label{eqn:row_clt_zeta_defn}
    \end{align}
    for some sufficiently large constant $C_{\zeta}>0$. Moreover, $r^{1/4} \zeta \to 0$.
    \label{lem:row_clt_higher_order}
\end{lemma}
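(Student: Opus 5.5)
The proof has two parts: an operator-norm lower bound for $\Sigma^*_{U,m}$, and a term-by-term upper bound on $\|\Psi_{u_m}\|$ on the intersection of the high-probability events from Lemma~\ref{lem:convergence_rate_optimal-trunc} and Proposition~\ref{prop:linearization}. That event has probability $1-O(\delta_p)$.

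For the covariance, combine Lemma~\ref{lem:Gamma_u_properties}(iii) with $\Sigma^*_{U,m}=\eta\Gamma_{u_m}$. This gives
\[
\lambda_{\min}(\Sigma^*_{U,m})\ \ge\ \eta\,\frac{\lambda_r\|\tu_m\|^2}{2\Delta_{\max}}\lambda_{\min}(\tfC)\ \gtrsim\ \frac{\teta\,\|\tu_m\|^2\,\lambda_{\min}(\tfC)}{\kappa_*},
\]
using $\lambda_r\ge\Delta_{\min}$ and $\Delta_{\max}\le\lambda_1$. Hence
\[
\|\Sigma_{U,m}^{*-1/2}\|\lesssim \frac{\kappa_*^{1/2}}{\sqrt{\teta}\,\|\tu_m\|\,\lambda_{\min}^{1/2}(\tfC)}.
\]

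For $\Psi_{u_m}$, split according to \eqref{eqn:row_clt_decomposition}.

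\emph{First term.} By \eqref{eqn:row_clt_C_I} and $\tau_*\asymp\kappa_*\bnu\sqrt{\teta r\log n}$,
\[
\|[(\cC\cC^\top)^{1/2}-I_r]U_*^\top e_m\|\lesssim \tau_*^2\|U_*^\top e_m\|\asymp \kappa_*^2\teta r(\log n)\bnu^2\|U_*^\top e_m\|.
\]

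\emph{Second term.} Similarly, $\|[(\cC\cC^\top)^{1/2}-I_r]\cT^\top u_m\|\lesssim \tau_*^3\|u_m\|$, and this is dominated by the bound for the third term.

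\emph{Third term.} Proposition~\ref{prop:linearization} gives
\[
\|E^\top u_m\|\le \|E\|\,\|u_m\|\lesssim \tau_*\,\kappa_*\sqrt{r\teta\log n}\,(1+\bnu)\,\|u_m\|\asymp \kappa_*^2\teta r(\log n)\bnu(1+\bnu)\|u_m\|.
\]

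Multiplying these bounds by $\|\Sigma^{*-1/2}_{U,m}\|$ cancels one factor $\sqrt{\teta}$ and produces a factor $\kappa_*^{1/2}$. Up to the power of $\kappa_*$ and the $\lambda_1^{1/2}$ normalization, this yields $\zeta_1$ and $\zeta_2$ as defined in \eqref{eqn:row_clt_zeta_defn}.

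\textbf{Main obstacle: matching the scaling of $\zeta$ exactly.} The ratio $\lambda_1^{1/2}/\|\tu_m\|$ in \eqref{eqn:row_clt_zeta_defn} is scale invariant, because $\|\tu_m\|$ carries $\lambda^{1/2}$. Getting exactly this form requires tracking the eigenvalue factors carefully.

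\begin{itemize}
\item Write $\eta\lambda_r=\teta\lambda_r/\Delta_{\min}$ and $\Delta_{\max}\le\lambda_1$, so that $\lambda_{\min}(\Sigma^*_{U,m})\gtrsim \teta\|\tu_m\|^2\lambda_{\min}(\tfC)/\lambda_1$. The inverse square root then contributes $\lambda_1^{1/2}/(\sqrt{\teta}\|\tu_m\|)$.
\item The remaining $\kappa_*$ powers from $\lambda_r/\Delta_{\min}\ge1$ are absorbed into $\kappa_*^2$, possibly after enlarging $C_\zeta$.
\end{itemize}

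For the final claim $r^{1/4}\zeta\to0$:
\begin{itemize}
\item Square $\zeta$ and bound $\max(\|u_m\|,\|U_*^\top e_m\|)$ by the maximum appearing in \eqref{eqn:row_clt_cond1}.
\item Use $\bnu^2\le\bnu(1+\bnu)$.
\item Then $r^{1/2}\zeta^2$ is at most a constant times the left-hand side of \eqref{eqn:row_clt_cond1}, since the powers $r^{1/2}\cdot r^2=r^{5/2}$ and $\kappa_*^4$ match. The assumed convergence of \eqref{eqn:row_clt_cond1} to zero therefore gives $r^{1/4}\zeta\to0$.
\end{itemize}
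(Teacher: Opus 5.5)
Your proposal is correct and follows the paper's proof essentially line by line. You lower-bound $\lambda_{\min}(\Sigma^*_{U,m})\gtrsim \teta\|\tu_m\|^2\lambda_{\min}(\tfC)/\lambda_1$ via Lemma~\ref{lem:Gamma_u_properties}(iii), bound $\|\Psi_{u_m}\|$ using \eqref{eqn:row_clt_C_I} and Proposition~\ref{prop:linearization}, multiply the two, and read off $r^{1/4}\zeta\to0$ from \eqref{eqn:row_clt_cond1} via $\bnu^2\le\bnu(1+\bnu)$. One slip: your first display should have $\lambda_1$ rather than $\kappa_*$ in the denominator, as you yourself correct later. With that fix no extra power of $\kappa_*$ arises, so nothing needs to be absorbed into $C_\zeta$.
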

\begin{proof}
    See Appendix~\ref{sec:prf_row_clt_higher_order}.
\end{proof}

Let $\zeta$ be defined as in \eqref{eqn:row_clt_zeta_defn}. Recall the definition of $\cA^t$ in \eqref{eqn:A_t_defn} from Appendix~\ref{sec:tech_lemmas} for any convex set $\cA \in \mathscr{A}^r$ and any $t$. We have
\begin{align}
    \PP \cbr{\Sigma_{U,m}^{*-1/2} \cW_{u_m} \in \cA^{-\zeta}} &= \PP \cbr{\Sigma_{U,m}^{*-1/2} \cW_{u_m} \in \cA^{-\zeta}, \norm{\Sigma_{U,m}^{*-1/2} \Psi_{u_m}} \leq \zeta } \nonumber\\
    &\; + \PP \cbr{\Sigma_{U,m}^{*-1/2} \cW_u \in \cA^{-\zeta}, \norm{\Sigma_{U,m}^{*-1/2} \Psi_u} > \zeta } \nonumber\\
    &\leq \PP \cbr{\Sigma_{U,m}^{*-1/2} (\cW_u+ \Psi_u) \in \cA} + \PP \cbr{\norm{\Sigma_{U,m}^{*-1/2} \Psi_u} > \zeta } \nonumber\\
    &\leq \PP\cbr{\Sigma_{U,m}^{*-1/2} \sbr{U \sgn(U^\top U_*) - U_*}^\top e_m \in \cA} + O(\delta_p) \label{eqn:row_clt_high_order1}
\end{align}
where the last line follows from the decomposition in \eqref{eqn:row_clt_decomposition} and the bound in \eqref{eqn:row_clt_P_zeta}.

Similarly,
\begin{align}
    \PP\cbr{\Sigma_{U,m}^{*-1/2} \sbr{U \sgn(U^\top U_*) - U_*}^\top e_m \in \cA} \leq \PP \cbr{\Sigma_{U,m}^{*-1/2} \cW_u \in \cA^{\zeta}} + O(\delta_p). \label{eqn:row_clt_high_order2}
\end{align}
Moreover, we can obtain
\begin{align}
    \PP \cbr{\Sigma_{U,m}^{*-1/2} \cW_{u_m} \in \cA^{\zeta}} &= \PP \cbr{\cW_{u_m} \in \Sigma_{U,m}^{*1/2} \cA^{\zeta}} \nonumber\\
    &\overset{(i)}{\leq} \PP \cbr{\cN(0, \Sigma^*_{U,m}) \in \Sigma_{U,m}^{*1/2} \cA^{\zeta}} + o(1) \nonumber\\
    &= \PP \cbr{\cN(0, I_r) \in \cA^{\zeta}} + o(1) \nonumber\\
    &\overset{(ii)}{\leq} \PP \cbr{\cN(0, I_r) \in \cA} + \zeta(0,59 r^{1/4} + 0.21) + o(1) \nonumber\\
    &\overset{(iii)}{=} \PP \cbr{\cN(0, I_r) \in \cA} + o(1) \label{eqn:row_clt_high_order3}
\end{align}
where (i) follows from Lemma~\ref{lem:W_berry_esseen}; (ii) is due to \cref{thm:gaussian_cvx} and (iii) is a result of $r^{1/4} \zeta \to 0$ from \cref{lem:row_clt_higher_order}. Similarly, we have
\begin{align}
    \PP \cbr{\Sigma_{U,m}^{*-1/2} \cW_{u_m} \in \cA^{-\zeta}} &\geq \PP \cbr{\cN(0, I_r) \in \cA} - o(1). \label{eqn:row_clt_high_order4}
\end{align}

Finally, combining \eqref{eqn:row_clt_high_order1}, \eqref{eqn:row_clt_high_order2}, \eqref{eqn:row_clt_high_order3} and \eqref{eqn:row_clt_high_order4} gives
\begin{align}
    \abs{\PP \cbr{\cN(0, I_r) \in \cA} - \PP\cbr{\Sigma_{U,m}^{*-1/2} \sbr{U \sgn(U^\top U_*) - U_*}^\top e_m \in \cA}} \to 0 \label{eqn:row_clt_high_order_final}
\end{align}

Taking supremum over all convex set $\cA \in \mathscr{A}^r$ gives 
\begin{align*}
    &\sup_{\cA \in \mathscr{A}^r} \abs{\PP \cbr{\sbr{U \sgn(U^\top U_*) - U_*}^\top e_m \in \cA}  -  \cN(0, \Sigma^*_{U,m})\cbr{\cA}}\\
    =& \sup_{\cA \in \mathscr{A}^r} \abs{\PP \cbr{\sbr{U \sgn(U^\top U_*) - U_*}^\top e_m \in \Sigma^{*1/2}_{U,m}\cA}  -  \cN(0, \Sigma^*_{U,m})\cbr{\Sigma^{*1/2}_{U,m}\cA}}\\
    =& \sup_{\cA \in \mathscr{A}^r} \abs{\PP \cbr{\Sigma_{U,m}^{*-1/2} \sbr{U \sgn(U^\top U_*) - U_*}^\top e_m \in \cA}  -  \cN(0, I_r)\cbr{\cA}}\\
    \to&\; 0.
\end{align*}
where the last line follows from \eqref{eqn:row_clt_high_order_final}. This completes the proof of Theorem~\ref{thm:row_clt}.

% -------------------------------------------------------

\subsection{Proof of auxiliary lemmas}

\subsubsection{Proof of Lemma~\ref{lem:highD_check}}
\label{sec:prf_highD_check}

\paragraph{Term $L_q^{\xi}$.} Recall $L_q^{\xi} = L^{\xi}_{q,1} + L^{\xi}_{q,2}$ where
\begin{align*}
    L^{\xi}_{q,1} &:=\sum_{i=1}^n \frac{\EE \Finner{\xi_i}{\Gamma_n \xi_i}^{q/2}}{\HS{\Gamma_n}^q}\\
    L^{\xi}_{q,2} &:= \frac{n^2}{\binom{n}{2}} \sum_{1 \leq i < j \leq n} \frac{\EE \abs{\Finner{\xi_i}{\xi_j}}^q}{\HS{\Gamma_n}^q}.
\end{align*}

\textit{Bound for $L_{q,1}^{\xi}$.}
Since $\Gamma_n\succeq0$,
\[
    \langle \xi_i,\Gamma_n\xi_i\rangle_{\rm F}
    \leq
    \|\Gamma_n\|_{\rm op}\|\xi_i\|_{\rm F}^2
    \leq
    \|\Gamma_n\|_{\rm HS}\|\xi_i\|_{\rm F}^2.
\]
Therefore,
\begin{align*}
    L_{q,1}^{\xi} &\leq \frac{\sum_{i=1}^n\EE\|\xi_i\|_{\rm F}^q} {\|\Gamma_n\|_{\rm HS}^{q/2}}.
\end{align*}
By Lemma~\ref{lem:Gamma_properties}, we have
\[
    \|\Gamma_n\|_{\rm HS} \gtrsim \frac{\|\fC\|_{\rm HS}}{\Delta_{\max}}, \qquad \sum_{i=1}^n\EE\|\xi_i\|_{\rm F}^q \lesssim \frac{\teta^{q/2-1}}{\Delta_{\min}^{q/2}} \lambda_\times^q.
\]
Hence
\begin{align*}
L_{q,1}^{\xi} &\lesssim \teta^{q/2-1}\kappa_\Delta^{q/2} \frac{\lambda_\times^q}{\|\fC\|_{\rm HS}^{q/2}} = \teta^{q/2-1}\kappa_\Delta^{q/2}d_{\fC}^{q/4}.
\end{align*}

% \red{XXX} 

% \textit{Bound for $L^{\xi}_{q,1}$.} Note that $\Finner{\xi_i}{\Gamma_n \xi_i} \leq \Onorm{\Gamma_n} \Fnorm{\xi_i}^2$. By Lemma~\ref{lem:Gamma_properties}, we have $\opnorm{\Gamma_n -\Gamma}\leq \frac{1}{2} \opnorm{\Gamma}$ and $\frac{\opnorm{\fC}}{2\Delta_{\max}} \leq \opnorm{\Gamma} \leq \frac{\opnorm{\fC}}{2\Delta_{\min}}$ for any unitarily invariant norm $\opnorm{\cdot}$. Thus, we can obtain
% \begin{align*}
%     L^{\xi}_{q,1} \leq \frac{\Onorm{\Gamma_n}^{q/2}}{\HS{\Gamma_n}^q}\cdot \sum_{i=1}^n \EE \Fnorm{\xi_i}^q \lesssim \frac{\Delta_{\max}^q \Onorm{\fC}^{q/2}}{\Delta_{\min}^{q/2} \HS{\fC}^q} \cdot \sum_{i=1}^n \EE \Fnorm{\xi_i}^q
% \end{align*}
% Lemma~\ref{lem:matrix_kth_moment} gives $\Onorm{\fC} \lesssim r \lambda_1 \lambda_{r+1}\leq r \lambda_\times^2$, while Lemma~\ref{lem:Gamma_properties}(v) and (vi) give that for any $q \in (2,3]$,
% \begin{align*}
%     \sum_{i=1}^n \EE \Fnorm{\xi_i}^q \lesssim  \frac{\teta^{q/2-1}}{\Delta_{\min}^{q/2}} \EE \Fnorm{ZY^\top}^{q} \lesssim \frac{\teta^{q/2-1}}{\Delta_{\min}^{q/2}} \lambda_\times^q.
% \end{align*}
% Therefore, we have
% \begin{align*}
%     L^{\xi}_{q,1} &\lesssim  \frac{\Delta_{\max}^q}{\Delta_{\min}^{q/2}} \cdot \frac{ r^{q/2} \lambda_\times^q}{\HS{\fC}^q} \cdot  \frac{\teta^{q/2-1}}{\Delta_{\min}^{q/2}} \cdot \lambda_\times^q \\
%     &\leq \teta^{q/2-1} \kappa_\Delta^q r^{q/2}\cdot \frac{\lambda_\times^{2q} }{\HS{\fC}^q}
% \end{align*}

\textit{Bound for $L^{\xi}_{q,2}$.} Note that for $i \neq j$,
\begin{align*}
    \EE \abs{\Finner{\xi_i}{\xi_j}}^q \leq \eta^q (1- \teta)^{q(n-i)} (1-\teta)^{q(n-j)}\EE \Fnorm{Z Y^\top}^{2q}
\end{align*}
Since $\sum_{i< j} a_i a_j \leq 2^{-1} (\sum_{i=1}^n a_i)^2$, we have
\begin{align*}
    \sum_{1 \leq i < j \leq n} \EE \abs{\Finner{\xi_i}{\xi_j}}^q &\leq \frac{1}{2} \eta^q \sbr[\bigg]{\sum_{i=1}^n (1-\teta)^{q(n-i)}}^2 \cdot \EE \Fnorm{Z Y^\top}^{2q} \lesssim \eta^q \teta^{-2} \cdot \EE \Fnorm{Z Y^\top}^{2q}
\end{align*}
Therefore, applying Lemma~\ref{lem:Gamma_properties} and Lemma~\ref{lem:matrix_kth_moment} gives
\begin{align*}
    L_{q,2}^{\xi} \lesssim \eta^q \teta^{-2} \cdot \frac{\EE \Fnorm{Z Y^\top}^{2q}}{\HS{\Gamma_n}^q} \lesssim \teta^{q-2} \kappa_\Delta^q \cdot \frac{\lambda_\times^{2q}}{\HS{\fC}^q}
\end{align*}

\textit{Combining two bounds} gives
\[
    L_q^\xi\lesssim \teta^{q/2-1}\kappa_\Delta^{q/2}d_C^{q/4} + \teta^{q-2}\kappa_\Delta^q d_C^{q/2}.
\]
For $q=3$,
\[
    L_3^\xi \lesssim \teta^{1/2}\kappa_\Delta^{3/2}d_C^{3/4} + \teta\kappa_\Delta^3d_C^{3/2}.
\]
% \begin{align*}
%     L_{q}^{\xi} &\lesssim \kappa_\Delta^q  \sbr{\teta^{q/2-1} r^{q/2}  + \teta^{q-2}} \frac{\lambda_\times^{2q}}{\HS{\fC}^q}
% \end{align*}
% For $q=3$, we have
% \begin{align*}
%     L_{3}^{\xi} &\lesssim  \kappa_\Delta^3 \sbr{\teta^{1/2} r^{3/2} + \teta} \frac{\lambda_\times^6}{\HS{\fC}^3}.
% \end{align*}

\paragraph{Term $L_q^{\fg}$.} Recall
\begin{align*}
    L_q^{\fg} &= \sum_{i=1}^n \frac{\EE \Finner{\fg_i}{\Gamma_n \fg_i}^{q/2}}{\HS{\Gamma_n}^q}
\end{align*}
Write $\fg_i = \Xi_i^{1/2} \tfg_i$ where $\tfg_i \sim \cN(\0_{d-r,r}, \cI)$, then $\Finner{\fg_i}{\Gamma_n \fg_i} = \Finner{\tfg_i}{\Xi_i^{1/2} \Gamma_n \Xi_i^{1/2} \tfg_i}$ and we have
\begin{align*}
    \EE \abs{\Finner{\fg_i}{\Gamma_n \fg_i}}^{q/2} \lesssim_q \sbr{\tr \rbr{\Xi_i^{1/2} \Gamma_n \Xi_i^{1/2}}}^{q/2} = \sbr{\tr \rbr{\Gamma_n \Xi_i}}^{q/2} \leq \sbr{\HS{\Gamma_n} \HS{\Xi_i}}^{q/2}
\end{align*}
Therefore,
\begin{align*}
    L_q^{\fg} &\lesssim \frac{1}{\HS{\Gamma_n}^q} \sum_{i=1}^n \sbr{\HS{\Gamma_n} \HS{\Xi_i}}^{q/2} = \frac{1}{\HS{\Gamma_n}^{q/2}} \sum_{i=1}^n \HS{\Xi_i}^{q/2}
\end{align*}
Note that
\begin{align*}
    \HS{\Xi_i} \leq \eta \HS{(\cI-\eta \cL)^{n-i} \fC (\cI-\eta \cL)^{n-i}} \leq \eta (1-\eta \Delta_{\min})^{2(n-i)} \HS{\fC}
\end{align*}
Then
\begin{align*}
    \sum_{i=1}^{n} \HS{\Xi_i}^{q/2} &\lesssim \eta^{q/2} \HS{\fC}^{q/2} \sum_{i=1}^n (1-\eta \Delta_{\min})^{q(n-i)} \lesssim \frac{\eta^{q/2-1}}{\Delta_{\min}} \HS{\fC}^{q/2}
\end{align*}
Note that Lemma~\ref{lem:Gamma_properties} gives $\HS{\Gamma_n} \geq (1-\epsilon_n) \HS{\Gamma} \gtrsim \frac{\HS{\fC}}{\Delta_{\max}}$, thus
\begin{align*}
    L_q^{\fg} \lesssim \frac{\teta^{q/2-1}}{\Delta_{\min}^{q/2}} \cdot \frac{\HS{\fC}^{q/2}}{\HS{\Gamma_n}^{q/2}} \lesssim \teta^{q/2-1} \kappa_\Delta^{q/2}.
\end{align*}

\paragraph{Term $K_{\beta}^{\beta}$.} Recall
\begin{align*}
    K_\beta^{\beta} &= \frac{n^\beta}{n} \sum_{i=1}^n \EE \abs{\frac{\Finner{\xi_i}{\xi_i} - \EE \Finner{\xi_i}{\xi_i}}{\HS{\Gamma_n}}}^{\beta}
\end{align*}
Note that for $\beta \geq 2$,
\begin{align*}
    \EE \abs{\Finner{\xi_i}{\xi_i} - \EE \Finner{\xi_i}{\xi_i}}^{\beta} &\lesssim \EE \abs{\Finner{\xi_i}{\xi_i}}^{\beta} + \abs{\EE \Finner{\xi_i}{\xi_i}}^{\beta} \lesssim \EE \Fnorm{\xi_i}^{2\beta}
\end{align*}
Applying Lemma~\ref{lem:Gamma_properties} and Lemma~\ref{lem:matrix_kth_moment} gives
\begin{align*}
    n^{1-\beta} K_\beta^{\beta} \lesssim \sum_{i=1}^n \frac{\EE \Fnorm{\xi_i}^{2\beta}}{\HS{\Gamma_n}^{\beta}} \lesssim \teta^{\beta-1}\kappa_\Delta^\beta \frac{\lambda_\times^{2\beta}}{\HS{\fC}^{\beta}}
\end{align*}
For $\beta =3$, we have
\begin{align*}
    n^{-2} K_3^3 \lesssim \teta^{2} \kappa_\Delta^3 \cdot \frac{\lambda_\times^{6}}{\HS{\fC}^{3}}
\end{align*}

\paragraph{Term $J_n$.} Recall
\begin{align*}
    J_n &= \frac{\sum_{i=1}^n \fv_i}{\HS{\Gamma_n}^2}, \qquad\text{where}\quad \fv_i=\Var (\Fnorm{\xi_i}^2)
\end{align*}
Note that $\fv_i \leq \EE \Fnorm{\xi_i}^4$, then applying Lemma~\ref{lem:Gamma_properties} and Lemma~\ref{lem:matrix_kth_moment} gives
\begin{align*}
    J_n \leq \frac{\sum_{i=1}^n \EE \Fnorm{\xi_i}^4}{\HS{\Gamma_n}^2} \lesssim \teta \kappa_\Delta^2 \cdot \frac{\lambda_\times^{4} }{\HS{\fC}^2}
\end{align*}

\paragraph{Sufficient condition for $q=\beta=3$.} 
Set
\[
    \rho_n:=\teta\kappa_\Delta^3d_C^{3/2}.
\]
Since $d_C\gtrsim1$, $\kappa_\Delta\geq1$, and
$\teta\leq\teta\kappa_\Delta=\eta\Delta_{\max}<1$, the preceding
bounds give
\[
L_{3,1}^{\xi}\lesssim \rho_n^{1/2},
\qquad
L_{3,2}^{\xi}\lesssim \rho_n,
\qquad
L_3^g\lesssim \rho_n^{1/2},
\qquad
n^{-2}K_3^3\lesssim \teta\rho_n,
\qquad
J_n\lesssim \rho_n.
\]
Hence a sufficient condition for all four quantities in Lemma~\ref{lem:clt_biometrika19} to vanish is
\[
    \teta\kappa_\Delta^3d_C^{3/2}\to0.
\]

% -----------------------------------------------------------------

\subsubsection{Proof of Lemma~\ref{lem:highD_chi2_mixture_comparison}}
\label{sec:prf_chi2_mixture_comparison}

Apply Lemma~\ref{lem:chi2_mixture_comparison} with $\Delta_\Gamma=\Gamma_n-\Gamma$. For any $t>0$,
\begin{align*}
    d_K\rbr[\Big]{\Fnorm[\big]{\fg}^2, \Fnorm[\big]{\tfg}^2} \lesssim \sqrt{\frac{\abs{\tr \Delta_\Gamma}+t}{\HS{\Gamma}}} + \exp \cbr[\bigg]{- c\rbr[\bigg]{\frac{t^2}{\HS{\Delta_\Gamma}^2} \bigwedge \frac{t}{\Onorm{\Delta_\Gamma}}}}
\end{align*}
Since $\|\Delta_\Gamma\|_{\mathrm{op}} \le \|\Delta_\Gamma\|_{\mathrm{HS}} \le\|\Delta_\Gamma\|_*$,
we choose $t=C(\log n)\|\Delta_\Gamma\|_*$ with $C>0$ sufficiently large. Then
\begin{align*}
    \frac{t^2}{\|\Delta_\Gamma\|_{\mathrm{HS}}^2}
    \ge
    C^2(\log n)^2,
    \qquad
    \frac{t}{\|\Delta_\Gamma\|_{\mathrm{op}}}
    \ge
    C\log n.
\end{align*}
Thus, choosing \(C\) large enough, the exponential term is bounded by \(n^{-20}\).

For the first term, using Lemma~\ref{lem:Gamma_properties} gives
\begin{align*}
    |\operatorname{tr}(\Delta_\Gamma)|+t \le \epsilon_n \operatorname{tr}(\Gamma) + C(\log n)\epsilon_n\operatorname{tr}(\Gamma) \lesssim \epsilon_n (\log n)\operatorname{tr}(\Gamma).
\end{align*}
Therefore
\begin{align*}
    d_K\! \rbr[\Big]{\Fnorm[\Big]{\sum_{i=1}^n g_i}^2,\;\Fnorm{G}^2} \lesssim
    \rbr{\epsilon_n (\log n) \kappa_\Delta
        \frac{\tr(\fC)} {\HS{\fC}}}^{1/2}
    +n^{-20}.
\end{align*}
Finally, noticing that $\tr \fC = \EE \Fnorm{ZY^\top}^2$ and applying Lemma~\ref{lem:matrix_kth_moment} gives $\tr \fC \lesssim C_\psi^4 \lambda_{1\sim r}^2 \nu_2^2$, we have
\begin{align*}
    d_K\! \rbr[\Big]{\Fnorm[\Big]{\sum_{i=1}^n g_i}^2,\;\Fnorm{G}^2} \lesssim
    \rbr{\epsilon_n (\log n) \kappa_\Delta
        \frac{C_\psi^4 \lambda_{1\sim r}^2 \nu_2^2} {\HS{\fC}}}^{1/2}
    +n^{-20}.
\end{align*}
This proves the claim.

\subsubsection{Proof of Lemma~\ref{lem:highD_step2}}
\label{sec:prf_highD_step2}
    
We apply Lemma~\ref{lem:g_approx_additive_remainder} with $L_n$ and $e_n$ as defined in the lemma statement, and set
\begin{alignat*}{2}
    &\mathsf{A}_n = \sum_{i=1}^n \xi_i, \qquad &&\mathsf{E}_n=\eta^{-1/2} \sbr[\Big]{\cT_n - \eta \sum_{i=1}^n (\cI-\eta \cL)^{n-i}(Z_iY_i^\top)}
\end{alignat*}
Then it suffices to bound the two probabilities and $w_n(h_n)$ in Lemma~\ref{lem:g_approx_additive_remainder} as follows.

(i) \cref{prop:linearization} implies that with probability at least $1-\delta_p$,
\begin{align*}
    \norm{\mathsf{E}_n}  \lesssim \frac{1}{\sqrt{\eta}} C_1 \tau_* \sbr{C_{\ref{lem:1epoch_bound}}C_\psi^2 \kappa_* \sqrt{r \teta \log n} + \tau_*}
\end{align*}  
Since $\tau_* \asymp \sqrt{\teta} \rho_\sigma \bnu \asymp C_{\ref{lem:1epoch_bound}} C_\psi^2 \kappa_* \sqrt{\teta r \log n} \, \bnu$ and $\eta^{-1/2} = \teta^{-1/2} \Delta_{\min}^{1/2}$, we have
\begin{align*}
    \norm{\mathsf{E}_n} \leq C_e \Delta_{\min}^{1/2}  \kappa_*^2 r \sqrt{\teta}\;  (\log n)  \bnu\sbr{1 + \bnu} 
    %\label{eqn:highD_step2_en}
\end{align*}
for some constant $C_e > 0$ large enough. Since $\mathsf{E}_n$ has at most $r$ columns, $\|\mathsf{E}_n\|_{\rm F}\leq \sqrt r\,\|\mathsf{E}_n\|$.
Therefore, on the event that the above bound holds, we have
\[
    \|E_n\|_{\rm F} \leq C_e\Delta_{\min}^{1/2}\kappa_\ast^2r^{3/2} \sqrt{\teta}(\log n)\bar\nu(1+\bar\nu) =:e_n.
\]
Hence 
\begin{align*}
    \PP \cbr{\Fnorm{\mathsf{E}_n} > e_n} \leq \delta_p.
\end{align*}

(ii) Lemma~\ref{lem:chi2_concentration} implies that with probability at least $1- n^{-20}$,
\begin{align*}
    \norm{\mathsf{G}_n}^2 &\lesssim \tr \Gamma + \HS{\Gamma} \sqrt{\log n} + \Onorm{\Gamma} \log n 
    \leq C_L (\tr \Gamma) \log n
\end{align*}
for some constant $C_L > 0$ large enough. Hence
\begin{align*}
    \PP\cbr{\norm{\mathsf{G}_n} > L_n} \leq n^{-20}.
\end{align*}
(iii) Define $h_n=e_n(2L_n+e_n)$. Lemma~\ref{lem:chi2_anti_concentration} implies that
\begin{align*}
    w_n(h_n) \lesssim \sqrt{\frac{h_n}{\HS{\Gamma}}} \lesssim \sqrt{\frac{e_n(e_n+L_n)}{\HS{\Gamma}}}.
\end{align*}

% --------------------------------------------------------

\subsubsection{Proof of Lemma~\ref{lem:highD_step3}}
\label{sec:prf_highD_step3}

Lemma~\ref{lem:alignment_diff} implies that
\begin{align*}
    \Fnorm{\frac{1}{\sqrt{\eta}}\sin \Theta(U_n, U_*)} = \Fnorm{\frac{1}{\sqrt{\eta}}\cT_n (\cC_n\cC_n^\top)^{1/2} }
\end{align*}
with $\norm{(\cC_n\cC_n^\top)^{1/2} - I_r} \leq \norm{\cT_n}^2$.
    
Thus, we apply Lemma~\ref{lem:g_approx_multiplicative_remainder} with $L_n$ being defined as in the lemma statement, $e_n=C_e \tau_*^2$ fo $C_e>0$ for some sufficiently large constant $C_e>0$, and
\begin{alignat*}{2}
    &\mathsf{A}_n = \frac{1}{\sqrt{\eta}}\cT_n, \qquad \qquad &&\mathsf{E}_n = (\cC_n\cC_n^\top)^{1/2} - I_r.
\end{alignat*}
Then it suffices to bound the two probabilities and $w_n(h_n)$ in Lemma~\ref{lem:g_approx_multiplicative_remainder} as follows.

(i) Lemma~\ref{lem:convergence_rate_optimal} implies that
\begin{align*}
    \PP \cbr{\norm{\mathsf{E}_n} > e_n} \leq \delta_p.
\end{align*}
By the sample size condition in \eqref{eqn:cond_sample_sz_lem}, we have $e_n \leq \frac{1}{2}$.

(ii) Similar to the proof of Lemma~\ref{lem:highD_step2}, we have
\begin{align*}
    \PP \cbr{\norm{\mathsf{G}_n} > L_n} \leq n^{-20}.
\end{align*}
(iii) Define $h_n := \sbr[\big]{\frac{1}{(1-e_n)^2}-1} L_n^2$. Lemma~\ref{lem:chi2_anti_concentration} implies that
\begin{align*}
    w_n(h_n) \lesssim \sqrt{\frac{h_n}{\HS{\Gamma}}} \lesssim \sqrt{\frac{e_n L_n^2}{\HS{\Gamma}}},
\end{align*}
where the last step follows from the fact that $\frac{1}{(1-e_n)^2} -1 \lesssim e_n$ for $e_n \in [0,1/2]$.

% --------------------------------------------------------

\subsubsection{Proof of Lemma~\ref{lem:W_berry_esseen}}
\label{sec:prf_W_berry_esseen}
Recall notation from Appendix~\ref{sec:clt_setup_defn}
\begin{align*}
    \xi_{u,i} = \sqrt{\eta} \sbr{(\cI-\eta \cL)^{n-i} (Z_iY_i^\top)}^\top u  = \xi_i^\top u \in \RR^r, \qquad \Gamma_{u,n} = \sum_{i=1}^n \cov(\xi_{u,i}).
\end{align*}
Then $\cW_u = \sqrt{\eta}\sum_{i=1}^n \xi_{u,i}$ and $\cov(\cW_u) = \eta \Gamma_{u,n}$.

Note that by definition, $\Sigma^*_{U,m} = \eta \Gamma_u$.
By triangle inequality, we have
\begin{align*}
    &\sup_{\cA \in \mathscr{A}^r} \abs{\PP \cbr{\cW_u \in \cA} - \cN(0, \Sigma^*_{U,m})\cbr{\cA} }\\
    \leq &\sup_{\cA \in \mathscr{A}^r} \abs{\PP \cbr{\cW_u \in \cA} - \cN(0, \eta \Gamma_{u,n})\cbr{\cA} } + \sup_{\cA \in \mathscr{A}^r} \abs{\cN(0, \eta \Gamma_{u,n})\cbr{\cA} - \cN(0, \eta \Gamma_u)\cbr{\cA} } 
\end{align*}
The proof proceeds in two steps. In Step~1, we show that the first term vanishes via a Berry--Esseen bound
for a sum of independent random vectors. In Step~2, we show the second term vanishes since $\Gamma_{u,n} \to \Gamma_u$.

\paragraph{Preliminaries.} We first gather some useful results. Lemma~\ref{lem:Gamma_properties}(vii) implies that $\lambda_{\min}(\tfC)\lesssim r$, which combined with condition \eqref{eqn:W_berry_esseen_cond} gives
\begin{align*}
    \eta \Delta_{\max} \leq \teta \kappa_* \lesssim \teta \kappa_*^6 r^{7/2} \lambda_{\min}^{-3}(\tfC) \to 0.
\end{align*}
and
\begin{equation}
    \teta \kappa_*^5 r^{5/2} \lambda_{\min}^{-2}(\tfC) \to 0 \label{eqn:W_berry_esseen_cond2}
\end{equation}

\paragraph{Step 1. Gaussian approximation of $\cW_u$ using the Berry-Esseen theorem.}
The Berry-Esseen theorem (\cref{thm:berry_esseen}) implies that
\begin{align*}
    \sup_{\cA \in \mathscr{A}^r} &\abs{\PP \cbr{\cW_u \in \cA} - \PP \cbr{\cN(0, \eta \Gamma_{u,n})\in \cA}} \lesssim r^{1/4} \gamma_u
\end{align*}
where
\begin{align*}
    \gamma_u &:= \sum_{i=1}^n \EE \norm{\sbr{\eta \Gamma_{u,n}}^{-1/2} \sqrt{\eta}\xi_{u,i}}^3 = \sum_{i=1}^n \EE \norm{\Gamma_{u,n}^{-1/2} \xi_{u,i}}^3\\
    &\leq \lambda_{\min}(\Gamma_{u,n})^{-3/2} \sum_{i=1}^n \EE \norm{\xi_{u,i}}^3
\end{align*}

To upper bound $\EE \norm{\xi_{u,i}}^3$, we first introduce some more notation. For any $j \in [r]$, denote $\fD_j = \diag\cbr{1-\eta \Delta_{lj}}_{l \in [d-r]}$.
Note that $\RR^r \ni \xi_{u,i} = (\xi_{u,i1}, \cdots, \xi_{u,ir})$ with entries
\begin{align*}
    \RR \ni \xi_{u,ij} = \eta^{1/2} u^\top
    \begin{pmatrix}
        Y_{i1} \fD_1^{n-i}Z_i & Y_{i2} \fD_2^{n-i}Z_i & \cdots & Y_{ir} \fD_r^{n-i}Z_i
    \end{pmatrix}
\end{align*}
Applying the power mean inequality gives
\begin{align*}
    \norm{\xi_{u,i}}^3 = \sbr[\Big]{\sum_{j=1}^r \xi_{u,ij}^2}^{3/2} \leq  r^{3/2-1} \sum_{j=1}^r \abs{\xi_{u,ij}}^3 =  r^{1/2} \sum_{j=1}^r \abs{\xi_{u,ij}}^3
\end{align*}
Taking expectation and applying Cauchy-Schwarz inequality gives
\begin{align*}
    \EE \norm{\xi_{u,i}}^3 &\leq \eta^{3/2} r^{1/2} \sum_{j=1}^r \EE \abs{Y_{ij} u^\top \fD_j^{n-i} Z_i}^3 \\
    &\leq \eta^{3/2} r^{1/2} \sum_{j=1}^r \sqrt{\EE Y_{ij}^6} \cdot \sqrt{\EE \sbr{u^\top \fD_j^{n-i} Z_i}^6} \\
    &\overset{(i)}{=}  \eta^{3/2} r^{1/2} \sum_{j=1}^r \sqrt{\EE Y_{ij}^6} \cdot \sqrt{\EE \sbr{(\fD_j^{n-i} \Lambda_{*,\perp}^{1/2} u)^\top  \tZ_i}^6}\\
    &\overset{(ii)}{\lesssim} \eta^{3/2} r^{1/2} \sum_{j=1}^r \lambda_j^{3/2} \cdot \norm{\fD_j^{n-i} \Lambda_{*,\perp}^{1/2} u}_2^3\\
    &\leq \eta^{3/2} r^{1/2} \sum_{j=1}^r \lambda_j^{3/2} \cdot (1-\teta)^{3(n-i)} \cdot \norm{\Lambda_{*,\perp}^{1/2} u}_2^3
\end{align*}
where (i) follows from the commutative property $\fD_j^{n-i} \Lambda_{*,\perp}^{1/2} = \Lambda_{*,\perp}^{1/2} \fD_j^{n-i}$ since both are diagonal; (ii) is due to sub-Gaussianity.

Summing over $i$ gives
\begin{align*}
    \sum_{i=1}^n \EE \norm{\xi_{u,i}}^3 &\lesssim \frac{\teta^{1/2}}{\Delta_{\min}^{3/2}} r^{1/2} \sum_{j=1}^r \lambda_j^{3/2} \cdot \norm[\big]{\Lambda_{*,\perp}^{1/2} u}_2^3 \lesssim \frac{\teta^{1/2}}{\Delta_{\min}^{3/2}} r^{3/2} \lambda_1^{3/2} \cdot \norm[\big]{\Lambda_{*,\perp}^{1/2} u}_2^3 
\end{align*}

Meanwhile, we have lower bound from Lemma~\ref{lem:Gamma_u_properties}(iii) that
\begin{align*}
    \lambda_{\min}(\Gamma_{u,n}) \gtrsim \frac{\lambda_r}{\Delta_{\max}} \lambda_{\min}(\tfC) \norm[\big]{\Lambda_{*,\perp}^{1/2} u}_2^2
\end{align*}

Combining the bounds above gives
\begin{align*}
    \gamma_u &= \sum_{i=1}^n \EE \norm{\Gamma_{u,n}^{-1/2} \xi_{u,i}}^3 \lesssim \teta^{1/2} \kappa_\Delta^{3/2} r^{3/2} \kappa_\lambda^{3/2} \cdot \lambda_{\min}(\tfC)^{-3/2}
\end{align*}

Thus, we have
\begin{align*}
    \sup_{\cA \in \mathscr{A}^r} \abs{\PP \cbr{\cW_u \in \cA} - \PP \cbr{\cN(0, \eta \Gamma_{u,n})\in \cA}} &\lesssim  \teta^{1/2} \kappa_\Delta^{3/2} r^{7/4} \kappa_\lambda^{3/2} \cdot \lambda_{\min}(\tfC)^{-3/2}\\
    &\lesssim \teta^{1/2} \kappa_*^3 r^{7/4} \lambda_{\min}(\tfC)^{-3/2}
\end{align*}
which $\to 0$ under \eqref{eqn:W_berry_esseen_cond}.

\paragraph{Step 2. bound $\mathsf{TV}$-distance between Gaussian distributions.}
\begin{align*}
    \sup_{\cA \in \mathscr{A}^r} &\abs{
        \PP \cbr{\cN(0, \eta\Gamma_{u,n}) \in \cA}-
        \PP \cbr{\cN(0, \eta\Gamma_u) \in \cA}
    } \leq \mathsf{TV} \rbr{\cN(0,\eta\Gamma_{u,n}), \cN(0, \eta\Gamma_u)}\\
    \overset{(i)}{\lesssim} &\Fnorm{\Gamma_u^{-1/2} \Gamma_{u,n} \Gamma_u^{-1/2} - I_d} = \Fnorm{\Gamma_u^{-1/2} (\Gamma_{u,n} - \Gamma_u) \Gamma_u^{-1/2}} \\
    \leq &\norm[\big]{\Gamma_u^{-1/2}}  \Fnorm{\Gamma_{u,n} - \Gamma_u} \norm[\big]{\Gamma_u^{-1/2}} \leq \sqrt{r} \underbrace{\norm{\Gamma_u^{-1}}}_{=:\alpha_1} \underbrace{\norm{\Gamma_{u,n} - \Gamma_u}}_{=:\alpha_2}
\end{align*}
Here (i) follows from \cref{lem:TV_gaussian}. Next, we bound $\alpha_1$ and $\alpha_2$ as follows.

(a) $\alpha_1$: Lemma~\ref{lem:Gamma_u_properties}(iii) gives $\alpha_1 \lesssim \frac{\Delta_{\max}}{\lambda_r \lambda_{\min}(\tfC) \norm{\tu}^2}$. 

(b) $\alpha_2$:
Since $\Gamma_u = \Lambda_*^{1/2} \tGamma_{\tu} \Lambda_*^{1/2}$ and $\Gamma_{u,n} = \Lambda_*^{1/2} \tGamma_{\tu,n} \Lambda_*^{1/2}$ (\cref{lem:Gamma_u_properties}(ii)), we have
\begin{align*}
    \norm{\Gamma_{u,n} - \Gamma_u} \leq \lambda_1  \norm[\big]{\tGamma_{\tu,n} - \tGamma_{\tu}} \overset{(ii)}{\leq} \lambda_1 \epsilon_n \kappa_\Delta \frac{\Onorm[\big]{\tfC}}{\lambda_{\min}(\tfC)} \norm[\big]{\tGamma_{\tu}}
\end{align*} 
Here (ii) follows from Lemma~\ref{lem:Gamma_u_properties}(iv).

From \eqref{eqn:whitened_defn}, we have $\tGamma_{\tu}=T_{\tu} \tGamma T_{\tu}^*$ with $\norm{T_{\tu}}, \norm{T_{\tu}^*}\leq \norm{\tu}$. Thus, we have $\norm[\big]{\tGamma_{\tu}} \leq \norm{\tu}^2 \norm[\big]{\tGamma}$, which combined with Lemma~\ref{lem:Gamma_properties}(ii) and (vii) gives
\begin{align*}
    \norm{\Gamma_{u,n} - \Gamma_u} \lesssim \lambda_1 \epsilon_n \kappa_\Delta \frac{\Onorm[\big]{\tfC}^2}{\Delta_{\min} \lambda_{\min}(\tfC)} \norm{\tu}^2 \lesssim \lambda_1 \epsilon_n \kappa_\Delta \frac{r^2}{\Delta_{\min} \lambda_{\min}(\tfC)} \norm{\tu}^2.
\end{align*}

Combining (a) and (b) gives
\begin{align*}
    &\sup_{\cA \in \mathscr{A}^r} \abs{
        \PP \cbr{\cN(0, \eta\Gamma_{u,n}) \in \cA}-
        \PP \cbr{\cN(0, \eta\Gamma_u) \in \cA}
    }\\ 
    \leq &\sqrt{r} \frac{\Delta_{\max}}{\lambda_r \lambda_{\min}(\tfC) \norm{\tu}^2} \cdot \lambda_1 \epsilon_n \kappa_\Delta \frac{r^2}{\Delta_{\min} \lambda_{\min}(\tfC)} \norm{\tu}^2\\
    =& \sqrt{r} \epsilon_n \kappa_\Delta \frac{\lambda_1}{\lambda_r} \frac{\Delta_{\max}}{\Delta_{\min}} \frac{r^2}{\lambda_{\min}^2(\tfC)}
    = r^{5/2} \epsilon_n  \kappa_\Delta^2 \kappa_\lambda \frac{1}{\lambda_{\min}^2(\tfC)}. 
\end{align*}

Finally, for $\epsilon_n = (1-\teta)^{2n} + \teta \kappa_\Delta^2$ and $n\teta \geq \frac{\log n}{2}$, we have $(1-\teta)^{2n}\leq \frac{1}{n} \lesssim \teta$, thus $\epsilon_n \lesssim \teta \kappa_\Delta^2$, and we have
\begin{align*}
    \sup_{\cA \in \mathscr{A}^r} \abs{
        \PP \cbr{\cN(0, \eta\Gamma_{u,n}) \in \cA}-
        \PP \cbr{\cN(0, \eta\Gamma_u) \in \cA}
    } &\lesssim \teta \kappa_\Delta^4 \kappa_\lambda r^{5/2} \lambda_{\min}^{-2}(\tfC)\\
    &\lesssim \teta \kappa_*^5 r^{5/2} \lambda_{\min}^{-2}(\tfC).
\end{align*}
which $\to 0$ due to \eqref{eqn:W_berry_esseen_cond2}.

% --------------------------------------------------------

\subsubsection{Proof of Lemma~\ref{lem:row_clt_higher_order}}
\label{sec:prf_row_clt_higher_order}

\cref{prop:linearization} implies that with probability at least $1-\delta_p$,
\begin{align*}
    \norm{E} \lesssim \tau_*^2 + \tau_* \kappa_* \sqrt{\teta r \log n} \lesssim 
    \kappa_*^2 r \teta (\log n)\bnu \;(1+\bnu) 
\end{align*}
which combined with $\norm{(\cC \cC^\top)^{1/2} - I_r}\lesssim \norm{\cT}^2 \lesssim \tau_*^2 \lesssim \kappa_*^2 r \teta (\log n) \bnu^2$ gives
\begin{align*}
    \norm{\Psi_{u_m}} &\lesssim \norm{U_*^\top e_m} \tau_*^2 + \norm{U_{*,\perp}^\top e_m} \sbr{\tau_*^2 + \tau_* \kappa_* \sqrt{\teta r \log n} } \nonumber\\
    &\lesssim \norm{U_*^\top e_m} \kappa_*^2 \teta r (\log n) \bnu^2 + \norm{U_{*,\perp}^\top e_m} \kappa_*^2 \teta r (\log n) \bnu \;(1+\bnu).
\end{align*}

By \cref{lem:Gamma_u_properties}, we have
\begin{align*}
    \lambda_{\min}(\Sigma_{U,m}^{*}) = \eta \cdot \lambda_{\min}(\Gamma_u) &\gtrsim \frac{\teta}{\Delta_{\min}}\cdot \frac{\lambda_r \norm{\tu_{u_m}}^2}{\Delta_{\max}} \lambda_{\min}(\tfC) \geq \teta \frac{\norm{\tu_{u_m}}^2}{\lambda_1} \lambda_{\min}(\tfC).
\end{align*}
Here we used $\Delta_{\min}=\lambda_r-\lambda_{r+1}\leq \lambda_r$ and $\Delta_{\max} \leq \lambda_1$.

Thus, we can obtain
\begin{align*}
    &\norm{\Sigma_{U,m}^{*-1/2} \Psi_{u_m}} \lesssim \lambda_{\min}^{-1/2}(\Sigma_{U,m}^{*}) \norm{\Psi_{u_m}} \\
    &\lesssim \sbr[\Big]{\teta \frac{\norm{\tu_{u_m}}^2}{\lambda_1} \lambda_{\min}(\tfC)}^{-1/2} \cdot \sbr{\norm{U_*^\top e_m} \kappa_*^2 \teta r (\log n) \bnu^2 + \norm{U_{*,\perp}^\top e_m} \kappa_*^2 \teta r (\log n) \bnu \;(1+\bnu)}.
\end{align*}
Expanding the last line gives \eqref{eqn:row_clt_P_zeta} and \eqref{eqn:row_clt_zeta_defn}.

% --------------------------------------------------------

\section{Proofs for Section~\ref{sec:main_bs}}
\label{sec:bootstrap_analysis}
This appendix proves the bootstrap results in Section~\ref{sec:main_bs}.
The proof has two main components.

First, Appendix~\ref{sec:prf_bs_iterates} establishes stability and linearization of the bootstrap tangent process. We first work under the joint law of the
truncated observations constructed in Appendix~\ref{sec:oja_subG} and the multiplier
weights. The resulting bounds are then transferred to the original
observations through the truncation coupling.

Second, Appendices~\ref{sec:prf_bs_highD_clt} and~\ref{sec:prf_bs_row_clt} condition on  $\cF_X$, i.e., on the initialization and the original observations, and analyze the resulting frozen multiplier sums. 
Appendix~\ref{sec:prf_bs_highD_clt} proves
the global bootstrap approximation in Theorem~\ref{thm:bs_highD_clt}, while Appendix~\ref{sec:prf_bs_row_clt}
proves the row-wise bootstrap approximation in Theorem~\ref{thm:bs_row_clt}.

% We begin with a deterministic comparison between the tangent difference
% of two subspaces and their aligned difference. This result will be used
% in both Appendices~\ref{sec:prf_bs_highD_clt} and~\ref{sec:prf_bs_row_clt}.

We will also use the deterministic two-subspace alignment identity in Lemma~\ref{lem:alignment_diff_bootstrap}, which relates the tangent difference of two subspaces to their Procrustes-aligned difference. It will be invoked in both Appendices~\ref{sec:prf_bs_highD_clt} and~\ref{sec:prf_bs_row_clt}.

\subsection{Bootstrap stability and linearization}
\label{sec:prf_bs_iterates}

\paragraph{Joint coupling and truncated bootstrap process.}
For the proof, extend the data--bootstrap probability space to include
the truncation coupling from Appendix~\ref{sec:oja_subG}. Let $\PP_{\rm joint}$ denote
the joint law of
\[
  \left(
    U_0,\{(X_i,\overline X_i)\}_{i=1}^n,
    \{w_i^{\bs}\}_{i=1}^n
  \right),
\]
where $\{(X_i,\overline X_i)\}_{i=1}^n$ are the i.i.d.\ coupled
original and truncated observations constructed in~\eqref{eqn:trunc_coupling} Appendix~\ref{sec:oja_subG}, and
$\{w_i^{\bs}\}_{i=1}^n$ is an independent multiplier sequence
satisfying Assumption~\ref{assumption:w}. Let $\EE_{\rm joint}$ denote expectation
under $\PP_{\rm joint}$. The $(U_0,X_1,\ldots,X_n,
w_1^{\bs},\ldots,w_n^{\bs})$ marginal agrees with the
data--bootstrap law of Section~\ref{sec:main_bs}; in particular, $\PP_X$ is its
$(U_0,X_1,\ldots,X_n)$ marginal, and conditional on $\cF_X$ the
multiplier sequence has probability law $\PP^*$.

Define the truncated bootstrap iterates
\[
\overline U_0^{\bs}=U_0, \qquad \overline U_k^{\bs} = \qr{\overline U_{k-1}^{\bs} + \eta w_k^{\bs} \overline X_k\overline X_k^\top \overline U_{k-1}^{\bs}},
\qquad k\in[n],
\]
and let $\oocT_k^{\bs}$ denote the corresponding tangent matrix.

Define the truncated bootstrap filtration
\[
\overline{\mathcal F}_i := \sigma\left( U_0, \overline X_1,w_1^{\bs}, \ldots, \overline X_i,w_i^{\bs} \right),
\qquad
\ooEE_i[\cdot] := \mathbb E_{\rm joint} \left[ \cdot\mid\overline{\mathcal F}_i \right].
\]
We work under $\PP_{\rm joint}$ throughout the joint-law analysis
below and pass to conditional statements given $\cF_X$ only after
establishing the corresponding joint-law bounds.

For brevity, write
\[
\oocT^{\bs} = \oocT_i^{\bs}, \qquad \oocT_+^{\bs} = \oocT_{i+1}^{\bs}, \qquad \overline Y_+ = \overline Y_{i+1}, \qquad \overline Z_+ = \overline Z_{i+1}, \qquad w_+^{\bs} = w_{i+1}^{\bs}.
\]

Similar to \eqref{eq:dT_split_trunc}, the Sherman--Morrison formula gives
\begin{equation}
\oocT_+^{\bs} - \oocT^{\bs} = \frac{ \eta w_+^{\bs}\overline F_+^{\bs} }{ 1+\eta w_+^{\bs}\overline\zeta_+^{\bs} } = \eta w_+^{\bs}\ooF_+^{\bs} - (\eta w_+^{\bs})^2 \frac{ \oozeta_+^{\bs}\ooF_+^{\bs} }{ 1+\eta w_+^{\bs}\oozeta_+^{\bs}}.
\label{eq:dT_split_trunc_bs}
\end{equation}
where
\begin{align*}
  \overline F_+^{\bs} := \left( -\oocT^{\bs}\overline Y_+ +\overline Z_+ \right) \left( \overline Y_+^\top +\overline Z_+^\top\oocT^{\bs} \right) \in \RR^{(d-r)\times r}, \qquad 
  \overline\zeta_+^{\bs} := \left( \overline Y_+^\top +\overline Z_+^\top\oocT^{\bs} \right)\overline Y_+ \in \RR.
\end{align*}

Let $\overline{\mathcal L}$ be the drift operator of the truncated process defined in \eqref{eqn:linearization_trunc1}. Since $w_+^{\bs}$ is independent of $\overline{\mathcal F}_i\vee\sigma(\overline X_{i+1})$ and $\mathbb Ew_+^{\bs}=1$, we have $\ooEE_i [w_+^{\bs}\ooF_+^{\bs}] = \ooEE_i [\ooF_+^{\bs}] = -\oocL \oocT^{\bs}$, and the following decomposition similar to \eqref{eqn:linearization_trunc1}:
\begin{align}
    \oocT_+^{\bs} = (\cI-\eta \oocL) \oocT^{\bs} + \ooD_+^{\bs} + \ooR_+^{\bs} \label{eqn:linearization_trunc_bs1}
\end{align}
where
\begin{align*}
    \ooD_+^{\bs}&:= (\oocT_+^{\bs} - \oocT^{\bs}) - \ooEE_i[\oocT_+^{\bs} - \oocT^{\bs}] = \ooD_+^{\bs,(1)} + \ooD_+^{\bs,(2)},\\ 
    \ooD_+^{\bs,(1)} &:= \eta w_+^{\bs}\ooF_+^{\bs} - \ooEE_i[\eta w_+^{\bs}\ooF_+^{\bs}],\\
    \ooD_+^{\bs,(2)} &:= -(\eta w_+^{\bs})^2 \frac{ \oozeta_+^{\bs}\ooF_+^{\bs} }{ 1+\eta w_+^{\bs}\oozeta_+^{\bs}} + \ooEE_i\left[ (\eta w_+^{\bs})^2 \frac{ \oozeta_+^{\bs}\ooF_+^{\bs} }{ 1+\eta w_+^{\bs}\oozeta_+^{\bs}} \right]\\
    \ooR_+^{\bs}&:=- \ooEE_i  \sbr{(\eta w_+^{\bs})^2 \frac{\zeta_+^{\bs} F_+^{\bs}}{1+\eta w_+^{\bs} \zeta_+^{\bs}}}
\end{align*}
As in \eqref{eqn:linearization_trunc2} of Appendix~\ref{sec:oja_subG}, absorb the covariance perturbation into
\[
\overline R_+^{\bs,H} := \overline R_+^{\bs} + \eta \left( \mathcal L-\overline{\mathcal L} \right)\oocT^{\bs}.
\]
Then
\begin{equation}
\oocT_+^{\bs} = (\mathcal I-\eta\mathcal L)\oocT^{\bs} + \overline D_+^{\bs} + \overline R_+^{\bs,H}.
\label{eqn:linearization_trunc_bs2}
\end{equation}

\begin{lemma}[One-step bounds for the truncated bootstrap process]
\label{lem:one_step_bound-truncation_bs}
Assume Assumptions~\ref{assumption:X} and~\ref{assumption:w} hold. Suppose $\oocT^{\bs}$ is well-defined with $\|\oocT^{\bs}\|=\tau$. If
\begin{equation}
  C_w C_{\psi,n}^2 \teta\kappa_\ast r(1+\tau\nu_2) \le \frac12, \label{eqn:bs_one_step_condition}
\end{equation}
then the following hold, where the implicit constants may depend only on $C_w$.
\begin{enumerate}[label=\textup{(\Roman*)}, leftmargin=*]
\item The full centered noise satisfies $\|\overline D_+^{\bs}\| \lesssim C_{\psi,n}^2 \teta\kappa_\ast r (1+\tau\nu_2)(\tau+\nu_2)$ almost surely, and
\[
\max \cbr[\Big]{\norm[\big]{\overline{\mathbb E}_i \overline D_+^{\bs} (\overline D_+^{\bs})^\top}^{1/2},
\norm[\big]{\overline{\mathbb E}_i (\overline D_+^{\bs})^\top \overline D_+^{\bs}}^{1/2}}
\lesssim C_\psi^2 \teta\kappa_\ast\sqrt r (1+\tau\nu_\infty)(\tau+\bnu).
\]

\item The centered second-order noise satisfies $\|\overline D_+^{\bs,(2)}\| \lesssim C_{\psi,n}^4 \teta^2\kappa_\ast^2r^2 (1+\tau\nu_2)^2(\tau+\nu_2)$ almost surely, and
\[
\max\cbr[\Big]{\norm[\big]{\overline{\mathbb E}_i \overline D_+^{\bs,(2)} \overline D_+^{\bs,(2)\top}}^{1/2}, \norm[\big]{\overline{\mathbb E}_i \overline D_+^{\bs,(2) \top} \overline D_+^{\bs,(2)}}^{1/2}}
\lesssim C_\psi^4 \teta^2\kappa_\ast^2r^{3/2} (1+\tau\nu_\infty)^2(\tau+\overline\nu).
\]

\item The modified drift remainder satisfies
\[ \|\overline R_+^{\bs,H}\| \lesssim C_\psi^4 \teta^2\kappa_\ast^2r^{3/2} (1+\tau\nu_\infty)^2(\tau+\nu_\infty).
\]
\end{enumerate}
\end{lemma}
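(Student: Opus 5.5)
The plan is to reduce Lemma~\ref{lem:one_step_bound-truncation_bs} to the argument already given for Lemma~\ref{lem:one_step_bound-truncation}. The multiplier $w_+^{\bs}$ enters every quantity only through the scalar factors $\eta w_+^{\bs}$ and $(\eta w_+^{\bs})^2$. It lies in $[0,C_w]$, has $\EE w_+^{\bs}=1$ and $\EE (w_+^{\bs})^2=2$, and is independent of $\overline{\mathcal F}_i\vee\sigma(\overline X_{i+1})$. Conditionally on $\overline{\mathcal F}_i$, the objects $\overline V_1^{\bs}:=-\oocT^{\bs}\overline Y_++\overline Z_+$ and $\overline V_2^{\bs}:=\overline Y_++\oocT^{\bs\top}\overline Z_+$ have exactly the same law as $\overline V_1,\overline V_2$ in the proof of Lemma~\ref{lem:one_step_bound-truncation}, with $\cT$ replaced by the $\overline{\mathcal F}_i$-measurable matrix $\oocT^{\bs}$ of norm $\tau$. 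Hence all almost-sure bounds on $\|\overline V_1^{\bs}\|,\|\overline V_2^{\bs}\|,|\oozeta_+^{\bs}|$ from \eqref{eqn:trunc_boundedness} carry over. The same holds for the conditional moment bounds derived from \eqref{eqn:truncation-kth_moment}: the $L_4$ and $L_8$ norms of $u^\top\overline V_1^{\bs}$, $v^\top \overline V_2^{\bs}$ and $\|\overline V_j^{\bs}\|$, and $[\ooEE_i(\oozeta_+^{\bs})^4]^{1/4}\lesssim C_\psi^2 r\lambda_1(1+\tau\nu_\infty)$.

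The first step is the denominator. Since $0\le w_+^{\bs}\le C_w$, condition \eqref{eqn:bs_one_step_condition} gives $|\eta w_+^{\bs}\oozeta_+^{\bs}|\le C_w C_{\psi,n}^2\teta\kappa_* r(1+\tau\nu_2)\le \tfrac12$, so $|1+\eta w_+^{\bs}\oozeta_+^{\bs}|\ge \tfrac12$. For the almost-sure parts of (I)--(II), I will bound $\|\oocT^{\bs}_+-\oocT^{\bs}\|\le 2C_w\eta\|\overline V_1^{\bs}\|\|\overline V_2^{\bs}\|$ and the second-order piece by $2C_w^2\eta^2|\oozeta_+^{\bs}|\|\overline V_1^{\bs}\|\|\overline V_2^{\bs}\|$. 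Centering costs at most a factor of two, which gives the stated bounds with $C_w$-dependent constants.

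For the quadratic variations, I will use second-moment domination as before. This gives $\ooEE_i\overline D_+^{\bs}\overline D_+^{\bs\top}\preceq 4\eta^2\ooEE_i[(w_+^{\bs})^2\ooF_+^{\bs}\ooF_+^{\bs\top}]\le 4C_w^2\eta^2\ooEE_i[\ooF_+^{\bs}\ooF_+^{\bs\top}]$, and similarly on the other side. The second-order terms are handled the same way with the factor $C_w^4\eta^4\ooEE_i[(\oozeta_+^{\bs})^2\ooF_+^{\bs}\ooF_+^{\bs\top}]$. The right-hand sides are the very conditional expectations bounded by Cauchy--Schwarz/H\"older in the proof of Lemma~\ref{lem:one_step_bound-truncation}, so (I) and (II) follow verbatim.

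For (III), I will split $\overline R_+^{\bs,H}=\overline R_+^{\bs}+\eta(\cL-\oocL)\oocT^{\bs}$. For the first piece, Lemma~\ref{lem:basic_ineq} gives $\|\overline R_+^{\bs}\|\le 2C_w^2\eta^2\|\ooEE_i[(\oozeta_+^{\bs})^2\ooF_+^{\bs}\ooF_+^{\bs\top}]\|^{1/2}$, which is then handled by the row-moment estimate already available. The second piece needs \eqref{eqn:L_diff} together with \eqref{eqn:H_control}.

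The only point needing care, and the one I expect to be the main obstacle, is verifying the drift identity $\ooEE_i[w_+^{\bs}\ooF_+^{\bs}]=-\oocL\oocT^{\bs}$. The operator $\oocL$ must be the same one as in the non-bootstrap truncated process, so that the perturbation bound \eqref{eqn:L_diff} applies unchanged. I will prove this by the tower property: conditioning first on $\overline{\mathcal F}_i\vee\sigma(\overline X_{i+1})$ and using independence of the multiplier together with $\EE w=1$ yields $\ooEE_i[\ooF_+^{\bs}]$. Since $\overline X_{i+1}$ is independent of $\overline{\mathcal F}_i$, this equals $-\oocL\oocT^{\bs}$ by the quadratic formula for $\oocL$ derived from \eqref{eqn:cov_trunc}. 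All constants depend only on $C_w$ beyond those in Lemma~\ref{lem:one_step_bound-truncation}.
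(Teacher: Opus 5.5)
Your proposal is correct and follows the paper's route. The paper proves this lemma simply by invoking the argument of Lemma~\ref{lem:one_step_bound-truncation} with constants enlarged by $C_w$-dependent factors, and your steps (using $0\le w_+^{\bs}\le C_w$ for the denominator and almost-sure bounds, second-moment domination for the quadratic variations, Lemma~\ref{lem:basic_ineq}(i) together with \eqref{eqn:L_diff}--\eqref{eqn:H_control} for the remainder) spell out exactly that reduction. The drift identity you single out as the main obstacle is what justifies the decomposition \eqref{eqn:linearization_trunc_bs1} set up before the lemma; the stated bounds on the explicitly defined $\overline D_+^{\bs}$, $\overline D_+^{\bs,(2)}$ and $\overline R_+^{\bs,H}$ do not depend on it.
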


\begin{proof}
  The proof follows the same argument as that of Lemma~\ref{lem:one_step_bound-truncation}, after enlarging the constants by quantities depending only on $C_w$.
\end{proof}

Lemma~\ref{lem:one_step_bound-truncation_bs} provides the same inputs as Lemma~\ref{lem:one_step_bound-truncation}: the linear part of the recursion is $\mathcal I-\eta\mathcal L$, the centered increments satisfy the same almost-sure and conditional-variance bounds, and the modified drift
remainder has the same order. Hence, after enlarging constants by factors depending only on $C_w$,
the convergence argument of Appendix~\ref{sec:oja_subG} applies verbatim to the truncated bootstrap process.

\begin{lemma}[Convergence of the truncated bootstrap process]
\label{lem:convergence_rate_optimal-trunc_bs}
Suppose Assumptions~\ref{assumption:X}, \ref{assumption:w}, \ref{assumption:d}, and~\ref{assumption:init} hold. Assume
\begin{align*}
  \nu_\infty\ge c_\nu n^{-C_\nu} \qquad \text{for constants } C_\nu \geq 0, c_\nu >0.
\end{align*}
Let $\varepsilon\in[0,1/2)$ and set
\[
\eta = \frac{c_\eta\log n}{n\Delta_{\min}}, \quad\text{where}\quad  c_\eta\ge c_\eta^\ast := \left( 1-\varepsilon+\frac{3C_\nu}{2} \right) \left( 1+ \frac{\log^{-3}n+\log2}{\varepsilon\log n+\log\log n} \right).
\]
If
\[
n^{1-2\varepsilon} \ge C_\bs c_\eta\kappa_\ast^2r^3 (\log n)^4 \frac{\log(1/\delta_p)}{\delta_p^2} \max(1,\nu_\infty d)
\]
for a sufficiently large constant $C_\bs>0$ depending only on $C_\psi,C_\nu,c_\nu,C_d$, and $C_w$, then, under $\mathbb P_{\rm joint}$, with probability at least $1-n^{-19}$,
\begin{align*}
  \norm{\oocT_k^{\bs}} \leq 2\ootau_*^{\bs} \asymp  {\kappa_*} \bnu \sqrt{\teta r \log n} = \frac{\lambda_1}{\Delta_{\min}} \sqrt{c_\eta r}  \bnu  \frac{\log n}{\sqrt{n}}, \qquad \text{for all } \floor{\tfrac{c_\eta^*}{c_\eta} n} \leq k \leq n,
  % \label{eqn:convergence_rate_Tn}
\end{align*}
where
\begin{align*}
  \ootau_*^{\bs} := \ooC_\bs \max\rbr[\big]{ C_{\psi}^2 \kappa_*\bnu \sqrt{\teta r\log n}, C_{\psi}^2 \kappa_* \nu_2 \teta r (\log n)^2  }
\end{align*}
for some constant $\ooC_\bs>0$ (depending only on $C_d$).
\end{lemma}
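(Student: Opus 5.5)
The plan is to rerun the convergence machinery of Appendix~\ref{sec:oja_bdd} on the truncated bootstrap tangent process $\{\oocT_k^{\bs}\}$. The point is that Lemmas~\ref{lem:1epoch_bound}--\ref{lem:convergence_rate_optimal} use the dynamics only through three properties:
\begin{itemize}
\item the exact recursion \eqref{eqn:linearization_trunc_bs2}, with contractive linear part $\cI-\eta\cL$;
\item almost-sure and conditional quadratic-variation bounds on the centered increment $\overline D_+^{\bs}$;
\item an almost-sure bound on the remainder $\overline R_+^{\bs,H}$.
\end{itemize}
Lemma~\ref{lem:one_step_bound-truncation_bs} supplies all three with the same functional forms as Lemma~\ref{lem:one_step_bound}, up to constants depending on $C_w$. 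I would therefore define the envelope $\varphi^{\bs}$ exactly as in \eqref{eqn:varphi_def}, with $C_{\ref{lem:1epoch_bound}}$ replaced by a larger constant $\ooC_\bs$ (absorbing $C_w$ and the truncation constants), and set $\ootau_*^{\bs}:=3\bar c^{\bs}$. This matches the stated formula.

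The steps, in order, are as follows.
\begin{enumerate}
\item Verify that $\{\sum_i (\cI-\eta\cL)^{k-i}\bbi_{i-1}\overline D_i^{\bs}\}$ is a martingale with respect to the joint filtration $\overline{\cF}_i$. The multiplier $w_{i}^{\bs}$ is independent of $\overline{\cF}_{i-1}\vee\sigma(\overline X_i)$, so centering under $\ooEE_{i-1}$ is legitimate. This lets matrix Freedman (\cref{thm:freedman}) apply under $\PP_{\rm joint}$, with dimension prefactor $\lesssim d\lesssim n^{C_d}$.
\item Repeat Claim~\ref{claim:multi_step_concentration} and the induction of Lemma~\ref{lem:1epoch_bound} to obtain the one-epoch concentration, descent and stability bounds with probability $1-n^{-20}$. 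These hold provided the analogues of \eqref{eqn:1epoch_conditions} hold; the second of these now carries the extra factor $C_w$, matching \eqref{eqn:bs_one_step_condition}.
\item Apply Lemma~\ref{lem:varphi}, the epoch chaining of Lemma~\ref{lem:epochs} and the budget bound of Lemma~\ref{lem:summarize_dynamics}. These are purely deterministic statements about $\varphi^{\bs}$ and carry over verbatim.
\item Check conditions (C1)--(C5) exactly as in the proof of Lemma~\ref{lem:convergence_rate_optimal}. Each is implied by the sample-size condition once $C_\bs$ is large relative to $C_w$, $\ooC_\bs$ and $C_{\mathsf{init}}$. The key quantity $\Phi$ is unchanged, since the initialization is the same $U_0$ and Assumption~\ref{assumption:init} is assumed. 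Condition (C5) is unaffected by the constant change, because enlarging $\ooC_\bs$ only enlarges $\ootau_*^{\bs}$ and hence decreases $\log_+(\sqrt{2c_\eta}\tau_0/\ootau_*^{\bs})$. The lower bound \eqref{eqn:tau*_lower} still holds with the new constant.
\item Take a union bound over at most $\oJ_*\le n$ epochs, which gives probability at least $1-n^{-19}$. Finally, $\ootau_*^{\bs}\asymp\kappa_*\bnu\sqrt{\teta r\log n}$ follows from Lemma~\ref{lem:summarize_dynamics}(ii).
\end{enumerate}

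The main obstacle is ensuring that the well-definedness and one-step hypothesis \eqref{eqn:bs_one_step_condition} hold along the whole trajectory, uniformly in the stopped process. With $w\in[0,C_w]$ one has $|\eta w\oozeta^{\bs}_+|\le 1/2$ whenever $C_wC_{\psi,n}^2\teta\kappa_* r(1+2\ttau_0\nu_2)\le 1/2$. The stopping indicator $\bbi_{i-1}$, together with the stability bound $\|\oocT_{i-1}^{\bs}\|\le2\ttau_0$ from the induction, guarantees this at every step. I would make this explicit in the induction of step~2.

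A secondary check is that the remainder's covariance-perturbation part $\eta(\cL-\oocL)\oocT^{\bs}$ still obeys \eqref{eqn:L_diff} and \eqref{eqn:H_control}. This holds because $\ooEE_i[w\ooF^{\bs}_+]=\ooEE_i[\ooF^{\bs}_+]$, so the bootstrap drift operator coincides with $\oocL$. With that, the argument closes exactly as in Lemma~\ref{lem:convergence_rate_optimal-trunc}.
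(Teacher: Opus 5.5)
Your proposal is correct and is essentially the paper's argument. The paper's proof is the single observation that, given Lemma~\ref{lem:one_step_bound-truncation_bs}, the stopped-process/matrix-Freedman, epoch-chaining and iteration-budget arguments of Lemmas~\ref{lem:1epoch_bound}--\ref{lem:convergence_rate_optimal} apply verbatim to $\{\oocT_k^{\bs}\}$ after enlarging constants by $C_w$-dependent factors. Your steps 1--5 unpack exactly that claim, including the checks that $|\eta w_+^{\bs}\oozeta_+^{\bs}|\le 1/2$ along the stopped trajectory and that the bootstrap drift coincides with the truncated drift because $\EE w_+^{\bs}=1$. The one point of note is the constant: your $\ooC_\bs$ absorbs $C_w$ (the almost-sure increment bound feeding $\varphi_B$ scales with $C_w$), just as in the paper's proof, so both arguments give a constant depending on $C_d$ and $C_w$ rather than on $C_d$ alone as the statement's parenthetical says. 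This is immaterial for the rate.
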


\begin{proof}
After enlarging the constants by quantities depending only on $C_w$, the stopped-process, matrix-Freedman, epoch-chaining, and iteration-budget arguments in Lemmas~\ref{lem:1epoch_bound}--\ref{lem:convergence_rate_optimal} apply verbatim to $\{\oocT_k^{\bs}\}_{k\le n}$.
\end{proof}

\paragraph{Linearization of the truncated bootstrap process.}
We next establish the bootstrap counterpart of Lemma~\ref{lem:linearization-truncation}. By the definition of $\ooD_+^{(1)}$ and $\ooD_+^{\bs,(1)}$ in \eqref{eqn:linearization_trunc1} and \eqref{eqn:linearization_trunc_bs1}, we have
\begin{equation}
  \begin{aligned}
    \ooD_+^{(1)} &= \eta (\ooF_+ - \ooEE_i \ooF_+),\\
    \ooD_+^{\bs,(1)} &= \eta w_+^{\bs} (\ooF_+^{\bs} - \ooEE_i \ooF_+^{\bs} ) + \eta (w_+^{\bs}-1) \ooEE_i\ooF_+^{\bs}
  \end{aligned}
  \label{eqn:bs_extra_term}
\end{equation}
Thus, the first term in $\overline D_+^{\bs,(1)}$ has the same
centered structure as $\overline D_+^{(1)}$, with the bounded multiplier
$w_+^{\bs}$ affecting only the constants in the corresponding
concentration bounds.  The second term is new: since
$\EE(w_+^{\bs}-1)=0$, it gives a conditionally centered multiplier
martingale.  As shown in the proof below, its accumulated contribution
is of the same order as the existing linearization remainder and can be
absorbed into it.  Consequently, the linearization argument of
Lemma~\ref{lem:linearization-truncation} carries over to the bootstrap
process up to constant factors.

\begin{lemma}[Linearization of the truncated bootstrap process]
\label{lem:linearization-bs-truncation}
Under the assumptions of
Lemma~\ref{lem:convergence_rate_optimal-trunc_bs}, suppose additionally that
$c_\eta \ge c_\eta^\ast + 1/2$, and let $N_*=\floor{n c_\eta^*/c_\eta}$. 
Then, with probability at least $1-O(n^{-18})$,
\[
\norm[\Big]{\overline{\cT}_n^{\bs} - \eta\sum_{i=1}^n w_i^{\bs} (\cI-\eta\cL)^{n-i} (\overline Z_i\overline Y_i^\top)} \lesssim \ootau_\ast^{\bs} \sbr[\big]{C_\psi^2\kappa_\ast \sqrt{r\widetilde\eta\log n} +\ootau_*^\bs} + e^{-\widetilde\eta(n-N_\ast)} \kappa_\ast\nu_\infty\|H\|.
\]
In particular, by \eqref{eqn:H_control},
\[
\norm[\Big]{\overline{\cT}_n^{\bs} - \eta\sum_{i=1}^n w_i^{\bs}(\cI-\eta\cL)^{n-i}(\overline Z_i\overline Y_i^\top)} \lesssim \ootau_\ast^{\bs} \sbr[\big]{C_\psi^2\kappa_\ast \sqrt{r\widetilde\eta\log n} +\ootau_\ast^{\bs}}.
\]
\end{lemma}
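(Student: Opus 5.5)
We will mirror the proof of Lemma~\ref{lem:linearization-truncation}, working under $\PP_{\rm joint}$ with the recursion \eqref{eqn:linearization_trunc_bs2}. Starting from the burn-in time $N_*$, we unroll
\[
\oocT_n^{\bs} = (\cI-\eta\cL)^{n-N_*}\oocT_{N_*}^{\bs} + \sum_{i=N_*+1}^n (\cI-\eta\cL)^{n-i}\bigl(\ooD_i^{\bs,(1)}+\ooD_i^{\bs,(2)}+\ooR_i^{\bs,H}\bigr).
\]
Expanding $\ooF_i^{\bs}$ as in the proof of Lemma~\ref{lem:linearization}, we write $\ooD_i^{\bs,(1)}$ as the sum of three pieces:
\begin{itemize}[leftmargin=*]
  \item the leading piece $\eta w_i^{\bs}(\ooZ_i\ooY_i^\top-\ooLambda_{*,\times})$;
  \item the $\oocT_{i-1}^{\bs}$-dependent centered pieces $\eta w_i^{\bs}[-\oocT_{i-1}^{\bs}(\ooY_i\ooY_i^\top-\ooLambda_*)+(\ooZ_i\ooZ_i^\top-\ooLambda_{*,\perp})\oocT_{i-1}^{\bs}-\oocT_{i-1}^{\bs}(\ooY_i\ooZ_i^\top-\ooLambda_{*,\times}^\top)\oocT_{i-1}^{\bs}]$;
  \item the new multiplier term $\eta(w_i^{\bs}-1)\ooEE_{i-1}\ooF_i^{\bs}=-\eta(w_i^{\bs}-1)\oocL\oocT_{i-1}^{\bs}$ from \eqref{eqn:bs_extra_term}.
\end{itemize}
The error is then bounded by analogues of $\ooalpha_1,\dots,\ooalpha_6$, plus one new term $\ooalpha_7$ for the multiplier martingale.

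\emph{Inherited terms.} All are localized by the good event $\mathscr G^{\bs}=\{\|\oocT_k^{\bs}\|\le 2\ootau_*^{\bs},\ k\in[N_*,n]\}$, which by Lemma~\ref{lem:convergence_rate_optimal-trunc_bs} has probability at least $1-n^{-19}$. We use stopped versions with indicators $\bbi\{\|\oocT_{i-1}^{\bs}\|\le 2\ootau_*^{\bs}\}$ so that matrix Freedman applies.
\begin{itemize}[leftmargin=*]
  \item $\ooalpha_1\lesssim e^{-\teta(n-N_*)}\ootau_*^{\bs}\le n^{-1/2}\ootau_*^{\bs}$, since $c_\eta-c_\eta^*\ge 1/2$.
  \item The pre-burn-in leading sum $\eta\sum_{i\le N_*}w_i^{\bs}(\cI-\eta\cL)^{n-i}(\ooZ_i\ooY_i^\top-\ooLambda_{*,\times})$ is a martingale with increments and variances inflated only by $C_w$ and $\EE (w^{\bs})^2=2$. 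Freedman bounds it by $\lesssim\tau_*$, and it is then damped by the factor $(1-\teta)^{n-N_*}$.
  \item The three $\oocT$-dependent sums are handled exactly like $S_1,S_2,S_3$, with $w_i^{\bs}\le C_w$ changing only constants.
  \item $\ooD^{\bs,(2)}$ and $\ooR^{\bs,H}$ are controlled using Lemma~\ref{lem:one_step_bound-truncation_bs}(II)--(III) as in steps (4)--(5).
  \item The deterministic mean terms $\eta w_i^{\bs}\ooLambda_{*,\times}$ need care. After the burn-in, split $w_i^{\bs}=1+(w_i^{\bs}-1)$: the ``$1$'' part is absorbed into the modified remainder as before, and the $(w_i^{\bs}-1)$ part is a bounded martingale of size $\lesssim\teta^{1/2}\kappa_*\nu_\infty\|H\|\sqrt{\log n}$, which is negligible. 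Before the burn-in, the term is damped, giving $e^{-\teta(n-N_*)}\kappa_*\nu_\infty\|H\|$.
\end{itemize}

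\emph{The new term (main obstacle).} Set
\[
\ooalpha_7=\Bigl\|\eta\sum_{i=N_*+1}^n(w_i^{\bs}-1)(\cI-\eta\cL)^{n-i}\oocL\oocT_{i-1}^{\bs}\Bigr\|.
\]
Conditionally on the past and $\ooX_i$, the factor $w_i^{\bs}-1$ is centered with variance $1$ and bounded by $C_w+1$, so this is a matrix martingale. On the stopped event we have $\|\oocL\oocT_{i-1}^{\bs}\|\lesssim\lambda_1(\ootau_*^{\bs}+\nu_\infty)(1+\ootau_*^{\bs}\nu_\infty)$, using $\|\oocL-\cL\|$ from \eqref{eqn:L_diff} and $\|\cL\|\le\Delta_{\max}\le\lambda_1$. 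The danger is the part $-\eta(w_i^{\bs}-1)C_H$, which is not multiplied by $\oocT$; by \eqref{eqn:H_control} it is of order $\kappa_*\nu_\infty\teta$ per step, so it is negligible. For the main part $\cL\oocT_{i-1}^{\bs}$, Freedman gives the bound
\[
\lesssim \eta\lambda_1\ootau_*^{\bs}\bigl(\teta^{-1/2}\sqrt{\log n}+\log n\bigr)\lesssim \kappa_*\sqrt{\teta\log n}\;\ootau_*^{\bs},
\]
which matches the displayed remainder $\ootau_*^{\bs}C_\psi^2\kappa_*\sqrt{r\teta\log n}$. The concern is that the bound uses $\lambda_1$ rather than the gap, but this is exactly the $\kappa_*$ factor already present. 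The dimension prefactor is $d$, absorbed via Assumption~\ref{assumption:d}.

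\emph{Conclusion.} Summing all terms and union bounding over $O(1)$ Freedman events and $\mathscr G^{\bs}$ gives probability $1-O(n^{-18})$ and the first display. The second display follows, as in Lemma~\ref{lem:linearization-truncation}, from $\|H\|\lesssim\teta$ and $e^{-\teta(n-N_*)}\le n^{-1/2}$.
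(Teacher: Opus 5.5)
Your proposal is correct and follows the paper's proof essentially step for step. It uses the same burn-in split at $N_*$, the same terms $\overline\alpha_1^{\bs},\dots,\overline\alpha_6^{\bs}$ inherited from Lemma~\ref{lem:linearization-truncation} (with $w_i^{\bs}\le C_w$ affecting only constants), and the same stopped matrix-Freedman bound on the new multiplier martingale $\eta\sum_{i>N_*}(w_i^{\bs}-1)(\cI-\eta\cL)^{n-i}\oocL\oocT_{i-1}^{\bs}$.

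The only differences are cosmetic. The paper folds the $C_H$ contribution into $\|\oocL\oocT_{i-1}^{\bs}\|\lesssim\lambda_1\ootau_*^{\bs}$ via $\nu_\infty\|H\|\lesssim\nu_\infty\teta\lesssim\ootau_*^{\bs}$, rather than isolating it. It also bounds the post-burn-in mean terms $\eta w_i^{\bs}\ooLambda_{*,\times}$ deterministically using $w_i^{\bs}\le C_w$, instead of splitting off a $(w_i^{\bs}-1)$ martingale.
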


\begin{proof}
  See Appendix~\ref{sec:prf_linearization-bs-truncation}.
\end{proof}

\paragraph{Transfer to the original process under $\PP_{\rm joint}$.}
Use the same initialization and multiplier sequence for the original
and truncated bootstrap processes.  On the truncation coupling event
\[
    \mathcal E_{\rm tr}
    :=
    \{
        X_i=\overline X_i,\ i\in[n]
    \},
\]
we have simultaneously
\[
    U_k=\overline U_k,
    \qquad
    U_k^\bs=\overline U_k^\bs,
    \qquad k\in[n].
\]
Moreover,
\[
    \PP_{\rm joint}(\mathcal E_{\rm tr}^c)
    \le n\delta_{\rm tr}
    \le n^{-19}.
\]

The preceding results and the truncation coupling give the following
joint-law stability and linearization event.

\begin{corollary}[Joint stability and linearization]
\label{cor:joint_bs_linearization}
Under the assumptions of
Lemma~\ref{lem:linearization-bs-truncation}, define
\[
    \tau_\ast^\dagger 
    o:= \tau_\ast\vee\tau_\ast^\bs \lesssim \tau_\ast
\]
and linearization remainder
\[
    r_n^{\rm lin} := C_{\rm lin}(C_w)\, \tau_\ast^\dagger \sbr[\big]{C_\psi^2\kappa_\ast \sqrt{r\widetilde\eta\log n} +\tau_\ast^\dagger}.
\]
Then there exists an event $\mathcal A_n^{\rm lin}$ on the joint
data--bootstrap probability space such that
\[
    \PP_{\rm joint}(\mathcal A_n^{\rm lin})
    \lesssim n^{-18},
\]
and on $(\mathcal A_n^{\rm lin})^c$,
\[
    \|\cT_n\| \le 2\tau_\ast^\dagger, \qquad \|\cT_n^\bs\| \le 2\tau_\ast^\dagger,
\]
and
\[
\norm[\Big]{\cT_n^\bs-\cT_n - \eta\sum_{i=1}^n (w_i^\bs-1)(\cI-\eta\cL)^{n-i}(Z_iY_i^\top)}
\le r_n^{\rm lin}.
\]
\end{corollary}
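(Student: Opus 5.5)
The plan is to build $\mathcal A_n^{\rm lin}$ as a union of a few bad events, each already controlled under $\PP_{\rm joint}$. On the complement we then compare the linearizations of $\cT_n$ and $\cT_n^\bs$ directly.

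The bad events are:
\begin{itemize}
\item[(a)] the truncation coupling failure $\mathcal E_{\rm tr}^c$, of probability at most $n^{-19}$;
\item[(b)] the failure event of Lemma~\ref{lem:convergence_rate_optimal-trunc} for the truncated original process, giving $\|\oocT_n\|\le 2\ootau_*$;
\item[(c)] the failure event of Lemma~\ref{lem:linearization-truncation}, giving the linearization of $\oocT_n$ with error $\lesssim \ootau_*[C_\psi^2\kappa_*\sqrt{r\teta\log n}+\ootau_*]$;
\item[(d)] the failure event of Lemma~\ref{lem:convergence_rate_optimal-trunc_bs}, giving $\|\oocT_n^\bs\|\le 2\ootau_*^\bs$;
\item[(e)] the failure event of Lemma~\ref{lem:linearization-bs-truncation}.
\end{itemize}
The data-only events (b)--(c) depend on $(U_0,\overline X_i)$ alone. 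They therefore have the same probability under $\PP_{\rm joint}$, since the multipliers are independent. Summing, $\PP_{\rm joint}(\mathcal A_n^{\rm lin})\lesssim n^{-19}+n^{-18}\lesssim n^{-18}$.

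Next I would check $\tau_*^\dagger\lesssim\tau_*$. The quantities $\ootau_*$ and $\ootau_*^\bs$ have the same form as $\tau_*$, with constants depending only on $C_d$ and $C_w$. By Lemma~\ref{lem:summarize_dynamics}(ii) the $\sqrt{\teta}$ term dominates, so all three are $\asymp C_\psi^2\kappa_*\bnu\sqrt{\teta r\log n}$. Consequently $2\ootau_*\le 2\tau_*^\dagger$ and $2\ootau_*^\bs\le 2\tau_*^\dagger$, after reading $\tau_*^\dagger$ as the maximum of the truncated noise floors, or equivalently enlarging the constant.

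On $(\mathcal A_n^{\rm lin})^c$ we have $\cT_n=\oocT_n$, $\cT_n^\bs=\oocT_n^\bs$ and $\overline Z_i\overline Y_i^\top=Z_iY_i^\top$, which gives the two norm bounds directly. Subtracting the two linearizations and applying the triangle inequality gives
\[
\Big\|\cT_n^\bs-\cT_n-\eta\sum_{i=1}^n(w_i^\bs-1)(\cI-\eta\cL)^{n-i}(Z_iY_i^\top)\Big\|\lesssim \tau_*^\dagger\big[C_\psi^2\kappa_*\sqrt{r\teta\log n}+\tau_*^\dagger\big].
\]
Choosing $C_{\rm lin}(C_w)$ as the sum of the two implied constants, which depend only on $C_w$ and the fixed constants $C_\psi,C_d$, yields $r_n^{\rm lin}$.

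There is no genuine obstacle here, since all the substantive work sits in Lemma~\ref{lem:linearization-bs-truncation}. The points needing care are bookkeeping. First, the probabilities of the data-only events must be transferred to the joint space. Second, the event of Assumption~\ref{assumption:init} needs care: Lemmas~\ref{lem:convergence_rate_optimal-trunc}--\ref{lem:linearization-bs-truncation} assume it deterministically, so the corollary is stated under that assumption and any Haar-initialization failure is handled later. Third, the constants in $\ootau_*,\ootau_*^\bs$ must be uniformly absorbed into $\tau_*^\dagger$.
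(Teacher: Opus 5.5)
Your proposal is correct and follows the paper's own argument. The paper's proof is exactly this: combine Lemma~\ref{lem:linearization-truncation} and Lemma~\ref{lem:linearization-bs-truncation} (with their underlying stability events) with the truncation coupling~\eqref{eqn:trunc_coupling}, then apply the triangle inequality, and your union-of-bad-events argument makes those steps explicit. Your remark that the stability bounds with the exact constant $2$ require reading $\tau_\ast^\dagger$ as $\overline{\tau}_\ast\vee\overline{\tau}_\ast^{\mathsf{bs}}$ fairly clarifies the paper's loose notation.
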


\begin{proof}
Combine Lemma~\ref{lem:linearization-truncation}, Lemma~\ref{lem:linearization-bs-truncation}, and the truncation coupling~\eqref{eqn:trunc_coupling}.
\end{proof}

\paragraph{Transfer to the conditional bootstrap law.}
Recall from Section~\ref{sec:main_bs} that $\cF_X=\sigma(U_0,X_1,...,X_n)$, that $\PP_X$ denotes the
law of $(U_0,X_1,\ldots,X_n)$, and that $\PP^*$ and $\EE^*$ denote the conditional bootstrap probability and expectation given $\cF_X$. Under the joint construction above, equivalently,
\[
    \PP^*(\cdot)=\PP_{\rm joint}(\;\cdot\mid\cF_X),
    \qquad
    \EE^*[\cdot]=\EE_{\rm joint}[\;\cdot\mid\cF_X].
\]

For any event $A$ on the joint probability space and any $a>0$,
the Markov--Fubini argument gives
\begin{equation}
  \PP_X\!\left\{\PP^\ast(\cA)>a\right\} \le \frac{\PP_{\rm joint}(\cA)}{a}. \label{eqn:markov_fubini}
\end{equation}
Indeed, $\EE_X[\PP^\ast(\cA)] = \PP_{\rm joint}(\cA)$, and the claim follows from Markov's inequality.

\begin{corollary}[Conditional bootstrap stability and linearization]
\label{cor:conditional_bs_linearization}
Under the assumptions of Corollary~\ref{cor:joint_bs_linearization}, there exists an $\cF^X$-measurable event $\cG_n^{\rm lin}$ satisfying
\[
    \PP_X(\cG_n^{\rm lin}) \ge 1-O(n^{-9}),
\]
such that, on $\cG_n^{\rm lin}$,
\[
    \|\cT_n\|\le 2\tau_\ast^\dagger, \qquad \PP^\ast\!\left\{ \|\cT_n^{\bs}\|>2\tau_\ast^\dagger \right\}\le n^{-9},
\]
and
\[
    \PP^\ast\! \cbr[\Big]{\norm[\Big]{\cT_n^{\bs}-\cT_n - \eta\sum_{i=1}^n (w_i^{\bs}-1)(\cI-\eta\cL)^{n-i}(Z_iY_i^\top)} > r_n^{\rm lin}} \le n^{-9}.
\]
\end{corollary}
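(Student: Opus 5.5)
The plan is to derive Corollary~\ref{cor:conditional_bs_linearization} from Corollary~\ref{cor:joint_bs_linearization} by a Markov--Fubini argument based on \eqref{eqn:markov_fubini}. Three joint events are relevant. The first is $\cA_1:=\{\|\cT_n\|>2\tau_\ast^\dagger\}$; it is $\cF_X$-measurable, since $\cT_n$ depends only on $(U_0,X_1,\ldots,X_n)$. The second is $\cA_2:=\{\|\cT_n^{\bs}\|>2\tau_\ast^\dagger\}$. The third is
\[
\cA_3:=\cbr[\Big]{\norm[\Big]{\cT_n^{\bs}-\cT_n-\eta\sum_{i=1}^n (w_i^{\bs}-1)(\cI-\eta\cL)^{n-i}(Z_iY_i^\top)}>r_n^{\rm lin}}.
\]
Each of $\cA_1,\cA_2,\cA_3$ is contained in $\cA_n^{\rm lin}$, because on the complement of $\cA_n^{\rm lin}$ all three bounds hold. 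Consequently $\PP_{\rm joint}(\cA_k)\lesssim n^{-18}$ for $k=1,2,3$.

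Next, I would define the data event
\[
\cG_n^{\rm lin}:=\cA_1^c\cap\{\PP^\ast(\cA_2)\le n^{-9}\}\cap\{\PP^\ast(\cA_3)\le n^{-9}\}.
\]
This event is $\cF_X$-measurable, since each $\PP^\ast(\cA_k)=\PP_{\rm joint}(\cA_k\mid\cF_X)$ is a version of a conditional probability and hence $\cF_X$-measurable. For the first component, $\PP_X(\cA_1)=\PP_{\rm joint}(\cA_1)\lesssim n^{-18}$. For the other two, apply \eqref{eqn:markov_fubini} with $a=n^{-9}$:
\[
\PP_X\{\PP^\ast(\cA_k)>n^{-9}\}\le n^{9}\,\PP_{\rm joint}(\cA_k)\lesssim n^{-9},\qquad k=2,3.
\]
A union bound then gives $\PP_X(\cG_n^{\rm lin})\ge 1-O(n^{-9})$, and on $\cG_n^{\rm lin}$ the three stated properties hold by construction.

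The only delicate point is making sure $\PP^\ast$ is really the joint law conditioned on $\cF_X$. This requires the multipliers to be independent of $(U_0,X,\overline X)$, and it requires the events $\cA_k$ to be defined in terms of the original quantities $(\cT_n,\cT_n^{\bs},Z_i,Y_i)$ rather than the truncated ones. Both conditions already hold under the joint construction and the truncation coupling, which is exactly what Corollary~\ref{cor:joint_bs_linearization} provides, so no further probabilistic estimates are needed beyond measurability and the Markov step.
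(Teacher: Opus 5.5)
Your proposal is correct and follows the paper's argument: both apply the Markov--Fubini bound \eqref{eqn:markov_fubini} with $a=n^{-9}$ to the $O(n^{-18})$ bad event of Corollary~\ref{cor:joint_bs_linearization}. The only difference is cosmetic. The paper uses the single data event $\{\mathbb{P}_{\rm joint}(\mathcal{A}_n^{\rm lin}\mid\mathcal{F}_X)\le n^{-9}\}$ and obtains $\|\mathcal{T}_n\|\le 2\tau_\ast^\dagger$ by noting that an $\mathcal{F}_X$-measurable subset of $\mathcal{A}_n^{\rm lin}$ would force this conditional probability to equal one, whereas you split into three events and intersect with $\mathcal{A}_1^c$ directly.
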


\begin{proof}
Let $\cA_n^{\rm lin}$ be the event in
Corollary~\ref{cor:joint_bs_linearization}, and define
\[
    \cG_n^{\rm lin} := \left\{ \PP_{\rm joint} (\cA_n^{\rm lin}\mid\cF_X) \le n^{-9} \right\}.
\]
Since $\PP_{\rm joint}(\cA_n^{\rm lin}) \lesssim n^{-18}$, \eqref{eqn:markov_fubini} gives
\[
    \PP_X\bigl((\cG_n^{\rm lin})^c\bigr)
    \lesssim n^{-9}.
\]
The claimed conditional bounds follow directly from the
definition of $\cA_n^{\rm lin}$.

Finally, since $\|\cT_n\|$ is $\cF^X$-measurable, if $\|\cT_n\|>2\tau_\ast^\dagger$, then $\PP^\ast(\cA_n^{\rm lin})=1$. Hence $\|\cT_n\|\le2\tau_\ast^\dagger$ on~$\cG_n^{\rm lin}$.
\end{proof}

% ------------------------------------------------

\subsection{Proof of Theorem~\ref{thm:bs_highD_clt}}
\label{sec:prf_bs_highD_clt}

Recall $\cF_X$, $\PP_X$, $\PP^*$, and $\EE^*$ from Section~\ref{sec:main_bs}. Write the Kolmogorov distance between conditional laws (on $\cF_X$) of two random variables $V_1,V_2$ as
\[
  d_K^\ast(V_1,V_2) := \sup_{t\in\RR} \left| \PP^\ast\{V_1\le t\} -\PP^\ast\{V_2\le t\}\right|.
\]
Note that when the law of $V$ is independent of $\cF_X$, $\PP^*\cbr{V \leq t} = \PP\{V \leq t\}$.

Under the assumptions of Theorem~\ref{thm:bs_highD_clt}, the conditions of Proposition~\ref{prop:linearization} hold with $\delta_p\to0$. As in the proof of
Theorem~\ref{thm:convergence_rate_optimal}, condition~\eqref{eqn:cond_sample_sz} implies Assumption~\ref{assumption:d}, while the Haar initialization satisfies Assumption~\ref{assumption:init} with probability $1-o(1)$. Thus, after intersecting with this initialization event, Corollary~\ref{cor:conditional_bs_linearization} yields an $\cF_X$-measurable event $\cG_n^{\rm lin}$ such that $\PP_X(\cG_n^{\rm lin})\to 1$ and, on $\cG_n^{\rm lin}$,
\[
  \|\cT_n\|\le 2\tau_\ast^\dagger, \qquad \PP^\ast\{\|\cT_n^{\bs}\|>2\tau_\ast^\dagger\} \le n^{-9},
\]
and
\[
  \PP^\ast\! \cbr[\Big]{\norm[\Big]{\cT_n^{\bs}-\cT_n - \eta\sum_{i=1}^n (w_i^{\bs}-1)(\cI-\eta\cL)^{n-i}(Z_iY_i^\top)} > r_n^{\rm lin}} \le n^{-9}.
\]

For $i\in[n]$, define the $\cF_X$-measurable matrices
\[
  M_i:=(\cI-\eta\cL)^{n-i}(Z_iY_i^\top),\qquad \xi_i^{\bs} :=  \sqrt{\eta}\,(w_i^{\bs}-1)M_i, \qquad \Xi_n^{\bs} := \sum_{i=1}^n \xi_i^{\bs}.
\]
Also define the conditional covariance operator
\[
\hGamma_n := \eta\sum_{i=1}^n M_i\otimes M_i.
\]
Since the multiplier weights are independent of $\cF_X$ with $\EE(w_i^{\bs}-1)=0$ and $\EE(w_i^{\bs}-1)^2=1$, conditionally on $\cF_X$ the $\xi_i^{\bs}$ are independent and centered, with
\[
  \cov^\ast(\Xi_n^{\bs})=\hGamma_n, \qquad \EE_X\hGamma_n = \Gamma_n = \eta \sum_{i=1}^n (\cI - \eta \cL)^{n-i} \fC (\cI - \eta \cL)^{n-i}.
\]
Let $\mathsf{G} \sim N(0,\Gamma)$, where $\Gamma$ is defined in
Appendix~\ref{sec:clt_setup}. It suffices to prove
\[
  d_K^\ast\left( \eta^{-1}\|\sin\Theta(U_n^{\bs},U_n)\|_{\rm F}^2, \|\mathsf{G}\|_{\rm F}^2 \right)
\xrightarrow{\PP_X}0,
\]
because Theorem~\ref{thm:highD_clt} and the triangle inequality then give the
claim. Similar to the proof of Theorem~\ref{thm:highD_clt}, the proof proceeds in three steps, progressing right-to-left through the chain
\begin{align*}
    \eta^{-1}\Fnorm{\sin \Theta(U_n^\bs, U_n)}^2 \longapprox{Step 3.} \eta^{-1}\Fnorm{\cT_n^{\bs}-\cT_n}^2  \longapprox{Step 2.} \Fnorm[\big]{\Xi_n^{\bs}}^2 \longapprox{Step 1.} \Fnorm{\mathsf{G}}^2.
\end{align*}

\paragraph{Step 1: Conditional Gaussian approximation of $\|\Xi_n^{\bs}\|_{\rm F}^2$.}

We first record an empirical moment bound. For each fixed
$p\in\{2,3,4,6\}$,
\begin{equation}
  S_p:=  \sum_{i=1}^n\|M_i\|_{\rm F}^p = O_{\PP_X}\left(\frac{\lambda_\times^p}{\teta}\right). \label{eqn:bs_highD_clt_Sp}
\end{equation}
Indeed, by Lemma~\ref{lem:basic_ineq}, $\|M_i\|_{\rm F} \le (1-\widetilde\eta)^{n-i}\|Z_iY_i^\top\|_{\rm F}$, and therefore Lemma~\ref{lem:Gamma_properties}(vi) and geometric summation give
\[
  \EE_X\sum_{i=1}^n\|M_i\|_{\rm F}^p \lesssim \lambda_\times^p \sum_{i=1}^n(1-\widetilde\eta)^{p(n-i)} \lesssim \frac{\lambda_\times^p}{\widetilde\eta}.
\]

We next compare $\hGamma_n$ with $\Gamma$. Since $\|M\otimes M\|_{\rm HS}=\|M\|_{\rm F}^2$, independence gives
\[
\EE_X\|\hGamma_n-\Gamma_n\|_{\rm HS}^2 \lesssim \eta^2\sum_{i=1}^n\EE\|M_i\|_{\rm F}^4 \lesssim \frac{\widetilde\eta\,\lambda_\times^4} {\Delta_{\min}^2},
\]
and similarly since $\tr (M \otimes M) = \|M\|_{\rm F}^2$, we have
\[
  \EE_X \abs[\big]{\tr(\hGamma_n-\Gamma_n)}^2 \lesssim \frac{\widetilde\eta\,\lambda_\times^4} {\Delta_{\min}^2}.
\]
Since Lemma~\ref{lem:Gamma_properties}(ii) gives
\begin{equation}
  \|\Gamma\|_{\rm HS} \gtrsim \frac{\|\fC\|_{\rm HS}}{\Delta_{\max}} = \frac{\lambda_\times^2} {\Delta_{\max}\sqrt{d_{\fC}}}, \label{eqn:bs_highD_clt_Gamma_lower}
\end{equation}
we obtain
\[
  \frac{\|\hGamma_n-\Gamma_n\|_{\rm HS}} {\|\Gamma\|_{\rm HS}} = O_{\PP_X}\left( \kappa_\Delta\sqrt{\widetilde\eta\,d_{\fC}} \right),
\]
and the same bound holds with $\|\hGamma_n-\Gamma_n\|_{\rm HS}$ replaced by $|\tr(\hGamma_n-\Gamma_n)|$.

Let $\varepsilon_n := (1-\widetilde\eta)^{2n} + \widetilde\eta \kappa_\Delta^2$.
By Lemma~\ref{lem:Gamma_properties}(iv),
\[
  \|\Gamma_n-\Gamma\|_{\rm HS} \le \varepsilon_n\|\Gamma\|_{\rm HS}, \qquad |\tr(\Gamma_n-\Gamma)| \le \varepsilon_n\tr\Gamma.
\]
Moreover, Lemma~\ref{lem:Gamma_properties}(ii) and (vi) imply
\[
\frac{\tr\Gamma}{\|\Gamma\|_{\rm HS}}
\lesssim
\kappa_\Delta\sqrt{d_{\fC}}.
\]
Consequently, under condition~\eqref{eqn:highD_clt_cond},
\begin{equation}
  \frac{\|\hGamma_n-\Gamma\|_{\rm HS}} {\|\Gamma\|_{\rm HS}} =o_{\PP_X}(1), \qquad \frac{|\tr(\hGamma_n-\Gamma)|} {\|\Gamma\|_{\rm HS}} =o_{\PP_X}(1), \label{eqn:bs_highD_clt_negligible}
\end{equation}
which combined with \eqref{eqn:bs_highD_clt_Gamma_lower} also implies
\begin{equation}
  \|\hGamma\|_{\rm HS} \gtrsim_{\PP_X} \frac{\lambda_\times^2} {\Delta_{\max}\sqrt{d_{\fC}}}, \label{eqn:bs_highD_clt_hGamma_lower}
\end{equation}

We now apply Lemma~\ref{lem:clt_biometrika19} conditionally on $\cF_X$, with
$q=\beta=3$. Let $L_3^{\xi,{\bs}},L_3^{g,{\bs}}, K_3^{\bs}$ and $J_n^{\bs}$
denote the corresponding conditional quantities. More precisely,
\begin{align*}
  L_q^{\xi,\bs} &= \sum_{i=1}^n \frac{\EE^* \Finner{\xi_i^{\bs}}{\hGamma_n \xi_i^{\bs}}^{q/2}}{\HS[\big]{\hGamma_n}^q} + \frac{n^2}{\binom{n}{2}} \sum_{1 \leq i < j \leq n} \frac{\EE^* \abs{\Finner{\xi_i^{\bs}}{\xi_j^{\bs}}}^q}{\HS[\big]{\hGamma_n}^q},\\
  L_q^{\fg,\bs} &= \sum_{i=1}^n \frac{\EE^* \Finner{\fg_i^{\bs}}{\hGamma_n \fg_i^{\bs}}^{q/2}}{\HS[\big]{\hGamma_n}^q}, \quad \text{where} \quad \fg_i^{\bs} \sim N(0, \eta M_i \otimes M_i),\\
  (K_\beta^{\bs})^{\beta} &= \frac{n^\beta}{n} \sum_{i=1}^n \EE^* \abs{\frac{\Finner{\xi_i^{\bs}}{\xi_i^{\bs}} - \EE^* \Finner{\xi_i^{\bs}}{\xi_i^{\bs}}}{\HS[\big]{\hGamma_n}}}^{\beta}\\
  J_n^{\bs} &= \frac{\sum_{i=1}^n \fv_i^{\bs}}{\HS[\big]{\hGamma_n}^2}, \qquad\text{where}\quad \fv_i=\Var^* (\Fnorm{\xi_i^{\bs}}^2)
\end{align*}

\begin{lemma} \label{lem:bs_highD_clt_biometrika19}
  Under \eqref{eqn:bs_highD_clt_Sp}, \eqref{eqn:bs_highD_clt_hGamma_lower} and Assumption~\ref{assumption:w},
  \begin{align*}
    L_3^{\xi,{\bs}} &= O_{\PP_X}\left( \teta^{1/2}\kappa_\Delta^{3/2}d_{\fC}^{3/4} + \teta\kappa_\Delta^3 d_{\fC}^{3/2} \right),\\
    L_3^{g,{\bs}} &= O_{\PP_X}\left( \widetilde\eta^{1/2} \kappa_\Delta^{3/2}d_{\fC}^{3/4} \right),\\
    n^{-2}(K_3^{\bs})^3 &= O_{\PP_X}\left( \widetilde\eta^2 \kappa_\Delta^3d_{\fC}^{3/2} \right),\\
    J_n^{\bs} &= O_{\PP_X}\left( \widetilde\eta\kappa_\Delta^2d_{\fC} \right).
  \end{align*}
\end{lemma}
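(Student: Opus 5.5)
The plan is to mirror the proof of Lemma~\ref{lem:highD_check}, replacing population moments by the empirical moments $S_p$ in \eqref{eqn:bs_highD_clt_Sp} and replacing $\|\Gamma_n\|_{\rm HS}$ by $\|\hGamma_n\|_{\rm HS}$, which is controlled by \eqref{eqn:bs_highD_clt_hGamma_lower}. Throughout we write $\omega_i:=w_i^{\bs}-1$. By Assumption~\ref{assumption:w}, these are i.i.d., centered, independent of $\cF_X$, bounded by $C_w+1$, with unit variance, so every multiplier moment $\EE^*|\omega_i|^p$ is $O(1)$. Since $\xi_i^{\bs}=\sqrt\eta\,\omega_iM_i$, each conditional quantity factorizes into a deterministic multiplier moment times a power of $\|M_i\|_{\rm F}$. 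It is convenient to record once that $\eta^{p/2}S_p/\|\hGamma_n\|_{\rm HS}^{p/2}=O_{\PP_X}(\teta^{p/2-1}\kappa_\Delta^{p/2}d_{\fC}^{p/4})$. This follows from $\eta=\teta/\Delta_{\min}$, the bound $S_p=O_{\PP_X}(\lambda_\times^p/\teta)$, and $\|\hGamma_n\|_{\rm HS}\gtrsim_{\PP_X}\lambda_\times^2/(\Delta_{\max}\sqrt{d_{\fC}})$.

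\emph{Term $L_3^{\xi,\bs}$.} For the diagonal part we use $\Finner{\xi_i^{\bs}}{\hGamma_n\xi_i^{\bs}}\le\|\hGamma_n\|_{\rm HS}\eta\omega_i^2\|M_i\|_{\rm F}^2$. This bounds the diagonal part by $\EE|\omega|^3\eta^{3/2}S_3/\|\hGamma_n\|_{\rm HS}^{3/2}$, which is of order $\teta^{1/2}\kappa_\Delta^{3/2}d_{\fC}^{3/4}$. For the cross part, conditional independence gives $\EE^*|\Finner{\xi_i^{\bs}}{\xi_j^{\bs}}|^3\le(\EE|\omega|^3)^2\eta^3\|M_i\|_{\rm F}^3\|M_j\|_{\rm F}^3$. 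Summing over $i<j$ yields at most $\eta^3S_3^2/(2\|\hGamma_n\|_{\rm HS}^3)=O_{\PP_X}(\teta\kappa_\Delta^3d_{\fC}^{3/2})$. Note that we use $S_3^2$ rather than a geometric weight argument, and this still gives the claimed rate because $S_3^2\lesssim\lambda_\times^6/\teta^2$.

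\emph{Term $L_3^{g,\bs}$.} Here $\fg_i^{\bs}=\sqrt\eta\,g_iM_i$ with $g_i\sim N(0,1)$, since the covariance $\eta M_i\otimes M_i$ has rank one. Hence $\EE^*\Finner{\fg_i^{\bs}}{\hGamma_n\fg_i^{\bs}}^{3/2}\lesssim(\eta\|\hGamma_n\|_{\rm op}\|M_i\|_{\rm F}^2)^{3/2}$, and the same empirical bound gives order $\teta^{1/2}\kappa_\Delta^{3/2}d_{\fC}^{3/4}$.

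\emph{Terms $K_3^{\bs}$ and $J_n^{\bs}$.} For $K_3^{\bs}$ we write $\Finner{\xi_i^{\bs}}{\xi_i^{\bs}}-\EE^*\Finner{\xi_i^{\bs}}{\xi_i^{\bs}}=\eta(\omega_i^2-1)\|M_i\|_{\rm F}^2$. This gives $n^{-2}(K_3^{\bs})^3\lesssim\eta^3S_6/\|\hGamma_n\|_{\rm HS}^3=O_{\PP_X}(\teta^2\kappa_\Delta^3d_{\fC}^{3/2})$. Similarly, $\fv_i^{\bs}\lesssim\eta^2\|M_i\|_{\rm F}^4$, so $J_n^{\bs}\lesssim\eta^2S_4/\|\hGamma_n\|_{\rm HS}^2=O_{\PP_X}(\teta\kappa_\Delta^2d_{\fC})$.

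The only step that is not purely mechanical is making the ratios simultaneously $O_{\PP_X}$. Numerators and the denominator are random, so we combine \eqref{eqn:bs_highD_clt_Sp}, which is proved via Markov's inequality for each fixed $p$, with the high-probability lower bound \eqref{eqn:bs_highD_clt_hGamma_lower} on a common event of $\PP_X$-probability $1-o(1)$. Products and quotients of $O_{\PP_X}$ quantities are then handled in the usual way. The main obstacle I expect is the cross term of $L_3^{\xi,\bs}$: it must not lose a factor of $n$ relative to the population bound. Bounding by $S_3^2$ avoids this loss because the normalisation $n^2/\binom n2\asymp 2$ is harmless.
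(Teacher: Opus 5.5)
Your proposal is correct and matches the paper's proof essentially line by line. Like the paper, you factor out the bounded multiplier moments and reduce each term to $\eta^{p/2}S_p/\|\widehat\Gamma_n\|_{\rm HS}^{p/2}$ for $p\in\{3,4,6\}$, using $S_3^2$ for the cross term of $L_3^{\xi,\mathsf{bs}}$ and the rank-one representation $\sqrt{\eta}\,g_iM_i$ for the Gaussian analogues; you then close with the empirical moment bound on $S_p$ and the lower bound on $\|\widehat\Gamma_n\|_{\rm HS}$.
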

\begin{proof}
  See Appendix~\ref{sec:prf_bs_highD_clt_biometrika19}.
\end{proof}

All four quantities converge to zero under condition~\eqref{eqn:highD_clt_cond}(i).
Hence the conditional version of Lemma~\ref{lem:clt_biometrika19} gives
\begin{equation}
  d_K^\ast\left( \|\Xi_n^{\bs}\|_{\rm F}^2,\|\mathsf{G}_{\hGamma_n}\|_{\rm F}^2 \right) \xrightarrow{\PP_X}0, \label{eqn:bs_highD_clt_step1_1}
\end{equation}
where, conditionally on the data, $\mathsf{G}_{\hGamma_n}\sim N(0,\hGamma_n)$.

It remains to replace $\hGamma_n$ by $\Gamma$. Set
\[
a_n:= \frac{|\tr(\hGamma_n-\Gamma)|} {\|\Gamma\|_{\rm HS}}, \qquad
b_n:= \frac{\|\hGamma_n-\Gamma\|_{\rm HS}} {\|\Gamma\|_{\rm HS}}.
\]
By \eqref{eqn:bs_highD_clt_negligible}, $a_n=o_{\PP_X}(1)$ and $b_n=o_{\PP_X}(1)$. On $\{b_n>0\}$, apply Lemma~\ref{lem:chi2_mixture_comparison} conditionally with
\[
t_n:=\|\Gamma\|_{\rm HS}\sqrt{b_n}.
\]
Then
\[
\frac{|\tr(\hGamma_n-\Gamma)|+t_n} {\|\Gamma\|_{\rm HS}} = a_n+\sqrt{b_n} =o_{\PP_X}(1),
\]
while
\[
\frac{t_n^2} {\|\hGamma_n-\Gamma\|_{\rm HS}^2} = \frac1{b_n}, \qquad \frac{t_n} {\|\hGamma_n-\Gamma\|_{\rm op}} \ge \frac1{\sqrt{b_n}}.
\]
On $\{b_n=0\}$ the two covariance operators coincide.
Therefore,
\[
d_K^\ast\left( \|\mathsf{G}_{\hGamma_n}\|_{\rm F}^2,\|\mathsf{G}\|_{\rm F}^2 \right) \xrightarrow{\PP_X}0.
\]
Together with \eqref{eqn:bs_highD_clt_step1_1}, this gives
\begin{equation}
  d_K^\ast\left( \|\Xi_n^{\bs}\|_{\rm F}^2,\|\mathsf{G}\|_{\rm F}^2 \right) \xrightarrow{\PP_X}0. \label{eqn:bs_highD_clt_step1}
\end{equation}

\paragraph{Step 2: Conditional Gaussian approximation of $\eta^{-1}\|\cT_n^{\bs}-\cT_n\|_{\rm F}^2$.} 

On $\cG_n^{\rm lin}$, Corollary~\ref{cor:conditional_bs_linearization} gives
\[
\PP^\ast \left\{ \|\mathsf{E}_n^{\bs}\| > \eta^{-1/2}r_n^{\rm lin} \right\} \le n^{-9}.
\]

Applying Lemma~\ref{lem:g_approx_additive_remainder} under $\PP^\ast$ with
\begin{align*}
  \mathsf{A}_n^\bs:= \Xi_n^\bs, \qquad  \mathsf{B}_n^\bs := \eta^{-1/2}(\cT_n^{\bs}-\cT_n), \qquad \mathsf{E}_n^{\bs} := \mathsf{B}_n^{\bs}-\mathsf{A}_n^{\bs}, \qquad \mathsf{G}_n := \mathsf{G}
\end{align*}
and
\begin{align*}
  e_n := \sqrt{r} \eta^{-1/2} r_n^{\rm lin}, \qquad L_n^2 := C_L(\tr\Gamma)\log n, \qquad h_n:= 2L_n e_n + e_n^2,
\end{align*}
where $C_L > 0$ is a sufficiently large constant so that $\PP\{\|\mathsf{G}\|_{\rm F}>L_n\}\le n^{-9}$ (see Lemma~\ref{lem:chi2_concentration}), gives
\begin{align*}
  d^*_K \rbr{\|\eta^{-1/2}(\cT_n^{\bs}-\cT_n)\|_{\rm F}^2, \|\mathsf{G}\|_{\rm F}^2} &\leq 2 d^*_K \rbr{\|\Xi_n^{\bs}\|_{\rm F}^2, \|\mathsf{G}\|_{\rm F}^2} + w_n(h_n) + \PP^* \{\|\mathsf{E}_n^{\bs}\|_{\rm F}>e_n\}\\
  &\qquad + \PP^* \{\|\mathsf{G}\|_{\rm F}>L_n\}.
\end{align*}
We bound each term on the right-hand side. 
\begin{itemize}[leftmargin=1.5em]
  \item \eqref{eqn:bs_highD_clt_step1} gives that $d^*_K \rbr{\|\Xi_n^{\bs}\|_{\rm F}^2, \|\mathsf{G}\|_{\rm F}^2} \xrightarrow{\PP_X}0$.
  
  \item Lemma~\ref{lem:chi2_anti_concentration} gives
  \[
    w_n(h_n) = \sup_{t\in\RR} \PP\{t<\|\mathsf{G}\|_{\rm F}^2\le t+h_n\} \lesssim \sqrt{\frac{h_n}{\|\Gamma\|_{\rm HS}}} = \sqrt{\frac{e_n (2L_n+e_n)}{\|\Gamma\|_{\rm HS}}}.
  \]
  We will show that $e_nL_n/\|\Gamma\|_{\rm HS} \to 0$ and $e_n^2/\|\Gamma\|_{\rm HS} \to 0$ in the sequel, which implies that
  \begin{align*}
    w_n(h_n) = o(1).
  \end{align*}

  Using the definition of $r_n^{\rm lin}$ and $\tau_\ast^\dagger\lesssim\tau_\ast$, we have
  \begin{equation}
    e_n \lesssim \Delta_{\min}^{1/2}\kappa_\ast^2 r^{3/2}\sqrt{\widetilde\eta}\, (\log n) \bar\nu(1+\bar\nu). \label{eqn:bs_highD_clt_step2_e}
  \end{equation}
  By Lemma~\ref{lem:Gamma_properties}(ii) and (vi),
  \[
    \|\Gamma\|_{\rm HS} \gtrsim \frac{\|\fC\|_{\rm HS}}{\Delta_{\max}}, \qquad \tr\Gamma \lesssim \frac{\lambda_\times^2}{\Delta_{\min}}.
  \]
  Together with \eqref{eqn:bs_highD_clt_step2_e}, $\Delta_{\min}\Delta_{\max}\le\lambda_1^2$, $\lambda_1^2\bar\nu^2\le\lambda_\times^2$, and $\Delta_{\max}\bar\nu\le\lambda_\times$, this gives
  \[
    \frac{e_n^2}{\|\Gamma\|_{\rm HS}} \lesssim \widetilde\eta\kappa_\ast^4r^3 (\log n)^2d_{\fC}^{1/2} (1+\bar\nu^2) =o(1),
  \]
  and
  \[
    \frac{e_nL_n}{\|\Gamma\|_{\rm HS}} \lesssim \kappa_\ast^2r^{3/2} \sqrt{\widetilde\eta}\, (\log n)^{3/2}d_{\fC}^{1/2} (1+\bar\nu) =o(1),
  \]
  where the last two relations follow from conditions~\eqref{eqn:highD_clt_cond}(i)
  and~\eqref{eqn:highD_clt_cond}(ii).

  \item Since $E_n^{\bs}$ has at most $r$ columns,
  \begin{align*}
    \PP^* \{\|\mathsf{E}_n^{\bs}\|_{\rm F}>e_n\} \le \PP^* \{\|\mathsf{E}_n^{\bs}\|_{\rm op}> \eta^{-1/2} r_n^{\rm lin} \} \le n^{-9}.
  \end{align*}
  where the last inequality follows from Corollary~\ref{cor:conditional_bs_linearization}.

  \item $\PP^* \{\|\mathsf{G}\|_{\rm F}>L_n\} \le n^{-9}$ by the choice of $C_L$ and Lemma~\ref{lem:chi2_concentration}.

\end{itemize}

Therefore,
\begin{align}
  d_K^\ast\left( \eta^{-1}\|\cT_n^{\bs}-\cT_n\|_{\rm F}^2, \|\mathsf{G}\|_{\rm F}^2 \right) \xrightarrow{\PP_X}0. \label{eqn:bs_highD_clt_step2}
\end{align}

\paragraph{Step 3: Conditional Gaussian approximation of
$\eta^{-1}\|\sin\Theta(U_n^{\bs},U_n)\|_F^2$.} 

We first gather some useful facts.
\begin{itemize}[leftmargin=1.5em]
     \item Define $\cE_{\tau,n} := \{ \|\cT_n\|\le 2\tau_\ast^\dagger,\; \|\cT_n^{\bs}\|\le 2\tau_\ast^\dagger\}$.
     On $\cG_n^{\rm lin}$, Corollary~\ref{cor:conditional_bs_linearization} gives $\PP^\ast(\cE_{\tau,n}^c)\le n^{-9}$.

     \item $\tau_\ast^\dagger$ can be taken uniformly small. Indeed, $\tau_\ast^\dagger\lesssim\tau_\ast$, and the same calculation as in \eqref{eqn:tau_leq1} gives
     \[
     \tau_\ast^\dagger \le \frac{C}{\sqrt{C_{\bs}}}\, \frac{1}{n^\epsilon\log n},
     \]
     where $C_{\bs}$ denotes the sufficiently large constant in \cref{lem:convergence_rate_optimal-trunc_bs}. By enlarging $C_{\bs}$ if necessary, we may
     therefore assume
     \begin{equation}
          \tau_\ast^\dagger\le \frac16. \label{eqn:tau_le16}
     \end{equation}
     In particular, $4(\tau_\ast^\dagger)^2<1$.

     \item Since $\sigma_{\min}(\cC_n)^2 = \lambda_{\min}(\cC_n^\top \cC_n)=1-\|\cS_n^\top \cS_n\|$, and similarly for $\cC_n^{\bs}$, we have, on $\cE_{\tau,n}$,
     \[
     \sigma_{\min}(\cC_n)^2 \wedge \sigma_{\min}(\cC_n^{\bs})^2 \ge 1-4(\tau_\ast^\dagger)^2>0.
     \]
     Moreover,
     \[
     (U_n^{\bs})^\top U_n = (\cC_n^{\bs})^\top \left( I_r+(\cT_n^{\bs})^\top\cT_n \right)\cC_n,
     \]
     where $\|(\cT_n^{\bs})^\top\cT_n\| \le 4(\tau_\ast^\dagger)^2<1$.
     Hence $\cC_n$, $\cC_n^{\bs}$, and $(U_n^{\bs})^\top U_n$ are invertible on $\cE_{\tau,n}$, so Lemma~\ref{lem:alignment_diff_bootstrap} applies, and we can define the self-adjoint positive-semidefinite operator $\cQ_n:\HH\to\HH$ on $\cE_{\tau,n}$ through
     \[
     \begin{aligned}
     \Finner{M}{\cQ_nM} := \left\| \cC_n \cC_n^\top\cT_n^\top M \cC_n^{\bs} \right\|_{\rm F}^2+ \left\| (I_{d-r}-\cS_n \cS_n^\top)M \cC_n^{\bs} \right\|_{\rm F}^2, \qquad \forall M \in \HH.
     \end{aligned}
     \]
     Lemma~\ref{lem:alignment_diff_bootstrap}(ii), with
     $(U_1,U_2)=(U_n,U_n^{\bs})$, then gives
     \begin{equation}
          \eta^{-1} \|\sin\Theta(U_n^{\bs},U_n)\|_{\rm F}^2 = \Fnorm[\big]{\cQ_n^{1/2} \eta^{-1/2}(\cT_n^{\bs}-\cT_n)}^2 \qquad\text{on }\cE_{\tau,n}. \label{eqn:bs_highD_B_relation}
     \end{equation}

     We next bound $\cQ_n^{1/2}-\cI$. On $\cE_{\tau,n}$, for any $M\in\HH$,
     \[
     \left\| \cC_n \cC_n^\top\cT_n^\top M \cC_n^{\bs} \right\|_{\rm F}^2 \le 4(\tau_\ast^\dagger)^2\|M\|_{\rm F}^2.
     \]
     Also, $\sigma_{\min}(I_{d-r}-\cS_n \cS_n^\top) \ge 1-\|\cS_n\|^2 \ge 1-4(\tau_\ast^\dagger)^2$ and $\sigma_{\min}(\cC_n^{\bs})^2 \ge 1-4(\tau_\ast^\dagger)^2$.
     Therefore,
     \[
     (1-4(\tau_\ast^\dagger)^2)^3\|M\|_{\rm F}^2
     \le
     \Finner{M}{\cQ_nM}
     \le
     (1+4(\tau_\ast^\dagger)^2)\|M\|_{\rm F}^2.
     \]
     Since $(1-4x)^3\ge 1-12x$ for $x\in[0,1/4]$, setting $e_n:=12(\tau_\ast^\dagger)^2$ gives
     \[
     (1-e_n)\cI \preceq \cQ_n \preceq (1+e_n)\cI,
     \]
     where by \eqref{eqn:tau_le16}, $e_n \le \frac13 < \frac12$. Thus, on $\cE_{\tau,n}$,
     \begin{align}
          \|\cQ_n^{1/2}-\cI\|_{{\rm op}} \le \max\{1-(1-e_n)^{1/2},(1+e_n)^{1/2}-1\} \le e_n. \label{eqn:bs_highD_E_bound}
     \end{align}

\end{itemize}

Note that $\cQ_n$ is only defined on $\cE_{\tau,n}$. Define the random operator $\tcQ_n:\HH\to\HH$ by
\[
\widetilde\cQ_n:=
\begin{cases}
\cQ_n, & \text{on }\cE_{\tau,n},\\
\cI, & \text{on }\cE_{\tau,n}^c.
\end{cases}
\]

Applying Lemma~\ref{lem:g_approx_multiplicative_remainder} under $\PP^\ast$
with
\[
\mathsf{A}_n^{\bs} := \eta^{-1/2}(\cT_n^{\bs}-\cT_n), \qquad \mathsf{B}_n^{\bs} :=\tcQ_n^{1/2}\mathsf{A}_n^{\bs}, \qquad \mathsf{E}_n^{\bs} := \tcQ_n^{1/2}-\cI, \qquad \mathsf{G}_n:=\mathsf{G},
\]
and
\[
e_n=12(\tau_\ast^\dagger)^2,
\qquad
L_n^2:=C_L(\tr\Gamma)\log n,
\qquad
h_n:= \left( \frac{1}{(1-e_n)^2}-1 \right)L_n^2,
\]
where $C_L>0$ is sufficiently large so that
$\PP\{\|\mathsf{G}\|_{\rm F}>L_n\}\le n^{-9}$, gives
\[
\begin{aligned}
d_K^\ast \left( \|\mathsf{B}_n^{\bs}\|_{\rm F}^2,\|\mathsf{G}\|_{\rm F}^2 \right)
\le{}& d_K^\ast \left( \eta^{-1}\|\cT_n^{\bs}-\cT_n\|_{\rm F}^2, \|\mathsf{G}\|_{\rm F}^2 \right)+ w_n(h_n) + \PP^\ast \left\{ \|\mathsf{E}_n^{\bs}\|_{{\rm op}}>e_n \right\}\\
&+ \PP^*\{\|\mathsf{G}\|_{\rm F}>L_n\}.
\end{aligned}
\]
We consider each term as below.
\begin{itemize}[leftmargin=1.5em]
     \item By \eqref{eqn:bs_highD_B_relation} and $d_K^* \leq 1$,
     \begin{align*}
          d_K^\ast\left( \eta^{-1}\|\sin\Theta(U_n^{\bs},U_n)\|_{\rm F}^2, \|\mathsf{B}_n^{\bs}\|_{\rm F}^2 \right) \le \PP^\ast(\cE_{\tau,n}^c), \qquad \PP_X \text{ almost surely}.
     \end{align*}
     Thus, on $\cG_n^{\rm lin}$,
     \begin{align*}
          d_K^\ast\left( \eta^{-1}\|\sin\Theta(U_n^{\bs},U_n)\|_{\rm F}^2, \|\mathsf{B}_n^{\bs}\|_{\rm F}^2 \right) \le n^{-9}.
     \end{align*}

     \item \eqref{eqn:bs_highD_clt_step2} gives $d_K^\ast\left( \eta^{-1}\|\cT_n^{\bs}-\cT_n\|_{\rm F}^2, \|\mathsf{G}\|_{\rm F}^2 \right) \xrightarrow{\PP_X}0$.
     
     \item Since $e_n<1/2$, we have $h_n \lesssim e_n L_n^2 \lesssim (\tau_\ast^\dagger)^2L_n^2$.
     Thus Lemma~\ref{lem:chi2_anti_concentration} gives
     \[
     w_n(h_n) = \sup_{t\in\RR} \PP\{t<\|G\|_{\rm F}^2\le t+h_n\} \lesssim \sqrt{\frac{h_n}{\|\Gamma\|_{{\rm HS}}}} \lesssim \tau_\ast^\dagger \sqrt{\frac{L_n^2}{\|\Gamma\|_{{\rm HS}}}}.
     \]

     By Lemma~\ref{lem:Gamma_properties}(ii) and (vi),
     \[
     \frac{\tr\Gamma}{\|\Gamma\|_{{\rm HS}}} \lesssim \kappa_\Delta\sqrt{d_{\cC}},
     \]
     which combined with $\tau_\ast^\dagger\lesssim\tau_\ast$ gives
     \[
     \frac{(\tau_\ast^\dagger)^2L_n^2} {\|\Gamma\|_{{\rm HS}}} \lesssim \teta \kappa_\ast^2\kappa_\Delta r\bnu^2 d_{\fC}^{1/2} (\log n)^2 =o(1),
     \]
     where the last relation follows from condition~\eqref{eqn:highD_clt_cond}(ii).
     Consequently, $w_n(h_n)=o(1)$.
     
     \item \eqref{eqn:bs_highD_E_bound} implies $\PP_X$ almost surely, $\PP^\ast \left\{ \|\mathsf{E}_n^{\bs}\|_{{\rm op}}>e_n \right\}=0$.
     
     \item $\PP^* \{\|\mathsf{G}\|_{\rm F}>L_n\} \le n^{-9}$ by the choice of $C_L$ and Lemma~\ref{lem:chi2_concentration}.

\end{itemize}

Combining the above bounds gives
\begin{align*}
     d_K^\ast\left( \eta^{-1} \|\sin\Theta(U_n^{\bs},U_n)\|_{\rm F}^2, \|\mathsf{G}\|_{\rm F}^2 \right) \xrightarrow{\PP_X}0.
\end{align*}

Finally, Theorem~\ref{thm:highD_clt} gives
\[
d_K\left( \eta^{-1}\|\sin\Theta(U_n,U_\ast)\|_{\rm F}^2, \|\mathsf{G}\|_{\rm F}^2 \right) \to0.
\]
Triangle inequality then proves Theorem~\ref{thm:bs_highD_clt}.

% ------------------------------------------------

\subsection{Proof of Theorem~\ref{thm:bs_row_clt}}
\label{sec:prf_bs_row_clt}

For notation simplicity, write
\[
    U:=U_n,\qquad
    U^{\bs}:=U_n^{\bs},\qquad
    \cT:=\cT_n,\qquad
    \cT^{\bs}:=\cT_n^{\bs},
\]
and define
\[
    R:=\sgn(U^\top U_\ast),\qquad
    R^{\bs}:=\sgn((U^{\bs})^\top U_\ast),\qquad
    Q^{\bs}:=\sgn((U^{\bs})^\top U).
\]
Set
\[
    V_1:=UR,\qquad V_2:=U^{\bs}R^{\bs}.
\]
Thus $V_1$ and $V_2$ are both Procrustes-aligned with $U_\ast$.

We apply Lemma~\ref{lem:alignment_diff_bootstrap} with $(U_1,U_2)=(V_1,V_2)$. Note the following facts.
\begin{itemize}[leftmargin=1.5em]
    \item Let $M_{21}:=V_2^\top V_1 = (R^{\bs})^\top (U^{\bs})^\top U R$.
    By orthogonal equivariance of the polar factor,
    \[
        \sgn(M_{21}) = (R^{\bs})^\top \sgn(U^{\bs\top} U) R  = (R^{\bs})^\top Q^{\bs}R.
    \]
    Hence,
    \begin{align*}
        V_2\sgn(M_{21})-V_1 = U^{\bs}R^{\bs}(R^{\bs})^\top Q^{\bs}R-UR = (U^{\bs}Q^{\bs}-U)R.
    \end{align*}

    \item Let $\cC_1,\cS_1$ and $\cC_2,\cS_2$ denote the cosine and sine matrices of $V_1$ and $V_2$ relative to $U_\ast$. Right multiplication by an orthogonal matrix does not change the tangent matrix. Hence the tangent matrices of $V_1$ and $V_2$ are respectively $\cT$ and $\cT^{\bs}$.
\end{itemize}
Then Lemma~\ref{lem:alignment_diff_bootstrap}(i) gives
\begin{equation}
    (U^{\bs}Q^{\bs}-U)R = U_\ast A+U_{\ast,\perp}B, \label{eqn:bs_row_clt_align_diff}
\end{equation}
where
\begin{align*}
    A &= \cC_1\big[(M_{21}^\top M_{21})^{1/2}-I_r\big] - \cC_1\cC_1^\top\cT^\top (\cT^{\bs}-\cT) \cC_2\sgn(M_{21}), \\
    B &= \cS_1\big[(M_{21}^\top M_{21})^{1/2}-I_r\big] + (I_{d-r}-\cS_1\cS_1^\top) (\cT^{\bs}-\cT) \cC_2\sgn(M_{21}).
\end{align*}

We provide some intuition before we proceed. Recall from Appendix~\ref{sec:prf_bs_highD_clt} that $\cT^\bs - \cT \approx \sqrt{\eta} \sum_{i=1}^{n}\xi_i^{\bs}$, where
\[
    \xi_i^{\bs} := \sqrt{\eta}(w_i^{\bs}-1) (\cI-\eta\cL)^{n-i}(Z_iY_i^\top) \in\HH, \qquad i\in[n].
\]
Meanwhile, since $V_1$ and $V_2$ are already aligned with $U_\ast$,  $\cC_1 \approx \cC_2 \approx \sgn(M_{21}) \approx I_r$. These observations motivate the following decomposition of $(U^{\bs}Q^{\bs}-U)R$:
\begin{equation}
    (U^{\bs}Q^{\bs}-U)R = W^{\bs}+\Psi^{\bs}, \label{eqn:bs_row_clt_decomp_joint}
\end{equation}
where
\begin{align*}
    W^{\bs} &= U_{\ast,\perp} \sqrt{\eta}\sum_{i=1}^n\xi_i^{\bs}, \\
    \Psi^{\bs} &= U_\ast A + U_{\ast,\perp}[B-(\cT^\bs - \cT)] + U_{\ast,\perp}E^{\bs}, \qquad E^{\bs} := (\cT^\bs - \cT) - \sqrt{\eta}\sum_{i=1}^n\xi_i^{\bs}.
\end{align*}
Fix $m\in[d]$ and define
\[
    u:=u_m:=U_{\ast,\perp}^\top e_m, \qquad \widetilde u:=\Lambda_{\ast,\perp}^{1/2}u, \qquad \Sigma^\ast_{U,m}:=\eta\Gamma_u.
\]
Identifying the $m$-th row with a column vector as in the proof of
Theorem~\ref{thm:row_clt}, \eqref{eqn:bs_row_clt_decomp_joint} gives
\begin{align}
    \big[(U^{\bs}Q^{\bs}-U)R\big]_{m,\cdot}^\top = W_u^{\bs}+\Psi_u^{\bs}, \label{eqn:bs_row_clt_decomp_row}
\end{align}
where
\begin{align*}
    W_u^{\bs} &:= \sqrt{\eta}\sum_{i=1}^n(\xi_i^{\bs})^\top u,\\
    \Psi_u^{\bs} &:= A^\top U_\ast^\top e_m + [B-(\cT^\bs - \cT)]^\top u + (E^{\bs})^\top u.
\end{align*}

\paragraph{Step 1: Gaussian approximation of $W_u^{\bs}$.}

Define the projected bootstrap summands
\[
    \xi_{u,i}^{\bs} := (\xi_i^{\bs})^\top u = \sqrt{\eta}(w_i^{\bs}-1) \left[ (\cI-\eta\cL)^{n-i}(Z_iY_i^\top) \right]^\top u \in\RR^r.
\]
Then
\[
    W_u^{\bs} = \sqrt{\eta}\sum_{i=1}^n\xi_{u,i}^{\bs}.
\]
Conditionally on $\cF_X$, the variables $\{\xi_{u,i}^{\bs}\}_{i=1}^n$ are independent and centered. Define their conditional cumulative covariance by
\[
    \widehat\Gamma_{u,n}
    :=
    \sum_{i=1}^n
    \cov^\ast(\xi_{u,i}^{\bs}).
\]
Then $\cov^\ast(W_u^{\bs}) = \eta\widehat\Gamma_{u,n}$.
Since $\EE(w_i^{\bs}-1)^2=1$,  we have $\EE_X\widehat\Gamma_{u,n}=\Gamma_{u,n}$ and
\[
    \widehat\Gamma_{u,n} = \eta\sum_{i=1}^n \left[ (\cI-\eta\cL)^{n-i}(Z_iY_i^\top) \right]^\top uu^\top \left[ (\cI-\eta\cL)^{n-i}(Z_iY_i^\top) \right].
\]
We first compare $\widehat\Gamma_{u,n}$ with $\Gamma_u$.
For $p\in\{3,4\}$, we claim that
\[
    \sum_{i=1}^n \left\| \left[ (\cI-\eta\cL)^{n-i}(Z_iY_i^\top) \right]^\top u \right\|^p = O_{\PP_X}\left( \frac{ r^{p/2}\lambda_1^{p/2}\|\widetilde u\|^p }{\teta} \right).
\]
Indeed, for each $j\in[r]$, let
\[
    D_j:=\diag\{1-\eta\Delta_{\ell j}\}_{\ell\in[d-r]}.
\]
The $j$-th coordinate of the vector inside the norm is $[Y_i]_j u^\top D_j^{n-i}Z_i$.
By the power-mean inequality, Cauchy--Schwarz, and Lemma~\ref{lem:kth_moment},
\begin{align*}
    \EE \left| [Y_i]_j\,u^\top D_j^{n-i}Z_i \right|^p &\le \big(\EE|[Y_i]_j|^{2p}\big)^{1/2} \big( \EE|u^\top D_j^{n-i}Z_i|^{2p} \big)^{1/2} \\
    &\lesssim \lambda_j^{p/2} \left\| \Lambda_{\ast,\perp}^{1/2} D_j^{n-i}u \right\|^p \\
    &\le \lambda_1^{p/2} (1-\teta)^{p(n-i)} \|\widetilde u\|^p.
\end{align*}
Therefore,
\[
    \EE \left\| \left[ (\cI-\eta\cL)^{n-i}(Z_iY_i^\top) \right]^\top u \right\|^p \lesssim r^{p/2}\lambda_1^{p/2} (1-\teta)^{p(n-i)} \|\widetilde u\|^p.
\]
Summing the geometric series and applying Markov's inequality proves
the claim.

Using the $p=4$ case and independence gives
\begin{align*}
    \EE_X \|\widehat\Gamma_{u,n}-\Gamma_{u,n}\|_{\rm F}^2 \lesssim \eta^2 \sum_{i=1}^n \EE \left\| \left[ (\cI-\eta\cL)^{n-i}(Z_iY_i^\top) \right]^\top u \right\|^4 \lesssim \teta\kappa_\ast^2r^2 \|\widetilde u\|^4.
\end{align*}
Hence
\[
    \|\widehat\Gamma_{u,n}-\Gamma_{u,n}\|_{\rm F} = O_{\PP_X}\left( \sqrt{\teta}\, \kappa_\ast r\|\widetilde u\|^2 \right).
\]

By Lemma~\ref{lem:Gamma_u_properties}(iii),
\[
    \lambda_{\min}(\Gamma_u) \gtrsim \frac{\lambda_r\|\widetilde u\|^2}{\Delta_{\max}} \lambda_{\min}(\widetilde\fC).
\]
Combining this with the approximation in Lemma~\ref{lem:Gamma_u_properties}(iv), as in
Step~2 of the proof of Lemma~\ref{lem:W_berry_esseen}, gives
\begin{align*}
    \left\| \Gamma_u^{-1/2} (\widehat\Gamma_{u,n}-\Gamma_u) \Gamma_u^{-1/2} \right\|_{\rm F} = O_{\PP_X}\left( \sqrt{\teta}\, \kappa_\ast^2r \lambda_{\min}^{-1}(\widetilde\fC) + \teta\kappa_\ast^5r^{5/2} \lambda_{\min}^{-2}(\widetilde\fC) \right).
\end{align*}
The first condition in Theorem~\ref{thm:row_clt}, $\teta\kappa_\ast^6r^{7/2} \lambda_{\min}^{-3}(\widetilde\fC) \to0$,
implies that the right-hand side is $o_{\PP_X}(1)$. Thus, with
$\PP_X$-probability tending to one, $\widehat\Gamma_{u,n}$ is
positive definite and
\[
    \lambda_{\min}(\widehat\Gamma_{u,n}) \gtrsim \lambda_{\min}(\Gamma_u).
\]

We now apply Theorem~\ref{thm:berry_esseen} conditionally on $\cF_X$. The corresponding
Berry--Esseen quantity satisfies
\begin{align*}
    \gamma_u^{\bs} &:= \sum_{i=1}^n \EE^\ast \left\| (\eta\widehat\Gamma_{u,n})^{-1/2} \sqrt{\eta}\,\xi_{u,i}^{\bs} \right\|^3\\
    &= \sum_{i=1}^n \EE^\ast \left\| \widehat\Gamma_{u,n}^{-1/2} \xi_{u,i}^{\bs} \right\|^3 \\
    &= O_{\PP_X}\left( \sqrt{\teta}\, \kappa_\ast^3r^{3/2} \lambda_{\min}^{-3/2}(\widetilde\fC) \right),
\end{align*}
where the last line follows from Assumption~\ref{assumption:w}, the $p=3$ empirical
moment bound above, and the lower bound for $\lambda_{\min}(\widehat\Gamma_{u,n})$.
Therefore, Theorem~\ref{thm:berry_esseen} yields
\begin{align*}
    \sup_{A\in\mathcal A_r} \left| \PP^\ast\{W_u^{\bs}\in A\} - N(0,\eta\widehat\Gamma_{u,n})\{A\} \right| = O_{\PP_X}\left( \sqrt{\teta}\, \kappa_\ast^3r^{7/4} \lambda_{\min}^{-3/2}(\widetilde\fC) \right) = o_{\PP_X}(1).
\end{align*}
The last convergence follows from the first condition in Theorem~\ref{thm:row_clt}.

Moreover, the preceding relative covariance bound and Lemma~\ref{lem:TV_gaussian} give
\[
    \sup_{A\in\mathcal A_r}
    \left|
      N(0,\eta\widehat\Gamma_{u,n})\{A\}
      -
      N(0,\eta\Gamma_u)\{A\}
    \right|
    =
    o_{\PP_X}(1).
\]
Consequently,
\[
    \sup_{A\in\mathcal A_r}
    \left|
      \PP^\ast\{W_u^{\bs}\in A\}
      -
      N(0,\Sigma^\ast_{U,m})\{A\}
    \right|
    \xrightarrow{\PP_X}0.
\]

\paragraph{Step 2: Accounting for the higher-order term $\Psi_u^{\bs}$.}

Under the assumptions of Theorem~\ref{thm:bs_row_clt}, the conditions of Proposition~\ref{prop:linearization} hold with $\delta_p\to0$. As in the proof of
Theorem~\ref{thm:bs_highD_clt}, condition~\eqref{eqn:cond_sample_sz} implies Assumption~\ref{assumption:d}, while the Haar initialization satisfies Assumption~\ref{assumption:init} with probability $1-o(1)$. Thus, after intersecting with this initialization event, Corollary~\ref{cor:conditional_bs_linearization} gives an $\cF_X$-measurable event $\cG_n^{{\rm lin}}$ such that $\PP_X(\cG_n^{{\rm lin}})\to1$ and, on $\cG_n^{{\rm lin}}$,
\[
    \|\cT\|\le2\tau_\ast^\dagger, \qquad \PP^\ast\{\|\cT^{\bs}\|>2\tau_\ast^\dagger\}\le n^{-9}, \qquad \PP^\ast\{\|E^{\bs}\|>r_n^{{\rm lin}}\}\le n^{-9},
\]
where
\[
    \tau_\ast^\dagger\lesssim\tau_\ast \asymp \kappa_\ast\bar\nu\sqrt{r\teta\log n}, \qquad r_n^{{\rm lin}} \lesssim \tau_\ast^\dagger \sbr[\big]{C_\psi^2\kappa_\ast\sqrt{r\teta\log n} +\tau_\ast^\dagger}.
\]

Define
\[
    \cE_{\tau,n} := \{ \|\cT\|\le2\tau_\ast^\dagger,\, \|\cT^{\bs}\|\le2\tau_\ast^\dagger \}.
\]
On $\cG_n^{{\rm lin}}$, Corollary~\ref{cor:conditional_bs_linearization} gives $\PP^\ast(\cE_{\tau,n}^c)\le n^{-9}$.

To control $A$ and $B-(\cT^\bs -\cT)$ defined in \eqref{eqn:bs_row_clt_align_diff} on $\cE_{\tau,n}$, we use the following bounds.
\begin{itemize}[leftmargin=1.5em]
    \item Since $V_1$ and $V_2$ are Procrustes-aligned with $U_\ast$, their
    cosine matrices $\cC_1$ and $\cC_2$ are symmetric positive definite.
    Using $\cS_i=\cT_i\cC_i$ and $\cC_i^\top \cC_i+\cS_i^\top \cS_i=I_r$ gives
    \[
        \cC_1=(I_r+\cT^\top\cT)^{-1/2}, \qquad \cC_2= (I_r+(\cT^{\bs})^\top\cT^{\bs})^{-1/2}.
    \]
    Therefore,
    \[
        \|\cC_i-I_r\| \lesssim (\tau_\ast^\dagger)^2, \qquad \|\cS_i\| \lesssim \tau_\ast^\dagger, \qquad i=1,2.
    \]
    Moreover, since $M_{21}=\cC_2^\top \cC_1+ \cS_2^\top \cS_1$, we have
    \[
        \|M_{21}-I_r\| \lesssim (\tau_\ast^\dagger)^2.
    \]
    
    \item Lemma~\ref{lem:alignment_diff_bootstrap}(iii) gives
    \[
        \|(M_{21}^\top M_{21})^{1/2}-I_r\| \le \|\sin\Theta(V_2,V_1)\|^2 \le (1+\|\cT\|^2)\|\Delta\|^2.
    \]
    On $\cE_{\tau,n}$,
    \[
        \|\cT^\bs - \cT\| \le \|\cT\|+\|\cT^{\bs}\| \le4\tau_\ast^\dagger,
    \]
    and hence
    \[
        \|(M_{21}^\top M_{21})^{1/2}-I_r\| \lesssim (\tau_\ast^\dagger)^2.
    \]

    \item  Since $\sgn(M_{21})=M_{21}(M_{21}^\top M_{21})^{-1/2}$, we have
    \begin{align*}
        \|\sgn(M_{21})-I_r\| = \|(M_{21}-I_r)(M_{21}^\top M_{21})^{-1/2} + (M_{21}^\top M_{21})^{-1/2}-I_r\| \lesssim (\tau_\ast^\dagger)^2
    \end{align*}
    and consequently
    \[
        \|\cC_2\sgn(M_{21})-I_r\| \leq \|(\cC_2 - I_r) \sgn(M_{21})\| + \|\sgn(M_{21}) - I_r\| \lesssim (\tau_\ast^\dagger)^2.
    \]
\end{itemize}

It follows from the above bounds and the definition in \eqref{eqn:bs_row_clt_align_diff} that
\begin{align*}
    \|A\| &\lesssim (\tau_\ast^\dagger)^2 + \|\cT\|\,\|\Delta\| \lesssim (\tau_\ast^\dagger)^2;\\
    \|B-(\cT^\bs - \cT)\| &= \|\cS_1\big[(M_{21}^\top M_{21})^{1/2}-I_r\big]\| + \|\cS_1 \cS_1^\top(\cT^\bs - \cT) \cC_2\sgn(M_{21})\|\\
    &\quad +\|(\cT^\bs - \cT)(\cC_2\sgn(M_{21})-I_r)\|\\
    &\lesssim (\tau_\ast^\dagger)^3.
\end{align*}

Recall
\[
    \Psi_u^{\bs} = A^\top U_\ast^\top e_m + [B-(\cT^\bs - \cT)]^\top u + (E^{\bs})^\top u.
\]
Hence, on $\cG_n^{{\rm lin}}$, with conditional probability at least
$1-O(n^{-9})$,
\begin{align*}
    \|\Psi_u^{\bs}\| \lesssim (\tau_\ast^\dagger)^2 \|U_\ast^\top e_m\| + (\tau_\ast^\dagger)^3\|u\| + r_n^{{\rm lin}}\|u\| \lesssim \tau_\ast^2 \|U_\ast^\top e_m\| + r_n^{{\rm lin}}\|u\|.
\end{align*}
Here the last line uses
$\tau_\ast^\dagger\lesssim\tau_\ast$,
$\tau_\ast^\dagger=o(1)$, and the fact that the definition of
$r_n^{{\rm lin}}$ contains a term of order
$(\tau_\ast^\dagger)^2$.

By Lemma~\ref{lem:Gamma_u_properties}(iii),
\begin{align*}
    \lambda_{\min}(\Sigma^\ast_{U,m}) = \eta\lambda_{\min}(\Gamma_u) \gtrsim \eta \frac{ \lambda_r\|\widetilde u\|^2 }{ \Delta_{\max} } \lambda_{\min}(\widetilde\fC) \gtrsim \teta \frac{\|\widetilde u\|^2}{\lambda_1} \lambda_{\min}(\tfC).
\end{align*}
Consequently,
\[
    \PP^\ast \cbr[\big]{\norm[\big]{(\Sigma^\ast_{U,m})^{-1/2} \Psi_u^{\bs}} \le \zeta^{\bs}} \ge1-O(n^{-9})
\]
on $\cG_n^{{\rm lin}}$, where $\zeta^{\bs}:=C(\zeta_1+\zeta_2)$ for a sufficiently large constant $C$, with
\begin{align*}
    \zeta_1 &:= \kappa_\ast^2\teta^{1/2} r(\log n)\bar\nu^2 \frac{ \lambda_1^{1/2}\|U_\ast^\top e_m\| }{ \|\widetilde u\| } \lambda_{\min}^{-1/2}(\widetilde\cC), \\
    \zeta_2 &:= \kappa_\ast^2\teta^{1/2} r(\log n)\bar\nu(1+\bar\nu) \frac{ \lambda_1^{1/2}\|u\| }{ \|\widetilde u\| } \lambda_{\min}^{-1/2}(\tfC).
\end{align*}
These are of the same orders as $\zeta_1$ and $\zeta_2$ in
Lemma~\ref{lem:row_clt_higher_order}. In particular, condition~\eqref{eqn:row_clt_cond1} gives
\[
    r^{1/4}\zeta^{\bs}\to0.
\]
Indeed,
\begin{align*}
    r^{1/2}\zeta_1^2&\lesssim \teta\kappa_\ast^4r^{5/2} (\log n)^2\bar\nu^2(1+\bar\nu)^2 \frac{ \lambda_1\|U_\ast^\top e_m\|^2 }{ \|\widetilde u\|^2 } \lambda_{\min}^{-1}(\tfC),\\
    r^{1/2}\zeta_2^2 &\lesssim \teta\kappa_\ast^4r^{5/2} (\log n)^2\bar\nu^2(1+\bar\nu)^2 \frac{ \lambda_1\|u\|^2}{ \|\widetilde u\|^2 } \lambda_{\min}^{-1}(\tfC),
\end{align*}
where for the first bound we used
$\bar\nu^4\le\bar\nu^2(1+\bar\nu)^2$.
Both terms vanish by condition~\eqref{eqn:row_clt_cond1}.

We now proceed as in the proof of Theorem~\ref{thm:row_clt}. For any
$A\in\mathcal A_r$, let $A^{\zeta}$ denote the signed enlargement defined before Theorem~\ref{thm:gaussian_cvx}. The decomposition
\[
    \big[(U^{\bs}Q^{\bs}-U)R\big]_{m,\cdot}^\top = W_u^{\bs}+\Psi_u^{\bs}
\]
and the preceding remainder bound imply
\begin{align*}
    \PP^\ast \cbr[\big]{(\Sigma^\ast_{U,m})^{-1/2}W_u^{\bs} \in A^{-\zeta^{\bs}}} - O(n^{-9})\le{} &\PP^\ast \cbr[\big]{(\Sigma^\ast_{U,m})^{-1/2} \big[(U^{\bs}Q^{\bs}-U)R\big]_{m,\cdot}^\top \in A}\\
    \le{} &\PP^\ast \cbr[\big]{(\Sigma^\ast_{U,m})^{-1/2}W_u^{\bs} \in A^{\zeta^{\bs}}} + O(n^{-9}).
\end{align*}

By Step~1,
\[
    \sup_{A\in\mathcal A_r} \left| \PP^\ast\left\{ (\Sigma^\ast_{U,m})^{-1/2}W_u^{\bs}\in A \right\} - \PP\{N(0,I_r)\in A\} \right| \xrightarrow{\PP_X}0.
\]
Hence, by Theorem~\ref{thm:gaussian_cvx},
\begin{align*}
    \PP^\ast\left\{
      (\Sigma^\ast_{U,m})^{-1/2}W_u^{\bs}
      \in A^{\zeta^{\bs}}
    \right\}
    &\le
    \PP\{N(0,I_r)\in A^{\zeta^{\bs}}\}
    +
    o_{\PP_X}(1)
    \\
    &\le
    \PP\{N(0,I_r)\in A\}
    +
    (0.59r^{1/4}+0.21)\zeta^{\bs}
    +
    o_{\PP_X}(1)
    \\
    &=
    \PP\{N(0,I_r)\in A\}
    +
    o_{\PP_X}(1).
\end{align*}
The corresponding lower bound follows in the same way using
$A^{-\zeta^{\bs}}$. Therefore,
\[
    \sup_{A\in\mathcal A_r}
    \left|
      \PP^\ast\left\{
        \big[(U^{\bs}Q^{\bs}-U)R\big]_{m,\cdot}
        \in A
      \right\}
      -
      N(0,\Sigma^\ast_{U,m})\{A\}
    \right|
    \xrightarrow{\PP_X}0.
\]

Finally, Theorem~\ref{thm:row_clt} gives
\[
    \sup_{A\in\mathcal A_r}
    \left|
      \PP\{[UR-U_\ast]_{m,\cdot}\in A\}
      -
      N(0,\Sigma^\ast_{U,m})\{A\}
    \right|
    \to0.
\]
Combining the last two displays by the triangle inequality yields
\[
    \sup_{A\in\mathcal A_r}
    \left|
      \PP^\ast\left\{
        \big[(U^{\bs}Q^{\bs}-U)R\big]_{m,\cdot}
        \in A
      \right\}
      -
      \PP\{[UR-U_\ast]_{m,\cdot}\in A\}
    \right|
    \xrightarrow{\PP_X}0.
\]
Recalling
\[
    U=U_n,\qquad
    U^{\bs}=U_n^{\bs},\qquad
    R=\sgn(U_n^\top U_\ast),\qquad
    Q^{\bs}=\sgn((U_n^{\bs})^\top U_n),
\]
this is exactly the conclusion of Theorem~\ref{thm:bs_row_clt}.

% ------------------------------------------------

\subsection{Proof of Lemma~\ref{lem:linearization-bs-truncation}}
\label{sec:prf_linearization-bs-truncation}

\begin{proof}

Starting from \eqref{eqn:linearization_trunc_bs2} and \eqref{eqn:bs_extra_term}, the same decomposition as in Lemma~\ref{lem:linearization-truncation} gives
\begin{align*}
    \oocT_n^{\bs} &= (\cI-\eta \cL)^{n-N_*} \oocT_{N_*}^{\bs} + \oocD_{N_*,n}^{\bs,(1,0)} + \oocD_{N_*,n}^{\bs,(1,1)} + \oocD_{N_*,n}^{\bs,(1,2)} + \oocD_{N_*,n}^{\bs,(2)} + \oocR_{N_*,n}^{\bs,H},
\end{align*}
where
\begin{align*}
    \oocD_{N_*,n}^{\bs,(1,0)} &:= \eta \sum_{i=N_* + 1}^n w_i^{\bs} (\cI-\eta \cL)^{n-i} (\ooZ_i \ooY_i^\top - \ooLambda_{*,\times});\\
    \oocD_{N_*,n}^{\bs,(1,1)} &:= \ooS_1^{\bs} + \ooS_2^{\bs} + \ooS_3^{\bs},\\
    \text{where} &\quad \ooS_1^{\bs} := \eta \sum_{i=N_* + 1}^n w_i^{\bs} (\cI-\eta \cL)^{n-i}[-\oocT_{i-1}^{\bs}(\ooY_i \ooY_i^\top - \ooLambda_*)],\\
    &\quad \ooS_2^{\bs} := \eta \sum_{i=N_* + 1}^n w_i^{\bs} (\cI-\eta \cL)^{n-i} [(\ooZ_i \ooZ_i^\top - \ooLambda_{*,\perp})\oocT_{i-1}^{\bs}],\\
    &\quad \ooS_3^{\bs} :=\eta \sum_{i=N_* + 1}^n w_i^{\bs} (\cI-\eta \cL)^{n-i} [-\oocT_{i-1}^{\bs} (\ooY_i \ooZ_i^\top- \ooLambda_{*,\times}^\top) \oocT_{i-1}^{\bs}];\\
    \oocD_{N_*,n}^{\bs,(1,2)} &:= - \eta \sum_{i=N_* + 1}^n (w_i^{\bs}-1) (\cI-\eta \cL)^{n-i} (\oocL \oocT_{i-1}^{\bs}) ;\\
    \oocD_{N_*,n}^{\bs,(2)} &:= \sum_{i=N_* + 1}^n (\cI-\eta \cL)^{n-i} \ooD_i^{\bs,(2)};\\
    \oocR_{N_*,n}^{\bs,H} &:= \sum_{i=N_* + 1}^n (\cI-\eta \cL)^{n-i} \ooR_i^{\bs,H}.
\end{align*}

Then
\begin{align*}
  &\norm[\Big]{\oocT_n^{\bs} - \eta \sum_{i=1}^n w_i^{\bs} (\cI-\eta \cL)^{n-i} (\ooZ_i \ooY_i^\top)}\\ 
  \leq &\underbrace{\norm[\big]{(\cI-\eta \cL)^{n-N_*} \oocT_{N_*}^{\bs}}}_{=:\ooalpha_1^{\bs}} + \underbrace{\norm[\big]{\eta \sum_{i=1}^{N_*} w_i^{\bs} (\cI-\eta \cL)^{n-i} (\ooZ_i \ooY_i^\top - \ooLambda_{*,\times})}}_{=:\ooalpha_2^{\bs}} \\
  + &\underbrace{\norm[\big]{\oocD_{N_*,n}^{\bs, (1,1)}}}_{=:\ooalpha_3^{\bs}} + \underbrace{\norm[\big]{\oocD_{N_*,n}^{\bs,(2)}}}_{=:\ooalpha_4^{\bs}} + \underbrace{\norm[\big]{\oocR_{N_*,n}^{\bs, H} - \eta \sum_{i=N_* + 1}^n w_i^{\bs} (\cI-\eta \cL)^{n-i} \ooLambda_{*,\times}}}_{=:\ooalpha_5^{\bs}}\\
  + &\underbrace{\norm[\big]{\eta \sum_{i=1}^{N_*} w_i^{bs} (\cI-\eta \cL)^{n-i} \ooLambda_{*,\times}}}_{=:\ooalpha_6^{\bs}} + \underbrace{\norm[\big]{\oocD_{N_*,n}^{\bs, (1,2)}}}_{=:\ooalpha_7^{\bs}}.
\end{align*}
The terms $\ooalpha_1^\bs,\ldots,\ooalpha_6^\bs$ are controlled exactly as their counterparts
$\alpha_1,\ldots,\alpha_6$ in the proof of Lemma~\ref{lem:linearization-truncation}, up to constants depending only on $C_w$. In particular, we have
\begin{align}
  \sum_{l=1}^6 \ooalpha_l^{\bs} \lesssim \ootau_\ast^\bs \sbr[\big]{C_\psi^2\kappa_\ast \sqrt{r\widetilde\eta\log n} +\ootau_*^\bs} + e^{-\widetilde\eta(n-N_\ast)} \kappa_\ast\nu_\infty\|H\|. \label{eqn:bs_alpha_1_to_6}
\end{align}
It remains to control the new multiplier term $\ooalpha_7^\bs$. Note that $\|\cL \oocT_{i-1}^\bs\| \leq 2\lambda_1 \|\oocT_{i-1}^\bs\|$. Moreover, similar to \eqref{eqn:tau_leq1}, we have $\ootau_*^{\bs} \leq \frac{1}{n^{\varepsilon} \log n}$ for sufficiently large constant $C_\bs$. Then on event $\{\|\overline\cT_{i-1}^\bs\|\le 2\tau_\ast^\bs\}$, \eqref{eqn:H_control} and \eqref{eqn:L_diff} give
\begin{align*}
  \|(\oocL-\cL) \oocT_{i-1}^\bs\| &\leq \norm{H}\lambda_1 (1+\|\oocT_{i-1}^\bs\| \nu_\infty)(\|\oocT_{i-1}^\bs\| + \nu_\infty)\\
  &\lesssim \norm{H} \lambda_1 (\ootau_*^\bs + \nu_\infty)\\
  &\lesssim \lambda_1 \ootau_*^\bs,
\end{align*}
where the last inequality uses $\nu_\infty\|H\| \lesssim \nu_\infty\teta \lesssim \tau_\ast^\bs$. By triangle inequality, on event $\{\|\overline\cT_{i-1}^\bs\|\le 2\tau_\ast^\bs\}$, we have
\begin{align*}
  \|\oocL \oocT_{i-1}^\bs\| \lesssim \lambda_1 \ootau_*^\bs.
\end{align*}
Define, for $i=N_\ast+1,\ldots,n$,
\[
\Xi_i^\bs
:= -\eta(w_i^\bs-1)(\cI-\eta\cL)^{n-i} (\oocL\oocT_{i-1}^\bs) \bbi \cbr[\big]{\|\oocT_{i-1}^{\bs}\| \le 2\tau_\ast^\bs}.
\]
Since $\overline\cT_{i-1}^\bs$ is $\overline\cF_{i-1}$-measurable and $w_i^\bs$ is independent of $\overline\cF_{i-1}$ with $\EE(w_i^\bs-1)=0$, $\{\Xi_i^\bs\}$ is a matrix martingale-difference sequence.
Moreover,
\[
\|\Xi_i^\bs\| \lesssim \eta\lambda_1\tau_\ast^\bs (1-\widetilde\eta)^{n-i}.
\]
Therefore,
\[
\max_{i>N_\ast}\|\Xi_i^\bs\| \lesssim \widetilde\eta\kappa_\ast\tau_\ast^\bs.
\]
Similarly,
\[
\begin{aligned}
&\max\cbr[\Big]{\norm[\Big]{\sum_{i=N_\ast+1}^n \overline\EE_{i-1} \bigl[ \Xi_i^\bs(\Xi_i^\bs)^\top \bigr]}^{1/2}, \norm[\Big]{\sum_{i=N_\ast+1}^n \overline\EE_{i-1} \bigl[ (\Xi_i^\bs)^\top\Xi_i^\bs \bigr]}^{1/2}}\\
\lesssim&\; \eta\lambda_1\tau_\ast^\bs \sbr[\Big]{\sum_{i=N_\ast+1}^n (1-\widetilde\eta)^{2(n-i)}}^{1/2} \\
\lesssim& \; \kappa_\ast\tau_\ast^\bs \sqrt{\widetilde\eta}.
\end{aligned}
\]
Thus, matrix Freedman's inequality implies that, with probability at
least $1-O(n^{-18})$,
\[
\norm[\Big]{\sum_{i=N_\ast+1}^n\Xi_i^\bs} \lesssim
\kappa_\ast\tau_\ast^\bs \rbr[\big]{\sqrt{\widetilde\eta\log n}+\widetilde\eta\log n}
\lesssim \tau_\ast^\bs C_\psi^2\kappa_\ast \sqrt{r\widetilde\eta\log n}.
\]
On the stability event from
Lemma~\ref{lem:convergence_rate_optimal-trunc_bs}, the stopping indicators equal one for all $i>N_\ast$, and therefore
\[
\overline\alpha_7^\bs \lesssim \tau_\ast^\bs C_\psi^2\kappa_\ast \sqrt{r\widetilde\eta\log n}
\]
with probability at least $1-O(n^{-18})$. Combining this with \eqref{eqn:bs_alpha_1_to_6} gives the desired bound.

\end{proof}

% -----------------------------------------------

\subsection{Proof of Lemma~\ref{lem:bs_highD_clt_biometrika19}}
\label{sec:prf_bs_highD_clt_biometrika19}

\begin{proof}
Recall \eqref{eqn:bs_highD_clt_Sp}, \eqref{eqn:bs_highD_clt_hGamma_lower}:
\begin{align*}
  S_p :=\sum_{i=1}^n \|M_i\|_{\rm F}^p &= O_{\PP_X}\left(\frac{\lambda_\times^p}{\widetilde\eta}\right),\\
  \|\hGamma_n\|_{\rm HS} &= \Omega_{\PP_X} \left(\frac{\lambda_\times^2}{\Delta_{\max}\sqrt{d_{\fC}}}\right).
\end{align*}

For $L_3^{\xi,\bs}$, since $\xi_i^{\bs} = \sqrt{\eta}(w_i^{\bs}-1)M_i$ and the multipliers are uniformly bounded,
\[
\begin{aligned}
  \frac{\sum_{i=1}^n \EE^\ast \langle \xi_i^{\bs},\hGamma_n\xi_i^{\bs} \rangle_{\rm F}^{3/2}}{\|\hGamma_n\|_{\rm HS}^3 } 
  \lesssim \frac{ \eta^{3/2} \|\hGamma_n\|_{\rm op}^{3/2} S_3 }{ \|\hGamma_n\|_{\rm HS}^3} 
  \leq  \frac{ \eta^{3/2} S_3 }{ \|\hGamma_n\|_{\rm HS}^{3/2}}  = O_{\PP_X}\left( \teta^{1/2} \kappa_\Delta^{3/2} d_{\fC}^{3/4} \right).
\end{aligned}
\]
Also, conditional independence gives
\[
\begin{aligned}
  \frac{ \sum_{1\le i<j\le n} \EE^\ast \big| \langle\xi_i^{\bs},\xi_j^{\bs}\rangle_{\rm F} \big|^3 }{ \|\hGamma_n\|_{\rm HS}^3 } \lesssim \frac{ \eta^3 S_3^2 }{ \|\hGamma_n\|_{\rm HS}^3 }= O_{\PP_X}\left( \teta \kappa_\Delta^3 d_{\fC}^{3/2} \right).
\end{aligned}
\]
Therefore,
\[
  L_3^{\xi,\bs} = O_{\PP_X}\left( \teta^{1/2} \kappa_\Delta^{3/2} d_{\fC}^{3/4} + \teta \kappa_\Delta^3 d_{\fC}^{3/2} \right).
\]

For $L_3^{g,\bs}$, conditionally on $\cF_X$, $g_i^{\bs} \sim N(0,\eta M_i\otimes M_i)$,
so we may write $g_i^{\bs} = \sqrt{\eta}\gamma_i M_i$, $\gamma_i\sim N(0,1)$. Hence
\[
\begin{aligned}
  L_3^{g,\bs} &= \frac{ \sum_{i=1}^n \EE^\ast \langle g_i^{\bs}, \hGamma_ng_i^{\bs} \rangle_{\rm F}^{3/2} }{ \|\hGamma_n\|_{\rm HS}^3 }\lesssim \frac{ \eta^{3/2} \|\hGamma_n\|_{\rm op}^{3/2} S_3 }{ \|\hGamma_n\|_{\rm HS}^3 } \leq \frac{ \eta^{3/2} S_3 }{ \|\hGamma_n\|_{\rm HS}^{3/2} } = O_{\PP_X}\left( \teta^{1/2} \kappa_\Delta^{3/2} d_{\fC}^{3/4} \right).
\end{aligned}
\]

Next, by the definition of $K_3^{\bs}$,
\[
\begin{aligned}
n^{-2}(K_3^{\bs})^3 \lesssim \frac{ \sum_{i=1}^n \EE^\ast\|\xi_i^{\bs}\|_{\rm F}^6 }{ \|\hGamma_n\|_{\rm HS}^3 }\lesssim \frac{ \eta^3 S_6 }{ \|\hGamma_n\|_{\rm HS}^3 } = O_{\PP_X}\left( \teta^2 \kappa_\Delta^3 d_{\fC}^{3/2} \right).
\end{aligned}
\]

Finally,
\[
\begin{aligned}
J_n^{\bs} = \frac{ \sum_{i=1}^n \Var^\ast(\|\xi_i^{\bs}\|_{\rm F}^2) }{ \|\hGamma_n\|_{\rm HS}^2 } \le \frac{ \sum_{i=1}^n \EE^\ast\|\xi_i^{\bs}\|_{\rm F}^4 }{ \|\hGamma_n\|_{\rm HS}^2 } \lesssim \frac{ \eta^2 S_4 }{ \|\hGamma_n\|_{\rm HS}^2 } = O_{\PP_X}\left( \widetilde\eta \kappa_\Delta^2 d_{\fC} \right).
\end{aligned}
\]

\end{proof}

% \section{Proofs for Section~\ref{sec:main_bs}}
% % \label{sec:bootstrap_analysis}
% \input{bootstrap_analysis.tex}

\section{Technical lemmas}
\label{sec:tech_lemmas}

Let $\cI$ be the identity operator and $\cL$ be the gap operator on $\RR^{(d-r) \times r}$ defined as
\begin{align*}
    [\cL M]_{l,j} = \Delta_{lj} M_{l,j}, \quad \text{where} \quad \Delta_{lj} = \lambda_j - \lambda_{r+l}  \text{ for any } l \in [d-r], j \in [r].
\end{align*}
\begin{lemma} \label{lem:basic_ineq}
    Assume $\eta \lambda_1 \in (0,1)$.
    Let $D$ be a random matrix in $\RR^{(d-r) \times r}$. Then the following hold.

    \textup{(i)} $(\EE D)^\top (\EE D) \preceq \EE D^\top D$ which implies $\norm{\EE D} \leq \min (\|\EE DD^\top\|^{1/2}, \|\EE D^\top D\|^{1/2})$.

    \textup{(ii)} for any matrix $M \in \RR^{(d-r) \times r}$, $\norm{(\cI - \eta \cL) M} \leq (1-\eta \Delta_{\min}) \norm{M}$.

    \textup{(iii)} $\norm[\big]{\EE \sbr[\big]{ \sbr{(\cI - \eta \cL)^m D}^\top \sbr{(\cI-\eta \cL)^m D} }}^{1/2} \leq (1-\eta \Delta_{\min})^m  \|\EE D^\top D\|^{1/2}$ for $m=1,2,\ldots$.

    \textup{(iv)} $\norm[\big]{\EE \sbr[\big]{ \sbr{(\cI - \eta \cL)^m D} \sbr{(\cI-\eta \cL)^m D}^\top }}^{1/2} \leq (1-\eta \Delta_{\min})^m  \|\EE DD^\top\|^{1/2}$ for $m=1,2,\ldots$.
\end{lemma}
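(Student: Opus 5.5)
The plan is to treat the four parts as elementary consequences of the positivity of conditional second moments and the diagonal structure of $\cL$. Throughout, use that $\cL$ acts entrywise, with multipliers $\Delta_{lj}\in[\Delta_{\min},\Delta_{\max}]$. Since $\eta\lambda_1\in(0,1)$ and $\Delta_{lj}\le\lambda_1$, every factor $1-\eta\Delta_{lj}$ lies in $[0,1-\eta\Delta_{\min}]$.

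\emph{Part (i).} Expand $0\preceq \EE[(D-\EE D)^\top(D-\EE D)] = \EE D^\top D-(\EE D)^\top(\EE D)$. Taking operator norms gives $\|\EE D\|^2=\|(\EE D)^\top\EE D\|\le\|\EE D^\top D\|$. The same argument applied to $(D-\EE D)(D-\EE D)^\top$ gives the bound by $\|\EE DD^\top\|$.

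\emph{Part (ii).} This is the only step that is not immediate, because an entrywise contraction does not in general contract the operator norm. I would exploit the Sylvester structure $\cL M=M\Lambda_*-\Lambda_{*,\perp}M$, which gives
\[
(\cI-\eta\cL)M=(I_{d-r}+\eta\Lambda_{*,\perp})\,M\,(I_r-\eta\Lambda_*)+\eta^2\Lambda_{*,\perp}M\Lambda_*.
\]
Bounding the two terms separately yields only $\|M\|[(1+\eta\lambda_{r+1})(1-\eta\lambda_r)+\eta^2\lambda_1\lambda_{r+1}]$, which exceeds $1-\eta\Delta_{\min}$ by an $O(\eta^2)$ term. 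So I would instead represent the multiplier $1-\eta(\lambda_j-\lambda_{r+l})$ as a Schur product with a positive-definite kernel. One route is the integral identity $1-\eta a+\eta b=\int e^{-sa}e^{sb}\,d\mu(s)$ for a suitable positive measure of mass $1-\eta\Delta_{\min}$. Another is to view the multiplier as the Schur multiplier of a matrix $K_{lj}=f(\lambda_{r+l})g(\lambda_j)$ plus a correction term, and then invoke the theorem that a Schur multiplier with a positive semidefinite-type factorization has norm at most its maximal diagonal entry. This is the step I expect to be the main obstacle. If neither representation works exactly, I would fall back on the weaker bound $\|(\cI-\eta\cL)M\|\le(1-\eta\Delta_{\min}+C\eta^2\lambda_1^2)\|M\|$. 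Since $\teta\kappa_*$ is small, the constants downstream can absorb this loss, though that would require rechecking the geometric sums.

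\emph{Parts (iii)--(iv).} Write $P_m:=(\cI-\eta\cL)^m$, which is the Schur multiplier by the entries $(1-\eta\Delta_{lj})^m$. For (iii), for a unit vector $v\in\RR^r$ I must bound $\EE\|(P_mD)v\|^2=\sum_l\EE\big(\sum_j(1-\eta\Delta_{lj})^mD_{lj}v_j\big)^2$. The approach is to realize $P_mD$, row by row, through the same factorized Schur representation used in part (ii). That representation allows a Cauchy--Schwarz step inside the expectation, so that the expectation passes through to $\EE D^\top D$, giving $\|\EE (P_mD)^\top(P_mD)\|\le(1-\eta\Delta_{\min})^{2m}\|\EE D^\top D\|$. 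Part (iv) then follows by symmetry, applying the same argument to the transposes.

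Once the representation from part (ii) is established, parts (iii) and (iv) follow by iterating it $m$ times, because $P_m=P_1^m$ and each factor contracts the relevant quadratic-variation norm by $(1-\eta\Delta_{\min})^2$.
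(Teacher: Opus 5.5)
Part (i) is fine; it is the paper's argument in different packaging. Part (ii) has a genuine gap, and parts (iii)--(iv) inherit it.

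You correctly note that $\cL M = M\Lambda_* - \Lambda_{*,\perp}M$, but you then expand $(\cI-\eta\cL)M$ multiplicatively. That creates the $\eta^2$ cross term, and none of your three routes closes the argument.
\begin{itemize}
\item The integral representation cannot hold in general. For any positive $\mu$, $x\mapsto\int e^{-sx}\,d\mu(s)$ is strictly convex unless $\mu$ is a point mass at $s=0$. Since $1-\eta x$ is affine, the two cannot agree on three or more distinct values of $\Delta_{lj}$.
\item The Schur/positive-semidefinite route is not carried out.
\item The fallback bound with an extra $C\eta^2\lambda_1^2$ does not prove the statement.
\end{itemize}
The missing observation is that the multiplier $1-\eta\lambda_j+\eta\lambda_{r+l}$ is a \emph{sum} of a function of $j$ alone and a function of $l$ alone. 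No expansion is needed:
\[
(\cI-\eta\cL)M = M(I_r-\eta\Lambda_*) + \eta\Lambda_{*,\perp}M .
\]
Because $\eta\lambda_1<1$, the matrix $I_r-\eta\Lambda_*$ is diagonal with entries in $(0,1-\eta\lambda_r]$, and $\|\Lambda_{*,\perp}\|=\lambda_{r+1}$. The triangle inequality then gives $\|(\cI-\eta\cL)M\|\le(1-\eta\lambda_r+\eta\lambda_{r+1})\|M\|=(1-\eta\Delta_{\min})\|M\|$. In your Schur language: a column-only multiplier and a row-only multiplier each have norm equal to their largest entry, and Schur-multiplier norm is subadditive. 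Sums of one-sided scalings are the right structure here, not products.

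For (iii)--(iv), your ``Cauchy--Schwarz step inside the expectation'' rests on the unestablished representation from (ii), so it is not yet an argument. The same additive split plus Minkowski's inequality in $L^2$ closes it. For a unit $u\in\RR^r$,
\[
\big(\EE\|((\cI-\eta\cL)D)u\|^2\big)^{1/2}\le \eta\big(\EE\|\Lambda_{*,\perp}Du\|^2\big)^{1/2}+\big(\EE\|D(I_r-\eta\Lambda_*)u\|^2\big)^{1/2}\le(\eta\lambda_{r+1}+1-\eta\lambda_r)\,\|\EE D^\top D\|^{1/2},
\]
using $\EE\|Dw\|^2=w^\top(\EE D^\top D)w$ together with $\|(I_r-\eta\Lambda_*)u\|\le 1-\eta\lambda_r$. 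For (iv), do the same with $((\cI-\eta\cL)D)^\top v=(I_r-\eta\Lambda_*)D^\top v+\eta D^\top\Lambda_{*,\perp}v$ for a unit $v\in\RR^{d-r}$. General $m$ follows by applying the $m=1$ bound to $(\cI-\eta\cL)^{m-1}D$, as you intended.
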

\begin{proof}
    See Appendix~\ref{sec:prf_basic_ineq}.
\end{proof}

\begin{lemma}[Moment bound] \label{lem:kth_moment}
    Let $\tX \in \RR^d$ be a random vector with  $\EE \tX \tX^\top = I_d$ and $\|\tX\|_{\psi_2} \leq C_\psi$. Then for any $A \in \RR^{p \times d}$ and any $k \geq 1$,
    \begin{align}
        \sbr[\big]{\EE \norm[\big]{A\tX}^{2k}}^{1/(2k)} &\leq C \sqrt{k}  \Fnorm{A} C_\psi \label{eqn:k_th_moment_1}\\
        \sbr[\big]{\EE \norm[\big]{A\tX}^{2k}}^{1/(2k)} &\leq C \sqrt{kd}  \|A\tX\|_{\psi_2} \label{eqn:k_th_moment_2}
    \end{align}
    Equivalently, let $X=A\tX$, we have
    \begin{align*}
        \sbr[\big]{\EE \norm[\big]{X}^{2k}}^{1/(2k)} &\leq C \sqrt{k \tr \EE (XX^\top)} \;C_\psi\\
        \sbr[\big]{\EE \norm[\big]{X}^{2k}}^{1/(2k)} &\leq C \sqrt{kd}  \norm{X}_{\psi_2}
    \end{align*}
    Here $C>0$ is an absolute constant.
\end{lemma}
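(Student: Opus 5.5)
The plan is to reduce both inequalities to one-dimensional sub-Gaussian moment bounds, combined with Minkowski's inequality in $L^k$. This works because $k\ge 1$ and $\|AX\|^2$ is a nonnegative sum of squares of linear functionals of $\tX$.

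The only scalar input is the standard fact that a real random variable $V$ with $\|V\|_{\psi_2}\le\sigma$ satisfies $(\EE|V|^{2k})^{1/(2k)}\le C\sqrt{k}\,\sigma$. Equivalently, $\|V^2\|_{L^k}=\|V\|_{L^{2k}}^2\le C^2k\sigma^2$.

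For \eqref{eqn:k_th_moment_1}, take an SVD $A=\sum_{i} s_i u_i v_i^\top$ with orthonormal $\{u_i\}\subset\RR^p$ and $\{v_i\}\subset\RR^d$. This gives the exact identity $\|A\tX\|^2=\sum_i s_i^2 (v_i^\top\tX)^2$. Since each $v_i$ is a unit vector, $\|v_i^\top\tX\|_{\psi_2}\le C_\psi$ by the definition of the vector $\psi_2$-norm. Minkowski's inequality in $L^k$ then yields
\[
\big[\EE\|A\tX\|^{2k}\big]^{1/k}=\Big\|\sum_i s_i^2 (v_i^\top\tX)^2\Big\|_{L^k}\le \sum_i s_i^2\,\|v_i^\top\tX\|_{L^{2k}}^2\le C^2 k\, C_\psi^2\sum_i s_i^2=C^2kC_\psi^2\Fnorm{A}^2 .
\]
Taking square roots gives the first bound. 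The isotropy $\EE\tX\tX^\top=I_d$ is not needed here. It only enters the equivalent form, through $\tr\EE(XX^\top)=\tr(AA^\top)=\Fnorm{A}^2$.

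For \eqref{eqn:k_th_moment_2}, set $X=A\tX$. Then $X$ lies almost surely in $\mathrm{col}(A)$, whose dimension is $q\le\min(p,d)\le d$. Choose an orthonormal basis $w_1,\dots,w_q$ of $\mathrm{col}(A)$, so that $\|X\|^2=\sum_{j=1}^q (w_j^\top X)^2$. Each summand satisfies $\|w_j^\top X\|_{\psi_2}\le\|X\|_{\psi_2}$. The same Minkowski step gives $[\EE\|X\|^{2k}]^{1/k}\le q\,C^2k\|X\|_{\psi_2}^2\le C^2kd\,\|X\|_{\psi_2}^2$, which is the second bound.

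No step is a genuine obstacle. The only point needing care is that in \eqref{eqn:k_th_moment_2} one should count the dimension of the range of $A$ (at most $d$) rather than the ambient dimension $p$, which may be larger.
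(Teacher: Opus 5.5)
Your proof is correct and follows essentially the same route as the paper. In both, $\|A\widetilde X\|^2$ is written as a nonnegative combination of squared one-dimensional sub-Gaussian projections, and convexity is combined with the scalar bound $\|V\|_{L^{2k}}\lesssim\sqrt{k}\,\|V\|_{\psi_2}$. The paper works with the rows of $A$ and applies Jensen with weights $\|A_{i,\cdot}\|^2/\|A\|_{\mathrm F}^2$, whereas you use the SVD and Minkowski in $L^k$; these give the same estimate.

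For the second bound, your reduction to an orthonormal basis of $\mathrm{col}(A)$ is slightly more careful than the paper's argument. The paper applies the power-mean inequality over $d$ coordinates of $X=A\widetilde X\in\mathbb R^p$, so read literally it only gives $\sqrt{kp}$ when $p>d$.
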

\begin{proof}
    See Appendix~\ref{sec:prf_kth_moment}.
\end{proof}

\begin{theorem} \label{thm:freedman}
    Consider a matrix martingale $\cbr{Y_k : k= 0, 1,\ldots}$ whose values are matrices in dimension $d_1 \times d_2$ and $Y_0= 0$. Let $\cbr{X_k:k=1,\ldots}$ be the difference sequence $X_k = Y_k - Y_{k-1}$. Assume that the difference sequence is uniformly bounded:
    \begin{align*}
        \norm{X_k} \leq B \qquad \text{almost surely} \quad \forall \, k=1,2,3,\ldots
    \end{align*}
    Define two predictable quadratic variation processes for the martingale:
    \begin{align*}
        W_{\mathrm{col},k} &:= \sum_{j=1}^k \EE_{j-1} X_j X_j^\top, \qquad W_{\mathrm{row},k} := \sum_{j=1}^k \EE_{j-1} X_j^\top X_j, \qquad \forall \, k=1,2,3,\ldots
    \end{align*}
    Then, for all $t \geq 0$ and $\sigma^2$,
    \begin{align*}
        \PP \cbr{\exists \, k \geq 0: \norm{Y_k} \geq t \text{ and } \norm{W_{\mathrm{col},k}} \vee \norm{W_{\mathrm{row},k}} \leq \sigma^2} &\leq (d_1 + d_2) \exp \rbr{\frac{-t^2/2}{\sigma^2 + Bt/3}}
    \end{align*}
\end{theorem}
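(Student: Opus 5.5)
The plan is to reduce the rectangular statement to the self-adjoint matrix Freedman inequality of Tropp via the Hermitian dilation, and to prove the self-adjoint version with the Lieb-concavity supermartingale argument.

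\textbf{Step 1 (dilation).} For $M\in\RR^{d_1\times d_2}$, set
\[
\mathscr S(M):=\begin{pmatrix}0&M\\ M^\top&0\end{pmatrix}\in\RR^{(d_1+d_2)\times(d_1+d_2)} .
\]
The map $\mathscr S$ is linear, so $\{\mathscr S(Y_k)\}$ is a symmetric matrix martingale with differences $\mathscr S(X_k)$. The spectrum of $\mathscr S(M)$ is $\{\pm\sigma_i(M)\}$ padded by zeros. Hence
\[
\lambda_{\max}(\mathscr S(Y_k))=\norm{Y_k}
\qquad\text{and}\qquad
\lambda_{\max}(\mathscr S(X_k))\le \norm{X_k}\le B .
\]
Moreover $\mathscr S(X_j)^2=\operatorname{diag}(X_jX_j^\top,\,X_j^\top X_j)$. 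Therefore the predictable quadratic variation of the dilated martingale is $\operatorname{diag}(W_{\mathrm{col},k},W_{\mathrm{row},k})$, whose norm equals $\norm{W_{\mathrm{col},k}}\vee\norm{W_{\mathrm{row},k}}$. It therefore suffices to prove the following: for a symmetric $d\times d$ martingale with $\lambda_{\max}(X_k)\le B$ and $W_k=\sum_{j\le k}\EE_{j-1}X_j^2$,
\[
\PP\{\exists k:\lambda_{\max}(Y_k)\ge t,\ \norm{W_k}\le\sigma^2\}\le d\exp\!\Big(\tfrac{-t^2/2}{\sigma^2+Bt/3}\Big),
\]
which we then apply with $d=d_1+d_2$.

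\textbf{Step 2 (supermartingale).} Fix $\theta>0$ and put $g(\theta):=(e^{\theta B}-\theta B-1)/B^2$. The scalar inequality $e^{\theta x}\le 1+\theta x+g(\theta)x^2$ holds for $x\le B$. Applying it through the spectral calculus and using $\EE_{k-1}X_k=0$ gives
\[
\EE_{k-1}e^{\theta X_k}\preceq I+g(\theta)\EE_{k-1}X_k^2\preceq \exp\big(g(\theta)\EE_{k-1}X_k^2\big).
\]
Define $S_k:=\tr\exp(\theta Y_k-g(\theta)W_k)$. Since $W_k$ is predictable, we can write $S_k=\tr\exp(H+\theta X_k)$ with $H:=\theta Y_{k-1}-g(\theta)W_k$ being $\cF_{k-1}$-measurable. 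Lieb's theorem says $A\mapsto\tr\exp(H+\log A)$ is concave on positive definite matrices. Combining this with conditional Jensen and monotonicity of $\tr\exp$,
\[
\EE_{k-1}S_k\le \tr\exp\big(H+\log\EE_{k-1}e^{\theta X_k}\big)\le \tr\exp\big(\theta Y_{k-1}-g(\theta)W_{k-1}\big)=S_{k-1}.
\]
Thus $S_k$ is a positive supermartingale with $S_0=d$. This Lieb-concavity step is the main technical point; everything else is bookkeeping.

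\textbf{Step 3 (stopping and optimization).} Let $\kappa$ be the first $k$ with $\lambda_{\max}(Y_k)\ge t$ and $\norm{W_k}\le\sigma^2$. On $\{\kappa<\infty\}$ we have
\[
S_\kappa\ge \exp\lambda_{\max}(\theta Y_\kappa-g(\theta)W_\kappa)\ge e^{\theta t-g(\theta)\sigma^2}.
\]
Optional stopping for the nonnegative supermartingale (Fatou on $S_{\kappa\wedge k}$) then gives
\[
\PP(\kappa<\infty)\le d\,e^{-\theta t+g(\theta)\sigma^2}.
\]
Choosing $\theta=B^{-1}\log(1+Bt/\sigma^2)$ yields the Bennett bound $d\exp\{-(\sigma^2/B^2)h(Bt/\sigma^2)\}$, where $h(u)=(1+u)\log(1+u)-u$. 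The elementary inequality $h(u)\ge \frac{u^2/2}{1+u/3}$ converts this to the stated Bernstein form. Combining with Step 1 proves the theorem.
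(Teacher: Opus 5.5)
Your proof is correct and is the standard argument behind the paper's proof, which simply cites \citep[Corollary~1.3]{tropp}: Hermitian dilation reduces to the self-adjoint case, which is then handled by the Lieb-concavity trace-exponential supermartingale, a stopping time, and the Bennett-to-Bernstein conversion. Two small gaps remain. The second inequality in your Step~2 also needs operator monotonicity of $\log$ (to pass from $\mathbb{E}_{k-1}e^{\theta X_k}\preceq\exp(g(\theta)\mathbb{E}_{k-1}X_k^2)$ to the bound on $\log\mathbb{E}_{k-1}e^{\theta X_k}$) before you use monotonicity of $\operatorname{tr}\exp$. The degenerate case $\sigma^2=0$ must be handled by letting $\theta\to\infty$, since your choice of $\theta$ is undefined there.
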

\begin{proof}
    See \citep[Corollary~1.3]{tropp}.
\end{proof}

\begin{lemma}[Matrix moments] \label{lem:matrix_kth_moment}
    Assume Assumption~\ref{assumption:X} holds. Then the following hold.
    
    \textup{(i)} For any $k \geq 1$,
    \begin{align*}
        \sbr[\Big]{\EE \Fnorm[\big]{ZY^\top}^{2k}}^{1/(2k)} &\leq C_1 k \sbr{\tr \Lambda_*}^{1/2} \sbr{\tr \Lambda_{*,\perp}}^{1/2} \norm[\big]{\tX}_{\psi_2}^2\\
        \sbr[\Big]{\EE \Fnorm[\big]{\tZ\tY^\top}^{2k}}^{1/(2k)} &\leq C_1 k  r^{1/2} (d-r)^{1/2} \norm[\big]{\tX}_{\psi_2}^2
    \end{align*}
    for an absolute constant $C_1>0$. As a consequence, $d_{\fC} \gtrsim 1$.

    \textup{(ii)} Let $\fC= \cov(ZY^\top)$ and $\tfC = \cov(\tZ\tY^\top)$. Then
    \begin{align*}
        \Onorm[\big]{\fC} &\leq C_2 C_\psi^4 \min(r,d-r) \lambda_1 \lambda_{r+1},\\
        \Onorm[\big]{\tfC} &\leq C_2 C_\psi^4 \min(r,d-r)
    \end{align*}
    for an absolute constant $C_2>0$.
\end{lemma}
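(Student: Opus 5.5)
For part (i), I would first write $\Fnorm{ZY^\top} = \|Z\|\,\|Y\|$, since $ZY^\top$ is rank one. Cauchy--Schwarz then gives
\[
\sbr{\EE\Fnorm{ZY^\top}^{2k}}^{1/(2k)} \le \sbr{\EE\|Y\|^{4k}}^{1/(4k)}\sbr{\EE\|Z\|^{4k}}^{1/(4k)}.
\]
Next I would apply \cref{lem:kth_moment}, in the form \eqref{eqn:k_th_moment_1} with $2k$ in place of $k$. For $Y = A_Y\tX$ with $A_Y = \Lambda_*^{1/2}U_*^\top$ this gives $\sbr{\EE\|Y\|^{4k}}^{1/(4k)} \le C\sqrt{2k}\,C_\psi\sqrt{\tr\Lambda_*}$, and likewise $\sbr{\EE\|Z\|^{4k}}^{1/(4k)} \le C\sqrt{2k}\,C_\psi\sqrt{\tr\Lambda_{*,\perp}}$. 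Multiplying yields the factor $k$ together with $[\tr\Lambda_*]^{1/2}[\tr\Lambda_{*,\perp}]^{1/2}$. The whitened statement is the same argument with $\Lambda_*$ and $\Lambda_{*,\perp}$ replaced by identities, whose traces are $r$ and $d-r$.

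For the consequence $d_{\fC}\gtrsim 1$, I would use $\HS{\fC}\le\tr\fC=\EE\Fnorm{ZY^\top}^2$. Taking $k=1$ above gives $\EE\Fnorm{ZY^\top}^2\lesssim C_\psi^4\lambda_{1\sim r}\lambda_{r+1\sim d}=C_\psi^4\lambda_\times^2$. Hence $d_{\fC}=\lambda_\times^4/\HS{\fC}^2\gtrsim C_\psi^{-8}\gtrsim 1$, since $C_\psi\lesssim1$.

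For part (ii), the plan is to bound $\Onorm{\fC}=\sup_{\Fnorm{B}=1}\EE\langle B,ZY^\top\rangle^2$, using $\langle B,ZY^\top\rangle=Z^\top BY$. I would give two bounds and take their minimum.

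\emph{First bound.} Write $Z^\top BY=\sum_{j}Y_j\,(Z^\top b_j)$, where $b_j$ are the columns of $B$. Cauchy--Schwarz over $j$ gives $(Z^\top BY)^2\le\|Y\|^2\sum_j(Z^\top b_j)^2$. Taking expectations and using Cauchy--Schwarz again bounds each term by $\sbr{\EE\|Y\|^4}^{1/2}\sbr{\EE(Z^\top b_j)^4}^{1/2}\lesssim C_\psi^4\,\lambda_{1\sim r}\,\lambda_{r+1}\|b_j\|^2$, using sub-Gaussianity of $Z^\top b_j$. Summing over $j$ (with $\sum_j\|b_j\|^2=1$) gives $C_\psi^4\lambda_{1\sim r}\lambda_{r+1}\le C_\psi^4\, r\lambda_1\lambda_{r+1}$.

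\emph{Second bound.} Symmetrically, expand over the rows of $B$. This controls the same quantity by $C_\psi^4\lambda_1\lambda_{r+1\sim d}\le C_\psi^4(d-r)\lambda_1\lambda_{r+1}$.

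The minimum of the two bounds gives the stated $\min(r,d-r)\lambda_1\lambda_{r+1}$. The whitened case is again identical with identity covariances.

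The main obstacle is keeping the $\min(r,d-r)$ factor rather than $r(d-r)$. This is the reason for doing the Cauchy--Schwarz step column-wise in one bound and row-wise in the other, so that only the trace of one block appears in each.
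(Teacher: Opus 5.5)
Your proof is correct, and it differs from the paper's mainly in part (i).

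\emph{Part (i).} You use $\|ZY^\top\|_{\mathrm F}=\|Z\|\,\|Y\|$ and Cauchy--Schwarz to decouple the two factors, then apply Lemma~\ref{lem:kth_moment} at order $4k$ to each. The paper instead expands $\|Z\|^2\|Y\|^2=\sum_{l,j}\lambda_{r+l}\lambda_j\tZ_l^2\tY_j^2$ and applies Jensen's inequality with weights $\lambda_{r+l}\lambda_j/([\tr\Lambda_*][\tr\Lambda_{*,\perp}])$. It then bounds each $\EE(\tZ_l\tY_j)^{2k}$ through the sub-exponential estimate $\|\tZ_l\tY_j\|_{\psi_1}\le\|\tX\|_{\psi_2}^2$.

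Your version is more modular. It reuses only the already-established vector moment lemma and works verbatim for arbitrary linear images $Y=A_Y\tX$, $Z=A_Z\tX$. The paper's argument relies on the diagonal coordinate expansion plus a standard $\psi_1$ product bound. Both give the same linear growth in $k$ and the same consequence $d_{\fC}\gtrsim 1$.

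\emph{Part (ii).} Your argument is essentially the paper's: Cauchy--Schwarz in the two orientations $|Z^\top BY|\le\|B^\top Z\|\,\|Y\|$ and $|Z^\top BY|\le\|Z\|\,\|BY\|$, followed by fourth-moment bounds. The only difference is that you work directly with $\fC$. The paper first shows $\|\fC\|\le\|\tfC\|\lambda_1\lambda_{r+1}$ via the substitution $A\mapsto\Lambda_{*,\perp}^{1/2}A\Lambda_*^{1/2}$, and then bounds only $\|\tfC\|$.

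Your direct route yields a slightly sharper intermediate bound, $C_\psi^4\min(\lambda_{1\sim r}\lambda_{r+1},\,\lambda_1\lambda_{r+1\sim d})$. The paper's reduction instead isolates a clean comparison between $\fC$ and its whitened counterpart $\tfC$.
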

\begin{proof}
    See Appendix~\ref{sec:prf_matrix_kth_moment}.
\end{proof}

\begin{lemma}[Alignment with the target subspace]\label{lem:alignment_diff}
    Let $U \in \OO_{d,r}$ with $\cC = U_*^\top U \in \RR^{r \times r}$ invertible,
    $\cS = U_{*,\perp}^\top U \in \RR^{(d-r) \times r}$, and $\cT = \cS\cC^{-1}$. Then
    \begin{align}
        \opnorm{\sin\Theta(U, U_*)} &= \opnorm[\big]{\cT\,(\cC\cC^\top)^{1/2}}, \label{eqn:sinTheta_repre}\\
        U \sgn(U^\top U_*) - U_* 
        &= U_*\sbr[\big]{(\cC\cC^\top)^{1/2} - I_r} + U_{*,\perp} \cT (\cC\cC^\top)^{1/2}. \label{eqn:aligned_diff_repre}
    \end{align}
    where $\opnorm{\cdot}$ is either operator norm or Frobenius norm. Denote $\tau = \Onorm{\cT}$. Then 
    \begin{equation}
        \norm[\big]{(\cC\cC^\top)^{1/2} - I_r} \le \norm[\big]{\cT (\cC\cC^\top)^{1/2}}^2 \le \tau^2 \label{eqn:repre_order}
    \end{equation}
    If $\tau<1$, then $\Onorm[\big]{(\cC\cC^\top)^{1/2} - I_r} \geq \tau^2/4$.
\end{lemma}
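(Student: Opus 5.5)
Write $\cS=\cT\cC$ and use $\cC^\top\cC+\cS^\top\cS=I_r$, which holds because $U\in\OO_{d,r}$ and $(U_*,U_{*,\perp})$ is orthogonal. Substituting gives $\cC^\top(I_r+\cT^\top\cT)\cC=I_r$. Since $\cC$ is invertible, this yields $\cC\cC^\top=(I_r+\cT^\top\cT)^{-1}$; indeed $(I+\cT^\top\cT)=\cC^{-\top}\cC^{-1}$, so its inverse is $\cC\cC^\top$. Everything below flows from this identity.

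\textbf{Step 1: the sine representation.} The principal angles satisfy $\cos\theta_i=\sigma_i(\cC)$. Hence $\sin^2\theta_i$ are the eigenvalues of $I_r-\cC^\top\cC=\cS^\top\cS$, so the singular values of $\sin\Theta$ equal those of $\cS$. Now $\cS=\cT\cC$, and by the polar decomposition $\cC=(\cC\cC^\top)^{1/2}O$ with $O$ orthogonal. Therefore $\cS=\cT(\cC\cC^\top)^{1/2}O$, and unitary invariance of the operator and Frobenius norms gives \eqref{eqn:sinTheta_repre}.

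\textbf{Step 2: the aligned difference.} Note $U^\top U_*=\cC^\top$. Write the SVD $\cC=V_1\Lambda V_2^\top$, so $\sgn(\cC^\top)=V_2V_1^\top$. Then $\cC\,\sgn(\cC^\top)=V_1\Lambda V_1^\top=(\cC\cC^\top)^{1/2}$. Using $U=U_*\cC+U_{*,\perp}\cT\cC$, we get $U\sgn(U^\top U_*)=U_*(\cC\cC^\top)^{1/2}+U_{*,\perp}\cT(\cC\cC^\top)^{1/2}$. Subtracting $U_*$ gives \eqref{eqn:aligned_diff_repre}.

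\textbf{Step 3: the order bounds.} Put $P=(\cC\cC^\top)^{1/2}$, which is PSD with $P\preceq I$ because $\cC\cC^\top=(I+\cT^\top\cT)^{-1}\preceq I$. Then $\|P-I_r\|=1-\lambda_{\min}(P)$. From $P^2=I-P\cT^\top\cT P$ (equivalently $\cC\cC^\top+\cC\cC^\top\cT^\top\cT\cC\cC^\top\cdots$), I would rather work eigenvalue-wise. Diagonalize $\cT^\top\cT$ with eigenvalues $s_i^2$; $P$ is diagonal in the same basis with entries $p_i=(1+s_i^2)^{-1/2}$. Also $\|\cT P\|^2=\max_i s_i^2p_i^2=\max_i s_i^2/(1+s_i^2)$. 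The claim $1-p_i\le s_i^2/(1+s_i^2)$ is equivalent to $1-(1+x)^{-1/2}\le x/(1+x)$, i.e. $(1+x)^{-1/2}\ge(1+x)^{-1}$, which is true. Taking the maximum over $i$ (both sides are increasing in $s_i$) gives the first inequality. The second inequality follows from $x/(1+x)\le x\le\tau^2$.

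For the lower bound with $\tau<1$, take $x=\tau^2\le1$. Then $1-(1+x)^{-1/2}\ge x/4$: the function $g(x)=1-(1+x)^{-1/2}-x/4$ has $g(0)=0$ and $g'(x)=\tfrac12(1+x)^{-3/2}-\tfrac14\ge\tfrac12\cdot2^{-3/2}-\tfrac14\ge0$ on $[0,1]$. So $g\ge0$, and the maximal eigenvalue term gives $\|P-I\|\ge\tau^2/4$.

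The main subtle point is Step 2: correctly identifying $\sgn(U^\top U_*)$ with $\sgn(\cC^\top)$ and checking $\cC\,\sgn(\cC^\top)=(\cC\cC^\top)^{1/2}$. This is a short SVD computation. The remaining steps are scalar inequalities after simultaneous diagonalization.
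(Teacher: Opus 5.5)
Your Steps~1 and~2 match the paper's argument: the polar identity $\cC\,\sgn(\cC^\top)=(\cC\cC^\top)^{1/2}$, and hence $\cS\,\sgn(\cC^\top)=\cT(\cC\cC^\top)^{1/2}$. Step~3 differs only in parametrization. You work with the singular values $s_i$ of $\cT$ through $\cC\cC^\top=(I_r+\cT^\top\cT)^{-1}$. The paper instead uses the singular values $\sigma_j$ of $\cS$, i.e.\ the eigenvalues $1-\sigma_j^2$ of $\cC\cC^\top$, and then needs an extra estimate $\|\cS\|^2\ge\tau^2/(1+\tau^2)$ via $\sigma_{\min}(\cC)$ for the lower bound. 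Your choice is slightly cleaner because it yields the exact values $\|(\cC\cC^\top)^{1/2}-I_r\|=1-(1+\tau^2)^{-1/2}$ and $\|\cT(\cC\cC^\top)^{1/2}\|^2=\tau^2/(1+\tau^2)$. Your upper-bound inequalities are correct.

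The one step that fails is the final scalar inequality. You claim $g'(x)\ge\tfrac12\cdot2^{-3/2}-\tfrac14\ge0$ on $[0,1]$. But $\tfrac12\cdot2^{-3/2}=2^{-5/2}\approx0.177<\tfrac14$. In fact $g'$ vanishes at $x=2^{2/3}-1\approx0.59$ and is negative beyond it. So $g$ is not monotone on $[0,1]$, and the derivative argument does not prove $g\ge0$.

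The conclusion is still true, and any of the following repairs it:
\begin{itemize}
\item \emph{Concavity.} Since $g''(x)=-\tfrac34(1+x)^{-5/2}<0$, $g$ is concave, with $g(0)=0$ and $g(1)=\tfrac34-\tfrac{1}{\sqrt2}>0$. Hence $g\ge0$ on $[0,1]$.
\item \emph{Direct bound.} Write $1-(1+x)^{-1/2}=\frac{x}{\sqrt{1+x}\,(1+\sqrt{1+x})}\ge\frac{x}{2+\sqrt2}\ge\frac{x}{4}$ for $x\in[0,1]$.
\item \emph{The paper's chain.} Use $1-(1+x)^{-1/2}\ge\frac{x}{2(1+x)}\ge\frac{x}{4}$. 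With $y=(1+x)^{-1/2}\le1$, the first inequality reads $1-y\ge\frac{(1-y)(1+y)}{2}$.
\end{itemize}
With this fix your proof is complete.
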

\begin{proof}
    See Appendix~\ref{sec:prf_alignment_diff}.
\end{proof}

The preceding lemma concerns alignment with the target subspace $U_\ast$. We next record its two-subspace analogue, which will be used to compare the original and bootstrap iterates.

\begin{lemma}[Relative alignment of two subspaces] \label{lem:alignment_diff_bootstrap}
    Let $U_*, U_1,U_2 \in \OO_{d,r}$ with $U_i = U_* \cC_i + U_{*,\perp} \cS_i$. Denote $M:= U_2^\top U_1 = \cC_2^\top \cC_1 + \cS_2^\top \cS_1$. Assume $\cC_1$, $\cC_2$ and $M$ are invertible, and let $\cT_i=\cS_i \cC_i^{-1}$ for $i=1,2$. Then the following hold.
    \begin{enumerate}[label=\textup{(\roman*)}, leftmargin=*]
        \item \textup{(Decomposition)} $U_2 \sgn(U_2^\top U_1)-U_1 = U_* A + U_{*,\perp}B$ where
        \begin{align*}
            A &= \cC_1 \sbr[\big]{(M^\top M)^{1/2} - I_r} + \cC_1 \cC_1^\top \cT_1^\top (\cT_1 - \cT_2) \cC_2 \sgn(M),\\
            B &= \cS_1\sbr[\big]{(M^\top M)^{1/2} - I_r} - (I_{d-r} - \cS_1 \cS_1^\top) (\cT_1-\cT_2) \cC_2 \sgn(M).
        \end{align*}
        \item $\Fnorm{\sin\Theta(U_2, U_1)}^2 = \Fnorm{\cC_1\cC_1^\top \cT_1^\top (\cT_1 - \cT_2) \cC_2}^2 + \Fnorm{(I_{d-r} - \cS_1 \cS_1^\top) (\cT_1-\cT_2) \cC_2}^2$
        \item Let $\tau_1 = \norm{\cT_1}$. Then
        \begin{align*}
            \norm[\big]{(M^\top M)^{1/2} - I_r} \leq \norm{\sin \Theta(U_2, U_1)}^2 \leq (1+\tau_1^2) \norm{\cT_1 - \cT_2}^2
        \end{align*}
        % In addition, $\|(M^\top M)^{1/2}-I_r\|\ge \frac12\|\sin\Theta(U_2,U_1)\|^2 .$
    \end{enumerate}
    
\end{lemma}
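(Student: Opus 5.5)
The plan is to avoid working coordinate-by-coordinate in $(U_*,U_{*,\perp})$ from the start. Instead I would split $U_2$ along $U_1$ and its orthogonal complement, and only at the end read off the $U_*$ and $U_{*,\perp}$ components. Write $P_1:=U_1U_1^\top$, so that $U_2=U_1M^\top+(I_d-P_1)U_2$. Take the SVD $M=V_1\Lambda_MV_2^\top$, so that $\sgn(M)=V_1V_2^\top$. This gives the identity
\[
M^\top\sgn(M)=V_2\Lambda_MV_2^\top=(M^\top M)^{1/2}.
\]
It follows that
\[
U_2\sgn(M)-U_1=U_1\big[(M^\top M)^{1/2}-I_r\big]+(I_d-P_1)U_2\sgn(M).
\]
Substituting $U_1=U_*\cC_1+U_{*,\perp}\cS_1$ into the first term gives the contributions $\cC_1[(M^\top M)^{1/2}-I_r]$ to $A$ and $\cS_1[(M^\top M)^{1/2}-I_r]$ to $B$. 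Getting this sign and ordering convention for $\sgn(M)$ right is the step I expect to need the most care; it is the only place where the polar-factor convention enters.

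For the second term, write $U_i=(U_*+U_{*,\perp}\cT_i)\cC_i$. Since $(I_d-P_1)(U_*+U_{*,\perp}\cT_1)=(I_d-P_1)U_1\cC_1^{-1}=0$, we get
\[
(I_d-P_1)U_2=(I_d-P_1)(U_*+U_{*,\perp}\cT_2)\cC_2=-(I_d-P_1)U_{*,\perp}(\cT_1-\cT_2)\cC_2.
\]
Next expand $(I_d-P_1)U_{*,\perp}=U_{*,\perp}-U_1\cS_1^\top$. Using $\cS_1^\top=\cC_1^\top\cT_1^\top$, this equals
\[
-U_*\,\cC_1\cC_1^\top\cT_1^\top+U_{*,\perp}(I_{d-r}-\cS_1\cS_1^\top).
\]
Multiplying by $-(\cT_1-\cT_2)\cC_2\sgn(M)$ yields exactly the second summands of $A$ and $B$ in (i).

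For (ii), I would use $\Fnorm{\sin\Theta(U_2,U_1)}^2=\Fnorm{(I_d-P_1)U_2}^2$. Apply the representation just obtained, without the orthogonal factor $\sgn(M)$, which does not affect Frobenius norms. Then split the result into its $U_*$ and $U_{*,\perp}$ components, which are orthogonal; this gives the stated sum of two squares.

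For (iii), note that the singular values of $M=U_2^\top U_1$ are the cosines $\cos\theta_i\in(0,1]$ of the principal angles. Hence $\|(M^\top M)^{1/2}-I_r\|=\max_i(1-\cos\theta_i)\le\max_i\sin^2\theta_i=\|\sin\Theta(U_2,U_1)\|^2$. Finally, from $\sin\Theta$ via $(I_d-P_1)U_{*,\perp}(\cT_1-\cT_2)\cC_2$, I would bound $\|\cC_2\|\le1$ and $\|(I_d-P_1)U_{*,\perp}\|\le1$. This already gives $\|\sin\Theta(U_2,U_1)\|\le\|\cT_1-\cT_2\|$, which is stronger than the claimed factor $(1+\tau_1^2)$. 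Alternatively, one can bound the two blocks separately by $\|\cC_1\cC_1^\top\cT_1^\top\|\le\tau_1$ and $\|I_{d-r}-\cS_1\cS_1^\top\|\le1$, and sum the squares; this yields the stated $(1+\tau_1^2)\|\cT_1-\cT_2\|^2$ directly.
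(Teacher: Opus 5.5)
Your proof is correct and follows the paper's route: you use the same split of $U_2\sgn(M)-U_1$ along $U_1$ and its orthogonal complement, the same polar identity $M^\top\sgn(M)=(M^\top M)^{1/2}$, and the same principal-angle argument for the first inequality in (iii). Your one departure is the factorization $(I_d-U_1U_1^\top)U_2=-(I_d-U_1U_1^\top)U_{*,\perp}(\cT_1-\cT_2)\cC_2$, obtained from $(I_d-U_1U_1^\top)(U_*+U_{*,\perp}\cT_1)=0$. It replaces the paper's direct component computation via $I_r+\cT_1^\top\cT_1=\cC_1^{-\top}\cC_1^{-1}$, and it gives the sharper bound $\|\sin\Theta(U_2,U_1)\|\le\|\cT_1-\cT_2\|$, of which the stated $(1+\tau_1^2)$ bound is a weakening.
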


\begin{proof}
    See Appendix~\ref{sec:prf_alignment_diff_bootstrap}.
\end{proof}

\begin{lemma}[Initialization] \label{lem:init}

    \textup{(i) \citep[Lemma~28]{colt21} and \citep[Theorem~II.13]{Davidson}} Let $G\in\mathbb R^{d\times r}$ be a standard Gaussian matrix. For any $t \geq 0$,
        \begin{align*}
            \PP \cbr[\big]{\norm{G} \geq \sqrt{d} + \sqrt{r} + t} \leq 2e^{-t^2/2}.
        \end{align*}

    \textup{(ii) \citep[Lemma~29]{colt21} and \citep[Lemma i.A.3]{zhiyuan17}} Let $G\in\mathbb R^{r\times r}$ be a standard Gaussian matrix. For any $\delta \in (0,1)$, 
        \begin{align*}
            \PP \cbr[\Big]{\norm{G^{-1}} \geq \frac{6\sqrt{r}}{\delta}} \leq \delta.
        \end{align*}

    \textup{(iii) (Haar initialization)} Fix $U_* \in \OO_{d,r}$ and let $U_0 \sim \Haar    {\OO_{d,r}}$. Then for any $\delta_p \in (0,1/e]$, any $c_p \geq 1$, we have
        \begin{align*}
            \PP \cbr[\Big]{\norm{(U_{*,\perp}^\top U_0)(U_*^\top U_0)^{-1}}\leq \frac{48c_p \sqrt{2\log(4c_p)} \sqrt{rd \log(1/\delta_p)}}{\delta_p}  } \geq 1 - \frac{\delta_p}{c_p}.
        \end{align*}
\end{lemma}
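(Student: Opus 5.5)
Part (iii) is the only new content: parts (i) and (ii) are cited facts about Gaussian matrices. The plan is to realize the Haar matrix through a Gaussian matrix and then reduce the tangent matrix to a ratio of two independent Gaussian blocks.

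\textbf{Gaussian representation.} Let $G\in\RR^{d\times r}$ have i.i.d.\ $\cN(0,1)$ entries. Then $U_0 := G(G^\top G)^{-1/2}$ has law $\Haar{\OO_{d,r}}$, and $G$ has full column rank almost surely. The tangent matrix is invariant under right multiplication by an invertible $r\times r$ matrix, so
\[
(U_{*,\perp}^\top U_0)(U_*^\top U_0)^{-1} = (U_{*,\perp}^\top G)(U_*^\top G)^{-1}.
\]
By rotation invariance, $G_1:=U_*^\top G\in\RR^{r\times r}$ and $G_2:=U_{*,\perp}^\top G\in\RR^{(d-r)\times r}$ are independent standard Gaussian matrices. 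Hence
\[
\norm{\cT_0}\le \norm{G_2}\,\norm{G_1^{-1}} .
\]

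\textbf{Bounding the two factors.} Split the failure probability $\delta_p/c_p$ equally between them.
\begin{itemize}
\item For $G_2$, apply (i) with dimensions $(d-r)\times r$ and $t=\sqrt{2\log(4c_p/\delta_p)}$. This gives $\norm{G_2}\le \sqrt d+\sqrt r+t$ with probability at least $1-\delta_p/(2c_p)$.
\item For $G_1$, apply (ii) with $\delta=\delta_p/(2c_p)$. This gives $\norm{G_1^{-1}}\le 12c_p\sqrt r/\delta_p$ with probability at least $1-\delta_p/(2c_p)$.
\end{itemize}
A union bound yields, with probability at least $1-\delta_p/c_p$,
\[
\norm{\cT_0}\le \frac{12c_p\sqrt r}{\delta_p}\,\bigl(2\sqrt d+\sqrt{2\log(4c_p)+2\log(1/\delta_p)}\bigr).
\]

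\textbf{Matching the stated constant.} This is the only step needing care, and it is pure bookkeeping. We have $\log(1/\delta_p)\ge1$ because $\delta_p\le 1/e$, and $\log(4c_p)\ge\log 4>1$ because $c_p\ge1$. Using $\sqrt{a+b}\le\sqrt{2ab}$ for $a,b\ge1$ gives
\[
\sqrt{2\log(4c_p)+2\log(1/\delta_p)}\le 2\sqrt{\log(4c_p)\log(1/\delta_p)}.
\]
Also $2\sqrt d\le 2\sqrt{d\log(1/\delta_p)}\sqrt{2\log(4c_p)}$. So the bracket is at most
\[
2\sqrt{d\log(1/\delta_p)}\,\sqrt{2\log(4c_p)}+2\sqrt{\log(4c_p)\log(1/\delta_p)}\le 4\sqrt{2\log(4c_p)}\sqrt{d\log(1/\delta_p)},
\]
using $d\ge1$ in the last step. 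Multiplying by $12c_p\sqrt r/\delta_p$ gives exactly
\[
\norm{\cT_0}\le \frac{48c_p\sqrt{2\log(4c_p)}\sqrt{rd\log(1/\delta_p)}}{\delta_p},
\]
which is the claimed bound with probability at least $1-\delta_p/c_p$.
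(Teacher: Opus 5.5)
Your proof is correct and follows essentially the same route as the paper. You represent $U_0=G(G^\top G)^{-1/2}$, reduce the tangent matrix to $G_2G_1^{-1}$ with independent Gaussian blocks, and union-bound parts (i) and (ii) with failure probability $\delta_p/(2c_p)$ each. The only difference is in the constant bookkeeping: the paper first proves the bound $48\sqrt{rd\log(4/\delta)}/\delta$ for a generic $\delta$ and then substitutes $\delta=\delta_p/c_p$ using $\log(4c_p/\delta_p)\le 2\log(4c_p)\log(1/\delta_p)$, whereas you substitute directly, and your constant checks out.
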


\begin{proof}
    See Appendix~\ref{sec:prf_init}.
\end{proof}

\begin{theorem}[Berry-Esseen] \label{thm:berry_esseen}
    Let $\xi_1, \ldots, \xi_n$ be independent,  $\RR^d$-valued random variables with zero means. Let $W= \sum_{i=1}^n \xi_i$ and assume $\Sigma=\cov(W)$ is invertible. Let $Z$ be a $d$-dimensional Gaussian random vector with zero mean and covariance $\Sigma$. Then we have
    \begin{align*}
        \sup_{\cA \in \mathscr{A}^d } \abs{\PP \cbr{W \in \cA} - \PP \cbr{Z \in \cA}} \leq (42d^{1/4} + 16) \gamma, \qquad\text{where} \; \gamma = \sum_{i=1}^n \EE \norm[\big]{\Sigma^{-1/2} \xi_i}^3
    \end{align*}
    Here $\mathscr{A}^d$ represents the set of all convex sets in $\RR^d$.
\end{theorem}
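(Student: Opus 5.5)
This is the multivariate Berry--Esseen bound over convex sets with explicit constants (Bentkus' theorem with Raič's constants). The plan is to reduce to the standardized case, run a Lindeberg replacement argument on smoothed convex-set indicators, and close an induction on the Kolmogorov-type distance. First normalize: replace $\xi_i$ by $\Sigma^{-1/2}\xi_i$. The class $\mathscr{A}^d$ is invariant under the linear bijection $\Sigma^{-1/2}$, and $\gamma$ is defined in normalized form, so it suffices to treat $\cov(W)=I_d$. Write $\delta:=\sup_{\cA\in\mathscr{A}^d}|\PP\{W\in\cA\}-\PP\{Z\in\cA\}|$ and assume $\gamma\le 1$; otherwise the bound is trivial, since the constant on the right exceeds $1$.

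\textbf{Step 1 (smoothing).} For a convex $\cA$ and $\epsilon>0$, take $h_{\cA,\epsilon}$ equal to $1$ on $\cA$, $0$ outside $\cA^{\epsilon}$, and interpolating through the distance to $\cA$, then mollify by a Gaussian. This gives a smooth $h$ whose second and third derivatives are controlled by powers of $\epsilon^{-1}$, in the averaged sense appropriate to convex sets. Together with the Gaussian surface-area bound for convex sets, $\PP\{Z\in\cA^{\epsilon}\setminus\cA\}\le C d^{1/4}\epsilon$ (the same input behind Theorem~\ref{thm:gaussian_cvx}), the standard sandwich yields
\[
\delta\le \sup_{\cA}|\EE h_{\cA,\epsilon}(W)-\EE h_{\cA,\epsilon}(Z)| + C d^{1/4}\epsilon .
\]
The $d^{1/4}$ in the final constant enters here, and only here.

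\textbf{Step 2 (Stein / Lindeberg).} Solve the Stein equation $\Delta f - x\cdot\nabla f = h-\EE h(Z)$ using the Ornstein--Uhlenbeck semigroup representation $f=-\int_0^\infty (P_t h-\EE h(Z))\,dt$. Then write $\EE[\Delta f(W)-W\cdot\nabla f(W)]$ via leave-one-out variables $W^{(i)}=W-\xi_i$ and Taylor-expand to third order. The first- and second-order terms cancel by independence, $\EE\xi_i=0$, and $\sum_i\cov(\xi_i)=I_d$. What remains is a sum of terms of the form $\EE\big[\|\xi_i\|^3\,|\partial^3 f(W^{(i)}+\theta\xi_i)|\big]$. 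The key point is not to bound $\partial^3 f$ uniformly, which would cost a factor $\epsilon^{-2}$. Instead, split the semigroup integral at time $t\approx\epsilon^2$. For small $t$, use the smoothness of $h$. For large $t$, move derivatives onto the Gaussian kernel; the resulting quantity is an expectation of a kernel over a slightly enlarged convex set evaluated at $W^{(i)}$. Replacing the law of $W^{(i)}$ by that of $Z$ then costs $\delta$ plus a term of order $\EE\|\xi_i\|^3$, using $W^{(i)}$ independent of $\xi_i$ and $\sum_i\EE\|\xi_i\|^3=\gamma$. This produces a recursive inequality of the form
\[
\delta\le C d^{1/4}\epsilon + C\gamma\Big(1+\log_+\tfrac1\epsilon\Big)\big(\delta+d^{1/4}\epsilon\big)/\epsilon + C\gamma .
\]

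\textbf{Step 3 (closing).} Choose $\epsilon\asymp\gamma$ with a large enough constant so that the coefficient of $\delta$ on the right is at most $1/2$, absorb that term into the left, and obtain $\delta\le C(d^{1/4}+1)\gamma$. Tracking the constants through the smoothing and the semigroup splitting gives the stated $42d^{1/4}+16$. If the logarithmic factor cannot be absorbed directly, the fallback is Bentkus' induction on $n$: apply the inductive hypothesis to $W^{(i)}$ rescaled by $(I-\cov\xi_i)^{-1/2}$, which is again a sum of independent mean-zero vectors.

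\textbf{Main obstacle.} The hard step is Step 2: controlling third derivatives of the smoothed indicator of an arbitrary convex set without losing powers of $\epsilon^{-1}$ or extra dimension factors. Achieving the $d^{1/4}$ dependence, rather than $d^{1/2}$ or worse, requires the sharp Gaussian-perimeter estimate for convex sets (Ball/Nazarov) to enter only through Step 1, combined with the semigroup splitting. For full constant-tracking I would follow Raič's argument, which is the source of this statement; the rest of the paper uses the theorem as a black box.
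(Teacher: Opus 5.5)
The paper gives no proof of this statement. It imports it from \citep[Theorem~11]{yuling}; the constant $42d^{1/4}+16$ is Rai\v{c}'s. So your final decision to defer to that source matches the paper. The self-contained sketch you put in front of the deferral, however, does not close, and the fallback you name does not repair it.

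The first failure is the logarithm. Your recursion carries the factor $1+\log_+(1/\epsilon)$. At $\epsilon=K\gamma$ the coefficient of $\delta$ is of order $(1+\log(1/\gamma))/K$, which cannot be kept below $1/2$ as $\gamma\to0$. Worse, the $\delta$-free term $C\gamma(1+\log(1/\epsilon))d^{1/4}$ is already of order $d^{1/4}\gamma\log(1/\gamma)$. Balancing forces $\epsilon\asymp\gamma\log(1/\gamma)$, so the sketch proves only $\delta\lesssim d^{1/4}\gamma\log(1/\gamma)$. Induction on $n$ cannot remove a logarithm sitting in a term that does not involve $\delta$.

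The log comes from your treatment of $t\ge\epsilon^2$. You leave one derivative on $h$ and two on the kernel, then take absolute values over the boundary layer. This produces $\int_{\epsilon^2}^\infty\sigma_t^{-2}\,dt\asymp\log(1/\epsilon)$, with $\sigma_t^2=1-e^{-2t}$. Instead, put all three derivatives on the kernel,
$\nabla^3P_th(x)=e^{-3t}\sigma_t^{-3}\EE[h(e^{-t}x+\sigma_tG)H_3(G)]$, and keep the sign: subtract the same quantity evaluated at a Gaussian counterpart $Z'\sim N(0,I_d-\cov\xi_i)$ of $W^{(i)}$.

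Since $h=\psi(\mathsf{dist}(\cdot,\cA)/\epsilon)$ with $\psi$ monotone is a mixture of indicators of convex sets, for each fixed $G$ the difference is at most $\delta^{(i)}:=\sup_{\cA}|\PP\{W^{(i)}\in\cA\}-\PP\{Z'\in\cA\}|$. This contributes $\delta^{(i)}\int_{\epsilon^2}^\infty\sigma_t^{-3}dt\asymp\delta^{(i)}/\epsilon$, with no log. The Gaussian term is a third derivative of $h$ convolved with a Gaussian of covariance $I_d-e^{-2t}\cov\xi_i\succeq(1-\gamma^{2/3})I_d$, hence $O(e^{-3t})$. The part $t\le\epsilon^2$ (two derivatives on $h$, one on the kernel, absolute values over the layer) gives $O((\delta^{(i)}+d^{1/4}\epsilon)/\epsilon)$. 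Altogether,
$\delta\le Cd^{1/4}\epsilon+C(1+d^{1/4})\gamma+C\gamma\max_i\delta^{(i)}/\epsilon$.

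The second failure is the step ``replacing the law of $W^{(i)}$ by that of $Z$ costs $\delta$.'' This is not available: $\delta$ is the distance for $W$, whereas $W^{(i)}$ is a different sum with covariance $I_d-\cov\xi_i$. Induction on $n$ is therefore a necessary ingredient, not a fallback. Apply the inductive hypothesis to $(I_d-\cov\xi_i)^{-1/2}W^{(i)}$, using invariance of $\mathscr{A}^d$ under linear bijections and $\|\cov\xi_i\|\le(\EE\|\xi_i\|^3)^{2/3}\le\gamma^{2/3}$. This gives $\delta^{(i)}\le C_*(1+d^{1/4})(1-\gamma^{2/3})^{-3/2}\gamma$. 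Taking $\epsilon=K\gamma$ with $K$ large, the recursion returns $\delta\le C_*(1+d^{1/4})\gamma$ once $C_*$ is large, which closes the induction for small $\gamma$; large $\gamma$ is trivial.

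This yields the theorem with unspecified constants. Getting exactly $42d^{1/4}+16$ needs Rai\v{c}'s optimized smoothing and his perimeter bound $0.59d^{1/4}+0.21$, and that is where citing the source, as the paper does, is unavoidable.
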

\begin{proof}
    See \citep[Theorem~11]{yuling}.
\end{proof}

\begin{lemma} \label{lem:TV_gaussian}
    Let $\mu \in \RR^d$, and let $\Sigma_1$, $\Sigma_2$ be positive definite $d\times d$ matrices. Then the total variation distance between $\cN(\mu, \Sigma_1)$ and $\cN(\mu, \Sigma_2)$ -- denoted by $\mathsf{TV}(\cN(\mu, \Sigma_1), \cN(\mu, \Sigma_2))$--satisfies
    \begin{align*}
        \frac{1}{100} \leq \frac{\mathsf{TV}(\cN(\mu, \Sigma_1), \cN(\mu, \Sigma_2))}{\min \cbr{1, \Fnorm[\big]{\Sigma_1^{-1/2} \Sigma_2 \Sigma_1^{-1/2} - I_d}}} \leq \frac{3}{2}
    \end{align*}
\end{lemma}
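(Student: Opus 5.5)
This is the standard total-variation bound between two centered Gaussians with different covariances. Since total variation is invariant under bijective affine maps, I would first reduce to the case $\Sigma_1=I_d$ and $\mu=0$. The change of variables $x\mapsto \Sigma_1^{-1/2}(x-\mu)$ sends the pair of laws to $\cN(0,I_d)$ and $\cN(0,\Sigma)$, where $\Sigma:=\Sigma_1^{-1/2}\Sigma_2\Sigma_1^{-1/2}\succ0$. I would then diagonalize $\Sigma=O\,\diag(\sigma_1,\dots,\sigma_d)O^\top$; a further orthogonal change of variables reduces everything to product measures $\bigotimes_i \cN(0,1)$ and $\bigotimes_i\cN(0,\sigma_i)$. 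Writing $\rho_i:=\sigma_i-1$, the ratio to be bounded involves $\min\{1,\|\rho\|_2\}$, where $\|\rho\|_2=\Fnorm{\Sigma-I_d}$.

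\textbf{Upper bound.} If $\|\rho\|_2\ge 2/3$, the trivial bound $\mathsf{TV}\le 1\le \tfrac32\min\{1,\|\rho\|_2\}$ already suffices. Otherwise every $|\rho_i|\le 2/3$. In that case I would use Pinsker's inequality, $\mathsf{TV}\le\sqrt{\mathrm{KL}/2}$, with KL computed in the direction $\cN(0,\Sigma)\,\|\,\cN(0,I)$:
\[
\mathrm{KL}=\tfrac12\sum_i\big(\rho_i-\log(1+\rho_i)\big).
\]
On $|\rho|\le2/3$ one has $\rho-\log(1+\rho)\le c\rho^2$ for an explicit $c$; for instance $x-\log(1+x)\le x^2/(2(1+\min(x,0)))$, so $c\le 3/2$. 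This gives $\mathrm{KL}\le \tfrac34\|\rho\|_2^2$, hence $\mathsf{TV}\le\sqrt{3/8}\,\|\rho\|_2\le\tfrac32\|\rho\|_2$.

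\textbf{Lower bound.} This is the main obstacle, since a lower bound needs a test, and the constant $1/100$ must hold uniformly in $d$. I would use the quadratic test statistic
\[
f(x)=\sum_i \frac{\rho_i}{\|\rho\|_2}\,(x_i^2-1),
\]
together with the variational inequality $\mathsf{TV}\ge \frac{(\EE_Q f-\EE_P f)^2}{4\max(\Var_P f,\Var_Q f)+(\EE_Qf-\EE_Pf)^2}$, which follows from Cauchy--Schwarz applied to the mixture. The moments are explicit:
\[
\EE_P f=0,\qquad \EE_Q f=\|\rho\|_2,\qquad \Var_P f=2,\qquad \Var_Q f=2\sum_i\rho_i^2(1+\rho_i)^2/\|\rho\|_2^2 .
\]
This requires a case split. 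If $\max_i|\rho_i|\le 1$, then $\Var_Q f\le 8$, and the displayed bound gives $\mathsf{TV}\gtrsim \min\{1,\|\rho\|_2^2\}^{1/2}$ after noting that $\|\rho\|^2/(32+\|\rho\|^2)\ge \min\{1,\|\rho\|\}^2/33$. This still needs upgrading from the square to the first power, so I would instead use the Hellinger route: $\mathsf{TV}\ge H^2$ with the product formula
\[
1-H^2=\prod_i\Big(\tfrac{2\sqrt{\sigma_i}}{1+\sigma_i}\Big)^{1/2},
\]
and lower bound $-\log$ of each factor by $c\min\{\rho_i^2,1\}$. That gives $H^2\ge 1-e^{-c\min(\|\rho\|^2,1)}\gtrsim\min\{1,\|\rho\|_2\}^2$. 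To obtain the linear rate in the small regime I would use $\mathsf{TV}\ge \tfrac{1}{2}\|P-Q\|$ tested against $f$ together with the Chebyshev argument above, which gives order $\|\rho\|$ when $\|\rho\|\le1$. If some $|\rho_i|>1$, the one-dimensional marginal in that coordinate already has TV bounded below by an absolute constant, and TV of the marginals is at most TV of the joint laws. Tracking constants loosely throughout should clear $1/100$. Alternatively, and more safely, I would cite Devroye--Mehrabian--Reddad (2018, Theorem~1.1), from which these exact constants $1/100$ and $3/2$ originate, and present the above as the verification sketch.
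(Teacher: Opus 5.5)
The paper gives no argument of its own here: it simply cites \citep[Theorem~12]{yuling}, which reproduces the Devroye--Mehrabian--Reddad theorem. Your fallback citation is therefore exactly the paper's route. Several parts of your verification sketch are fine:
\begin{itemize}
\item the reduction to $\mathcal{N}(0,I_d)$ versus $\mathcal{N}(0,\mathrm{diag}(1+\rho_i))$;
\item the upper bound, via Pinsker with $x-\log(1+x)\le \tfrac32 x^2$ on $|x|\le 2/3$;
\item the lower-bound case ``some $\rho_i>1$'', handled by the one-dimensional marginal;
\item the case ``$\max_i|\rho_i|\le 1$ and $\|\rho\|_2\ge 0.33$'', where your variational inequality gives $\mathsf{TV}\ge \|\rho\|_2^2/(32+\|\rho\|_2^2)$, which is at least $\min\{1,\|\rho\|_2\}/100$.
\end{itemize}

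The gap is the remaining regime $\|\rho\|_2<0.33$. There you need a lower bound \emph{linear} in $\|\rho\|_2$, and none of your tools gives one. As you note, the Cauchy--Schwarz variational bound and $\mathsf{TV}\ge H^2$ both give only order $\|\rho\|_2^2$. The proposed Chebyshev upgrade does not fix this. A mean shift $\Delta$ of a statistic with $O(1)$ variances under both laws is compatible with total variation of order $\Delta^2$; for example, take $\delta_0$ versus $(1-\epsilon)\delta_0+\epsilon\delta_{\Delta/\epsilon}$ with $\epsilon=\Delta^2/8$. Concretely, truncating $f$ at level $K$ and bounding the tails by Chebyshev forces $K\asymp 1/\|\rho\|_2$ and returns $\|\rho\|_2^2$ again. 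Even using the sub-exponential tails of $f$, this route only gives $\|\rho\|_2/\log(1/\|\rho\|_2)$.

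What is missing is an argument that uses the structure of the likelihood ratio $L=dQ/dP$, not just moments of a test statistic. One option is H\"older's inequality, $\mathbb{E}_P|L-1|\ge (\mathbb{E}_P(L-1)^2)^{3/2}/(\mathbb{E}_P(L-1)^4)^{1/2}$. Here $\mathbb{E}_P L^k=\prod_i (1+\rho_i)^{-(k-1)/2}(1-(k-1)\rho_i)^{-1/2}$, which is finite for $k=4$ because all $\rho_i<1/3$ in this regime.
\begin{itemize}
\item If all $\rho_i$ are bounded away from $1/3$, expanding these products gives $\mathbb{E}_P(L-1)^2\asymp\|\rho\|_2^2$ and $\mathbb{E}_P(L-1)^4\lesssim\|\rho\|_2^4$.
\item If some $|\rho_i|$ is of constant order, its one-dimensional marginal already has total variation $\gtrsim|\rho_i|$.
\end{itemize}
Together these yield $\mathsf{TV}=\tfrac12\mathbb{E}_P|L-1|\gtrsim\|\rho\|_2$. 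Even then, reaching the specific constant $1/100$ takes careful bookkeeping. As written, your sketch does not establish the stated lower bound; only the citation does.
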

\begin{proof}
    See \citep[Theorem~12]{yuling}.
\end{proof}

For any point $x\in \RR^r$ and any non-empty convex set $\cA \in \mathscr{A}^r$ satisfying $\cA \neq \RR^r$, define the signed distance function as
\begin{align*}
    \delta_{\cA}(x) := \begin{cases}
        - \mathsf{dist}(x, \RR^r\setminus \cA), & \text{if } x \in \cA;\\
        \mathsf{dist}(x, \cA), & \text{if } x \notin \cA.
    \end{cases}
\end{align*}
Here, $\mathsf{dist}(x,\cA)$ is the Euclidean distance between $x\in \RR^r$ and a non-empty set $\cA \subset \RR^r$. Also, for any $t \in \RR$, define
\begin{align}
    \cA^{t} := \cbr{x \in \RR^r: \delta_{\cA}(x) \leq t} \label{eqn:A_t_defn}
\end{align}
for any non-empty convex set $\cA \in \mathscr{A}^r$ satisfying $\cA \neq \RR^r$, and define $\emptyset^{t}=\emptyset$ and $(\RR^r)^{t} = \RR^r$.

For any convex set $\cA \in \mathscr{A}^r$, define the following quantity related to Gaussian distributions
\begin{align*}
    &\gamma_r := \sup_{\cA \in \mathscr{A}^r} \gamma(\cA),\\
    &\quad \text{where} \quad
    \gamma(\cA):= \sup_{t > 0} \max \cbr{\frac{1}{t}\cN(0,I_r)\cbr{\cA^t \setminus \cA}, \frac{1}{t} \cN(0, I_r)\cbr{\cA \setminus \cA^{-t}}},
\end{align*}
\begin{theorem}
    For all $r\in \NN$, we have $\gamma_r < 0.59 r^{1/4} + 0.21$. \label{thm:gaussian_cvx}
\end{theorem}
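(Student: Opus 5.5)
The plan is to prove the bound by a one-dimensional reduction, and then recover the $r$-dependence with the standard Gaussian surface-area estimate for convex sets. Recall that $\gamma(\cA)$ is the larger of two Gaussian masses per unit width: that of the outer shell $\cA^t\setminus\cA$ and that of the inner shell $\cA\setminus\cA^{-t}$.

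\textbf{Step 1: convert shell masses to Gaussian perimeter.} For a convex $\cA$, write the Gaussian measure of the parallel body through the signed distance $\delta_{\cA}$. The function $\delta_{\cA}$ is $1$-Lipschitz with $|\nabla\delta_{\cA}|=1$ a.e. off $\partial\cA$. The coarea formula then gives
\[
\cN(0,I_r)\{\cA^t\setminus\cA\}=\int_0^t \gamma_{r-1}(\partial \cA^s)\,ds,
\]
where $\gamma_{r-1}$ denotes Gaussian surface measure. The analogous identity holds for the inner shell, with $s\in[-t,0]$. Each $\cA^s$ is convex, for $s\ge0$ as a parallel body and for $s\le0$ as an inner parallel set. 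It therefore suffices to show $\sup_{\cK\text{ convex}}\gamma_{r-1}(\partial\cK)\le 0.59r^{1/4}+0.21$, which is Ball's/Nazarov's Gaussian perimeter bound for convex sets.

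\textbf{Step 2: bound the Gaussian perimeter of a convex body.} I would follow Nazarov's argument. Write $\gamma_{r-1}(\partial\cK)=\int_{\partial\cK}\phi_r(x)\,d\sigma(x)$ and split the boundary according to $|x|$.
\begin{itemize}
\item For points near the origin, where $|x|\le R$, use the fact that the Euclidean surface area of a convex body inside the ball $B_R$ is at most that of $B_R$. This gives a contribution of at most about $\mathrm{vol}_{r-1}(\partial B_R)\,(2\pi)^{-r/2}$.
\item For points with $|x|>R$, parametrize each boundary point by its outward normal. Compare with the divergence identity $\int_{\cK}\mathrm{div}(\phi_r x/|x|)$, or use the shell integral $\int \phi_r(x)\,\langle x/|x|, n(x)\rangle$. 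These bound the mass by $\int_{|x|>R}\phi_r\,(|x|-(r-1)/|x|)_+$-type quantities.
\end{itemize}
Choosing $R\approx\sqrt{r}$ with a window of width $r^{-1/4}$ balances the two pieces and yields the $r^{1/4}$ order. Tracking the constants then gives $0.59r^{1/4}+0.21$. The additive constant also covers small $r$: for $r=1$ the perimeter is at most $2\phi(0)\approx0.8$.

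\textbf{Main obstacle.} The main difficulty is Step 2, specifically pinning down the explicit constants $0.59$ and $0.21$, since the order $r^{1/4}$ follows cleanly. In practice I would not re-derive them. Instead I would invoke the known explicit bound, Raič's refinement of Nazarov's inequality, which is exactly the form used in the multivariate Berry--Esseen literature cited as \citep{yuling}. The paper's contribution here is then only Step 1, the shell-to-perimeter reduction. The degenerate cases $\cA=\emptyset$ and $\cA=\RR^r$ are trivial by the stated conventions.
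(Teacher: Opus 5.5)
Your argument rests on the same known explicit bound that the paper imports by citation (\citep[Theorem~13]{yuling}), and that result is already stated for the shell quantity $\gamma_r$, so your coarea reduction in Step~1 is correct but not needed; this also means the paper does not "contribute Step~1" — its proof is the bare citation. Do not present your Step~2 outline as a derivation, though: bounding $\phi_r$ by its maximum on $\partial\mathcal{K}\cap B_R$ with $R\approx\sqrt r$ gives $(2\pi)^{-r/2}\,\mathrm{vol}_{r-1}(\partial B_R)$, which is of order $e^{r/2}$ rather than $O(r^{1/4})$, so the claimed $r^{1/4}$ order does not follow from that near/far split as written.
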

\begin{proof}
    See \citep[Theorem~13]{yuling}.
\end{proof}

% \begin{lemma} \label{lem:reduction_to_bounded}
%     There exist two universal constants $C_\delta$, $C_\sigma>0$ such that: for any sub-Gaussian random variable $X$ with $\EE X=0$, $\Var{X} = \sigma^2$, $\|X\|_{\psi_2}\lesssim \sigma$, and any $\delta_p \in (0, C_\delta \sigma)$, one can construct a random variable $\tX$ satisfying the following properties:
%     \begin{enumerate}[label=(\roman*), leftmargin=2em]
%         \item $\tX$ is equal to $X$ with probability at least $1-\delta_p$;
%         \item $\EE \tX = 0$;
%         \item $\tX$ is a bounded random variable: $|\tX|\leq C_\sigma \sigma \sqrt{\log(\delta^{-1})}$;
%         \item The variance of $\tX$ obeys $\Var{\tX} = (1+O(\sqrt{\delta}))\sigma^2$;
%         \item $\tX$ is a sub-Gaussian random variable with $\|\tX\|_{\psi_2}\lesssim \sigma$.
%     \end{enumerate}
% \end{lemma}

% \begin{proof}
%     See \citep[Lemma~31]{yuling}.
% \end{proof}

\subsection{Proof of technical lemmas}

\subsubsection{Proof of Lemma~\ref{lem:basic_ineq}}
\label{sec:prf_basic_ineq}

\textbf{(i)} for any $u \in \RR^r$, we have
\begin{align*}
    u^\top \sbr{\EE D^\top D - (\EE D)^\top(\EE D) } u & = \EE \norm{D u}^2 - \norm{\EE D u}^2 \geq 0
\end{align*}
Thus $(\EE D)^\top (\EE D) \preceq \EE D^\top D$, which combined with $\|M\|^2 = \|MM^\top\|=\|M^\top M\|$
gives the desired result.

\textbf{(ii)} By the definition of $\cL$, we have
\begin{align*}
    (\cI -\eta \cL)M = \eta \Lambda_{*,\perp} M + M(I_r - \eta \Lambda_*)
\end{align*}
Then the triangle inequality gives
\begin{align*}
    \norm{(\cI - \eta \cL) M} &\leq \eta \norm{\Lambda_{*,\perp} M} + \norm{M(I_r - \eta \Lambda_*)} \\
    &\leq \eta \lambda_{r+1} \norm{M} + (1-\eta \lambda_r) \norm{M}\\
    &= (1-\eta (\lambda_r -\lambda_{r+1})) \norm{M} = (1-\eta \Delta_{\min}) \norm{M}.
\end{align*}

\textbf{(iii)} Without loss of generality, assume $m=1$ since $m>1$ follows by iterating the $m=1$ case. 
For any unit vector $u \in \RR^r$, we have
\begin{align}
    \sbr{u^\top \EE \sbr[\big]{ \sbr{(\cI - \eta \cL) D}^\top \sbr{(\cI-\eta \cL) D} } u}^{1/2} &= \norm{\sbr{(\cI-\eta \cL) D} u}_{L_2} \nonumber\\
    &= \norm{\eta \Lambda_{*,\perp} D u + D (I_r - \eta \Lambda_*) u}_{L_2} \nonumber \\
    &\leq \norm{\eta \Lambda_{*,\perp} D u}_{L_2} + \norm{D (I_r - \eta \Lambda_*) u}_{L_2} \label{eqn:basic_ineq_1}
\end{align}
Note that 
\begin{align*}
    \norm{\eta \Lambda_{*,\perp} D u}_{L_2}^2 \leq \eta^2 \lambda_{r+1}^2 \norm{D u}_{L_2}^2 = \eta^2 \lambda_{r+1}^2 \norm{\EE u^\top D^\top D u} \leq \eta^2 \lambda_{r+1}^2 \norm{\EE D^\top D}
\end{align*}
and
\begin{align*}
    \norm{D (I_r - \eta \Lambda_*) u}_{L_2}^2 &=  u^\top (I_r - \eta \Lambda_*) (\EE D^\top D) (I_r - \eta \Lambda_*) u\\
    &\leq  \norm{(I_r - \eta \Lambda_*) u}^2 \norm{\EE D^\top D} \leq (1-\eta \lambda_r)^2 \norm{\EE D^\top D}
\end{align*}
which combined with \eqref{eqn:basic_ineq_1} gives
\begin{align*}
    \sbr{u^\top \EE \sbr[\big]{ \sbr{(\cI - \eta \cL) D}^\top \sbr{(\cI-\eta \cL) D} } u}^{1/2} &\leq \rbr{1-\eta \lambda_r + \eta \lambda_{r+1}} \norm{\EE D^\top D}^{1/2}\\ 
    &= (1-\eta \Delta_{\min})  \|\EE D^\top D\|^{1/2}
\end{align*}

\textbf{(iv)} is similar to (iii).

\subsubsection{Proof of Lemma~\ref{lem:kth_moment}}
\label{sec:prf_kth_moment}
If $A=0$, the bound is trivial. Assume $A \neq 0$. 

\paragraph{Proof of \eqref{eqn:k_th_moment_1}.} Let $A_{i,\cdot} \in \RR^{1 \times d}$ denote the $i$-th row of $A$, then
\begin{align*}
    \|A\tX\|^{2} = \sum_{i=1}^p (A_{i,\cdot} \tX)^2 = \Fnorm{A}^2 \cdot \sum_{A_{i,\cdot}\neq 0}  \frac{\norm{A_{i,\cdot}}^2}{\Fnorm{A}^2} \frac{(A_{i,\cdot} \tX)^2}{\norm{A_{i,\cdot}}^2}
\end{align*}
Note that $\sum_{A_{i,\cdot}\neq 0}  \norm{A_{i,\cdot}}^2/\Fnorm{A}^2 = 1$.
Taking $k$-th power and applying Jensen's inequality with weights $\norm{A_{i,\cdot}}^2 / \Fnorm{A}^2$ gives
\begin{align*}
    \EE \|A \tX\|^{2k} 
    &\leq \Fnorm{A}^{2k} \sum_{A_{i,\cdot}\neq 0} \frac{\norm{A_{i,\cdot}}^2}{\Fnorm{A}^2} \EE \sbr[\bigg]{\frac{(A_{i,\cdot} \tX)^2}{\norm{A_{i,\cdot}}^2}}^k
\end{align*}
Since $[\EE (v^\top \tX)^{2k}]^{1/2k} \leq C\sqrt{k} C_{\psi}$ holds uniformly for all unit  vector $v \in \RR^d$ for some absolute constant $C>0$ \citep{vershynin}, taking $2k$-th root gives
\begin{align*}
    \sbr[\big]{\EE \|A\tX\|^{2k}}^{1/(2k)} \leq C \sqrt{k} \; \Fnorm{A} \norm[\big]{\tX}_{\psi_2}
\end{align*}

\paragraph{Proof of \eqref{eqn:k_th_moment_2}.} Denote $X=A\tX$. For any $k \geq 1$, 
\begin{align*}
    \norm{X}^{2k} = \sbr[\Big]{\sum_{i=1}^d X_i^2}^k \leq d^{k-1} \sum_{i=1}^d X_i^{2k}
\end{align*}
Since $\norm{X_i}_{\psi_2} \leq \norm{X}_{\psi_2}$ for all $i \in [d]$, taking expectation and $2k$-th root, we have
\begin{align*}
    \sbr[\big]{\EE \norm{X}^{2k}}^{1/(2k)} \leq \sbr[\Big]{\EE d^{k-1} \sum_{i=1}^d X_i^{2k}}^{1/(2k)} \leq C \sqrt{kd} \; \norm{X}_{\psi_2}
\end{align*}
for the same absolute constant $C>0$ as before.

\subsubsection{Proof of Lemma~\ref{lem:matrix_kth_moment}}
\label{sec:prf_matrix_kth_moment}

\paragraph{(i) }
Note that $\Fnorm{ZY^\top}^2=\norm{Z}^2\norm{Y}^2$, then we have
\begin{align*}
    \Fnorm{ZY^\top}^{2k} &= \rbr[\Big]{\norm{\Lambda_{*,\perp}^{1/2}\tZ}^2 \norm{\Lambda_*^{1/2}\tY}^2 }^k = \rbr[\Big]{\sum_{l=1}^{d-r} \sum_{j=1}^r \lambda_{r+l} \lambda_j \tZ_l^2 \tY_j^2}^k 
\end{align*}
Let $S=\sbr{\tr \Lambda_{*,\perp}} \sbr{\tr \Lambda_*}$. Note that $\sum_{lj} \lambda_{r+l} \lambda_j = S$. Since $k \geq 1$, applying Jensen's inequality gives
\begin{align*}
    \Fnorm{ZY^\top}^{2k} \leq S^k \cdot \sum_{l=1}^{d-r} \sum_{j=1}^r \frac{\lambda_{r+l} \lambda_j}{S} \tZ_l^{2k} \tY_j^{2k}
\end{align*}
For sub-exponential norm, we have
\begin{align*}
    \norm[\big]{\tZ_l \tY_j}_{\psi_1} \leq \norm[\big]{\tZ_l}_{\psi_2} \norm[\big]{\tY_j}_{\psi_2} \leq \norm[\big]{\tZ}_{\psi_2} \norm[\big]{\tY}_{\psi_2} \leq \norm[\big]{\tX}_{\psi_2}^2
\end{align*}
Hence
\begin{align*}
    \EE \Fnorm{ZY^\top}^{2k} \leq S^k \cdot \sum_{l=1}^{d-r} \sum_{j=1}^r \frac{\lambda_{r+l} \lambda_j}{S} \EE (\tZ_l \tY_j)^{2k} &\leq S^k \cdot \sum_{l=1}^{d-r} \sum_{j=1}^r \frac{\lambda_{r+l} \lambda_j}{S} \sbr{C k\norm[\big]{\tX}_{\psi_2}^2 }^{2k}\\
    &= S^k \cdot \sbr{C k\norm[\big]{\tX}_{\psi_2}^2 }^{2k} 
\end{align*}
thus
\begin{align*}
    \sbr{\EE \Fnorm{ZY^\top}^{2k}}^{1/(2k)} &\leq C k S^{1/2} \norm[\big]{\tX}_{\psi_2}^2 = C k \cdot \sbr{\tr \Lambda_*}^{1/2} \sbr{\tr \Lambda_{*,\perp}}^{1/2} \norm[\big]{\tX}_{\psi_2}^2
\end{align*}
Applying the above inequality with $k=1$ gives
\begin{align*}
    d_{\fC} := \frac{\lambda_\times^4}{\HS{\fC}^2} \gtrsim \sbr{\frac{\EE \|ZY^\top\|_{\rm F}^2}{\|\fC\|_{\rm HS}}}^2 \geq 1,
\end{align*}
where the last inequality follows from the fact that $\|\fC\|_{\rm HS} \leq \tr(\fC) = \EE \|ZY^\top\|_{\rm F}^2$.

\paragraph{(ii)} For any $A \in \RR^{(d-r)\times r}$ with $\Fnorm{A}=1$,
\begin{align*}
    \Finner{A}{\fC A} = \EE \Finner{ZY^\top}{A}^2 &= \EE \Finner{\tZ \tY^\top}{ \Lambda_{*,\perp}^{1/2} A \Lambda_*^{1/2} }^2\\
    &\leq \Onorm[\big]{\tfC} \Fnorm{\Lambda_{*,\perp}^{1/2} A \Lambda_*^{1/2}}^2 \leq \Onorm[\big]{\tfC} \lambda_1 \lambda_{r+1}
\end{align*}
Thus
\begin{align*}
    \Onorm{\fC} \leq \Onorm[\big]{\tfC} \lambda_1 \lambda_{r+1}
\end{align*}
It suffices to show $\Onorm[\big]{\tfC}\lesssim C_\psi^4 \min(r,d-r)$. For any $A \in \RR^{(d-r)\times r}$ with $\Fnorm{A}=1$, we have
\begin{align*}
    \Finner[\big]{\tZ\tY^\top}{A}= \tZ^\top A \tY = (A^\top \tZ)^\top \tY = \tZ^\top (A \tY)
\end{align*}

\textit{(a)} By Cauchy-Schwarz inequality, we have
\begin{align*}
    \Finner[\big]{A}{\tfC A} = \EE \Finner[\big]{\tZ\tY^\top}{A}^2 = \EE \sbr[\big]{(A^\top \tZ)^\top \tY}^2 \leq \sbr{\EE \norm[\big]{A^\top \tZ}^4}^{1/2} \sbr{\EE \norm[\big]{\tY}^4}^{1/2}
\end{align*}
Applying Lemma~\ref{lem:kth_moment} to each factor gives
\begin{align*}
    \sbr{\EE \norm[\big]{A^\top \tZ}^4}^{1/4} \lesssim C_\psi \Fnorm{A^\top} = C_\psi, \quad \sbr{\EE \norm[\big]{\tY}^4}^{1/4} \lesssim C_\psi \Fnorm{I_r} = C_\psi \sqrt{r}
\end{align*}
Thus, we have
\begin{align*}
    \Finner[\big]{A}{\tfC A} \lesssim C_\psi^4 r.
\end{align*}
Taking supremum over $A$ gives $\Onorm[\big]{\tfC} \lesssim C_\psi^4 r$.

\textit{(b)} Applying the same argument as in (a) to $\tZ^\top (A \tY)$ gives $\Onorm[\big]{\tfC} \lesssim C_\psi^4 (d-r)$.

% ----------------------------------------------

\subsubsection{Proof of Lemma~\ref{lem:alignment_diff}}
\label{sec:prf_alignment_diff}

We first establish the following identities for $\cC$ and $\cS$:
\begin{equation}\label{eq:polar-C}
  \cC \sgn(\cC^\top) = (\cC\cC^\top)^{1/2}, \qquad
  \cS \sgn(\cC^\top) = \cT (\cC\cC^\top)^{1/2}.
\end{equation}
To see this, note that for any matrix $M$ with singular value decomposition $M= A \Lambda B^\top$, the sign matrix of $M$ is defined as $\sgn(M) = A B^\top$. Hence $M= \sgn(M)(M^\top M)^{1/2}$.
Applying this to $M = \cC^\top$ gives $\cC^\top = \sgn(\cC^\top)\,(\cC\cC^\top)^{1/2}$. Since $\cC$ is invertible by assumption, so is $(\cC \cC^\top)^{1/2}$, hence we have $\sgn(\cC^\top) = \cC^\top(\cC\cC^\top)^{-1/2}$. Thus, for the first identity in~\eqref{eq:polar-C}, we have
\begin{align*}
    \cC \sgn(\cC^\top)
    = \cC \cC^\top(\cC\cC^\top)^{-1/2}
    = (\cC\cC^\top)^{1/2}.
\end{align*}
For the second, using $\cS = \cT\cC$ gives
\begin{align*}
    \cS \sgn(\cC^\top)
  = \cT\cC \cC^\top(\cC\cC^\top)^{-1/2}
  = \cT(\cC\cC^\top)^{1/2}.
\end{align*}

With \eqref{eq:polar-C} in hand, we can obtain a representation of $U \sgn(U^\top U_*)$. To see this, since $U^\top U_* = \cC^\top$, plugging in \eqref{eq:polar-C} gives
\begin{align} \label{eq:align-decomp}
    U \sgn(U^\top U_*) = (U_* \cC + U_{*,\perp} \cS) \sgn(\cC^\top) = U_* (\cC \cC^\top)^{1/2} + U_{*,\perp} \cT (\cC \cC^\top)^{1/2}.
\end{align}
which implies \eqref{eqn:sinTheta_repre} and \eqref{eqn:aligned_diff_repre} directly by noticing that $\Fnorm{\sin \Theta(U,U_*)} = \Fnorm{\cS}$.

\paragraph{Proof of \eqref{eqn:repre_order}.} Since $\cC \in \RR^{r \times r}$ is a square matrix, $\cC\cC^\top$ and $\cC^\top \cC$ share eigenvalues.
From $\cC^\top \cC = I_r - \cS^\top \cS$, the eigenvalues of $\cC\cC^\top$ are
$\{1 - \sigma_j^2\}_{j=1}^r$, where $\sigma_1 \ge \cdots \ge \sigma_r \ge 0$
are the singular values of~$\cS$.
Hence $(\cC\cC^\top)^{1/2}$ has eigenvalues $\{\sqrt{1 - \sigma_j^2}\}_{j=1}^r$,
all lying in $[\sqrt{1 - \sigma_1^2}, 1]$.

For the operator norm bound \eqref{eqn:repre_order}, since $1 - \sqrt{1-x} \le x$ for all $0 \le x \le 1$, we have
\begin{equation*} %\label{eq:sqrt-perturb}
  \Onorm{I_r - (\cC\cC^\top)^{1/2}}
  = \max_{j} \cbr{1 - \sqrt{1 - \sigma_j^2}}
  \le \max_j \sigma_j^2
  = \Onorm{\cS}^2 = \Onorm{\cT (\cC \cC^\top)^{1/2}}^2.
\end{equation*}
where the last equality follows since $\Onorm{\cS} = \Onorm{\cS \sgn(\cC^\top)}$ and \eqref{eq:polar-C}.
Finally, $\Onorm{\cS} = \Onorm{\cT\cC} \le \Onorm{\cT}\Onorm{\cC} \le \tau$,
so $\Onorm{(\cC\cC^\top)^{1/2} - I_r} \le \tau^2$.

\paragraph{Proof of Lower bound.} Since $1-\sqrt{1-x} \geq x/2$ for all $x \in [0,1]$, we have
\begin{align*}
    \Onorm{I_r - (\cC \cC^\top)^{1/2}} = 1 - \sqrt{1-\sigma_1^2} \geq \frac{\sigma_1^2}{2} = \frac{\Onorm{\cS}^2}{2}.
\end{align*}
Finally, we connect $\Onorm{\cS}$ to $\tau$ via
\begin{align*}
    \Onorm{\cS} = \Onorm{\cC \cT} \geq \sigma_{\min}(\cC) \tau = \sqrt{1-\Onorm{\cS}^2} \; \tau
\end{align*}
which rearranges to $\Onorm{\cS}^2 \geq \tau^2/(1 + \tau^2)$. Hence
\begin{align*}
    \Onorm{I_r - (\cC \cC^\top)^{1/2}} \geq \frac{\Onorm{\cS}^2}{2} \geq \frac{\tau^2}{2(1 + \tau^2)}.
\end{align*}
When $\tau \in [0,1]$, we have $(1+\tau^2)\leq 2$, so
\begin{align*}
    \Onorm{I_r - (\cC \cC^\top)^{1/2}} \geq \frac{\tau^2}{4}.
\end{align*}

\subsection{Proof of Lemma~\ref{lem:alignment_diff_bootstrap}}
\label{sec:prf_alignment_diff_bootstrap}

\paragraph{Preliminaries.} Using $\cS_i= \cT_i\cC_i$, we have $\cS_1^\top \cS_2 = \cC_1^\top \cT_1^\top \cT_2 \cC_2$ and
\begin{equation}
    \begin{aligned}
        U_1^\top U_2 = \cC_1^\top \cC_2 + \cS_1^\top \cS_2  = \cC_1^\top (I_r + \cT_1^\top \cT_2) \cC_2&= \cC_1^\top (I_r + \cT_1^\top \cT_1) \cC_2 - \cC_1^\top \cT_1^\top (\cT_1 - \cT_2) \cC_2\\
        &\overset{(a)}{=} \cC_1^{-1} \cC_2 - \cC_1^\top \cT_1^\top (\cT_1 - \cT_2) \cC_2 
    \end{aligned}
    \label{eqn:prf_alignment_diff_BS_prelim}
\end{equation}
Here (a) follows from the fact that since $\cC_1^\top \cC_1 + \cS_1^\top \cS_1 = I_r$, we have $\cC_1^\top (I_r + \cT_1^\top \cT_1) \cC_1 = I_r$, which implies $I_r + \cT_1^\top \cT_1 = \cC_1^{\top -1}\cC_1^{-1}$.

\paragraph{(i) Decomposition.}
We first decompose the aligned difference into its components parallel and
orthogonal to $\operatorname{col}(U_1)$:
\begin{equation}
    U_2\sgn(M)-U_1 = U_1U_1^\top\sbr{U_2\sgn(M)-U_1} +(I-U_1U_1^\top)U_2\sgn(M). \label{eqn:prf_alignment_diff_BS_decomp}
\end{equation}
\begin{itemize}[leftmargin=*]
    \item First term: Since $U_1^\top U_2=M^\top$ and $M^\top \sgn(M)=(M^\top M)^{1/2}$,
        the first term equals $U_1[(M^\top M)^{1/2}-I_r]$. Using $U_1 = U_* \cC_1 + U_{*,\perp} \cS_1$ gives
        \begin{align*}
            U_1 [(M^\top M)^{1/2}-I_r] = U_* \cC_1 [(M^\top M)^{1/2}-I_r] + U_{*,\perp}\cS_1 [(M^\top M)^{1/2}-I_r].
        \end{align*}

    \item Second term: we first compute $U_*$ and $U_{*,\perp}$ components of $(I-U_1U_1^\top)U_2$:

    \begin{itemize}[leftmargin=*]
        \item $U_*$-component is $\cC_2-\cC_1(\cC_1^\top \cC_2+\cS_1^\top \cS_2)$. Using \eqref{eqn:prf_alignment_diff_BS_prelim} gives
        \begin{align*}
            \cC_2-\cC_1(\cC_1^\top \cC_2+\cS_1^\top \cS_2)
            &= \underbrace{\cC_2 - \cC_1 \cC_1^\top (I_r + \cT_1^\top \cT_1) \cC_2}_{=0} + \cC_1 \cC_1^\top \cT_1^\top (\cT_1 - \cT_2) \cC_2\\
            &= \cC_1 \cC_1^\top \cT_1^\top (\cT_1 - \cT_2) \cC_2
        \end{align*}
        where the last line follows from $\cC_1^\top \cC_1 + \cS_1^\top \cS_1 = I_r$.

        \item $U_{*,\perp}$-component is $\cS_2-\cS_1(\cC_1^\top \cC_2+\cS_1^\top \cS_2)$. Using \eqref{eqn:prf_alignment_diff_BS_prelim} gives
        \begin{align*}
            \cS_2-\cS_1(\cC_1^\top \cC_2+\cS_1^\top \cS_2) &= \cT_2 \cC_2 - \cT_1 (\cC_2-\cC_1 \cC_1^\top \cT_1^\top (\cT_1 - \cT_2) \cC_2 )\\
            &=-(\cT_1 -\cT_2) \cC_2  + \cS_1 \cS_1^\top (\cT_1 -\cT_2) \cC_2\\
            &= -(I_{d-r} - \cS_1 \cS_1^\top) (\cT_1 -\cT_2) \cC_2
        \end{align*}
    \end{itemize}

\end{itemize}
Combining the above gives the desired decomposition.

\paragraph{(ii) $\Fnorm{\sin\Theta(U_2, U_1)}^2$.} This follows directly from \eqref{eqn:prf_alignment_diff_BS_decomp} and the fact that $\Fnorm{\sin\Theta(U_2, U_1)}^2= \|(I-U_1U_1^\top)U_2\|_{\rm F}^2$ \citep[Lemma~2.5]{spectral_methods}.

\paragraph{(iii)}
It remains to prove the operator-norm bounds. Let $U_{1,\perp}\in
\mathbb O_{d,d-r}$ be an orthogonal complement of $U_1$, and set $\tM=U_{1,\perp}^\top U_2$.
Since $[U_1,U_{1,\perp}]$ is orthogonal,
\[
    U_2^\top U_2
    =
    U_2^\top U_1U_1^\top U_2
    +
    U_2^\top U_{1,\perp}U_{1,\perp}^\top U_2,
\]
that is, $M M^\top+\tM^\top \tM=I_r$. Thus the singular values of $M$ are $\sqrt{1-\sigma_j^2(\tM)}$,
while the singular values of $\tM$ are
$\sin\theta_j(U_2,U_1)$. Therefore
\[
    \|(M^\top M)^{1/2}-I_r\|
    =
    \max_j\{1-\sqrt{1-\sin^2\theta_j}\}.
\]
Using $\frac{x}{2}\le 1-\sqrt{1-x}\le x$ for $x\in[0,1]$,
with $x=\sin^2\theta_j$, gives
\[
    \frac12\|\sin\Theta(U_2,U_1)\|^2
    \le
    \|(M^\top M)^{1/2}-I_r\|
    \le
    \|\sin\Theta(U_2,U_1)\|^2 .
\]

Finally, since $\|\sin\Theta(U_2,U_1)\| = \|(I-U_1U_1^\top)U_2\|$ \citep[Lemma~2.5]{spectral_methods}, analysis of the second term in \eqref{eqn:prf_alignment_diff_BS_decomp} gives
\[
    \|\sin\Theta(U_2,U_1)\|^2
    \le
    \|\cC_1\cC_1^\top \cT_1^\top\Delta \cC_2\|^2+\|(I_{d-r}-\cS_1\cS_1^\top) (\cT_1 - \cT_2) \cC_2\|^2,
\]
Since $\|\cC_i\|\le 1$ and $\|I_{d-r}-\cS_1\cS_1^\top\|\le 1$, we have
\[
    \|\sin\Theta(U_2,U_1)\|^2
    \le
    (1+\tau_1^2)\|\cT_1 - \cT_2\|^2 .
\]
The proof is complete.

% ---------------------------------------------------------------------- 

\subsubsection{Proof of Lemma~\ref{lem:init}}
\label{sec:prf_init}

\textbf{Proof of Item (ii).} This is Lemma~29 of \citep{colt21}, which follows from Lemma~i.A.3 of \citep{zhiyuan17} upon setting $p=\delta^2/4$ and observing that $\tr[(G^\top G)^{-1}] = \|G^{-1}\|_{\mathrm{F}}^2.$

\textbf{Proof of Item (iii).}
Without loss of generality, we can assume $U_* = [I_r; 0]$ and $U_0 = G(G^\top G)^{-1/2}$ where $G \in \RR^{d \times r}$ is a standard Gaussian matrix. Write 
\begin{align*}
    G= \begin{pmatrix}
        G_1\\
        G_2
    \end{pmatrix}
\end{align*}
where $G_1 \in \RR^{r \times r}$ and $G_2 \in \RR^{(d-r) \times r}$ are independent standard Gaussian matrices. Then
\begin{align*}
    (U_{*,\perp}^\top U_0)(U_*^\top U_0)^{-1} = G_2 G_1^{-1}.
\end{align*}
Item~(ii) gives $\PP(\norm{G_1^{-1}} \geq 12\sqrt{r}/\delta) \leq \delta/2$. Item~(i) with $t=\sqrt{2\log(4/\delta)}$ gives
\begin{align*}
    \PP \cbr[\big]{\norm{G_2} \geq \sqrt{d-r} + \sqrt{r} + \sqrt{2\log(4/\delta)}} \leq \delta/2.
\end{align*}
Combining the two bounds gives
\begin{align*}
    \PP \cbr[\Big]{\norm{(U_{*,\perp}^\top U_0)(U_*^\top U_0)^{-1}}\leq \frac{12 \sqrt{r}}{\delta}\rbr{\sqrt{d-r} + \sqrt{r} + \sqrt{2 \log (4/\delta)}} } \geq 1 - \delta.
\end{align*}

Since $\sqrt{d-r}+ \sqrt{r} \leq \sqrt{2d}$, and
\begin{align*}
    \sqrt{2d} + \sqrt{2\log(4/\delta)} \leq 2 \sqrt{2d} \sqrt{2\log(4/\delta)} = 4\sqrt{d \log (4/\delta)}
\end{align*}
We have
\begin{align*}
    \PP \cbr[\Big]{\norm{(U_{*,\perp}^\top U_0)(U_*^\top U_0)^{-1}}\leq \frac{48 \sqrt{rd \log(4/\delta)}}{\delta}  } \geq 1 - \delta.
\end{align*}

Set $\delta= \delta_p / c_p$ for $c_p \geq 1$, since $\log(4c_p/\delta_p)\leq 2\log(4c_p) \log(1/\delta_p)$ for $\delta_p \leq 1/e$, we have
\begin{align*}
    \PP \cbr[\Big]{\norm{(U_{*,\perp}^\top U_0)(U_*^\top U_0)^{-1}}\leq \frac{48c_p \sqrt{2\log(4c_p)} \sqrt{rd \log(1/\delta_p)}}{\delta_p}  } \geq 1 - \frac{\delta_p}{c_p}.
\end{align*}

\section{Additional experiments}
\label{sec:additional_experiments}
We provide additional row-wise diagnostics for Theorem~\ref{thm:bs_row_clt} under the same simulation settings as in Section~\ref{sec:experiments}.

\paragraph{Row-specific coverage.}
Table~\ref{tab:row_bs_quantile} summarizes the empirical coverage rates across all $d=100$ rows by reporting their quantiles. The coverage rates are concentrated near the nominal level, particularly for faster spectral decay, while some rows exhibit undercoverage for smaller $\beta$.
The variation across rows is natural, as Theorem~\ref{thm:bs_row_clt} is a fixed-row result with row-dependent conditions, so finite-sample accuracy may differ across $m\in[d]$.
\begin{table}[!htb]
\centering
\caption{Quantiles across rows of the row-specific empirical coverage rates at nominal level \(0.95\) over 200 Monte Carlo trials.}
\label{tab:row_bs_quantile}
\resizebox{\linewidth}{!}{\begin{tabular}{cccccccccccc}
\toprule
$\beta$ & Min & 10\% & 20\% & 30\% & 40\% & 50\% & 60\% & 70\% & 80\% & 90\% & Max \\
\midrule
$0.5$ & 0.840 & 0.880 & 0.890 & 0.897 & 0.906 & 0.910 & 0.920 & 0.930 & 0.930 & 0.940 & 0.970 \\
$1$ & 0.860 & 0.899 & 0.910 & 0.920 & 0.920 & 0.930 & 0.934 & 0.940 & 0.950 & 0.960 & 0.990 \\
$4$ & 0.900 & 0.920 & 0.930 & 0.930 & 0.940 & 0.940 & 0.950 & 0.950 & 0.960 & 0.970 & 0.970 \\
\bottomrule
\end{tabular}}
\end{table}

\paragraph{Row-specific Q--Q diagnostics.} To further assess row-wise calibration, for each row $m\in[d]$ we compare the standardized quadratic statistic
\begin{align*}
  \hQ_m:= ([U_\ast\sgn(U_\ast^\top U_n)]_{m,\cdot} - [U_n]_{m,\cdot}  )^\top [\widehat\Sigma_m^{\rm bs}]^{-1} ( [U_\ast\sgn(U_\ast^\top U_n)]_{m,\cdot} - [U_n]_{m,\cdot} )
\end{align*}
with the $\chi_r^2$ distribution using Q--Q plots. Table~\ref{tab:row_bs_qq_slope} summarizes, across all $d=100$ rows, the quantiles of the fitted Q--Q slopes, with slopes near one indicating good agreement. Figure~\ref{fig:row_qq} additionally shows the Q--Q plots for the first two rows under each $\beta\in\{0.5,1,4\}$. Overall, the slopes become more concentrated around one as the spectral decay becomes faster, consistent with the coverage results above.

\begin{table}[!htb]
\centering
\caption{Quantiles across rows of the fitted slopes of the row-specific Q--Q plots against the \(\chi_r^2\) distribution.}
\label{tab:row_bs_qq_slope}
\resizebox{\linewidth}{!}{\begin{tabular}{cccccccccccc}
\toprule
$\beta$ & Min & 10\% & 20\% & 30\% & 40\% & 50\% & 60\% & 70\% & 80\% & 90\% & Max \\
\midrule
$0.5$ & 0.943 & 1.052 & 1.107 & 1.121 & 1.156 & 1.189 & 1.214 & 1.257 & 1.292 & 1.348 & 1.431 \\
$1$ & 0.899 & 1.017 & 1.049 & 1.063 & 1.082 & 1.107 & 1.138 & 1.169 & 1.197 & 1.240 & 1.343 \\
$4$ & 0.912 & 0.964 & 0.987 & 1.002 & 1.018 & 1.033 & 1.058 & 1.075 & 1.102 & 1.140 & 1.228 \\
\bottomrule
\end{tabular}}
\end{table}

\begin{figure}[ht] 
  \centering 
  \includegraphics[width=\linewidth]{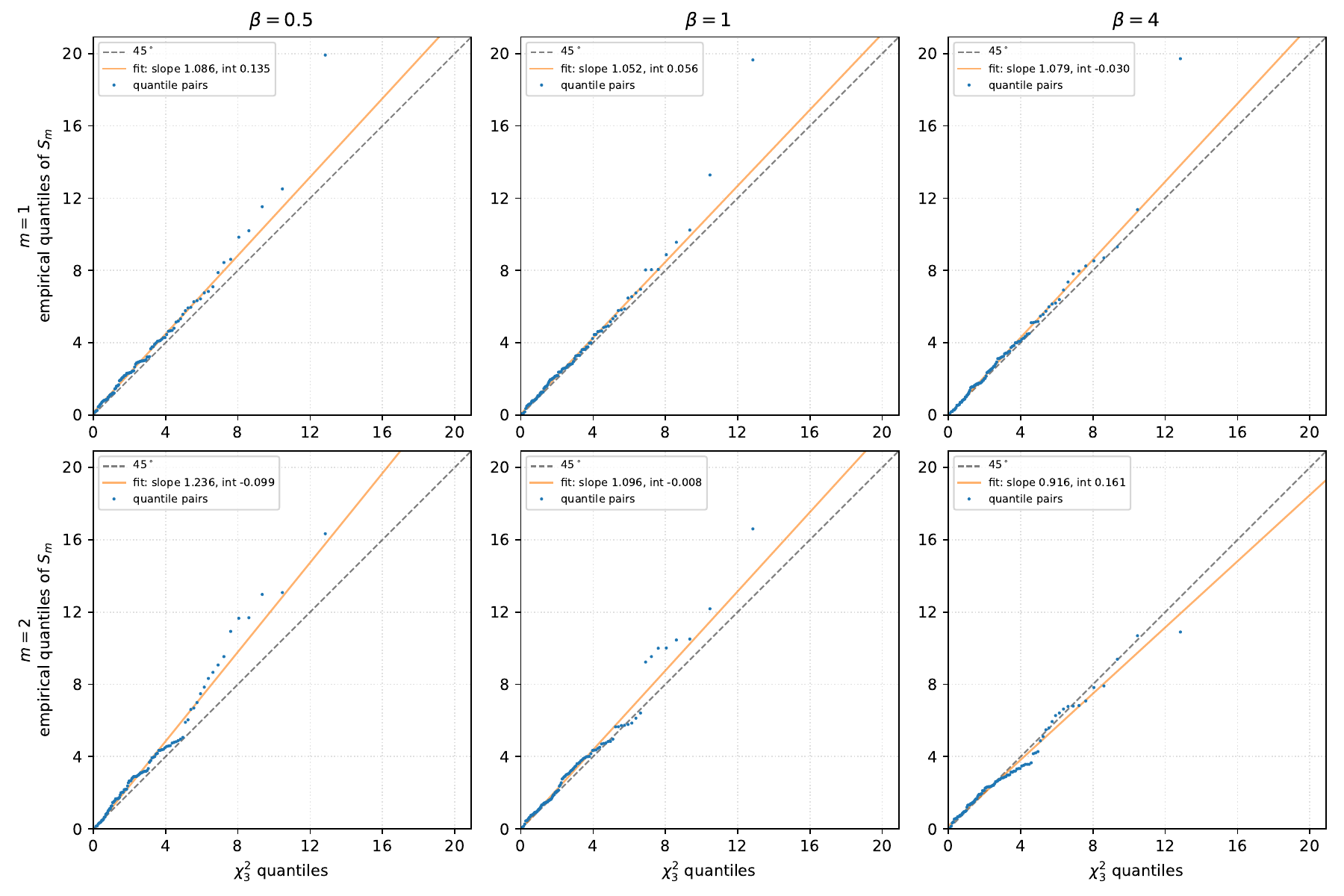} \caption{Row-specific Q--Q plots of the standardized quadratic statistic against the $\chi_r^2$ distribution for $m=1,2$ and $\beta\in\{0.5,1,4\}$. The diagonal line represents exact agreement. Columns correspond to $\beta=0.5,1,4$, and rows correspond to $m=1,2$.} \label{fig:row_qq} 
\end{figure}

%%%%%%%%%%%%%%%%%%%%%%%%%%%%%%%%%%%%%%%%%%%%%%%%%%%%%%%%%%%%

% \newpage
% \input{checklist.tex}

\end{document}